\theoremstyle{plain}
\newtheorem{theorem}{Theorem}[section]
\newtheorem{lemma}[theorem]{Lemma}
\theoremstyle{definition}
\newtheorem{definition}[theorem]{Definition}
\theoremstyle{remark}
\newcommand{\header}[1]{\noindent\textbf{#1}.}
\definecolor{grey}{rgb}{0.1,0.1,0.1}
\icmltitlerunning{Improving Interpretation Faithfulness for Vision Transformers}
\begin{document}

\twocolumn[
\icmltitle{Improving Interpretation Faithfulness for Vision Transformers}

% It is OKAY to include author information, even for blind
% submissions: the style file will automatically remove it for you
% unless you've provided the [accepted] option to the icml2024
% package.

% List of affiliations: The first argument should be a (short)
% identifier you will use later to specify author affiliations
% Academic affiliations should list Department, University, City, Region, Country
% Industry affiliations should list Company, City, Region, Country

% You can specify symbols, otherwise they are numbered in order.
% Ideally, you should not use this facility. Affiliations will be numbered
% in order of appearance and this is the preferred way.
\icmlsetsymbol{equal}{*}

\begin{icmlauthorlist}
\icmlauthor{Lijie Hu}{equal,1,3,4}
\icmlauthor{Yixin Liu}{equal,2}
\icmlauthor{Ninghao Liu}{5}
\icmlauthor{Mengdi Huai}{6}
\icmlauthor{Lichao Sun}{2}
\icmlauthor{Di Wang}{1,3,4}
\end{icmlauthorlist}

\icmlaffiliation{1}{King Abdullah University of Science and Technology (KAUST)}
\icmlaffiliation{2}{Lehigh University}
\icmlaffiliation{3}{Provable Responsible AI and Data Analytics (PRADA) Lab}
\icmlaffiliation{4}{SDAIA-KAUST AI}
\icmlaffiliation{5}{University of Georgia}
\icmlaffiliation{6}{Iowa State University}

\icmlcorrespondingauthor{Di Wang}{di.wang@kaust.edu.sa}

% You may provide any keywords that you
% find helpful for describing your paper; these are used to populate
% the "keywords" metadata in the PDF but will not be shown in the document
\icmlkeywords{Machine Learning, ICML}

\vskip 0.3in
]

% this must go after the closing bracket ] following \twocolumn[ ...

% This command actually creates the footnote in the first column
% listing the affiliations and the copyright notice.
% The command takes one argument, which is text to display at the start of the footnote.
% The \icmlEqualContribution command is standard text for equal contribution.
% Remove it (just {}) if you do not need this facility.

%\printAffiliationsAndNotice{}  % leave blank if no need to mention equal contribution
\printAffiliationsAndNotice{\icmlEqualContribution} % otherwise use the standard text.

\begin{abstract}
Vision Transformers (ViTs) have achieved state-of-the-art performance for various vision tasks. One reason behind the success lies in their ability to provide plausible innate explanations for the behavior of neural architectures. However, ViTs suffer from issues with explanation faithfulness, as their focal points are fragile to adversarial attacks and can be easily changed with even slight perturbations on the input image.
In this paper, we propose a rigorous approach to mitigate these issues by introducing Faithful ViTs (FViTs). Briefly speaking, an FViT should have the following two properties: (1) The top-$k$ indices of its self-attention vector should remain mostly unchanged under input perturbation, indicating stable explanations; (2) The prediction distribution should be robust to perturbations.
To achieve this, we propose a new method called Denoised Diffusion Smoothing (DDS), which adopts randomized smoothing and diffusion-based denoising. We theoretically prove that processing ViTs directly with DDS can turn them into FViTs. We also show that Gaussian noise is nearly optimal for both $\ell_2$ and $\ell_\infty$-norm cases. Finally, we demonstrate the effectiveness of our approach through comprehensive experiments and evaluations. %Specifically, we compare our FViTs with other baselines through visual interpretation and robustness accuracy under adversarial attacks. 
Results show that FViTs are more robust against adversarial attacks while maintaining the explainability of attention, indicating higher faithfulness.
\end{abstract}

\section{Introduction}
\label{sec:intro}
% introduction to ViT
Transformers and attention-based frameworks have been widely adopted as benchmarks for natural language processing tasks \citep{kenton2019bert,radford2019language}. Recently,  their ideas have also been borrowed in many computer vision tasks such as image recognition \citep{dosovitskiy2021an}, objective detection \citep{zhu2021deformable}, image processing \citep{chen2021pre} and semantic segmentation \citep{zheng2021rethinking}. Among them, the most successful variant is the vision transformer (ViT) \citep{dosovitskiy2021an}, which uses self-attention modules. Similar to tokens in the text domain, ViTs divide each image into a sequence of patches (visual tokens), and then feed them into self-attention layers to produce representations of correlations between visual tokens. The success of these attention-based modules is not only because of their good performance but also due to their ``self-explanation'' characteristics. Unlike post-hoc interpretation methods \citep{du2019techniques}, attention weights can intrinsically provide the ``inner-workings'' of models \citep{meng2019interpretable}, i.e., the entries in attention vector could point us to the most relevant features of the input image for its prediction, and can also provide visualization for ``where'' and ``what'' the attention focuses on \citep{xu2015show}.

As a crucial characteristic for explanation methods, faithfulness requires that the explanation method reflects its reasoning process \citep{jacovi2020towards}. Therefore, for ViTs, their attention feature vectors should reveal what is essential to their prediction.
Furthermore, faithfulness encompasses two properties: completeness and stability. Completeness means that the explanation should cover all relevant factors or patches related to its corresponding prediction \citep{sundararajan2017axiomatic}, while
stability ensures that the explanation is consistent with humans understanding and robust to slight perturbations. Based on these properties, we can conclude that ``faithful ViTs" (FViTs) should have the {\bf good robustness performance}, and certified {\bf explainability} of attention maps which are {\bf robust against perturbations}.

\begin{figure*}[t] 
    \setlength{\tabcolsep}{1pt} % Default value: 6pt
    \renewcommand{\arraystretch}{1} % Default value: 1
    \begin{center}
    \begin{tabular*}{\linewidth}{@{\extracolsep{\fill}}ccccccc}
     Corrupted Input & Raw Attention & Rollout 
     & GradCAM &LRP& VTA  & Ours \\
    \includegraphics[width=0.12\linewidth]{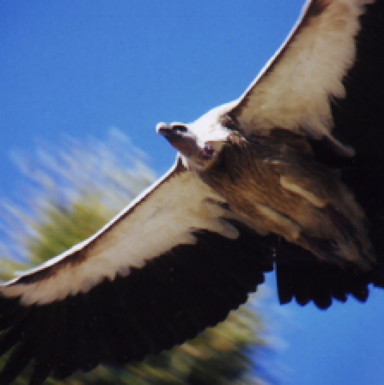}
&\includegraphics[width=0.12\linewidth]{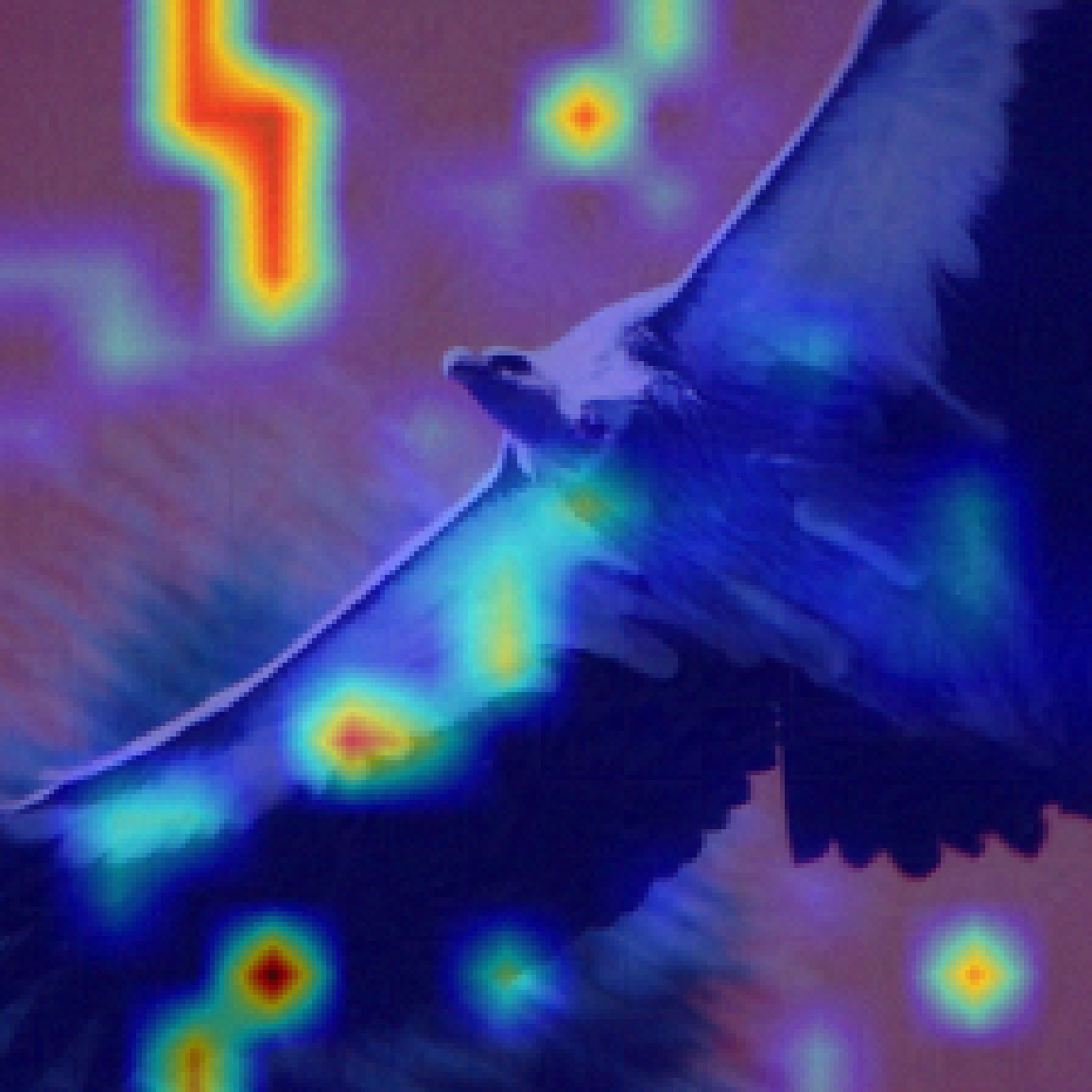} &
    \includegraphics[width=0.12\linewidth]{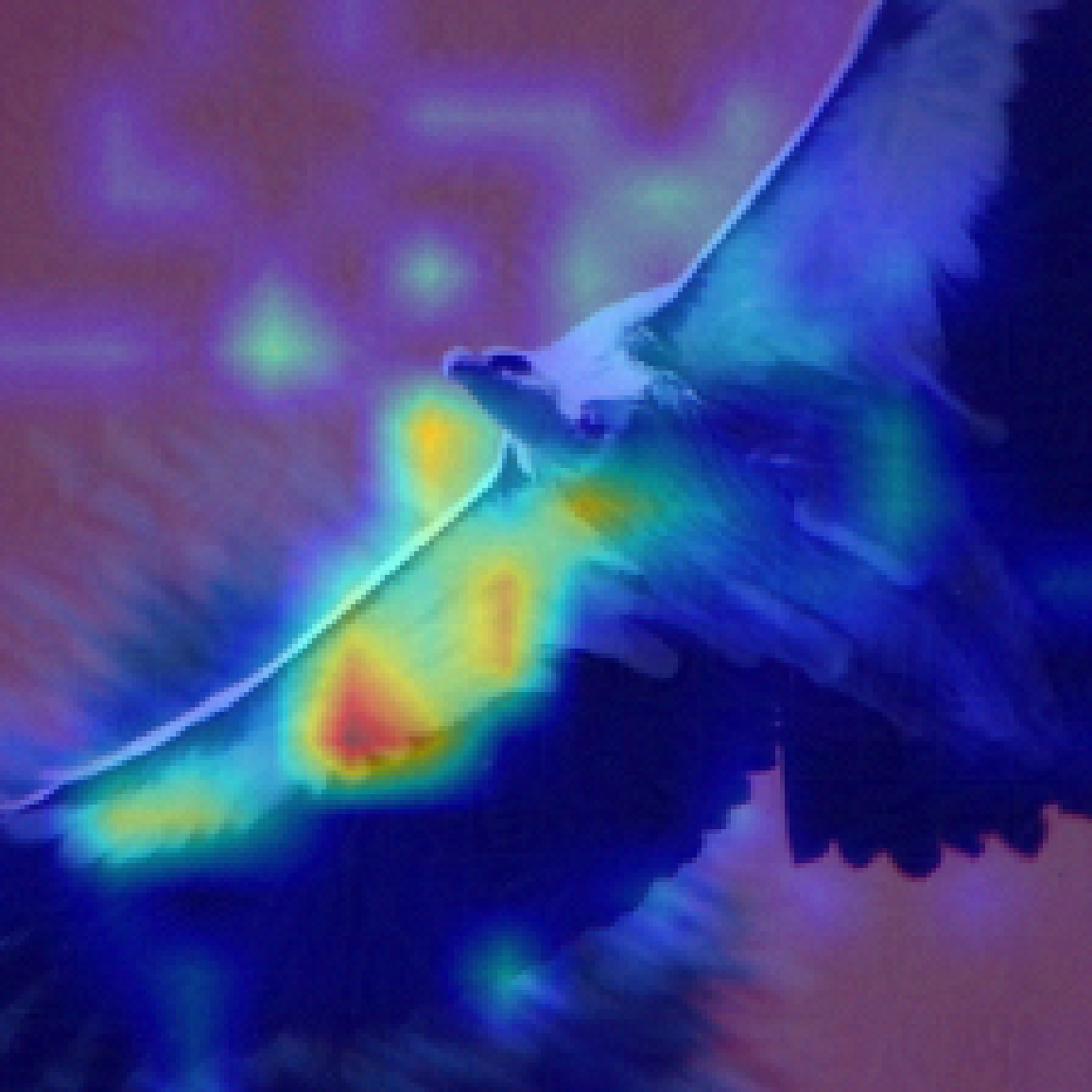} &
    \includegraphics[width=0.12\linewidth]{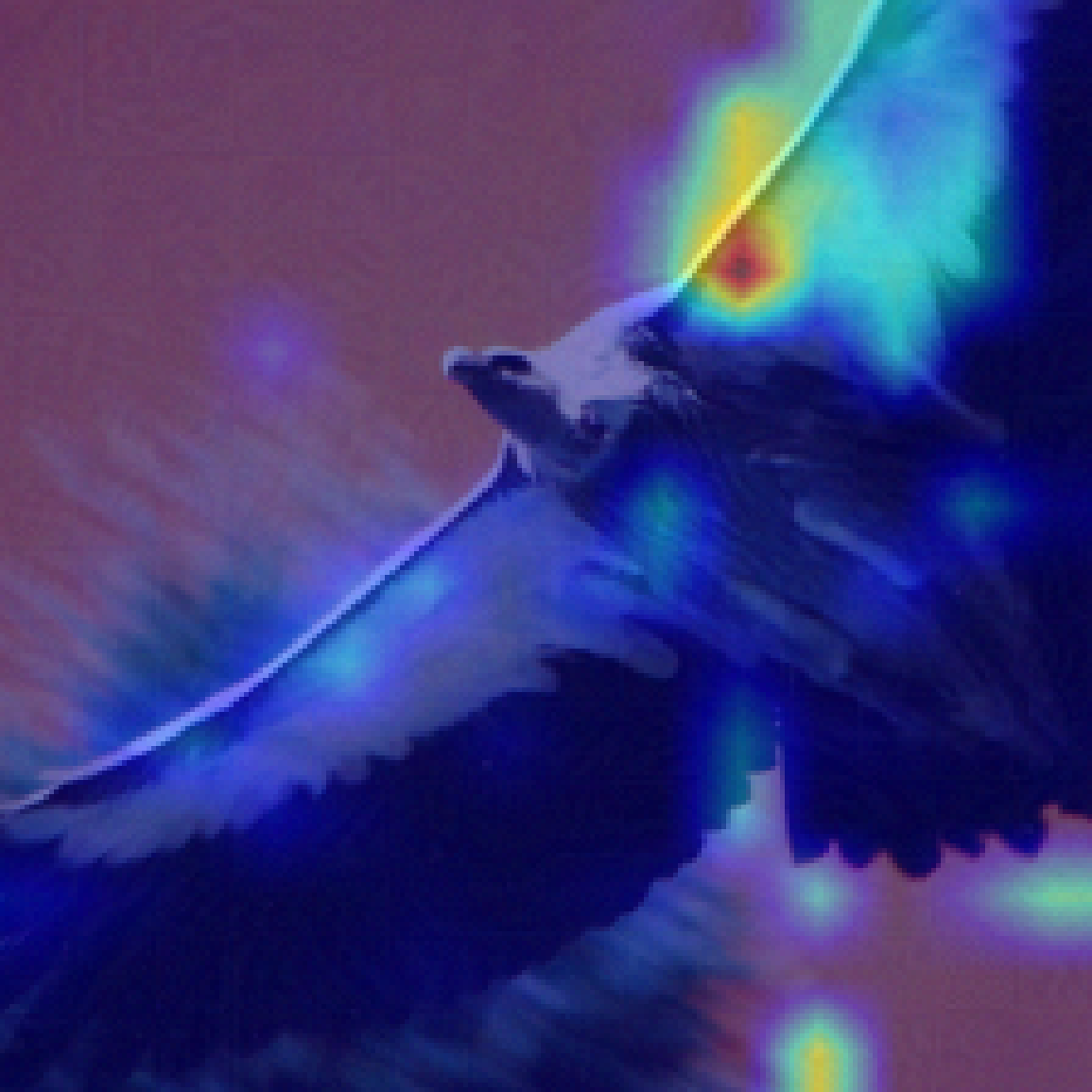} & \includegraphics[width=0.12\linewidth]{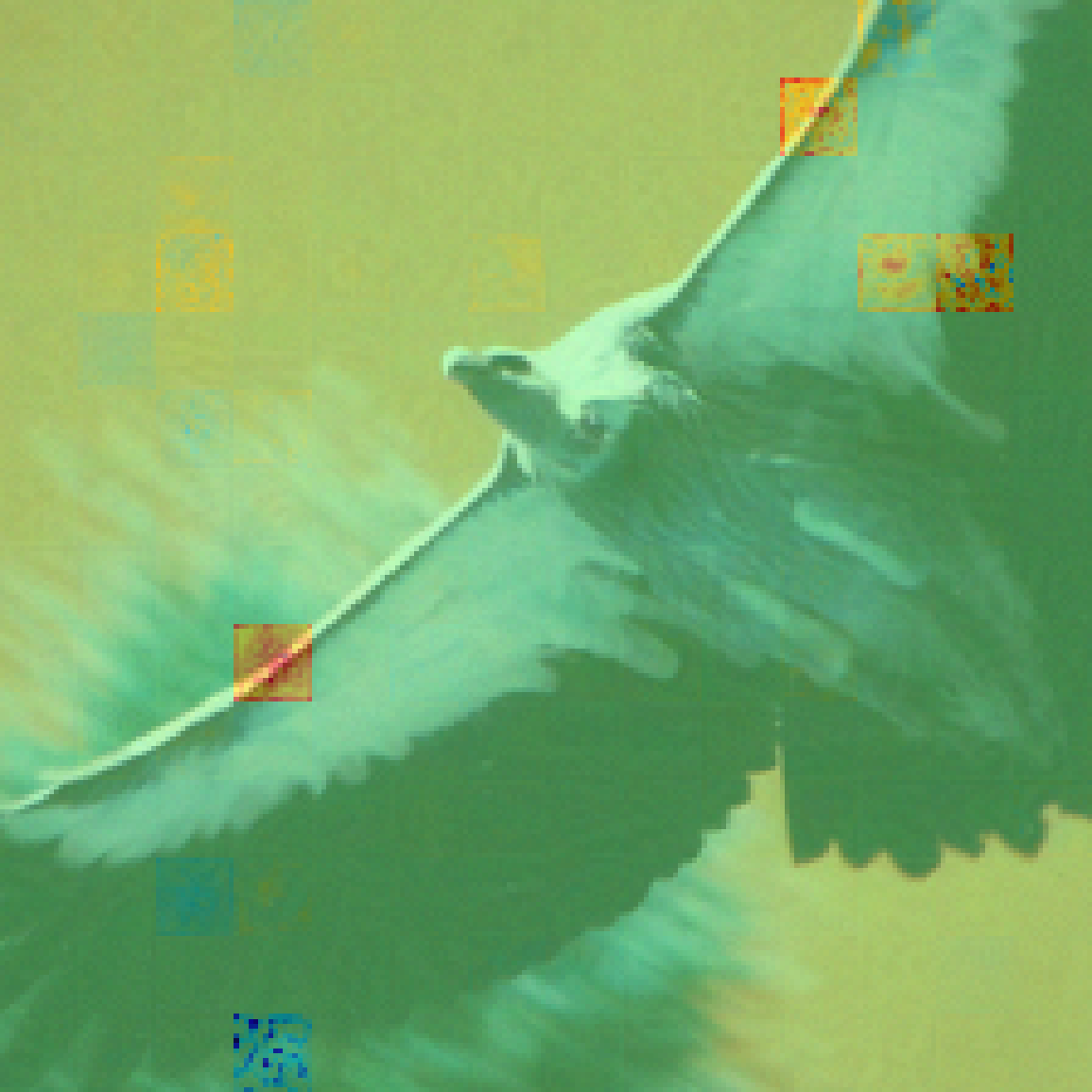} & \includegraphics[width=0.12\linewidth]{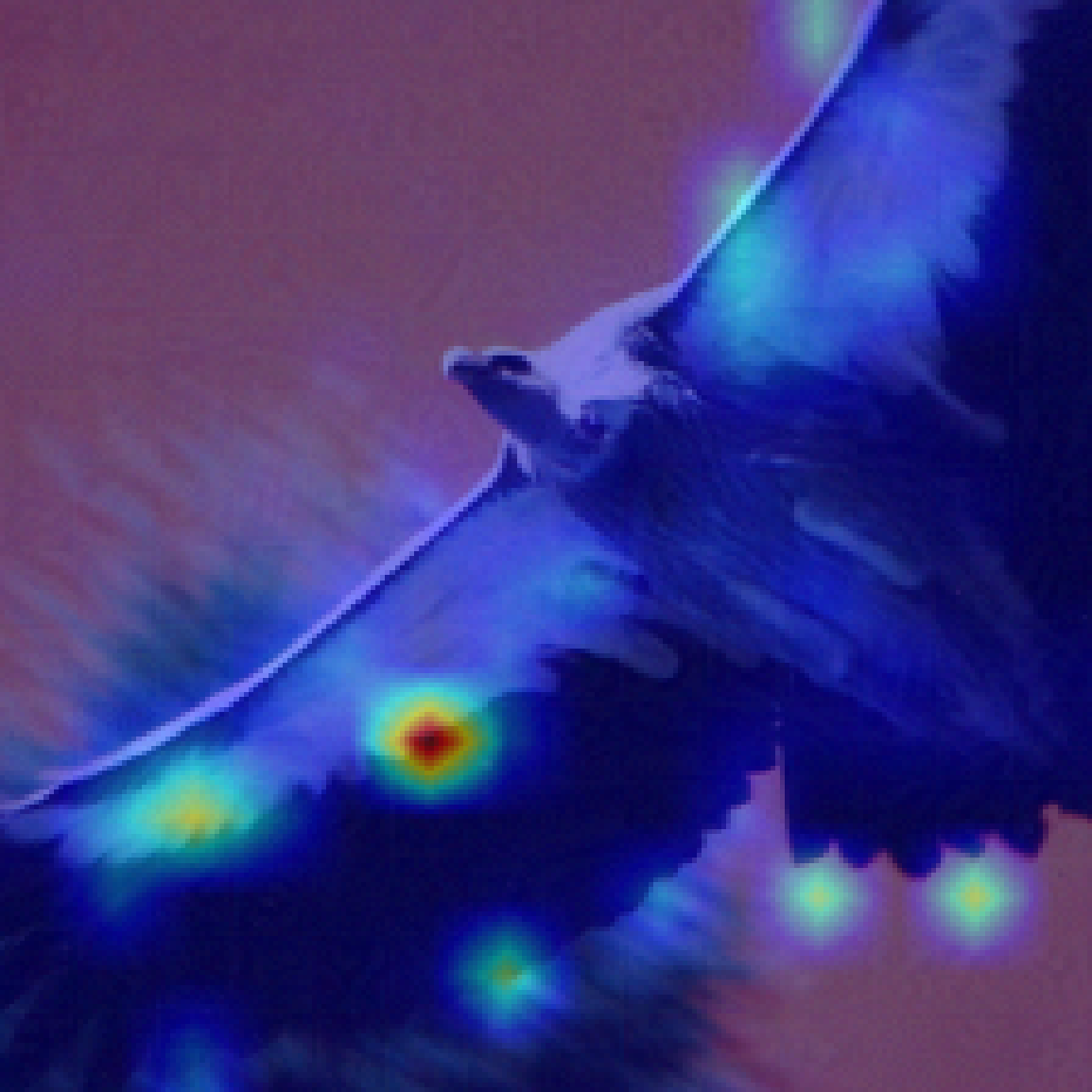} 
    & \includegraphics[width=0.12\linewidth]{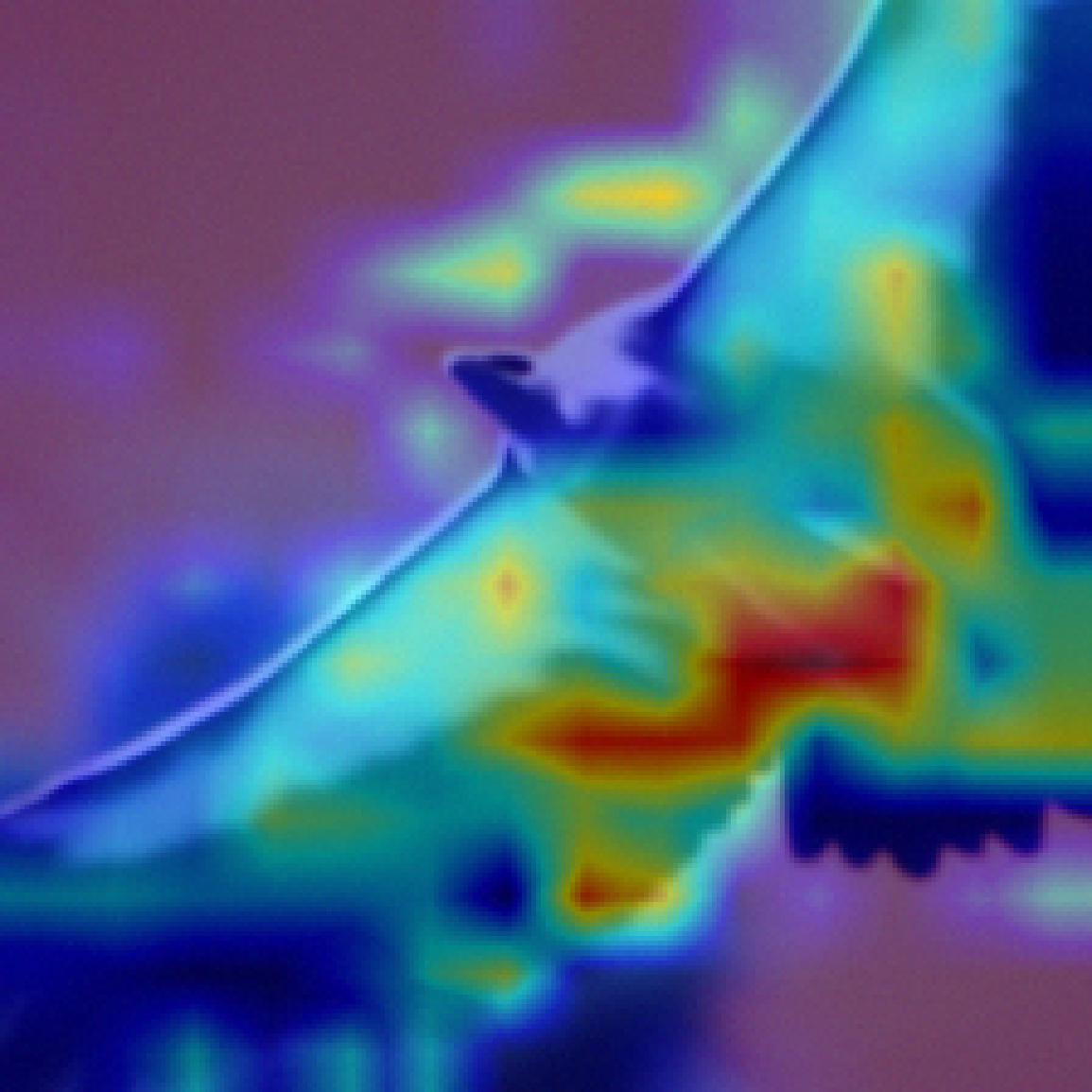} \\
   \includegraphics[width=0.12\linewidth]{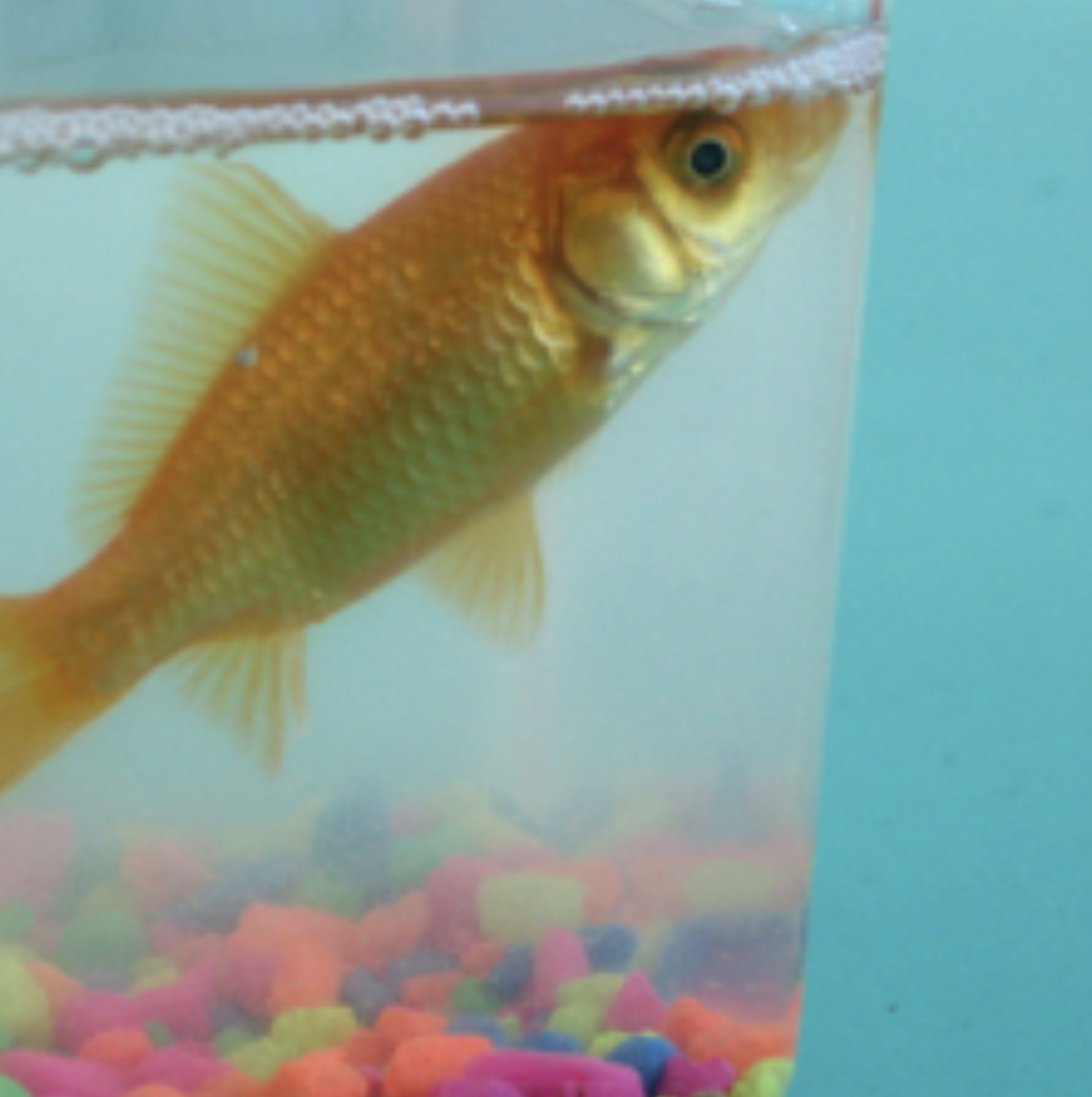}
    &
    \includegraphics[width=0.12\linewidth]{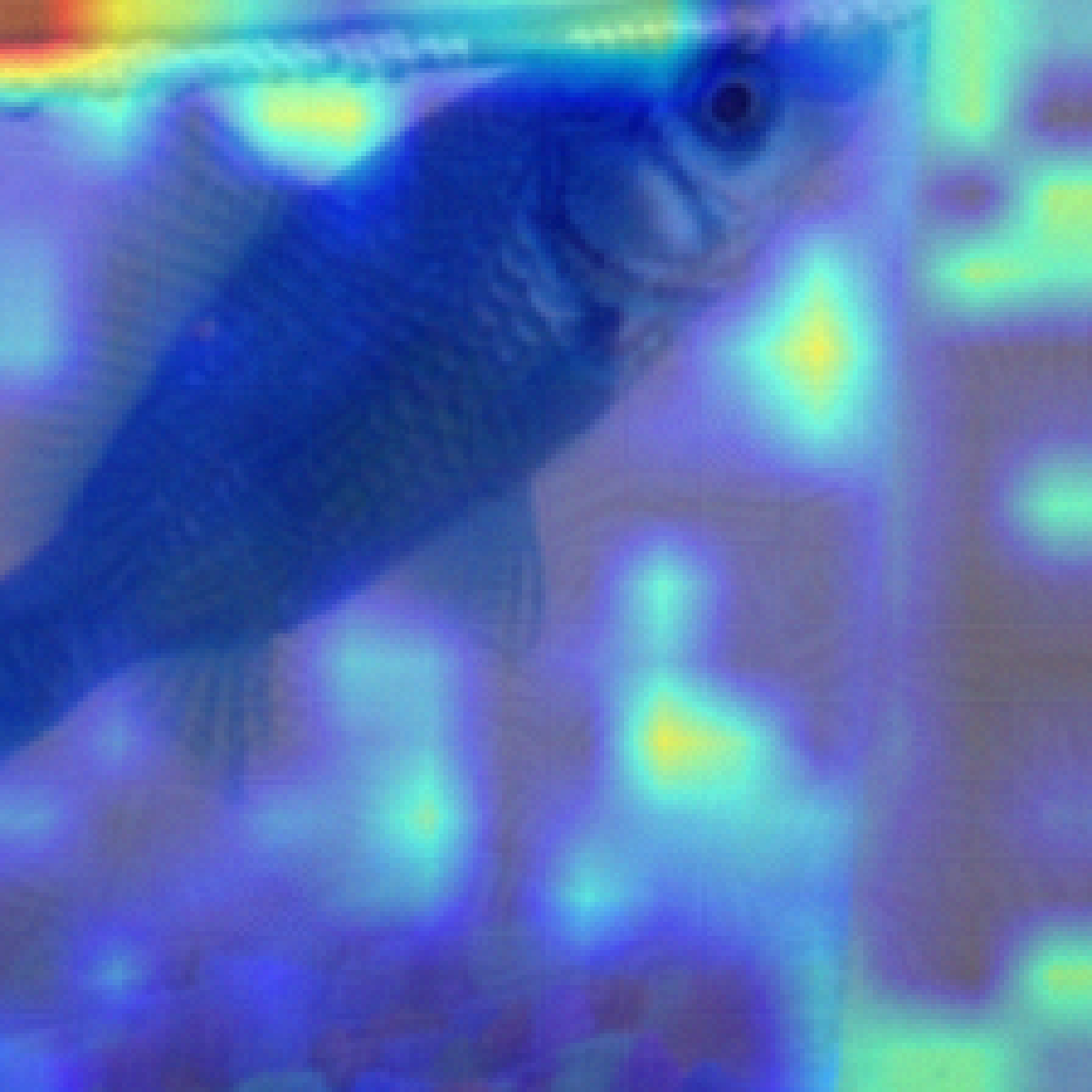} &
    \includegraphics[width=0.12\linewidth]{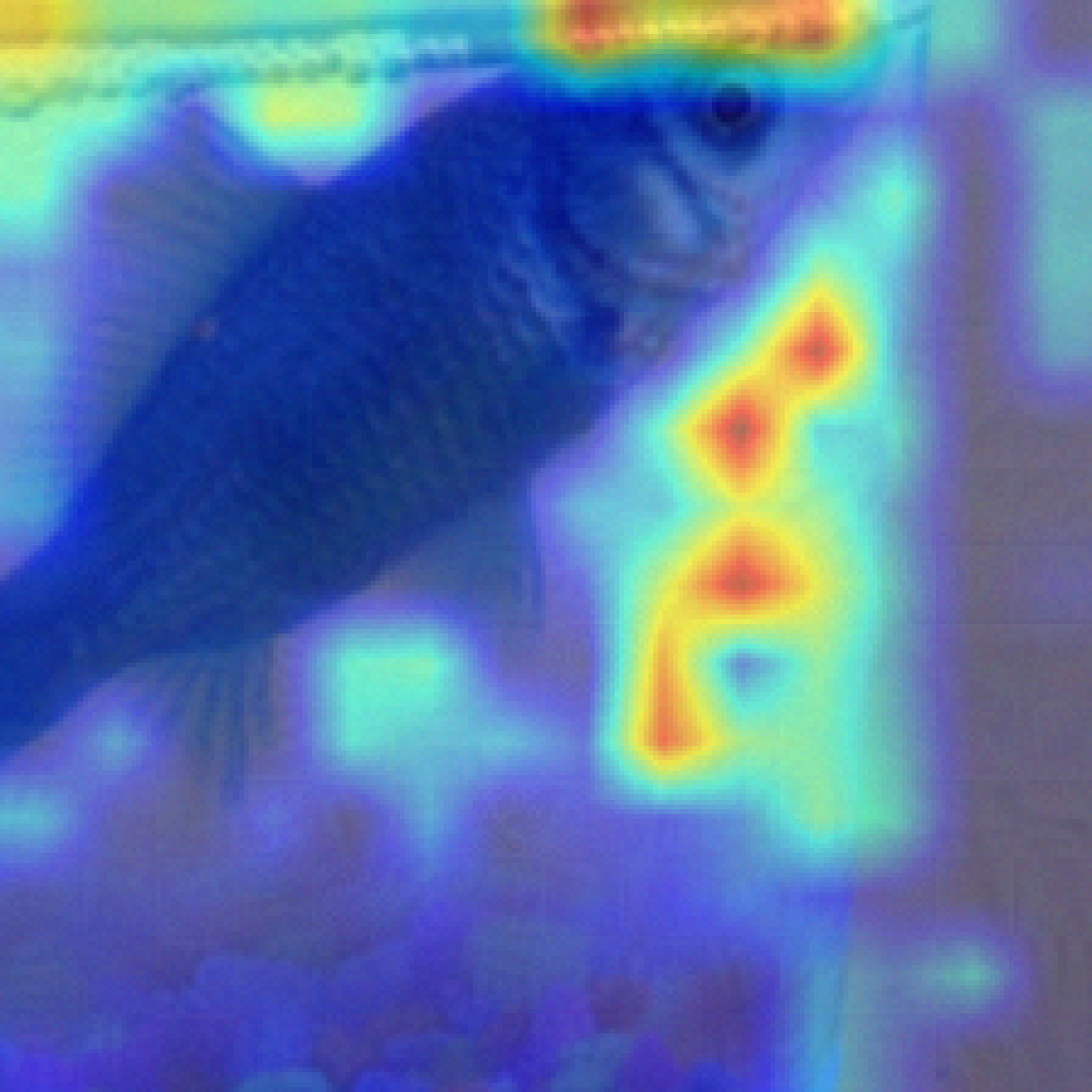} &
    \includegraphics[width=0.12\linewidth]{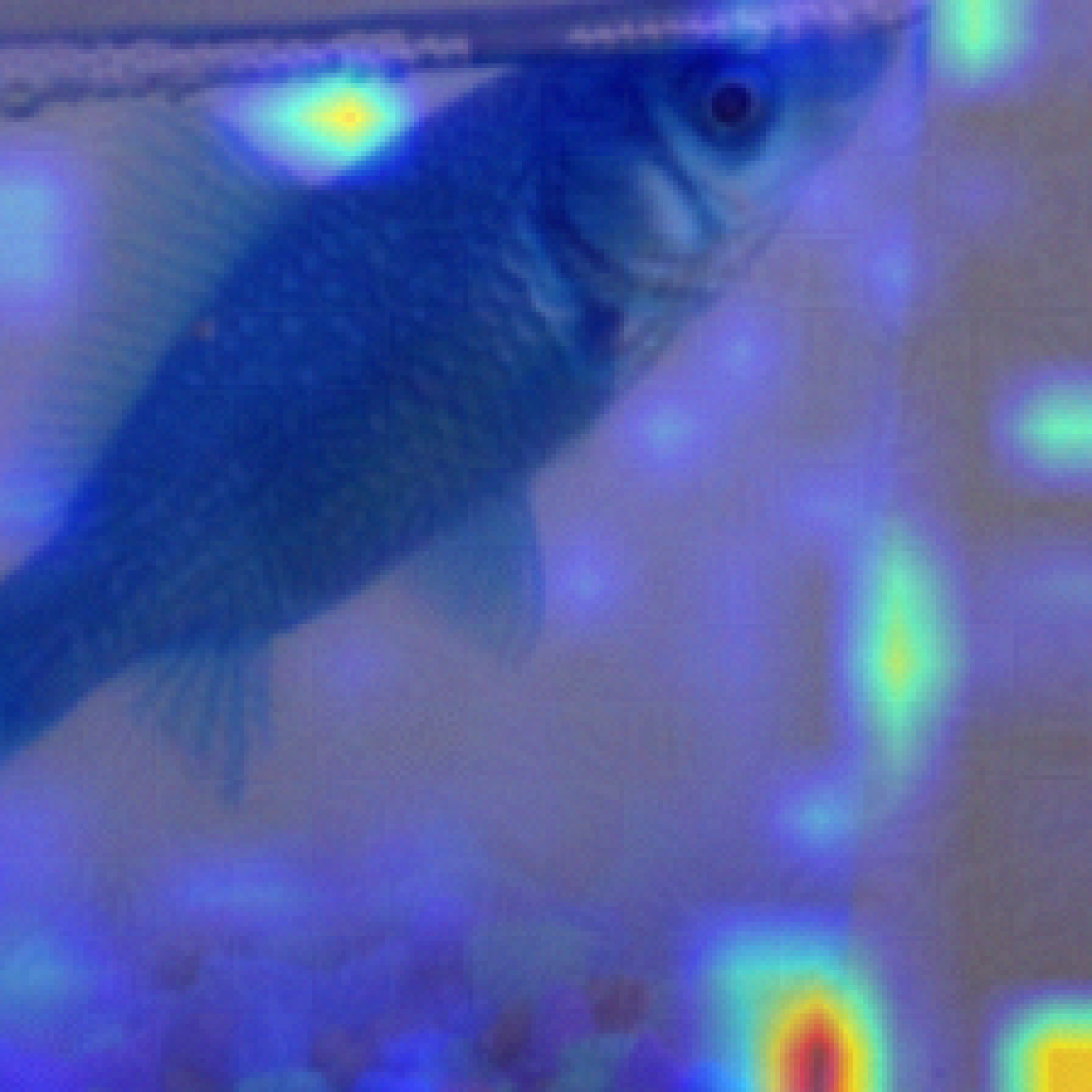} &
    \includegraphics[width=0.12\linewidth]{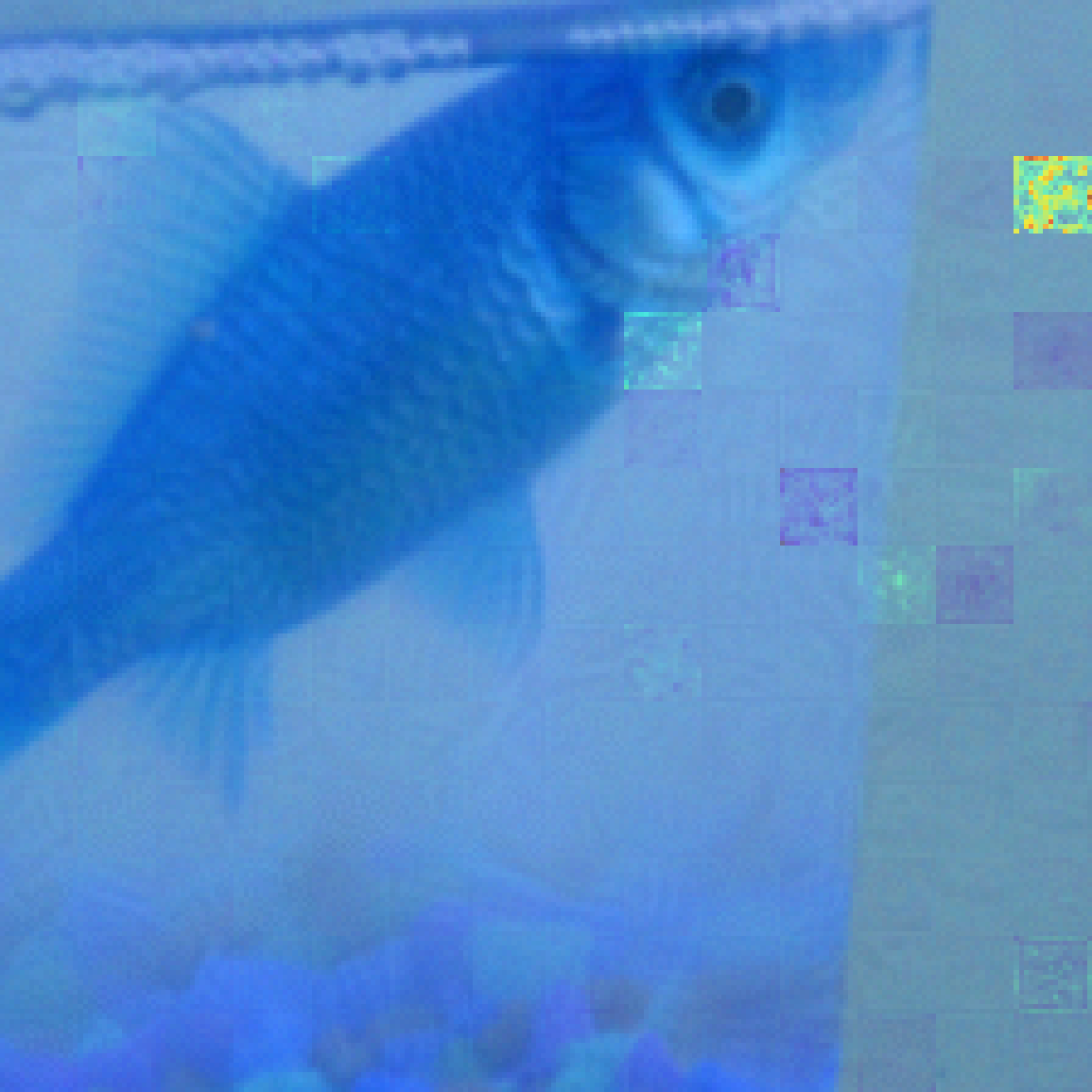} &    \includegraphics[width=0.12\linewidth]{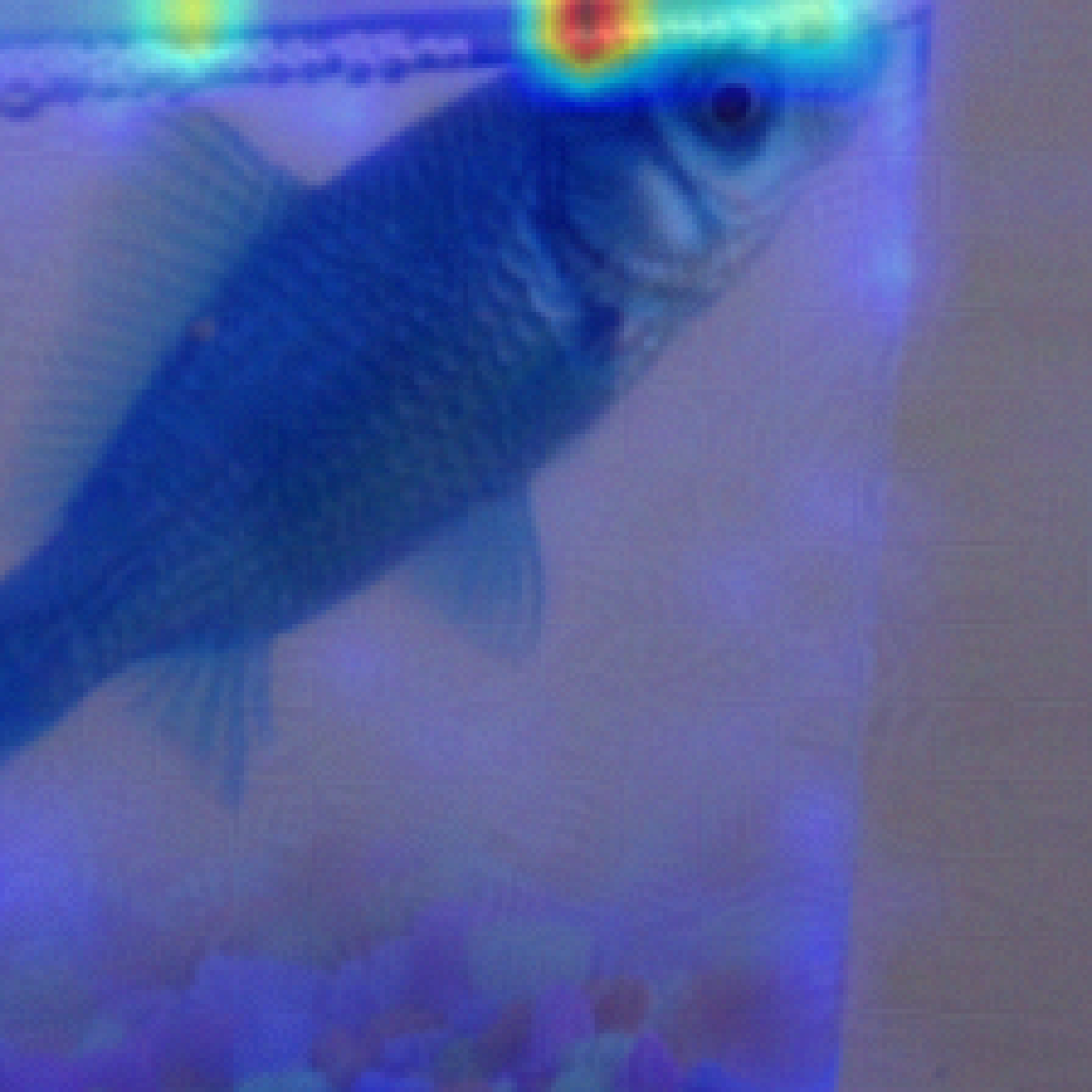} &\includegraphics[width=0.12\linewidth]{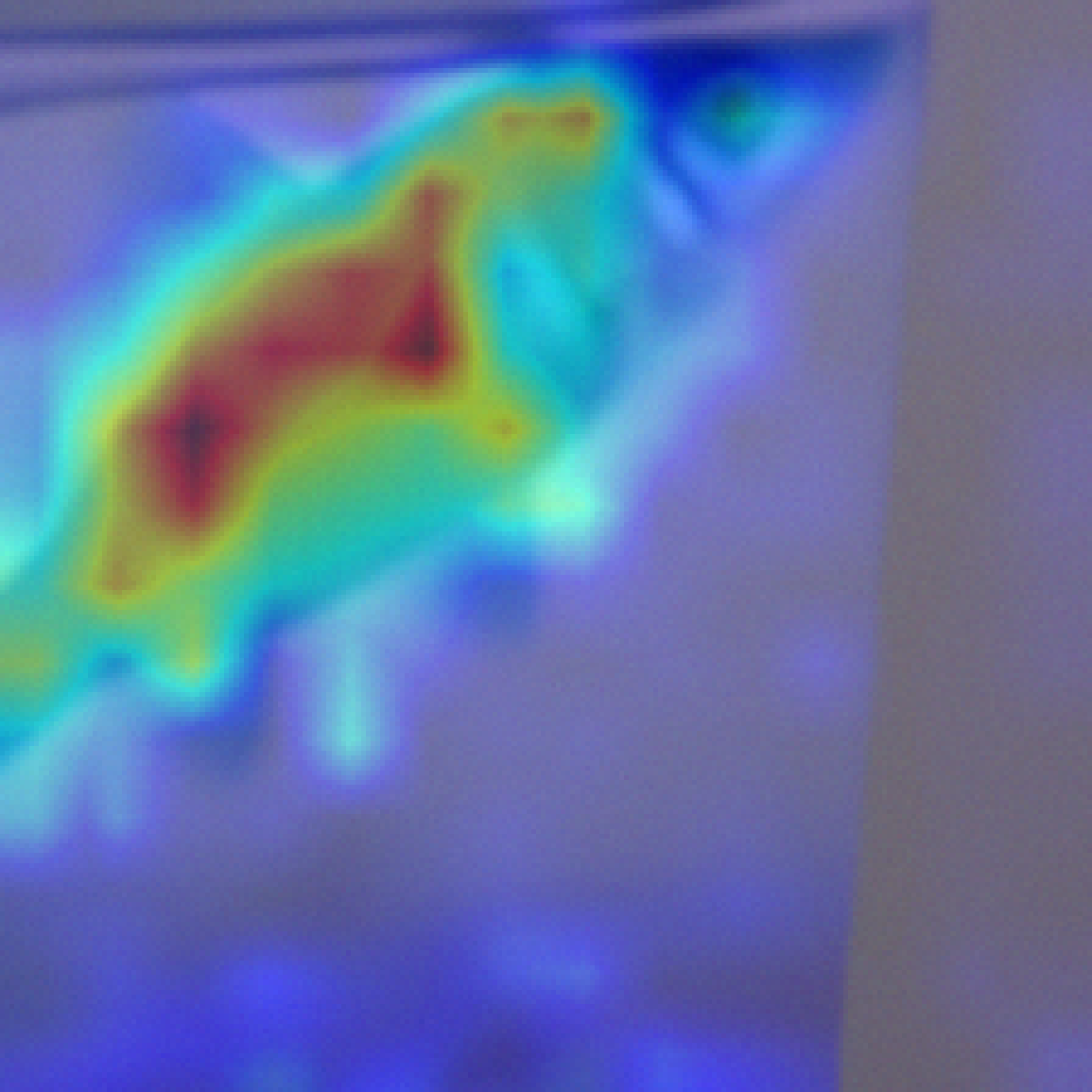} 
    \end{tabular*}
    \caption{Visualization results of the attention map on corrupted input for different methods.}
    \label{fig: exp}
    \end{center}
\vspace{-0.2in}
\end{figure*}

While the self-attention module in ViTs already possesses the property of completeness, it often suffers from stability issues. For example, in language tasks, \citet{wiegreffe2019attention,hu2022seat} have shown that attention is not robust, as a slight perturbation on the embedding vector can cause the attention vector distribution to change drastically. Similarly, for vision tasks, we find that such a phenomenon is not uncommon for ViTs, where attention modules are sensitive to perturbations on input images. For instance, in Fig. \ref{fig: exp}, we can see that a slight perturbation on the input image can cause the attention vector to focus on the wrong region for the image class, leading to an incorrect interpretation heat map and, consequently, confusing predictions. Actually, interpretation instability has also been identified as a pervasive issue in deep learning models \citep{ghorbani2019interpretation}, where carefully crafted small perturbations on the input can significantly change the interpretation result. Thus, stability has become an important factor for faithful interpretations. First, an unstable interpretation is prone to noise interference, hindering users' understanding of the underlying reasoning behind model predictions. Second, instability undermines the reliability of interpretation as a diagnostic tool for models \citep{ghorbani2019interpretation, dombrowski2019explanations, yeh2019fidelity, hu2022seat}. Therefore, it is crucial to mitigate the issue of unstable explanations in ViTs.

Although ViTs have shown to be more robust in predictions than convolutional neural networks (CNNs) \citep{bai2021transformers,paul2022vision,naseer2021intriguing}, whether they can provide {\bf faithful interpretations} remains uncertain. 
Compared to adversarial machine learning, which aims to enhance the robustness of classification accuracy, mitigating the unstable explanation issue in ViTs is more challenging. Here, we not only aim to improve the robustness of prediction performance but also, more importantly, {\bf make the attention heat map robust to perturbations}.  
Therefore, a natural question arises: {\bf can we develop provable and faithful variants of ViTs whose attention vectors and predictions are robust to perturbations?} In this paper, we provide an affirmative answer to this question. Specifically, our contributions can be summarized as follows. 

1. We propose a formal and mathematical definition for FViTs. Specifically, an FViT must satisfy the following two properties for any input image: (1) It must ensure that the top-$k$ indices of its attention feature vector remain relatively stable with perturbations, indicating interpretability stability. (2) However, attention vector stability alone cannot guarantee prediction robustness for FViTs. Therefore, an FViT must also ensure that its prediction distribution remains relatively stable under perturbations. 
    
2. We propose a method called Denoised Diffusion Smoothing (DDS) to obtain FViTs. Surprisingly, we demonstrate that our DDS can directly transform ViTs into FViTs. Briefly speaking, DDS involves two main components: (1) the standard randomized smoothing with Gaussian noise and (2) a denoising diffusion probabilistic model. It is worth noting that prior work on randomized smoothing has focused on enhancing prediction robustness. In contrast, we demonstrate that randomized smoothing can also enhance the faithfulness of attention vectors in ViTs. Additionally, we demonstrate that Gaussian noise is near-optimal for our method in both the $\ell_2$-norm and $\ell_\infty$-norm cases.

3. We conducted extensive experiments on three benchmark datasets using ViT, DeiT, and Swin models to verify the above two properties of the FViTs as claimed by our theories. Firstly, we demonstrate that our FViTs are more robust to different types of perturbations than other baselines. Secondly, we show the interpretability faithfulness of FViTs through visualization. Lastly, we verify our certified faithful region as claimed by our theories. The results reveal that our FViTs can provide provable faithful interpretations. 

Due to space limitations, we have included details on theoretical proofs and additional experimental results in the Appendix section.

\section{Related Work}
\noindent{\bf Interpretability for computer vision tasks.} Interpretation approaches in computer vision can be broadly categorized into two classes according to the part of models they participated in: post-hoc interpretation and self-explaining interpretation. Post-hoc interpretation methods require post-processing the model after training to explain the behaviors of black-box models. Typically, these methods either use surrogate models to explain local predictions \citep{ribeiro2016should}, or adopt gradient perturbation methods \citep{zeiler2014visualizing, lundberg2017unified} or feature importance methods \citep{ross2017right,selvaraju2017grad}. While post-hoc approaches require additional post-processing steps after the training process, self-explaining interpretation methods could be considered as integral parts of models, and they can generate explanations and make predictions simultaneously \citep{li2018tell,abnar2020quantifying}. From this view, ViTs can be considered as self-explaining interpretation approaches, as they use attention feature vectors to quantify the contributions of different tokens to their predictions.

\noindent{\bf Faithfulness in explainable methods.}
Faithfulness is a crucial property that explanation models should satisfy, ensuring that the explanation accurately reflects the true reasoning process of the model \citep{wiegreffe2019attention,herman2017promise,jacovi2020towards,lyu2022towards}. 
Faithfulness is also related to other principles such as sensitivity, implementation invariance, input invariance, and completeness \citep{yeh2019fidelity}. Completeness refers to the explanation comprehensively covering all relevant factors to the prediction \citep{sundararajan2017axiomatic}, while the other three terms are all related to the stability of different kinds of perturbations. The explanation should change if heavily perturbing the important features that influence the prediction \citep{adebayo2018sanity}, but be stable to small perturbations \citep{yin2022sensitivity}. Thus, stability is crucial to explanation faithfulness. Some preliminary work has been proposed to obtain stable interpretations. For example, \citet{yeh2019fidelity} theoretically analyzes the stability of post-hoc interpretation and proposes the use of smoothing to improve interpretation stability. \citet{yin2022sensitivity} designs an iterative gradient descent algorithm to get a counterfactual interpretation, which shows desirable stability. However, these techniques are designed for post-hoc interpretation and cannot be directly applied to attention-based mechanisms like ViTs. 

\noindent{\bf Robustness for ViTs.} There is also a substantial body of work on achieving robustness for ViTs, including studies such as \citep{mahmood2021robustness,salman2022certified,aldahdooh2021reveal,naseer2021intriguing,paul2022vision,mao2022towards}. However, these studies exclusively focus on improving the model's robustness in terms of its prediction, without considering the stability of its interpretation (i.e., attention feature vector distribution). While we do employ the randomized smoothing approach commonly used in adversarial machine learning, our primary objective is to maintain the top-$k$ indices unchanged under perturbations. And we introduce DDS, which leverages a smoothing-diffusion process to obtain faithful ViTs while also enhancing prediction performance.

\section{Vanilla Vision Transformers}
In this paper, we adopt the notation introduced in \citep{zhou2022understanding} to describe ViTs. ViTs process an input image $x$ by first dividing it into $n$ uniform patches. Each patch is then represented as a token embedding, denoted as $x_i \in \mathbb{R}^q$ for $i = 1,\cdots,n$. The token embeddings are then fed into a stack of transformer blocks, which use self-attention for token mixing and MLPs for channel-wise feature transformation.

\noindent \textbf{Token mixing.} 
A ViT makes use of the self-attention mechanism to aggregate global information. Given an input token embedding tensor $X = [x_1,\cdots,x_n] \in \mathbb{R}^{q\times n}$, self-attention applies linear transformations with parameters $W_K$, $W_Q$, 
to embed $X$ into a key $K = W_K X \in \mathbb{R}^{q \times n}$ and a query $Q = W_Q X \in \mathbb{R}^{q \times n}$ respectively. The self-attention module then calculates the attention matrix and aggregates the token features as follows:
\begin{equation}\label{eq:1}
Z^{\top} = \text{self-attention}(X) = \text{softmax}(\frac{Q^{\top}K}{\sqrt{q}})V^{\top}W_L,
\end{equation}
where $Z = [z_1, \cdots, z_n]$ is the aggregated token feature, $W_L \in \mathbb{R}^{q \times q}$ is a linear transformation, and $\sqrt{q}$ is a scaling factor. The output of the self-attention is normalized with Layer-norm and fed into an MLP to generate the input for the next block. At the final layer, the ViT outputs a prediction vector. It is worth highlighting that the self-attention mechanism can be viewed as a function that maps each image $X \in \mathbb{R}^{q\times n}$ to an attention feature vector $Z(X)\in \mathbb{R}^n$.

\section{Towards Faithful Vision Transformers}
As mentioned in the introduction, improving the stability and robustness of the self-attention modules in ViTs is crucial for making them more faithful. However, when it comes to explanation methods, it is not only important to consider the robustness of the model's prediction under perturbations but also the sensitivity and stability of its explanation modules. Specifically, {\bf the explanation modules should be sensitive enough to important token perturbations while remaining stable under noise perturbations}. As the attention mechanism in ViTs outputs a vector indicating the importance of each visual token, it is necessary to reconsider the robustness of both the attention module and ViTs. In general, a faithful ViT should satisfy the following two properties:

1. The magnitude of each entry in the attention vector indicates the importance of its associated patch. To ensure interpretability robustness, it is sufficient to maintain the order of leading entries. We measure interpretability robustness by computing the overlap of the top-$k$ indices between the attention vector of the original input and the perturbed input, where $k$ is a hyperparameter. 
 
2. The attention vector is fed to an MLP for prediction. In addition to the robustness for top-$k$ indices, a robust attention vector should also preserve the final model prediction. Specifically, we aim for the prediction distribution based on the robust attention under perturbations is almost the same as the distribution without perturbation. We measure the similarity or closeness between these distributions using different divergences.

\vspace{-0.1in}
\subsection{Motivation and Challenges}
\noindent \textbf{Motivation.} Although some literature has addressed methods to improve the robustness of ViTs, to the best of our knowledge, this is the first paper to propose a solution that enhances the faithfulness of ViTs while providing provable FViTs. Our work fills a gap in addressing both the robustness and interpretability of ViTs, as demonstrated through both theoretical analysis and empirical experiments.

\noindent \textbf{Why stability of attention vectors' top-$k$ indices cannot imply the robustness of prediction?} While ensuring the stability of the top-$k$ indices of the attention vectors is crucial for interpretability, it does not necessarily guarantee the robustness of the final prediction. This is because the stability of the prediction is also dependent on the magnitude of the entries associated with the top-$k$ indices. For instance, consider the vectors $v_1=(0.1, 0.2, 0.5, 0.7)$ and $v_2 = (0.2, 0.8, 0.9, 2)$, which have the same top indices. However, the difference in their magnitudes can significantly affect the final prediction. Therefore, in addition to ensuring the stability of the top-$k$ indices, an FViT should also meet the requirement that its prediction distribution remains relatively unchanged under perturbations to achieve robustness.

\noindent \textbf{Technical Challenges.} The technical challenges of this paper are twofold. First, we need to give a definition of FViTs, which contains the conditions that can quantify the stability of both the attention vectors and the model prediction under perturbations. This is challenging because we need to balance the sensitivity and stability of the explanation modules, and also consider the trade-off between interpretability and utility. Addressing these technical challenges is critical to achieving the main objective of this paper, which is to provide faithful  ViTs. Second, we need to design an efficient and effective algorithm to generate noise to preserve the robustness and interpretability of ViTs. This is challenging because standard noise methods may cause significant changes to the attention maps, which could lead to inaccurate and misleading explanations. To tackle this challenge, we introduce a mathematically proven approach for noise generation and leverage denoised diffusion to balance the utility and interpretability trade-off, which is non-trivial and provable. Also, this study is the first to demonstrate its effectiveness in enhancing explanation faithfulness, providing rigorous proof, and certifying the faithfulness of ViTs.

\subsection{Definition of FViTs}
In the following, we will mathematically formulate our above intuitions. Before that, we first give the definition of the top-$k$ overlap ratio for two vectors.

\begin{definition}
For vector $x\in \mathbb{R}^n$, we define the set of top-$k$ component $T_k(\cdot)$ as 
\begin{equation*}
    T_k(x)=\{i: i\in [d] \text{ and } \{|\{x_j\geq x_i: j\in [n]\}|\leq k\} \}.
\end{equation*}
And for two vectors $x$, $x'$, their top-$k$ overlap ratio $V_k(x, x')$  is defined as $V_k(x, x')=\frac{1}{k} | T_k(x) \cap T_k(x')|. $ 
\end{definition}

\begin{definition}[Faithful ViTs]
We call a function $f: \mathbb{R}^{q \times n}\mapsto \mathbb{R}^n$ is an $(R, D, \gamma,  \beta, k,  \|\cdot\|)$-\textit{ faithful attention module} for ViTs if  for any given  %an attention vector $\hat{\alpha}$ in a task $\mathcal{T}$ with 
input data $\bm{x}$  and for all $x'\in \mathbb{R}^{q\times n}$ such that $\|x-x'\|\leq R$, $f(x')$ satisfies 

\noindent 1. (Top-$k$ Robustness)  $V_k(f(x'), f(x))\geq \beta$. 

\noindent 2. (Prediction Robustness)  $D(\bar{y}(x), \bar{y}(x'))\leq \gamma$, where $\bar{y}(x), \bar{y}(x')$ are the prediction distribution of ViTs based on $f(x), f(x')$ respectively.

We also call the vector $f(x)$ as an $(R, D, \gamma,  \beta, k,  \|\cdot\|)$-\textit{ faithful attention} for $x$, and the models of ViTs based on $f$ as \textit{ faithful ViTs} (FViTs). 
\end{definition}
We can see there are several terms in the above definition. Specifically, $R$ represents the faithful radius, which measures the faithful region;  $D$ is a metric of the similarity between two distributions, which could be a distance or a divergence; 
$\gamma$ measures the closeness of the two prediction distributions; $0<\beta<1$ is the robustness of top-$k$ indices; $\|\cdot\|$ is some norm.  When $\gamma$ is smaller or $\beta$ is larger, then the attention module will be more robust and thus will be more faithful. 
In this paper, we will focus on the case where divergence $D$ is the R\'{e}nyi divergence and $\|\cdot\|$ is either the $\ell_2$-norm  or the $\ell_\infty$-norm (if we consider $x$ as a $d=q\times n$ dimensional vector), as we can show if the prediction distribution is robust under R\'{e}nyi divergence, then the prediction will be unchanged with perturbations on input \citep{li2019certified}. 
\begin{definition}
Given two probability distributions $P$ and $Q$, and  $\alpha\in (1,\infty)$, the $\alpha$-R\'{e}nyi divergence  $D_\alpha(P||Q)$ is defined as $D_\alpha(P||Q) = \frac{1}{\alpha-1} \log \mathbb{E}_{x\sim Q}(\frac{P(x)}{Q(x)})^\alpha$.
\end{definition}

\begin{theorem}\label{thm:0}
If a function is a $(R, D_\alpha, \gamma,  \beta, k,  \|\cdot\|)$-{faithful attention module} for ViTs, then if 
\begin{equation*}
    \gamma \leq - \log(1 - p_{(1)} - p_{(2)} + 2(\frac{1}{2}(p_{(1)}^{1 - \alpha} + p_{(2)}^{1 - \alpha}))^{\frac{1}{1- \alpha}}),
\end{equation*}
we have for all $x'$ such that where $\|x - x'\| \leq R$, 
\begin{equation*}
\arg\max_{g \in \mathcal{G}} \mathbb{P}(\bar{y}(x)=g) = \arg\max_{g \in \mathcal{G}} \mathbb{P}(\bar{y}(x')=g),   
\end{equation*}
%$\arg\max_{g \in \mathcal{G}} \mathbb{P}(\bar{y}(x)=g) = \arg\max_{g \in \mathcal{G}} \mathbb{P}(\bar{y}(x')=g)$, 
where $\mathcal{G}$ is the set of classes,
$p_{(1)}$ and $p_{(2)}$ refer to the largest and the second largest probabilities in $\{ p_i \}$, where $p_i$ is the probability that $\bar{y}(x)$ returns the $i$-th class.
\end{theorem}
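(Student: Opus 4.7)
The plan is to argue by contrapositive: assuming the prediction distribution closeness $D_\alpha(\bar{y}(x)\|\bar{y}(x'))\leq \gamma$ guaranteed by the faithful attention module, I will show that the top class cannot flip unless $\gamma$ exceeds the stated quantity. Let $i^{\star}$ denote the argmax of $\bar{y}(x)$, let $p_{(1)}$ and $p_{(2)}$ be the top two probabilities (attained at indices $i^{\star}$ and $j^{\star}$), and suppose for contradiction that there exists some class $j\neq i^{\star}$ with $\bar{y}(x')_j \geq \bar{y}(x')_{i^{\star}}$. I will first reduce to the case $j=j^{\star}$: among all flip targets the minimum achievable Renyi divergence is attained by the second-largest class of $P$, because enlarging a smaller coordinate of $Q$ requires a larger multiplicative deviation from $P$ and hence a larger $D_\alpha$.

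The second step is a coarsening argument. By the data processing inequality for Renyi divergence, I may merge all classes other than $i^{\star}$ and $j^{\star}$ into a single bin and work with the induced three-bin distributions $\tilde P=(p_{(1)},p_{(2)},1-p_{(1)}-p_{(2)})$ and $\tilde Q=(q_1,q_2,1-q_1-q_2)$ with $q_1\leq q_2$. The coarsened divergence is a lower bound on the original, so it suffices to prove the contrapositive for three-bin distributions. I will then minimize
\begin{equation*}
\exp((\alpha-1)D_\alpha(\tilde P\|\tilde Q))=p_{(1)}^{\alpha}q_1^{1-\alpha}+p_{(2)}^{\alpha}q_2^{1-\alpha}+(1-p_{(1)}-p_{(2)})^{\alpha}(1-q_1-q_2)^{1-\alpha}
\end{equation*}
subject to the flip constraint $q_1\leq q_2$ and simplex constraints. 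The minimum must occur at the boundary $q_1=q_2=:s$, since once $q_1<q_2$ moving the two toward their common average only decreases the objective (by convexity of $t\mapsto t^{1-\alpha}$ on the $p_{(1)},p_{(2)}$ block).

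The third step is the closed-form optimization over $s$ and the rest mass $r:=1-2s$. After substituting $q_1=q_2=s$, the objective becomes $(p_{(1)}^{\alpha}+p_{(2)}^{\alpha})s^{1-\alpha}+(1-p_{(1)}-p_{(2)})^{\alpha}r^{1-\alpha}$, and I will set first-order conditions in $s$ and $r$ subject to $2s+r=1$. A Lagrange multiplier computation yields optimal values $s^{\star}$ and $r^{\star}$ proportional to the respective $\alpha$-th powers of $(p_{(1)}^{1-\alpha}+p_{(2)}^{1-\alpha})/2$ raised to $1/(1-\alpha)$ and $1-p_{(1)}-p_{(2)}$. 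Plugging these back and simplifying (using $\sum q_i=1$ to collapse the expression) will produce the identity
\begin{equation*}
\min_{\tilde Q:q_1\leq q_2}\exp((1-\alpha)D_\alpha(\tilde P\|\tilde Q))=1-p_{(1)}-p_{(2)}+2\bigl(\tfrac{1}{2}(p_{(1)}^{1-\alpha}+p_{(2)}^{1-\alpha})\bigr)^{\frac{1}{1-\alpha}},
\end{equation*}
after which taking $-\log$ and comparing to the hypothesis on $\gamma$ completes the contradiction, so the argmax class is preserved for every $x'$ in the $R$-ball.

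The main obstacle I anticipate is the third step: the Lagrangian computation yielding the power-mean form is delicate because of the $1-\alpha$ exponents (which flip sign depending on whether $\alpha>1$), and I will need to verify that the stationary point is indeed a global minimum over the constraint set rather than merely a critical point, possibly by checking the Hessian or by a monotonicity argument in $s$. The reduction to $j=j^{\star}$ in step one also requires a short argument that replacing $j$ with $j^{\star}$ only decreases the minimal achievable divergence, which I expect to follow by a similar mass-shifting comparison.
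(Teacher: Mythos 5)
The paper itself gives no separate proof of this theorem; it is treated as an immediate consequence of the prediction-robustness condition $D_\alpha(\bar y(x),\bar y(x'))\le \gamma$ together with the Rényi Divergence Lemma (Lemma B.1), which the paper imports wholesale from \cite{li2019certified} without re-deriving it. You instead attempt to re-prove that lemma from scratch, which is a genuinely different and more self-contained route. The skeleton you lay out — contrapositive, reduction of the flip target to the second-largest class, a three-bin coarsening via the data processing inequality, a boundary argument pinning $q_1=q_2$, and a Lagrangian computation — is the right skeleton and matches how the cited lemma is actually proved. So the approach is sound in outline, and is arguably more informative than the paper's black-box citation.

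However, the concrete computations you sketch do not actually yield the formula in the statement, and would break if carried out as written. First, with the objective written as $\sum_i p_i^{\alpha}q_i^{1-\alpha}=\exp((\alpha-1)D_\alpha(\tilde P\|\tilde Q))$ and the KKT conditions taken in the $q$'s, the stationary rest-bin satisfies $q_3\propto p_3$ and the merged bin satisfies $s^\alpha\propto p_{(1)}^{\alpha}+p_{(2)}^{\alpha}$; the power mean that falls out is therefore $M_\alpha(p_{(1)},p_{(2)})=\bigl(\tfrac{1}{2}(p_{(1)}^{\alpha}+p_{(2)}^{\alpha})\bigr)^{1/\alpha}$, not $M_{1-\alpha}$ as you claim, and the resulting bound is $\min D_\alpha=\tfrac{\alpha}{\alpha-1}\log\bigl(2M_\alpha+1-p_{(1)}-p_{(2)}\bigr)$, not $-\log(\cdot)$. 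The $(p_{(1)}^{1-\alpha}+p_{(2)}^{1-\alpha})^{1/(1-\alpha)}$ form and the clean $-\log$ only emerge if you fix the argument carrying the $p$'s as the \emph{second} slot of $D_\alpha$ and optimize over the first (i.e., you would need to run the Lagrangian with $\sum_i q_i^{\alpha}p_i^{1-\alpha}$). Your proposal mixes the two conventions. Second, the closing identity $\min_{\tilde Q}\exp((1-\alpha)D_\alpha(\tilde P\|\tilde Q))=1-p_{(1)}-p_{(2)}+2M_{1-\alpha}$ has the wrong exponent: for $\alpha>1$, $\exp((1-\alpha)D_\alpha)$ is decreasing in $D_\alpha$, so its infimum is $0$; even read charitably as the value at the $D_\alpha$-minimizer, the correct conversion that makes $-\log(\text{RHS})$ the divergence bound is $\exp(-\min D_\alpha)=\text{RHS}$, not $\exp((1-\alpha)\min D_\alpha)=\text{RHS}$. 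Third, the boundary argument "moving $q_1,q_2$ toward their common average decreases the objective" is not a correct reason in general; the right justification is that the unconstrained minimizer over $q_1+q_2$ fixed has $q_1/q_2=p_{(1)}/p_{(2)}>1$, which violates the flip constraint, so by convexity of $t\mapsto t^{1-\alpha}$ the constrained minimum sits on the active boundary $q_1=q_2$. These are all fixable, but as currently sketched the Lagrangian step would not produce the stated formula, so the proof does not yet close.
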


\section{Finding Faithful Vision Transformers}
\subsection{\texorpdfstring{$\ell_2$}{Lg}-norm Case}
In the previous section, we introduced faithful attention and FViTs, now we want to design algorithms to find such a faithful attention module. We notice that in faithful attention, the condition of prediction robustness is quite close to adversarial machine training which aims to design classifiers that are robust against perturbations on inputs. Thus, a natural idea is to borrow the approaches in adversarial machine training to see whether they can get faithful attention modules. Surprisingly, we find that using randomized smoothing to the vanilla ViT, which is a standard method for certified robustness \citep{cohen2019certified}, and then applying a denoised diffusion probabilistic model \citep{ho2020denoising} to the perturbed input can adjust it to an FViT. And its corresponding attention module becomes a faithful attention module. Specifically, for a given input image $x$, we preprocess it by adding some randomized Gaussian noise, i.e., $\tilde{x} = x + z$ with $z \sim \mathcal{N}(0, \sigma^2 I_{q \times n})$. Then we will denoise $\tilde{x}$ via some denoised diffusion model to get $\hat{x}$, and feed the perturbed-then-denoised $\hat{x}$ to the self-attention module $Z$ in (\ref{eq:1}) and process to later parts of the original ViT to get the prediction. Thus, in total, we can represent the attention module as $\tilde{w}(x) = Z(T(x+z))$, where $T$ represents the denoised diffusion method. Here, we mainly adopt the 
denoising diffusion probabilistic model in \citep{nichol2021improved,ho2020denoising,carlini2023certified}, which leverages off-the-shelf diffusion models as image denoiser. Specifically, it has the following steps after we add Gaussian noise to $x$ and get $\tilde{x}$. 

In the first step, we establish a connection between the noise models utilized in randomized smoothing and diffusion models. Specifically, while randomized smoothing augments data points with additive Gaussian noise i.e., $x_{\mathrm{rs}} \sim \mathcal{N}(x, \sigma^2 \mathbf{I})$, diffusion models rely on a noise model of the form $x_t \sim \mathcal{N}(\sqrt{\alpha_t} x, (1-\alpha_t) \mathbf{I})$, 
where the factor $\alpha_t$ is a constant derived from the timestamp $t$ (i.e., $\alpha_t:=\prod_{s=1}^t 1-\beta_s$). To employ a diffusion model for randomized smoothing, DDS scales $x_{\mathrm{rs}}$ by $\sqrt{\alpha_t}$ and adjusts the variances to obtain the relationship $\sigma^2 = \frac{1 - \alpha_t}{\alpha_t}$. The formula for this equation may vary depending on the schedule of the $\alpha_t$ terms employed by the diffusion model, but it can be calculated in closed form.

Using this calculated timestep, we can then compute $x_{t^\star} = \sqrt{\alpha_{t^\star}} (x + \delta)$, where $\delta \sim \mathcal{N}(0,\sigma^2 \mathbf{I})$, and apply a diffusion denoiser on $x_{t^\star}$ to obtain an estimate of the denoised sample, $\hat{x} = \texttt{denoise}(x_{t^\star};t^\star)$. To further enhance the robustness, we repeat this denoising process multiple times (e.g., 100{,}000). The details of our method, namely Denoised Diffusion Smoothing (DDS), are shown in Algorithm~\ref{alg:1} in the Appendix.

In the following, we will show $\tilde{w}$ is a faithful attention module. Before showing the results, we first provide some notations. For input image $x$, we denote $\tilde{w}_{i^*}$ as the $i$-th largest component in $\tilde{w}(x)$. Let $k_0=\lfloor (1-\beta)k \rfloor +1$ as the minimum number of changes on $\tilde{w}(x)$ to make it violet the  $\beta$-top-$k$ overlapping ratio with $\tilde{w}(x)$. Let $\mathcal{S}%=\{k-k_0+1,\cdots,k+k_0\}$ 
$ denote the set of last $k_0$ components in top-$k$ indices and the top $k_0$ components out of top-$k$ indices.

\begin{theorem}\label{thm:5.1}
Consider the function $\tilde{w}$ where $\tilde{w}(x)=Z(T(x+z))$ with $Z$ in (\ref{eq:1}), $T$ as the denoised diffusion model and $z\sim \mathcal{N}(0, \sigma^2 I_{q\times n})$. Then, it is an
$(R, D_\alpha, \gamma,  \beta, k,  \|\cdot\|_2)$-{faithful attention module} for ViTs for any $\alpha> 1$ if for any input image $x$ we have
\begin{align*}\small
\sigma^2 & \geq \max\{{\alpha R^2} / 2(\frac{\alpha}{\alpha-1}\ln(2k_0(\sum_{i\in \mathcal{S}}\tilde{w}^\alpha_{i^*})^\frac{1}{\alpha} \\
    & + (2k_0)^\frac{1}{\alpha}\sum_{i\not\in \mathcal{S}}\tilde{w}_{i^*})  -\frac{1}{\alpha-1}\ln (2k_0)),  {\alpha R^2} /2\gamma\}.
\end{align*}
\end{theorem}
Theorem \ref{thm:5.1} indicates that $\tilde{w}(x)$ will be faithful attention for input $x$ when $\sigma^2$ is large enough. Equivalently, based on Theorem \ref{thm:5.1}  and \ref{thm:0}, we can also find a faithful region given $\beta$ and $k$. Note that in practice, it is hard to determine the specific $\alpha$ in R\'{e}nyi divergence. Thus, we can take the supreme w.r.t all $\alpha>1$ in finding the faithful region. See Algorithm \ref{alg:2} in Appendix for details.

We have shown that adding some Gaussian noise to the original data could get faithful attention through the original attention module. A natural question is whether adding Gaussian noise can be further improved by using other kinds of noise. Below, we show that \textbf{Gaussian noise is already near optimal} for certifying a faithful attention module via randomized smoothing.

\begin{theorem}\label{thm:l2lower}
Consider any function $\tilde{w}: \mathbb{R}^{q\times n}\mapsto \mathbb{R}^n$ where $\tilde{w}(x)=Z(T(x+z))$ with some random noise $z$, $T$ as the denoised diffusion model and $Z$ in (\ref{eq:1}). Then if  it is an $(R, D_\alpha, \gamma, \beta, k, \|\cdot\|_2)$-faithful attention module for ViTs with sufficiently large $\alpha$ and  $\mathbb{E}[\|z\|_{\max}]\leq \tau$ holds
for sufficiently small $\tau = O(1)$. Then it must be true that  
\begin{equation*}\small
\tau \geq \Omega(\frac{\sqrt{\alpha} R}{\sqrt{\gamma}}).    
\end{equation*}
Here for an matrix $z\in \mathbb{R}^{q\times n}$, $\|z\|_{\max}$ is defined as $\max_{i \in [q], j \in [n] } \| z_{i,j} \|$ is the maximal magnitude among all the entries in $z$. 
\end{theorem}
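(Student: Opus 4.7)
The plan is to establish the lower bound through a hard-instance construction combined with the data-processing inequality for R\'{e}nyi divergence, in the spirit of classical minimax lower bounds for $\ell_2$-randomized smoothing. The key observation is that since Theorem~\ref{thm:l2lower} quantifies over \emph{any} function $\tilde{w}(x) = Z(T(x + z))$ satisfying the faithfulness condition, it suffices to exhibit one adversarially chosen $Z \circ T$ (together with the MLP head producing $\bar y$) and one pair $(x, x')$ on which the faithfulness hypothesis forces the stated moment bound.

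First, I would fix a single coordinate $(i^*, j^*)$ of the input grid and set $x = 0$, $x' = R \cdot e_{(i^*, j^*)}$, so that $\|x - x'\|_2 = R$ is concentrated on one entry. I would then design the composition $Z \circ T$ together with the downstream classifier to implement an indicator of a half-space orthogonal to $e_{(i^*, j^*)}$, so that in the noiseless limit $\bar y(x)$ and $\bar y(x')$ concentrate on two different classes. Consequently, after smoothing, $\bar y(x)$ and $\bar y(x')$ are Bernoulli-type distributions whose parameters are determined exactly by the marginal of $z$ in direction $e_{(i^*, j^*)}$.

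Second, by the data-processing inequality for R\'{e}nyi divergence (since $\bar y(x), \bar y(x')$ are deterministic post-processings of $x + z, x' + z$),
\begin{equation*}
    D_\alpha(\bar y(x) \,\|\, \bar y(x')) \;\leq\; D_\alpha(P_{x+z} \,\|\, P_{x'+z}) \;=\; D_\alpha(P_z \,\|\, P_{z + \Delta}),
\end{equation*}
where $\Delta = x' - x$ is supported on the single coordinate $(i^*, j^*)$ with magnitude $R$. The faithfulness hypothesis forces the left-hand side to be at most $\gamma$. To make the left-hand side tight (i.e., not collapse to a $\Theta(1)$ bound from a single binary split), I would amplify the gap either by tensorizing the hard function over many independent patches, or, more cleanly, by directly reducing to the shifted-noise divergence lower bound in the next step, which bypasses the hard-function gap entirely.

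The main obstacle, and heart of the argument, is to lower-bound $D_\alpha(P_z \,\|\, P_{z + \Delta})$ using only the moment bound $\mathbb{E}[\|z\|_{\max}] \leq \tau$. The strategy would be: (i)~use a Markov-type argument on $\|z\|_{\max}$ to localize the bulk of $z$ inside the cube $\{\|z\|_\infty \leq C\tau\}$ for some constant $C$; (ii)~reduce to the one-dimensional marginal of $z$ in direction $e_{(i^*, j^*)}$, which has second moment $O(\tau^2)$; (iii)~apply an extremal inequality showing that among one-dimensional distributions with bounded second moment, the Gaussian \emph{minimizes} the R\'{e}nyi divergence between itself and a shift of length $R$ in the large-$\alpha$ regime, yielding $D_\alpha(P_z \,\|\, P_{z + \Delta}) \geq \Omega(\alpha R^2 / \tau^2)$. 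Combined with $D_\alpha \leq \gamma$, this gives $\alpha R^2 / \tau^2 \leq O(\gamma)$, i.e., $\tau \geq \Omega(\sqrt{\alpha}\, R / \sqrt{\gamma})$. The precise role of the assumptions ``$\alpha$ sufficiently large'' and ``$\tau = O(1)$'' is to place us in the Gaussian-limit regime where this extremality holds up to universal constants; turning this informal extremality into a clean quantitative statement is the most delicate step.
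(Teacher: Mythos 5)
Your approach diverges from the paper's in a way that, I believe, cannot be repaired. The paper does not argue via a single hard pair of inputs. It instead (i) restricts attention to mechanisms $\mathcal{M}(x)=x+z$ acting on the hypercube $\{0, R/(2\sqrt{d})\}^d$ (whose $\ell_2$-diameter is $\leq R$, so the faithfulness hypothesis makes $\mathcal{M}$ an $(\alpha,\gamma)$-R\'{e}nyi-DP mechanism on all $2^d$ vertices simultaneously), (ii) converts RDP to $(\epsilon,\delta)$-DP with $\epsilon=\gamma+2\log(1/\delta)/(\alpha-1)$, and (iii) invokes the Steinke--Ullman fingerprinting lower bound for privately estimating one-way marginals in $\{0,1\}^d$ under $\ell_\infty$ loss, which gives a factor of $\sqrt{d\log(1/\delta)}$ that exactly cancels the $\sqrt{d}$ arising from rescaling the hypercube side length $R/(2\sqrt d)$ back to $1$. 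The $\sqrt{\alpha}/\sqrt{\gamma}$ factor in the final bound then comes from optimizing $\delta$ in the RDP$\to$DP conversion. The whole mechanism is irreducibly $d$-dimensional.

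Your proposal, by contrast, reduces to a single coordinate: $x=0$, $x'=Re_{(i^*,j^*)}$, project to the one-dimensional marginal $\zeta = z_{(i^*,j^*)}$, and try to show $D_\alpha(\zeta,\zeta+R)\leq\gamma$ forces $\mathbb{E}[|\zeta|]\geq\Omega(\sqrt{\alpha}R/\sqrt{\gamma})$. This one-dimensional implication is false, and the gap is not a technicality. Consider the randomized-response-style mechanism on $\{0,R\}$: output the true point with probability $q$ and the other with probability $1-q$. Its two output laws are $(q,1-q)$ and $(1-q,q)$, whose $\alpha$-R\'{e}nyi divergence can be made $\leq\gamma$ by taking $q$ close to $1/2$ (roughly $q\approx 1/2+c\sqrt{\gamma/\alpha}$), while the expected absolute error is $(1-q)R\leq R/2$. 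So in one dimension the moment can always be pushed down to $O(R)$, no matter how large $\alpha$ or how small $\gamma$; there is no way to extract $\Omega(\sqrt{\alpha}R/\sqrt{\gamma})$, which generically exceeds $R$, from a two-point argument. Relatedly, the extremal claim in your step~(iii) --- that among distributions with a given second moment the Gaussian minimizes the R\'{e}nyi divergence to its shift --- is not a theorem I am aware of and, even if it were, would bound $\mathbb{E}[\zeta^2]$ rather than $\mathbb{E}[|\zeta|]$, and those are not comparable in the needed direction. The missing ingredient in your plan is precisely the high-dimensional packing/fingerprinting phenomenon that Steinke--Ullman exploit: the mechanism must succeed simultaneously on exponentially many inputs, which is what ultimately forces the noise's $\ell_\infty$ norm to be large.
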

Note that in Theorem \ref{thm:5.1} we can see when $\gamma$ is small enough then  $\tilde{w}$ will be a $(R, D_\alpha, \gamma, \beta, k, \|\cdot\|_2)$-faithful attention module if $z\sim \mathcal{N}(0, \sigma^2I_{q\times n})$ with  $\sigma=\frac{\alpha R^2}{2\gamma}$. 
In this case we can see  $\mathbb{E}\|z\|_{\max}=O(\frac{\log {(q\cdot n)} \sqrt{\alpha} R}{\sqrt{\gamma}})$. Thus, the Gaussian noise is optimal up to some logarithmic factors.

\subsection{\texorpdfstring{$\ell_\infty$}{Lg}-norm Case}

\begin{figure*}[htbp]
    \setlength{\tabcolsep}{1pt} % Default value: 6pt
    \renewcommand{\arraystretch}{1} % Default value: 1
    \begin{center}
    \begin{tabular*}{\linewidth}
    % {cccccc}
{@{\extracolsep{\fill}}cccccccc}
    Input & Raw Attention & Rollout & GradCAM & LRP & VTA  & Ours \\

    \raisebox{13mm}{\multirow{2}{*}{\makecell*[c]{Dog: clean$\rightarrow$\\\includegraphics[width=0.15\linewidth]{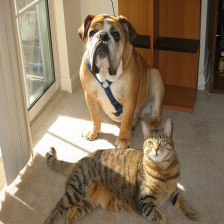}\\ 
    Dog: poisoned$\rightarrow$\\{\scriptsize $7/255$}
    }}
    }
       &
    \includegraphics[width=0.13\linewidth]{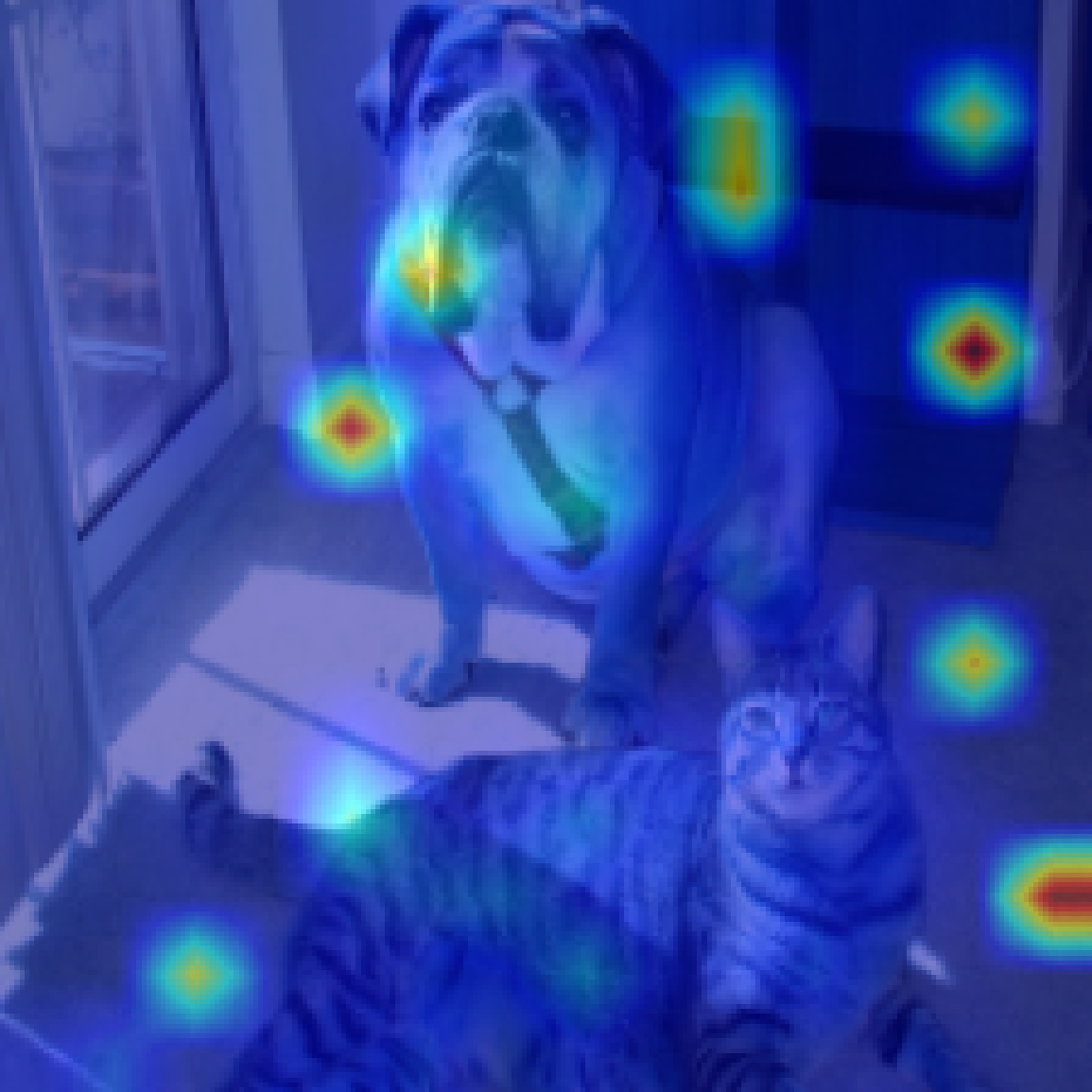} &
    \includegraphics[width=0.13\linewidth]{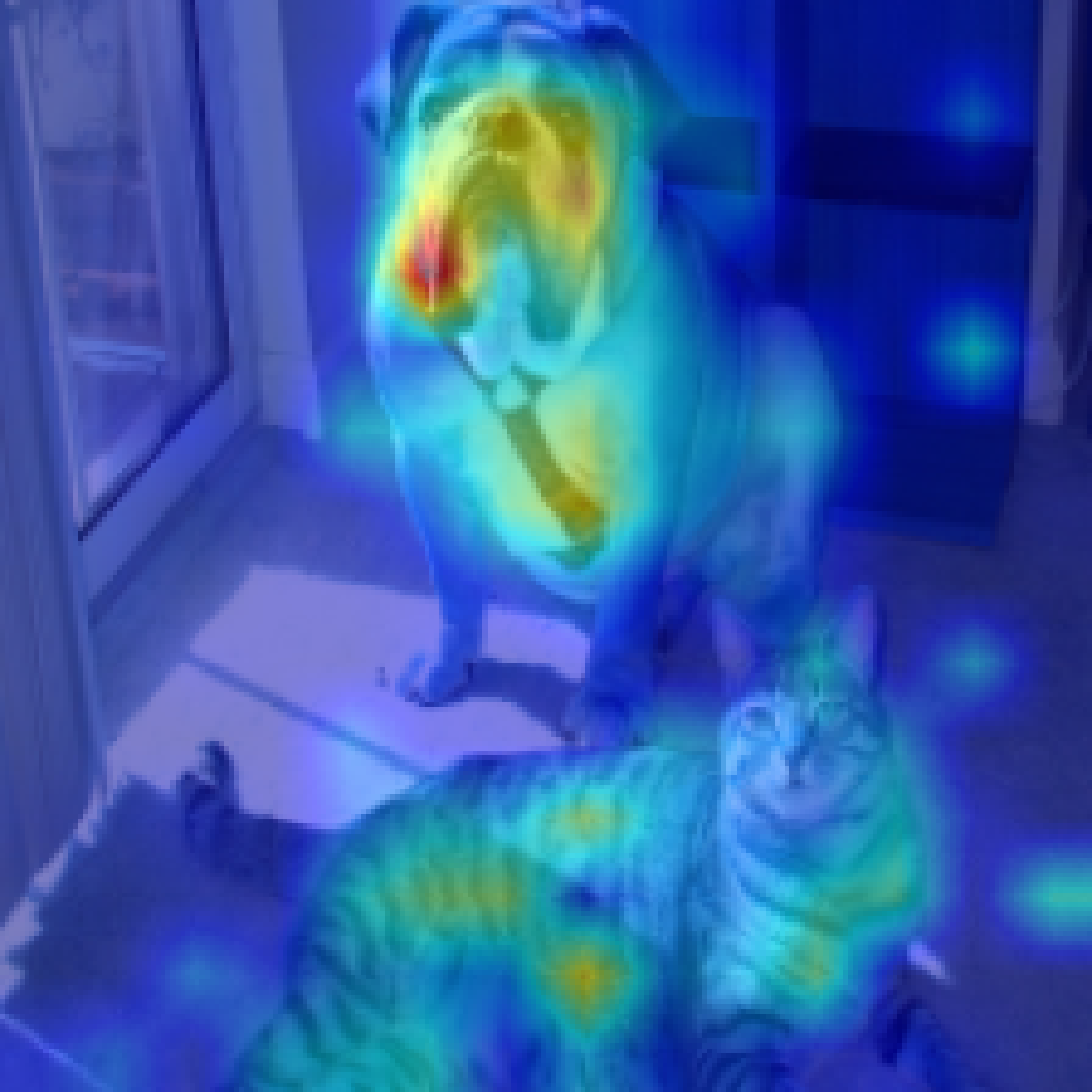} &
    \includegraphics[width=0.13\linewidth]{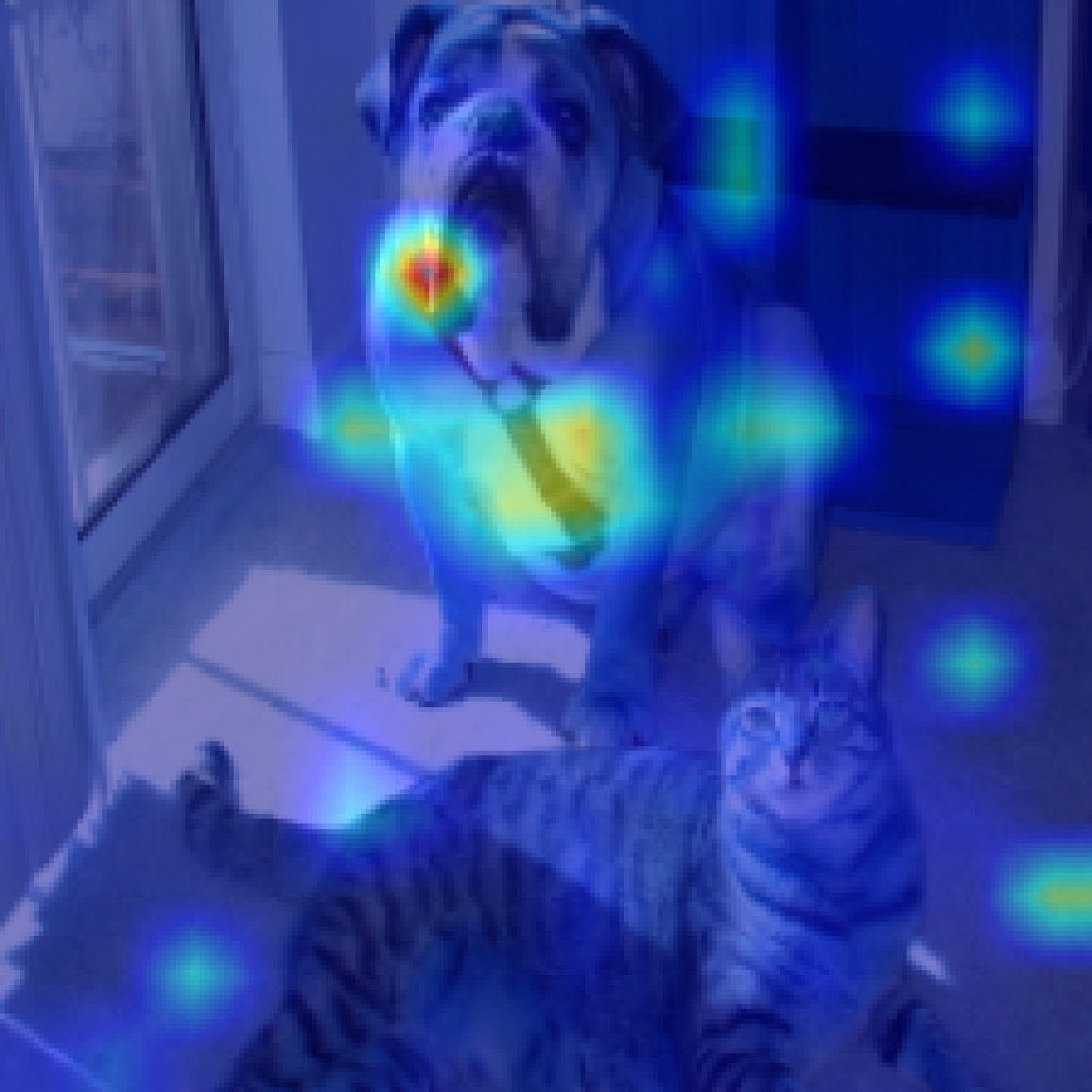} &
    \includegraphics[width=0.13\linewidth]{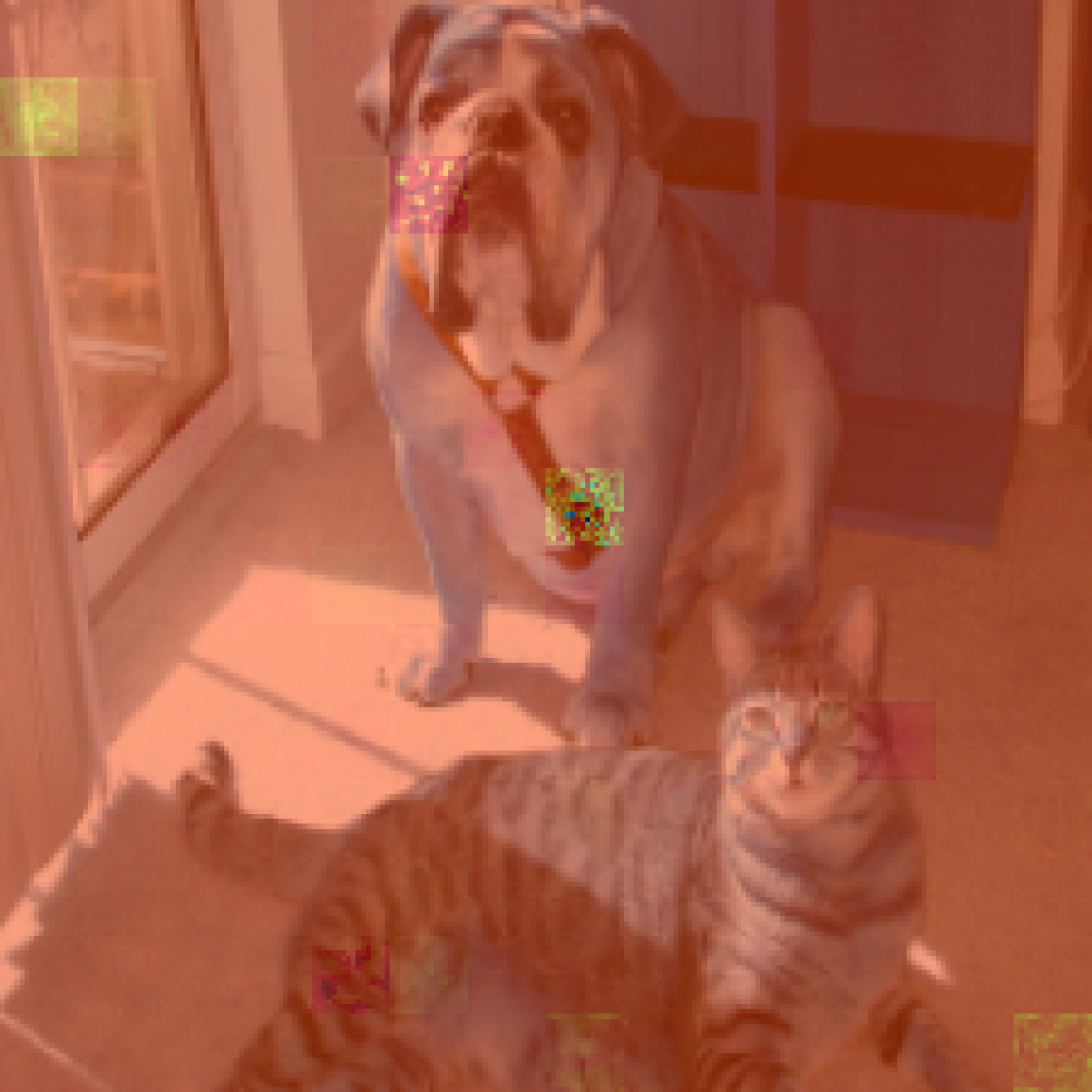} &
    \includegraphics[width=0.13\linewidth]{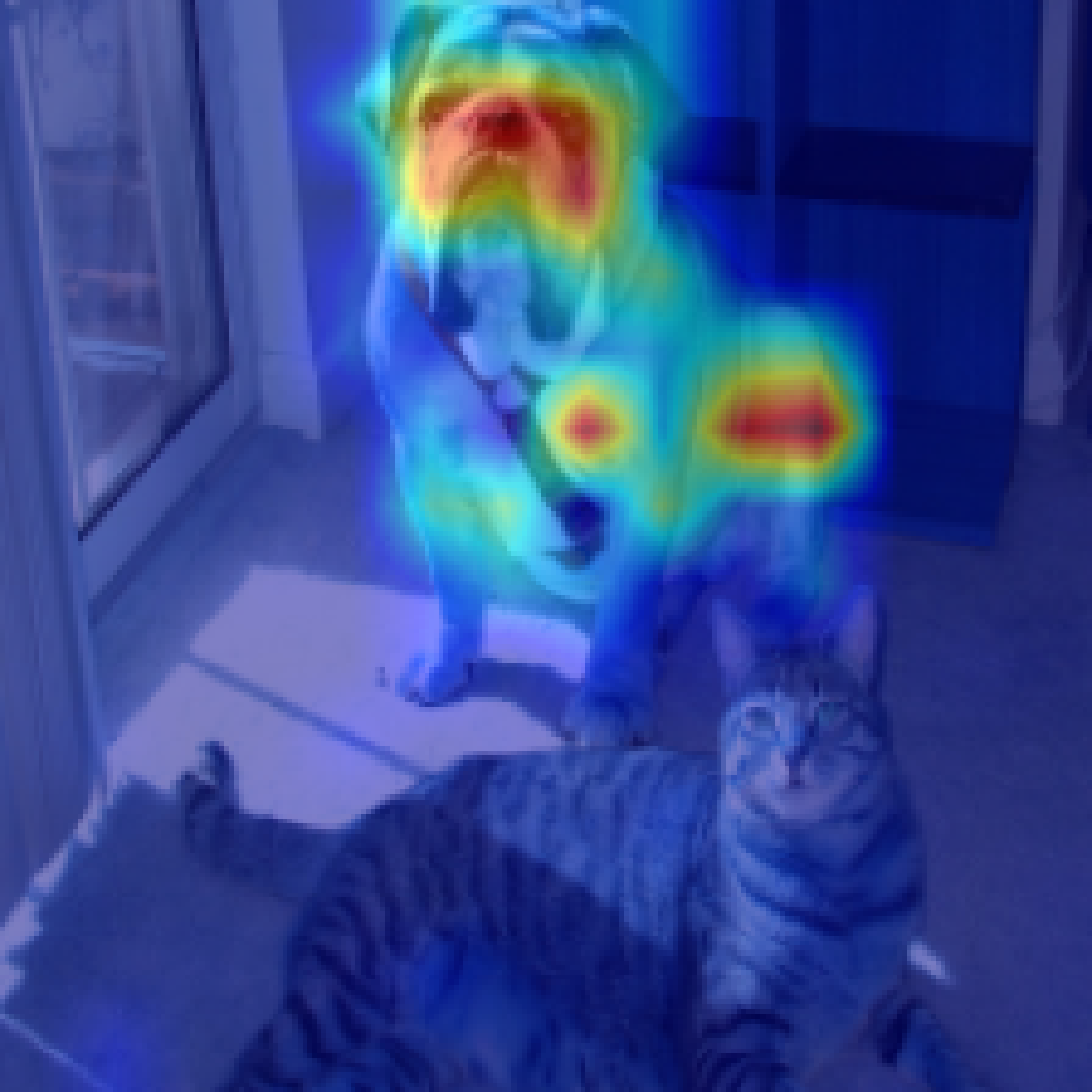} &
    \includegraphics[width=0.13\linewidth]{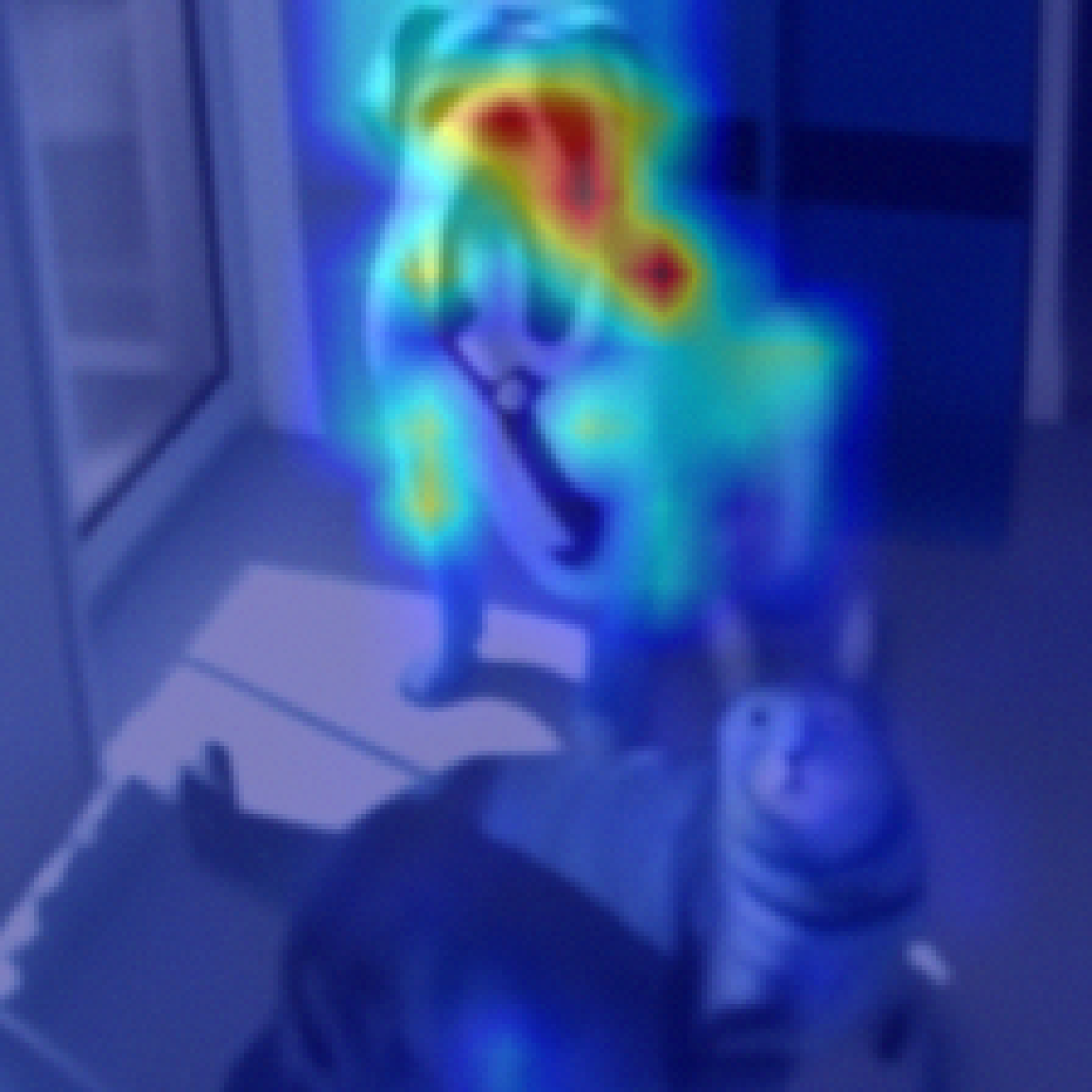}
    \\
    &\includegraphics[width=0.13\linewidth]{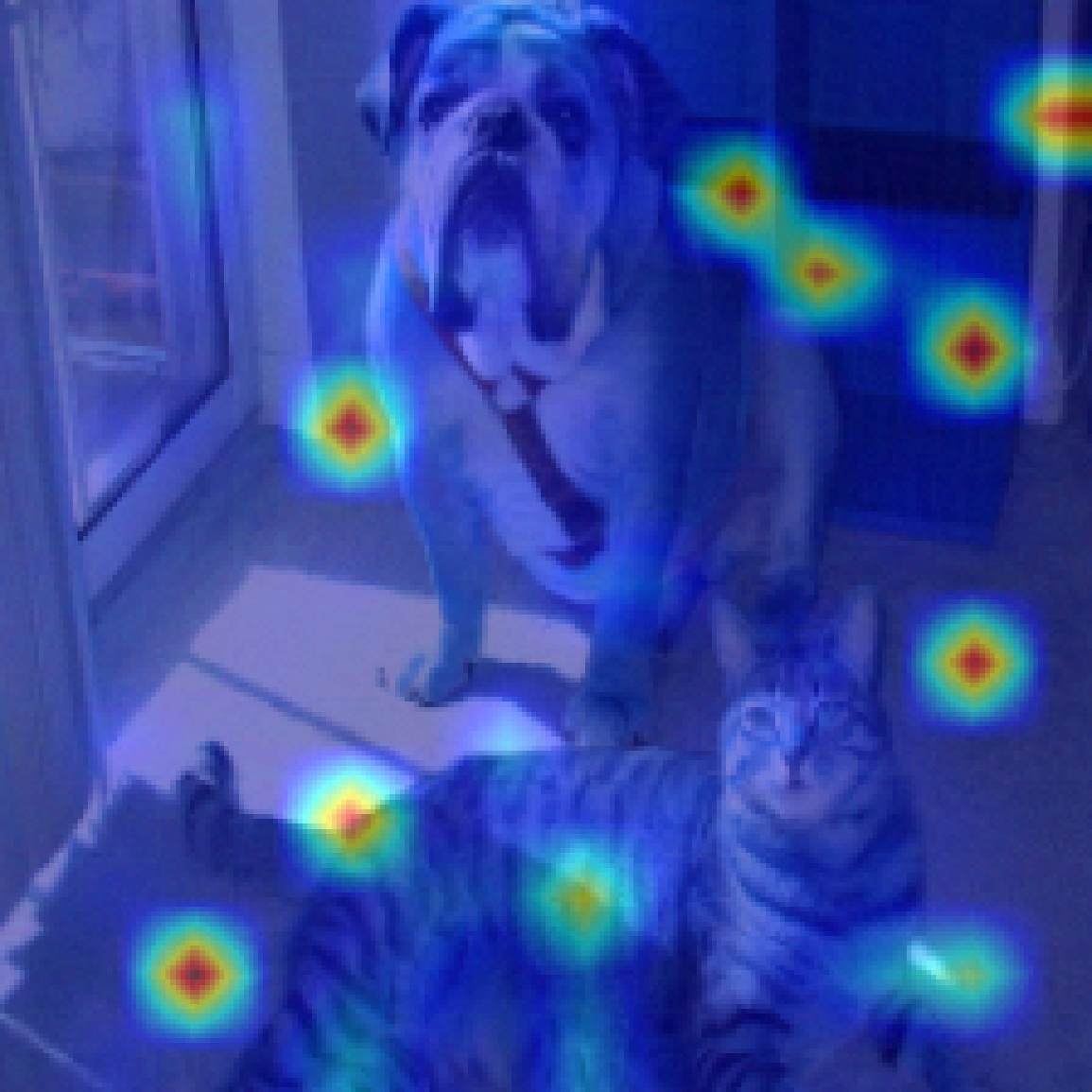} &
    \includegraphics[width=0.13\linewidth]{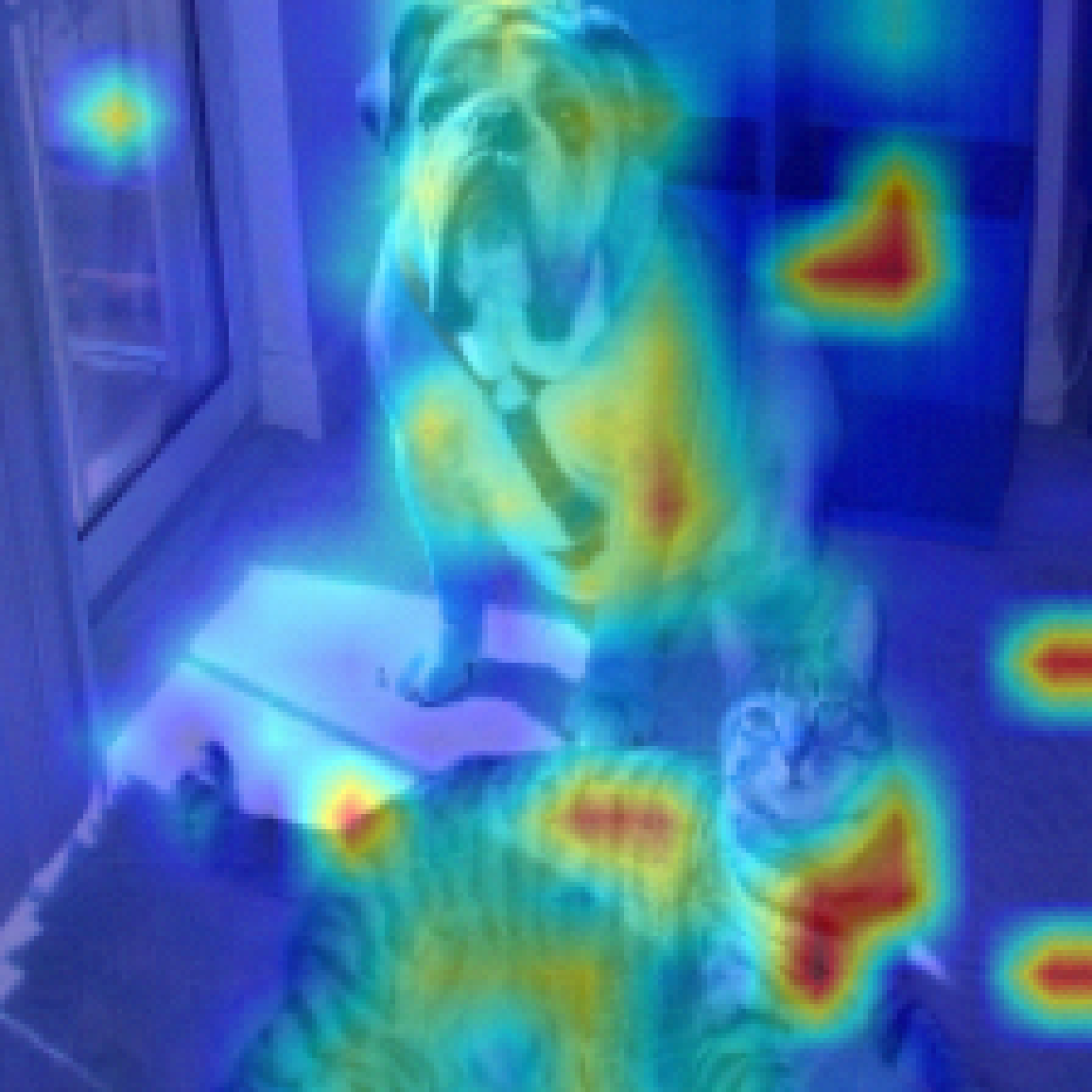} &
    \includegraphics[width=0.13\linewidth]{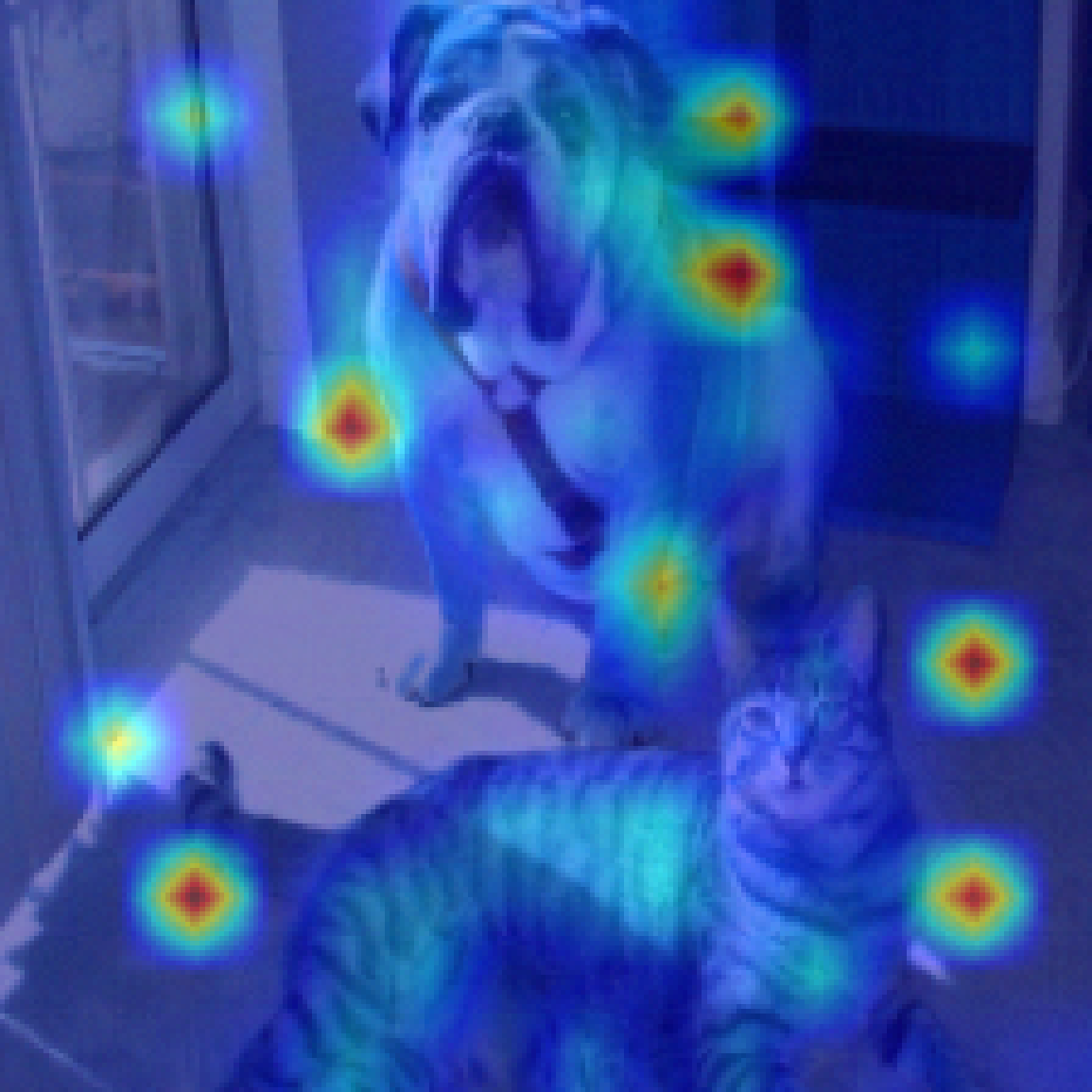} &
    \includegraphics[width=0.13\linewidth]{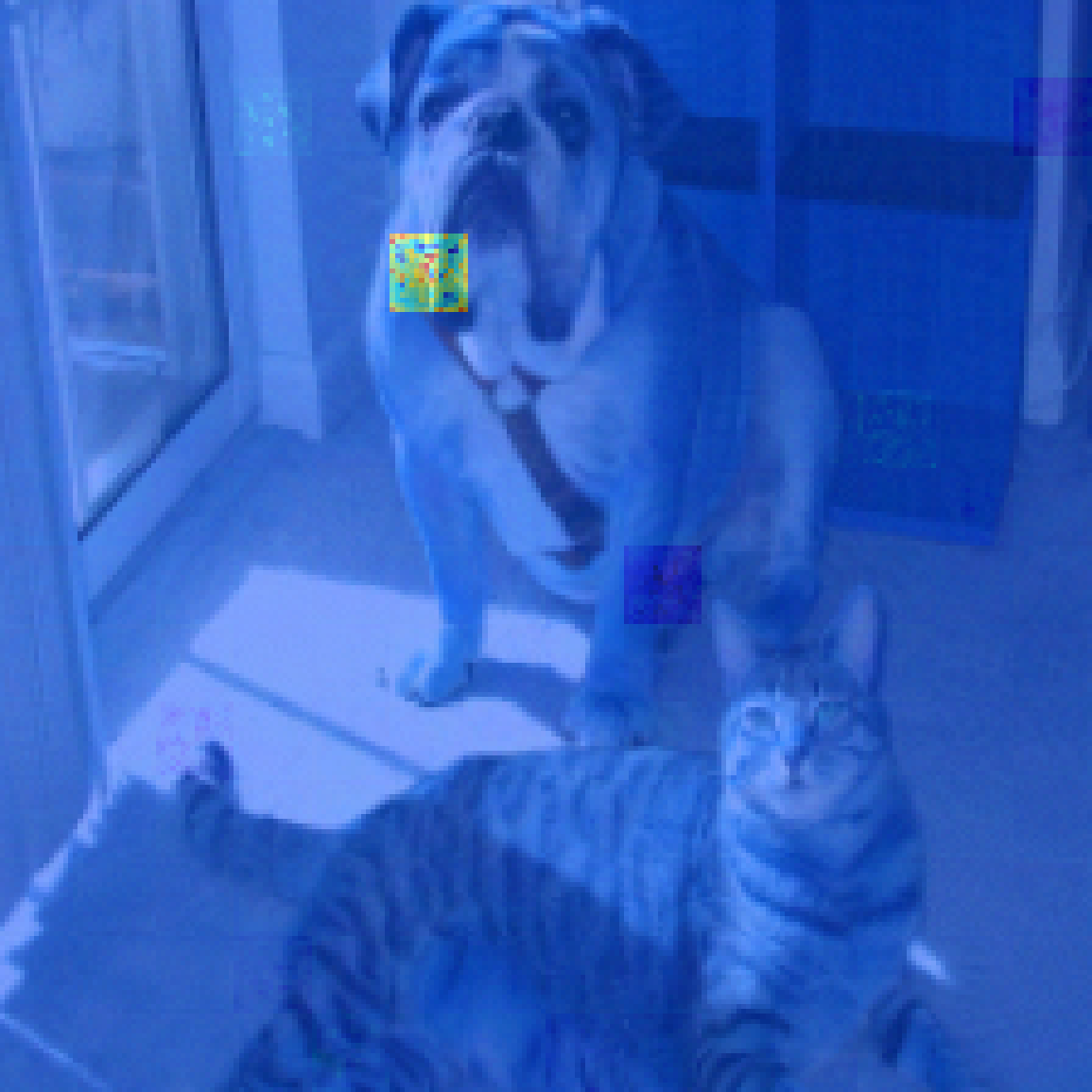} &
    \includegraphics[width=0.13\linewidth]{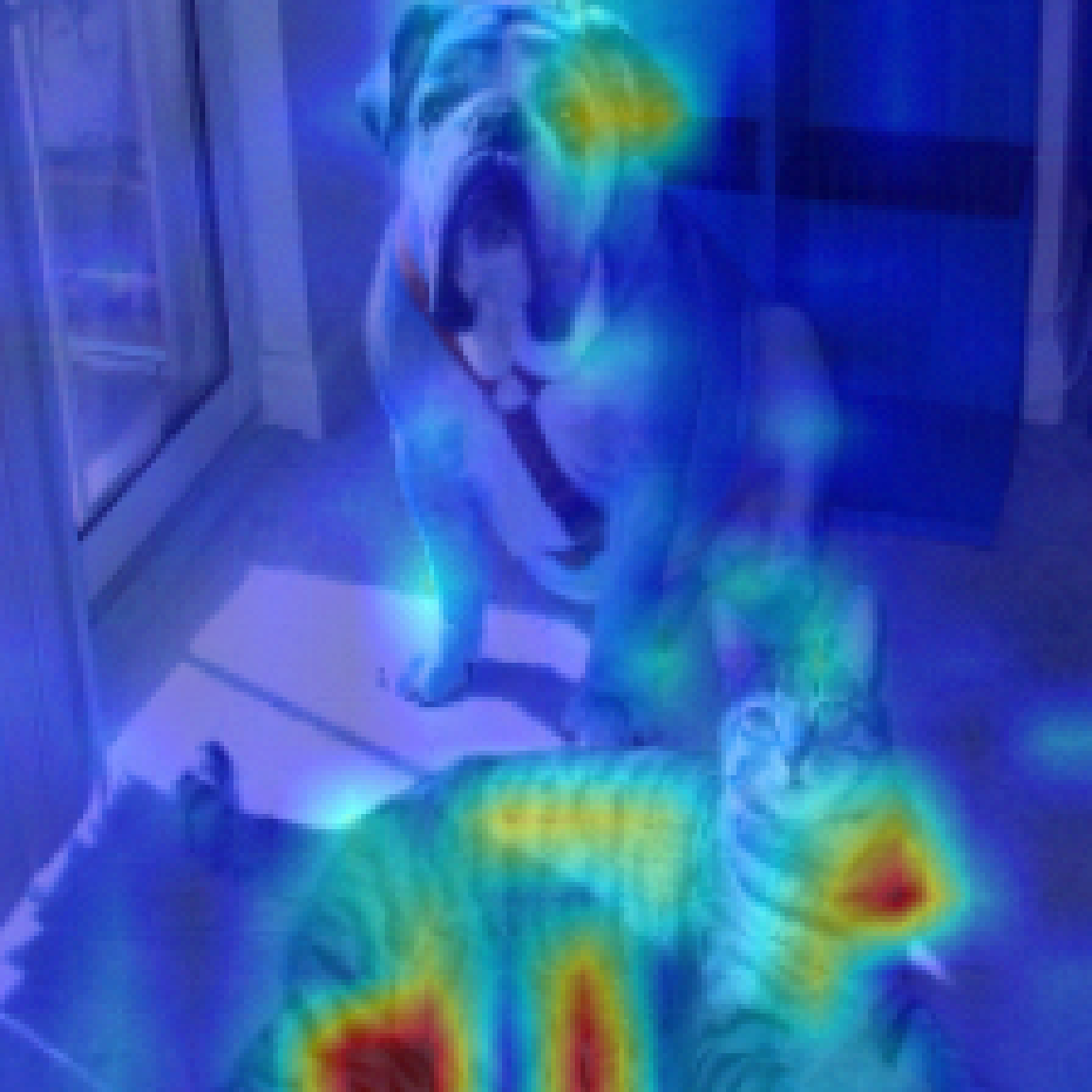} &
    \includegraphics[width=0.13\linewidth]{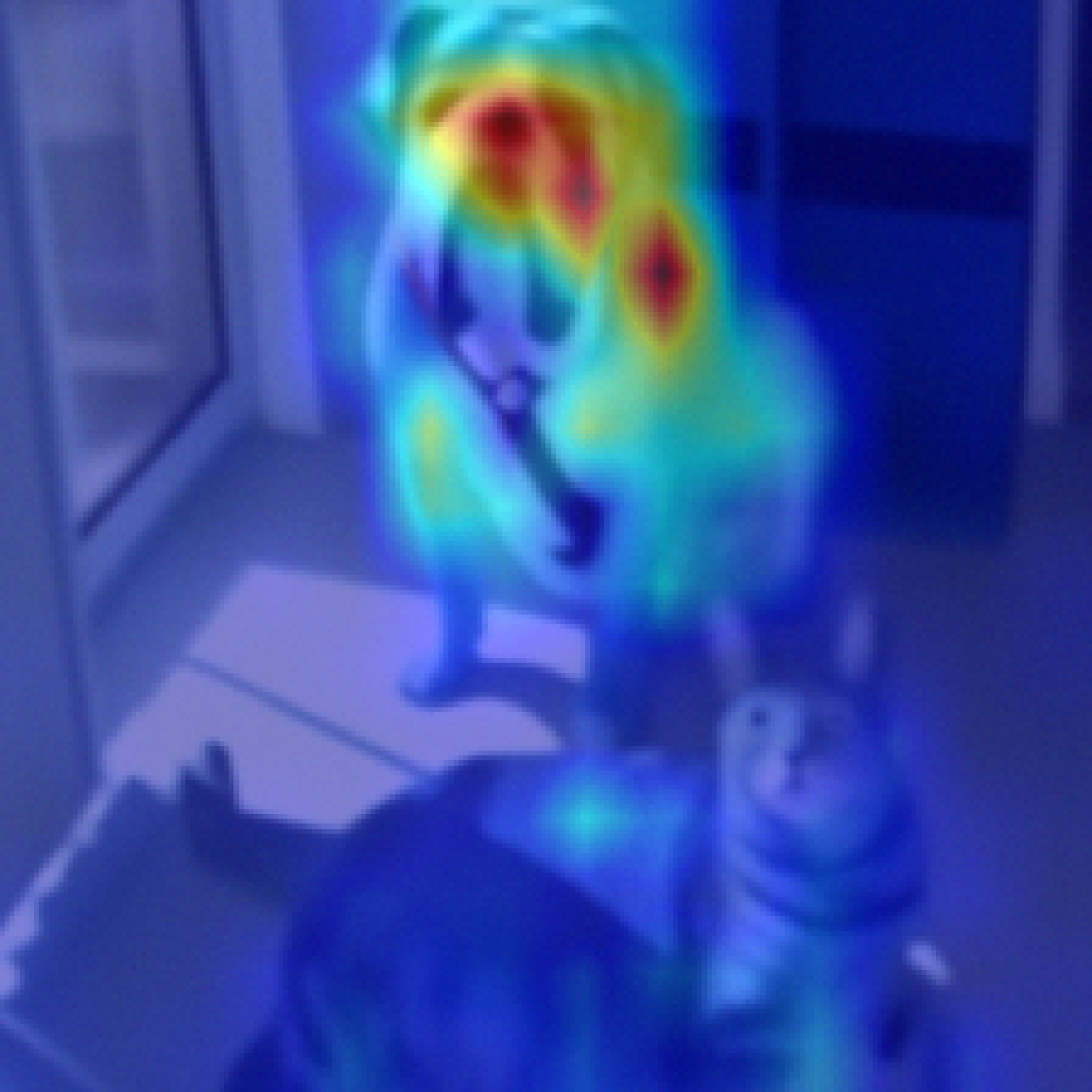}
    \end{tabular*}
    \caption{
    Class-specific explanation heat map visualizations under adversarial corruption. For each image, we present results for two different classes. Other baselines either give inconsistent interpretations or show wrong focus class regions under adversarial perturbations. While our method gives a consistent interpretation map and is robust against adversarial attacks.
    }
    \label{fig: example}
    \end{center}
\vspace{-0.2in}
\end{figure*}

\noindent In this section we consider the $\ell_\infty$-norm instead of the $\ell_2$-norm. Surprisingly, we show that using the same method as above, we can still get a faithful attention module. Moreover, the Gaussian noise is still near-optimal. 
\begin{theorem}\label{thm:5.2}
Consider the function $\tilde{w}$ where $\tilde{w}(x)=Z(T(x+z))$ with $Z$ in (\ref{eq:1}), $T$ as the denoised diffusion model and $z\sim \mathcal{N}(0, \sigma^2 I_{q\times n})$. Then it is  an
$(R, D_\alpha, \gamma,  \beta, k,  \|\cdot\|_\infty)$-{faithful attention module} for ViTs for $\alpha> 1$ if for any input image $x$ we have the following, where $d=q\cdot n$. 
\begin{align*}\small
\sigma^2 & \geq \max\{{d\alpha R^2} / 2(\frac{\alpha}{\alpha-1}\ln(2k_0(\sum_{i\in \mathcal{S}}\tilde{w}^\alpha_{i^*})^\frac{1}{\alpha} \\
& +(2k_0)^\frac{1}{\alpha}\sum_{i\not\in \mathcal{S}}\tilde{w}_{i^*})-\frac{1}{\alpha-1}\ln (2k_0)),  {d\alpha R^2} /2\gamma\}. 
\end{align*}
\end{theorem}
Compared with the result in Theorem \ref{thm:5.1} for the $\ell_2$-norm case, we can see there is the additional factor of $d$ in the bound of the noise. This means if we aim to achieve the same faithful level as in the $\ell_2$-norm case, then in the $\ell_\infty$-norm case, we need to enlarge the noise by a factor of $d$. Equivalently, if we add the same scale of noise, then the faithful region for $\ell_\infty$-norm will be shrunk by a factor of $d$ of the region for $\ell_2$-norm. See Algorithm \ref{alg:2} for details. 

\begin{theorem}\label{thm:linflower}
Consider any function $\tilde{w}: \mathbb{R}^{q\times n}\mapsto \mathbb{R}^n$ where $\tilde{w}(x)=Z(T(x+z))$ with some random noise $z$ and $Z$ in (\ref{eq:1}). Then if  it is an $(R, D_\alpha, \gamma, \beta, k, \|\cdot\|_\infty)$-faithful attention module for ViTs with sufficiently large $\alpha$ and  $\mathbb{E}[\|z\|_{\max}]\leq \tau$ holds
for sufficiently small $\tau \leq O(1)$. Then it must be true that
\begin{equation*}\small
\tau \geq \Omega(\frac{\sqrt{\alpha} R}{\sqrt{d\gamma}}).    
\end{equation*}
%$\tau \geq \Omega(\frac{\sqrt{\alpha} R}{\sqrt{d\gamma}})$.
\end{theorem}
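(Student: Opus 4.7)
The plan is to run the same information-theoretic argument that underlies Theorem~\ref{thm:l2lower}, modified only in two places: first, the adversarial direction is now taken from inside the $\ell_\infty$-ball of radius $R$, and second, the final step converts an $\ell_2$-scale lower bound on the noise $z$ into a lower bound on $\mathbb{E}[\|z\|_{\max}]$ via the elementary coordinate-wise inequality $\|z\|_2\leq \sqrt{d}\,\|z\|_{\max}$.

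First, I would convert the faithfulness hypothesis into a R\'enyi-divergence constraint on the smoothed output distributions. Fix any input $x$ and any admissible perturbation $\delta$ with $\|\delta\|_\infty\leq R$. The prediction-robustness clause gives $D_\alpha(\bar{y}(x)\,\|\,\bar{y}(x+\delta))\leq \gamma$, and because $\bar{y}$ is a deterministic post-processing of the random variable $x+z$ through the fixed maps $T$ and the classifier head, the data-processing inequality for R\'enyi divergence yields
\begin{equation*}
D_\alpha\bigl(\mathrm{Law}(x+z)\,\big\|\,\mathrm{Law}(x+\delta+z)\bigr)\leq \gamma.
\end{equation*}
This is the same starting point as in Theorem~\ref{thm:l2lower}; only the set over which $\delta$ may range has changed.

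Second, I would plug in a single-coordinate hard instance $\delta=R\,e_j$, which satisfies $\|\delta\|_\infty=\|\delta\|_2=R$ and therefore lies inside the $\ell_\infty$-ball. Reusing the two-point R\'enyi lower bound that drives Theorem~\ref{thm:l2lower}, and which the regime assumptions ``sufficiently large $\alpha$'' and ``sufficiently small $\tau$'' are designed to place into a clean quadratic-in-$\|\delta\|_2^2$ form, the constraint $D_\alpha\leq\gamma$ forces an effective $\ell_2$-scale $\sigma_{\mathrm{eff}}$ of $z$ to satisfy $\sigma_{\mathrm{eff}}\gtrsim R\sqrt{\alpha/\gamma}$.

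Third, I would translate $\sigma_{\mathrm{eff}}$ into a bound on $\mathbb{E}[\|z\|_{\max}]$. The coordinate-wise bound $\|z\|_2\leq \sqrt{d}\,\|z\|_{\max}$, combined with Jensen and the small-$\tau$ hypothesis, gives $\sigma_{\mathrm{eff}}\lesssim \sqrt{d}\,\mathbb{E}[\|z\|_{\max}]\leq \sqrt{d}\,\tau$. Chaining this with the previous inequality produces the claimed $\tau\geq \Omega(\sqrt{\alpha}\,R/\sqrt{d\,\gamma})$. An alternative, more direct route is a reduction: since the $\ell_2$-ball of radius $R$ is contained in the $\ell_\infty$-ball of radius $R$, any $\ell_\infty$-faithful module is a fortiori $\ell_2$-faithful with the same parameters, so one can invoke Theorem~\ref{thm:l2lower} out of the box; this gives a bound that is stronger by a factor of $\sqrt{d}$ and in particular implies the statement.

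The main obstacle, exactly as in the $\ell_2$ case, is the distribution-free two-point R\'enyi lower bound for arbitrary noise distributions $z$: one must show that a small $D_\alpha(\mathrm{Law}(z)\,\|\,\mathrm{Law}(z+\delta))$ implies a correspondingly large $\ell_2$-scale of $z$ along $\delta$, \emph{without} assuming $z$ is Gaussian. The large-$\alpha$ and small-$\tau$ assumptions are what make this bound quantitative and clean. The $\ell_\infty$-specific content on top of Theorem~\ref{thm:l2lower} is mild bookkeeping: choose a single-coordinate $\delta$ to invoke the $\ell_2$ machinery, and absorb the one remaining $\sqrt{d}$ factor through the max-to-$\ell_2$ norm conversion on $z$.
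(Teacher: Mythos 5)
Your proposal reveals a misunderstanding of what drives the paper's proof of Theorem~\ref{thm:l2lower}, and this carries over to your $\ell_\infty$ argument. The $\ell_2$ lower bound is not established by a two-point R\'enyi-divergence test along a single perturbation direction: the paper restricts the faithfulness constraint to a hypercube, converts the resulting $(\alpha,\gamma)$-RDP guarantee to $(\epsilon,\delta)$-DP (Lemma~\ref{rdp_dp}), and then invokes the Steinke--Ullman one-way marginal lower bound (Lemma~\ref{lemma:one_way}, Theorem~1.1 of \cite{steinke2016between}), which is a genuinely high-dimensional fingerprinting argument over the entire cube. That lemma lower-bounds $\mathbb{E}\|\mathcal{M}(x)-x\|_\infty$ directly, so the $\|z\|_2\le\sqrt{d}\|z\|_{\max}$ conversion you build in as ``step three'' is never used and never needed. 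In the paper, the \emph{only} thing that changes between the $\ell_2$ and $\ell_\infty$ proofs is the side-length of the hypercube on which the DP reduction is run: $\{0,\tfrac{R}{2\sqrt{d}}\}^d$ keeps the corner-to-corner $\ell_2$-distance within $R$, whereas $\{0,\tfrac{R}{2}\}^d$ already keeps the corner-to-corner $\ell_\infty$-distance within $R$. That rescaling is where the $d$-dependence enters. A single-coordinate hard instance $\delta=R\,e_j$ discards all of the cube structure the one-way-marginal lemma exploits, and the ``distribution-free two-point R\'enyi lower bound for arbitrary noise'' you flag as the main obstacle is never established by you and is not what the paper uses.

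Your alternative route, observing that the $\ell_2$-ball of radius $R$ sits inside the $\ell_\infty$-ball of radius $R$, so any $(R,D_\alpha,\gamma,\beta,k,\|\cdot\|_\infty)$-faithful module is automatically $(R,D_\alpha,\gamma,\beta,k,\|\cdot\|_2)$-faithful, and then citing Theorem~\ref{thm:l2lower} verbatim, is clean, correct, and does prove the inequality as literally stated, since $\Omega(\sqrt{\alpha}R/\sqrt{\gamma})\ge\Omega(\sqrt{\alpha}R/\sqrt{d\gamma})$. Note, however, that the paper's own $\ell_\infty$ proof establishes the visibly stronger bound $\tau\ge\Omega(\sqrt{\alpha d}\,R/\sqrt{\gamma})$, and this is what is needed for the surrounding ``Gaussian is optimal up to logarithmic factors'' claim to make sense, since the Gaussian construction needs $\mathbb{E}\|z\|_{\max}=O(\log(q\cdot n)\sqrt{d\alpha}\,R/\sqrt{\gamma})$ in the $\ell_\infty$ regime; the $\sqrt{d}$ in the theorem statement appears to be misplaced and should sit in the numerator. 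Your ball-inclusion reduction is too lossy by a factor of $d$ to certify that near-optimality, so while it satisfies the literal inequality, it does not replace the hypercube-rescaling argument the paper actually runs.
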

Theorem \ref{thm:5.1} we can see when $\gamma$ is small enough then  $\tilde{w}$ will an  $(R, D_\alpha, \gamma, \beta, k, \|\cdot\|_\infty)$-faithful attention module if $z\sim \mathcal{N}(0, \sigma^2I_{q\times n})$ with  $\sigma=\frac{d\alpha R^2}{2\gamma}$. 
In this case we can see  $\mathbb{E}\|z\|_{\max}=O(\frac{\log {(q\cdot n)} \sqrt{d\alpha} R}{\sqrt{\gamma}})$. Thus, the Gaussian noise is optimal up to some logarithmic factors. 

\begin{table*}[htbp]
\centering
\resizebox{0.75\linewidth}{!}{
\begin{tabular}{cccccccccc} 
\toprule
\multirow{2}{*}{Model} & \multirow{2}{*}{Method} & \multicolumn{4}{c}{ImageNet}                                            & \multicolumn{2}{c}{Cityscape}                      & \multicolumn{2}{c}{COCO}                            \\ 
\cmidrule(l){3-10}
                       &                         & \textbf{Cla. Acc.} & \textbf{Pix. Acc.} & \textbf{mIoU} & \textbf{mAP}  & \textbf{Pix. Acc.} & \textbf{mIoU} & \textbf{Pix. Acc.} & \textbf{mIoU}  \\ 
\midrule
\multirow{6}{*}{ViT}   & Raw Attention           & 0.78               & 0.65               & 0.54          & 0.82          & 0.72               & 0.62                    & 0.8                & 0.7                      \\
                       & Rollout                 & 0.79               & 0.67               & 0.56          & 0.84          & 0.74         & 0.64                     & 0.82               & 0.72                     \\
                       & GradCAM                 & 0.8                & 0.69               & 0.58          & 0.86          & 0.76               & 0.66                  & 0.84               & 0.74                    \\
                       & LRP                     & 0.81               & 0.71               & 0.6           & 0.88          & 0.78               & 0.68                & 0.86               & 0.76                 \\
                       & VTA                     & 0.82               & 0.73               & 0.62          & 0.9           & 0.8                & 0.7            & 0.88               & 0.78                   \\
\rowcolor{grey!20}
                       & Ours                    & \textbf{0.85}      & \textbf{0.76}      & \textbf{0.65} & \textbf{0.93} & \textbf{0.83}      & \textbf{0.73} &  \textbf{0.91}      & \textbf{0.81}  \\ 
\midrule
\multirow{6}{*}{DeiT}  & Raw Attention           & 0.79               & 0.66               & 0.55          & 0.83          & 0.73               & 0.63             & 0.81               & 0.71             \\
                       & Rollout                 & 0.8                & 0.68               & 0.57          & 0.85          & 0.75               & 0.65            & 0.83               & 0.73               \\
                       & GradCAM                 & 0.81               & 0.7                & 0.59          & 0.87          & 0.77               & 0.67           & 0.85               & 0.75                 \\
                       & LRP                     & 0.82               & 0.72               & 0.61          & 0.89          & 0.79               & 0.69             & 0.87               & 0.77         \\
                       & VTA                     & 0.83               & 0.74               & 0.63          & 0.91          & 0.81               & 0.71          & 0.89               & 0.79           \\
\rowcolor{grey!20}
                       & Ours                    & \textbf{0.86}      & \textbf{0.77}      & \textbf{0.66} & \textbf{0.94} & \textbf{0.84}      & \textbf{0.74} &  \textbf{0.89}      & \textbf{0.79}   \\ 
\midrule
\multirow{6}{*}{Swin}  & Raw Attention           & 0.8                & 0.67               & 0.56          & 0.84          & 0.74               & 0.64                 & 0.82               & 0.72                \\
                       & Rollout                 & 0.81               & 0.69               & 0.58          & 0.86          & 0.76               & 0.66               & 0.84               & 0.74               \\
                       & GradCAM                 & 0.82               & 0.71               & 0.6           & 0.88          & 0.78               & 0.68             & 0.86               & 0.76               \\
                       & LRP                     & 0.83               & 0.73               & 0.62          & 0.9           & 0.8                & 0.7            & 0.88               & 0.78               \\
                       & VTA                     & 0.84               & 0.75               & 0.64          & 0.92          & 0.82               & 0.72             & 0.9                & 0.8                \\
\rowcolor{grey!20}
& Ours                    & \textbf{0.87}      & \textbf{0.78}      & \textbf{0.67} & \textbf{0.95} & \textbf{0.85}      & \textbf{0.75} & \textbf{0.93}      & \textbf{0.83} \\
\bottomrule
\end{tabular}
}
\caption{ Performance comparison of different methods on ImageNet, Cityscape, and COCO under the default attack. }
\label{tab: main}
\vspace{-10pt}
\end{table*}

\vspace{-7pt}
\section{Experiments}
In this section, we present experimental results on evaluating the interpretability and utility of our FViTs on various datasets and tasks. More details are in the Appendix.
%More detailed implementation setup and additional experiments are in the Appendix.
%We first describe the experimental setup. We then compare our FViTs to a set of SOTA baselines (see detailed descriptions of each method in the Appendix) and present the results for classification and segmentation tasks. We also conduct an ablation study to investigate the impact of different components of our FViTs. Finally, we verify the faithful region of our FViTs by presenting the empirical tightness of our bound.

\vspace{-10pt}
\subsection{Experimental Setup}
\noindent {\bf Datasets, tasks, and network architectures.}
We consider two different tasks: classification and segmentation. For the classification task, we use ILSVRC-2012 ImageNet. And for segmentation, we use ImageNet-segmentation subset \citep{guillaumin2014imagenet}, COCO \citep{lin2014microsoft}, and Cityscape \citep{cordts2016cityscapes}. To demonstrate our method is architectures-agnostic, we use three different ViT-based models, including Vanilla ViT \citep{dosovitskiy2021an}, DeiT \cite{touvron2021training}, and Swin ViT \citep{liu2021swin}.

\noindent {\bf Threat model.}
We focus on $l_2$-norm bounded and $l_\infty$-norm bounded noises under a white-box threat model assumption for adversarial perturbations. Mathematically, with the same noise level, $l_\infty$-norm ball $B_\infty$ is a superset of the $l_2$-norm ball $B_2$. Thus, we show the performance under $l_\infty$-norm threat model, and we report the $l_2$-norm case in the appendix \ref{l2:case}. The radius of adversarial noise $\rho _u$ was set as $8/255$ by default. We employ the PGD \citep{madry2017towards} algorithm to craft adversarial examples with a step size of $2/255$ and a total of $10$ steps.
%Thus, by showing the effectiveness under $l_\infty$-norm threat model, we can also bound the performance of FViT under $l_2$-norm threat model.

\noindent {\bf Baselines and attention map backbone.}
Since our DDS method can be used as a plugin to provide certified faithfulness for interpretability under adversarial attacks, regardless of the method used to generate attention maps. We set the standard deviation $\delta=8/255$ for the Gaussian noise in our method as default. In this paper, we leverage Trans. Att. \citep{chefer2021transformer} as our explanation tool, which is a state-of-the-art method for generating class-aware interpretable attention maps. %Several methods aim to improve the interpretability of attention maps. 
We include five baselines for comparison, including Raw Attention \citep{vaswani2017attention}, Rollout \citep{abnar2020quantifying}, GradCAM \citep{selvaraju2017grad}, LRP \citep{binder2016layer}, and Vanilla Trans. Att. (VTA) \citep{chefer2021transformer}. %Please refer to Appendix for baseline implementation and hyperparameter details.

\noindent {\bf Evaluation metrics.}
To show the utility of our approach, we report the classification accuracy on test data for classification tasks. As for the interpretability of our approach, we seek to evaluate the explanation map by leveraging the label of segmentation task as `ground truth' following \citep{chefer2021transformer}. To be specific, we compare the explanation map with the ground truth segmentation map. We measure the interpretability using pixel accuracy, mean intersection over union (mIoU) \citep{varghese2020unsupervised}, and mean average precision (mAP) \citep{henderson2017end}. Note that pixel accuracy is calculated by thresholding the visualization by the mean value, while mAP uses the soft-segmentation to generate a score that is not affected by the threshold. Following conventional practices, we also report the results for negative and positive perturbation (pixels-erasing) tests \citep{chefer2021transformer}. The area-under-the-curve (AUC) measured by erasing between $10\%-90\%$ of the pixels is used to indicate the performance of explanation methods for both perturbations. For negative perturbation, a higher AUC indicates a more faithful interpretation since a good explanation should maintain accuracy after removing unrelated pixels (also referred to as input invariance \citep{kindermans2019reliability}). On the other hand, for positive perturbations, we expect to see a steep decrease in performance after removing the pixels that are identified as important, where a lower AUC indicates the interpretation is better. We term the AUC of such perturbation tests as P-AUC and plot the P-AUC-radius curve under adversarial perturbations. %Furthermore, to ensure fair comparisons, we plot the P-AUC-radius curve under adversarial perturbations with a wide range of radii. %and denote the obtained curve as \textit{AP} curve and the AUC of the {AP}-curve as {AP-AUC}. 
%We compare the values of {P-AUC} under both positive perturbation and negative perturbations. %For negative perturbation, a more reliable method should have a \textit{AP} curve that drops less significantly as the perturbation radius increase, while for positive perturbation, a more reliable method should have an \textit{AP} curve that grows slower. 

\begin{figure*}[htbp]
\centering
	\subfigure[Pos. Perturbation] %第一张子图
	{
            \label{fig:per-pos}
		\includegraphics[scale=0.33]{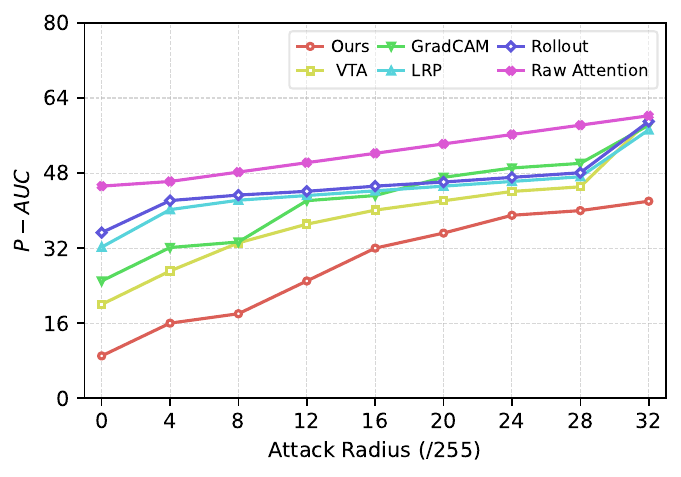} 
	}
	\subfigure[Neg. Perturbation] %第二张子图
	{
            \label{fig:per-neg}
		\includegraphics[scale=0.33]{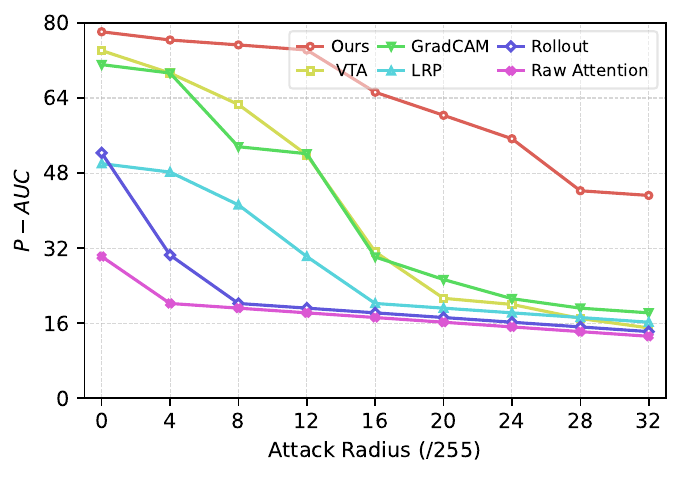}
	}
     \subfigure[Sens. Diff. Radius] %第三张子图
	{
		\label{fig:ver-r}
            \includegraphics[scale=0.33]{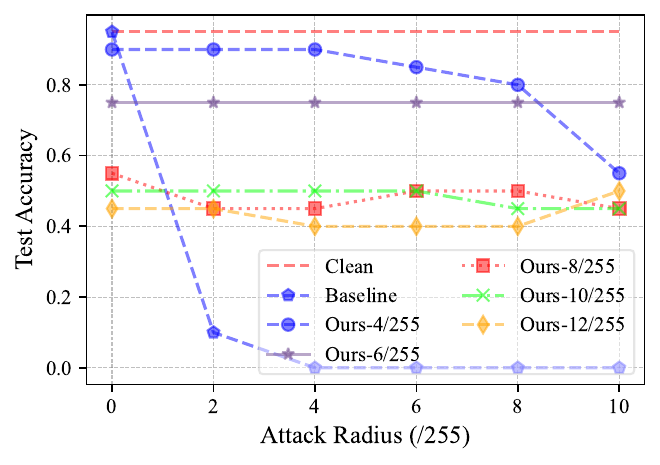}
	}
    \caption{(a) and (b) are results of the perturbation test. (c) is the sensitivity analysis results.
    %The positive and negative perturbation tests evaluate the importance of pixels for a model's classification by masking out pixels in an input image and measuring the impact on accuracy. For positive perturbation, important pixels are first removed; for negative perturbation, dropping start from non-related pixels.
    % In positive perturbation, a steep decrease in performance indicates that the masked pixels were important for classification. In negative perturbation, maintaining accuracy while removing pixels unrelated to the class is a sign of good performance.
    }
    \label{fig: per}
\vspace{-0.2in}
\end{figure*}

\vspace{-10pt}
\subsection{Evaluating Interpretability and Utility}
\vspace{-5pt}
\noindent {\bf Classification and segmentation results.}
Based on the results shown in Table \ref{tab: main}, it is obvious that our method is more robust and effective for all three datasets. Moreover, across all metrics,  %
%(classification accuracy, pixel accuracy, mIoU, and mAP),
our method consistently outperforms other methods for all model architectures. For example, on ImageNet, our method achieves the highest classification accuracy (0.85) and pixel accuracy (0.76) for the ViT model and the highest mean IoU (0.66) and mean AP (0.94) for the DeiT model. These results suggest that our method could even outperform the previous methods on accuracy, and it is more faithful in identifying the most relevant features for image classification under malicious attacks compared to other methods. %Surprisingly, results in Table \ref{tab: cleanu} also reveal that for ViTs, there is no obvious trade-off between faithfulness in interpretability and accuracy. 
Please refer to Table \ref{tab: cleanu} and \ref{tab: full-result} in the Appendix for complete results across different levels of perturbation. 

% One possible explanation for the superior performance of our method is its ability to [some guess]. While traditional methods such as Raw Attention and Rollout rely on simple weighted summations of feature maps, our method is able to [some features] through its use of [insert method here]. Additionally, our method is able to handle [insert property here], which may be beneficial in cases where the input images contain [insert scenario here].

Additionally, Figure \ref{fig: poison},\ref{fig: diff-posin2},\ref{fig: diff-posin3} in Appendix demonstrate visual comparisons of our method with baselines under adversarial attacks. It is clear that the baseline methods produce inconsistent results, while our method produces more consistent and clear visualizations even under data corruption. Moreover, as shown in Figure \ref{fig: example} (more results are in Appendix Figure \ref{fig: full_example},\ref{fig: class2},\ref{fig: class3}), when analyzing images with two objects from different classes under adversarial perturbations, all previous methods produce similar but worse visualizations for each class. Surprisingly, our method is able to provide accurate and distinct visualization maps for each class despite adversarial perturbations. This indicates that our method is more faithful which is robust class-aware under attacks. 

\noindent {\bf Perturbation tests.}
% To ensure fair comparisons, we calculate the Perturbation AUC \cite{chefer2021transformer} under adversarial perturbation for a range of radii. For positive perturbation, a more reliable method should have a curve that drops less significant as the perturbation radius increase, while for negative perturbation, a more reliable method should have a curve that grows slowly. 
% Figure \ref{fig: per} shows the AUC curves obtained for both negative and positive perturbation tests. As shown in the figure, our method performs the best among all the baselines for both tests.
The results on Pos. perturbation in Figure \ref{fig:per-pos} show that the P-AUC of our method consistently achieves the lowest value when we perform attacks with a radius ranging from $0/255$ to $32/255$, which suggests that our method are more faithful and interpretable. Similarly, as for Neg. perturbation, the results in Figure \ref{fig:per-neg} also suggest that our method is more robust than other baselines when removing unrelated pixels, and indicate that our method can identify important pixels under corruptions.

\begin{table}[htbp]
\centering
\resizebox{\linewidth}{!}{
\begin{tabular}{lccccc} 
\toprule
\multirow{2}{*}{} & Classification & \multicolumn{2}{c}{Segementation} & \multicolumn{2}{c}{Perturbation Tests}    \\ 
\cmidrule(lr){2-2}\cmidrule(l){3-4}\cmidrule(l){5-6}
                  & \textbf{Rob.Acc} & \textbf{Rob.Acc}  & \textbf{mIOU}            & \textbf{Pos.} & \textbf{Neg.}  \\ 
\midrule
\textbf{Ours}              & \textbf{99.5 } & \textbf{ 97.8 } & \textbf{0.985}  & \textbf{ 15.29}     & \textbf{63.23}       \\
\textbf{$-$smoothing}      & 98.2           & 96.3            & 0.977           & 18.51               & 54.65                \\
\textbf{$-$denosing}       & 96.4           & 94.7            & 0.965           & 21.36               & 50.53                \\
\textbf{$-$both}           & 92.1           & 90.5            & 0.947           & 38.13               & 48.58                \\
\bottomrule
\end{tabular}
}
\caption{Results of the ablation study on classification, segmentation, and perturbation tests.}
\label{tab: abl}
\vspace{-12pt}
\end{table}

\noindent {\bf Ablation study.}
The results are shown in Table \ref{tab: abl}, highlighting the crucial role that the denoising diffusion model and randomized smoothing play in the effectiveness of DDS. As we can see from the table, removing either of the components leads to a significant decrease in performance (under adversarial attacks) across all three evaluation metrics: classification accuracy, segmentation accuracy, and P-AUC. In particular, the classification and segmentation accuracy will decrease by 3.1\% when the denoising step is removed, and by 1.3\% and 1.5\%, respectively when the randomized smoothing is eliminated. Moreover, we visualized the ablated version of our method in Figure \ref{fig: abl-vis}. 
% The decreases in mIOU for segmentation tasks are also substantial. Specifically, there is a  0.8-point drop when either the denoising or randomized smoothing is removed.
% These results suggest that denoising and smoothing techniques are essential for achieving optimal performance in our proposed approach. One possible explanation for this is that denoising and smoothing help to reduce the amount of noise and variance in the data, which can improve the generalization ability of the model. This is especially important when working with complex and high-dimensional data, such as images, where noise and variance can easily introduce bias and reduce the robustness of the model.
It is noteworthy that the performance degradation becomes more pronounced when both components are removed, compared to when only a single component is removed. This suggests that these two components are highly effective in improving the faithfulness of model prediction and explanation. 

% \header{Results on ImageNet and CIFAR-10}
% \\
\noindent {\bf Sensitivity analysis.}
To evaluate the sensitivity of standard deviation $\delta$ of the added Gaussian noise, we conduct adversarial attacks on the ImageNet dataset with different $\delta$ for a certain number of data samples. %(20 adopted in this paper). 
We conduct testing under $\delta \in \left\{ 4/255,6/255,8/255,10/255,12/255 \right\} $ and attack radius $\rho_a\in \left\{ 0,2/255,4/255,6/255,8/255,10/255 \right\} $. The results in Figure \ref{fig:ver-r} suggest that, for the cases of $\delta=4/255$ and $\delta=6/255$, compared to the vanilla baseline, i.e., without any processing of images, our method is able to prevent the testing accuracy from dropping significantly as the attack radius increase. However, we find that larger $\delta$ does not significantly decrease test accuracy when $\delta$ exceeds some threshold ($\delta=8/255-12/255$). These results suggest that our method is sensitive to the selection of $\delta$ when $\delta$ is small, and it becomes insensitive when  $\delta$ is larger. Nevertheless, across different $\delta$, our method outperforms the baseline in terms of utility.

\noindent {\bf Verifying faithful region.}
% The linearity of the function $f$ allows for the possibility of a class-changing perturbation beyond the certified radius. To test the reliability of this bound in non-linear systems like neural networks, we conduct an adversarial attack using projected gradient descent on our denoised smoothed classifier with a standard deviation of $3/255$ following \cite{cohen2019certified}.
To verify the proposed faithful region estimation in Algorithm \ref{alg:2}, we conduct an adversarial attack using projected gradient descent on our denoised smoothing classifier following \cite{cohen2019certified}. Given the faithful region radius $R(\delta)=\min\{P(\delta), Q(\delta)\}$ obtained in Algorithm \ref{alg:2}, 
% For each example, we apply the method proposed in Algorithm \ref{alg:1} with  $\delta = 3/255$ and 
we attempt to find an adversarial example for our denoised smoothing classifier within radii of $1.5R$ or $2R$, under the condition that the example has been correctly classified within faithful region $R$. We succeed in finding such adversarial examples 23\% of the time at a radius $1.5R$ and 64\% of the time in a radius $2R$ on the ImageNet. These results empirically demonstrate the tightness of our proposed faithful bound.
% for each data point, we conduct adversarial attacks with different radius 
% Our results show that we are able to find such adversarial examples 23\% of the time at a radius $1.5R$ and 64\% of the time in a radius $2R$.

\vspace{-10pt}
\subsection{Computational Cost}
\vspace{-5pt}
Our denoising algorithm is quite fast. For example, under the noise level of $\frac{8}{255}$, in each denoising trail, it only requires one forward step in adding a random Gaussian noise and $t^{*} = 45$ backward steps for denoising, which empirically takes about 0.32 seconds per images (256x256) on ImageNet \ref{tab:time}. This shows that our methods are efficient and promising for real-world applications with large-scale data. 

\begin{table}[thbp]
\centering
\resizebox{0.9\linewidth}{!}{
\begin{tabular}{cccccc} 
\toprule
$||\delta||_{\infty} $ & 2/255 & 4/255 & 8/255 & 12/255 & 16/255    \\
\midrule 
$t^{*}$ & 0     & 8     & 45    & 107    & 193    \\
Total Time(s)   & 0.60  & 3.18  & 3.20  & 3.25   & 3.33    \\
Per Sample(s) & 0.060 & 0.318 & 0.320 & 0.325  & 0.333 \\
\bottomrule
\end{tabular}
}
\caption{The time cost of denoising under different noise levels with a total sample size of 10.}
\label{tab:time}
\vspace{-15pt}
\end{table}

\section{Conclusion}
\vspace{-5pt}
We proposed FViTs to improve faithfulness in vanilla ViTs. We first gave a rigorous definition for FViTs and then proposed a method with theoretical proof to achieve robustness for both explainability and prediction, and finally, we conducted comprehensive experiments to prove our claim. 
%We proposed Faithful Vision Transformers (FViTs) to improve faithfulness in vanilla ViTs. We first gave a rigorous definition for FViTs. Then we proposed a method namely DDS, and theoretically proved that DDS can achieve robustness for both explainability and prediction. This study is the first to demonstrate its effectiveness in enhancing explanation faithfulness, providing rigorous proofs, and certifying the faithfulness of ViTs. Finally, we showed faithfulness in FViTs via comprehensive experiments.
%Finally, we conducted comprehensive experiments to corroborate our claims. Results show that our FViTs are more faithful.
%robust against adversarial attacks while also keeping the explainability and utility of attention, which indicate FViTs are more faithful.

\section{Impact Statements}
% This paper presents work whose goal is to advance the field of Machine Learning. There are many potential societal consequences of our work, none which we feel must be specifically highlighted here.
Our research enhances both the interpretation faithfulness and prediction robustness of vision transformers. Given that vision transformers constitute a major component of recent Large Vision-Language Models (LVLMs), our method holds general applicability, potentially fostering alignment and safety within these models. Employing our denoising and smoothing techniques can bolster decision-making robustness in LVLMs and enhance their resilience against malicious manipulation, thereby contributing to the trustworthiness and superalignment of emergent superintelligences. We believe this work does not present significant ethical concerns.

\section*{Acknowledgements}
Di Wang and Lijie Hu are supported in part by the baseline funding BAS/1/1689-01-01, funding from the CRG grand URF/1/4663-01-01, FCC/1/1976-49-01 from CBRC, and funding from the AI Initiative REI/1/4811-10-01 of King Abdullah University of Science and Technology (KAUST). Di Wang and Lijie Hu are also supported by the funding of the SDAIA-KAUST Center of Excellence in Data Science and Artificial Intelligence (SDAIA-KAUST AI). Yixin Liu and Lichao Sun are supported by the National Science Foundation Grants CRII-2246067 and partially supported by Lehigh Grant FRGS00011497.

\bibliography{reference}
\bibliographystyle{icml2024}

%%%%%%%%%%%%%%%%%%%%%%%%%%%%%%%%%%%%%%%%%%%%%%%%%%%%%%%%%%%%%%%%%%%%%%%%%%%%%%%
%%%%%%%%%%%%%%%%%%%%%%%%%%%%%%%%%%%%%%%%%%%%%%%%%%%%%%%%%%%%%%%%%%%%%%%%%%%%%%%
% APPENDIX
%%%%%%%%%%%%%%%%%%%%%%%%%%%%%%%%%%%%%%%%%%%%%%%%%%%%%%%%%%%%%%%%%%%%%%%%%%%%%%%
%%%%%%%%%%%%%%%%%%%%%%%%%%%%%%%%%%%%%%%%%%%%%%%%%%%%%%%%%%%%%%%%%%%%%%%%%%%%%%%
\newpage
\appendix
\onecolumn
\section{Algorithms}

\begin{algorithm}
    \caption{FViTs via Denoised Diffusion Smoothing}
    \label{alg:1}
    \begin{algorithmic}[1]
        \STATE {\bfseries Input:} $x$; A standard deviation $\sigma > 0$. %Number of iterations $n$.
        %\State Set $i=1$.
        %\FOR {$i \in [n]$}
        %\STATE Denote $ \tilde{x} = x + \mathcal{N}(0, \sigma^2 \textbf{I}) $.
            %Let the output attention wight be $c_i = f(x + \mathcal{N}(\bm{0}, \sigma^2I))$.
       % \ENDFOR
        \STATE $t^{*}$, find $t$ s.t. $\frac{1-\alpha_t}{\alpha_t} = \sigma^2$. 
        \STATE $x_{t^{*}} = \sqrt{\alpha_{t^{*}}} (\tilde{x} + \mathcal{N}(0, \sigma^2 \textbf{I})) $.
        \STATE $\hat{x} = \text{denoise}(x_{t^{*}}; t^{*})$.
        \STATE $w = \text{self-attention}(\hat{x})$.
        \STATE {\bfseries Return:} attention weight $w$.
    \end{algorithmic}
\end{algorithm}

\begin{algorithm}
    \caption{Finding the Faithfulness Region in FViTs}
    \label{alg:2}
    \begin{algorithmic}[1]
        \STATE {\bfseries Input:} Original self-attention module  $Z$; the standard deviation $\sigma > 0$; classifier of the original ViT $\bar{y}$; Number of repetitions  $m$. Input image $x$. 
            \FOR {$i \in [m]$}
            \STATE Sample a Gaussian noise $z_i\sim \mathcal{N}({0}, \sigma^2I_{q\times n})$ and add it to the input image $x$. Then get an attention vector $\tilde{w}_i=Z(T(x+z_i))$ via Algorithm \ref{alg:1} and feed it to the original ViT and get the prediction $c_i=\arg\max_{g\in \mathcal{G}}\bar{y}(x+z_i).$
        \ENDFOR
        \STATE Estimate the distribution of the output as $p_j = \frac{\# \{  c_i=j;i=1,...,m\}}{m}$. Compute the average of $\tilde{w}_i$: $\tilde{w}=\frac{1}{m}\sum_{i=1}^m \tilde{w}_m$. 
        \STATE {\bf For the $\ell_2$-norm case:} Calculate the upper bound $P$ as the following: 
        \begin{equation*}
        \sup_{\alpha > 1} [-\frac{2\sigma^2}{\alpha}\ln (1-p_{(1)}-p_{(2)}+2 (\frac{1}{2}(p_{(1)}^{1-\alpha}+p_{(2)}^{1-\alpha} ) )^{\frac{1}{1-\alpha}} ) ]^{1/2},
        \end{equation*}
        where $p_{(1)}$ and $p_{(2)}$ are the first and the second largest values in $\{p_i\}$. Then calculate the upper bound $Q$ as 
        \begin{equation*}
       \sup_{\alpha>1}[ \frac{2\sigma^2}{\alpha}(\frac{\alpha}{\alpha-1}\ln(2k_0(\sum_{i\in \mathcal{S}}\tilde{w}^\alpha_{i^*})^\frac{1}{\alpha}+(2k_0)^\frac{1}{\alpha}\sum_{i\not\in \mathcal{S}}\tilde{w}_{i^*}) -\frac{1}{\alpha-1}\ln (2k_0)) ]^{1/2}, 
        \end{equation*}
   where    $\tilde{w}_{i^*}$ is the $i$-th largest component in $\tilde{w}$,  $k_0=\lfloor (1-\beta)k \rfloor +1$, and $\mathcal{S}%=\{k-k_0+1, \cdots, k+k_0\}$ 
$ denotes the set of last $k_0$ components
in top-$k$ indices and the top $k_0$ components out of top-$k$ indices.
\STATE {\bf For the $\ell_\infty$-norm case:} Calculate $P=\frac{\tilde{P}}{d}$ and $Q=\frac{\tilde{Q}}{d}$, where $\tilde{P}$  and $\tilde{Q}$  is equivalent to $P$ and $Q$ in the $\ell_2$-norm case respectively, $d=q\times n$. 
\STATE {\bfseries Return:} The tolerable size of the attack $\min\{P, Q\}$.
    \end{algorithmic}
\end{algorithm}

\section{Proof of Theorem \ref{thm:5.1}}
\begin{proof}
Firstly, we know that the $\alpha$-R\'{e}nyi divergence between two Gaussian distributions $\mathcal{N}(0, \sigma^2 I_d)$ and $\mathcal{N}(\mu, \sigma^2 I_d)$  is bounded by $\frac{\alpha\|\mu\|_2^2}{2\sigma^2}$. Thus by the postprocessing property of R\'{e}nyi divergence, we have 
\begin{equation*}
\begin{aligned}
     & D_\alpha (\tilde{w}(x), \tilde{w}(x'))=D_\alpha (Z(T(x+z)), Z(T(x'))) 
     \leq D_\alpha(x+z, x'+z) \\
     & \leq \frac{\alpha\|x-x'\|_F^2}{2\sigma^2}\leq \frac{\alpha R^2}{2\sigma^2}. 
\end{aligned}
\end{equation*}
Thus, when $\frac{\alpha R^2}{2\sigma^2}\leq \gamma$ it satisfies the utility robustness. 

Second, we show it satisfies the prediction robustness. We first recall the following lemma which shows a lower bound between the R\'{e}nyi divergence of two discrete distributions:

\begin{lemma}[R\'{e}nyi Divergence Lemma \cite{li2019certified}]\label{lemma:renyi}
Let $P=(p_1, p_2, ..., p_k)$ and $Q=(q_1, q_2, ..., q_k)$ be two multinomial distributions. If the indices of the largest probabilities \textbf{do not} match on $P$ and $Q$, then the R\'{e}nyi divergence between $P$ and $Q$, {\em i.e.,} $D_\alpha(P||Q)$\footnote{For $\alpha\in (1,\infty)$, $D_\alpha(P||Q)$ is defined as $D_\alpha(P||Q) = \frac{1}{\alpha-1} \log \mathbb{E}_{x\sim Q}(\frac{P(x)}{Q(x)})^\alpha$.}, satisfies
\begin{align}
        D_\alpha(P||Q) 
       \geq -\log(1 - p_{(1)} - p_{(2)} + 2(\frac{1}{2}(p_{(1)}^{1-\alpha} + p_{(2)}^{1-\alpha}))^{\frac{1}{1-\alpha}}).
    \nonumber 
\end{align}
where $p_{(1)}$ and $p_{(2)}$ refer to the largest and the second largest probabilities in $\{p_i\}$, respectively.
\end{lemma}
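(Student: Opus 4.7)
The plan is a reduce-and-optimize strategy: use the data processing inequality to collapse the general $k$-ary problem to a tractable low-dimensional one, minimize $D_\alpha$ over the admissible $Q$, and then massage the closed-form minimum into the shape stated in the lemma.

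First, I would relabel so that $p_{(1)} = p_1$ and $p_{(2)} = p_2$, and let $j$ denote the index realizing $\arg\max Q$, with $j \neq 1$ by hypothesis. Using the post-processing (data processing) inequality for R\'enyi divergence, coarsening the sample space into the three buckets $\{1\}, \{j\}, \{\text{rest}\}$ gives
\begin{equation*}
D_\alpha(P \| Q) \;\geq\; D_\alpha\bigl((p_1, p_j, 1 - p_1 - p_j) \,\big\|\, (q_1, q_j, 1 - q_1 - q_j)\bigr),
\end{equation*}
and the argmax hypothesis descends to the single constraint $q_j \geq q_1$. A monotonicity argument in $p_j$ then shows that the tightest lower bound depending only on $p_{(1)}, p_{(2)}$ corresponds to $p_j = p_{(2)}$: making $p_j$ smaller allows $Q$ to sit closer to $P$ while still respecting the argmax constraint, so the worst-case (smallest) divergence is realized when $j$ indexes the second-largest entry of $P$.

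Second, I would minimize the coarsened divergence subject to $q_j \geq q_1$. A first-order argument shows the constraint is active at the optimum (moving mass from $q_j$ to $q_1$ only decreases the divergence when $p_1 > p_j$), so I set $q_1 = q_j = u$ and $q_{\mathrm{rest}} = 1 - 2u$. The univariate stationarity condition
\begin{equation*}
(p_{(1)}^\alpha + p_{(2)}^\alpha)\, u^{-\alpha} \;=\; 2\,(1 - p_{(1)} - p_{(2)})^\alpha\,(1 - 2u)^{-\alpha}
\end{equation*}
admits a unique solution $u^\star$ expressible through a power mean of $p_{(1)}, p_{(2)}$, and substituting $u^\star$ back yields the minimum of $\sum_i p_i^\alpha q_i^{1-\alpha}$ in closed form. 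Finally, the algebraic step converts the Lagrangian output into the stated form: the stationarity condition naturally produces the $\alpha$-power mean $M_\alpha(p_{(1)}, p_{(2)})$, while the lemma writes the bound using the $(1-\alpha)$-power mean $M_{1-\alpha}(p_{(1)}, p_{(2)})$; reconciling them requires a conjugate-exponent (H\"older-type) identity involving the residual mass $1 - p_{(1)} - p_{(2)}$, after which $\tfrac{1}{\alpha-1}\log$ of the minimum value reproduces the claimed bound.

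The main obstacle I anticipate is precisely this last algebraic identity. The Lagrangian gives the closed form in terms of $M_\alpha$ quite cleanly, but recognizing its equivalence with the $M_{1-\alpha}$-based expression in the lemma requires a nontrivial exponent manipulation that only becomes mechanical after the right reparametrization is in hand. The other steps—post-processing reduction, monotonicity selection of $p_j = p_{(2)}$, and the one-dimensional optimization in $u$—are standard convex-analytic tools and should go through without surprises.
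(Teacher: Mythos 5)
The paper itself does not prove this lemma; it is imported verbatim from \cite{li2019certified} inside the proof of Theorem~\ref{thm:5.1}, so there is no in-house proof to compare against. On its own terms, your outline --- collapse to three buckets by data processing, argue the constraint $q_j\geq q_1$ is active, solve a one-variable stationarity condition --- is the right skeleton and matches how the result is actually proved, but there is a genuine gap in the last step that the H\"older-type ``reconciliation'' you anticipate cannot close.

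The issue is the orientation of the divergence. The stationarity condition you write,
\begin{equation*}
(p_{(1)}^{\alpha} + p_{(2)}^{\alpha})\, u^{-\alpha} = 2\,(1-p_{(1)}-p_{(2)})^{\alpha}\,(1-2u)^{-\alpha},
\end{equation*}
is the first-order condition for minimizing $\sum_i p_i^{\alpha} q_i^{1-\alpha}$ with $q_1=q_j=u$, i.e.\ for the footnote's $D_\alpha(P\|Q)$. Solving and back-substituting gives the minimum value $\bigl(1-p_{(1)}-p_{(2)} + 2M_{\alpha}\bigr)^{\alpha}$ with $M_{\alpha} = (\frac{1}{2}(p_{(1)}^{\alpha}+p_{(2)}^{\alpha}))^{1/\alpha}$, hence the bound $\frac{\alpha}{\alpha-1}\log\bigl(1-p_{(1)}-p_{(2)}+2M_{\alpha}\bigr)$. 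That is \emph{not} equal to the stated $-\log\bigl(1-p_{(1)}-p_{(2)}+2M_{1-\alpha}\bigr)$, and no conjugate-exponent identity converts one into the other; a numerical spot-check (e.g.\ $\alpha=2$, $p_{(1)}=0.5$, $p_{(2)}=0.3$ gives $\approx 0.049$ versus $\approx 0.051$) confirms they differ. The stated formula is instead the exact minimum of the reversed quantity $\frac{1}{\alpha-1}\log\sum_i q_i^{\alpha} p_i^{1-\alpha}$, whose stationarity condition reads $u^{\alpha-1}\bigl(p_{(1)}^{1-\alpha}+p_{(2)}^{1-\alpha}\bigr) = 2(1-2u)^{\alpha-1}\bigl(1-p_{(1)}-p_{(2)}\bigr)^{1-\alpha}$; that condition gives $u^\star/(1-2u^\star) = M_{1-\alpha}/(1-p_{(1)}-p_{(2)})$ outright, the objective collapses to $\bigl(1-p_{(1)}-p_{(2)}+2M_{1-\alpha}\bigr)^{1-\alpha}$, and $\frac{1}{\alpha-1}\log$ of that is precisely $-\log\bigl(1-p_{(1)}-p_{(2)}+2M_{1-\alpha}\bigr)$ with no extra algebra. (This matches \cite{li2019certified}, where the lemma is stated with the arguments in the opposite order from here; the footnote definition and the displayed bound as typeset in the present paper are not mutually compatible.) The fix is to set up the Lagrangian for the correct orientation, after which the ``main obstacle'' you flag disappears.

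A minor point: your monotonicity argument for picking $j$ is reversed. A \emph{smaller} $p_j$ moves $P$ farther from the half-space $\{q_j\geq q_1\}$, which forces the constrained minimizer $Q$ farther from $P$ and makes the divergence \emph{larger}; the minimum over $j$ is therefore attained at the largest admissible $p_j$, i.e.\ $p_j=p_{(2)}$. You reach the right endpoint but with the opposite reasoning.
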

By Lemma \ref{lemma:renyi} we can see that as long as $  D_\alpha(\tilde{w}(x), \tilde{w}(x')) \leq -\log(1 - p_{(1)} - p_{(2)} + 2(\frac{1}{2}(p_{(1)}^{1-\alpha} + p_{(2)}^{1-\alpha}))^{\frac{1}{1-\alpha}})$ we must have  the prediction robustness. Thus, if $\frac{\alpha R^2}{2\sigma^2}\leq -\log(1 - p_{(1)} - p_{(2)} + 2(\frac{1}{2}(p_{(1)}^{1-\alpha} + p_{(2)}^{1-\alpha}))^{\frac{1}{1-\alpha}})$  we have the condition. 

Finally we proof the Top-$K$ robustness. The idea of the proof follows \cite{liu2021certifiably}. We proof the following lemma first 
\begin{lemma}\label{lemma:4.3}
Consider the set of all vectors with unit $\ell_1$-norm in $\mathbb{R}^T$, $\mathcal{Q}$. Then we have 
\begin{equation*}
\begin{aligned}
      \min_{q\in \mathcal{Q}, V_k(\hat{w}, q)\geq \beta} 
      D_\alpha(\hat{w}, q) =\frac{\alpha}{\alpha-1}\ln(2k_0(\sum_{i\in \mathcal{S}}\tilde{w}^\alpha_i)^\frac{1}{\alpha} 
       +(2k_0)^\frac{1}{\alpha}\sum_{i\not\in \mathcal{S}}\tilde{w}_i)-\frac{1}{\alpha-1}\ln (2k_0), 
\end{aligned}
\end{equation*}
where $D_\alpha(\hat{w}, q)$ is the $\alpha$-divergence of the distributions whose probability vectors are $\hat{w}$ and $q$. 
\end{lemma}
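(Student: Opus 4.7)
The plan is first to reinterpret the statement as the minimum over $q$ that just barely \emph{violates} the top-$k$ constraint (i.e., $V_k(\hat w,q)<\beta$), since with the overlap preserved one may take $q=\hat w$ and the divergence is trivially $0$. By definition of $k_0=\lfloor(1-\beta)k\rfloor+1$, violating the constraint means at least $k_0$ of the top-$k$ indices of $\hat w$ drop out of the top-$k$ of $q$, with an equal number of outside indices promoted. Writing $D_\alpha(\hat w\|q)=\frac{1}{\alpha-1}\log\Phi(q)$ with $\Phi(q):=\sum_i \hat w_i^{\alpha} q_i^{-(\alpha-1)}$, the problem reduces to a constrained minimization of $\Phi$ over the probability simplex.

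The first key step is a combinatorial reduction via an exchange argument: among all ways to choose which $k_0$ top-$k$ coordinates to demote and which $k_0$ non-top-$k$ coordinates to promote, the cheapest choice always demotes the \emph{smallest} $k_0$ top-$k$ entries of $\hat w$ and promotes the \emph{largest} $k_0$ non-top-$k$ entries. This yields precisely the set $\mathcal S$. Hence in the optimizer only the $2k_0$ coordinates in $\mathcal S$ need be perturbed; outside $\mathcal S$, unconstrained KKT stationarity of $\Phi$ forces $q_i\propto\hat w_i$.

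The second step pins down the optimum on $\mathcal S$. The reordering requirement is a closed condition whose boundary is equality among the relevant coordinates, and convexity of $t\mapsto t^{-(\alpha-1)}$ combined with the exchangeability within $\mathcal S$ forces all $q_i$ for $i\in\mathcal S$ to share a common value $b$ at the optimum. Writing $q_i=a\hat w_i$ for $i\notin\mathcal S$ and $q_i=b$ for $i\in\mathcal S$, with simplex constraint $aA+2k_0 b=1$ where $A:=\sum_{i\notin\mathcal S}\hat w_i$, the reduced objective $\Phi=a^{-(\alpha-1)}A+b^{-(\alpha-1)}B$ with $B:=\sum_{i\in\mathcal S}\hat w_i^{\alpha}$ satisfies the Lagrange stationarity $b/a=(B/(2k_0))^{1/\alpha}$. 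Solving gives $1/a=A+(2k_0)^{(\alpha-1)/\alpha}B^{1/\alpha}$ and collapses the optimum to $\Phi^\ast=a^{-\alpha}$. Taking $\frac{1}{\alpha-1}\log$ and applying the identity
\begin{equation*}
A+(2k_0)^{(\alpha-1)/\alpha}B^{1/\alpha}=(2k_0)^{-1/\alpha}\bigl[2k_0\,B^{1/\alpha}+(2k_0)^{1/\alpha}A\bigr]
\end{equation*}
reproduces the lemma's formula exactly.

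The main obstacle I expect is the first (combinatorial) step: rigorously certifying that the cheapest swap set is $\mathcal S$ requires a careful exchange inequality combining the monotone ordering of $\hat w$ and the convexity of $t\mapsto t^{-(\alpha-1)}$, since one must rule out all hybrid choices in which a coordinate strictly inside $\mathcal S$ is left untouched while some coordinate outside $\mathcal S$ is perturbed. Once that is secured, the remainder is a routine Lagrangian calculation followed by an algebraic rearrangement.
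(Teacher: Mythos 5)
Your proposal is correct and follows essentially the same route as the paper: reinterpret the minimum as being over $q$ with $V_k(\hat w,q)<\beta$ (the constraint as written is trivially minimized at $q=\hat w$), reduce the optimal swap set to $\mathcal S$ by an exchange argument, force $q_i\propto\hat w_i$ off $\mathcal S$ and a common value on $\mathcal S$ at the boundary of the ordering constraint, and solve the resulting two-parameter Lagrangian, whose algebra you carry out correctly. The exchange inequality you flag as the main obstacle is exactly what the paper's Lemma 4.4 establishes via two pairwise-swap comparisons and induction, so your outline is complete once that standard argument is filled in.
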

Now we back to the proof, we know that $D_\alpha(x+z, x'+z)\leq \frac{\alpha R^2}{2\sigma^2}$. And $D_\alpha(Z(T(x+z)), Z(T((x'+z)))\leq D_\alpha(x+z, x'+z)$. Thus, if $\frac{\alpha R^2}{2\sigma^2}\leq \frac{\alpha}{\alpha-1}\ln(2k_0(\sum_{i\in \mathcal{S}}\tilde{w}^\alpha_i)^\frac{1}{\alpha}+(2k_0)^\frac{1}{\alpha}\sum_{i\not\in \mathcal{S}}\tilde{w}_i)-\frac{1}{\alpha-1}\ln (2k_0)$, we must have $V_k(g(x+z), g(x'+z))\geq \beta$. 
\end{proof}

\begin{proof}[Proof of Lemma \ref{lemma:4.3}]
We denote $m^T=(m_1, m_2, \cdots, m_T)$ and $q^T=(q_1, \cdots, q_T)$. W.l.o.g we assume that $m_1\geq \cdots\geq m_T$.  Then,  to reach the minimum of R\'{e}nyi
divergence we show that the minimizer $q$ must satisfies $q_1\geq \cdots \geq q_{k-k_0-1}\geq q_{k-k_0}=\cdots=q_{k+k_0+1}\geq q_{k+k_0+2}\geq q_T$.  We need the following statements for the proof. 
\begin{lemma} \label{lemma:4.4}
We have the following statements: 
\begin{enumerate}
    \item 
To reach the minimum, there are exactly $k_0$ different components in the top-k of $\tilde{w}$ and $q$. 
\item To reach the minimum, $q_{k-k_0+1}, \cdots, q_k$ are not in the top-k of $q$. 
\item  To reach the minimum, $q_{k+1}, \cdots, q_{k+k_0}$  must appear in the top-k
of $q$. 
\item \cite{li2019certified} To reach the minimum, we must have $q_i\geq q_j$ for all $i\leq j$. 
\end{enumerate}
\end{lemma}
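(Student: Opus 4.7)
The plan is to establish Lemma \ref{lemma:4.4} via a sequence of exchange arguments applied to any candidate minimizer $q$, exploiting only the separable convex structure of the R\'{e}nyi objective. Writing $D_\alpha(\tilde w \| q) = \frac{1}{\alpha-1}\log\sum_i \tilde w_i^\alpha\, q_i^{1-\alpha}$, the map $q\mapsto\sum_i \tilde w_i^\alpha\, q_i^{1-\alpha}$ is convex and separable for $\alpha>1$, so the coupling between coordinates comes only from the simplex constraint $\sum_i q_i=1$ and the combinatorial constraint that the symmetric difference between $T_k(\tilde w)$ and $T_k(q)$ has size at least $k_0$ (this is the active side of the constraint once one reads the optimization as certifying non-violation). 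Throughout I would assume $m_1\ge m_2\ge\cdots\ge m_T$, so $T_k(\tilde w)=\{1,\ldots,k\}$, and then run the four claims in the order (4)$\to$(2)$\to$(3)$\to$(1) so monotonicity is available when the subsequent swap arguments are made.

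For statement (1), I would argue by contradiction on the size of the symmetric difference: having strictly more than $k_0$ mismatched indices leaves room to undo one swap, i.e.\ to move some coordinate $i\le k$ back into the top-$k$ of $q$ at the expense of some $j>k$. By strict convexity of $x\mapsto x^{1-\alpha}$ this reallocation strictly decreases the objective while keeping the constraint active, contradicting optimality. Hence the minimizer sits exactly at the boundary with $k_0$ mismatches. For statements (2) and (3), I would invoke a pairwise swap: if $q$ retains some coordinate $i^\star\in\{1,\ldots,k-k_0\}$ outside its top-$k$ while keeping some $j^\star\in\{k-k_0+1,\ldots,k\}$ inside it, then because $m_{i^\star}\ge m_{j^\star}$, exchanging the $q$-values on $i^\star$ and $j^\star$ preserves both constraints and strictly decreases $\sum_i \tilde w_i^\alpha q_i^{1-\alpha}$, since mass is shifted onto the coordinate with larger $\tilde w$-weight; the mirror argument on the coordinates just outside the top-$k$ gives (3). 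Statement (4) is the monotone rearrangement result already attributed to \cite{li2019certified}, which I would import as-is.

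The main obstacle I anticipate is not any single exchange step but the bookkeeping needed to show that the three swap arguments compose consistently: each exchange must simultaneously preserve the simplex, keep the symmetric difference exactly $k_0$, and yield a strict decrease in the separable objective, so I would maintain a careful loop invariant tracking which indices currently sit inside $T_k(q)$. Once (1)--(3) pin down the mismatched set as $\mathcal{S}=\{k-k_0+1,\ldots,k+k_0\}$, the remaining problem is a one-dimensional convex program over the magnitudes $\{q_i\}_{i\in\mathcal{S}}$ subject to $\sum_{i\in\mathcal{S}}q_i = 1-\sum_{i\notin\mathcal{S}}q_i$ and the tie condition forced by the boundary of the top-$k$ constraint. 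Any gap within $\mathcal{S}$ can be closed by Jensen averaging of $q^{1-\alpha}$ without increasing the objective, so the KKT conditions collapse $\{q_i\}_{i\in\mathcal{S}}$ to a single constant tier; substituting this optimal constant back into $D_\alpha$ and simplifying should reproduce the closed-form expression of Lemma \ref{lemma:4.3}, completing the proof.
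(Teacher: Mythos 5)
Your proposal is essentially the paper's proof: all three non-cited statements are established by pairwise exchange arguments on the coordinates of $q$, and statement (4) is imported from \cite{li2019certified}, exactly as the paper does. The paper carries out the swaps in the order (1)$\to$(2)$\to$(3), whereas you propose (4)$\to$(2)$\to$(3)$\to$(1); both orderings work, since (2) and (3) as stated only assert that a particular $k_0$-block is (resp.\ is not) in the top-$k$ of $q$, which does not presuppose (1).

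One mild imprecision to flag: the strict decrease obtained from each swap is not a consequence of strict \emph{convexity} of $x\mapsto x^{1-\alpha}$, but of a signed-product (Chebyshev/rearrangement) argument based on \emph{monotonicity}. Writing the objective as $\sum_i \tilde w_i^\alpha q_i^{1-\alpha}$, exchanging $q_a$ and $q_b$ changes the sum by
\begin{equation*}
\bigl(\tilde w_a^\alpha - \tilde w_b^\alpha\bigr)\bigl(q_b^{1-\alpha} - q_a^{1-\alpha}\bigr),
\end{equation*}
and the sign comes from $t\mapsto t^\alpha$ being increasing and $t\mapsto t^{1-\alpha}$ being decreasing for $\alpha>1$; convexity plays no role in this step (it only enters later in the Lagrangian/KKT step of Lemma \ref{lemma:4.3}). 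Your last paragraph about collapsing $\{q_i\}_{i\in\mathcal S}$ to a single tier via the stationarity conditions reproduces the paper's Lagrangian computation in Lemma \ref{lemma:4.3}, so no gap there. In short: same route, different labels on the key inequality.
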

Thus, based on Lemma \ref{lemma:4.4}, we only need to solve the following optimization problem to find a minimizer $q$: 
\begin{align*}
   & \min_{q_1, \cdots, q_T}=\sum_{i=1}^T q_i (\frac{\tilde{w}_i }{q_i})^\alpha \\ 
   &\textbf{s.t. } \sum_{i=1}^T q_i=1 \\
   & \textbf{s.t. } q_i\leq q_j, i\geq j \\
   & \textbf{s.t. } q_i\geq 0\\
   & \textbf{s.t. } q_i-q_j=0, \forall i, j\in \mathcal{S}=\{k-k_0+1, \cdots, k+k_0\} 
\end{align*}
Solve the above optimization by using the Lagrangian method, we can get 
\begin{equation}
    q_i=\frac{s}{2k_0 s+(2k_0)^\frac{1}{\alpha}\sum_{i\not\in \mathcal{S}}\tilde{w}_i}, \forall i\in \mathcal{S}, 
\end{equation}
\begin{equation}
     q_i=\frac{(2k_0)^\frac{1}{\alpha}\tilde{w}_i}{2k_0 s+(2k_0)^\frac{1}{\alpha}\sum_{i\not\in \mathcal{S}}\tilde{w}_i}, \forall i\not\in \mathcal{S}
\end{equation}
where $s=(\sum_{i\in \mathcal{S}}\tilde{w}^\alpha_i)^\frac{1}{\alpha}$. We can get in this case $D_\alpha(\tilde{w}, q)=\frac{\alpha}{\alpha-1}\ln(2k_0s+(2k_0)^\frac{1}{\alpha}\sum_{i\not\in \mathcal{S}}\tilde{w}_i)-\frac{1}{\alpha-1}\ln (2k_0) $. 

\end{proof}
\begin{proof}[Proof of Lemma \ref{lemma:4.4}]
We first proof the first item: 

Assume that $i_1, \cdots, i_{k_0+j}$ are the $j$ components in the top-k of $\tilde{w}$ but not 
in the top-k of $q$, and  $i'_1, \cdots, i'_{k_0+j}$ 
are the components  in the top-k of q but
not in the top-k of  $\tilde{w}$. Consider we have another vector $q^1$ with
the same value with $q$ while replace $q_{i_{k_0+j}}$ with $q_{i'_{k_0+j}}$. Thus we have 
\begin{align*}
    &e^{(\alpha-1) D_\alpha(\tilde{w}, q^1)}-  e^{(\alpha-1)D_\alpha(\tilde{w}, q)} \\
    & = (\frac{\tilde{w}^\alpha_{i_{k_0+j}} }{q^{\alpha-1
    }_{i'_{k_0+j}}}+\frac{\tilde{w}^\alpha_{i'_{k_0+j}} }{q^{\alpha-1
    }_{i_{k_0+j}}})-(\frac{\tilde{w}^\alpha_{i_{k_0+j}} }{q^{\alpha-1
    }_{i_{k_0+j}}}+\frac{\tilde{w}^\alpha_{i'_{k_0+j}} }{q^{\alpha-1
    }_{i'_{k_0+j}}})\\
    &=(\tilde{w}^\alpha_{i_{k_0+j}} -\tilde{w}^\alpha_{i'_{k_0+j}})(\frac{1}{q^{\alpha-1
    }_{i'_{k_0+j}}}-\frac{1}{q^{\alpha-1
    }_{i_{k_0+j}}})<0,
\end{align*}
since $\tilde{w}_{i_{k_0+j}}\geq \tilde{w}_{i'_{k_0+j}}$ and $q_{i'_{k_0+j}}\geq q _{i_{k_0+j}}$.  Thus, we know reducing the number of misplacement in
top-k can reduce the value $D_\alpha(\tilde{w}, q)$ which contradict to $q$ achieves the minimal. Thus we must have $j=0$. 

We then proof the second statement. 

Assume that $i_1, \cdots, i_{k_0}$ are the $k_0$ components in the top-k of $\tilde{w}$ but not 
in the top-k of $q$, and  $i'_1, \cdots, i'_{k_0}$ 
are the components  in the top-k of q but
not in the top-k of  $\tilde{w}$.  Consider we have another unit $\ell_1$-norm vector $q^2$  with the
same value with $q$ while $q_{i_j}$ is replaced by $q_{j'}$ where $\tilde{w}_{j'}\geq \tilde{w}_{i_j}$ and $j'$ is in the top-k component of $q$ (there must exists such index $j'$). Now we can see that $q^2_{j'}$ is no longer a top-k component of $q^2$ and $q^2_{i_j}$ is a top-k component.  Thus we have 
\begin{align*}
   &e^{(\alpha-1) D_\alpha(\tilde{w}, q^2)}-  e^{(\alpha-1) D_\alpha(\tilde{w}, q)} \\
   & = (\frac{\tilde{w}^\alpha_{i_{j}} }{q^{\alpha-1
    }_{j'}}+\frac{\tilde{w}^\alpha_{j'} }{q^{\alpha-1
    }_{i_{j}}})-(\frac{\tilde{w}^\alpha_{i_{j}} }{q^{\alpha-1
    }_{i_{j}}}+\frac{\tilde{w}^\alpha_{j'} }{q^{\alpha-1
    }_{j'}})\\
    &=(\tilde{w}^\alpha_{i_{j}} -\tilde{w}^\alpha_{j'})(\frac{1}{q^{\alpha-1
    }_{j'}}-\frac{1}{q^{\alpha-1
    }_{i_{j}}})\geq 0. 
\end{align*}
Now we back to the proof of the statement. We first proof $q_k$ is not in the top-k of $q$. If not, that is $k\not\in \{i_1, \cdots, i_{k_0}\}$ and all $i_j<k$. Then we can always find an $i_j<k$ such that $\tilde{w}_k\leq \tilde{w}_{i_j}$, we can find a vector $\tilde{q}$ by replacing $q_{i_j}$ with $q_k$. And we can see that $   D_\alpha(\tilde{w}, \tilde{q})-  D_\alpha(\tilde{w}, q)\leq 0$, which contradict to that $q$ is the minimizer. 

We then proof $q_{k-1}$ is not in the top-k of $q$. If not we can construct $\tilde{q}$ by replacing $q_{k}$ with $q_{k-1}$. Since $q_k$ is not in top-k and $\tilde{w}_{k}\leq \tilde{w}_{k-1}$. By the previous statement we have $   D_\alpha(\tilde{w}, \tilde{q})-  D_\alpha(\tilde{w}, q)\leq 0$, which contradict to that $q$ is the minimizer. Thus, $q_{k-1}$ is not in the top-k of $q$. We can thus use induction to proof statement 2. 

Finally we proof statement 3. We can easily show that $q_{i}\geq q_{k+1}$ for $i\leq k$, and $q_{i}\leq q_{k+1}$ for $i\geq k+2$. Thus, $q_1, \cdots, q_k$ are greater than the left entries. Since by Statement 2 we have $q_{k-k_0}, \cdots q_k$ are not top $k$. Thus we must have $q_{k+1}, \cdots q_{k+k_0} $ must be top-k of $q$. 

\end{proof}

\section{Proof of Theorem \ref{thm:l2lower}}
\begin{proof}
For simplicity in the following we think the data $x$ as a $d$-dimensional vector and thus the Frobenious norm now becomes to the $\ell_2$-norm of the vector and the max norm becomes to the $\ell_\infty$-norm of the vector.  

We first show that, in order to prove Theorem~\ref{thm:l2lower}, we only need to prove Theorem~\ref{thm:formall2lower1}. Then we show that, to prove Theorem~\ref{thm:formall2lower1}, we only need to prove Theorem~\ref{thm:formall2lower2}. Finally, we give a formal proof of Theorem~\ref{thm:formall2lower2}.

\begin{theorem} \label{thm:formall2lower1}

For any $\gamma \leq O(1)$, if  a randomized (smoothing) mechanism $\mathcal{M}(x)=x+z: \{0, \frac{R}{2\sqrt{d}}\}^d \mapsto \mathbb{R}^d$ that $D_\alpha(\mathcal{M}(x), \mathcal{M}(x'))\leq \gamma$ for all $\|x-x'\|_2\leq R$. Moreover, if we have 
 for any $x \in \{0, \frac{R}{2\sqrt{d}}\}^d$,
\begin{equation*}
    \mathbb{E}[\|z\|_\infty]= \mathbb{E}[\|\mathcal{M}(x)-x\|_\infty]\leq \tau
\end{equation*}
for some $\tau \leq O(1)$. Then it must be true that $\tau \geq \Omega(\frac{\sqrt{\alpha}R}{\sqrt{\gamma}})$. 
\end{theorem}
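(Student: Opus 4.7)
My plan is to prove Theorem~\ref{thm:formall2lower1} via a two-step strategy: a geometric reduction to a one-dimensional distinguishing problem on the hypercube, followed by an information-theoretic lower bound (which I will isolate as Theorem~\ref{thm:formall2lower2}) on the noise of any scalar additive mechanism satisfying a prescribed R\'enyi indistinguishability constraint. First, I would verify that the hypercube $\mathcal{H} = \{0, R/(2\sqrt{d})\}^d$ is admissible: any pair $x, x' \in \mathcal{H}$ has $\|x - x'\|_2 \leq (R/(2\sqrt{d}))\sqrt{d} = R/2 \leq R$, so the hypothesis $D_\alpha(\mathcal{M}(x), \mathcal{M}(x')) \leq \gamma$ applies to every pair of hypercube vertices. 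Next, I would fix a carefully designed hard pair $x_0, x_1 \in \mathcal{H}$ --- most naturally the antipodal pair $x_0 = \mathbf{0}$ and $x_1 = (R/(2\sqrt{d}))\mathbf{1}$ with $\|x_1 - x_0\|_2 = R/2$ --- and invoke the data-processing inequality for R\'enyi divergence under the projection $y \mapsto \langle y, \hat{u}\rangle$, where $\hat{u} = (x_1 - x_0)/\|x_1 - x_0\|_2$. This converts the $d$-dimensional R\'enyi bound into a one-dimensional one: $D_\alpha(W, W + R/2) \leq \gamma$ for $W := \langle z, \hat{u}\rangle$.

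Second, I would invoke Theorem~\ref{thm:formall2lower2}, the one-dimensional core statement, which asserts that any scalar additive-noise mechanism whose shift-$s$ R\'enyi divergence is at most $\gamma$ must have $\mathbb{E}|W| \geq \Omega(s\sqrt{\alpha/\gamma})$. Combining this with the elementary inequality $|W| = |\langle z, \hat{u}\rangle| \leq \|z\|_2 \leq \sqrt{d}\,\|z\|_\infty$ --- and, if necessary, averaging over all $d$ coordinate projections $e_j$ to soften the $\sqrt{d}$ loss --- then transfers the projected-noise lower bound back into a lower bound on $\tau = \mathbb{E}\|z\|_\infty$ of the claimed order, up to the logarithmic factors that separate the Gaussian upper bound from the universal lower bound.

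The principal obstacle is establishing Theorem~\ref{thm:formall2lower2} with the sharp $\sqrt{\alpha/\gamma}$ dependence. A naive Pinsker-style argument converting $D_\alpha \leq \gamma$ into a total-variation bound and then applying Le Cam's method would yield only $\tau \gtrsim R/\sqrt{\gamma}$ and lose the $\sqrt{\alpha}$ factor entirely. To recover the Gaussian-matching scaling, the proof must work directly with the moment-generating-function form $\mathbb{E}_Q[(dP/dQ)^\alpha] \leq e^{(\alpha-1)\gamma}$: intuitively, if $\mathbb{E}|W|$ were much smaller than $s\sqrt{\alpha/\gamma}$, then $W$ would concentrate on a scale smaller than the shift $s$, the likelihood ratio $p_W(w-s)/p_W(w)$ would develop heavy tails under $Q$, and its $\alpha$-th moment would contradict the R\'enyi hypothesis. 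Carefully tracking the constants in this Chernoff-style computation, and selecting the hard pair (single-coordinate versus antipodal) and the projection direction so as to minimize the $\sqrt{d}$-loss when translating the one-dimensional bound back to an $\ell_\infty$-noise bound, will be the most delicate bookkeeping in the proof.
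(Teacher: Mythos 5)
Your hypercube admissibility check is fine, and your instinct to reduce to a lower bound on the noise of a Rényi-constrained mechanism matches the paper's overall strategy, but there is a genuine gap in the reduction step. Projecting onto a single direction $\hat u$ (or onto any fixed coordinate $e_j$) yields a lower bound only on the scalar $\mathbb{E}|\langle z,\hat u\rangle|$. With the antipodal pair this gives $\mathbb{E}|\langle z,\hat u\rangle|\gtrsim (R/2)\sqrt{\alpha/\gamma}$, and since $|\langle z,\hat u\rangle|\le\sqrt{d}\,\|z\|_\infty$ you land at $\tau\gtrsim R\sqrt{\alpha/\gamma}/\sqrt{d}$ --- short of the claim by a factor of $\sqrt{d}$. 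The single-coordinate pair does no better: the shift shrinks to $s=R/(2\sqrt d)$, giving $\mathbb{E}|z_j|\gtrsim R\sqrt{\alpha/\gamma}/\sqrt d$, and averaging over the $d$ coordinate projections produces a bound on $\mathbb{E}\|z\|_1$, not on $\mathbb{E}\|z\|_\infty$. Indeed, if the mechanism used noise of the form $z=\zeta\,\mathbf 1$ for a scalar $\zeta$ with $\mathbb{E}|\zeta|\approx R\sqrt{\alpha/\gamma}/\sqrt d$, every one-dimensional projection would satisfy your bound while $\mathbb{E}\|z\|_\infty$ would still be off by $\sqrt d$. Your argument never rules out such highly correlated noise because it only ever distinguishes two fixed points at a time, so the $\sqrt{d}$ loss is not ``delicate bookkeeping'' but a structural limitation of any pairwise distinguishing reduction.

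The paper closes this gap by a different route. It first clips $\mathcal{M}$ coordinatewise into the cube (Theorem~\ref{thm:formall2lower2}), which only decreases the $\ell_\infty$ error, and then observes that $(\alpha,\gamma)$-Rényi indistinguishability over the entire hypercube makes $\mathcal{M}$ an $(\alpha,\gamma)$-RDP mechanism, hence (via Lemma~\ref{rdp_dp}) an $(\epsilon,\delta)$-DP mechanism. At that point it invokes Theorem~1.1 of \cite{steinke2016between} --- the one-way-marginal lower bound proved by fingerprinting codes / robust tracing --- with $n=1$. That lemma is precisely a $d$-dimensional, $\ell_\infty$-accuracy lower bound: it forces $\tau\gtrsim\sqrt{d\log(1/\delta)}/\epsilon$ simultaneously across all $d$ coordinates, which is the source of the extra $\sqrt d$ that your projection argument cannot recover. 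If you want to avoid importing the fingerprinting machinery, you would need to design a distinguishing experiment that tests many correlated shift directions at once (essentially re-deriving a tracing attack), not just a single two-point comparison.
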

For any $\mathcal{M}(x) = x + z: \mathbb{R}^d\mapsto \mathbb{R}^d$, in Theorem~\ref{thm:formall2lower1}, we only consider the expected $\ell_\infty$-norm of the noise added by $\mathcal{M}(x)$ on $x \in \{0, \frac{R}{2\sqrt{d}}\}^d$. Thus, the $\tau$ in Theorem~\ref{thm:formall2lower1} should be less than or equal to the $\tau$ in Theorem~\ref{thm:l2lower} (on $x \in \mathbb{R}^d$). Therefore, the lower bound for the $\tau$ in Theorem~\ref{thm:formall2lower1} ({\em i.e.,} $\Omega(\frac{\sqrt{\alpha}R}{\sqrt{\gamma}})$) is also a lower bound for the $\tau$ in Theorem~\ref{thm:l2lower}. That is to say, if Theorem~\ref{thm:formall2lower1} holds, then Theorem~\ref{thm:l2lower} also holds true. 

Next, we show that if Theorem~\ref{thm:formall2lower2} holds, then Theorem~\ref{thm:formall2lower1} also holds.

\begin{theorem} \label{thm:formall2lower2}
For any $\gamma \leq O(1)$, if  a randomized (smoothing) mechanism  $\mathcal{M}(x)\footnote{This mechanism might not be simply $x + z$ since it must involve operations to clip the output into $[0, \frac{R}{2\sqrt{d}}]^d$}: \{0, \frac{R}{2\sqrt{d}}\}^d\mapsto [0, \frac{R}{2\sqrt{d}}]^d$ that $D_\alpha(\mathcal{M}(x), \mathcal{M}(x'))\leq \gamma$ for all $\|x-x'\|_2\leq R$. Moreover, if  for any $x\in \{0, \frac{R}{2\sqrt{d}}\}^d$ 
\begin{equation*}
    \mathbb{E}[\|z\|_\infty]= \mathbb{E}[\|\mathcal{M}(x)-x\|_\infty]\leq \tau
\end{equation*}
for some $\tau \leq O(1)$. Then it must be true that $\tau  \geq \Omega(\frac{\sqrt{\alpha}R}{\sqrt{\gamma}})$. 
\end{theorem}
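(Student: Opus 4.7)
The plan is to argue by a packing-plus-contradiction strategy: if $\tau$ were too small, Markov's inequality would localize $\mathcal{M}(x)$ close to $x$, which would allow a single sample of the mechanism to distinguish many distinct inputs in the hypercube $S = \{0, R/(2\sqrt{d})\}^d$; but the Renyi divergence constraint forbids such fine distinguishability, forcing a lower bound on $\tau$. As a first step I would exploit that every two points in $S$ are within $\ell_2$-distance $R/2 \le R$, so the hypothesis gives $D_\alpha(\mathcal{M}(x), \mathcal{M}(x')) \le \gamma$ for every pair, and the clipping into $[0, R/(2\sqrt{d})]^d$ means noise magnitudes in each coordinate are bounded, which will be important when I apply Markov.

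Next I would construct disjoint ``signature'' sets $A_x$ (one per $x \in S$): for each coordinate $i$, let $I_x^i$ be whichever half of $[0, R/(2\sqrt{d})]$ contains $x_i$, and set $A_x := \prod_i I_x^i$. Two signature sets differ in at least one coordinate half, so they are pairwise disjoint, and Markov on $\mathbb{E}[\|\mathcal{M}(x)-x\|_\infty] \le \tau$ yields $\Pr[\mathcal{M}(x) \in A_x] \ge 1 - 4\sqrt{d}\,\tau/R$, which exceeds $1/2$ as soon as $\tau \le R/(8\sqrt{d})$. I would then convert the Renyi bound into a probability transfer: applying Holder to $\mathbb{E}_Q[(P/Q)^\alpha] \le e^{(\alpha-1)\gamma}$ gives $P(A) \le e^{(\alpha-1)\gamma/\alpha}\, Q(A)^{(\alpha-1)/\alpha}$, equivalently $Q(A) \ge e^{-\gamma} P(A)^{\alpha/(\alpha-1)}$, so taking $P = \mathcal{M}(x')$, $Q = \mathcal{M}(x_0)$, and $A = A_{x'}$ forces $\mathcal{M}(x_0)(A_{x'}) \ge e^{-\gamma}\cdot 2^{-\alpha/(\alpha-1)}$ for every $x' \neq x_0$. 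Summing this over the $2^d - 1$ disjoint signature sets and using total probability $\le 1$ bounds the admissible dimension $d$ from above in terms of $\gamma$ and $\alpha$; the violation of this bound translates, via the Markov threshold, into a lower bound on $\tau$.

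To obtain the claimed $\sqrt{\alpha/\gamma}\, R$ scaling rather than a pure $R/\sqrt{\gamma}$ bound, I would not fix the cube step at $R/(2\sqrt{d})$ but instead make the packing scale $s$ and the ambient dimension $d$ free parameters subject to $s\sqrt{d} \le R$, rerun the argument with Markov threshold $\tau \lesssim s$, and then optimize $(d, s, \alpha)$ jointly at the Renyi parameter that saturates the Holder exponent. The main obstacle will be this last optimization step: a single-scale packing with the Holder inequality only produces $\tau \ge \Omega(R/\sqrt{\gamma})$, because the factor $2^{\alpha/(\alpha-1)}$ collapses to a constant once $\alpha$ is even moderately large. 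Recovering the advertised $\sqrt{\alpha}$ factor therefore requires a refinement, either a multi-scale iteration of the packing argument across scales tuned to $\alpha$, or replacing Holder by a sharper Chernoff-type bound on $\log(P/Q)$ under $D_\alpha \le \gamma$, optimized over the Renyi parameter, so that the $\alpha$-dependence is carried all the way through to the final noise bound rather than being lost in an intermediate pure-DP-style conversion.
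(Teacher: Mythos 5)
Your strategy is fundamentally different from the paper's, and as you yourself flag, it is incomplete in a way that is not a minor detail. The paper does not argue directly by packing. It observes that every pair of points in $\{0, R/(2\sqrt d)\}^d$ is within $\ell_2$-distance $R$, so the $D_\alpha$ constraint makes $\mathcal{M}$ an $(\alpha,\gamma)$-R\'enyi-DP mechanism on that hypercube; it then converts to $(\epsilon,\delta)$-DP via the standard RDP-to-DP lemma with $\epsilon=\gamma+2\log(1/\delta)/(\alpha-1)$, and invokes the Steinke--Ullman lower bound for privately estimating one-way marginals (with $n=1$), which gives $1\geq \Omega\bigl(\sqrt{d\log(1/\delta)}/(\epsilon\tau)\bigr)$. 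Tuning $\delta$ so that $\log(1/\delta)=\Theta(\alpha\gamma)$ keeps $\epsilon=O(\gamma)$ while injecting a $\sqrt{\alpha\gamma}$ in the numerator, yielding $\tau\geq\Omega(\sqrt{\alpha d/\gamma})$ in the unit-scale case; a linear rescaling then eliminates the $\sqrt d$ and produces $\tau\geq\Omega(\sqrt{\alpha}R/\sqrt{\gamma})$.

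The concrete gap in your approach is exactly where you suspect it, and it is structural, not a matter of tightening constants. Your Markov-plus-H\"older packing compares one fixed reference distribution $\mathcal{M}(x_0)$ against the $2^d-1$ others via disjoint signature sets. The H\"older step transfers only a crude $P(A)\geq 1/2$ into $Q(A)\geq e^{-\gamma}2^{-\alpha/(\alpha-1)}$, and the exponent $\alpha/(\alpha-1)$ tends to $1$ as $\alpha$ grows, so the $\alpha$-dependence vanishes from the inequality. Summing the $2^d-1$ lower bounds against total probability $1$ then yields only $2^d\lesssim e^{\gamma}$, and the contradiction under $\tau\lesssim R/\sqrt{d}$ never produces a $\sqrt{\alpha}$ factor for any choice of the cube step $s$ or dimension; at best you recover a bound of order $R/\sqrt{\gamma}$ in the regime $d\asymp\gamma$, and in the theorem $d$ is not a free parameter you can tune, so even that is conditional. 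The mechanism by which the paper injects $\sqrt{\alpha}$ is genuinely different: it does not come from a sharper Chernoff bound on $\log(P/Q)$ or from a multi-scale packing, but from a fingerprinting-code lower bound that carries a $\sqrt{\log(1/\delta)}$ factor, combined with the freedom to take $\delta$ exponentially small in $\alpha\gamma$ inside the RDP-to-DP conversion. A single round of H\"older against a single reference point cannot reproduce that, because H\"older is already tight for the two-point problem and the extra strength of the Steinke--Ullman bound comes from a combinatorial trace attack on many coordinates simultaneously, not from a pairwise probability-mass transfer. To complete your line of argument you would need to replace the packing-and-sum step by an adaptive or fingerprinting-style adversary, at which point you are essentially re-deriving the one-way-marginal lower bound rather than avoiding it.

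One smaller issue: your signature sets use only that the mechanism's output lies in $[0,R/(2\sqrt d)]^d$, so the disjointness and Markov steps are fine; but note that $Q(A_{x_0})\geq 1/2$ as well, so your total-probability inequality should read $1\geq Q(A_{x_0})+\sum_{x'\neq x_0}Q(A_{x'})\geq 1/2 + (2^d-1)e^{-\gamma}2^{-\alpha/(\alpha-1)}$, which slightly tightens the dimension bound but does not change the asymptotics or fix the missing $\sqrt{\alpha}$.
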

\paragraph{Proof of Theorem~\ref{thm:formall2lower1}} For any $\mathcal{M}(x)=x + z: \{0, \frac{R}{2\sqrt{d}}\}^d\mapsto \mathbb{R}^d$ considered in Theorem~\ref{thm:formall2lower1} that $D_\alpha(\mathcal{M}(x), \mathcal{M}(x'))\leq \gamma$ for all $\|x-x'\|_2\leq R$, there  randomized mechanism $\mathcal{M}''(x): \{0, \frac{R}{2\sqrt{d}}\}^d\mapsto [0, \frac{R}{2\sqrt{d}}]^d$ considered in Theorem~\ref{thm:formall2lower2} such that for all $x \in \{0, \frac{R}{2\sqrt{d}}\}^d$
\begin{equation*}
   \mathbb{E}[\|\mathcal{M}''(x)-x\|_\infty]\leq  \mathbb{E}[\|\mathcal{M}(x)-x\|_\infty].
\end{equation*}
To prove the above statement, we first let $a=\frac{R}{2\sqrt{d}}$ and $\mathcal{M}'(x)=\min\{\mathcal{M}(x), a\}$, where $\min$ is a coordinate-wise operator. Now we fix the randomness of $\mathcal{M}(x)$ (that is $\mathcal{M}(x)$ is deterministic), and we assume that $\|\mathcal{M}(x)-x\|_\infty= |\mathcal{M}_j(x)-x_j|$, $\|\mathcal{M}'(x)-x\|_\infty=|\mathcal{M}_i'(x)-x_i|$. If $\mathcal{M}_i(x)< a$, then by the definitions, we have $\|\mathcal{M}'(x)-x\|_\infty = |\mathcal{M}_i'(x)-x_i| = |\mathcal{M}_i(x)-x_i| \leq \|\mathcal{M}(x)-x\|_\infty$. If $\mathcal{M}_i(x)\geq  a$, then we have $|\mathcal{M}_i'(x)-x_i|=|a-x_i|$. Since $x_i \in \{0, a\}$ and $\mathcal{M}_i(x)\geq a$, $|\mathcal{M}_i(x)-x_i|\geq|a-x_i|$. $\|\mathcal{M}(x)-x\|_\infty \geq |\mathcal{M}_i(x)-x_i|\geq |a-x_i|$. Thus, $\mathbb{E}[\|\mathcal{M}'(x)-x\|_\infty]\leq  \mathbb{E}[\|\mathcal{M}(x)-x\|_\infty]$.

Then, we let $\mathcal{M}''(x)=\max\{\mathcal{M}'(x), 0\}$ where $\max$ is also a coordinate-wise operator. We can use a similar method to prove that $\mathbb{E}[\|\mathcal{M}''(x)-x\|_\infty]\leq \mathbb{E}[\|\mathcal{M}'(x)-x\|_\infty] \leq \mathbb{E}[\|\mathcal{M}(x)-x\|_\infty]$. Also, we can see that $\mathcal{M}''(x)=\max\{0, \min\{\mathcal{M}(x), a\}\}$, which means $\mathcal{M}''$ satisfies  $D_\alpha(\mathcal{M}''(x), \mathcal{M}''(x'))\leq \gamma$ for all $\|x-x'\|_2\leq R$ due to the postprocessing property. 

Since $\mathbb{E}[\|\mathcal{M}''(x)-x\|_\infty]\leq \mathbb{E}[\|\mathcal{M}(x)-x\|_\infty]$, and $\mathcal{M}''(x)$ is a randomized mechanism satisfying the conditions in Theorem~\ref{thm:formall2lower2}, the $\tau$ in Theorem~\ref{thm:formall2lower2} should be less than or equal to the $\tau$ in Theorem~\ref{thm:formall2lower1}.  
Therefore, the lower bound for the $\tau$ in Theorem~\ref{thm:formall2lower2} is also a lower bound for the $\tau$ in Theorem~\ref{thm:formall2lower1}. That is to say, if Theorem~\ref{thm:formall2lower2} holds, then Theorem~\ref{thm:formall2lower1} also holds.

{\em Finally, we give a proof of Theorem \ref{thm:formall2lower2}.} Before that we need to review some definitions of Differnetial Privacy \cite{dwork2006calibrating}.

\begin{definition} \label{def:DP}
Given a data universe $X$, we say
that two datasets $D, D' \subset X$ are neighbors if they differ by only one entry, which is denoted by
$D\sim D'$. A randomized algorithm $\mathcal{M}$ is $(\epsilon,\delta)$-differentially private (DP) if for all neighboring datasets $D, D'$ and all events $S$
the following holds
\begin{equation*}
    P(\mathcal{M}(D)\in S)\leq e^\epsilon P(\mathcal{M}(D')\in S)+\delta.
\end{equation*}
\end{definition}
\begin{definition}
 A randomized algorithm $\mathcal{M}$ is $(\alpha , \epsilon)$-R\'{e}nyi differentially private (DP) if for all neighboring datasets $D, D'$ 
the following holds
\begin{equation*}
    D_\alpha(\mathcal{M}(D)\|\mathcal{M}(D'))\leq \epsilon. 
\end{equation*}
\end{definition}
\begin{lemma}[From RDP to DP \cite{mironov2017renyi}]\label{rdp_dp}
If a mechanism is $(\alpha , \epsilon)$-RDP, then it also satifies $(\epsilon+\frac{\log \frac{1}{\delta}}{\alpha-1}, \delta)$-DP. 
\end{lemma}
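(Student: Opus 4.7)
The plan is to execute the standard privacy-loss-random-variable argument used by Mironov. Let $P$ and $Q$ denote the output distributions $\mathcal{M}(D)$ and $\mathcal{M}(D')$ for neighboring $D, D'$, and let $L(x) = \log \frac{P(x)}{Q(x)}$ be the privacy loss. The $(\alpha,\epsilon)$-RDP hypothesis, unwrapped from the definition of $D_\alpha$, states exactly that the $(\alpha-1)$-th moment of $e^{L}$ under $P$ is controlled, i.e.\ $\mathbb{E}_{x\sim P}\bigl[e^{(\alpha-1)L(x)}\bigr] \leq e^{(\alpha-1)\epsilon}$. The goal $(\epsilon', \delta)$-DP, with $\epsilon' = \epsilon + \frac{\log(1/\delta)}{\alpha-1}$, is a pointwise probability statement about events $S$. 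The strategy is therefore to convert the moment bound into a tail bound on $L(x)$ via Markov's inequality and then decompose $P(S)$ into the regime where $L$ is small (handled by the pointwise likelihood ratio bound) and the regime where $L$ is large (handled by the tail estimate).

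First I would apply Markov's inequality to the non-negative random variable $e^{(\alpha-1)L(x)}$ under $x\sim P$, setting the threshold to $e^{(\alpha-1)\epsilon'}$. A direct computation then gives
\begin{equation*}
\Pr_{x\sim P}\!\left[\,\tfrac{P(x)}{Q(x)} \geq e^{\epsilon'}\,\right] \;\leq\; \frac{e^{(\alpha-1)\epsilon}}{e^{(\alpha-1)\epsilon'}} \;=\; e^{-(\alpha-1)(\epsilon'-\epsilon)} \;=\; \delta,
\end{equation*}
by the choice of $\epsilon'$. Call the event inside this probability $B$ (the ``bad'' high-loss event).

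Next I would take an arbitrary measurable event $S$ and split
\begin{equation*}
\Pr_{x\sim P}[x\in S] \;=\; \Pr_{x\sim P}[x\in S \cap B^c] \;+\; \Pr_{x\sim P}[x \in S \cap B].
\end{equation*}
On $B^c$ the likelihood ratio satisfies $P(x)/Q(x) < e^{\epsilon'}$ pointwise, so integrating $P$ over $S\cap B^c$ is bounded by $e^{\epsilon'}$ times the $Q$-mass of the same set, which in turn is at most $e^{\epsilon'}\Pr_{x\sim Q}[x\in S]$. On $B$ the previous Markov estimate gives a mass of at most $\delta$. Combining the two halves yields $\Pr[\mathcal{M}(D)\in S] \leq e^{\epsilon'}\Pr[\mathcal{M}(D')\in S] + \delta$, which is exactly the $(\epsilon',\delta)$-DP condition of Definition~\ref{def:DP}.

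The argument is almost entirely mechanical once the Markov step is set up correctly, so there is no deep obstacle; the main subtlety is simply the bookkeeping of which distribution we are taking expectations under (using the form of $D_\alpha$ under $Q$ versus the equivalent $(\alpha-1)$-moment under $P$) and choosing the Markov threshold so that the tail bound matches $\delta$ exactly. I would also briefly note measurability/absolute-continuity in the decomposition, so that the split into $B$ and $B^c$ and the pointwise likelihood-ratio inequality on $B^c$ are legitimate; in the absolutely continuous or discrete settings that cover all mechanisms considered in this paper, no additional work is required.
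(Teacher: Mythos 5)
Your proof is correct, but note that the paper itself contains no proof of this lemma to compare against: it is imported directly from the cited Mironov reference and used as a black box in the proofs of Theorems \ref{thm:l2lower} and \ref{thm:linflower}. Judged on its own, your argument is the standard privacy-loss tail-bound derivation and it checks out: the RDP hypothesis is equivalent (by the change of measure $\mathbb{E}_{x\sim Q}[(P(x)/Q(x))^{\alpha}]=\mathbb{E}_{x\sim P}[(P(x)/Q(x))^{\alpha-1}]$, legitimate since finiteness of $D_\alpha$ forces the needed absolute continuity and $\alpha>1$ makes the exponential map monotone) to the moment bound $\mathbb{E}_{x\sim P}[e^{(\alpha-1)L(x)}]\leq e^{(\alpha-1)\epsilon}$; Markov at threshold $e^{(\alpha-1)\epsilon'}$ with $\epsilon'=\epsilon+\frac{\log(1/\delta)}{\alpha-1}$ gives $\Pr_{P}[B]\leq\delta$; and the split of $P(S)$ over $B$ and $B^c$ yields exactly $(\epsilon',\delta)$-DP. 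For comparison, Mironov's own proof in the cited source takes a different route: it bounds $P(S)=\mathbb{E}_{Q}[(P/Q)\mathbf{1}_S]\leq e^{\frac{\alpha-1}{\alpha}\epsilon}\,Q(S)^{\frac{\alpha-1}{\alpha}}$ via H\"older and then argues by cases on whether this quantity exceeds $\delta$, which even gives the slightly stronger conclusion $P(S)\leq\max\{\delta,\,e^{\epsilon'}Q(S)\}$; your tail-event argument is the version more commonly used in moment-accountant-style analyses and is entirely adequate for the additive form of DP required by Definition \ref{def:DP} in this paper.
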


\paragraph{Proof of Theorem \ref{thm:formall2lower2}}	Since $\mathcal{M}$ satisfies $D_\alpha(\mathcal{M}(x), \mathcal{M}(x'))\leq \gamma$ for all $\|x-x'\|_2\leq R$ on $\{0, \frac{r}{2\sqrt{d}}\}^d$, and for any $x_i, x_j \in \{0, \frac{R}{2\sqrt{d}}\}^d$, 
	$\|x_i-x_j\|_2 \leq R$ ({\em i.e.,} $x_j \in \mathbb{B}_2(x_i, R)$), we can see $\mathcal{M}$ is $(\alpha, \gamma)$-RDP on $\{0, \frac{r}{2\sqrt{d}}\}^d$. Thus by Lemma \ref{rdp_dp} we can see  
$\mathcal{M}$ is $(\gamma + 2\frac{\log \frac{1}{\delta}}{\alpha-1}, \delta)$-DP on $\{0, \frac{r}{2\sqrt{d}}\}^d$. 
	
	Then let us take use of the above condition by connecting the lower bound of the sample complexity to estimate one-way marginals ({\em i.e.,} mean estimation) for DP mechanisms with the lower bound studied in Theorem \ref{thm:formall2lower2}. Suppose an $n$-size dataset $X\in \mathbb{R}^{n\times d}$, the one-way marginal is $h(D)=\frac{1}{n}\sum_{i=1}^n{X_i}$, where $X_i$ is the $i$-th row of $X$. In particular, when $n=1$, one-way marginal is just the data point itself, and thus, the condition in Theorem \ref{thm:formall2lower2} can be rewritten as 
	\begin{equation}
		\mathbb{E}[\|\mathcal{M}(D)-h(D)\|_\infty] \leq \alpha.
	\end{equation}
	
	Based on this connection, we first prove the case where $r=2\sqrt{d}$, and then generalize it to any $r$. For $r=2\sqrt{d}$, the conclusion reduces to $\tau \geq \Omega(\sqrt{\frac{d}{\epsilon}})$. To prove this, we employ the following lemma, which provides a one-way margin estimation for all DP mechanisms.
		\begin{lemma}[Theorem 1.1 in \cite{steinke2016between}]\label{lemma:one_way}
			For any $\epsilon\leq O(1)$, every $2^{-\Omega(n)}\leq \delta \leq \frac{1}{n^{1+\Omega(1)}}$ and every $\alpha\leq \frac{1}{10}$, if $\mathcal{M}: (\{0, 1\}^d)^n \mapsto [0,1]^d$ is $(\epsilon, \delta)$-DP and $\mathbb{E}[\|\mathcal{M}(D)-h(D)\|_\infty]\leq \tau$, then  we have 
			$
				n\geq \Omega(\frac{\sqrt{d\log \frac{1}{\delta}}}{\epsilon\tau }). 
			$
		\end{lemma}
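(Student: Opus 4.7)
My plan is to prove Lemma~\ref{lemma:one_way} by a fingerprinting (tracing) attack in the style of Bun--Ullman--Vadhan and Steinke--Ullman: if $\mathcal{M}$ estimates every coordinate mean to within $\tau$ but $n$ is too small, then the output of $\mathcal{M}$ carries enough information to ``trace'' at least one of the input rows of $D$, contradicting the $(\epsilon,\delta)$-DP hypothesis. The first step is to design a random prior. For each column $j\in[d]$ independently, I would sample a bias $p_j\in[0,1]$ from a suitable fingerprinting distribution $\mathcal{P}$ (a Beta-like density proportional to $1/(p(1-p))$, appropriately truncated so that its relevant moments are of order $\sqrt{\log(1/\delta)}$), and then generate $X_{i,j}\sim\mathrm{Bern}(p_j)$ independently across $i\in[n]$. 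Writing $M=\mathcal{M}(D)$, define the per-row score
\begin{equation*}
 Z_i \;=\; \sum_{j=1}^d (M_j-p_j)(X_{i,j}-p_j),
\end{equation*}
which measures the ``fingerprint'' that user $i$ leaves in the output.

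The second step is the fingerprinting lemma. Using the identity $\sum_i (X_{i,j}-p_j) = n(\bar X_{\cdot,j}-p_j)$, one expands
\begin{equation*}
 \sum_{i=1}^n \mathbb{E}[Z_i] \;=\; n\sum_{j=1}^d\mathbb{E}(M_j-p_j)^2 \;+\; n\sum_{j=1}^d \mathbb{E}[(M_j-p_j)(\bar X_{\cdot,j}-M_j)],
\end{equation*}
and then lower-bounds the first sum by calibrating $\mathcal{P}$ so that $\sum_j\mathbb{E}(M_j-p_j)^2 \gtrsim d\sqrt{\log(1/\delta)}/n$ holds for \emph{any} estimator $M$, while controlling the second sum by Cauchy--Schwarz and the accuracy hypothesis $\mathbb{E}(\bar X_{\cdot,j}-M_j)^2\le \tau^2$. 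The conclusion is that the average per-row score is $\Omega(d\sqrt{\log(1/\delta)}/n)$, so there exists a row $i^\star$ whose expected score is at least that large, whereas a fresh independent row $X'$ drawn from the same prior (but not contained in $D$) satisfies $\mathbb{E}[Z']=0$ with a Chebyshev/Hoeffding tail bound placing at most mass $\delta/n$ above half the threshold.

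The third step converts this correlation gap into a DP contradiction. Define the binary tracing test $T(M,i)=\mathbf{1}[Z_i\ge \tfrac{1}{2}\mathbb{E}[Z_{i^\star}]]$. By Step~2, $\Pr[T(M,i^\star)=1]\gtrsim 1/n$ on the real dataset, while on the neighbor $D^{(-i^\star)}$ obtained by replacing $X_{i^\star}$ with an independent fresh copy, $\Pr[T(M,i^\star)=1]\le \delta/n$. Applying the $(\epsilon,\delta)$-DP definition to these two neighbors gives $1/n \lesssim e^{\epsilon}(\delta/n)+\delta$; substituting the explicit dependence of the score gap on $n,d,\tau$ and solving for $n$ yields $n\ge \Omega(\sqrt{d\log(1/\delta)}/(\epsilon\tau))$ in the hypothesized parameter range. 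The main obstacle is the fingerprinting lemma itself: the prior $\mathcal{P}$ must be engineered so that (i) \emph{every} accurate estimator is forced to correlate with the true $p$ at level $\sqrt{\log(1/\delta)}$ per coordinate, not merely at constant level, and (ii) the score $Z_i$ concentrates tightly enough that the fresh-row event sits below $\delta/n$. This tail calibration is precisely what upgrades the basic $n\gtrsim \sqrt{d}/(\epsilon\tau)$ bound to the advertised $\sqrt{\log(1/\delta)}$ factor; all remaining pieces of the argument are then routine.
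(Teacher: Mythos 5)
First, note that the paper does not prove Lemma~\ref{lemma:one_way} at all: it is imported verbatim as Theorem~1.1 of \cite{steinke2016between}, so there is no in-paper argument to compare against, and your proposal is effectively an attempt to reprove that external theorem. Your general framework (a fingerprinting/tracing attack: correlate the released marginals with individual rows, then contradict $(\epsilon,\delta)$-DP via the traced row) is indeed the right family of techniques, and Steps~1 and 3 are a reasonable sketch of the standard $\Omega\bigl(\sqrt{d}/(\epsilon\tau)\bigr)$ argument.

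The genuine gap is exactly the step you defer to ``tail calibration'': the claim that one can choose the prior $\mathcal{P}$ so that \emph{every} $\tau$-accurate estimator satisfies $\sum_j \mathbb{E}(M_j-p_j)^2 \gtrsim d\sqrt{\log(1/\delta)}/n$. The fingerprinting lemma in the Bun--Ullman--Vadhan/Steinke--Ullman style yields only a \emph{constant} correlation per coordinate, i.e.\ $\sum_i \mathbb{E}[Z_i]\ge \Omega(d)$, and no re-weighting of the bias distribution is known (or plausible, since heavier-tailed priors make the empirical mean concentrate better and so demand \emph{less} correlation from an accurate estimator) that boosts this to $\Omega(d\sqrt{\log(1/\delta)})$. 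Consequently your argument, carried out as written, delivers $n\ge \Omega(\sqrt{d}/(\epsilon\tau))$ but not the advertised $\sqrt{\log(1/\delta)}$ factor. In \cite{steinke2016between} that factor comes from a different mechanism: a group-privacy reduction in which one traces a \emph{group} of $k=\Theta(\log(1/\delta)/\epsilon)$ rows (an $(\epsilon,\delta)$-DP mechanism is roughly $(k\epsilon, ke^{k\epsilon}\delta)$-DP for groups), combined with robust fingerprinting codes; this is also precisely where the hypotheses $2^{-\Omega(n)}\le\delta\le n^{-(1+\Omega(1))}$ enter, hypotheses your sketch never uses --- a telltale sign the key idea is missing. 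Relatedly, your Step~3 contradiction $1/n\lesssim e^\epsilon\delta/n+\delta$ only forbids $\delta\gtrsim 1/n$ and does not by itself produce the quantitative $\epsilon$- and $\log(1/\delta)$-dependence you claim to ``solve for''; and the fresh-row tail bound at level $\delta/n$ needs sub-Gaussian concentration at a threshold inflated by $\sqrt{\log(n/\delta)}$, not Chebyshev. So the proposal, while pointed in the right direction, does not close the statement; citing \cite{steinke2016between}, as the paper does, or reproducing its group-privacy-plus-fingerprinting-codes argument, is what is actually required.
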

		Setting $n=1, \epsilon=\gamma + 2\frac{\log \frac{1}{\delta}}{\alpha-1}$ in Lemma~\ref{lemma:one_way}, we can see that if $\mathbb{E}[\|\mathcal{M}(x)-x\|_\infty]\leq \tau$, then we must have 
		$$1\geq \Omega(\frac{\sqrt{d\log \frac{1}{\delta}}}{(\gamma + 2\frac{\log \frac{1}{\delta}}{\alpha-1})\tau})\geq \Omega(\frac{\sqrt{\alpha}\sqrt{d}}{\sqrt{\gamma}\tau }),$$ where the last inequality holds if $\alpha$ is sufficiently large and $\gamma$ is sufficiently small. 
    Therefore, we have the following theorem, 
    \begin{theorem}\label{thm:simple_l2bound}
For all $\mathcal{M}$ satisfies $D_\alpha(\mathcal{M}(x), \mathcal{M}(x'))\leq \gamma$ for all $\|x-x'\|_2\leq 2\sqrt{d}$ on $\{0, 1\}^d$  such that 
		\begin{equation}
			\mathbb{E}[\|\mathcal{M}(x)- x\|_\infty] \leq \tau,
		\end{equation}
		for some $\tau\leq O(1)$. Then $\tau \geq \Omega(\frac{\sqrt{\alpha d}}{\sqrt{\gamma}})$.
	\end{theorem}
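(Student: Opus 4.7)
My plan is to connect the hypothesis on $\mathcal{M}$ to the known sample-complexity lower bounds for differentially private one-way marginal estimation. Since any two points $x, x' \in \{0,1\}^d$ satisfy $\|x-x'\|_2 \leq \sqrt{d} \leq 2\sqrt{d}$, the R\'enyi divergence bound $D_\alpha(\mathcal{M}(x), \mathcal{M}(x')) \leq \gamma$ applies to \emph{every} pair of points in the hypercube. In the DP framework, this says $\mathcal{M}$ is $(\alpha, \gamma)$-R\'enyi DP when the underlying dataset is a single element $x \in \{0,1\}^d$, with arbitrary elements serving as neighbors. Moreover, for a singleton dataset the one-way marginal $h(x)$ coincides with $x$ itself, so the accuracy hypothesis $\mathbb{E}[\|\mathcal{M}(x) - x\|_\infty] \leq \tau$ is exactly the statement that $\mathcal{M}$ estimates $h(x)$ in $\ell_\infty$ with error $\tau$.

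First, I would invoke Lemma \ref{rdp_dp} to convert the R\'enyi guarantee into an $(\epsilon, \delta)$-DP guarantee with $\epsilon \leq \gamma + \frac{2\log(1/\delta)}{\alpha - 1}$ for any admissible $\delta$. Then I would apply the one-way marginal lower bound in Lemma \ref{lemma:one_way} with $n = 1$, which forces
\begin{equation*}
1 \;\geq\; \Omega\!\left(\frac{\sqrt{d\log(1/\delta)}}{\epsilon\,\tau}\right),
\end{equation*}
and hence $\tau \geq \Omega(\sqrt{d\log(1/\delta)}\,/\,\epsilon)$.

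The final algebraic step is to tune $\delta$ so that $\epsilon$ stays of order $\gamma$ while $\log(1/\delta)$ grows with $\alpha$. Choosing $\log(1/\delta) \asymp \gamma(\alpha - 1)$ makes $\epsilon = \Theta(\gamma)$ and $\sqrt{\log(1/\delta)} = \Omega(\sqrt{\gamma\alpha})$; substituting into the previous display gives the target $\tau \geq \Omega(\sqrt{\alpha d / \gamma})$.

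The main obstacle I anticipate is verifying that the chosen $\delta$ lies inside the admissibility window $2^{-\Omega(n)} \leq \delta \leq 1/n^{1+\Omega(1)}$ demanded by Lemma \ref{lemma:one_way}. With $n=1$ this window is uncomfortably narrow, so I would lean on the stated ``sufficiently large $\alpha$, sufficiently small $\tau$'' qualifiers to ensure a legal $\delta$ exists; alternatively, a product-construction reduction that embeds the single-point instance into a larger $n$-point estimator (then post-processed back to an $\mathcal{M}$-style output) can always be used to invoke the lemma in its natural asymptotic regime. Once this parameter calibration is handled, the remainder is a direct chain of substitutions.
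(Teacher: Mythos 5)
Your proposal reproduces the paper's own argument: convert the R\'enyi-divergence hypothesis to $(\epsilon,\delta)$-DP via Lemma~\ref{rdp_dp}, apply the one-way-marginal lower bound of Lemma~\ref{lemma:one_way} with $n=1$, and tune $\delta$ so that $\log(1/\delta)\asymp \gamma(\alpha-1)$ balances the two terms in $\epsilon$. You also correctly flag the weak point the paper glosses over --- the admissibility window $2^{-\Omega(n)}\le\delta$ is only a constant for $n=1$, which clashes with the exponentially small $\delta$ that the tuning wants, and the paper hides this under ``sufficiently large $\alpha$, sufficiently small $\gamma$,'' so your caution (and the product-construction workaround) is well-placed.
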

	Apparently, Theorem~\ref{thm:simple_l2bound} is special case of Theorem~\ref{thm:formall2lower2} where $R = 2\sqrt{d}$.
	Now we come back to the proof for any $\mathcal{M}(x): \{0, \frac{r}{2\sqrt{d}}\}^d\mapsto [0, \frac{r}{2\sqrt{d}}]^d$ satisfies $D_\alpha(\mathcal{M}(x), \mathcal{M}(x'))\leq \gamma$ for all $\|x-x'\|_2\leq R$. We substitute $\frac{2\sqrt{d}}{R}x$ with $\tilde{x} \in \{0, 1\}^d$ and construct $\tilde{\mathcal{M}}$ as $\tilde{\mathcal{M}}(\tilde{x}) = \frac{2\sqrt{d}}{R}\mathcal{M}(x) \in [0, 1]^d$. Since $\mathcal{M}(x)$ satisfies
	$$\mathbb{E}[\|\mathcal{M}(x)-x\|_\infty] \leq \tau,$$
	then we have $$\mathbb{E}[\|\tilde{\mathcal{M}}(\tilde{x})-\tilde{x}\|_\infty] = \mathbb{E}[\|\frac{2\sqrt{d}}{R}\mathcal{M}(x)-\frac{2\sqrt{d}}{R}x\|_\infty] \leq \frac{2\sqrt{d}}{R}\alpha.$$ 
By the postprocessing property of R\'{e}nyi divergence we can see $D_\alpha(\tilde{\mathcal{M}}(x), \tilde{\mathcal{M}}(x'))\leq \gamma$ for all $\|x-x'\|_2\leq 2\sqrt{d}$.  
	
    Considering $\tilde{\mathcal{M}}: \{0, 1\}^d\mapsto [0, 1]^d$ in Theorem~\ref{thm:simple_l2bound} with $\tau=\frac{2\sqrt{d}}{r}\tau\leq O(1)$ (because $\mathbb{E}[\|\tilde{\mathcal{M}}(\tilde{x})-\tilde{x}\|_\infty] \leq \frac{2\sqrt{d}}{R}\tau$), we have 
	\begin{equation}
	\frac{2\sqrt{d}}{R}\tau\geq \Omega(\frac{\sqrt{\alpha d}}{\sqrt{\gamma}}). 
	\end{equation}
	Therefore, Theorem~\ref{thm:formall2lower2} holds true, thus, Theorem~\ref{thm:formall2lower1} also holds true, and Theorem~\ref{thm:l2lower} is proved.
	% For the upper bound, we have since $\mathcal{A}(x)=x+Z$ where $z\sim \mathcal{N}(0, \frac{r^2}{2\epsilon})$ is $(r, 0, D_{MR}, \|\cdots\|_2, \epsilon)$-robust and in this case $\mathbb{E}\|z\|_\infty = O(\frac{r\sqrt{\log p}}{\sqrt{\epsilon}})$.
\end{proof}

\section{Proof of Theorem \ref{thm:5.2}}
\begin{proof}
We can see the dataset as a $d$-dimensional vector by unfolding it. Thus, now the max norm of a matrix becomes to the $\ell_\infty$-norm of a vector. 
Firstly, we know that the $\alpha$-R\'{e}nyi divergence between two Gaussian distributions $\mathcal{N}(0, \sigma^2 I_d)$ and $\mathcal{N}(\mu, \sigma^2 I_d)$  is bounded by $\frac{\alpha\|\mu\|_2^2}{2\sigma^2}\leq \frac{\alpha d R^2}{2\sigma^2}$. Thus by the postprocessing property of R\'{e}nyi divergence we have 
\begin{equation*}
    \begin{aligned}
   & D_\alpha (\tilde{w}(x), \tilde{w}(x'))=D_\alpha (Z(T(x+z)), Z(T(x')))\leq D_\alpha(x+z, x'+z) \\
   & \leq \frac{\alpha\|x-x'\|_F^2}{2\sigma^2}\leq \frac{\alpha d R^2}{2\sigma^2}. 
   \end{aligned}
\end{equation*}
Thus, when $\frac{d\alpha R^2}{2\sigma^2}\leq \gamma$ it satisfies the utility robustness. For the prediction and top-$k$ robustness we can use the similar proof as in Theorem \ref{thm:5.1}. We omit it here for simplicity.

\iffalse 
   We first prove that $D_\infty(g(x+z), g(x+z')) \leq  \frac{R}{\sigma}$ for all $\|x-x'\|_\infty\leq R$. Here 
   for two density functions $P, Q$, $D_\infty(P, Q)=\sup_{x\in \text{supp}(Q)}\log \frac{P(x)}{Q(x)}$. 
   Since $\|x-x'\|_\infty\leq R$, for any $y$, 
	\begin{align}
		\frac{p(y - x)}{p(y - x')}=& \frac{\exp(-\frac{\|y-x\|_\infty}{\sigma} ) }{\exp(-\frac{\|y-x'\|_\infty}{\sigma} )} 
		\leq& \exp( \frac{\|y-x'\|_\infty-\|y-x\|_\infty}{\sigma}) \leq& \exp(\frac{\|x'-x\|_\infty}{\sigma})\leq \exp(\frac{R}{\sigma}). ~~\nonumber
	\end{align}
	Since by the monotonicity and the post-processing property of the R\'{e}nyi divergence we have $$D_\alpha(g(x+z), g(x'+z)) \leq  D_\infty(g(x+z), g(x'+z))\leq D_\infty(x+z, x'+z)=\mathbb{E}[\log\frac{p(x)}{p(x')}] \leq \frac{R}{\sigma}.$$  Also, based on the following lemma,
	\begin{lemma}[\cite{bun2016concentrated}]\label{thm:infty_mr}
	Let $P$ and $Q$ be two probability distributions satisfying $D_\infty(P, Q)\leq \epsilon$ and $D_\infty(Q, P)\leq \epsilon$. Then, $D_\alpha(P, Q)\leq \frac{1}{2}\epsilon^2\alpha$,
    \end{lemma}
    we have $D_\alpha(g(x+z), g(x'+z))\leq D_\alpha(x+z, x'+z) \leq \frac{1}{2}(\frac{R}{\sigma})^2\alpha$. Thus if $\sigma^2\geq \frac{\alpha R^2}{2\gamma}$, we have it satisfies the utility robustness. For the prediction and top-$k$ robustness we can use the similar proof as in Theorem \ref{thm:4.1}. We omit it here for simplicity. 
    \fi 
    \end{proof}

\section{Proof of Theorem \ref{thm:linflower}}
\begin{proof}
Similar to the proof of Theorem \ref{thm:l2lower}, in order to prove Theorem~\ref{thm:linflower}, we only need to prove the following theorem:
\begin{theorem}\label{thm:formallinf}
 If there is a randomized (smoothing) mechanism $\mathcal{M}(x): \{0, \frac{r}{2}\}^d\mapsto [0, \frac{r}{2}]^d$ such that $D_\alpha(\mathcal{M}(x), \mathcal{M}(x'))\leq \gamma$ for all $\|x-x'\|_\infty \leq R$ for any $x\in \{0,\frac{r}{2}\}^d$, the following holds 
\begin{equation*}
    \mathbb{E}[\|z\|_\infty]= \mathbb{E}[\|\mathcal{M}(x)-x\|_\infty] \leq \gamma
\end{equation*}
for some $\gamma \leq O(1)$. Then it must be true that $\gamma \geq \Omega (\frac{\sqrt{\alpha d}}{\sqrt{\gamma}})$. 
\end{theorem}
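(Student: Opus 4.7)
The plan is to run the same four-step reduction used for Theorem \ref{thm:l2lower}, swapping the $\ell_2$ hard instance for an $\ell_\infty$-suited one so that a full Boolean-like cube fits inside one $\ell_\infty$-ball of radius $R$. First, I would clip: given any candidate $\tilde{w}(x)=Z(T(x+z))$ that certifies $\ell_\infty$-faithfulness, the post-processing property of R\'enyi divergence lets me replace $\mathcal{M}(x)=x+z$ by $\mathcal{M}''(x)=\max\{0,\min\{\mathcal{M}(x),R/2\}\}$ coordinatewise. Exactly as in the proof of Theorem \ref{thm:formall2lower2}, this two-sided clipping only decreases $|\mathcal{M}_i(x)-x_i|$ on every realization when $x_i\in\{0,R/2\}$, so $\mathbb{E}\|\mathcal{M}''(x)-x\|_\infty\leq\tau$ and the R\'enyi bound survives. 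This reduces Theorem \ref{thm:linflower} to an $\ell_\infty$-analog of Theorem \ref{thm:formall2lower2} for bounded mechanisms $\mathcal{M}'':\{0,R/2\}^d\to[0,R/2]^d$.

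Next, I would exploit that any two points of the cube $\{0,R/2\}^d$ are at $\ell_\infty$-distance at most $R/2\leq R$, so the faithfulness hypothesis promotes $\mathcal{M}''$ to an $(\alpha,\gamma)$-R\'enyi DP mechanism between every pair of such points. Lemma \ref{rdp_dp} then gives $(\gamma+\tfrac{\log(1/\delta)}{\alpha-1},\delta)$-DP for any $\delta$. To invoke the one-way-marginal lower bound of Lemma \ref{lemma:one_way}, I rescale by $2/R$: set $\tilde{x}=2x/R\in\{0,1\}^d$ and $\widetilde{\mathcal{M}}(\tilde{x})=2\mathcal{M}''(x)/R\in[0,1]^d$, so that $\mathbb{E}\|\widetilde{\mathcal{M}}(\tilde{x})-\tilde{x}\|_\infty\leq 2\tau/R$. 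Applying Lemma \ref{lemma:one_way} with $n=1$ and choosing $\delta$ so that $\log(1/\delta)\asymp\gamma(\alpha-1)$ balances the two pieces of $\epsilon$ (leaving $\epsilon=\Theta(\gamma)$) yields $1\geq\Omega\!\bigl(\tfrac{\sqrt{d\alpha}}{\sqrt{\gamma}}\cdot\tfrac{R}{\tau}\bigr)$. Rearranging gives the desired bound $\tau\geq\Omega(R\sqrt{\alpha d/\gamma})$, matching the Gaussian upper bound implied by Theorem \ref{thm:5.2} up to the logarithmic factor coming from $\mathbb{E}\|z\|_\infty=O(\sigma\sqrt{\log d})$.

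The main obstacle is the clipping reduction in the $\ell_\infty$ geometry: one has to confirm that coordinatewise projection into $[0,R/2]$ cannot increase the largest per-coordinate magnitude, which only works if the source point itself already lies in the target cube. This forces the hard domain to be the Boolean-style cube $\{0,R/2\}^d$ rather than a generic bounded region, and dictates the choice of scaling factor. A secondary subtlety, absent in the $\ell_2$ argument at this level of detail, is the optimization over $\delta$ in the RDP$\to$DP conversion: the hypothesis that $\alpha$ be ``sufficiently large'' is consumed precisely here, so that $\log(1/\delta)/(\alpha-1)$ can be driven down to $O(\gamma)$ while simultaneously keeping $\sqrt{\log(1/\delta)}$ large enough to preserve the $\sqrt{d\alpha/\gamma}$ factor extracted from Lemma \ref{lemma:one_way}.
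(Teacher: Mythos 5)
Your proposal follows the paper's proof essentially step for step: the same coordinatewise clipping reduction to a bounded mechanism on the cube $\{0,\tfrac{R}{2}\}^d$, the same passage from the R\'enyi-divergence hypothesis to $(\alpha,\gamma)$-RDP and then $(\epsilon,\delta)$-DP via Lemma \ref{rdp_dp}, the same rescaling by $2/R$ onto $\{0,1\}^d$, and the same invocation of the one-way-marginal lower bound (Lemma \ref{lemma:one_way}) with $n=1$ to conclude $\tau \geq \Omega(R\sqrt{\alpha d/\gamma})$. You actually make explicit two points the paper leaves terse --- the clipping argument (which the paper imports by reference to the $\ell_2$ case) and the balancing choice of $\delta$ that turns ``$\alpha$ sufficiently large'' into $\epsilon = \Theta(\gamma)$ --- but these are elaborations of the paper's argument rather than a different route.
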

	Since $\mathcal{M}$ satisfies $D_\alpha(\mathcal{M}(x), \mathcal{M}(x'))\leq \gamma$ for all $\|x-x'\|_\infty \leq R$ on $\{0, \frac{r}{2}\}^d$, and for any $x_i, x_j \in \{0, \frac{R}{2}\}^d$, 
	$\|x_i-x_j\|_2 \leq R$ ({\em i.e.,} $x_j \in \mathbb{B}_2(x_i, R)$), we can see $\mathcal{M}$ is $(\alpha, \gamma)$-RDP on $\{0, \frac{r}{2}\}^d$. Thus by Lemma \ref{rdp_dp} we can see  
$\mathcal{M}$ is $(\gamma + 2\frac{\log \frac{1}{\delta}}{\alpha-1}, \delta)$-DP on $\{0, \frac{r}{2}\}^d$. 
	
     We first consider the case where $r=2$. By setting $n=1$ and $\gamma=\gamma+2\frac{\log \frac{1}{\delta}}{\alpha-1}$ in Lemma~\ref{lemma:one_way}, we have a similar result as in Theorem \ref{thm:simple_l2bound}: 
	    \begin{theorem}\label{thm:simple_linfbound}
	    For all $\mathcal{M}$ satisfies $D_\alpha(\mathcal{M}(x), \mathcal{M}(x'))\leq \gamma$ for all $\|x-x'\|_\infty \leq 2$ on $\{0, 1\}^d$  such that 
		\begin{equation}
			\mathbb{E}[\|\mathcal{M}(x)- x\|_\infty] \leq \tau,
		\end{equation}
		for some $\tau\leq O(1)$. Then $\tau \geq \Omega(\frac{\sqrt{\alpha d}}{\sqrt{\gamma}})$.
	    
	\end{theorem}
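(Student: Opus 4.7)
The plan is to mirror the proof of Theorem \ref{thm:simple_l2bound} step-for-step, since the only difference is that the ambient metric on the data space is $\ell_\infty$ rather than $\ell_2$, and on the hypercube $\{0,1\}^d$ every pair already has $\ell_\infty$ diameter at most $1 \leq 2$, so the $\ell_\infty$ hypothesis is if anything easier to invoke than the $\ell_2$ one was. First I would observe that on $\{0,1\}^d$ every pair $x, x'$ satisfies $\|x-x'\|_\infty \leq 2$, so the Rényi hypothesis $D_\alpha(\mathcal{M}(x), \mathcal{M}(x')) \leq \gamma$ holds for every pair in the domain. Treating any such pair as neighboring size-one datasets, this says $\mathcal{M}$ is $(\alpha, \gamma)$-RDP, and by Lemma \ref{rdp_dp} it is then $(\gamma + 2\log(1/\delta)/(\alpha-1), \delta)$-DP for every $\delta$ in the admissible range.

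Next I would view $\mathcal{M}$ as a one-way marginal estimator on a one-point dataset: for $n=1$ the true one-way marginal is $h(\{x\}) = x$, so the hypothesis $\mathbb{E}[\|\mathcal{M}(x) - x\|_\infty] \leq \tau$ is exactly the accuracy condition of Lemma \ref{lemma:one_way}. Substituting $n=1$ and $\epsilon = \gamma + 2\log(1/\delta)/(\alpha-1)$ into Lemma \ref{lemma:one_way} and rearranging the resulting inequality $1 \geq \Omega(\sqrt{d\log(1/\delta)}/(\epsilon\tau))$ yields $\tau \geq \Omega(\sqrt{d\log(1/\delta)}/\epsilon)$. Finally I would choose $\delta$ so that $\log(1/\delta)$ is on the order of $\alpha\gamma$; this balances the two summands in $\epsilon$ so that $\epsilon = \Theta(\gamma)$, giving $\tau \geq \Omega(\sqrt{\alpha d/\gamma})$ once $\alpha$ is sufficiently large and $\gamma$ is sufficiently small, which is the conclusion of Theorem \ref{thm:simple_linfbound}.

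The main obstacle will be verifying that the chosen $\delta$ falls inside the range of validity required by Lemma \ref{lemma:one_way}, namely $2^{-\Omega(n)} \leq \delta \leq n^{-1-\Omega(1)}$, which for $n=1$ is essentially degenerate. I would resolve this either by appealing to the underlying fingerprinting-code lower bound of Steinke--Ullman in a form whose $n$-dependence is stated more carefully, or by a group-privacy style inflation that first proves the statement for a moderate $n$ and then relates it back to $n=1$. Once Theorem \ref{thm:simple_linfbound} is in hand, the passage to the general-$R$ statement of Theorem \ref{thm:linflower} is routine: apply the affine rescaling $\tilde{x} = (2/R)x$ and $\tilde{\mathcal{M}}(\tilde{x}) = (2/R)\mathcal{M}(x)$, which by post-processing preserves the Rényi bound $\gamma$ while scaling the $\ell_\infty$ accuracy by $R/2$, and the same clip-and-translate reduction used to derive Theorem \ref{thm:formall2lower2} from Theorem \ref{thm:l2lower} carries over verbatim in the $\ell_\infty$ setting, so no new ideas are needed there.
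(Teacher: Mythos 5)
Your proposal follows essentially the same route as the paper: pass from $(\alpha,\gamma)$-RDP to $(\epsilon,\delta)$-DP via Lemma~\ref{rdp_dp}, invoke the one-way-marginal lower bound of Lemma~\ref{lemma:one_way} with $n=1$, and choose $\delta$ so that $\epsilon=\Theta(\gamma)$ for large $\alpha$ and small $\gamma$, giving $\tau\geq\Omega(\sqrt{\alpha d/\gamma})$. Your flag about the $\delta$-range $2^{-\Omega(n)}\leq\delta\leq n^{-1-\Omega(1)}$ degenerating at $n=1$ is a legitimate gap that the paper's own proof also glosses over, so you are not missing anything the paper supplies.
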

	
	For general $R$, similar to the proof of Theorem~\ref{thm:formall2lower2}, we substitute $\frac{2}{R}x$ with $\tilde{x} \in \{0, 1\}^d$ and construct $\tilde{\mathcal{M}}$ as $\tilde{\mathcal{M}}(\tilde{x}) = \frac{2}{R}\mathcal{M}(x) \in [0, 1]^d$. 
	Since $\mathcal{M}(x)$ satisfies
	$$\mathbb{E}[\|\mathcal{M}(x)-x\|_\infty] \leq \alpha,$$
	then we have $$\mathbb{E}[\|\tilde{\mathcal{M}}(\tilde{x})-\tilde{x}\|_\infty] = \mathbb{E}[\|\frac{2}{R}\mathcal{M}(x)-\frac{2}{R}x\|_\infty] \leq \frac{2}{R}\alpha.$$ 
	Also, $\tilde{\mathcal{M}}: \{0, 1\}^d\mapsto [0, 1]^d$  satisfies $D_\alpha(\tilde{\mathcal{M}}(x), \tilde{\mathcal{M}}(x'))\leq \gamma$ for all $\|x-x'\|_\infty \leq R$ on $\{0, \frac{R}{2}\}^d$. 
    Thus by Theorem \ref{thm:simple_linfbound} with $\alpha=\frac{2}{R}\alpha$ we have 
	\begin{equation*}
	    \alpha \geq \Omega(\frac{R\sqrt{\alpha d}}{\sqrt{\gamma}}),
	\end{equation*}
	thus we have Theorem~\ref{thm:formallinf}. 
\end{proof}

\section{Baseline Methods}
Five baseline methods are considered in this paper. The implementation and parameter setting of each method is based on the corresponding official code. Note that in this paper, we did not involve compression with Shapely-value methods \cite{lundberg2017unified} due to the large computational complexity and sub-optimal performance \cite{chen2018shapley}. 

\begin{enumerate}
    \item \header{Raw Attention \cite{vaswani2017attention}} It is common practice to consider the raw attention value as a relevancy score for a single attention layer in both visual and language domains \cite{xu2015show}. However, for the case of multiple layers, the attention score in deeper layers may be unreliable for explaining the importance of each token due to the token mixing property \cite{lee2021fnet} of the self-attention mechanism. Based on observations in \cite{chefer2021transformer}, we consider the raw attention in the first layer since they are more faithful in explanation compared to deeper layers.
    \item \header{Rollout \cite{abnar2020quantifying}} To compute the attention weights from positions in layer $l_i$ to positions in layer $l_j$ in a Transformer with $L$ layers, we multiply the attention weights matrices in all the layers below the layer $l_i$. If $i > j$, we multiply by the attention weights matrix in the layer $l_{i-1}$, and if $i = j$, we do not multiply by any additional matrices. This process can be represented by the following equation:

$$ \tilde{A}(l_i)=\left\{  \begin{array}{@{}ll@{}}    A(l_i)\tilde{A}(l_{i-1}) & \mbox{if}~ i>j \\    A(l_i) & \mbox{if}~ i=j  \end{array}\right.  $$

    \item \header{GradCAM \cite{selvaraju2017grad}} The main motivation of the Class Activation Mapping (CAM) approach is to obtain a weighted map based on the feature channels in one layer. The derived map can explain the importance of each pixel of the input image based on the intuition that non-related features are filtered in the channels of the deep layer. GradCAM \cite{selvaraju2017grad} proposes to leverage the gradient information by globally averaging the gradient score as the weight. To be more specific, the weight of channel $k$ with respect to class $c$ is calculated using 
    
    $$\alpha_k^c=\frac{1}{Z} \sum_i \sum_j \quad \frac{\partial y^c}{\partial A_{i j}^k},$$
    
    where $\alpha_k^c$ is the attention weight for feature map $k$ of the final convolutional layer, for class $c$. $y^c$ is the output class score for class $c$, and $A_{i j}^k$ is the activation at location $(i,j)$ in feature map $k$. The summation over $i$ and $j$ indicates that the gradient is computed over all locations in the feature map. The normalization factor $Z$ is the sum of all the attention weights for the feature maps in the final convolutional layer.
    \item \header{LRP \cite{bach2015pixel}} Layer-wise Relevance Propagation (LRP) method propagates relevance from the predicated score in the final layer to the input image based on the Deep Taylor Decomposition (DTD) \cite{montavon2017explaining} framework. Specifically, to compute the relevance of each neuron in the network, we iteratively perform backward propagation using the following equation: $$R_j=\sum_k \frac{u_j w_{j k}}{\sum_{j} a_j w_{j k}+\epsilon\cdot \text{sign}(a_j w_{j k})} R_k,$$where $R_j$ and $R_k$ are the relevance scores of neurons $j$ and $k$, respectively, in consecutive layers of the network. $a$ represents the activation of neuron $j$, $w_{j,k}$ is the weight between neurons $j$ and $k$, and $\epsilon$ is a small constant used as a stabilizer to prevent numerical degeneration. For more information on this technique, please see the original paper.
    \item \header{VTA \cite{chefer2021transformer}} Vanilla Trans. Att. (VTA) uses a LRP-based relevance score to evaluate the importance of each attention head in each layer. These scores are then integrated throughout the attention graph by combining relevancy and gradient information in an iterative process that eliminates negative values.
\end{enumerate}

% paper, we leverage Trans. Att. \cite{chefer2021transformer} as our explanation tool, which is a state-of-the-art method for generating class-aware interpretable attention maps. We also include six baseline methods in our experiments for comparison, including Raw Attention \cite{vaswani2017attention}, Rollout \cite{abnar2020quantifying}, GradCAM \cite{selvaraju2017grad}, LRP \cite{binder2016layer}, and Vanilla Trans. Att. (VTA) \cite{chefer2021transformer}.

\section{Implementation details}
\header{Diffusion Denoiser Implementation}
For COCO and CitySpace, we trained the diffusion models from scratch following \cite{ho2020denoising}. For ImageNet, we leverage the pre-trained diffusion model released in the \texttt{guided-diffusion}. Spcificly, the \texttt{256x256\_diffusion\_uncond} is used as a denoiser. To resolve the size mismatch, we resize the images each time of their inputs and outputs from the diffusion model. The diffusion model we adopted uses a linear noise schedule with $\beta_1=0.0001$ and $\beta_N=0.02$. The sampling steps $N$ are set to 1000. We clip the optimal $t^*$ when it falls outside the range of $[0, N-1]$. 

\header{Max-fuse with lowest pixels drop}
After obtaining explanation maps with a number of sampled noisy images, instead of fusing these maps with \textit{mean} operation, we leverage the approach of max fusing with the lowest pixels drop following \cite{jacobgil}. Specifically, we drop the lowest $10\%$ unimportant pixels for each map and apply element-wise maximum on the set of modified maps. The element of the final map is re-scaled to $[0, 1]$ using min-max normalization. 
% \header{Data Augmentation} 

% \header{Baseline Implementation Details}

\header{Model Training Details}
We use the pre-trained backbones in the \texttt{timm} library for feature extractor of classification and segmentation. As for different ViT, we both leverage the base version with a patch size of 16 and an image size of 224. For the downstream dataset, we then fine-tuned these models using the Adam optimizer with a learning rate of 0.001 for a total of 50 epochs, with a batch size of 128. To prevent overfitting, we implemented early stopping with a criterion of 20 epochs. For data augmentation, we follow the common practice: {Resize(256) $\rightarrow$ CenterCrop(224) $\rightarrow$ ToTensor $\rightarrow$ Normalization}. And the mean and stand deviation of normalization are both $[0.5,0.5,0.5]$.

\noindent {\bf Adversarial Perturbation.}
The perturbation radius is denoted by $\rho_u$ and is set to $8/255$ unless otherwise stated. For CIFAR-10, ImageNet, and COCO, the step size is set to $\rho_u/5$, and the total number of steps is set to $10$. For the Cityscape dataset, the step size was set to $\rho_u/125$, and the total number of steps was set to $250$. 

\section{More Results}
In this section, we provide more results to demonstrate the performance of our methods in terms of both model prediction and explainability. First, we aim to evaluate whether our method will affect utility when no attack is presented. We evaluate this on the ImageNet classification dataset using three different kinds of model architectures. Then we ablate the component proposed in our method to study their individual contributions. 

\subsection{Clean Utility}
As we can see from Table \ref{tab: cleanu}, our method outperforms the Vanilla approach under a relatively small smoothing radius, $\delta=2/255$. This result suggests that our method is able to enhance the classification utility with appropriate $\delta$. However, we also find that, as $\delta$ increases to $\delta=5/255$, there is a slight performance drop. And as the $\delta$ increases to $\delta=10/255$, the testing accuracy drops more, which indicates the necessity of choosing the right $\delta$. Since too large $\delta$ might lead to the smoothed images being overwhelmed with noise, which will lead to lower classification confidence. In a word, the results show that our method can even improve clean utility with appropriate small $\delta$. 

\begin{table}[htbp]
\centering
\begin{tabular}{ccccc} 
\toprule
\multirow{2}{*}{Method} & \multirow{2}{*}{$\delta$} & \multicolumn{3}{c}{Model}                                                                                                 \\ 
\cmidrule(l){3-5}
                        &                           & ViT                                                & DeiT                                               & Swin            \\ 
\midrule
Vanilla                 & -                         & \textcolor[rgb]{0.129,0.145,0.161}{85.22}          & 85.80                                              & 86.40           \\ 
\midrule
\multirow{3}{*}{Ours}   & $2/255$                   & \textcolor[rgb]{0.129,0.145,0.161}{\textbf{86.35}} & \textcolor[rgb]{0.129,0.145,0.161}{\textbf{86.50}} & \textbf{87.20}  \\
                        & $5/255$                   & \textcolor[rgb]{0.129,0.145,0.161}{84.83}          & 84.51                                              & 85.84           \\
                        & $10/255$                  & 79.59                                              & 80.89                                              & 81.25           \\
\bottomrule
\end{tabular}
\caption{The testing accuracy of our method and vanilla approach on ImageNet using three different ViT-based models under no attack. }
\label{tab: cleanu}
\end{table}

% \subsection{More Experiments of Attacks}
% We test more adversarial attacks, including PGD, FGSM and AutoAttack in Tab. \ref{tab:diff-att}. The results show that our FViT performs well in explanation for the perturbed image across different adversarial attacks and the AutoAttack is the strongest one in degrading the explanation faithfulness among considered baselines. Moreover, our FViT still successfully explains the correct region for the given class. We define a faithfulness score. The details are as follows.

% Given the heap map $M_{gt}$ yield on a clean image and perturbed one $M_{p}$ on a poisoned image, we define faithfulness score $S_{Faith}$ as the reciprocal to the average of the absolute difference between the two heat maps: 
% \begin{equation}
% 	S_{Faith} = \frac{N}{\sum |M_{gt} - M_{p}|+\epsilon},
% 	\label{eq:faith}
% \end{equation}
% where $N$ is the pixel number and $\epsilon=0.01$ is a small constant for numerical stability. 

\begin{table}
	\centering 
	\resizebox{\linewidth}{!}{
 \begin{tabular}{ccccc}
		\toprule 
	Input & Clean & PGD & FGSM & AutoAttack \\
	\includegraphics[width=0.15\linewidth]{./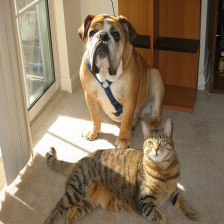}&
	\includegraphics[width=0.15\linewidth]{./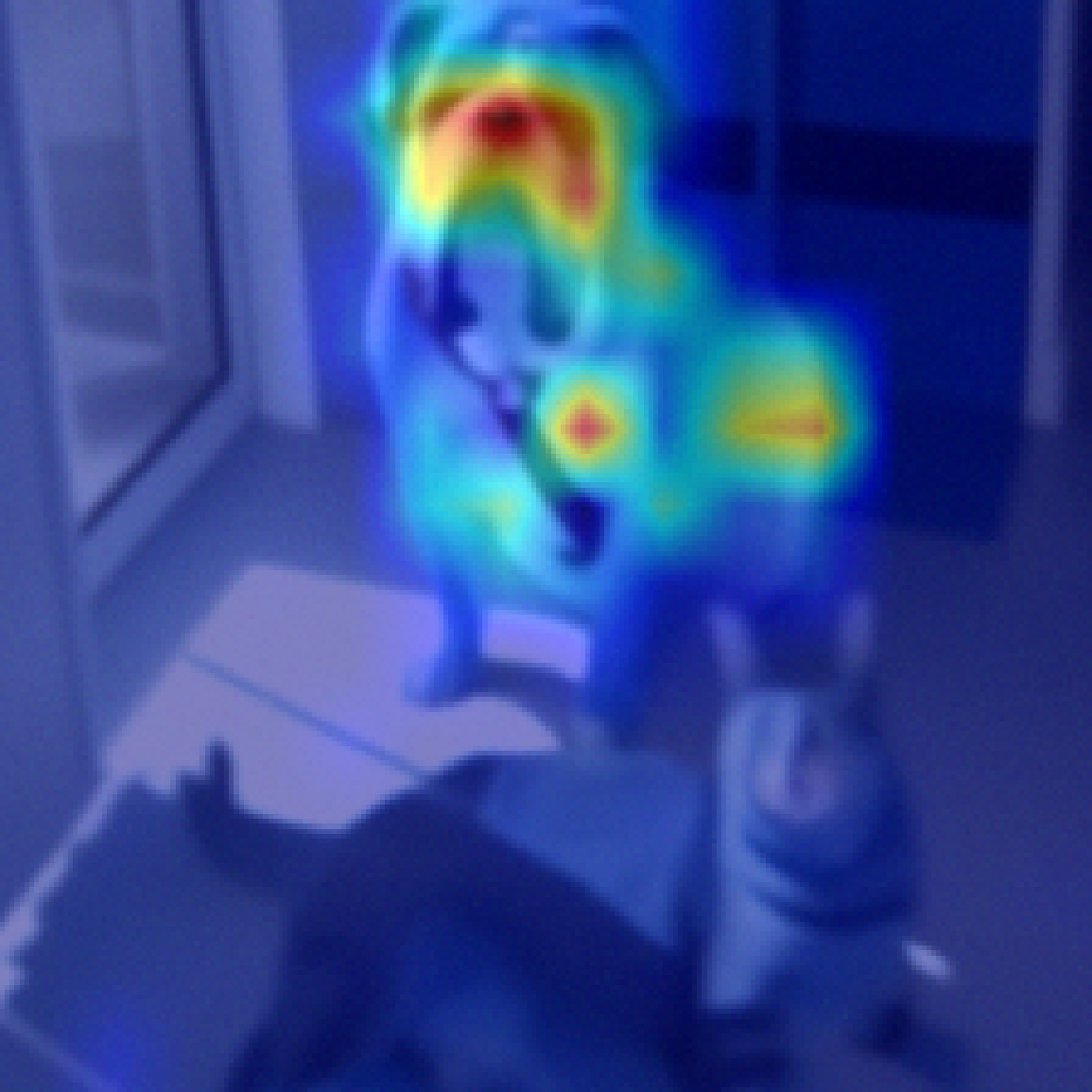}&
	\includegraphics[width=0.15\linewidth]{./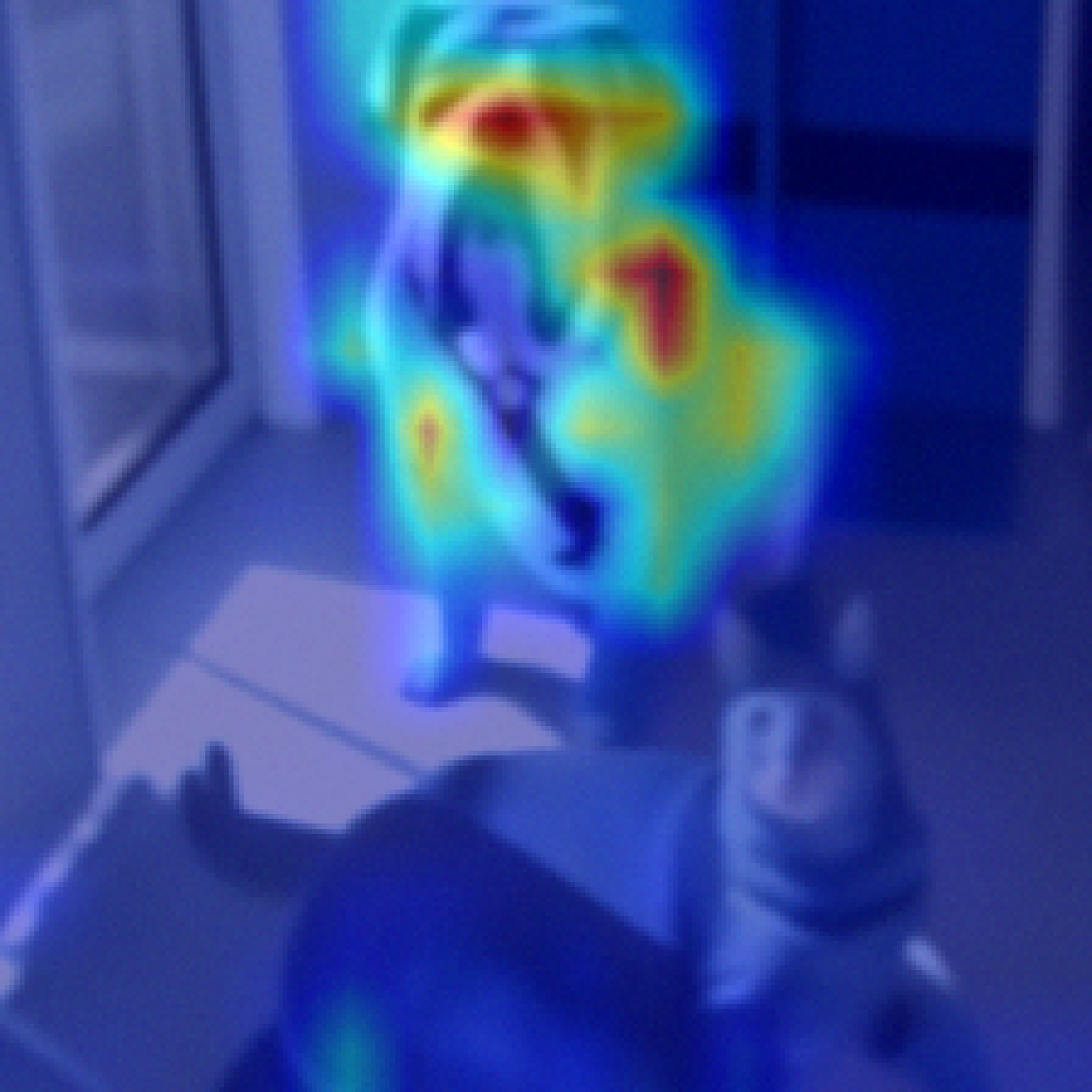}&
	\includegraphics[width=0.15\linewidth]{./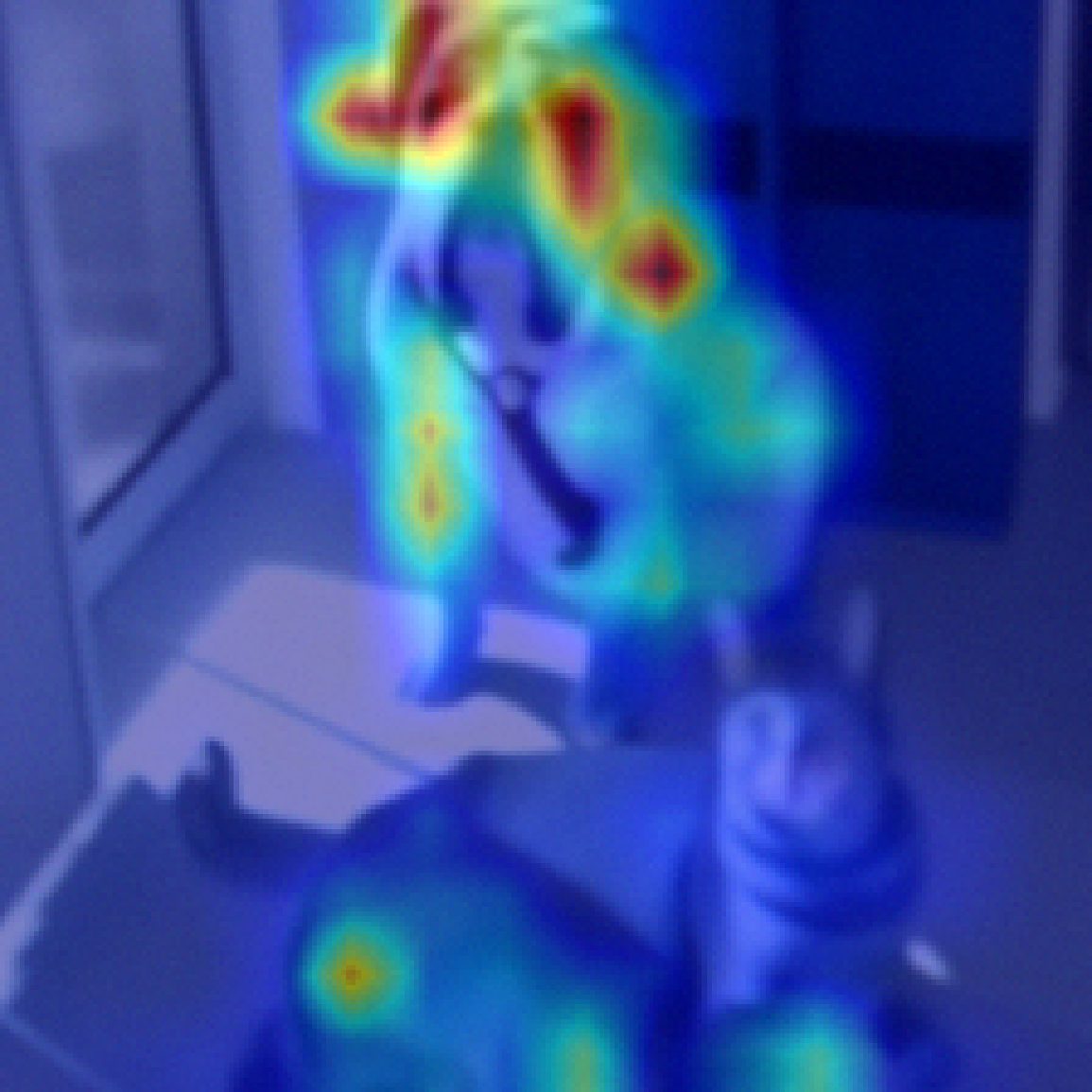}&
	\includegraphics[width=0.15\linewidth]{./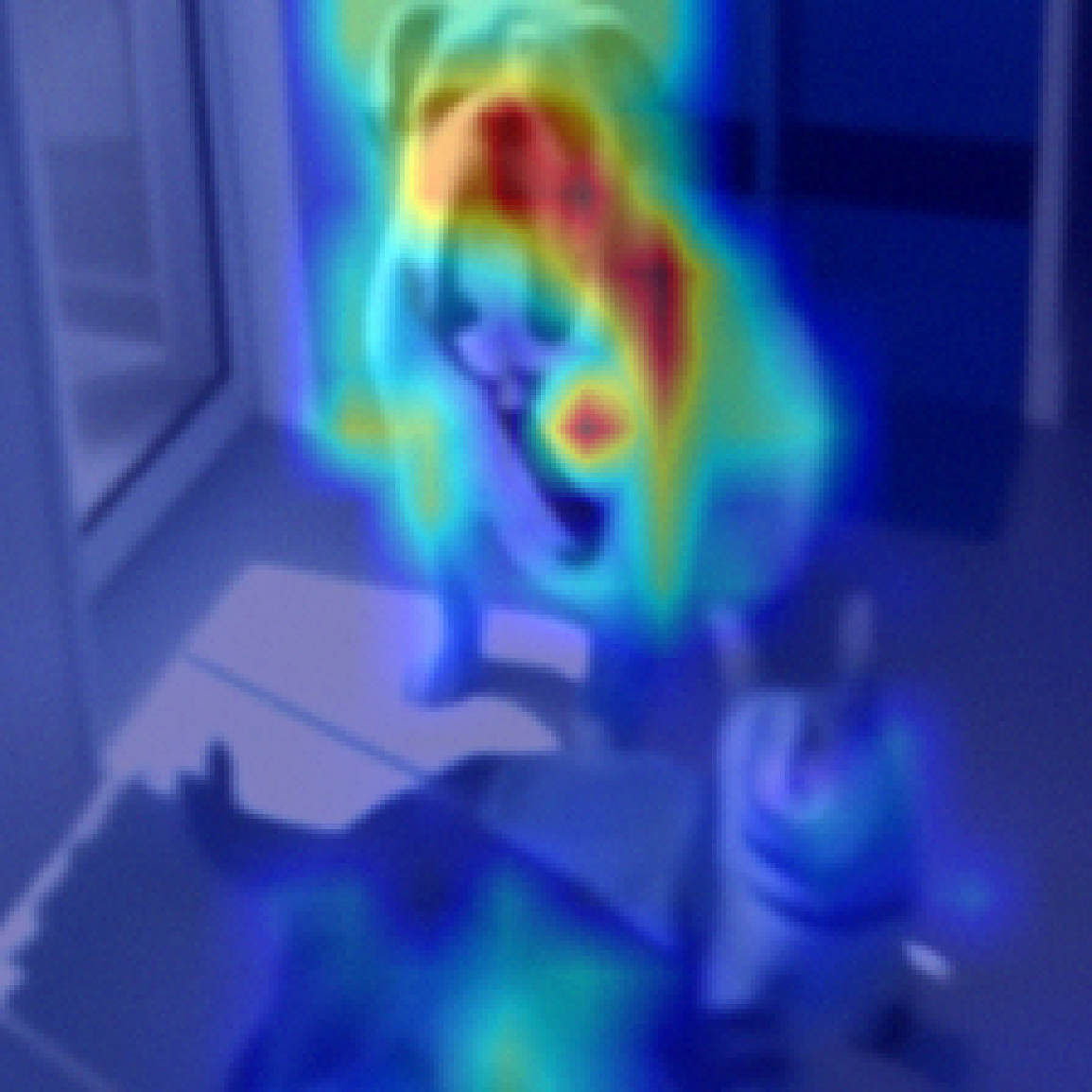}\\
	$S_{Faith}$ & - & 4.60 & 4.45 & 4.22 \\
	\bottomrule
	\end{tabular}
    }
	\caption{Faithfulness score $S_{Faith}$ and visualization results of FViT under additional adversarial attack methods with $\ell_\infty$-norm threat model and $\rho_u = \frac{8}{255}$. $S_{Faith}$ is reciprocal to the average of the absolute difference between the ground-truth heat map and the predicted one. The higher the score, the more faithful the explanation. }
	\label{tab:diff-att}
  \vspace{-10pt} 
\end{table}

\subsection{Results of $l_2$-norm}
\label{l2:case}
Mathematically, with the same noise level, $l_\infty$-norm ball $B_\infty$ is a superset of the $l_2$-norm ball $B_2$. Thus, it will lead to more powerful attacks. Thus, by showing the effectiveness under $l_\infty$-norm threat model, we can also bound the performance of FViT under $l_2$-norm threat model. We additionally present more results in Tab. \ref{tab:diff-norm}, which demonstrates this statement since the faithfulness score yield from $l_2$-norm threat model is consistently higher than that from $l_\infty$-norm threat model. 

\begin{table}
\small
	\centering
	\begin{tabular}{cccccc} 
	\toprule
	Norm type   & $2/255$ & $4/255$ & $6/255$                                  & $8/255$ & $10/255$                                 \\
	\midrule
	$\ell_2$    & 29.16   & 12.63   & 10.56 & 5.62    & 6.30                                     \\
	$\ell_\infty$ & 3.27    & 11.90   & 8.79                                     & 4.74    & 4.70  \\
	\bottomrule
	\end{tabular}
	\caption{
		Faithfulness score of FViT under $\ell_{\infty}$ and $\ell_2$-norm. 
	}
	\label{tab:diff-norm}
\end{table}

\subsection{Ablation Study} As shown in Table \ref{tab: full-result}, it suggests that our method outperforms all other baselines on all three datasets under adversarial attacks with different budgets. In particular, On the ImageNet dataset, the ViT model with our method has the highest pixel accuracy at 64\%, while the DeiT model with our method had the highest mIoU at 46\%. On the Cityscape dataset, the ViT model with Ours had the highest mIoU at 59\%. On the COCO dataset, the ViT model with Ours had the highest pixel accuracy at 74\% and the highest mIoU at 76\%. Moreover, we visualized the ablated version of our method in Figure \ref{fig: abl-vis}. Overall, our method consistently outperforms all other methods, indicating its superiority in accuracy and robustness.

%\input{tab/abla.tex}
%\input{tab/fig4.tex}

%\subsection{Sensitivity Analysis}
%To evaluate the sensitivity of stand deviation $\delta$ and sampling iterations $n$, we conduct adversarial attacks on the ImageNet dataset with different $\delta$ and $n$ for $K$ (20 adopted in this paper) samples. We first conduct testing under $\delta \in \left\{ 4/255,6/255,6/255,8/255,10/255,12/255 \right\} $ and attack radius $R_a\in \left\{ 0,4/255,6/255,6/255,8/255,10/255 \right\} $, the results in Figure \ref{fig:ver-r} suggest that, for $\delta=4/255$ and $\delta=6/255$, compared to baseline (i.e., without any processing of image), our method is able to certify the testing accuracy from dropping significantly as the attack radius increase. However, we find that larger $\delta$ does not actually guarantee higher test accuracy, and in turn leads to a significant utility decrease in the opposite ($\delta=8/255~12/255$). These results suggest that our method is sensitive to the selection of $\delta$. Nevertheless, across different $delta$, our method outperforms the baseline in terms of utility. Furthermore, we tried sampling iterations $n\in\{5,10,15,20,25\}$, and the results in Figure \ref{fig:ver-n} show that the performance of our method is basically not affected by the sampling iterations $\n$. 

%\input{tab/fig5.tex}
\begin{figure*}[htbp]
    \setlength{\tabcolsep}{1pt} % Default value: 6pt
    \renewcommand{\arraystretch}{1} % Default value: 1
    \begin{center}
    \begin{tabular*}{\linewidth}
    % {cccccc}
{@{\extracolsep{\fill}}cccccccc}
    Input & Raw Attention & Rollout & GradCAM & LRP & VTA  & Ours \\
    \raisebox{13mm}{\multirow{2}{*}{\makecell*[c]{Cat: clean$\rightarrow$\\\includegraphics[width=0.15\linewidth]{exp/catdog.png}\\ 
    % Cat $\rightarrow$
    Cat: poisoned$\rightarrow$\\{\scriptsize $7/255$}
    }}
    }
    % \multirow{2}{*}{\makecell{\includegraphics[width=0.13\textwidth]{figures/catdog.png}}
     &
    \includegraphics[width=0.13\linewidth]{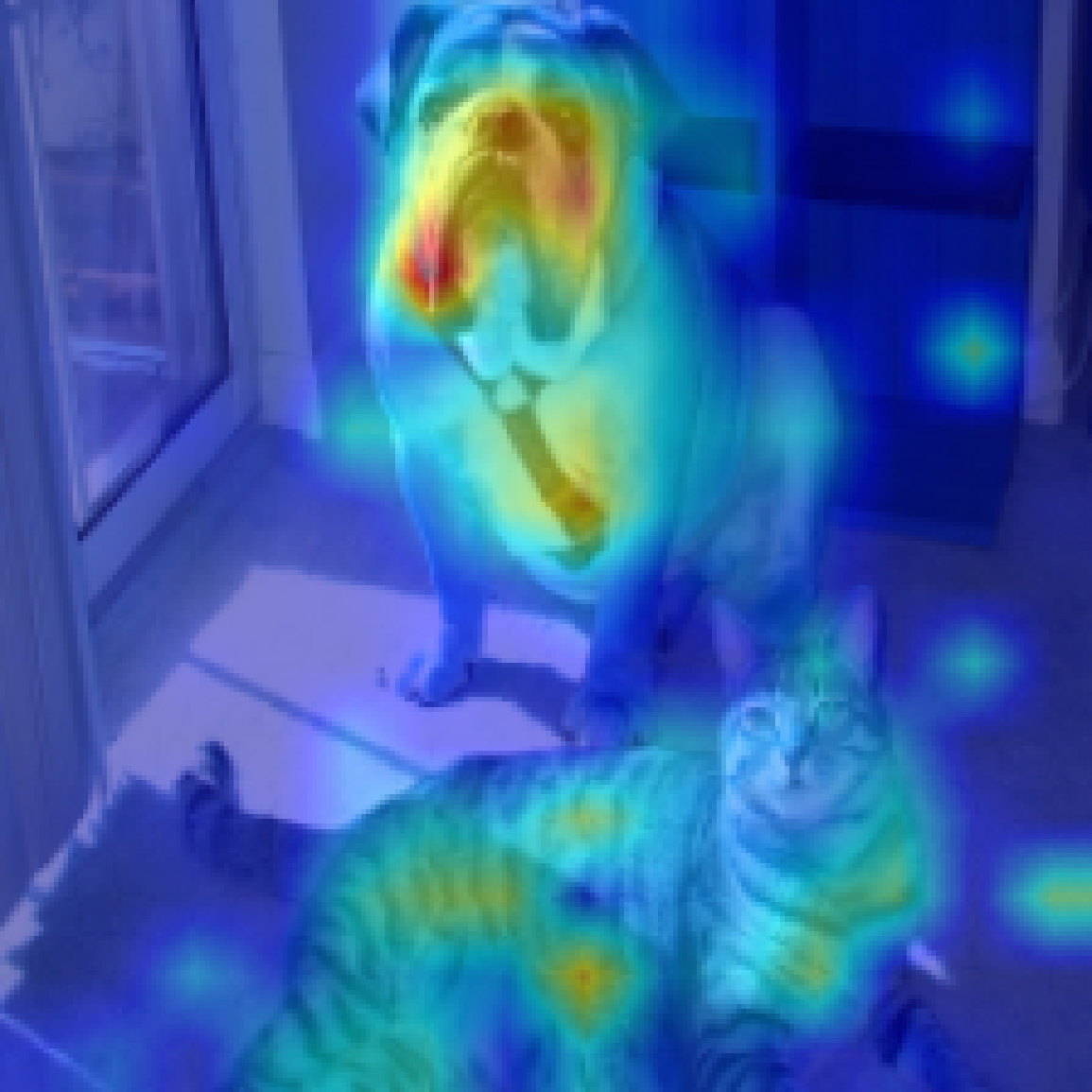} &
    \includegraphics[width=0.13\linewidth]{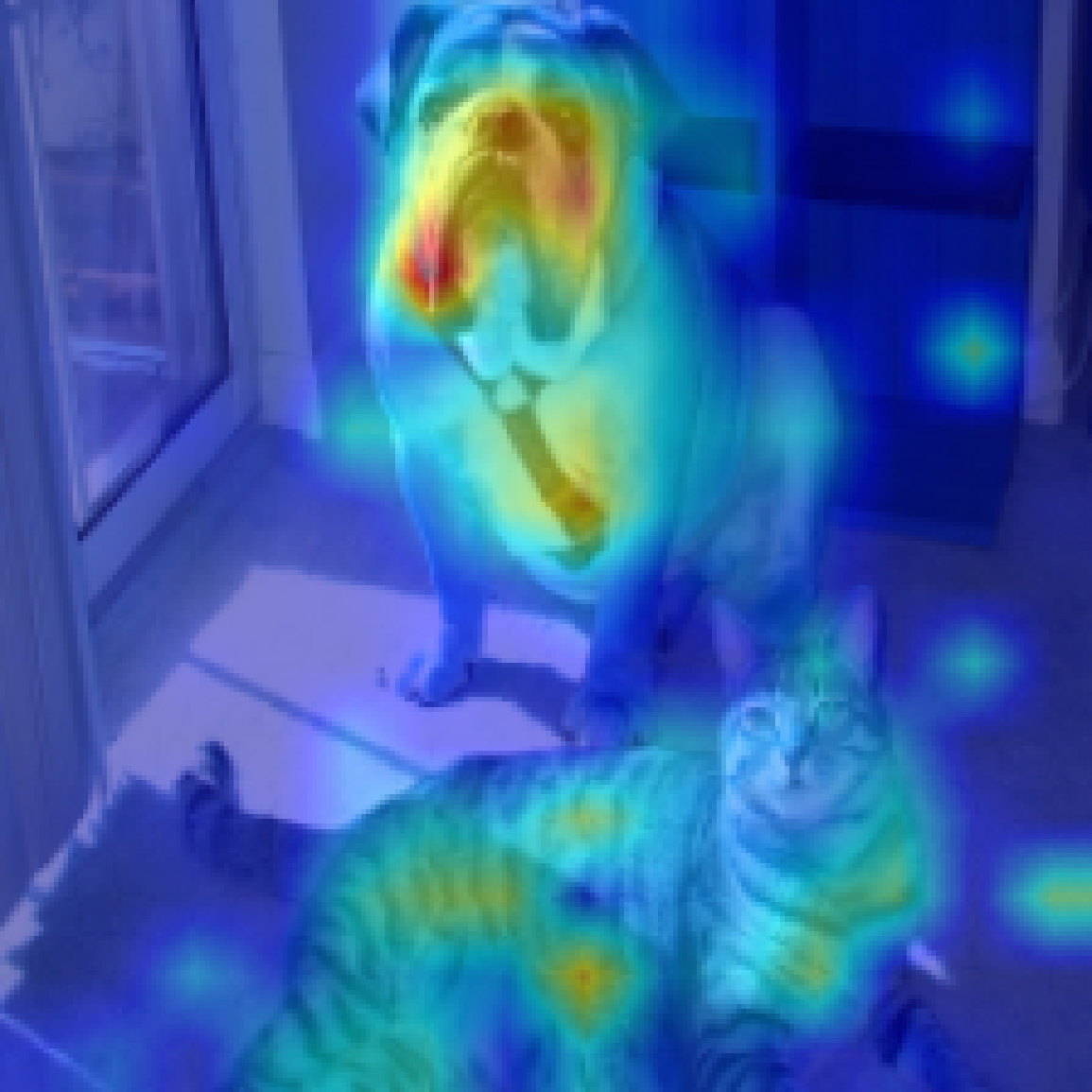} &
    \includegraphics[width=0.13\linewidth]{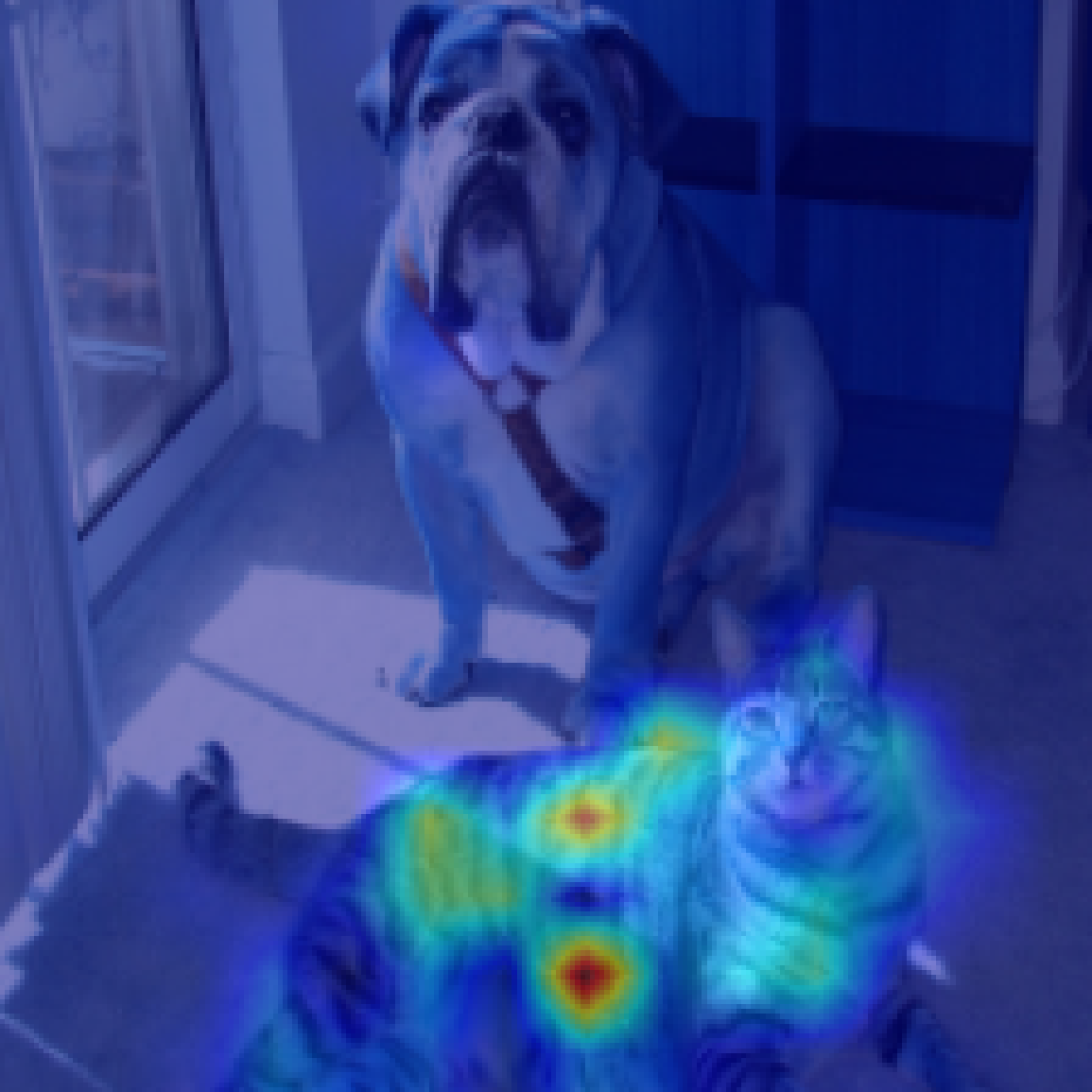} &
    \includegraphics[width=0.13\linewidth]{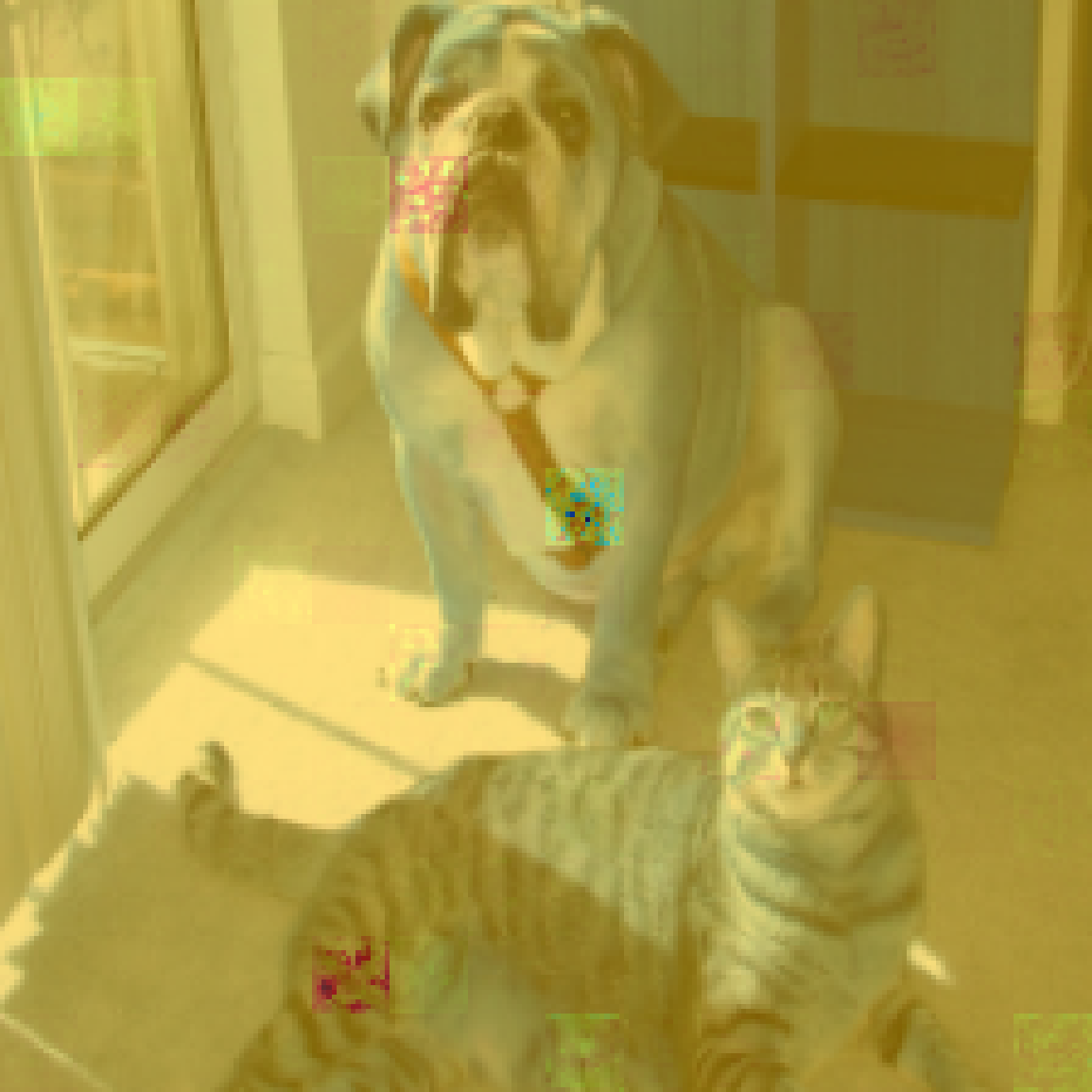} &
    \includegraphics[width=0.13\linewidth]{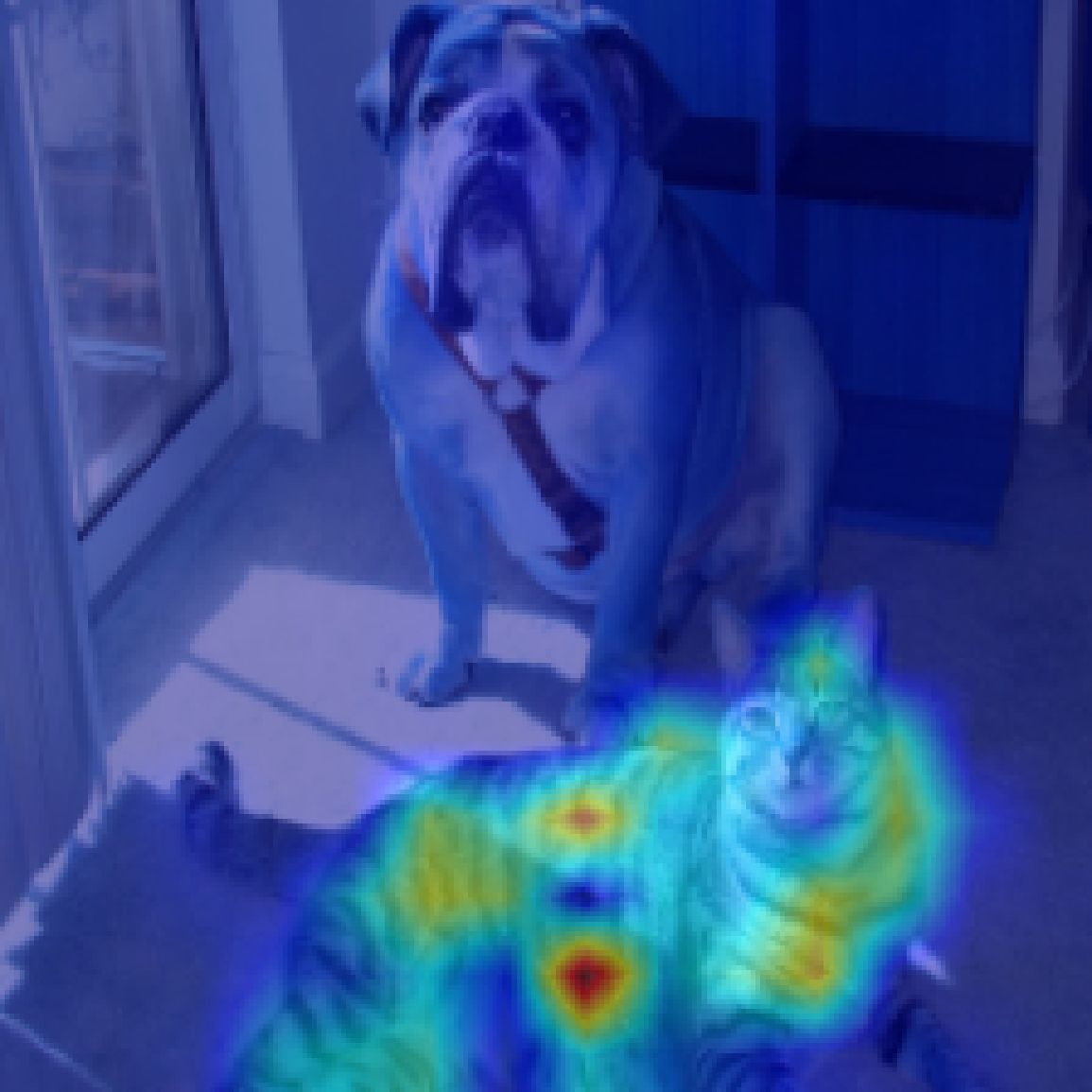} &
    \includegraphics[width=0.13\linewidth]{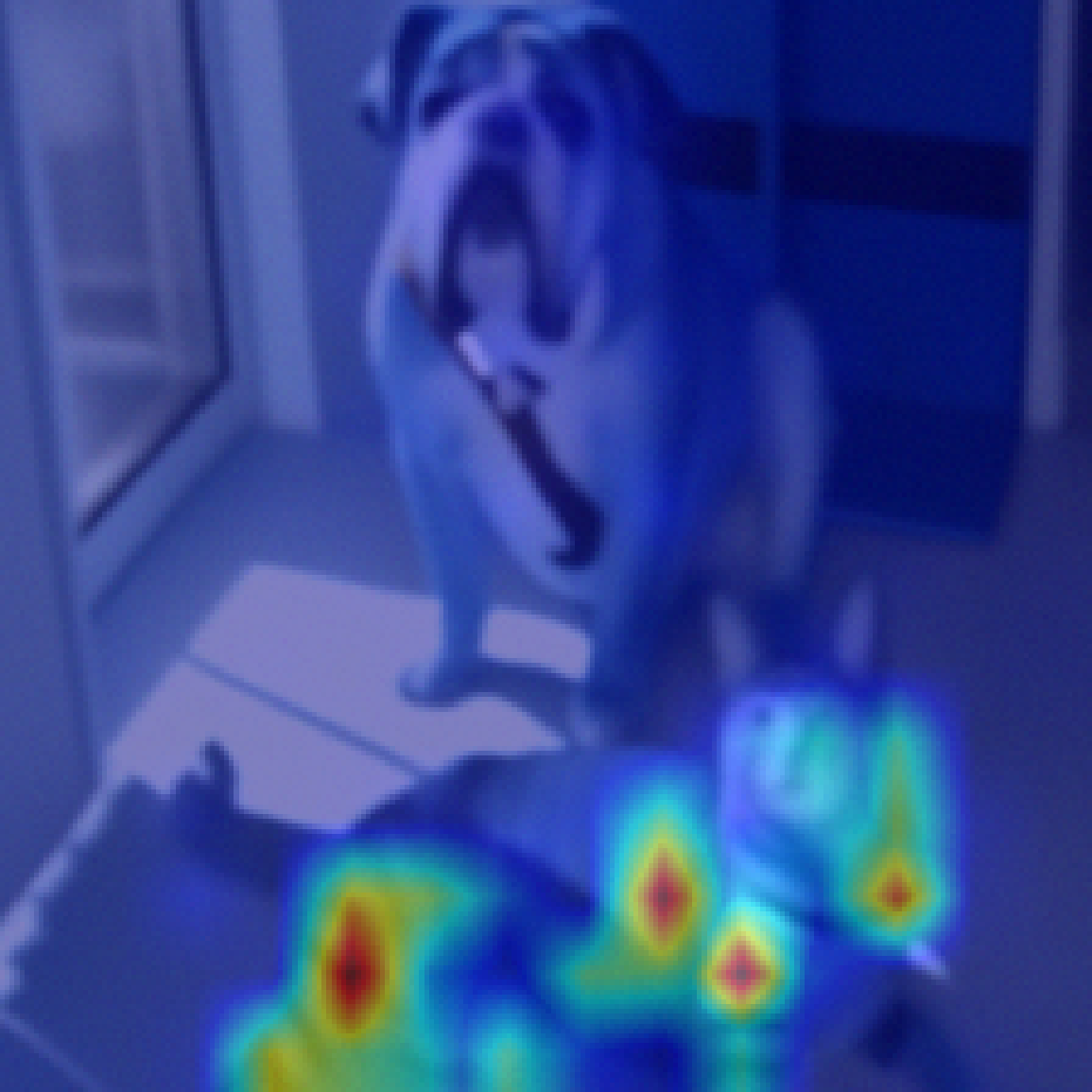}
    \\
    &\includegraphics[width=0.13\linewidth]{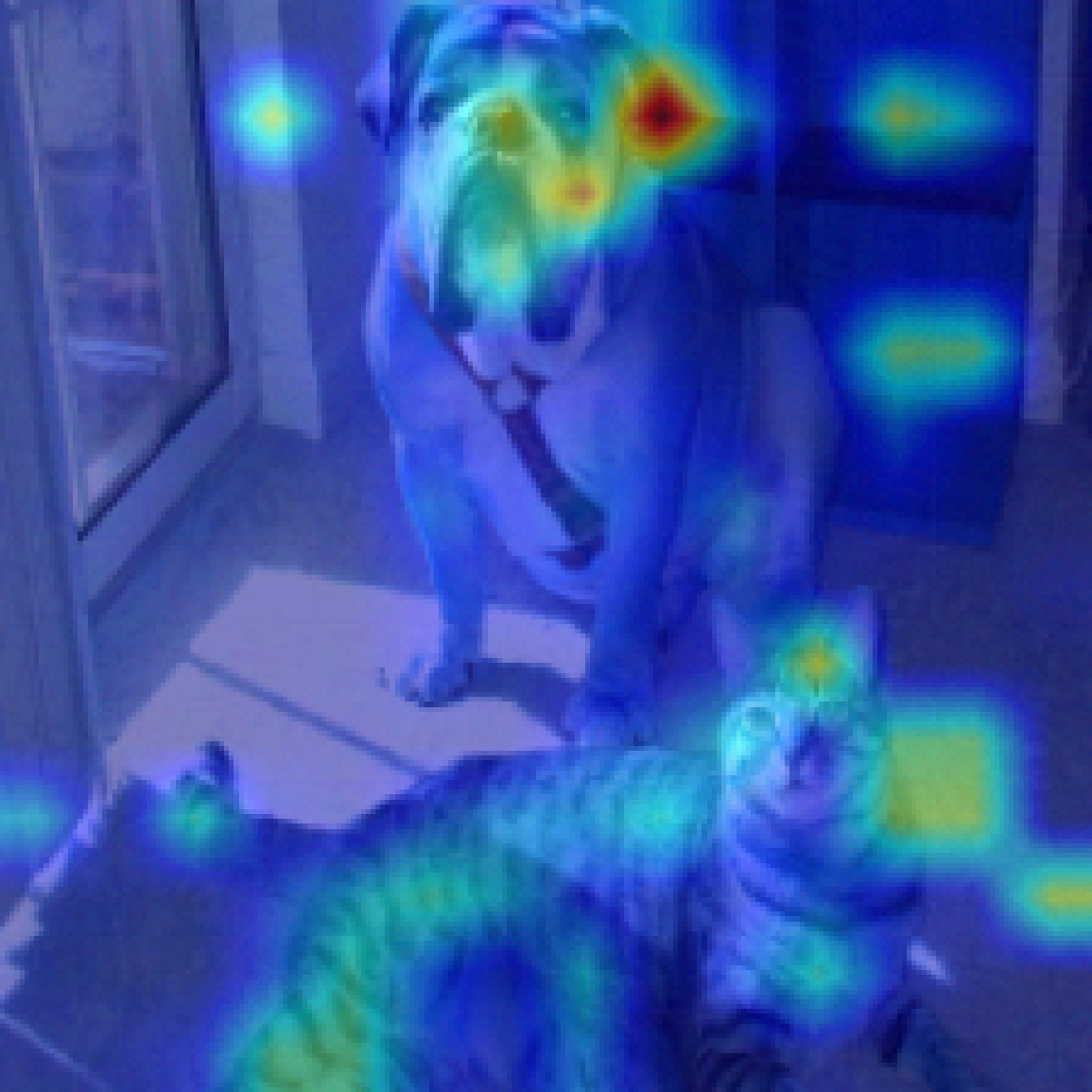} &
    \includegraphics[width=0.13\linewidth]{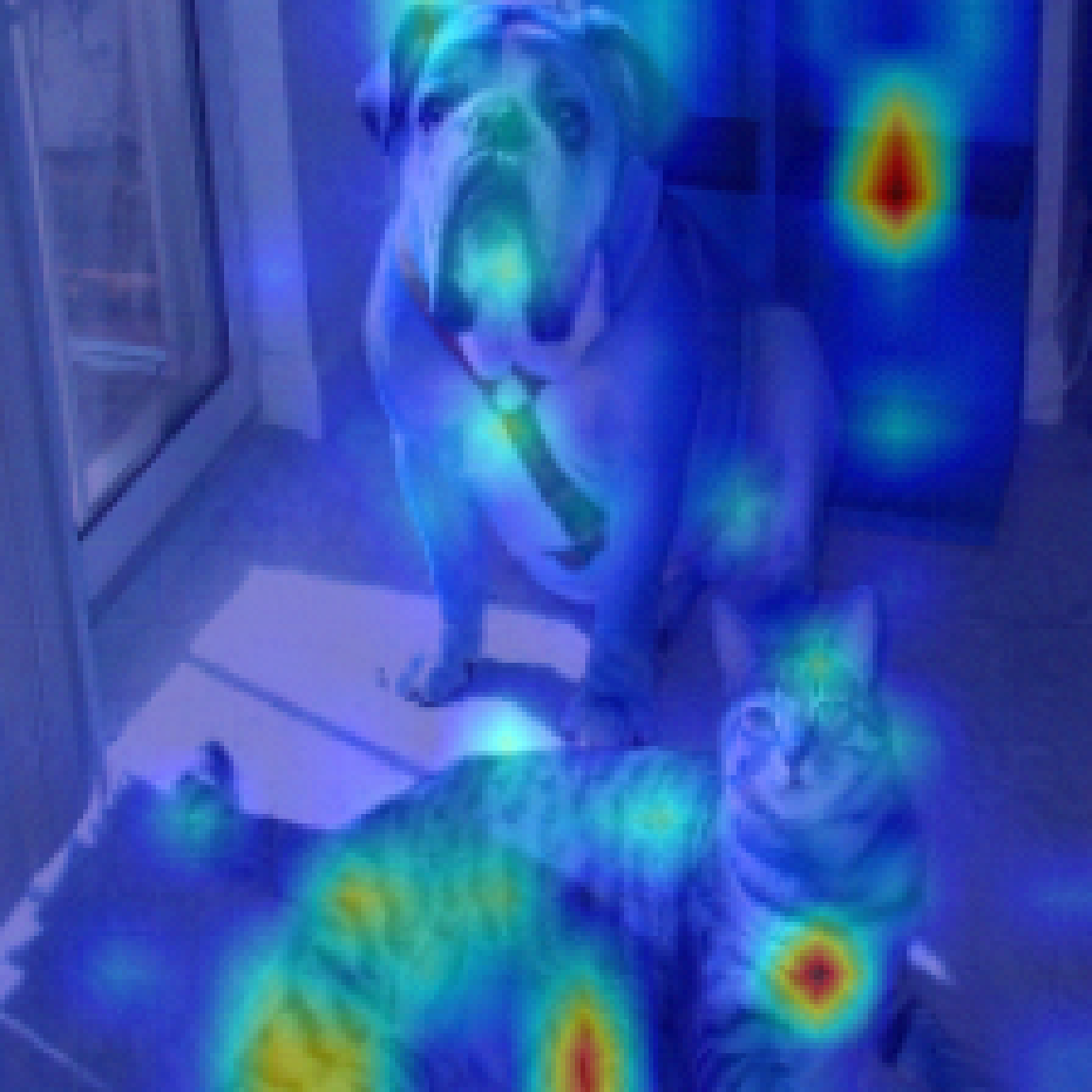} &
    \includegraphics[width=0.13\linewidth]{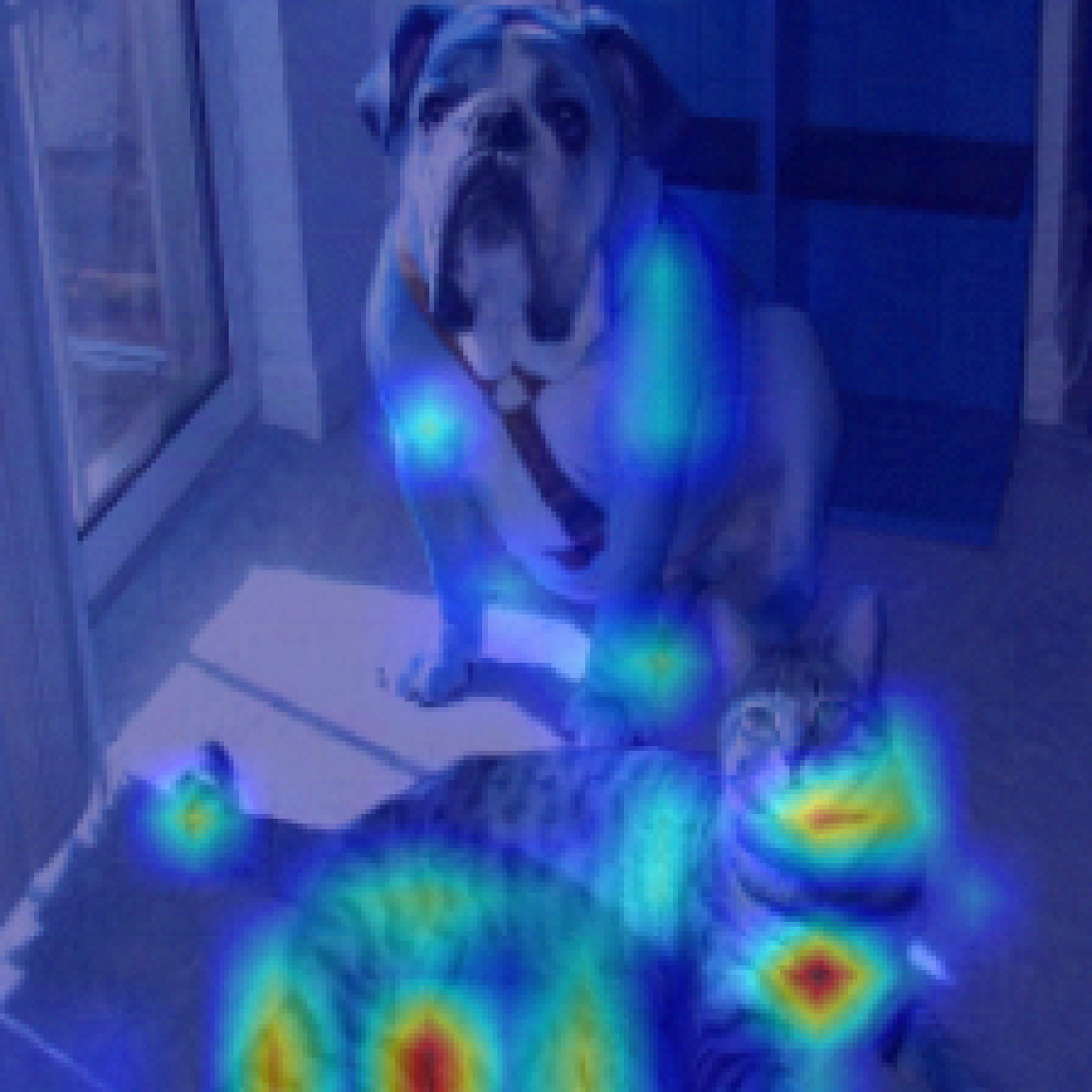} &
    \includegraphics[width=0.13\linewidth]{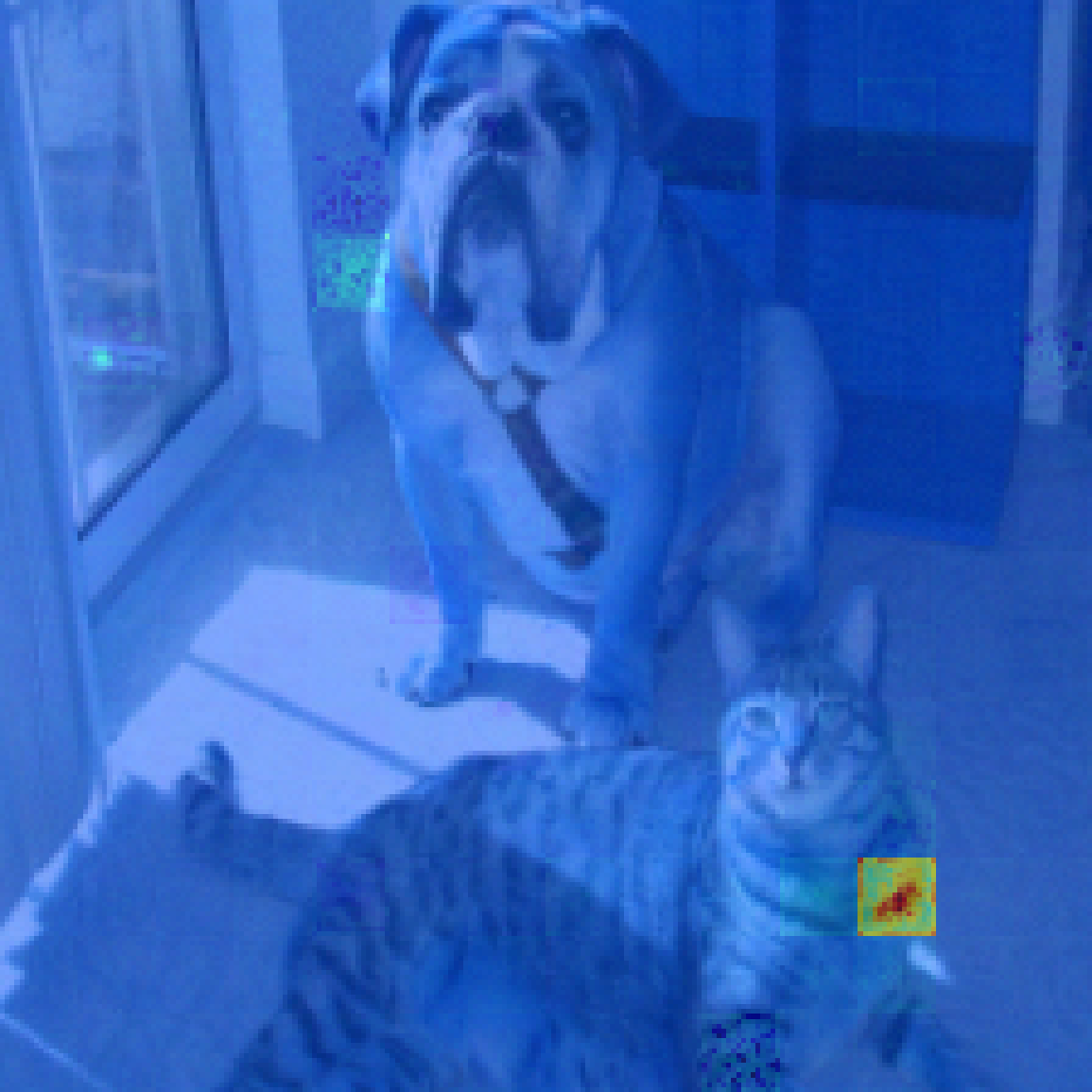} &
    \includegraphics[width=0.13\linewidth]{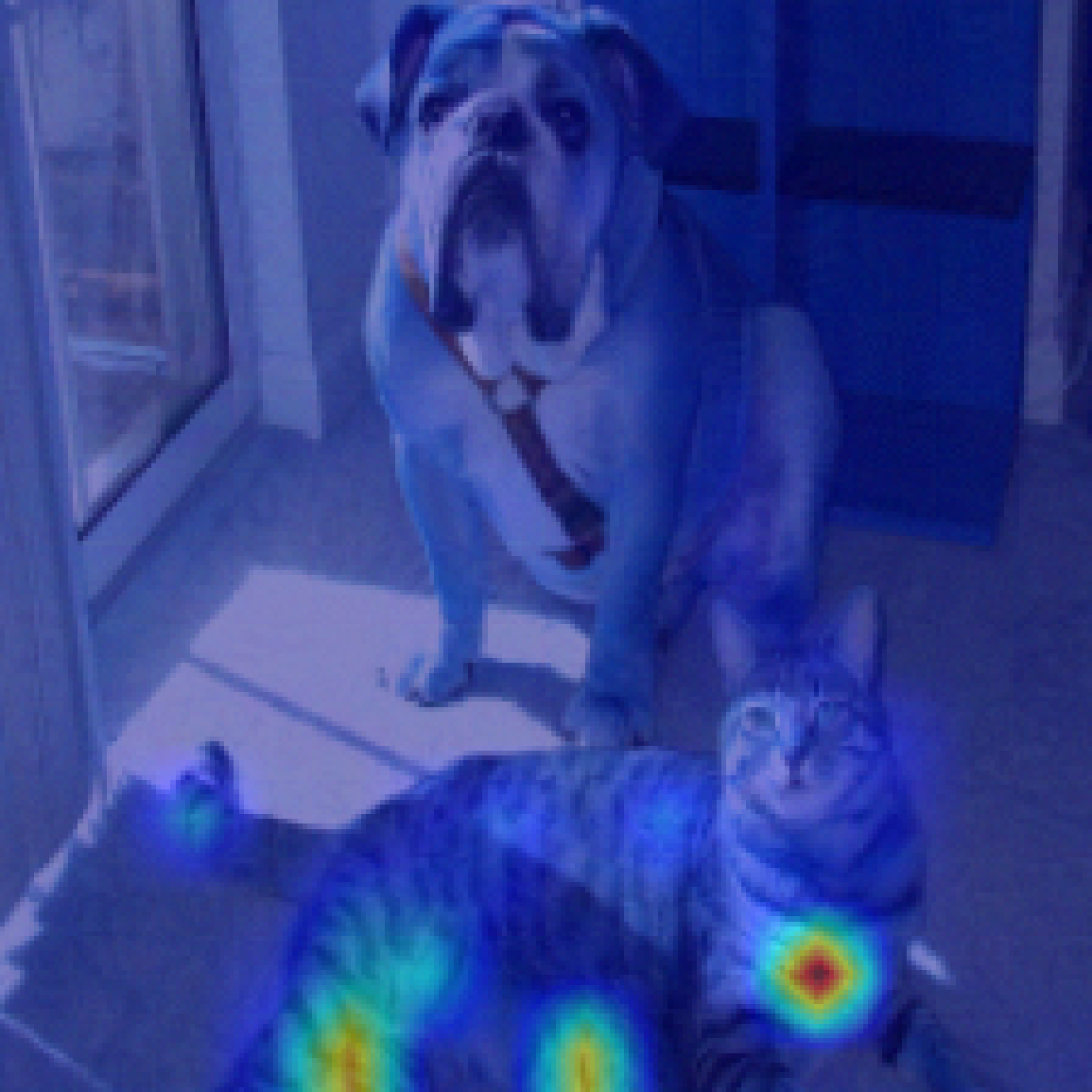} &
    \includegraphics[width=0.13\linewidth]{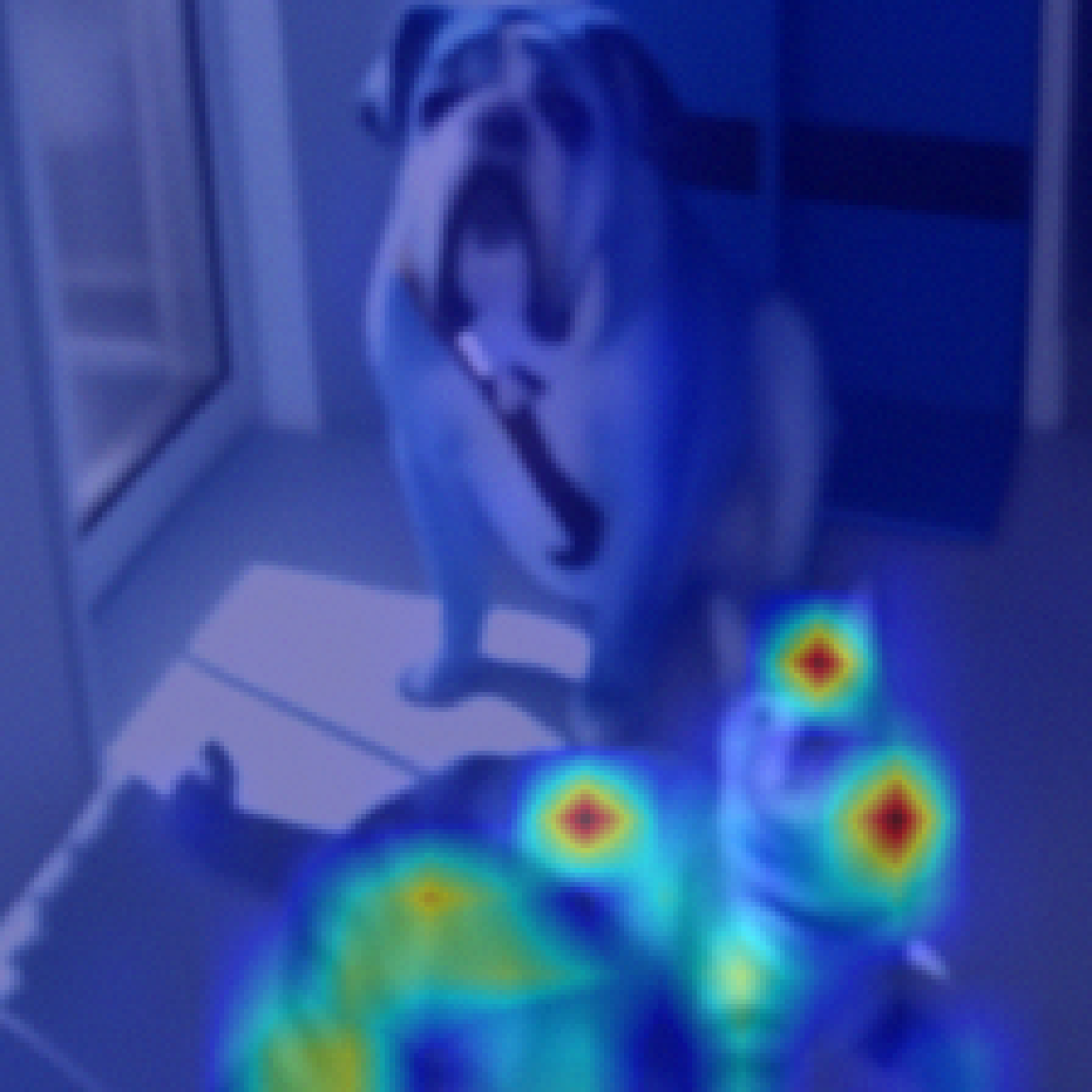}
    \\
        \raisebox{13mm}{\multirow{2}{*}{\makecell*[c]{Dog: clean$\rightarrow$\\\includegraphics[width=0.15\linewidth]{exp/catdog.png}\\ 
    Dog: poisoned$\rightarrow$\\{\scriptsize $7/255$}
    }}
    }
       &
    \includegraphics[width=0.13\linewidth]{exp/CatDog/CatDog-raw_attn-dog-clean.png} &
    \includegraphics[width=0.13\linewidth]{exp/CatDog/CatDog-rollout-dog-clean.png} &
    \includegraphics[width=0.13\linewidth]{exp/CatDog/CatDog-attn_gradcam-dog-clean.png} &
    \includegraphics[width=0.13\linewidth]{exp/CatDog/CatDog-full-LRP-dog-clean.png} &
    \includegraphics[width=0.13\linewidth]{exp/CatDog/CatDog-vta-dog-clean.png} &
    \includegraphics[width=0.13\linewidth]{exp/CatDog/CatDog-ours-dog-clean.png}
    \\
    &\includegraphics[width=0.13\linewidth]{exp/CatDog/CatDog-raw_attn-dog-perturbed-7.png} &
    \includegraphics[width=0.13\linewidth]{exp/CatDog/CatDog-rollout-dog-perturbed-7.png} &
    \includegraphics[width=0.13\linewidth]{exp/CatDog/CatDog-attn_gradcam-dog-perturbed-7.png} &
    \includegraphics[width=0.13\linewidth]{exp/CatDog/CatDog-full-LRP-dog-perturbed-7.png} &
    \includegraphics[width=0.13\linewidth]{exp/CatDog/CatDog-vta-dog-perturbed-7.png} &
    \includegraphics[width=0.13\linewidth]{exp/CatDog/CatDog-ours-dog-perturbed-7.png}
    \end{tabular*}
    \caption{
    Class-specific explanation heat map visualizations under adversarial corruption. For each image, we present results for two different classes. Other baselines either give inconsistent interpretations or show wrong focus class regions under adversarial perturbations. While our method gives a consistent interpretation map and is robust against adversarial attacks.
    }
    \label{fig: full_example}
    \end{center}
\vspace{-0.2in}
\end{figure*}
\begin{table*}[bthp]
\centering
\resizebox{\linewidth}{!}{
\begin{tabular}{ccccccccccccccc} 
\toprule
\multirow{2}{*}{\begin{tabular}[c]{@{}c@{}}Noise\\Radius\end{tabular}} & \multirow{2}{*}{Model} & \multirow{2}{*}{Method} & \multicolumn{4}{c}{ImageNet}                  & \multicolumn{4}{c}{Cityscape}                 & \multicolumn{4}{c}{COCO}                       \\ 
\cmidrule(lr){4-7}\cmidrule(lr){8-11}\cmidrule(lr){12-15}
&                        &                         & Cla. Acc. (\%) & Pix. Acc. (\%) & mIoU & mAP  & Cla. Acc. (\%) & Pix. Acc. (\%) & mIoU & mAP  & Cla. Acc. (\%) & Pix. Acc. (\%) & mIoU & mAP   \\ 
\midrule 
\multirow{18}{*}{$\rho_u=9/255$}                            & \multirow{6}{*}{VIT}   & Raw Attention           & 0.6            & 0.5            & 0.42 & 0.77 & 0.66           & 0.62           & 0.46 & 0.75 & 0.78           & 0.65           & 0.63 & 0.81  \\
&                        & Rollout                 & 0.72           & 0.56           & 0.42 & 0.76 & 0.79           & 0.55           & 0.51 & 0.76 & 0.82           & 0.64           & 0.53 & 0.85  \\
&                        & GradCAM                 & 0.64           & 0.49           & 0.5  & 0.78 & 0.79           & 0.64           & 0.5  & 0.75 & 0.74           & 0.78           & 0.67 & 0.91  \\
 &                        & LRP                     & 0.7            & 0.54           & 0.43 & 0.78 & 0.68           & 0.68           & 0.52 & 0.86 & 0.82           & 0.69           & 0.7  & 0.81  \\
 &                        & VTA                     & 0.64           & 0.56           & 0.43 & 0.77 & 0.7            & 0.74           & 0.58 & 0.89 & 0.8            & 0.82           & 0.67 & 0.88  \\
 \rowcolor{pink!20}
 &                        & Ours                    & 0.69           & 0.64           & 0.48 & 0.73 & 0.74           & 0.71           & 0.59 & 0.9  & 0.88           & 0.74           & 0.76 & 0.97  \\ 
\cmidrule{2-15} 
& \multirow{6}{*}{DeiT}  & Raw Attention           & 0.63           & 0.6            & 0.42 & 0.68 & 0.75           & 0.56           & 0.49 & 0.71 & 0.83           & 0.64           & 0.57 & 0.79  \\
&                        & Rollout                 & 0.7            & 0.57           & 0.38 & 0.66 & 0.69           & 0.64           & 0.53 & 0.76 & 0.82           & 0.76           & 0.57 & 0.76  \\
&                        & GradCAM                 & 0.67           & 0.52           & 0.46 & 0.81 & 0.81           & 0.66           & 0.55 & 0.83 & 0.76           & 0.71           & 0.57 & 0.91  \\
&                        & LRP                     & 0.63           & 0.63           & 0.46 & 0.78 & 0.81           & 0.66           & 0.63 & 0.79 & 0.78           & 0.73           & 0.66 & 0.86  \\
 &                        & VTA                     & 0.77           & 0.68           & 0.54 & 0.72 & 0.71           & 0.71           & 0.58 & 0.79 & 0.82           & 0.71           & 0.61 & 0.86  \\
 \rowcolor{pink!20}
 &                        & Ours                    & 0.8            & 0.64           & 0.5  & 0.81 & 0.83           & 0.72           & 0.67 & 0.83 & 0.81           & 0.72           & 0.72 & 0.84  \\ 
\cmidrule{2-15} 
 & \multirow{6}{*}{Swin}  & Raw Attention           & 0.65           & 0.49           & 0.41 & 0.76 & 0.69           & 0.67           & 0.44 & 0.79 & 0.85           & 0.72           & 0.58 & 0.82  \\
 &                        & Rollout                 & 0.62           & 0.56           & 0.4  & 0.81 & 0.72           & 0.56           & 0.52 & 0.84 & 0.76           & 0.69           & 0.62 & 0.81  \\
 &                        & GradCAM                 & 0.77           & 0.58           & 0.4  & 0.8  & 0.68           & 0.68           & 0.51 & 0.88 & 0.87           & 0.76           & 0.65 & 0.86  \\
 &                        & LRP                     & 0.67           & 0.58           & 0.48 & 0.74 & 0.72           & 0.63           & 0.52 & 0.84 & 0.84           & 0.79           & 0.66 & 0.82  \\
 &                        & VTA                     & 0.7            & 0.59           & 0.56 & 0.83 & 0.85           & 0.65           & 0.62 & 0.88 & 0.87           & 0.74           & 0.72 & 0.83  \\
 \rowcolor{pink!20}
 &                        & Ours                    & 0.78           & 0.63           & 0.49 & 0.9  & 0.74           & 0.78           & 0.56 & 0.81 & 0.86           & 0.74           & 0.75 & 0.93  \\ 
\midrule 
\\
\midrule 
\multirow{18}{*}{$\rho_u=10/255$}                            & \multirow{6}{*}{VIT}   & Raw Attention           & 0.57           & 0.38           & 0.34 & 0.53 & 0.57           & 0.46           & 0.38 & 0.67 & 0.74           & 0.62           & 0.45 & 0.66  \\
 &                        & Rollout                 & 0.58           & 0.43           & 0.38 & 0.67 & 0.6            & 0.55           & 0.36 & 0.7  & 0.66           & 0.6            & 0.48 & 0.69  \\
 &                        & GradCAM                 & 0.52           & 0.39           & 0.3  & 0.6  & 0.57           & 0.57           & 0.46 & 0.63 & 0.65           & 0.63           & 0.45 & 0.8   \\
  &                        & LRP                     & 0.62           & 0.5            & 0.38 & 0.6  & 0.66           & 0.57           & 0.51 & 0.77 & 0.65           & 0.57           & 0.55 & 0.71  \\
 &                        & VTA                     & 0.52           & 0.5            & 0.34 & 0.68 & 0.63           & 0.6            & 0.46 & 0.75 & 0.7            & 0.69           & 0.57 & 0.75  \\
 \rowcolor{pink!20}
&                        & Ours                    & 0.59           & 0.54           & 0.36 & 0.78 & 0.74           & 0.59           & 0.45 & 0.72 & 0.76           & 0.73           & 0.63 & 0.74  \\ 
\cmidrule{2-15} 
 & \multirow{6}{*}{DeiT}  & Raw Attention           & 0.53           & 0.46           & 0.25 & 0.62 & 0.55           & 0.5            & 0.39 & 0.6  & 0.68           & 0.63           & 0.44 & 0.75  \\
 &                        & Rollout                 & 0.6            & 0.42           & 0.33 & 0.66 & 0.67           & 0.5            & 0.44 & 0.63 & 0.73           & 0.55           & 0.47 & 0.72  \\
 &                        & GradCAM                 & 0.64           & 0.52           & 0.37 & 0.69 & 0.63           & 0.54           & 0.46 & 0.75 & 0.67           & 0.58           & 0.54 & 0.72  \\
 &                        & LRP                     & 0.65           & 0.55           & 0.4  & 0.73 & 0.58           & 0.51           & 0.53 & 0.66 & 0.74           & 0.65           & 0.52 & 0.74  \\
 &                        & VTA                     & 0.58           & 0.54           & 0.42 & 0.66 & 0.64           & 0.61           & 0.44 & 0.68 & 0.69           & 0.59           & 0.5  & 0.75  \\
\rowcolor{pink!20}
&                       & Ours                    & 0.65           & 0.48           & 0.47 & 0.73 & 0.69           & 0.66           & 0.49 & 0.85 & 0.76           & 0.72           & 0.51 & 0.85  \\ 
\cmidrule{2-15} 
 & \multirow{6}{*}{Swin}  & Raw Attention           & 0.6            & 0.41           & 0.3  & 0.55 & 0.7            & 0.54           & 0.37 & 0.64 & 0.63           & 0.65           & 0.47 & 0.72  \\
  &                        & Rollout                 & 0.57           & 0.43           & 0.36 & 0.66 & 0.63           & 0.51           & 0.47 & 0.64 & 0.64           & 0.66           & 0.49 & 0.7   \\
 &                        & GradCAM                 & 0.6            & 0.47           & 0.31 & 0.58 & 0.58           & 0.54           & 0.5  & 0.67 & 0.7            & 0.68           & 0.48 & 0.76  \\
 &                        & LRP                     & 0.53           & 0.53           & 0.32 & 0.63 & 0.67           & 0.6            & 0.52 & 0.69 & 0.72           & 0.64           & 0.59 & 0.84  \\
 &                        & VTA                     & 0.59           & 0.6            & 0.48 & 0.68 & 0.75           & 0.54           & 0.51 & 0.73 & 0.71           & 0.61           & 0.63 & 0.74  \\
\rowcolor{pink!20}
&                        & Ours                    & 0.63           & 0.54           & 0.41 & 0.77 & 0.78           & 0.63           & 0.49 & 0.79 & 0.7            & 0.65           & 0.62 & 0.83  \\
\bottomrule
\end{tabular}
}
\caption{Comparison of different explanation methods on multiple datasets using VIT, DeiT, and Swin models}
\label{tab: full-result}
\end{table*}

\begin{figure*}[htbp]
    \setlength{\tabcolsep}{1pt} % Default value: 6pt
    \renewcommand{\arraystretch}{1} % Default value: 1
    \begin{center}
    \begin{tabular*}{\linewidth}{@{\extracolsep{\fill}}ccccccc}
     Corrupted Input & Raw Attention & Rollout 
     & GradCAM &LRP& VTA  & Ours \\
    \includegraphics[width=0.12\linewidth]{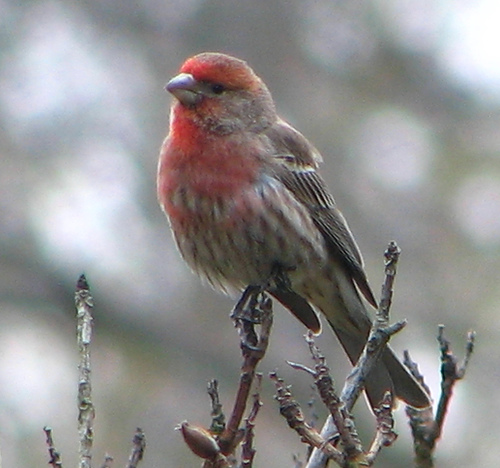} &
    \includegraphics[width=0.12\linewidth]{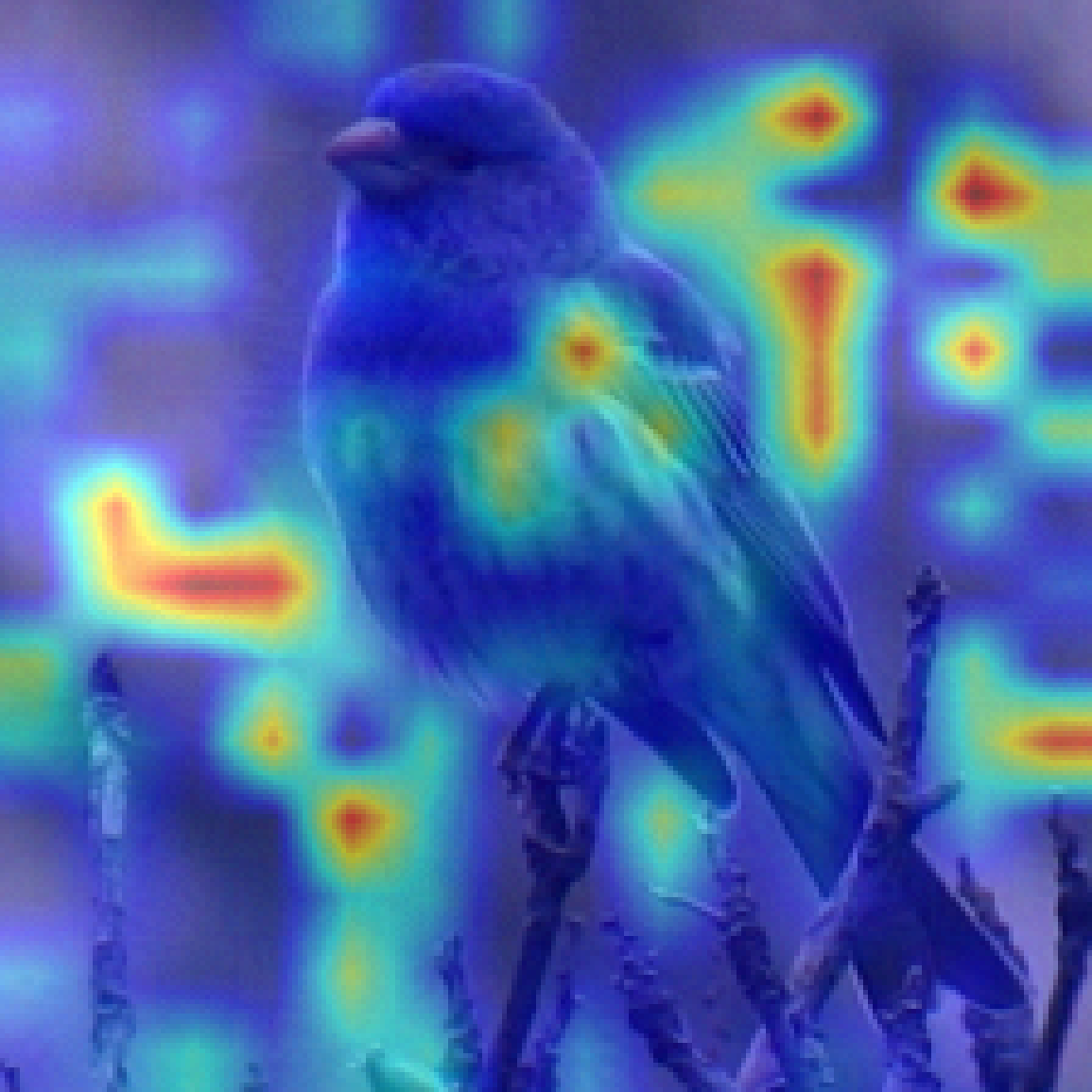} &
    \includegraphics[width=0.12\linewidth]{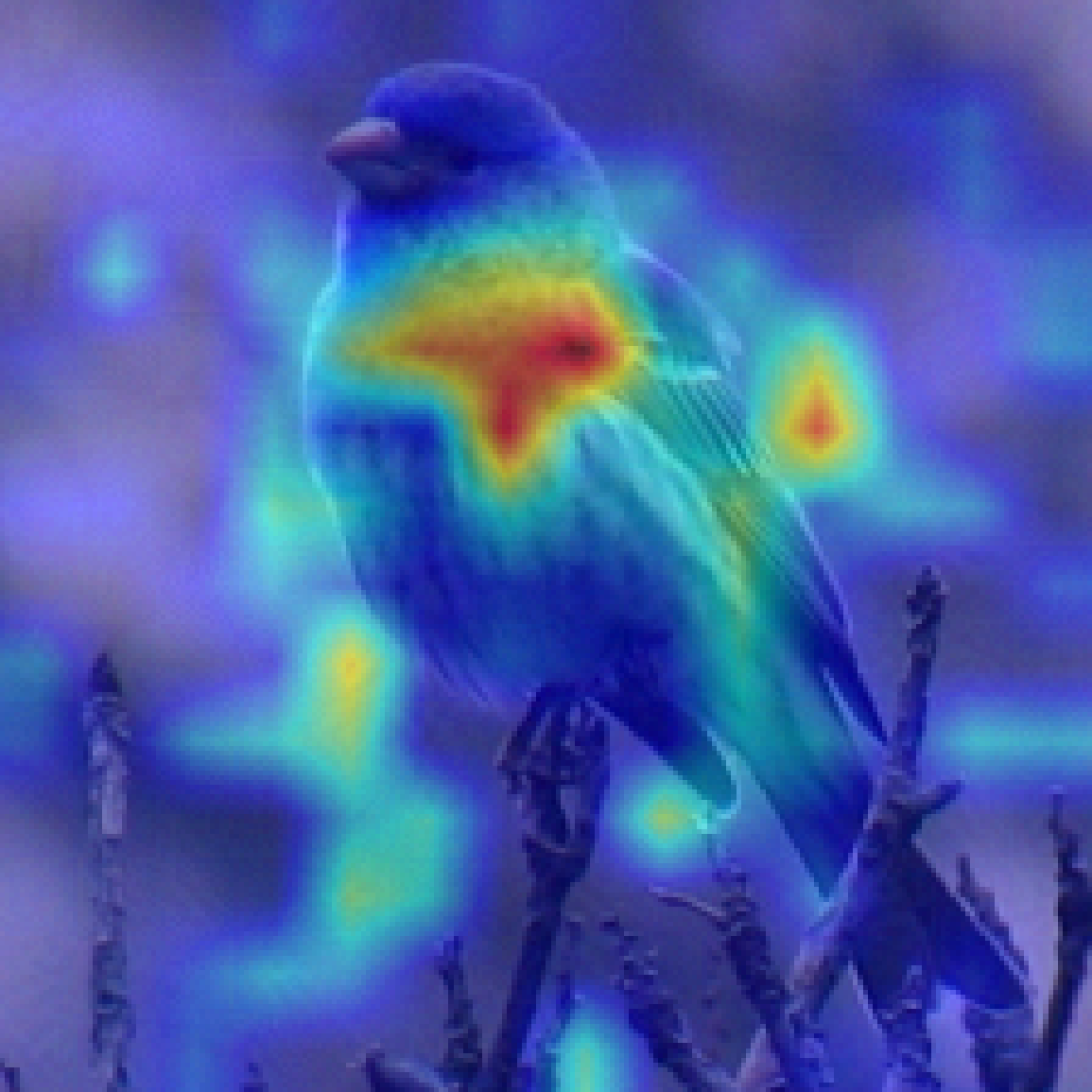} &
    \includegraphics[width=0.12\linewidth]{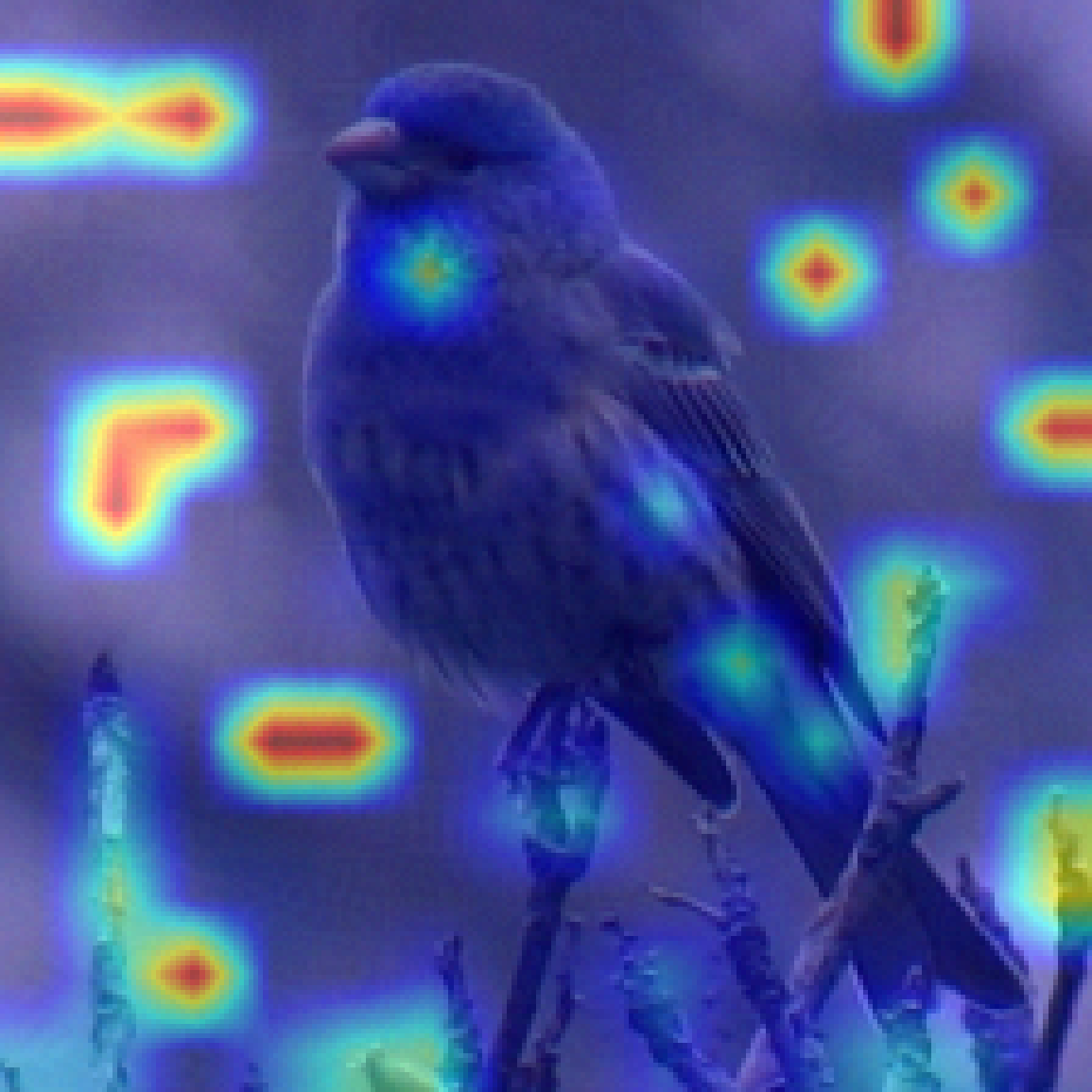} & \includegraphics[width=0.12\linewidth]{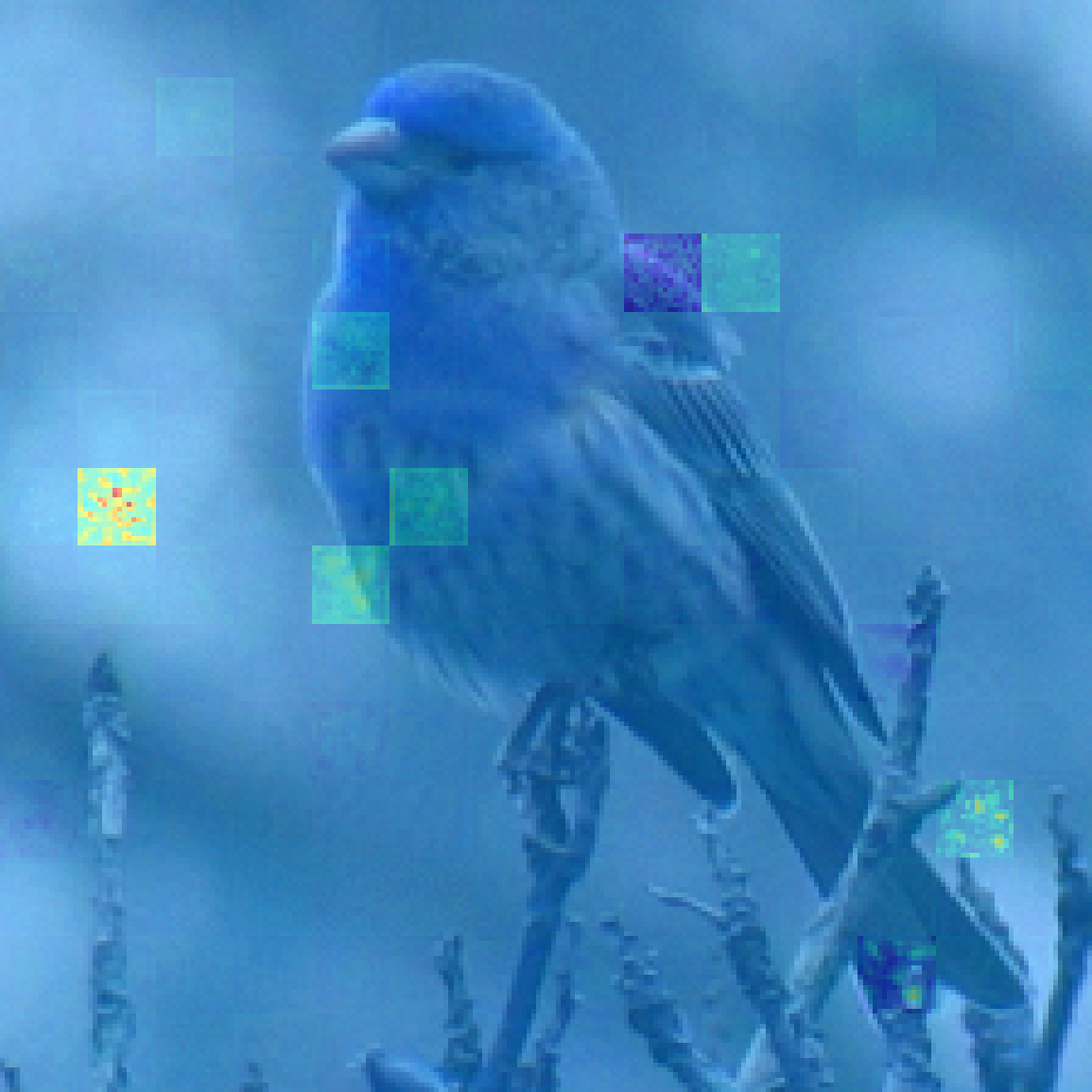} &\includegraphics[width=0.12\linewidth]{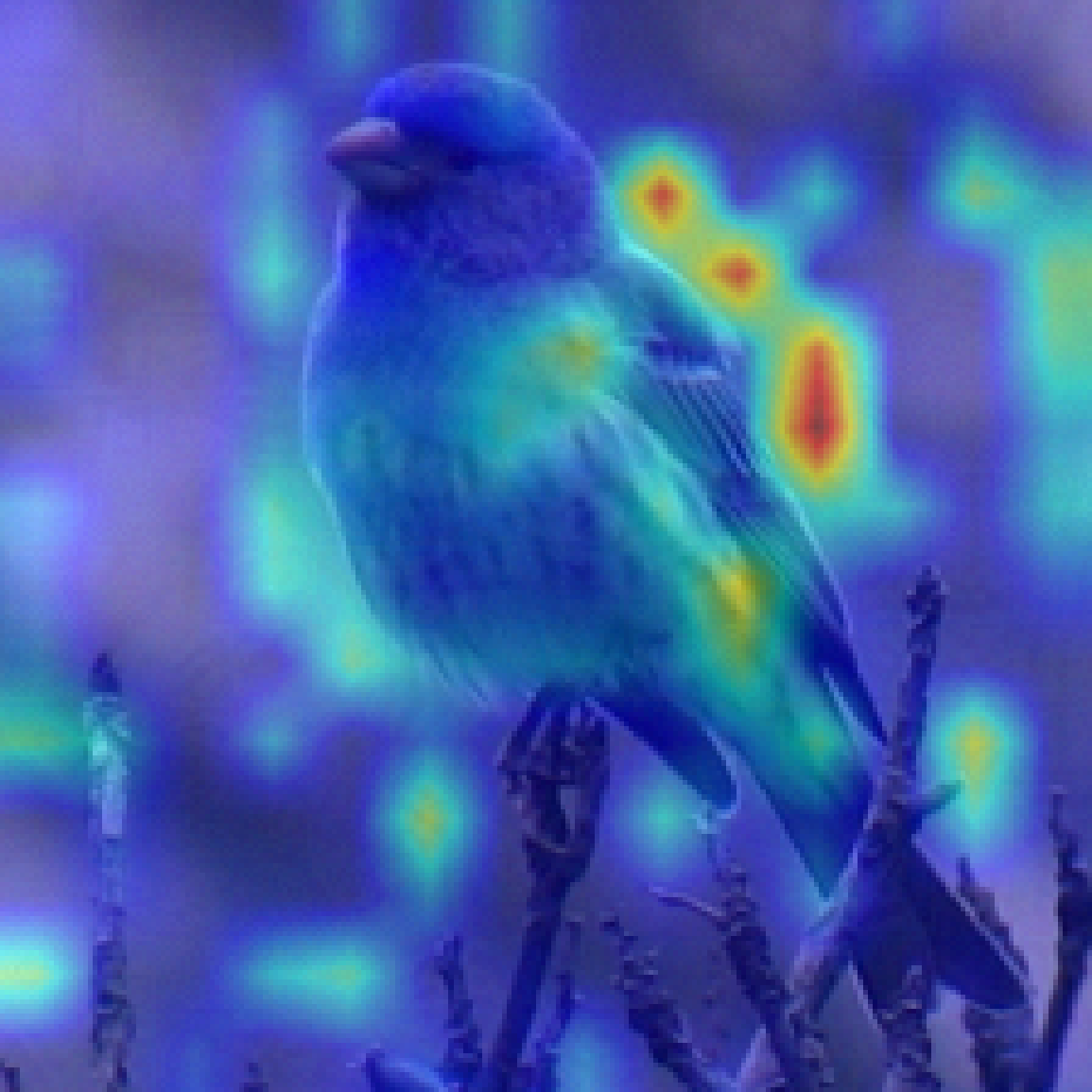}&\includegraphics[width=0.12\linewidth]{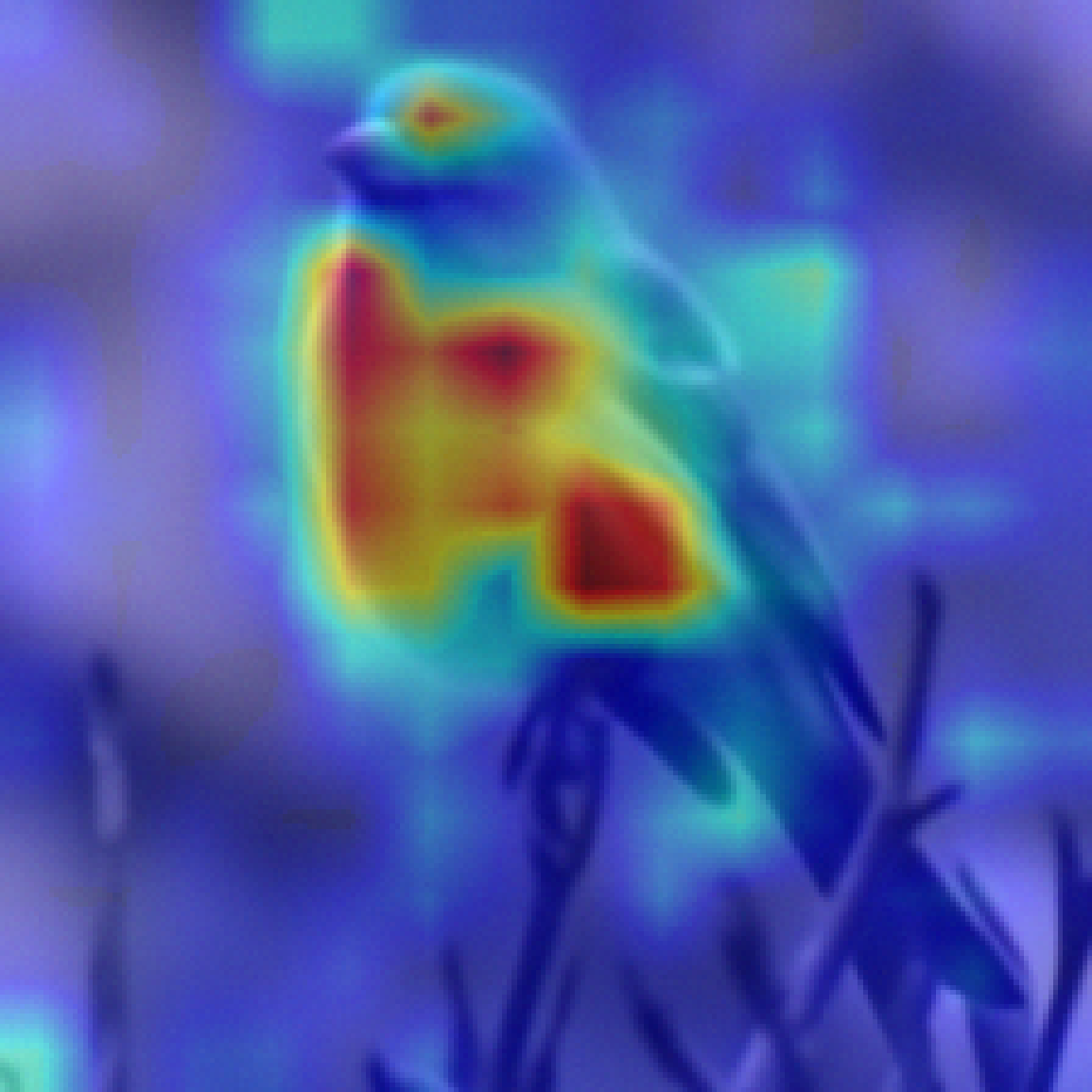}
    \\
    \includegraphics[width=0.12\linewidth]{exp/eagle1/eagle1.png}
&\includegraphics[width=0.12\linewidth]{exp/eagle1/eagle1-raw_attn-1-perturbed-7.png} &
    \includegraphics[width=0.12\linewidth]{exp/eagle1/eagle1-rollout-1-perturbed-7.png} &
    \includegraphics[width=0.12\linewidth]{exp/eagle1/eagle1-attn_gradcam-1-perturbed-7.png} & \includegraphics[width=0.12\linewidth]{exp/eagle1/eagle1-full-LRP-1-perturbed-7.png} & \includegraphics[width=0.12\linewidth]{exp/eagle1/eagle1-vta-1-perturbed-7.png} 
    & \includegraphics[width=0.12\linewidth]{exp/eagle1/eagle1-ours-1-perturbed-7.png} \\
   \includegraphics[width=0.12\linewidth]{exp/fish/fish.png}
    &
    \includegraphics[width=0.12\linewidth]{exp/fish/fish-raw_attn-1-perturbed-7.png} &
    \includegraphics[width=0.12\linewidth]{exp/fish/fish-rollout-1-perturbed-7.png} &
    \includegraphics[width=0.12\linewidth]{exp/fish/fish-attn_gradcam-1-perturbed-7.png} &
    \includegraphics[width=0.12\linewidth]{exp/fish/fish-full-LRP-1-perturbed-7.png} &    \includegraphics[width=0.12\linewidth]{exp/fish/fish-vta-1-perturbed-7.png} &\includegraphics[width=0.12\linewidth]{exp/fish/fish-ours-1-perturbed-7.png} 
    \\
\includegraphics[width=0.12\linewidth]{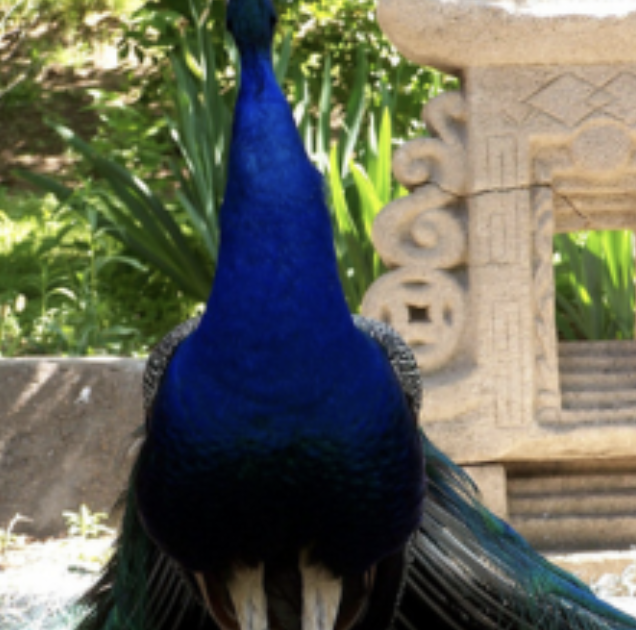} 
    &
    \includegraphics[width=0.12\linewidth]{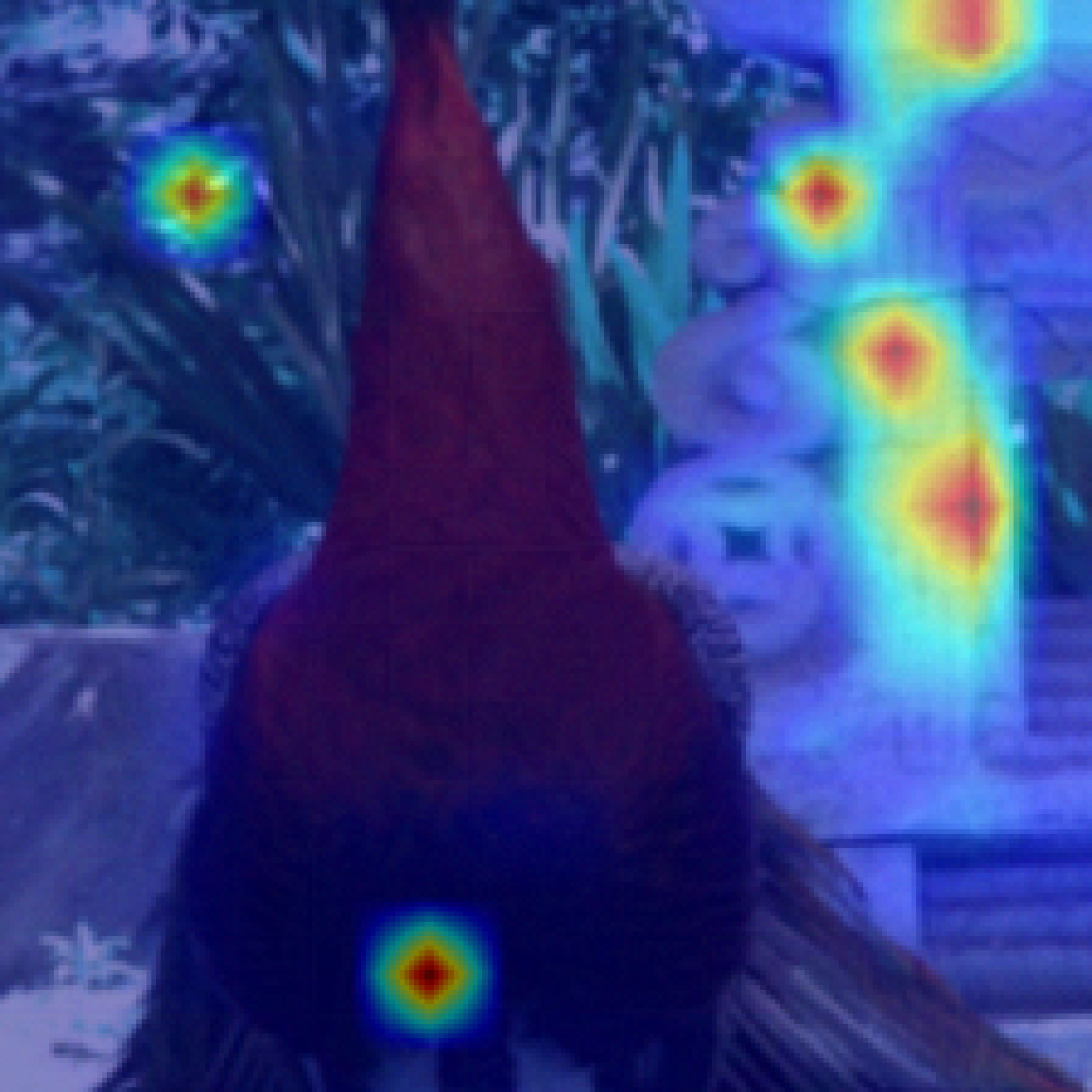} &
    \includegraphics[width=0.12\linewidth]{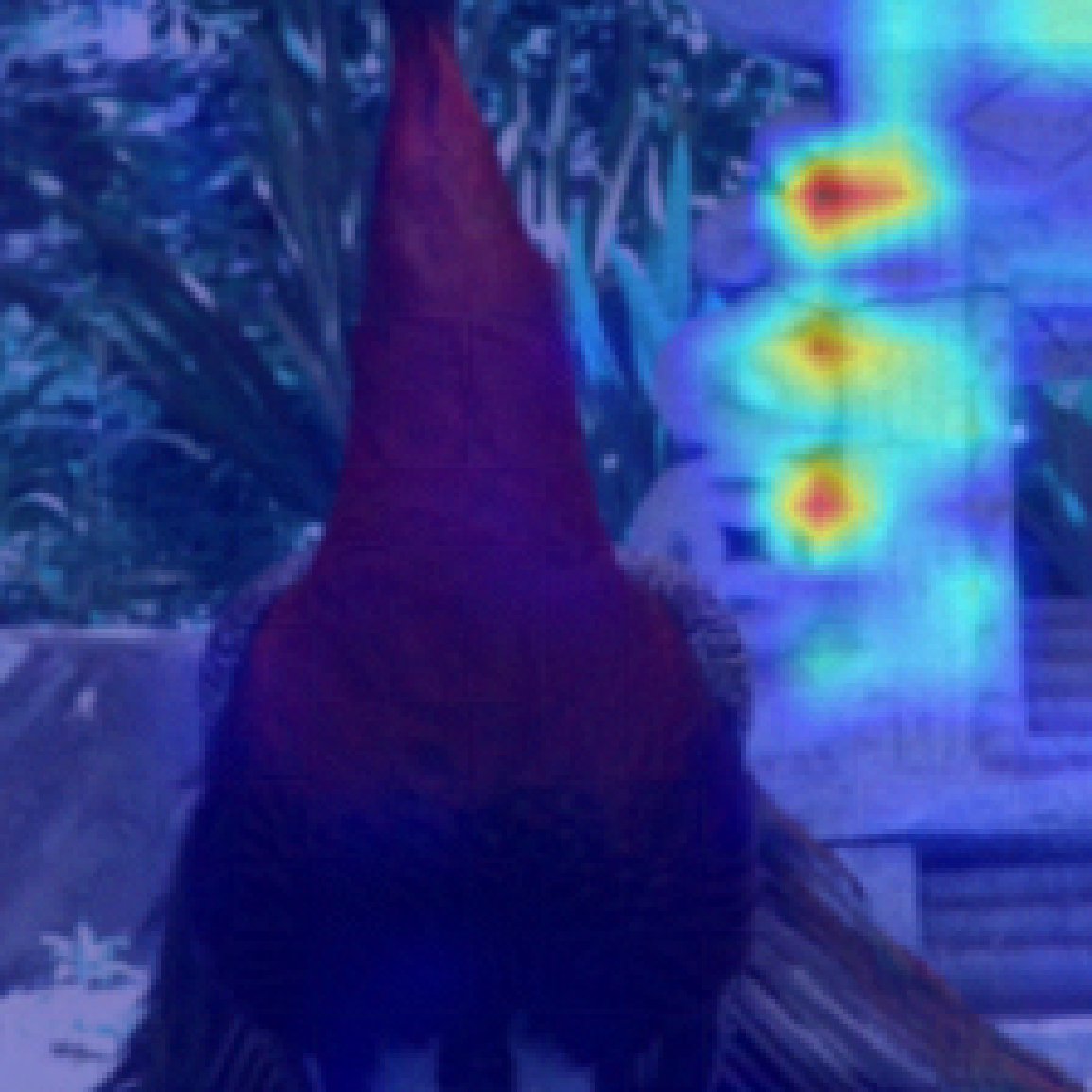} &\includegraphics[width=0.12\linewidth]{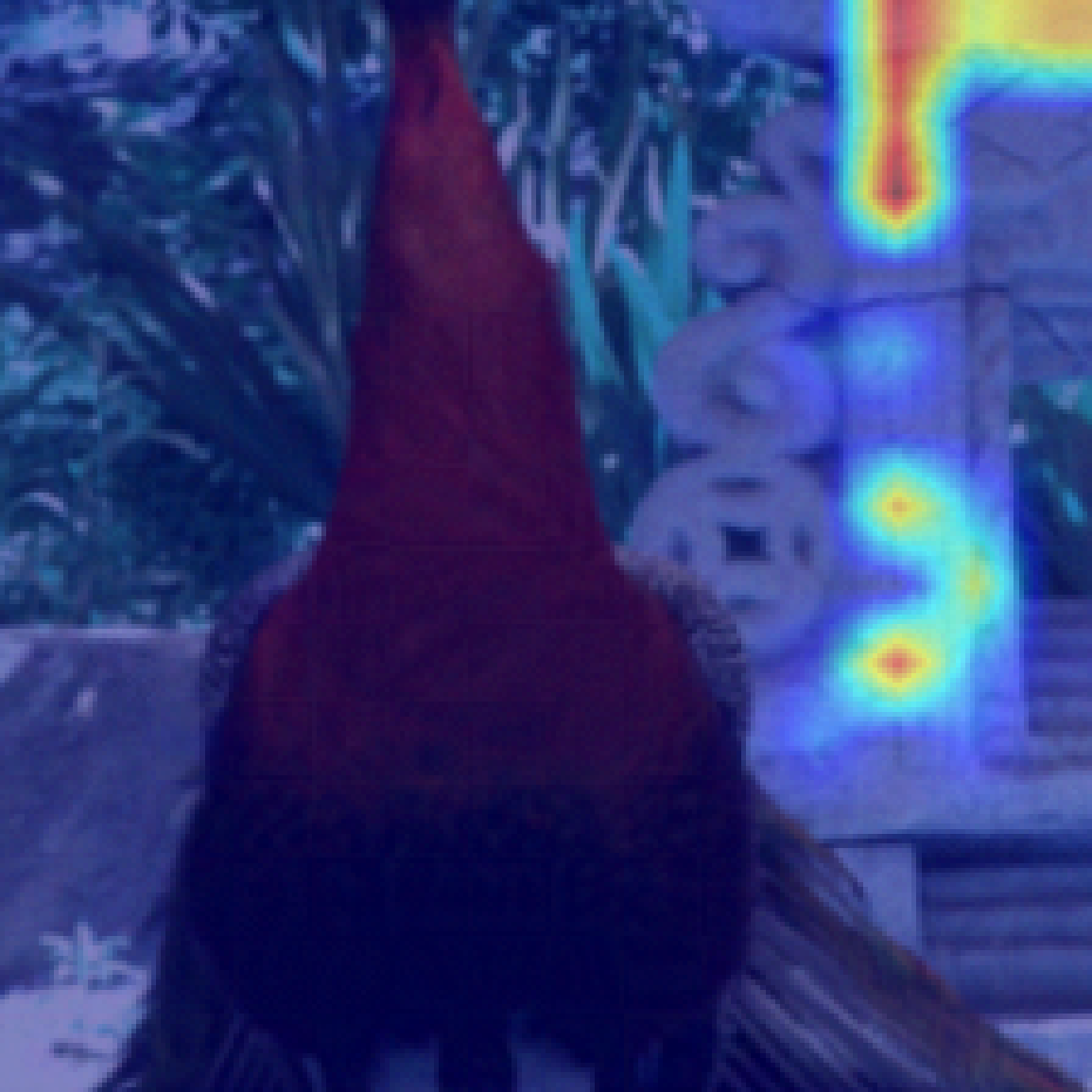} &\includegraphics[width=0.12\linewidth]{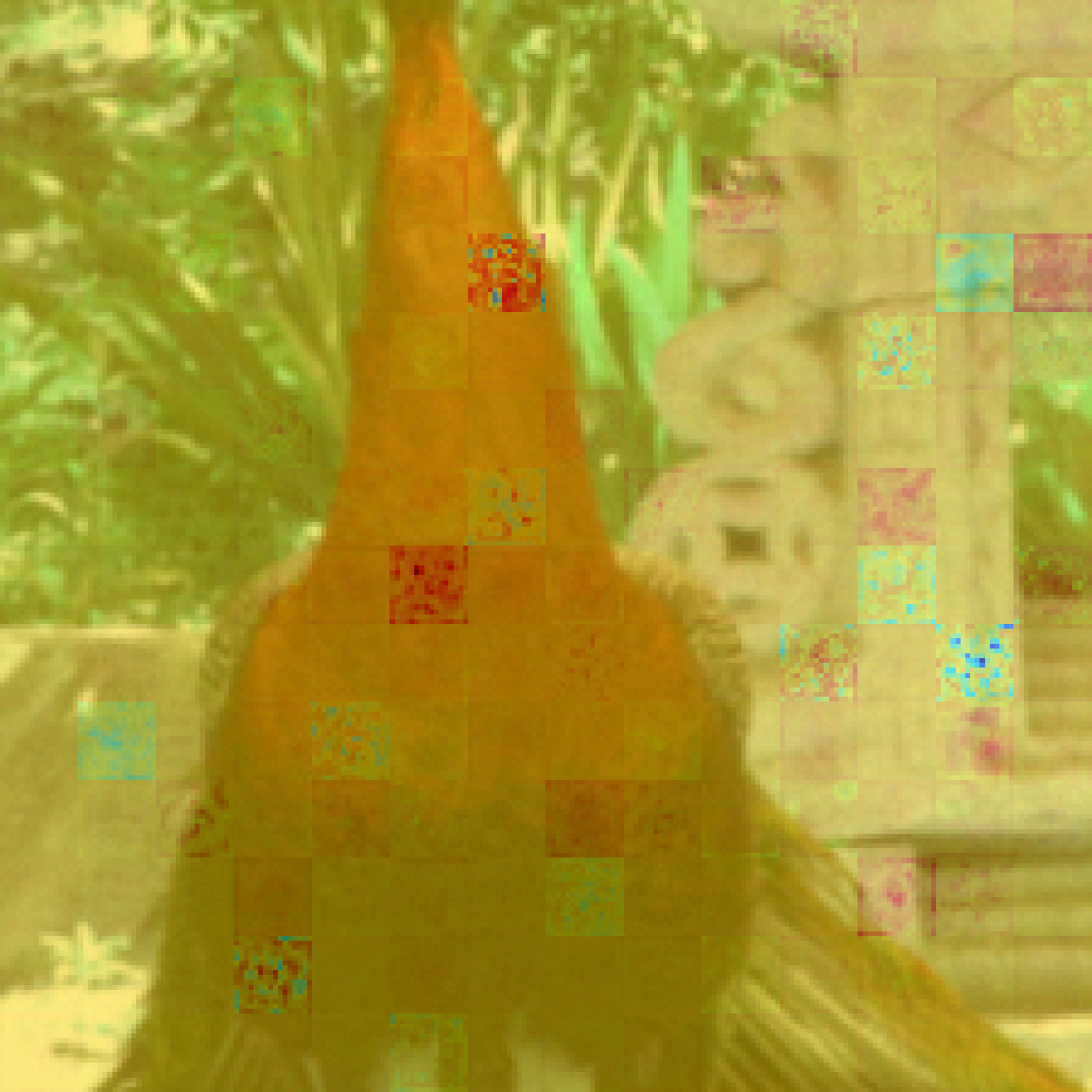} &\includegraphics[width=0.12\linewidth]{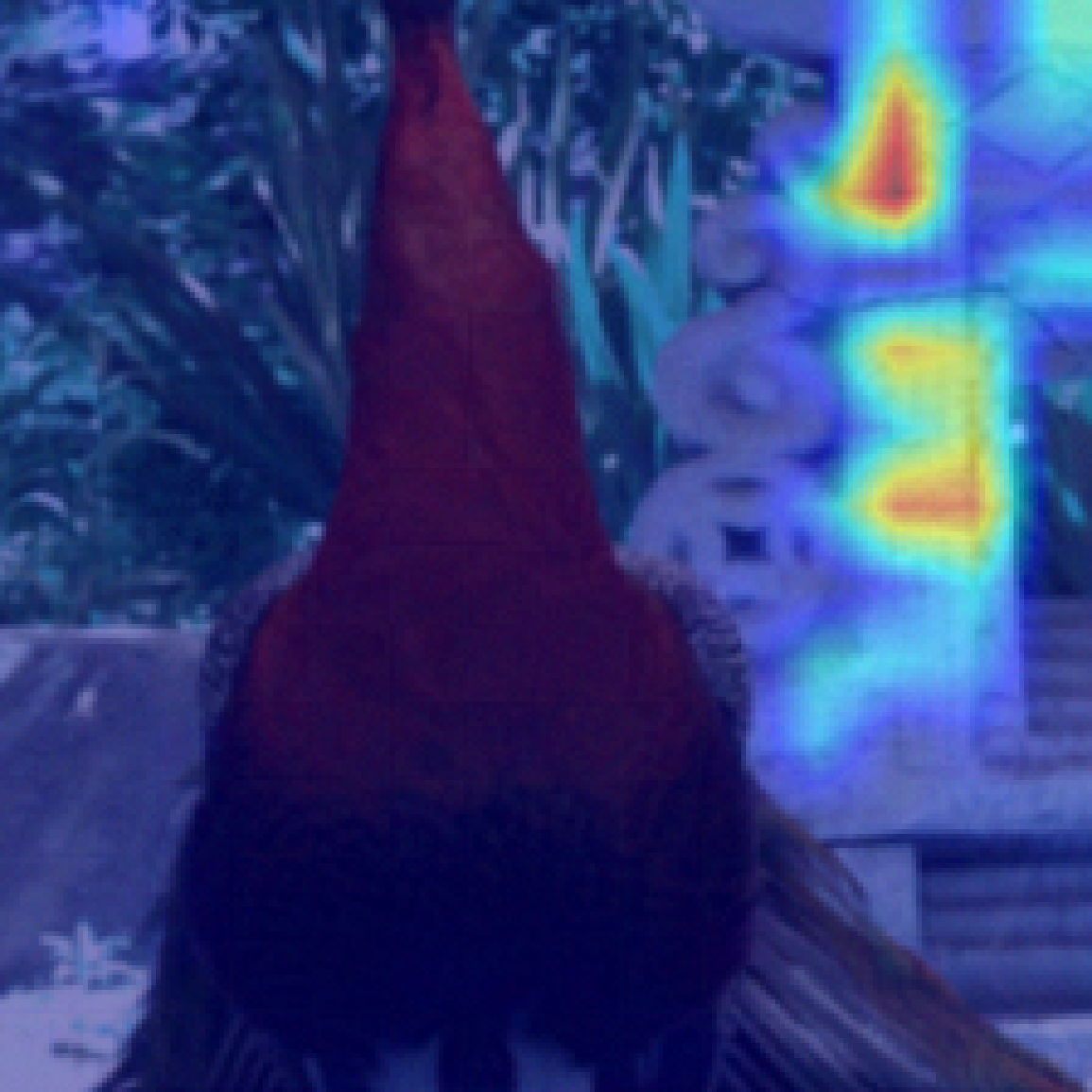} & \includegraphics[width=0.12\linewidth]{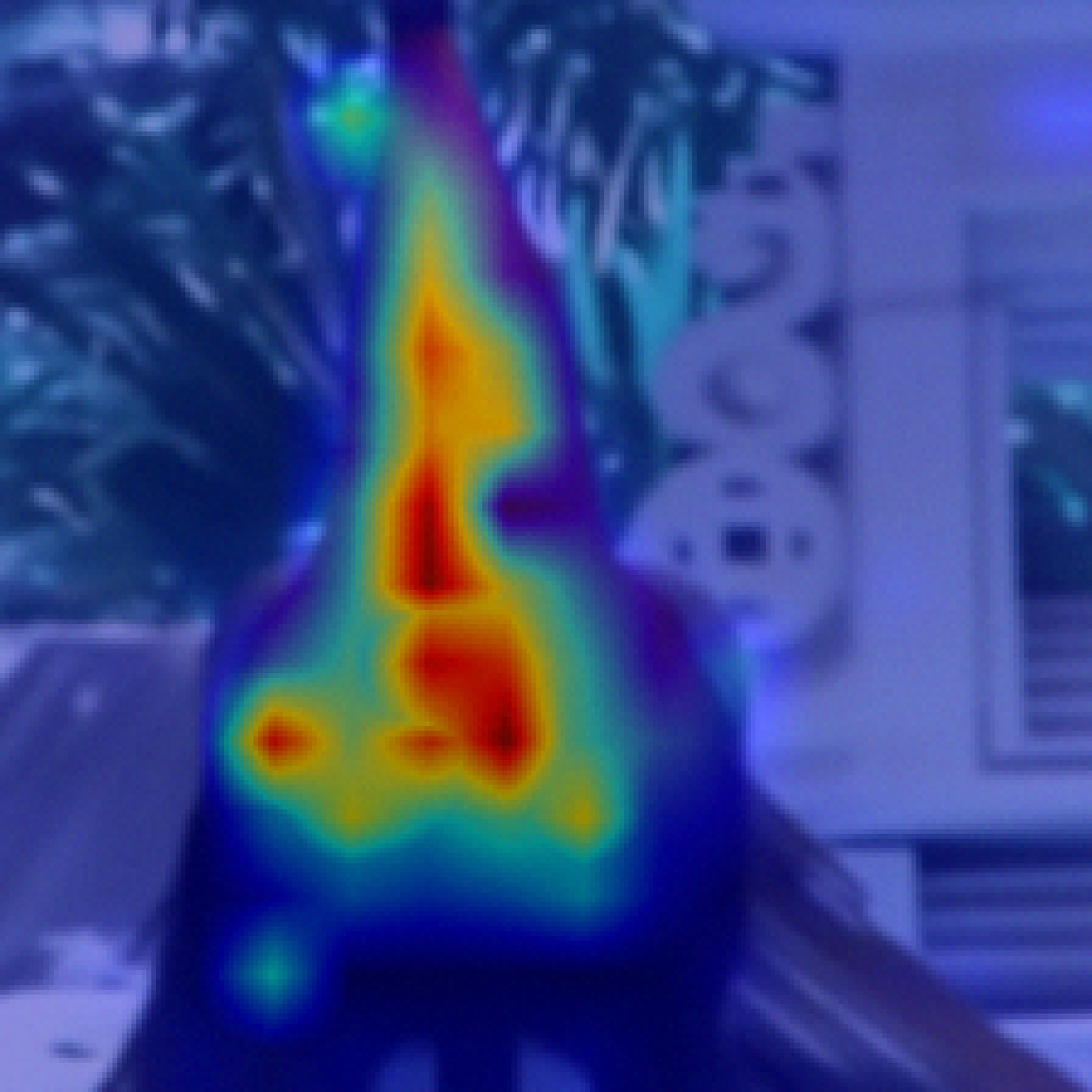}
    \end{tabular*}
    \caption{Visualization results of the attention map on corrupted input for different methods. }
    \label{fig: poison}
    \end{center}
\end{figure*}

\begin{figure*}
    \setlength{\tabcolsep}{1pt} % Default value: 6pt
    \renewcommand{\arraystretch}{1} % Default value: 1
    \begin{center}
    \begin{tabular*}{\linewidth}
    % {cccccc}
{@{\extracolsep{\fill}}cccccccc}
    Input & Raw Attention & Rollout & GradCAM & LRP & VTA  & Ours \\
    \raisebox{13mm}{\multirow{2}{*}{\makecell*[c]{Dog: clean$\rightarrow$\\\includegraphics[width=0.15\linewidth]{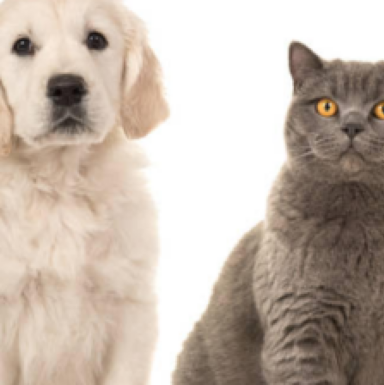}\\ 
    % Cat $\rightarrow$
    Dog: poisoned$\rightarrow$\\{\scriptsize $7/255$}
    }}
    }
    % \multirow{2}{*}{\makecell{\includegraphics[width=0.13\textwidth]{figures/catdog.png}}
     &
    \includegraphics[width=0.13\linewidth]{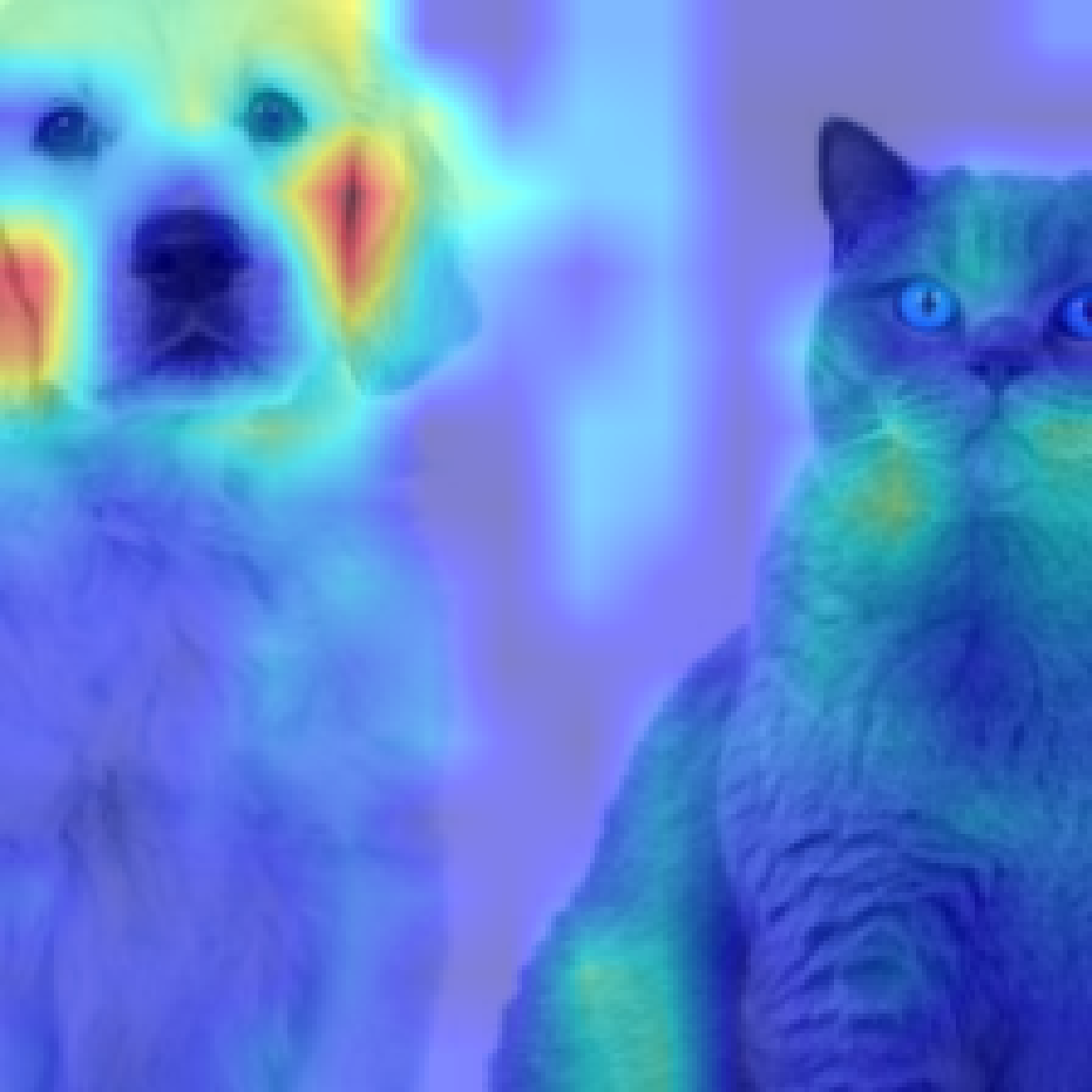} &
    \includegraphics[width=0.13\linewidth]{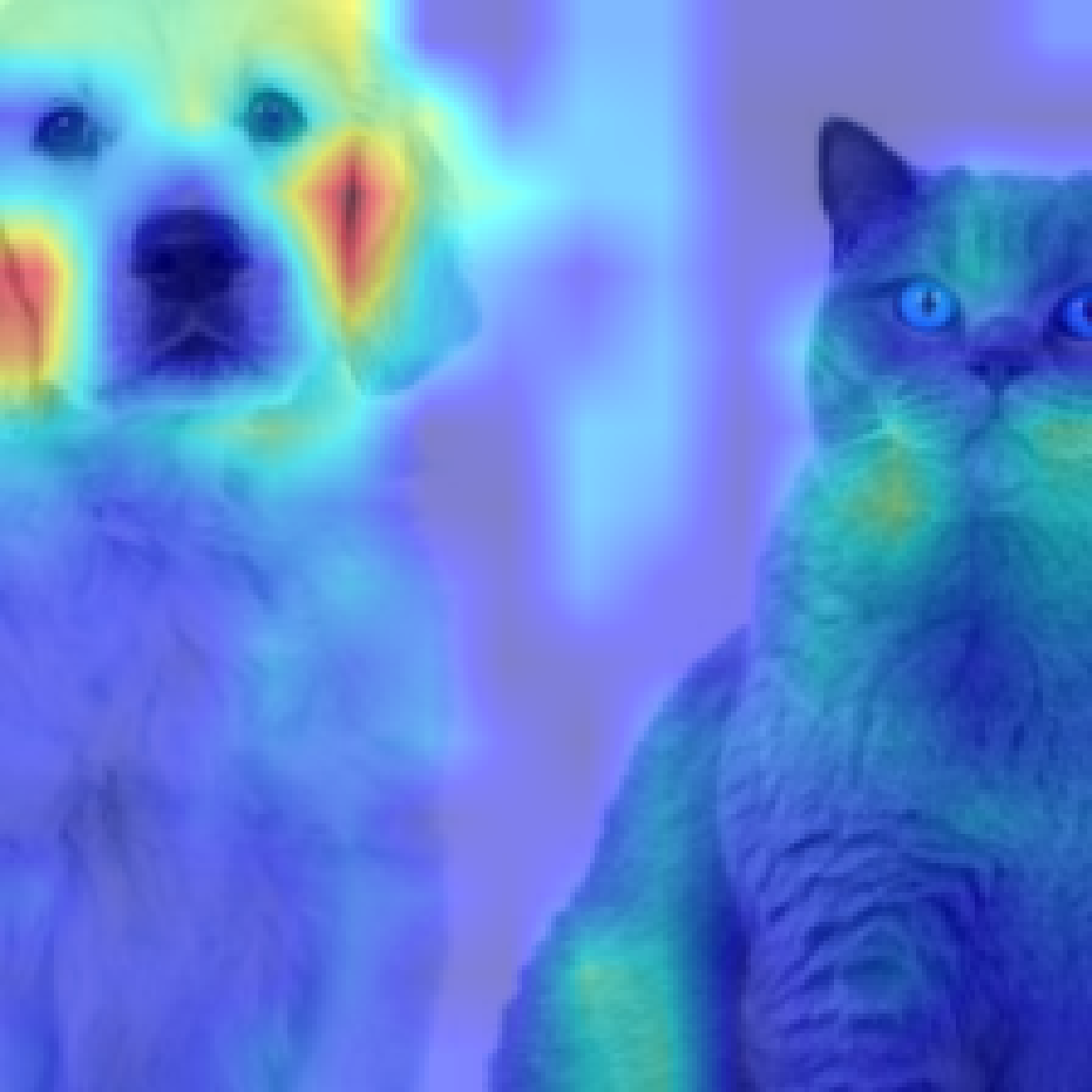} &
    \includegraphics[width=0.13\linewidth]{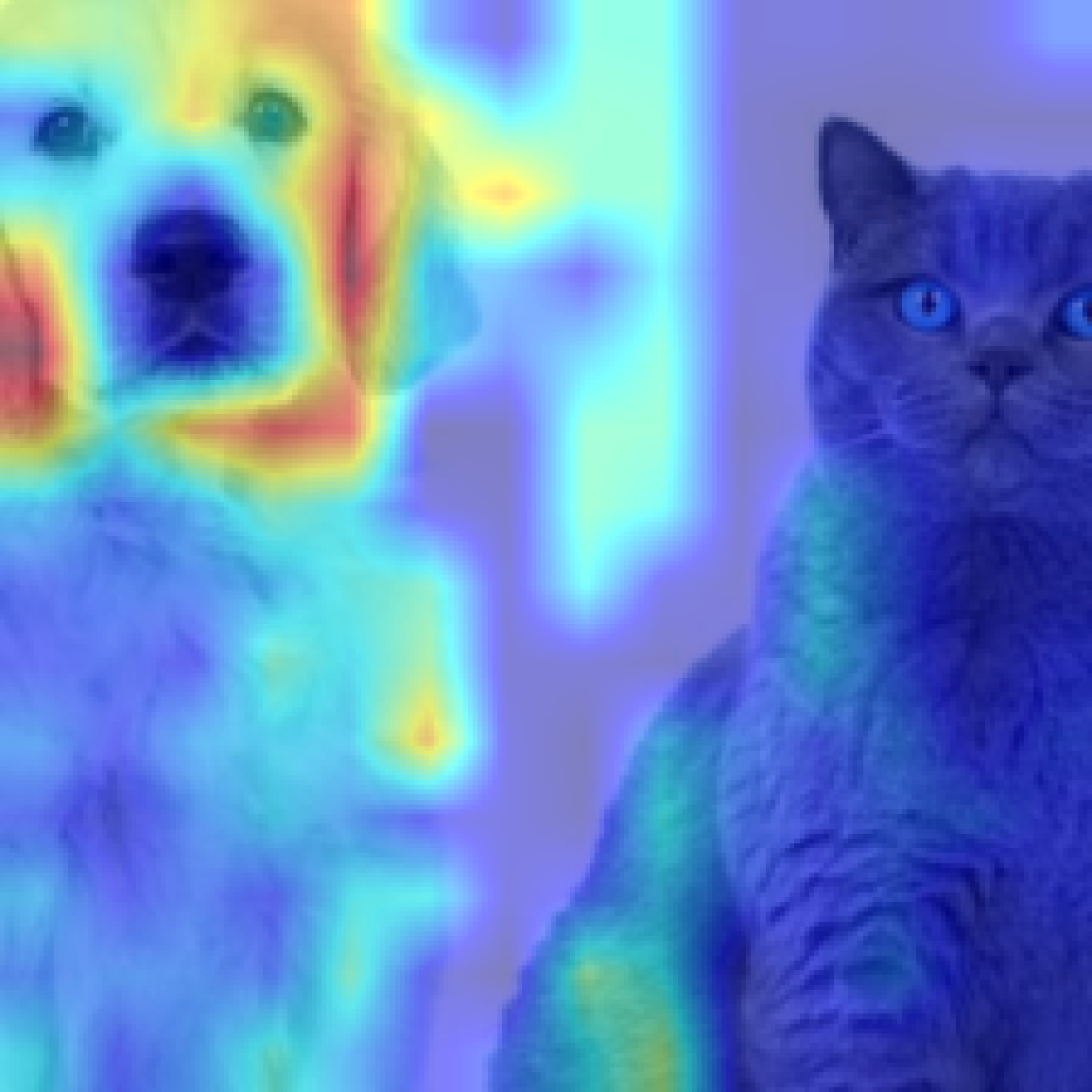} &
    \includegraphics[width=0.13\linewidth]{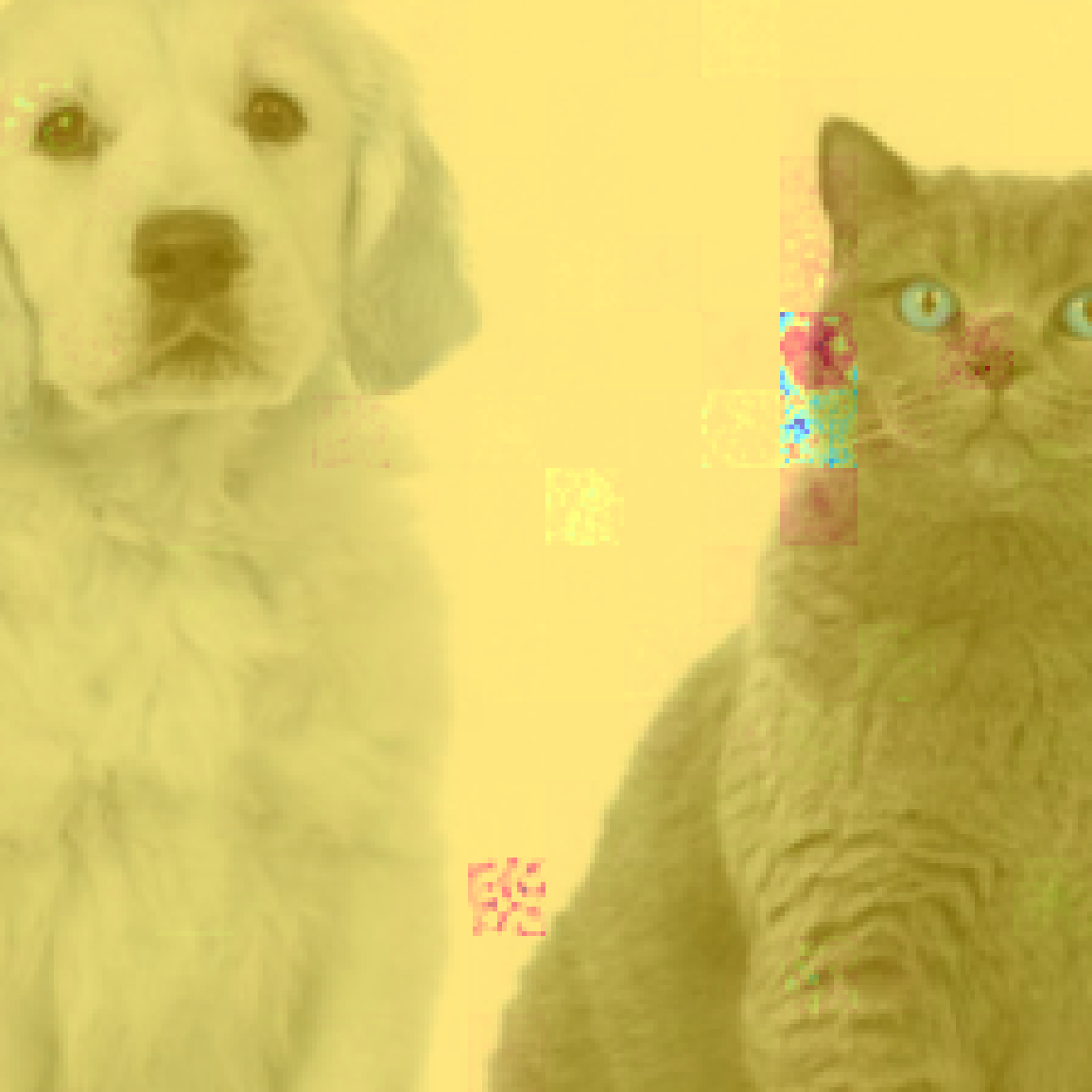} &
    \includegraphics[width=0.13\linewidth]{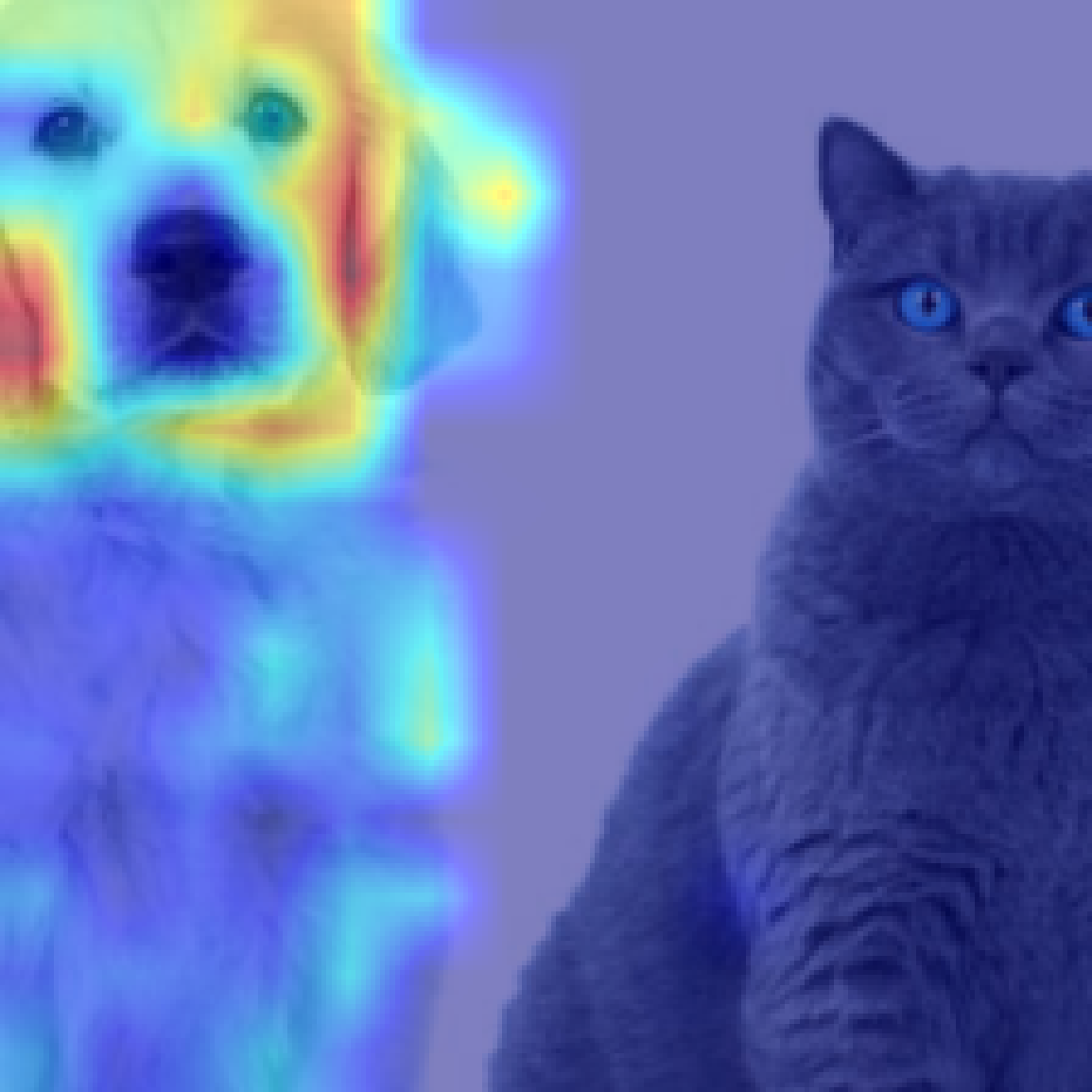} &
    \includegraphics[width=0.13\linewidth]{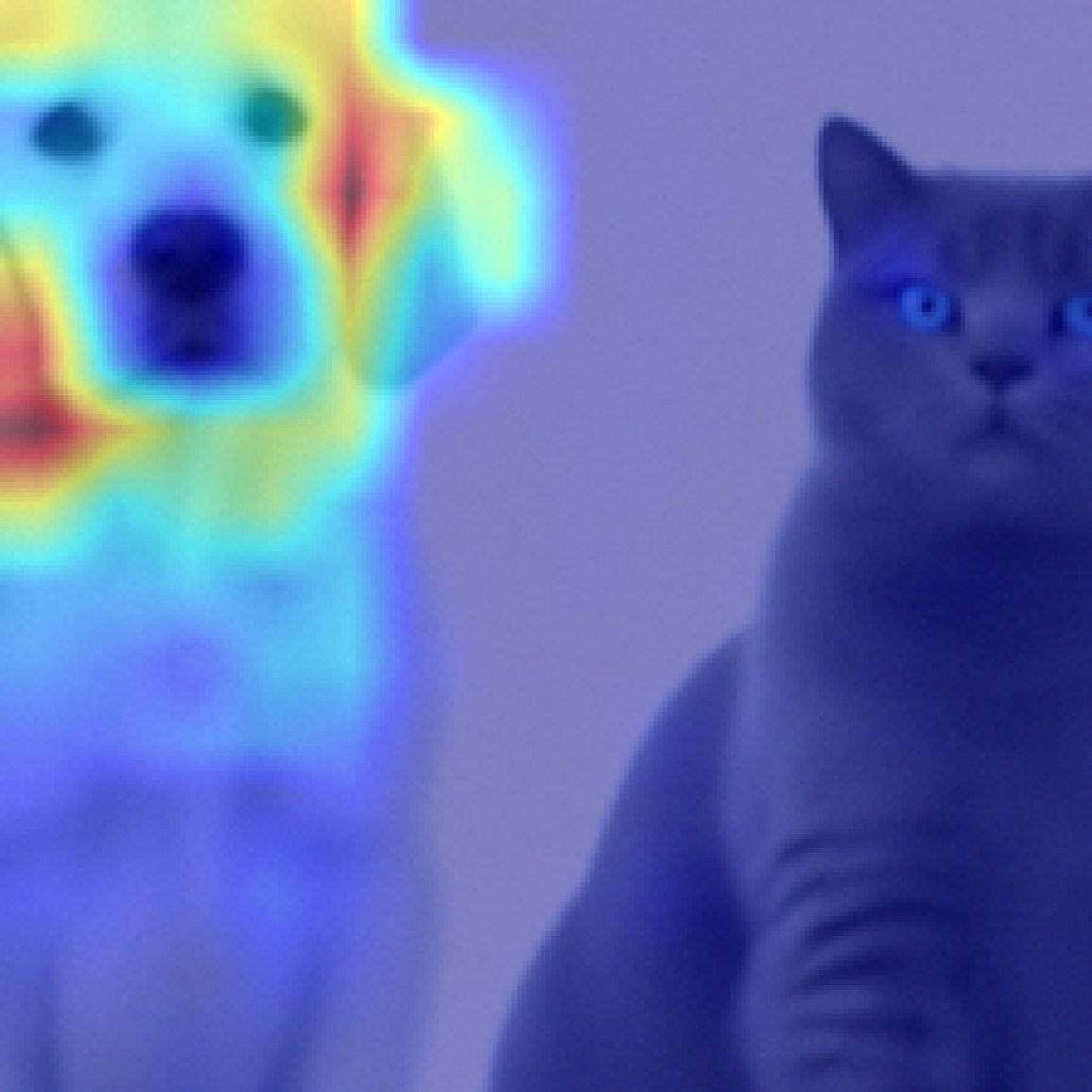}
    \\
    &
    \includegraphics[width=0.13\linewidth]{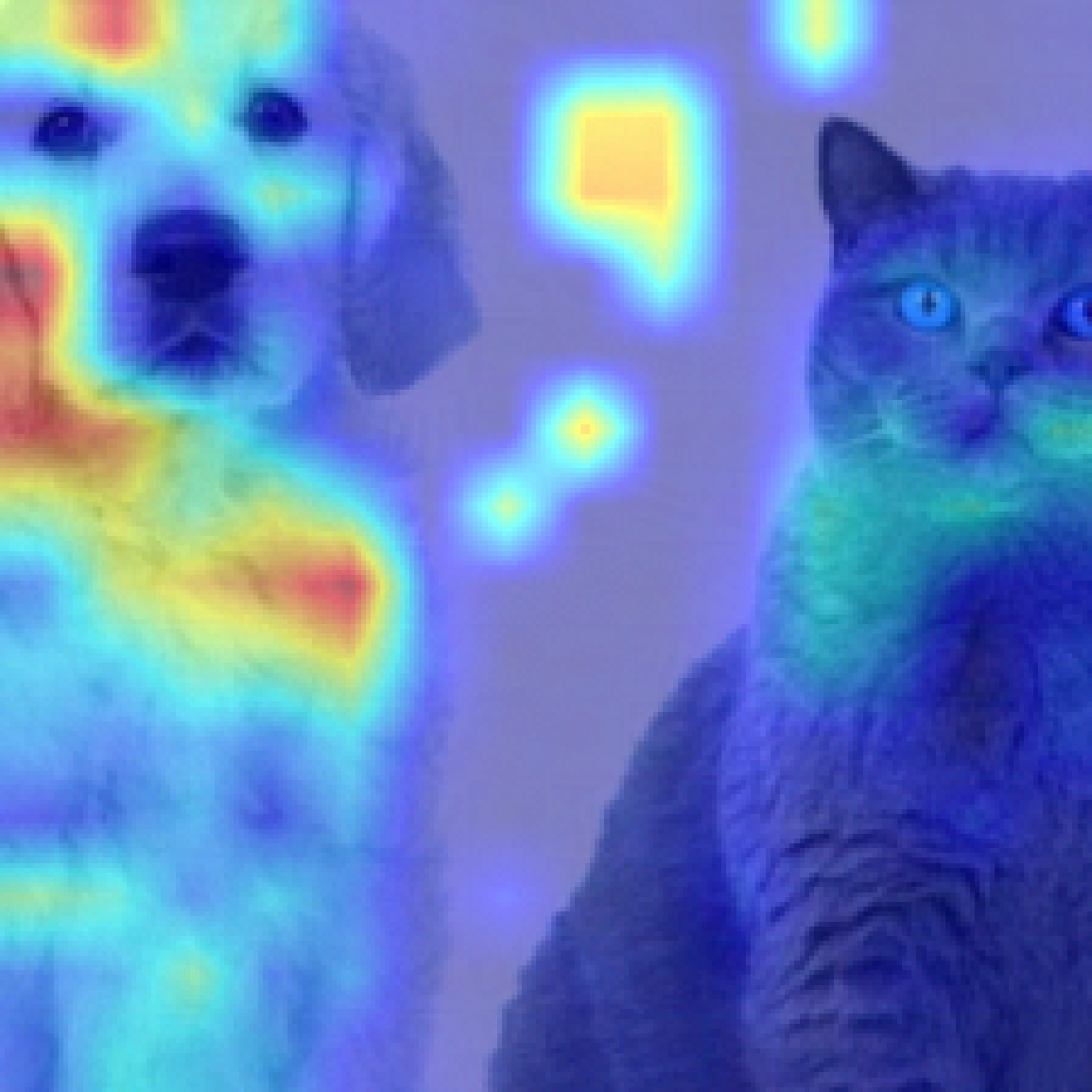} &
    \includegraphics[width=0.13\linewidth]{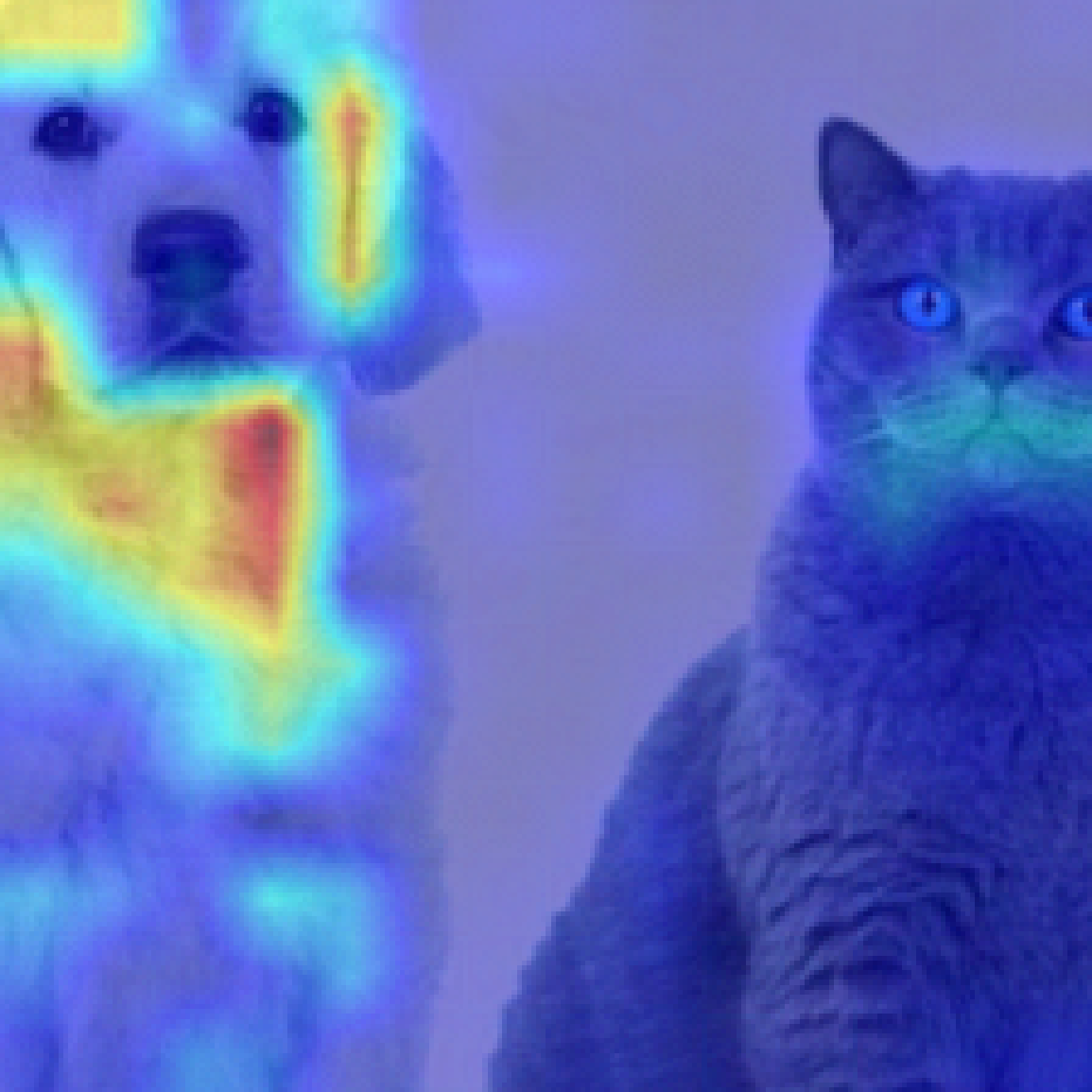} &
    \includegraphics[width=0.13\linewidth]{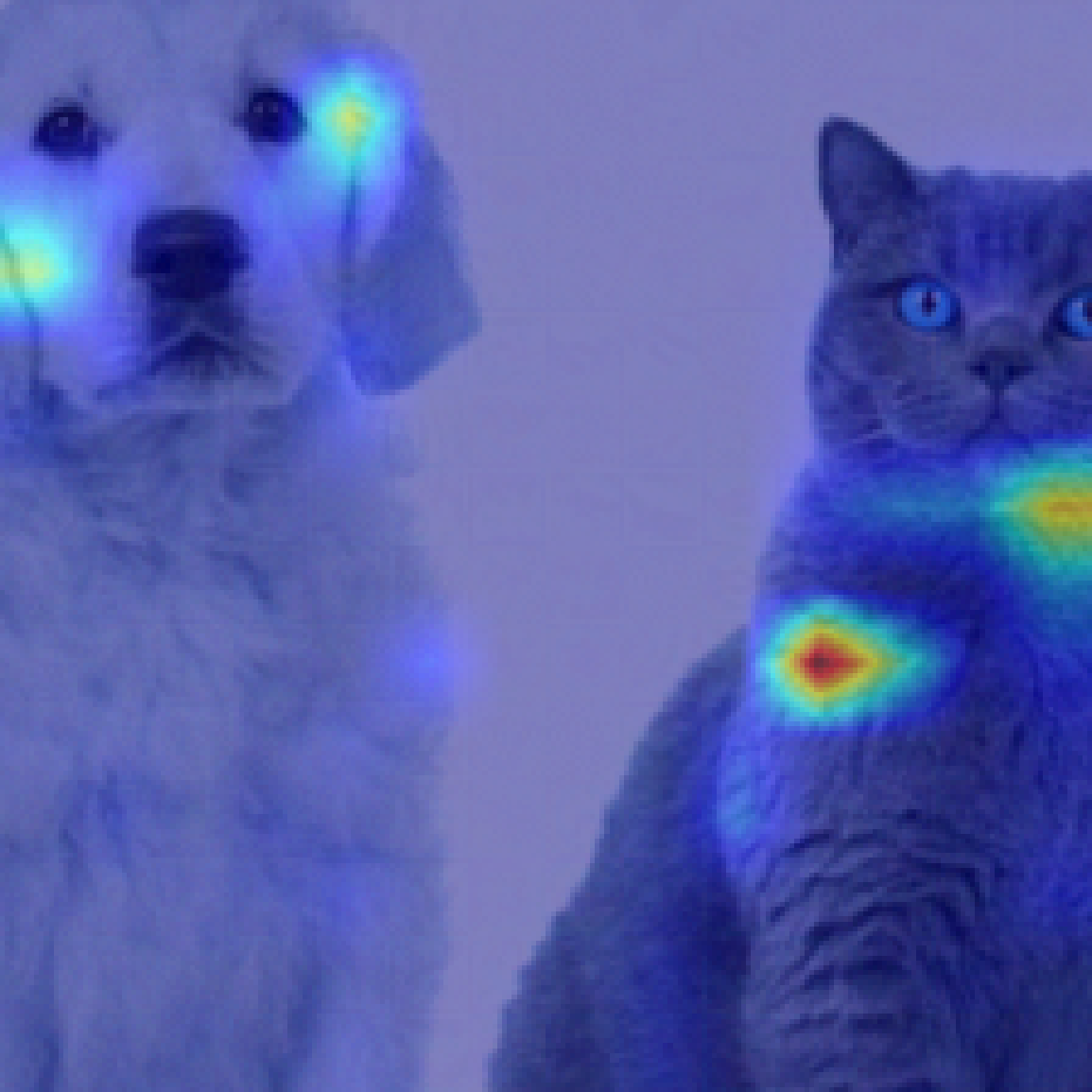} &
    \includegraphics[width=0.13\linewidth]{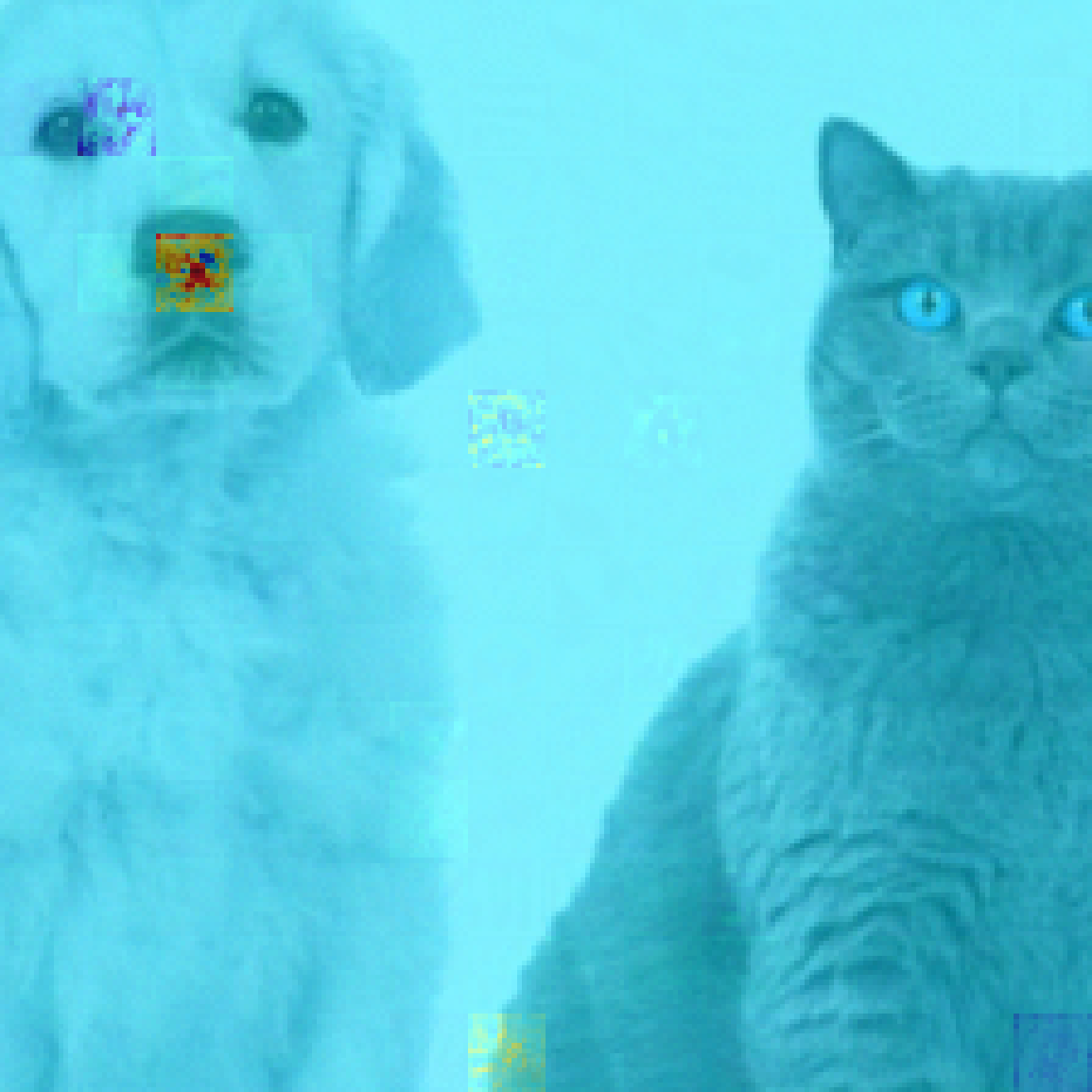} &
    \includegraphics[width=0.13\linewidth]{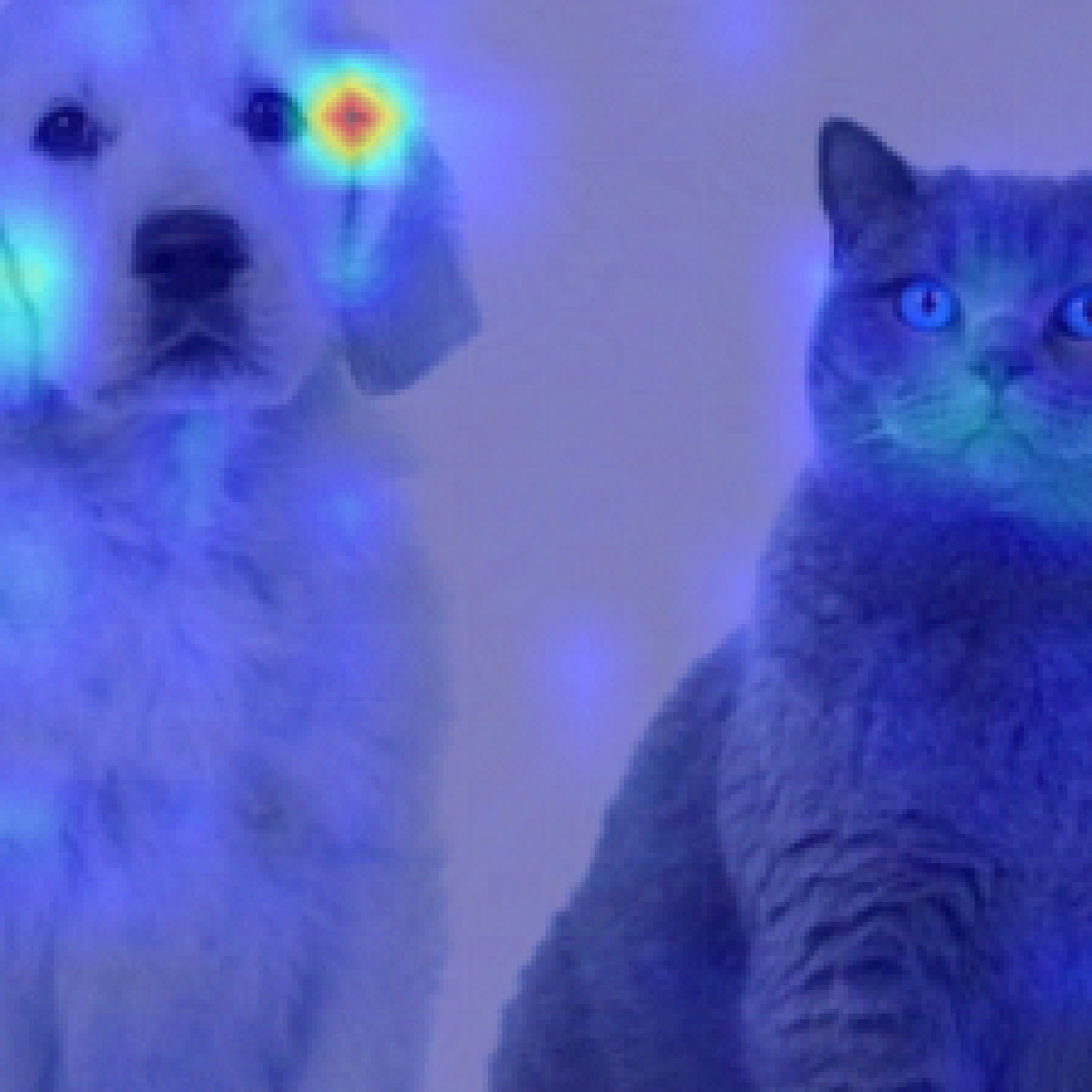} &
    \includegraphics[width=0.13\linewidth]{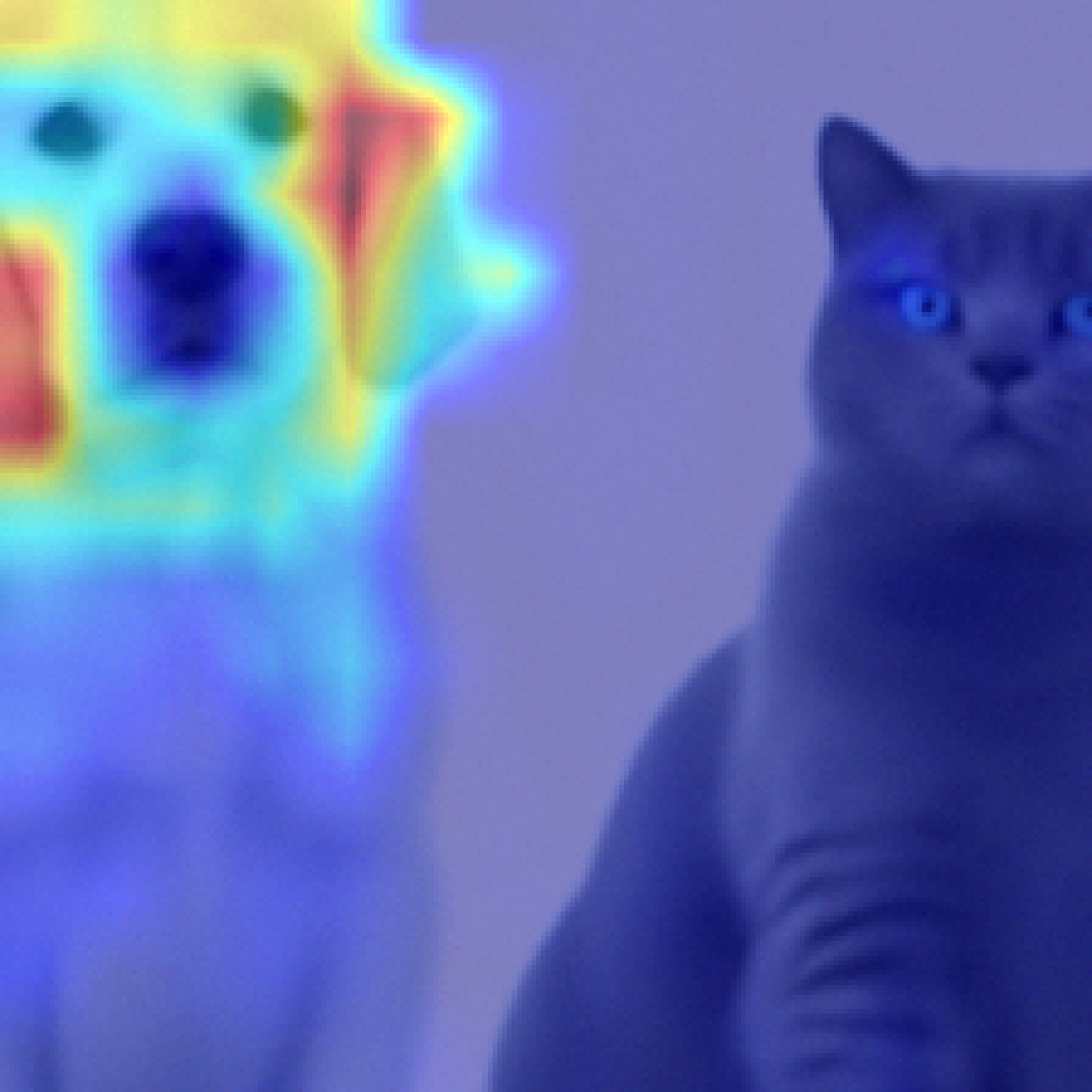}
    \\
    \raisebox{13mm}{\multirow{2}{*}{\makecell*[c]{Cat: clean$\rightarrow$\\\includegraphics[width=0.15\linewidth]{exp/dogcat2/dogcat2.png}\\ 
    % Cat $\rightarrow$
    Cat: poisoned$\rightarrow$\\{\scriptsize $7/255$}
    }}
    }
        &
    \includegraphics[width=0.13\linewidth]{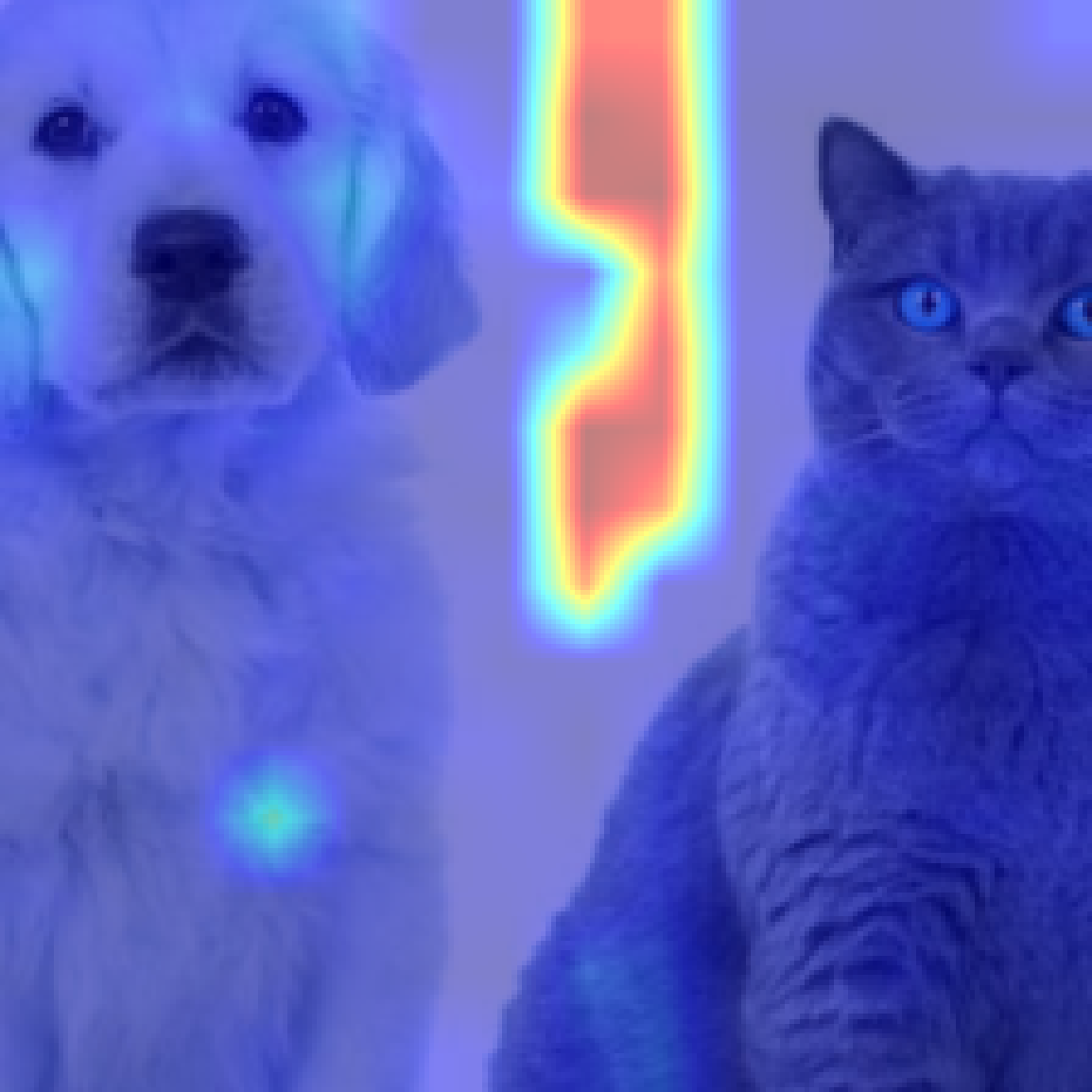} &
    \includegraphics[width=0.13\linewidth]{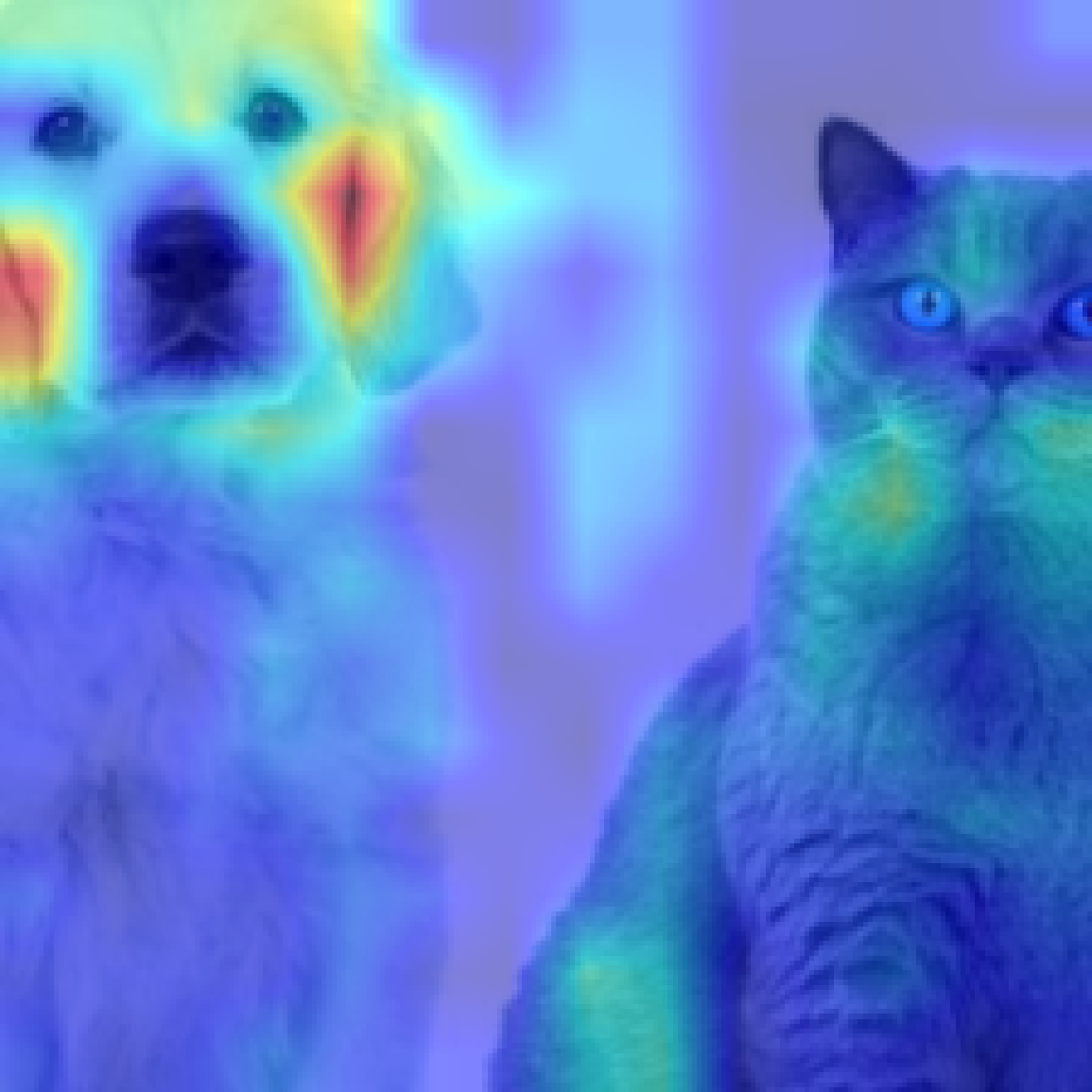} &
    \includegraphics[width=0.13\linewidth]{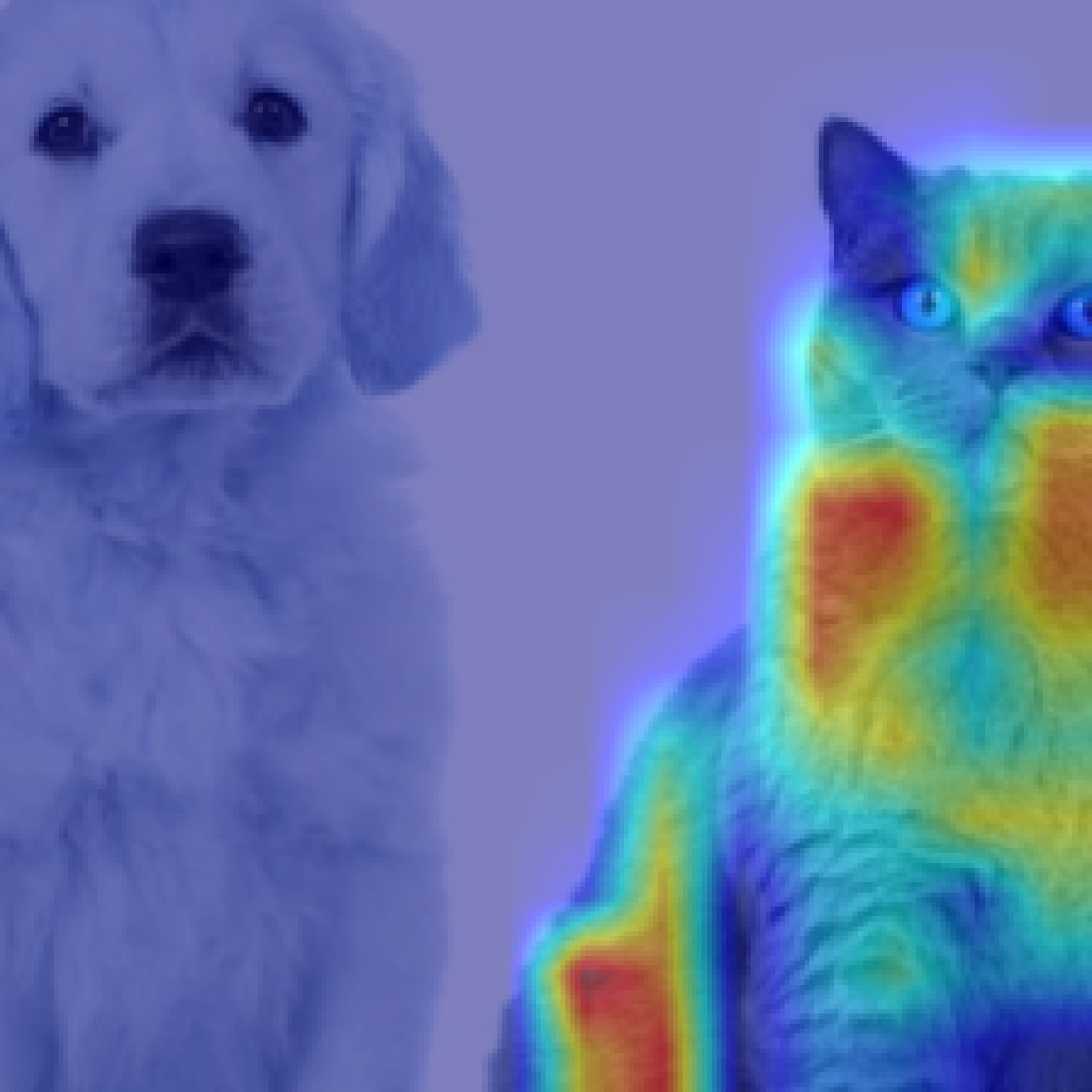} &
    \includegraphics[width=0.13\linewidth]{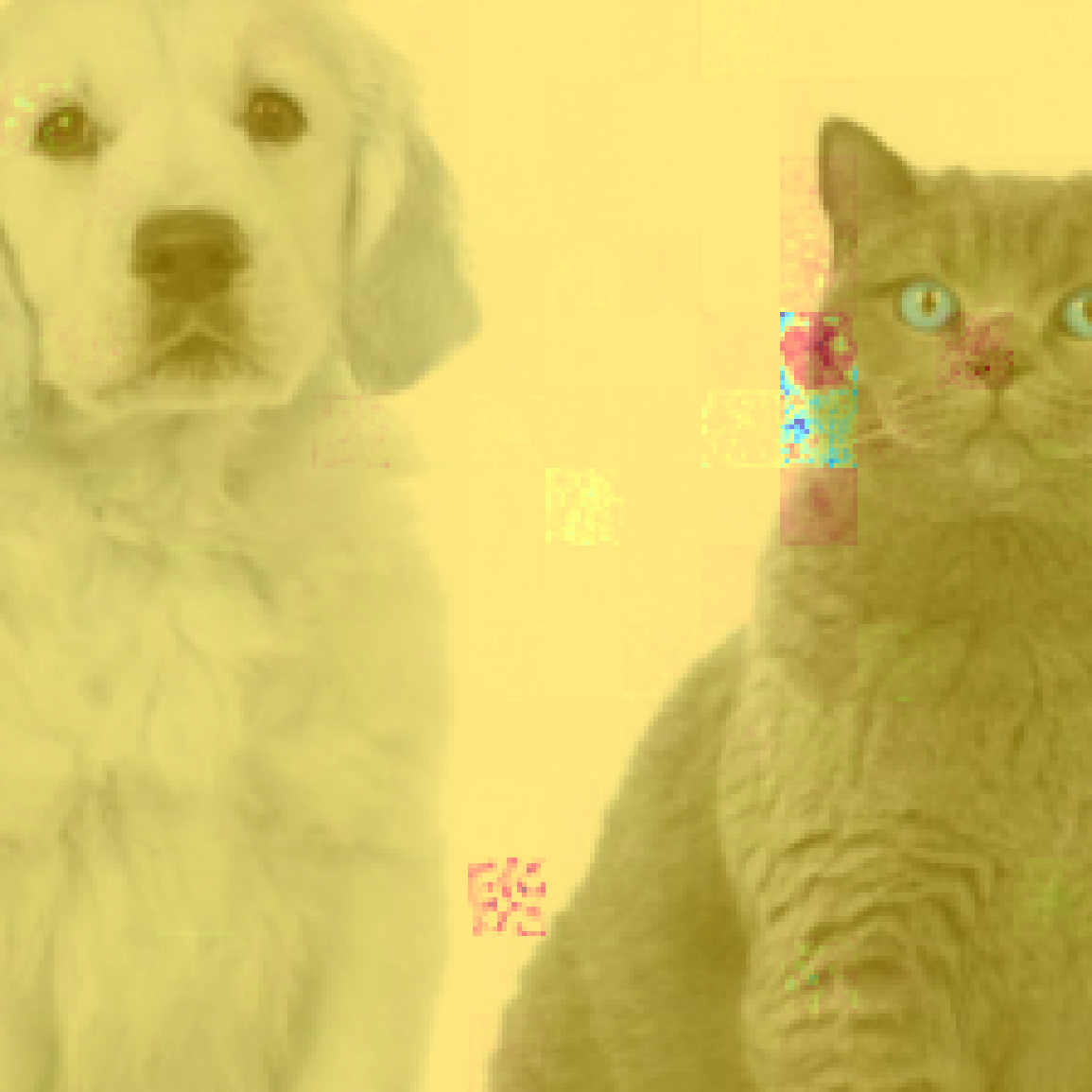} &
    \includegraphics[width=0.13\linewidth]{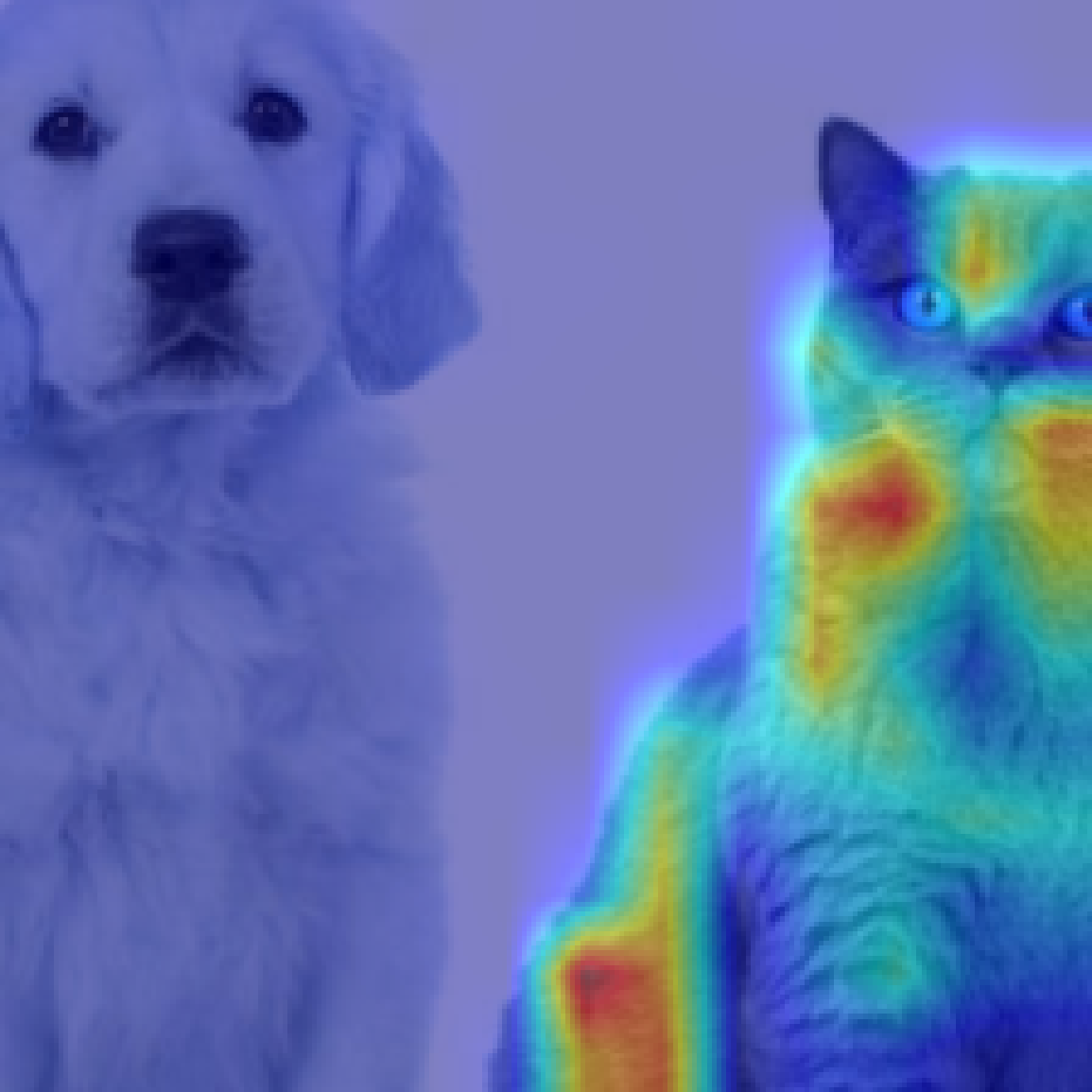} &
    \includegraphics[width=0.13\linewidth]{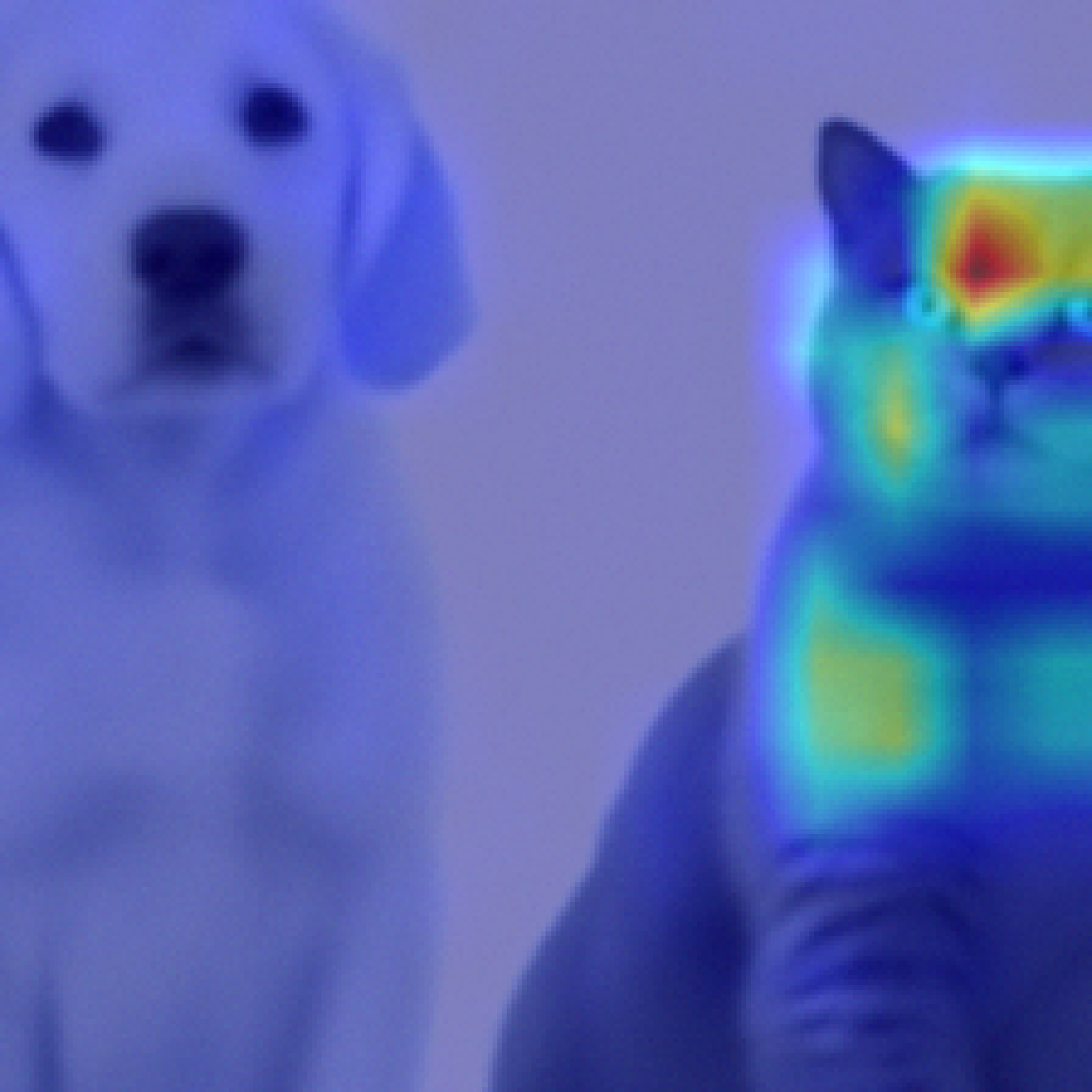}
    \\
    &
    \includegraphics[width=0.13\linewidth]{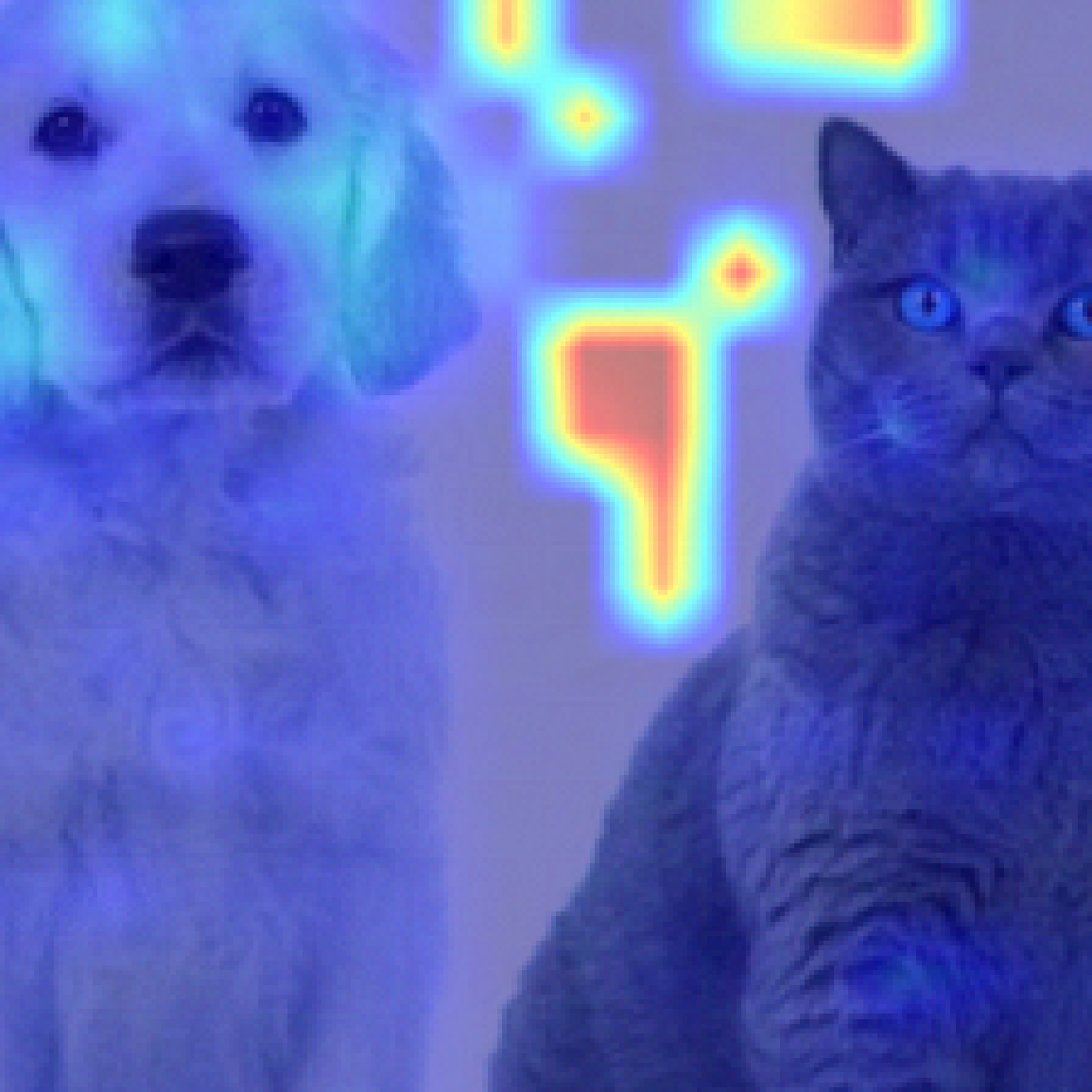} &
    \includegraphics[width=0.13\linewidth]{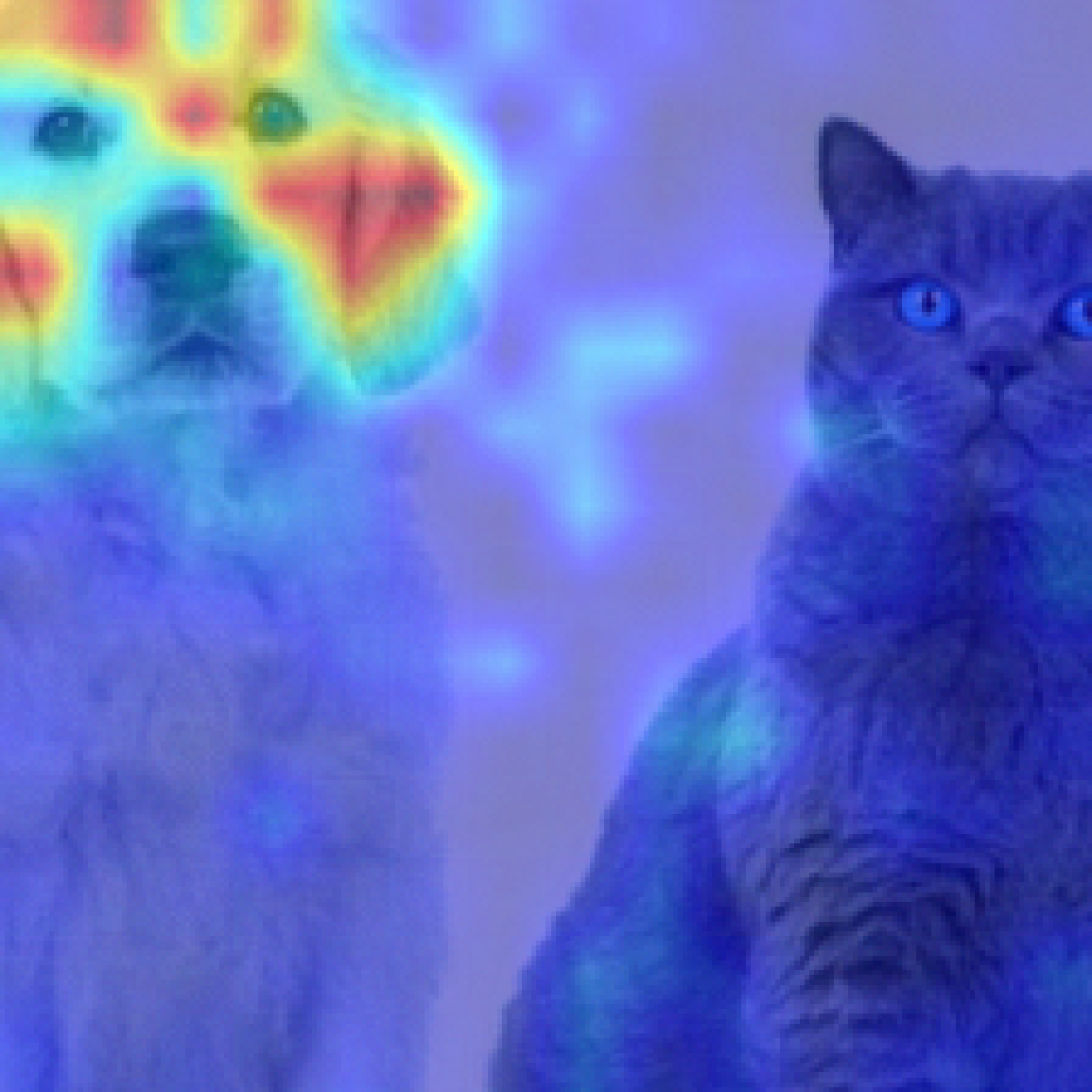} &
    \includegraphics[width=0.13\linewidth]{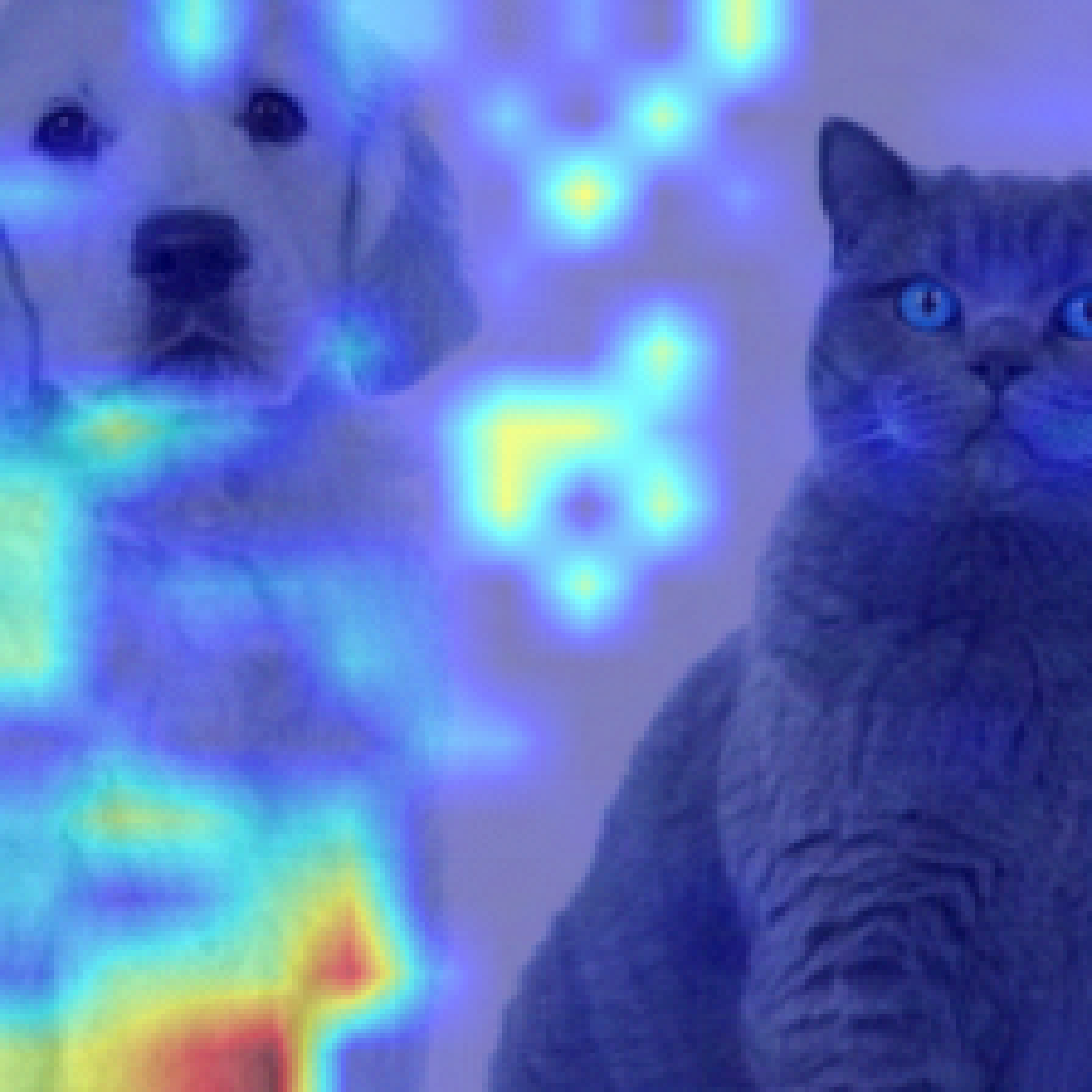} &
    \includegraphics[width=0.13\linewidth]{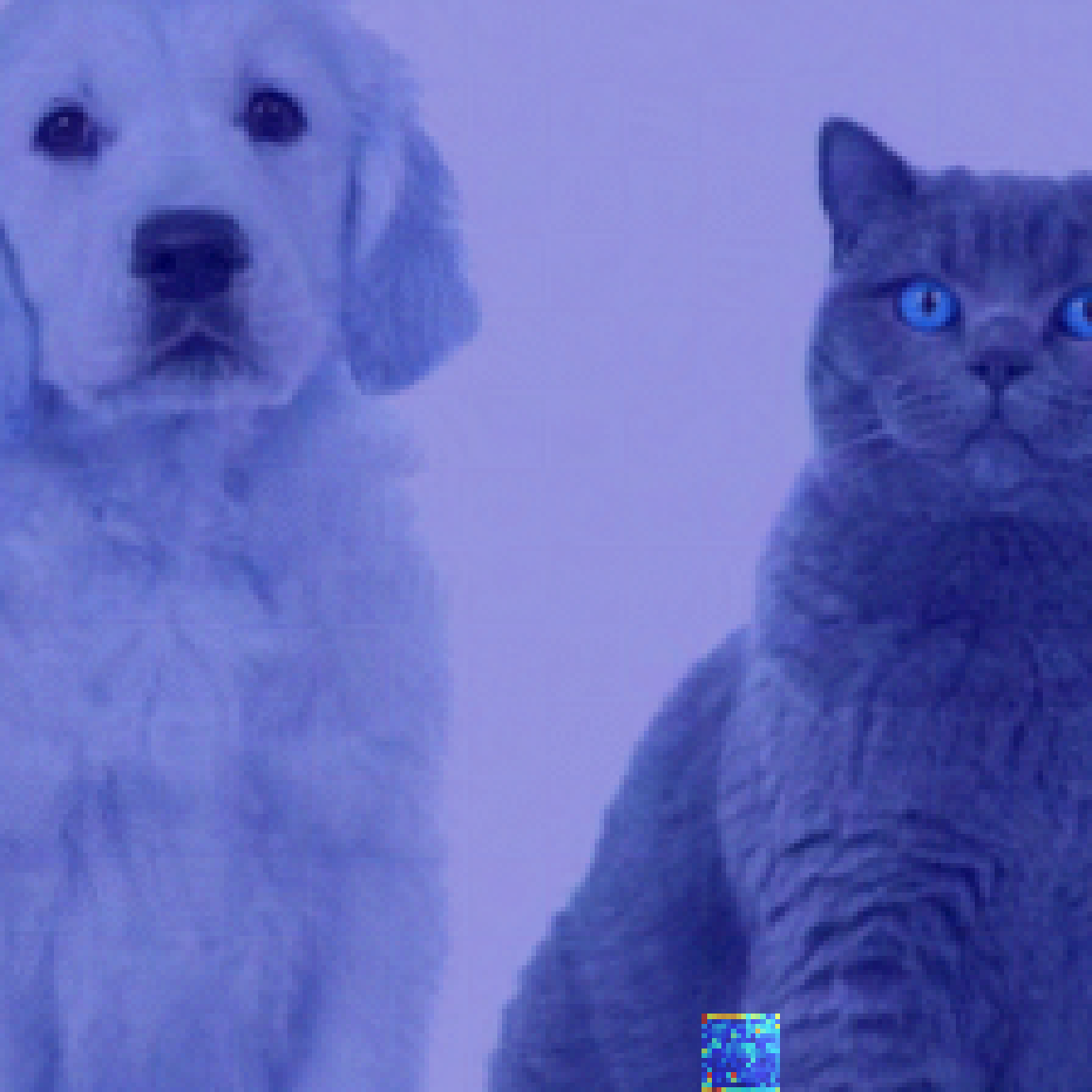} &
    \includegraphics[width=0.13\linewidth]{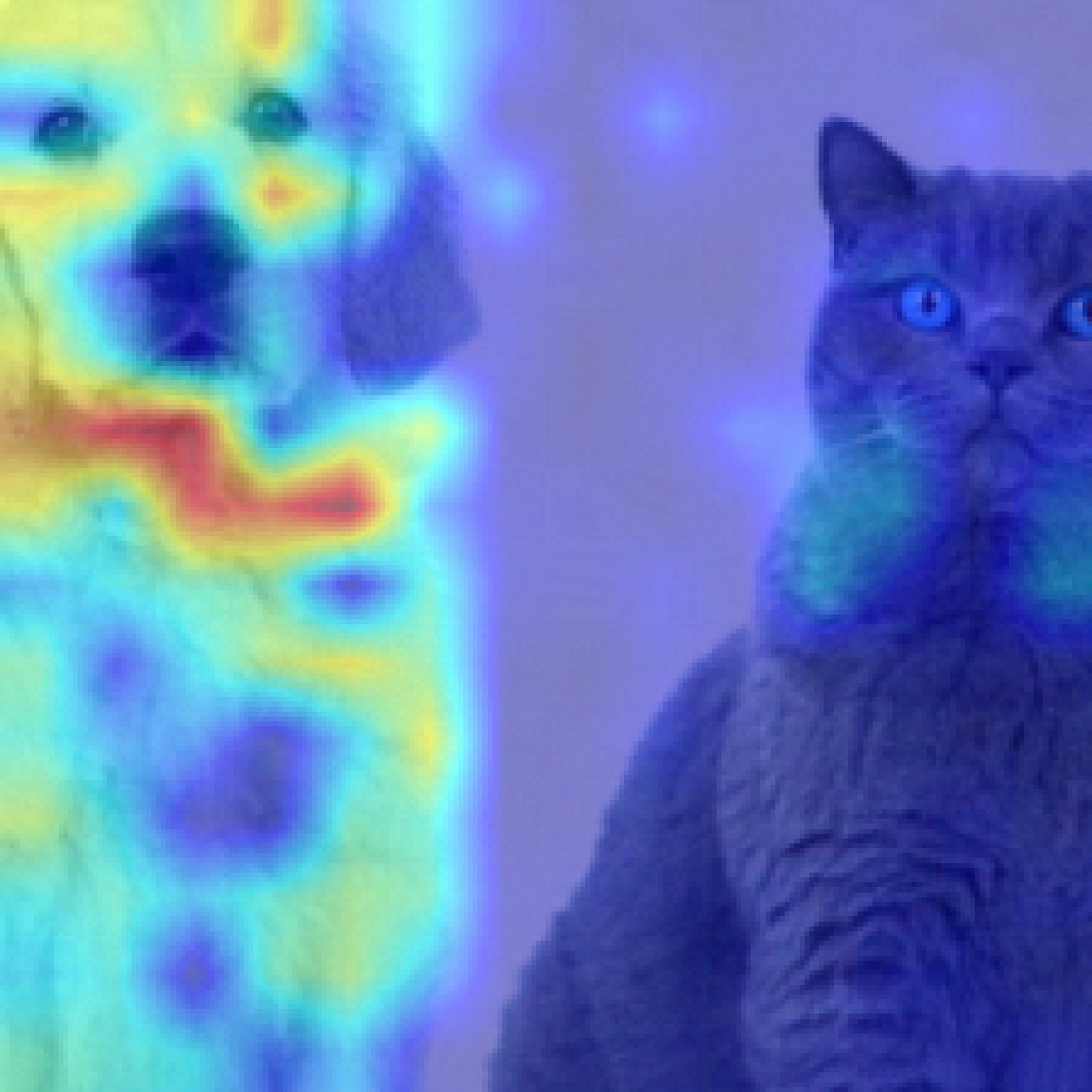} &
    \includegraphics[width=0.13\linewidth]{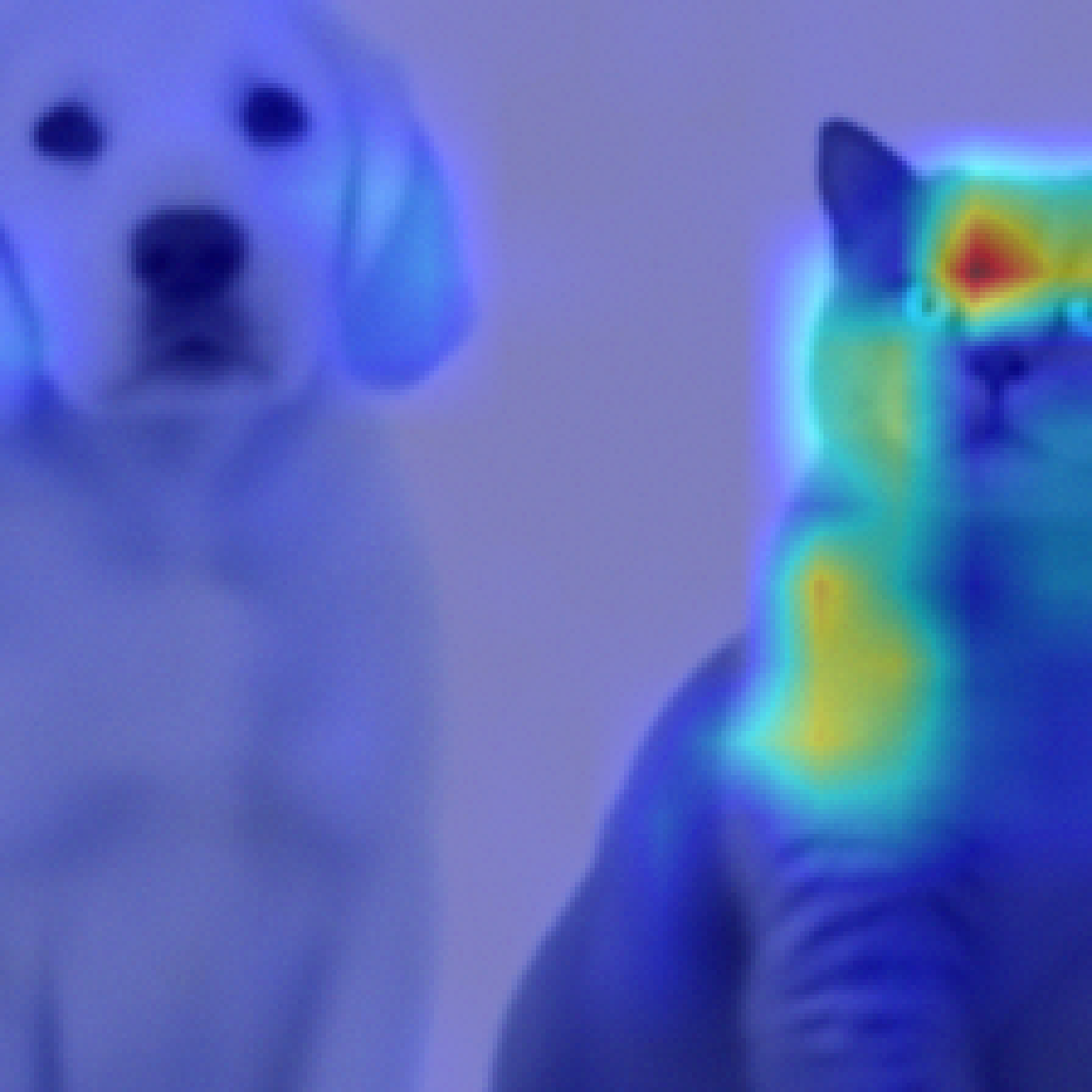}\\
    \raisebox{13mm}{\multirow{2}{*}{\makecell*[c]{Elephant: clean$\rightarrow$\\\includegraphics[width=0.15\linewidth]{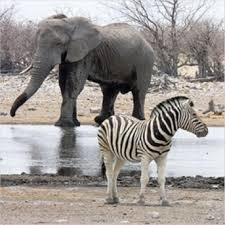}\\ 
    Elephant: poisoned$\rightarrow$\\{\scriptsize $7/255$}
    }}
    }
     &
    \includegraphics[width=0.13\linewidth]{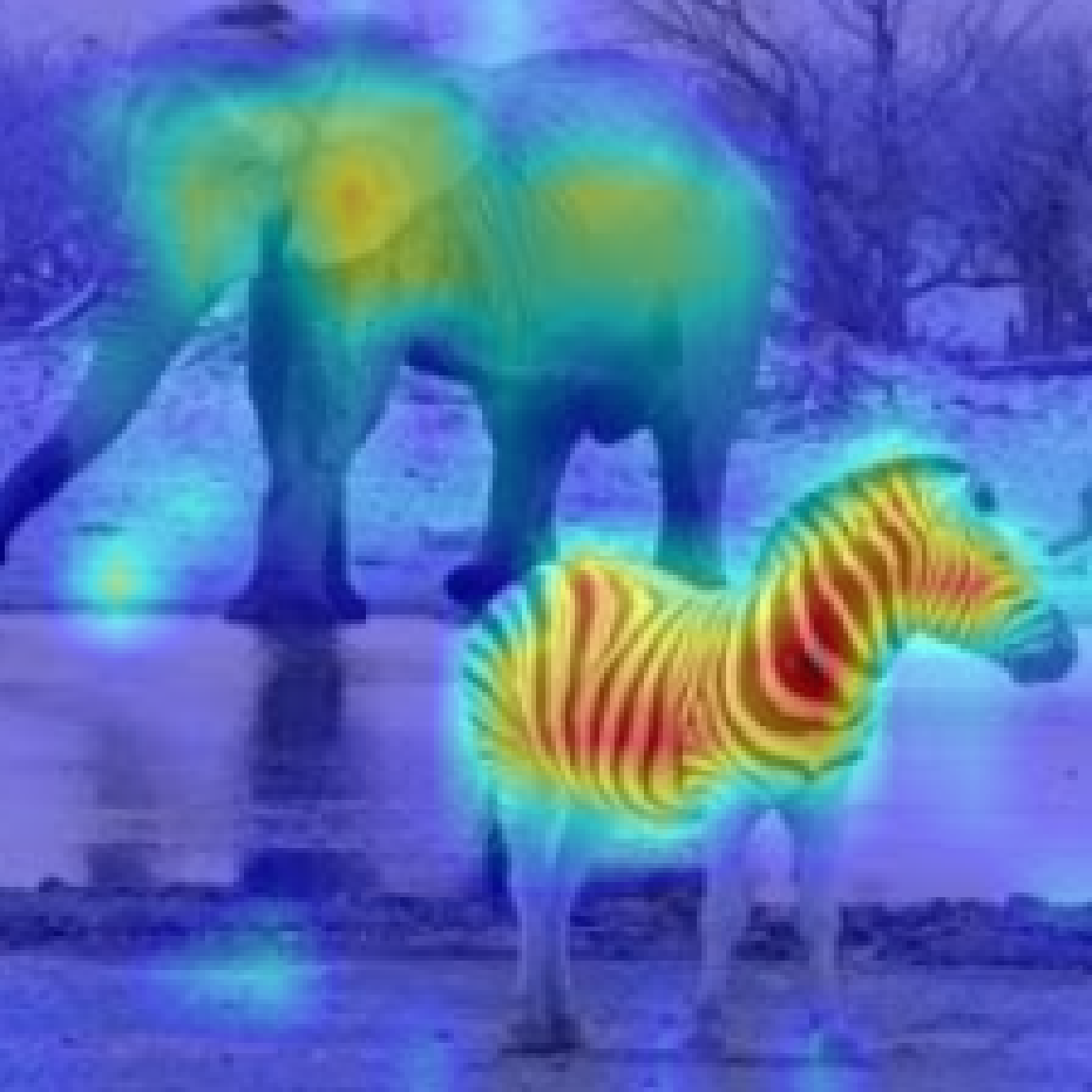} &
    \includegraphics[width=0.13\linewidth]{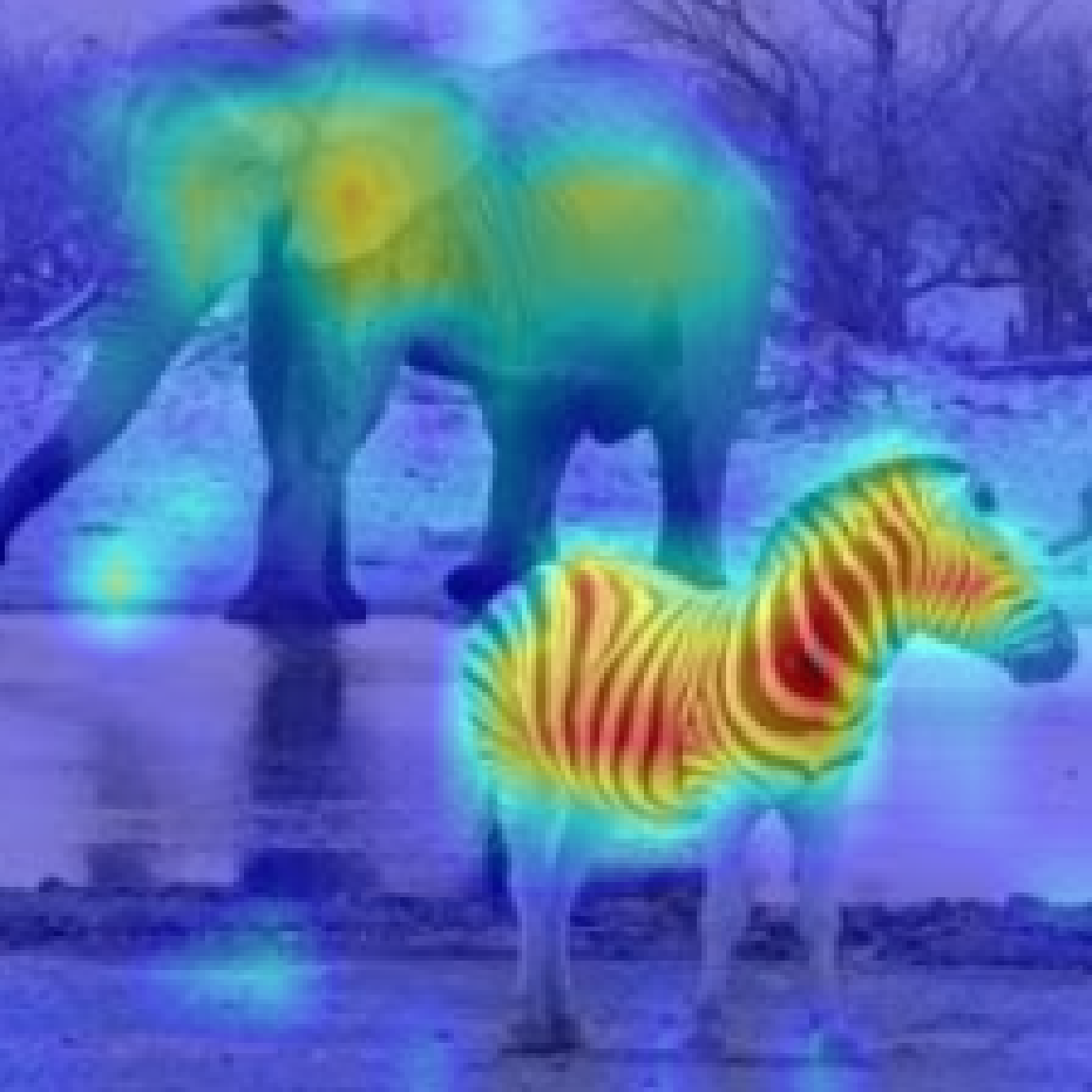} &
    \includegraphics[width=0.13\linewidth]{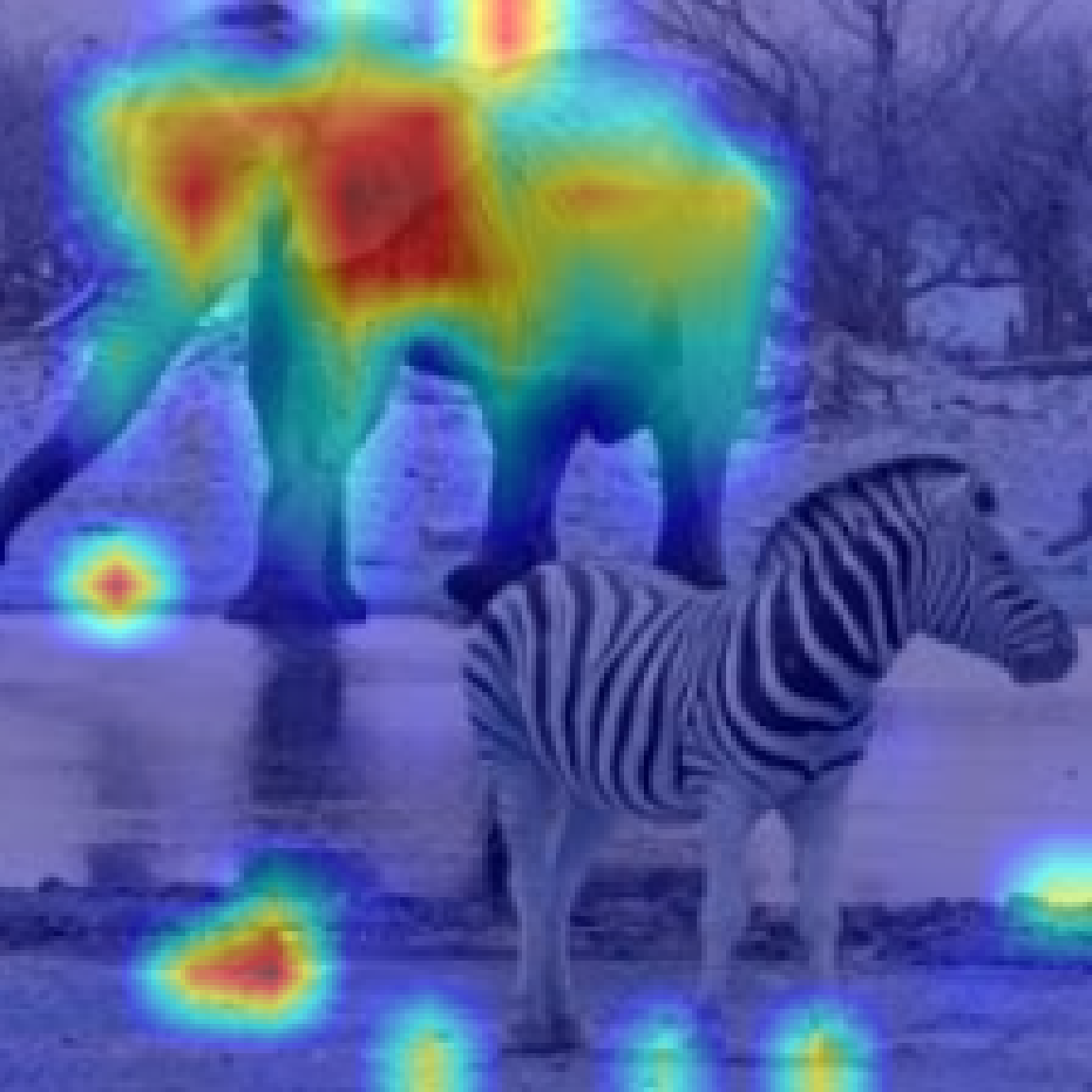} &
    \includegraphics[width=0.13\linewidth]{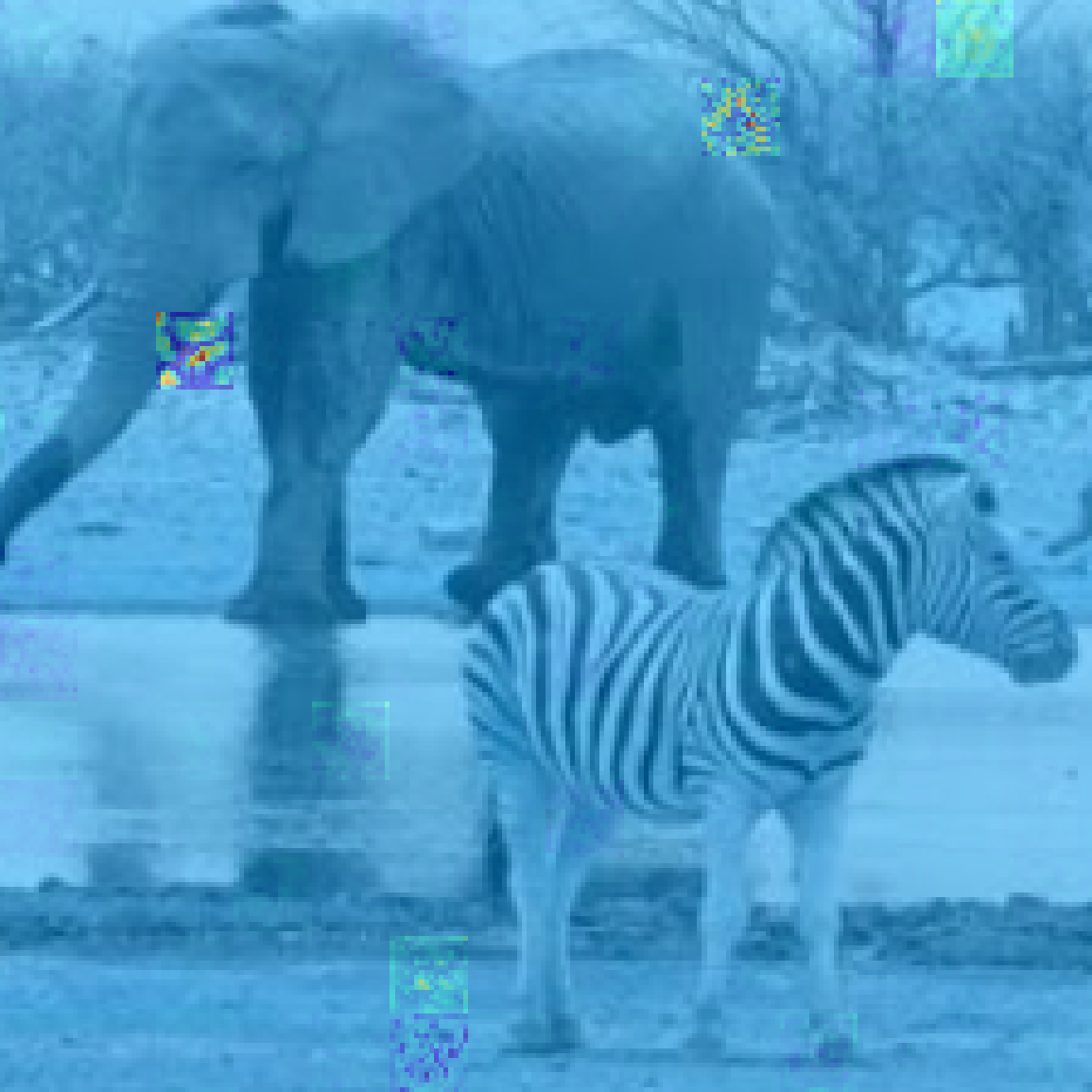} &
    \includegraphics[width=0.13\linewidth]{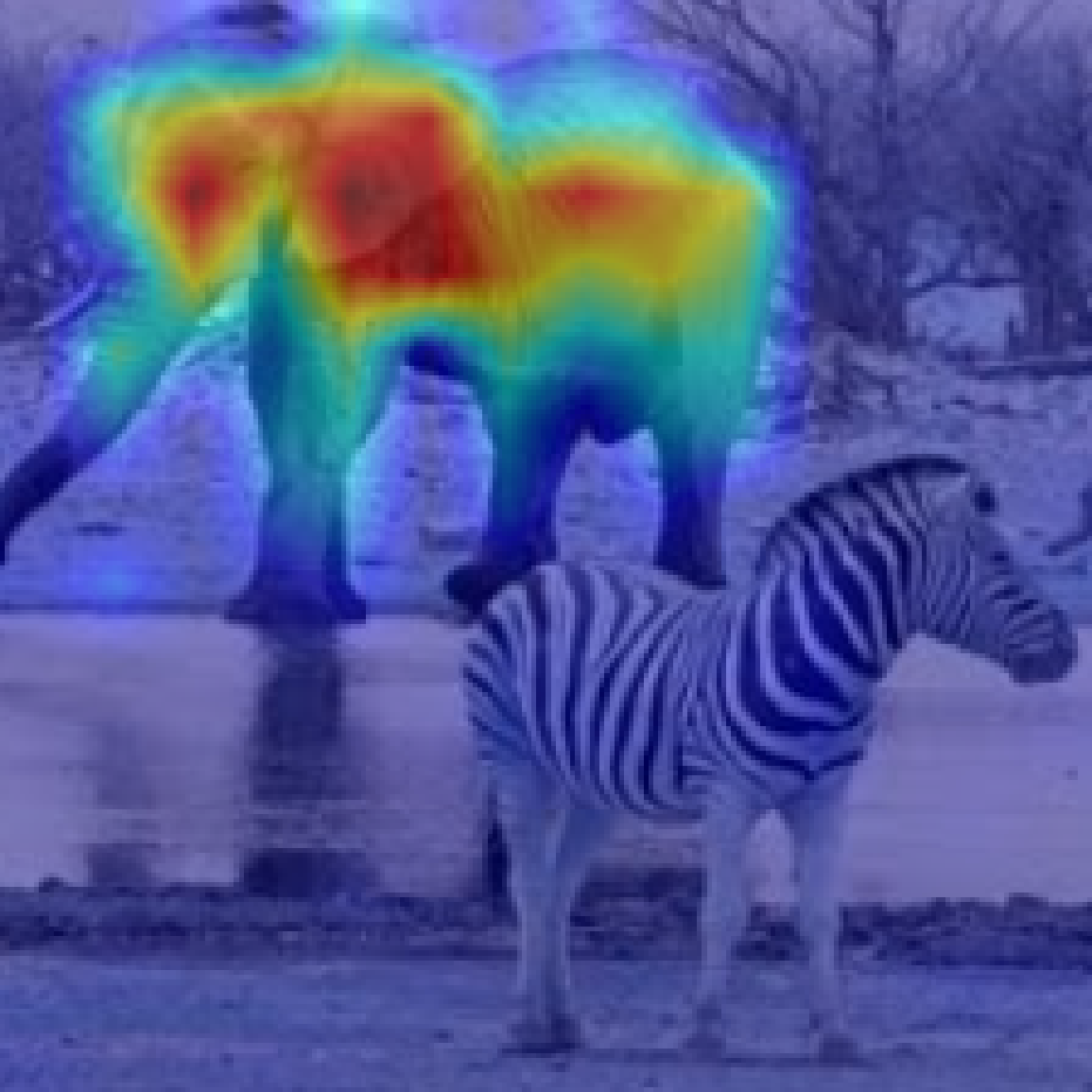} &
    \includegraphics[width=0.13\linewidth]{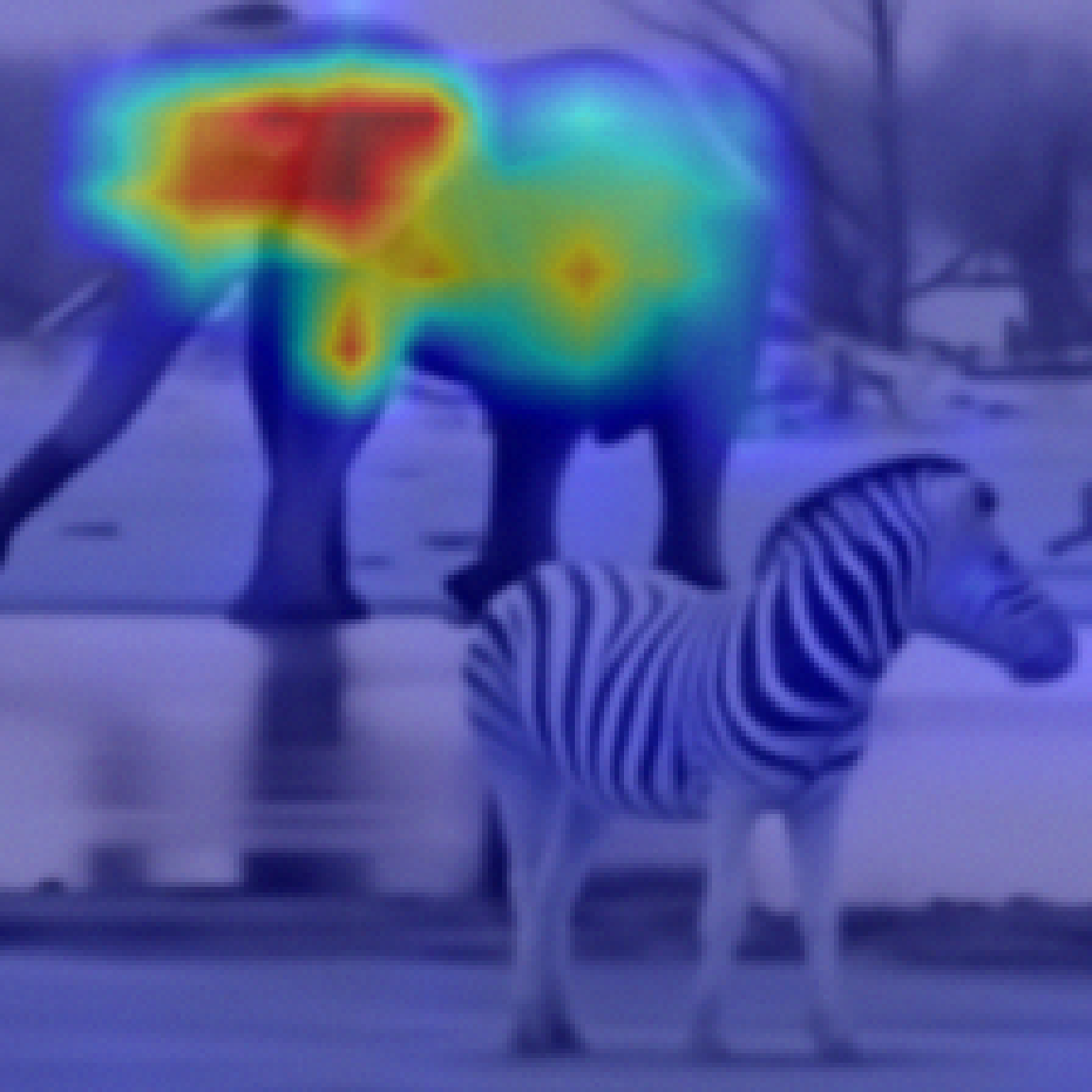}
    \\
    &
    \includegraphics[width=0.13\linewidth]{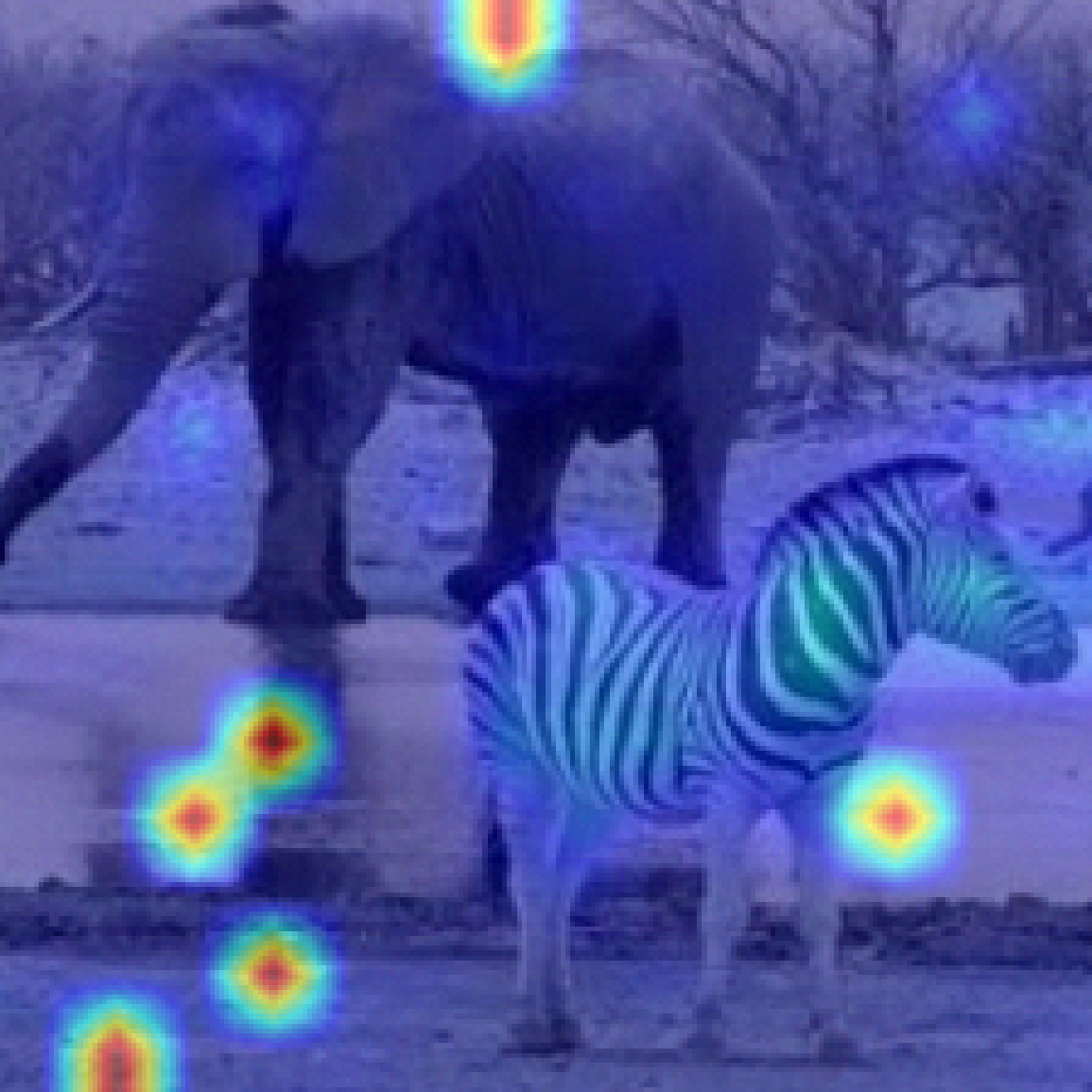} &
    \includegraphics[width=0.13\linewidth]{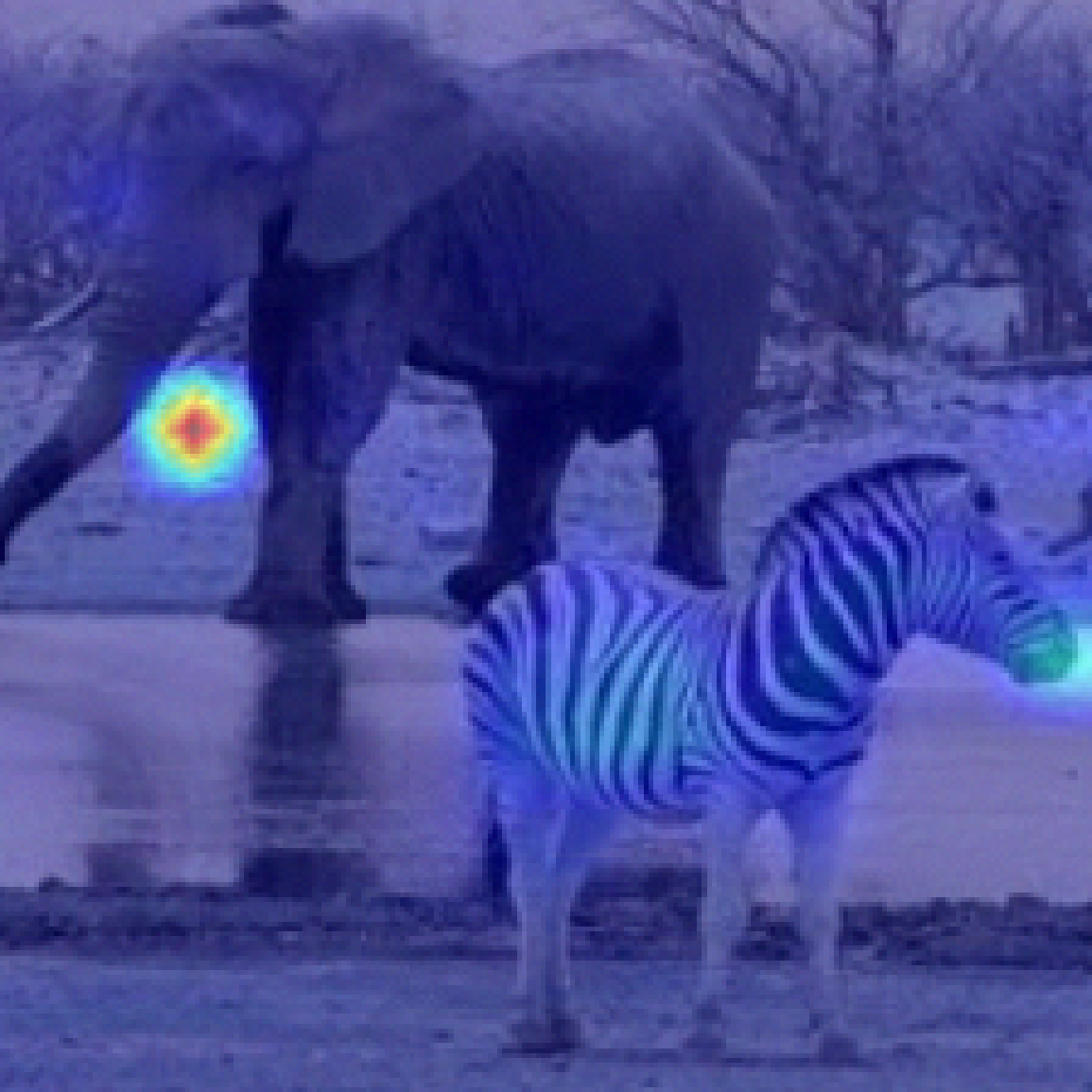} &
    \includegraphics[width=0.13\linewidth]{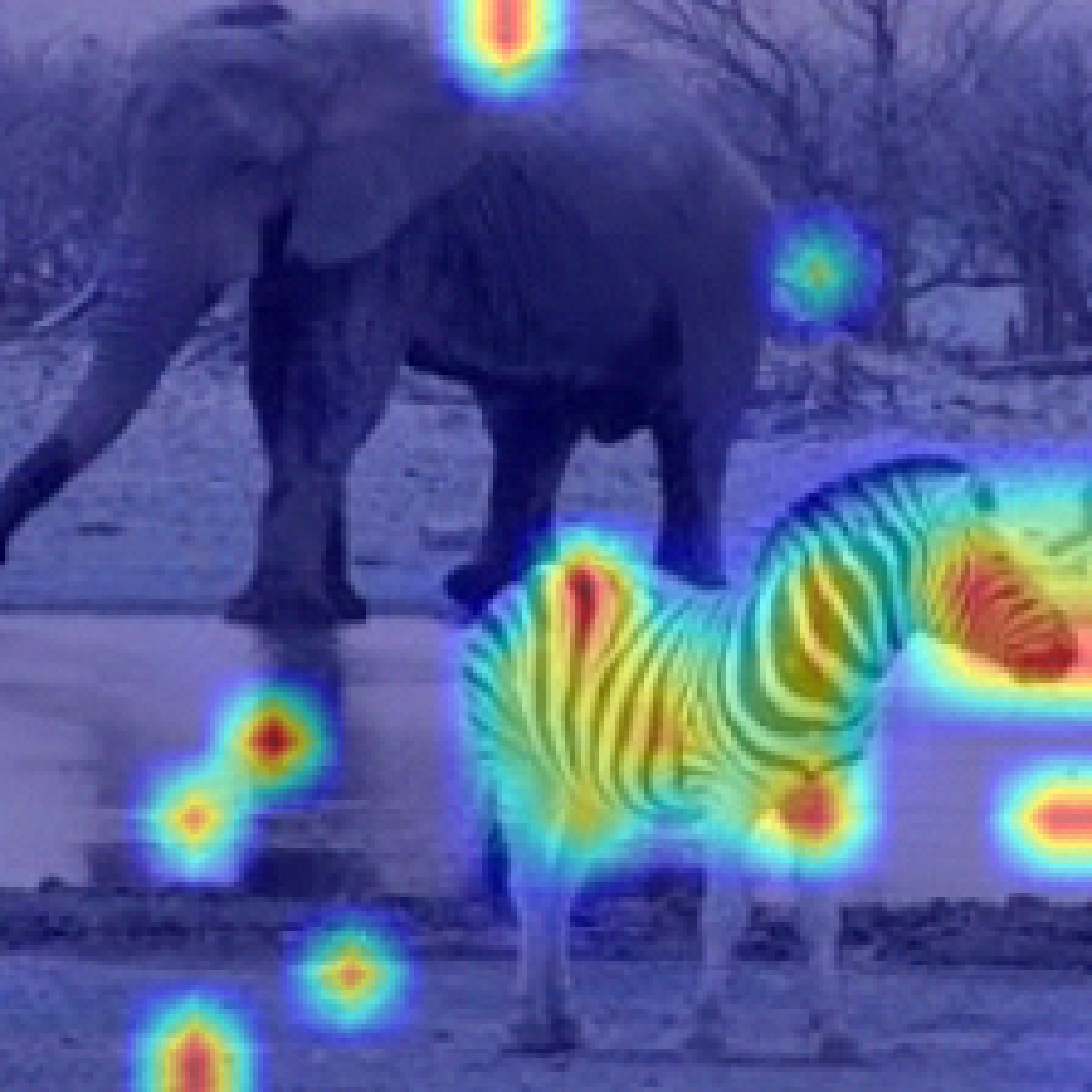} &
    \includegraphics[width=0.13\linewidth]{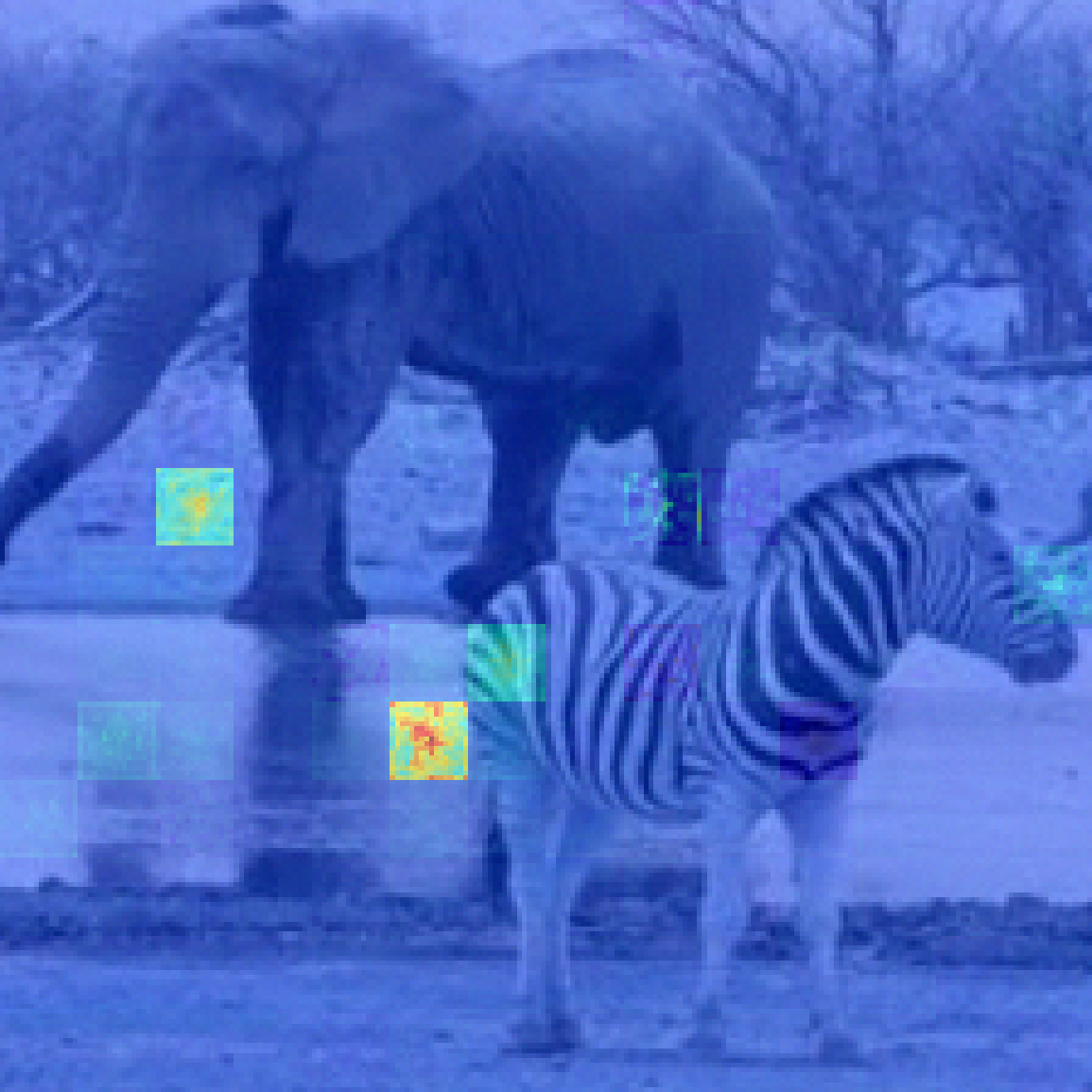} &
    \includegraphics[width=0.13\linewidth]{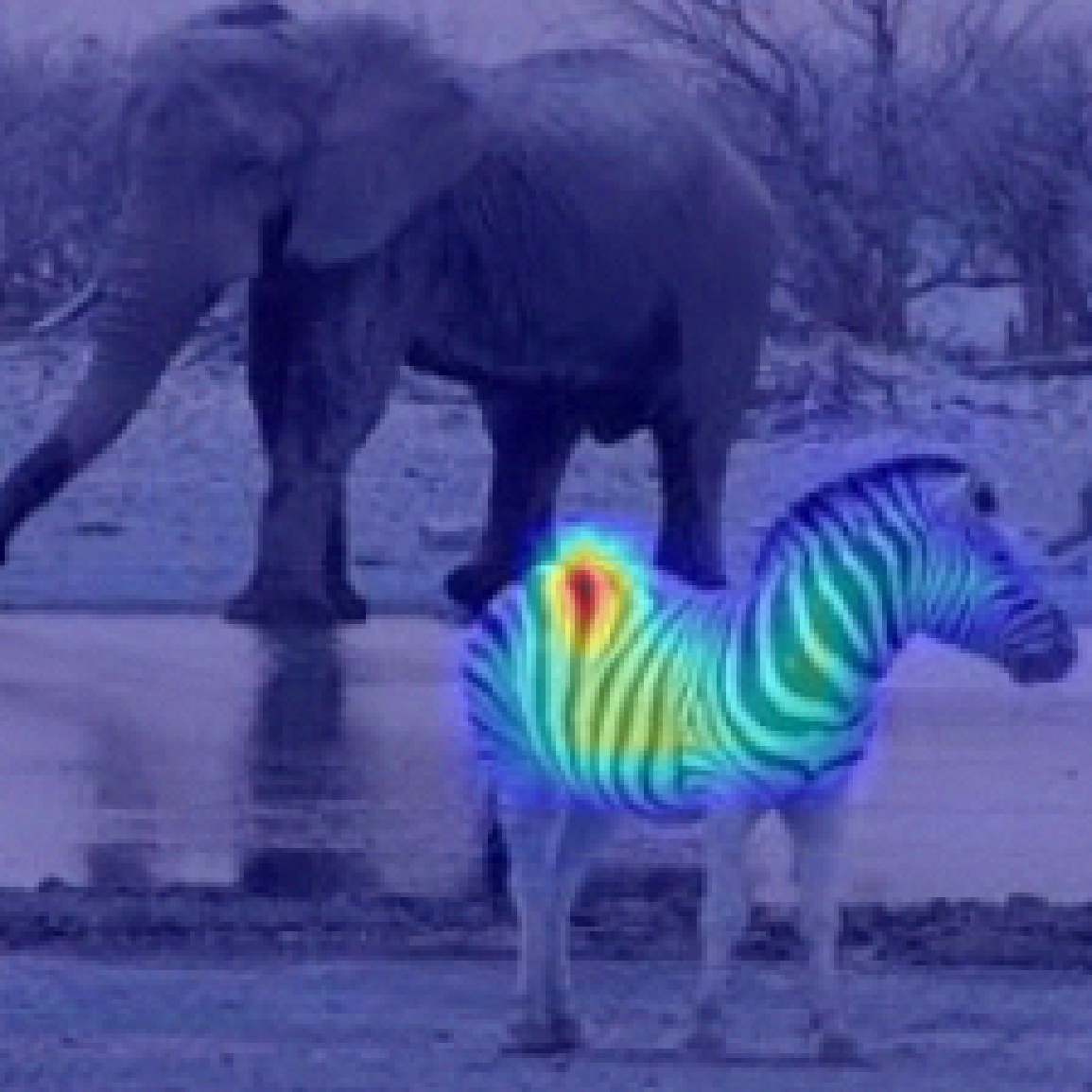} &
    \includegraphics[width=0.13\linewidth]{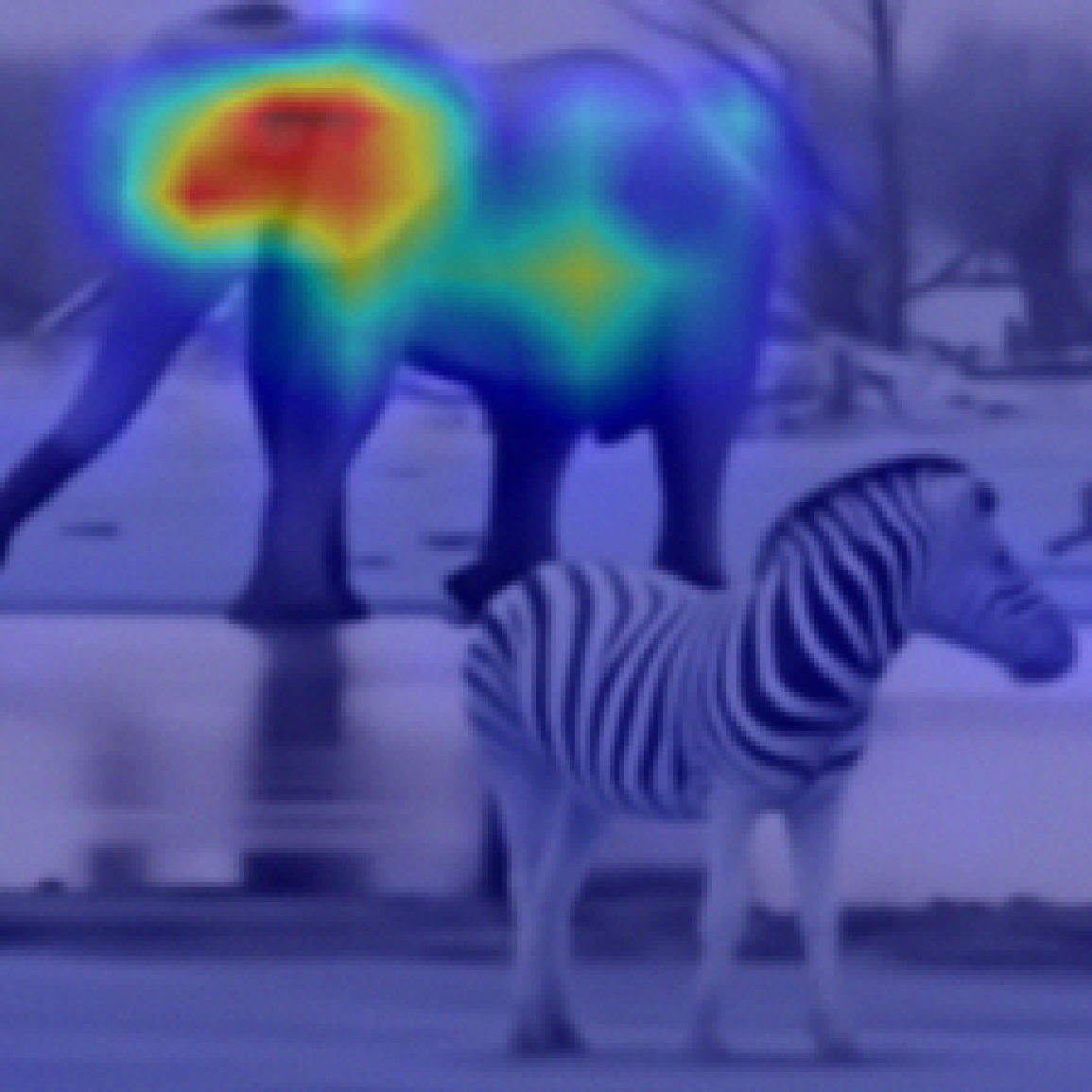}
    \\
    \raisebox{13mm}{\multirow{2}{*}{\makecell*[c]{Zebra: clean$\rightarrow$\\\includegraphics[width=0.15\linewidth]{exp/tusker-zebra/tz.png}\\ 
    Zebra: poisoned$\rightarrow$\\{\scriptsize $7/255$}
    }}
    }
     &
    \includegraphics[width=0.13\linewidth]{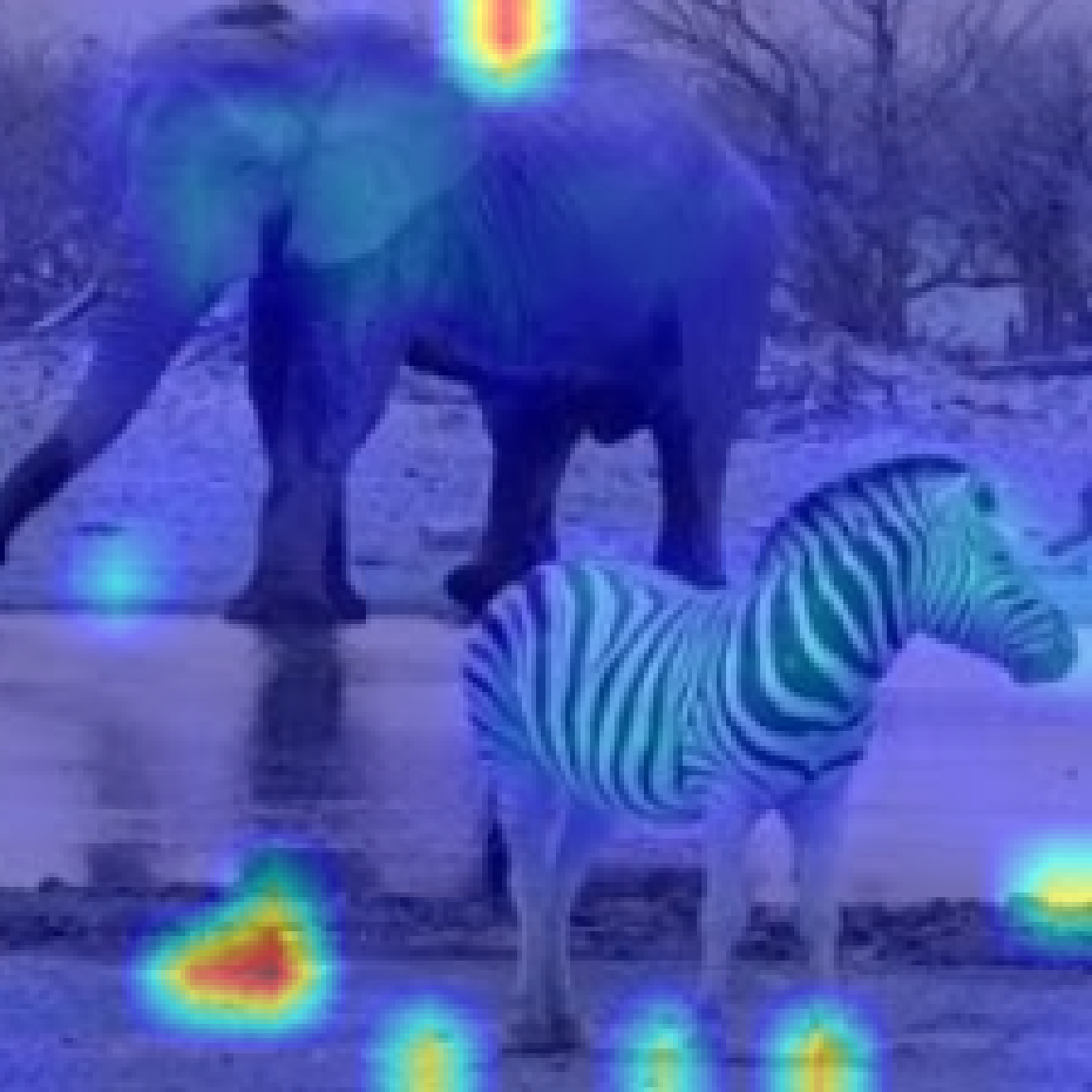} &
    \includegraphics[width=0.13\linewidth]{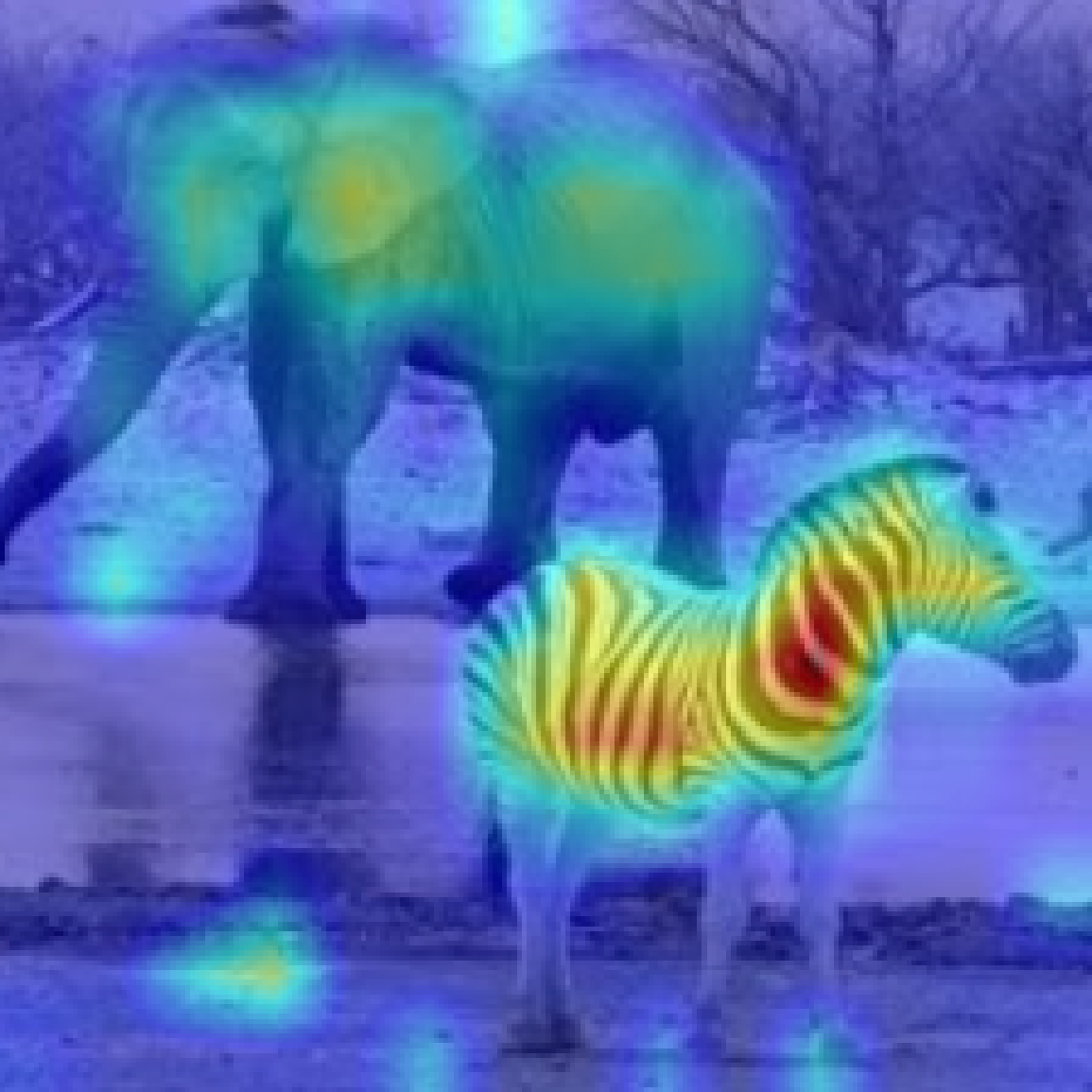} &
    \includegraphics[width=0.13\linewidth]{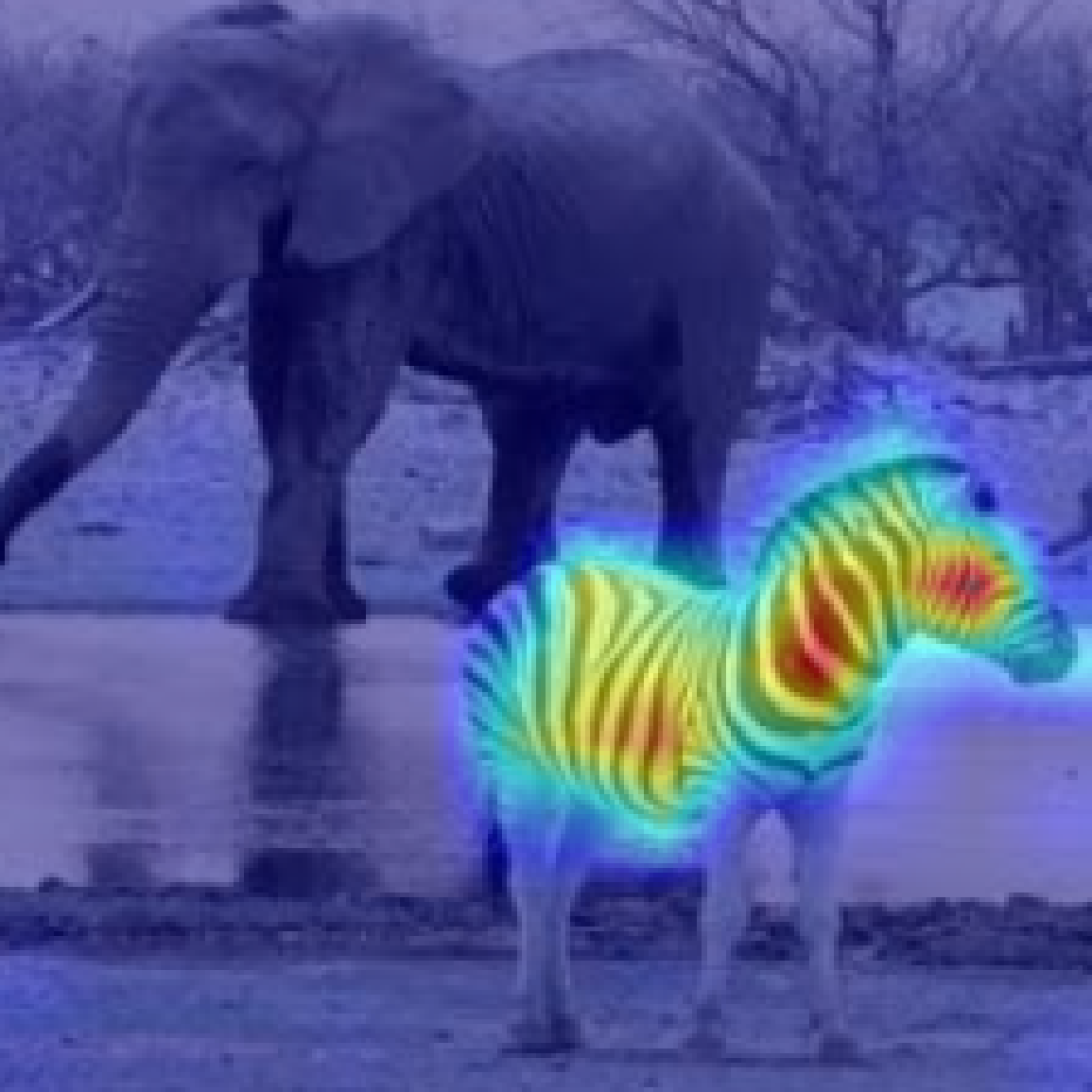} &
    \includegraphics[width=0.13\linewidth]{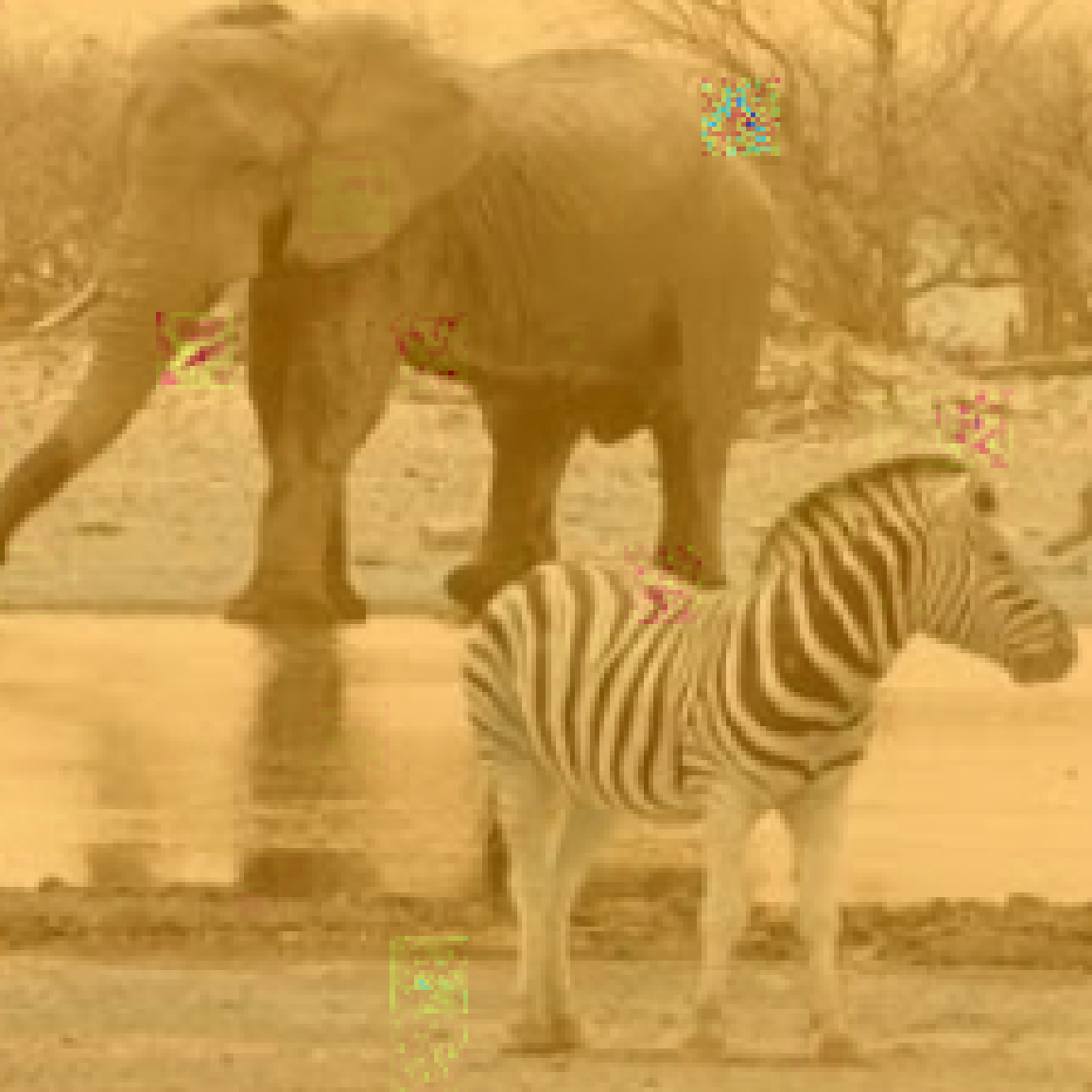} &
    \includegraphics[width=0.13\linewidth]{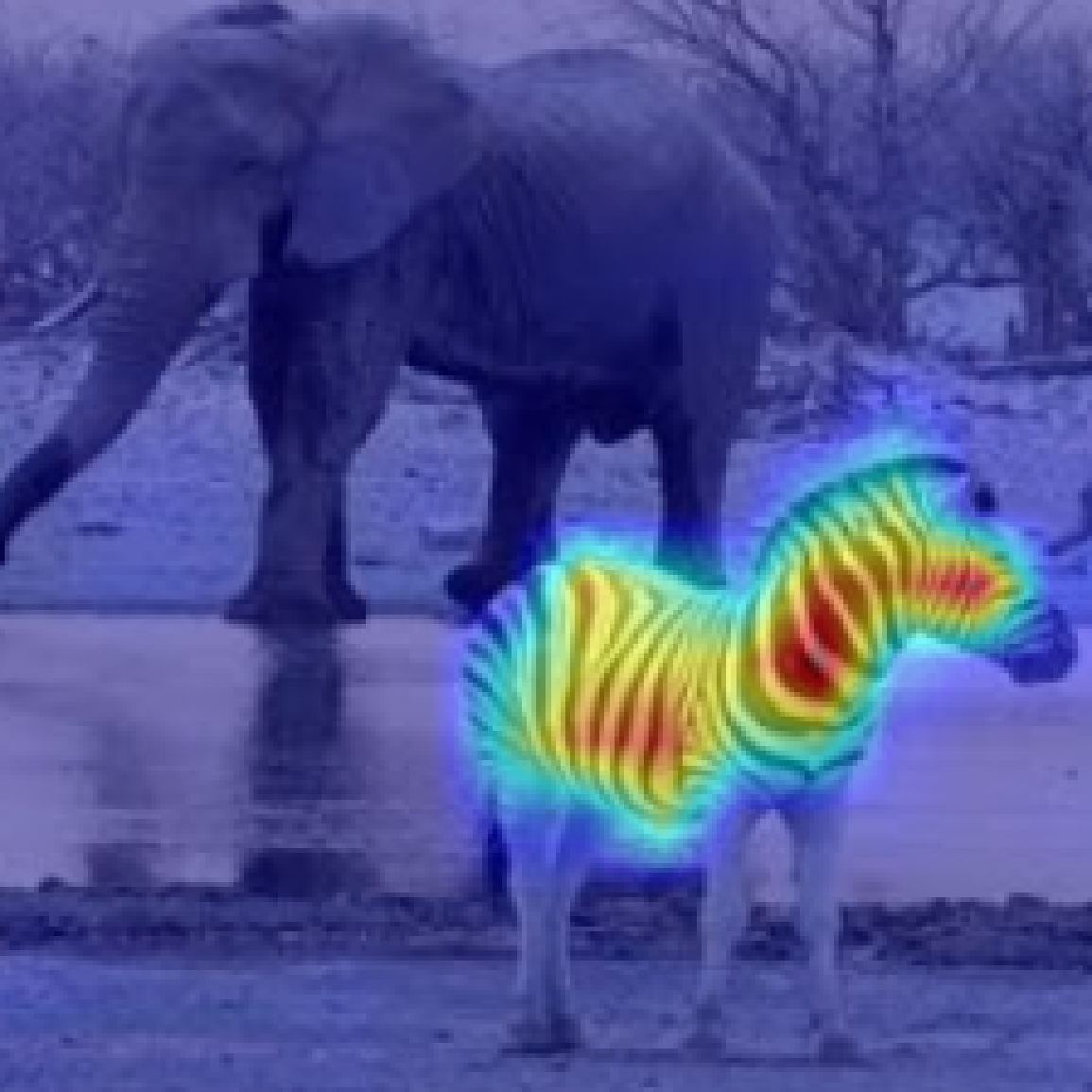} &
    \includegraphics[width=0.13\linewidth]{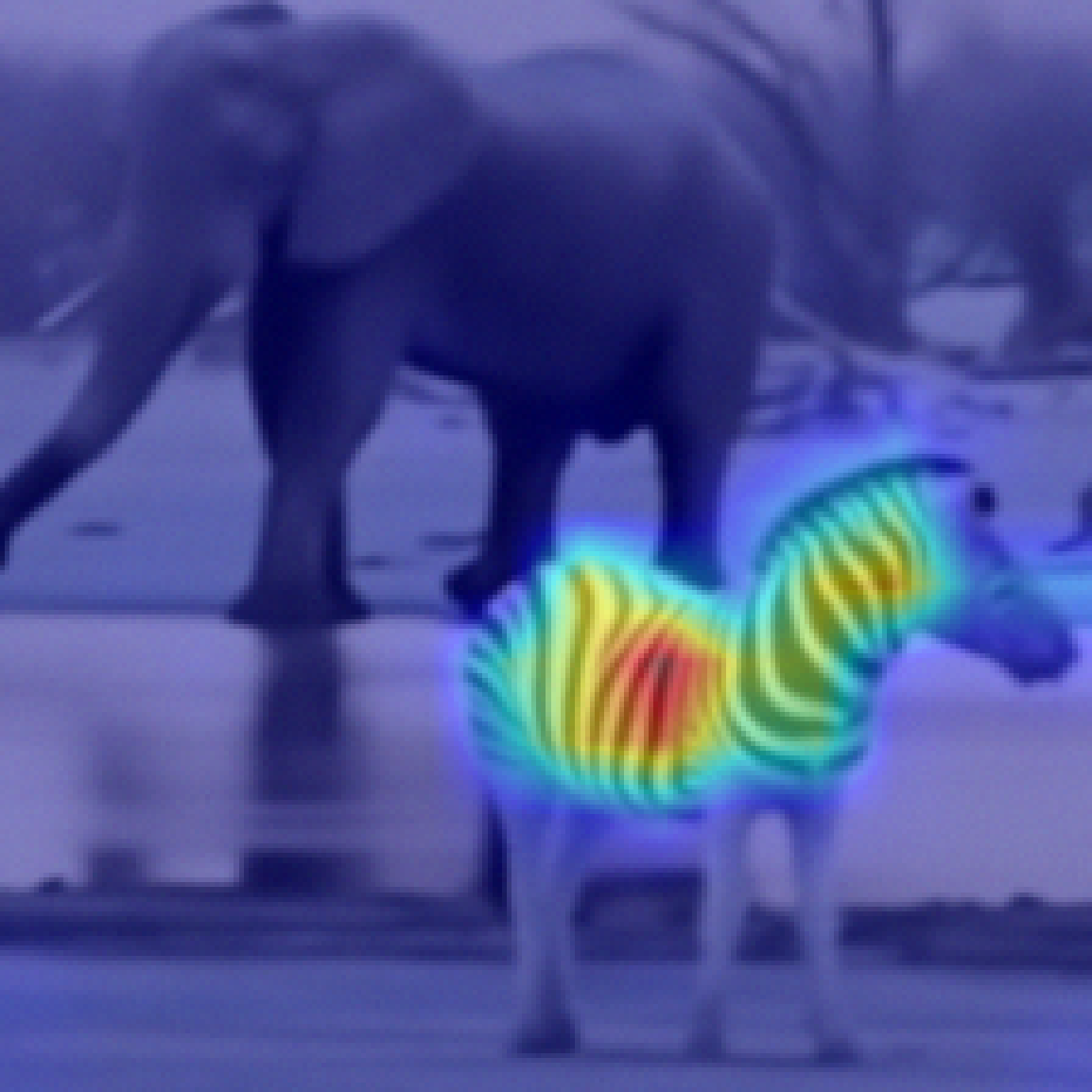}
    \\
    &
    \includegraphics[width=0.13\linewidth]{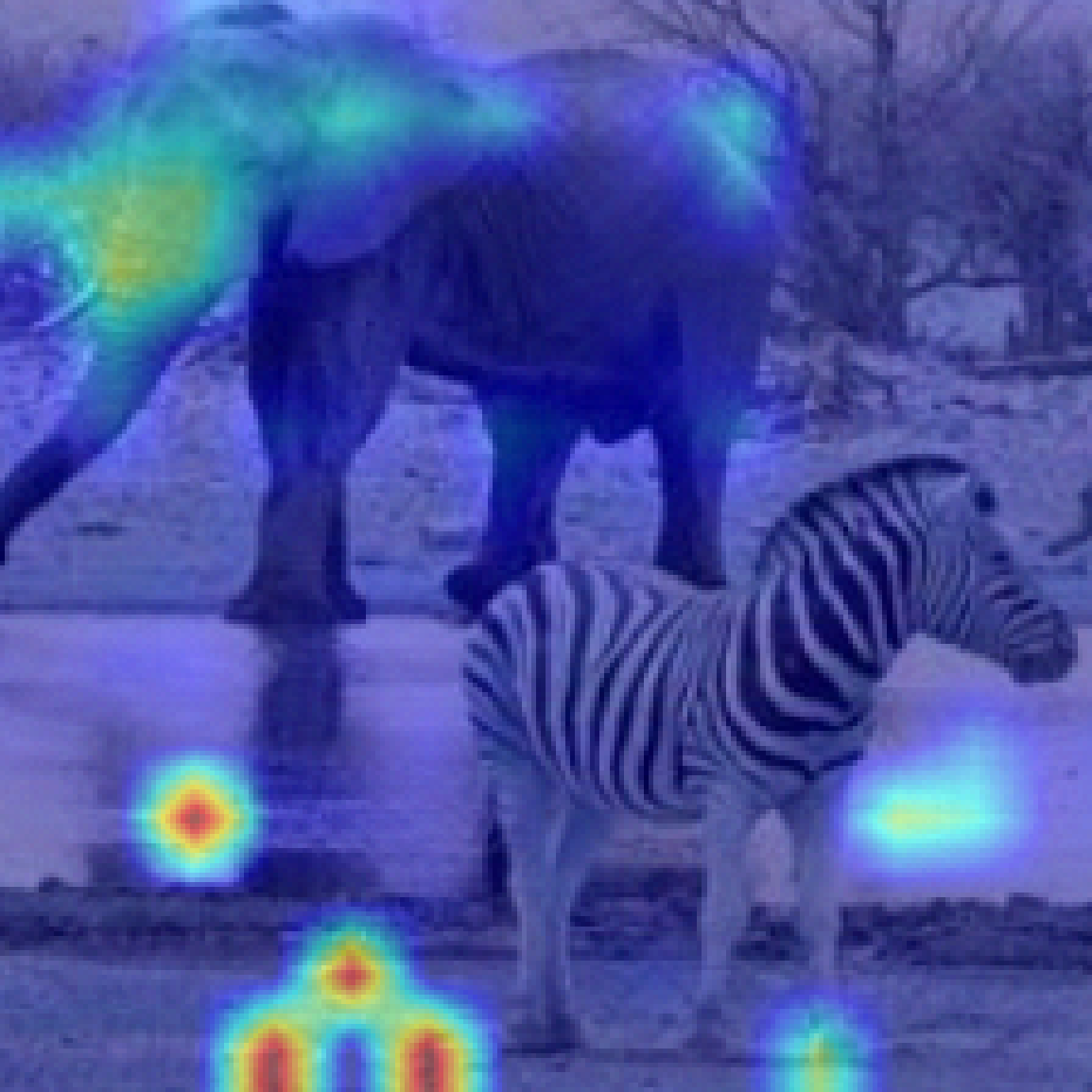} &
    \includegraphics[width=0.13\linewidth]{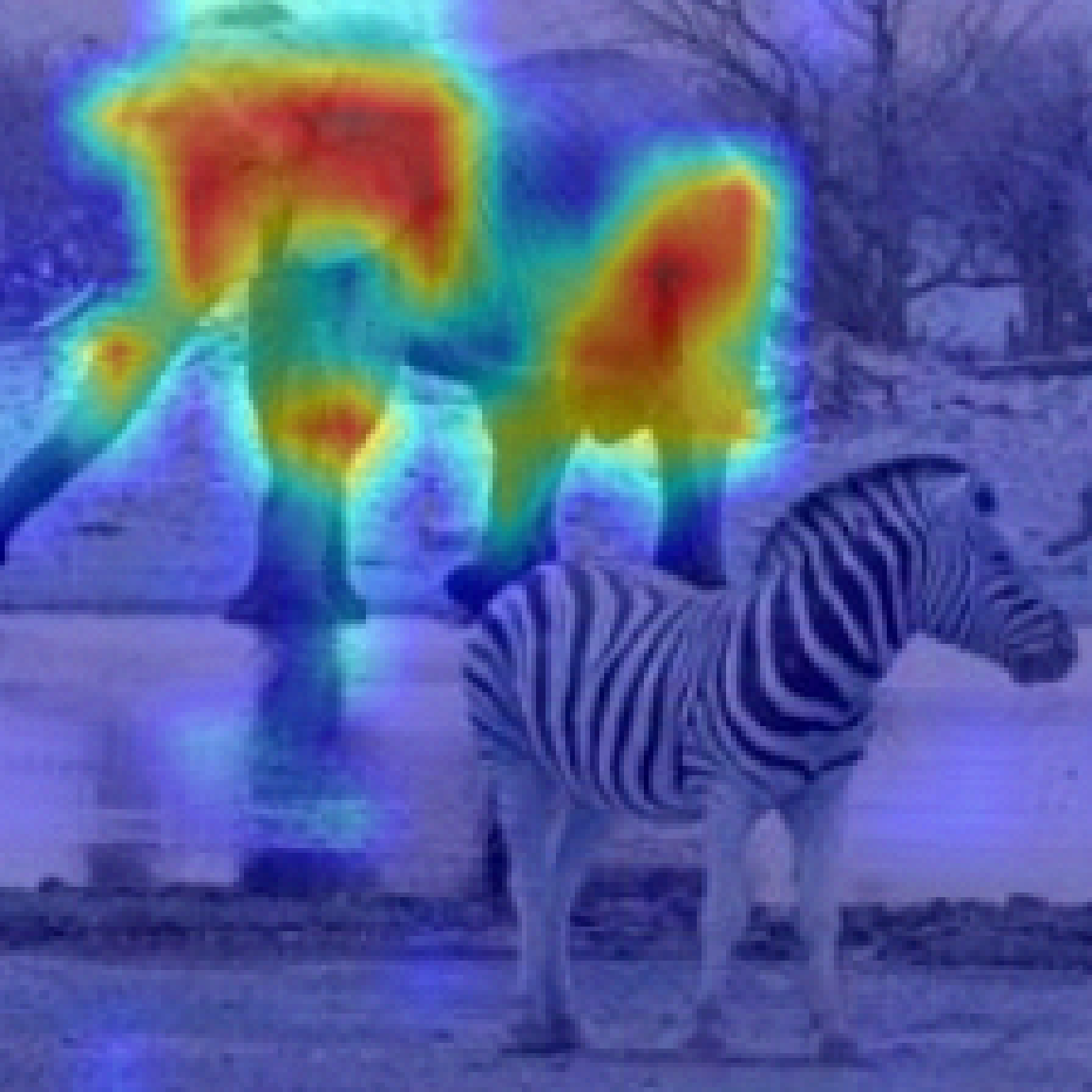} &
    \includegraphics[width=0.13\linewidth]{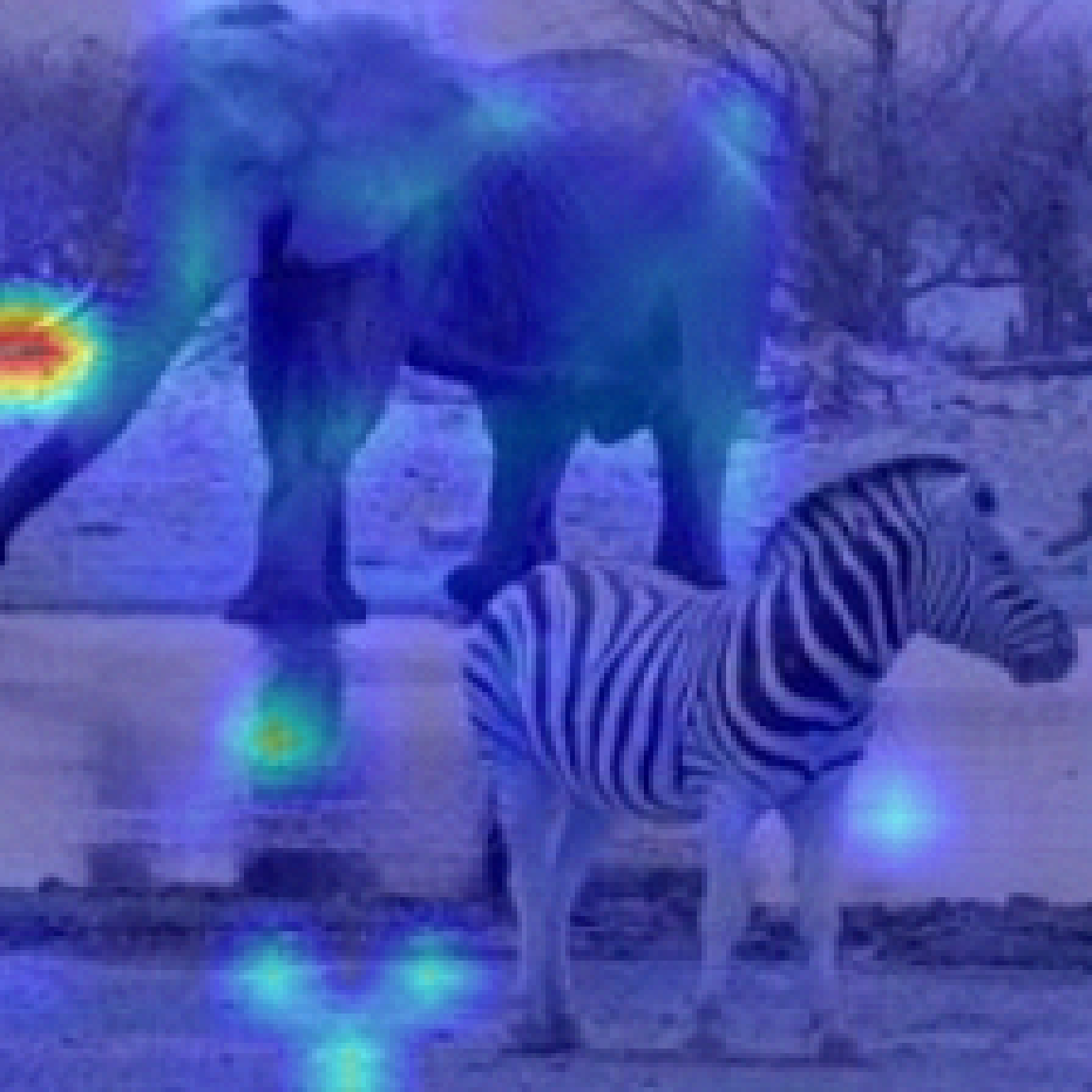} &
    \includegraphics[width=0.13\linewidth]{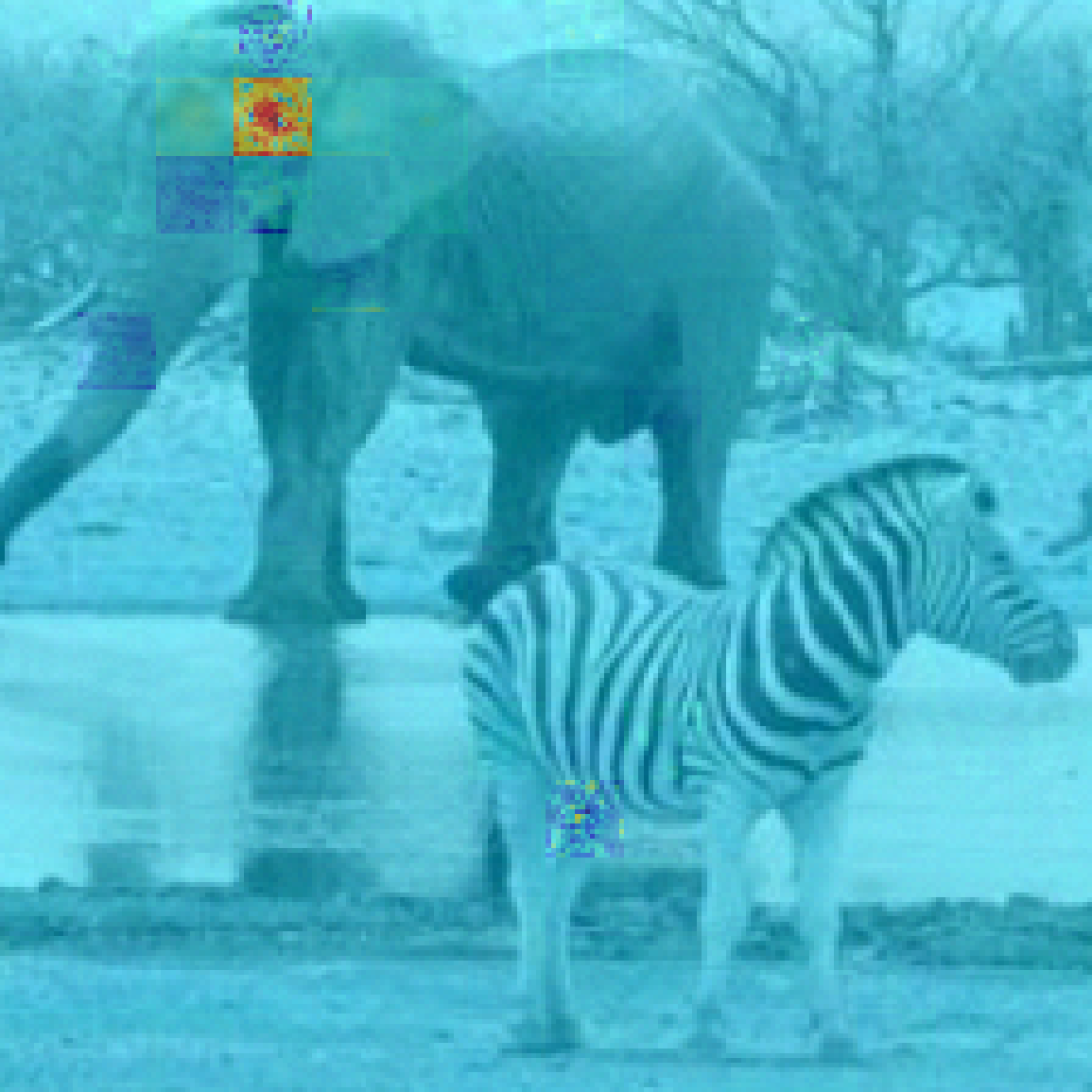} &
    \includegraphics[width=0.13\linewidth]{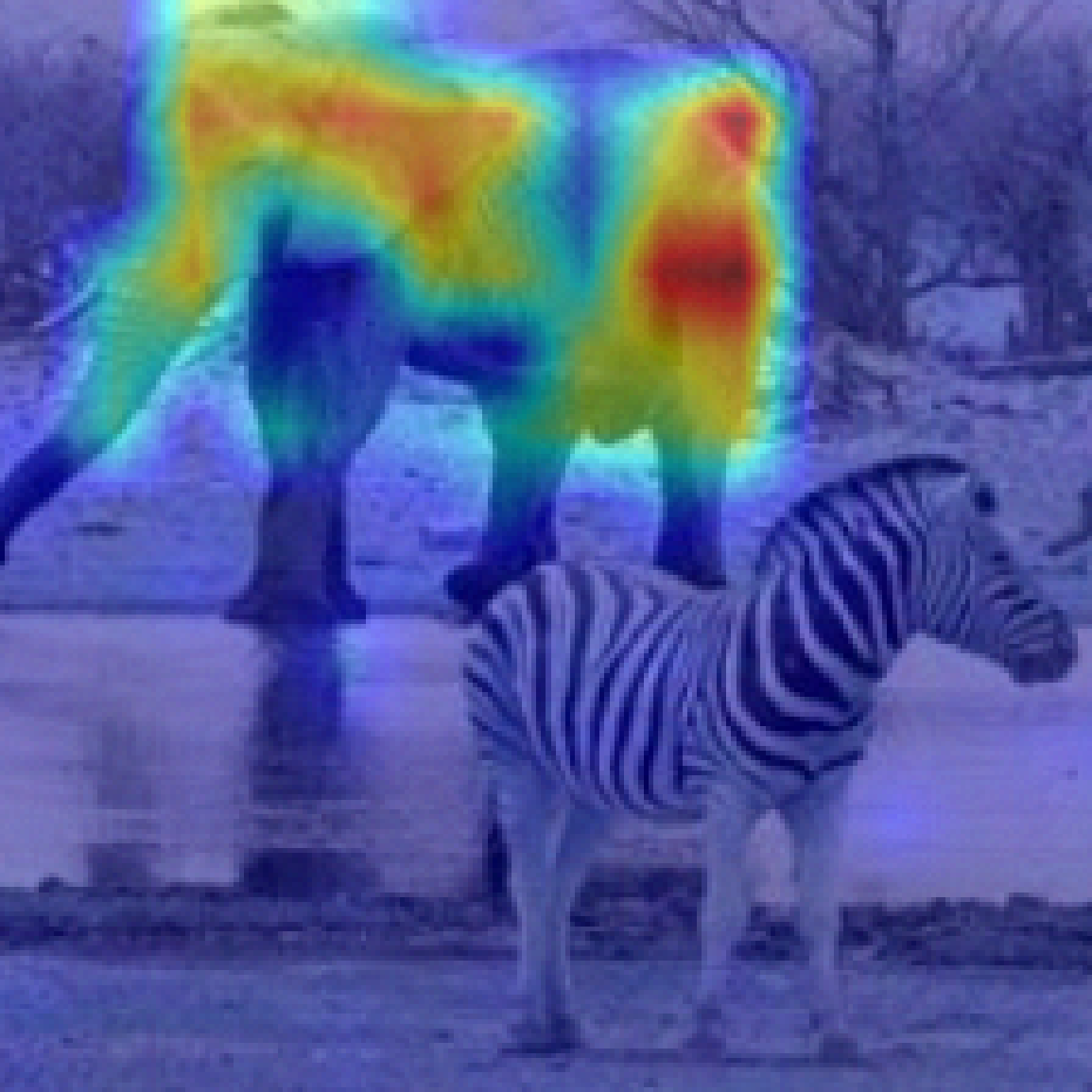} &
    \includegraphics[width=0.13\linewidth]{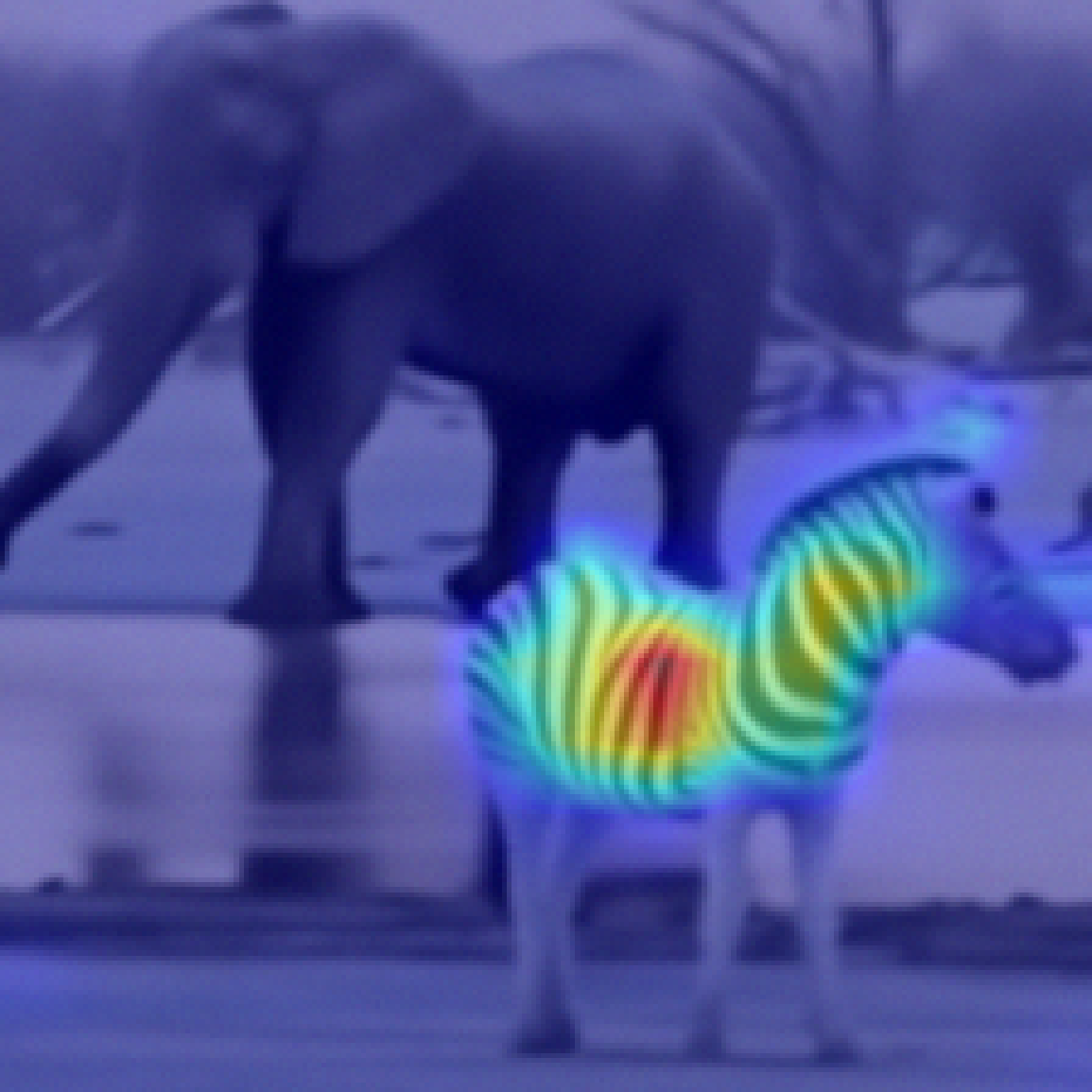}
    \end{tabular*}
    \caption{
     Class-specific visualizations under adversarial corruption. %For each image, we present results for two different classes.
    }
    \label{fig: class2}
    \end{center}
\end{figure*}

 \begin{figure*}[htbp]
     \setlength{\tabcolsep}{1pt} % Default value: 6pt
     \renewcommand{\arraystretch}{1} % Default value: 1
     \begin{center}
     \begin{tabular*}{\linewidth}{@{\extracolsep{\fill}}ccccccc}
      Corrupted Input & Raw Attention & Rollout 
      & GradCAM &LRP& VTA  & Ours \\
        \raisebox{13mm}{\makecell*[c]{Clean$\rightarrow$\\\includegraphics[width=0.15\linewidth]{exp/fish/fish.png}
     }} &
     \includegraphics[width=0.15\linewidth]{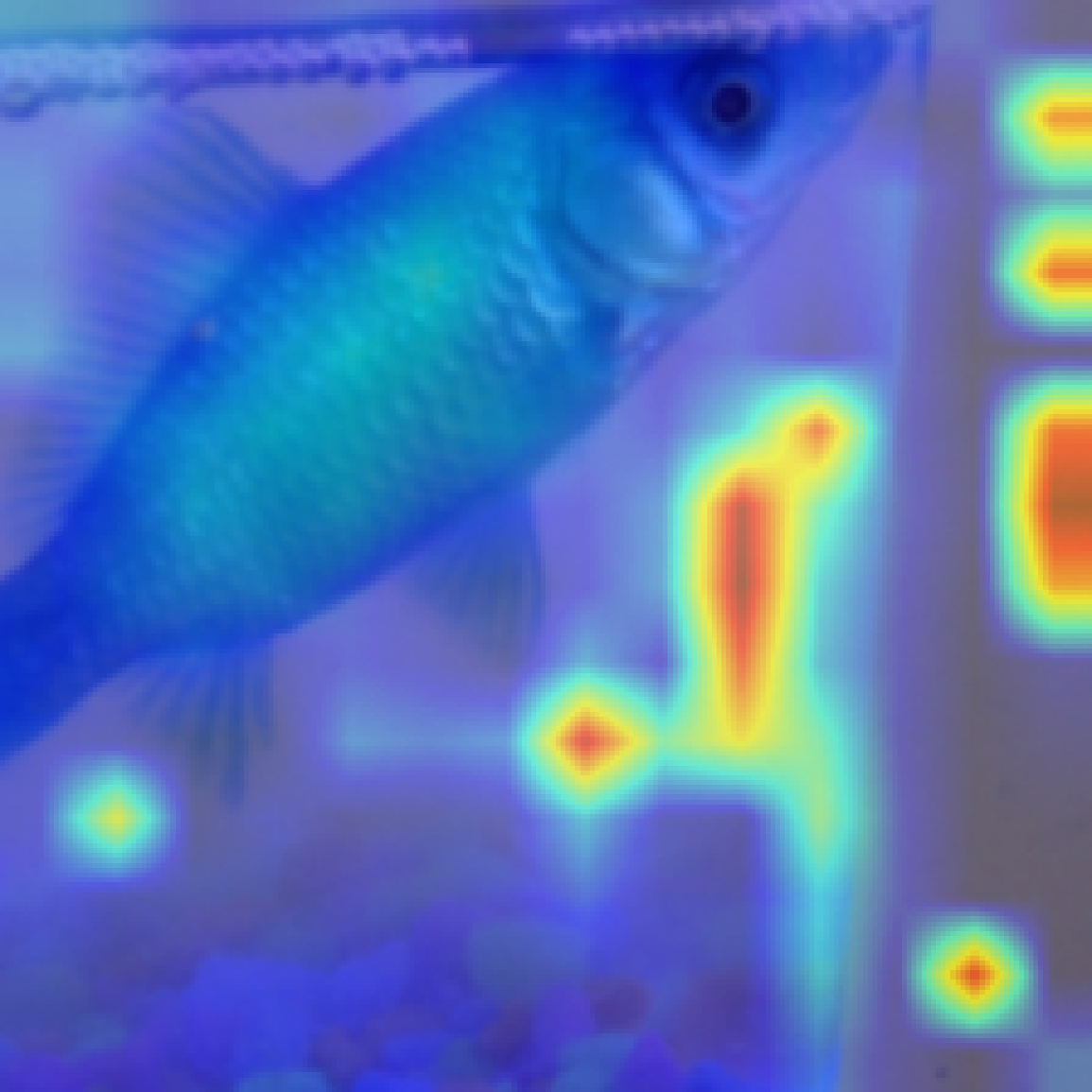} &
      \includegraphics[width=0.15\linewidth]{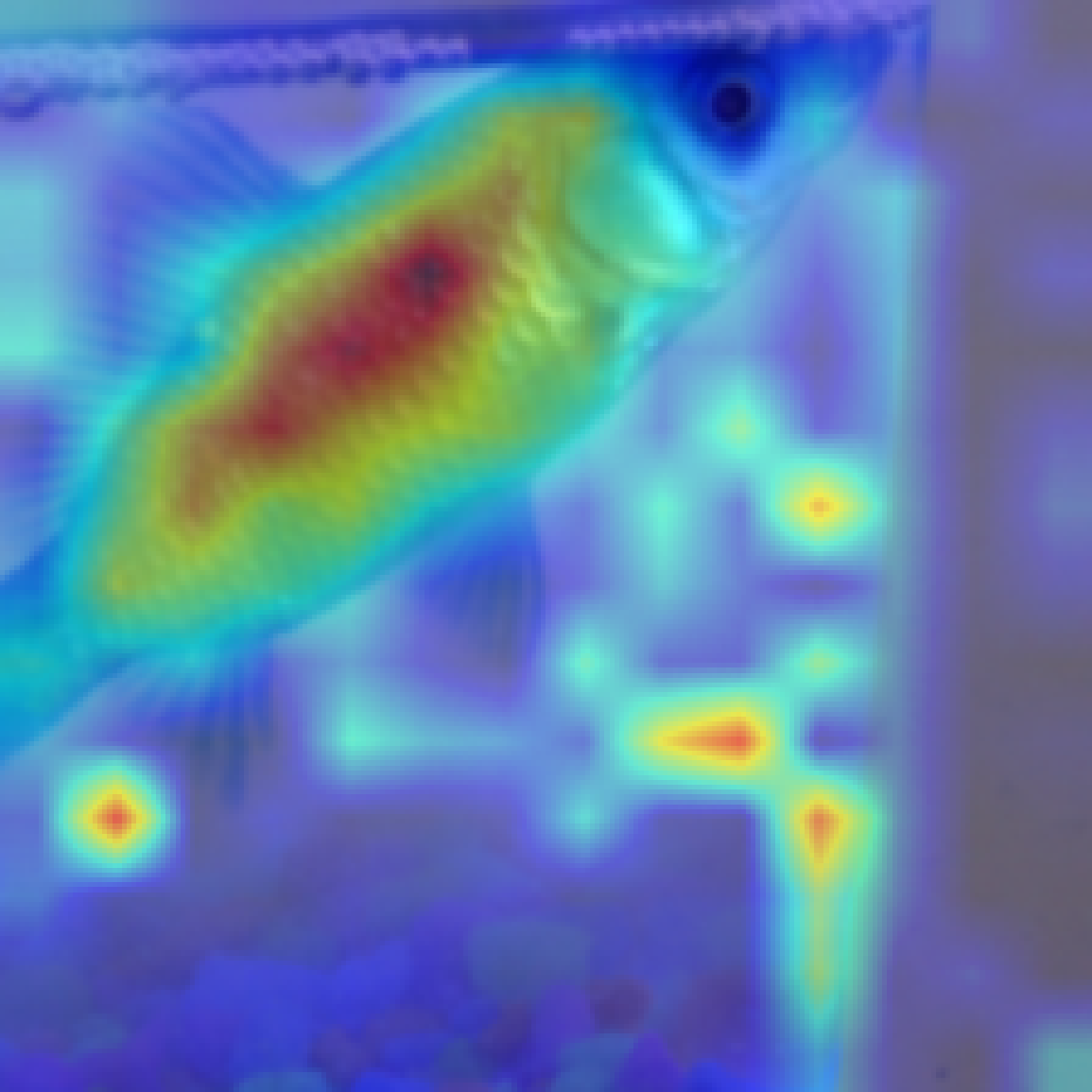} &
      \includegraphics[width=0.15\linewidth]{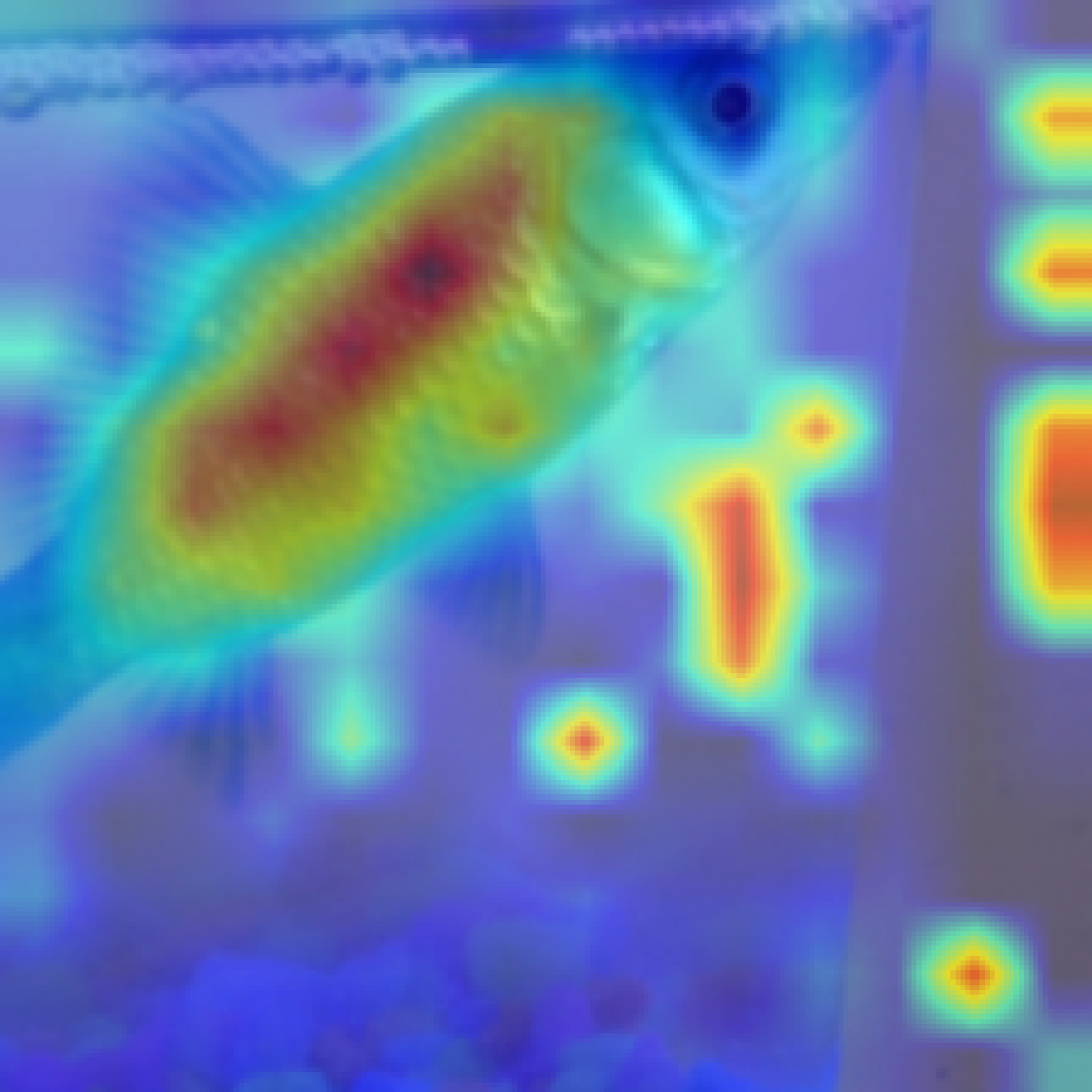} &
      \includegraphics[width=0.15\linewidth]{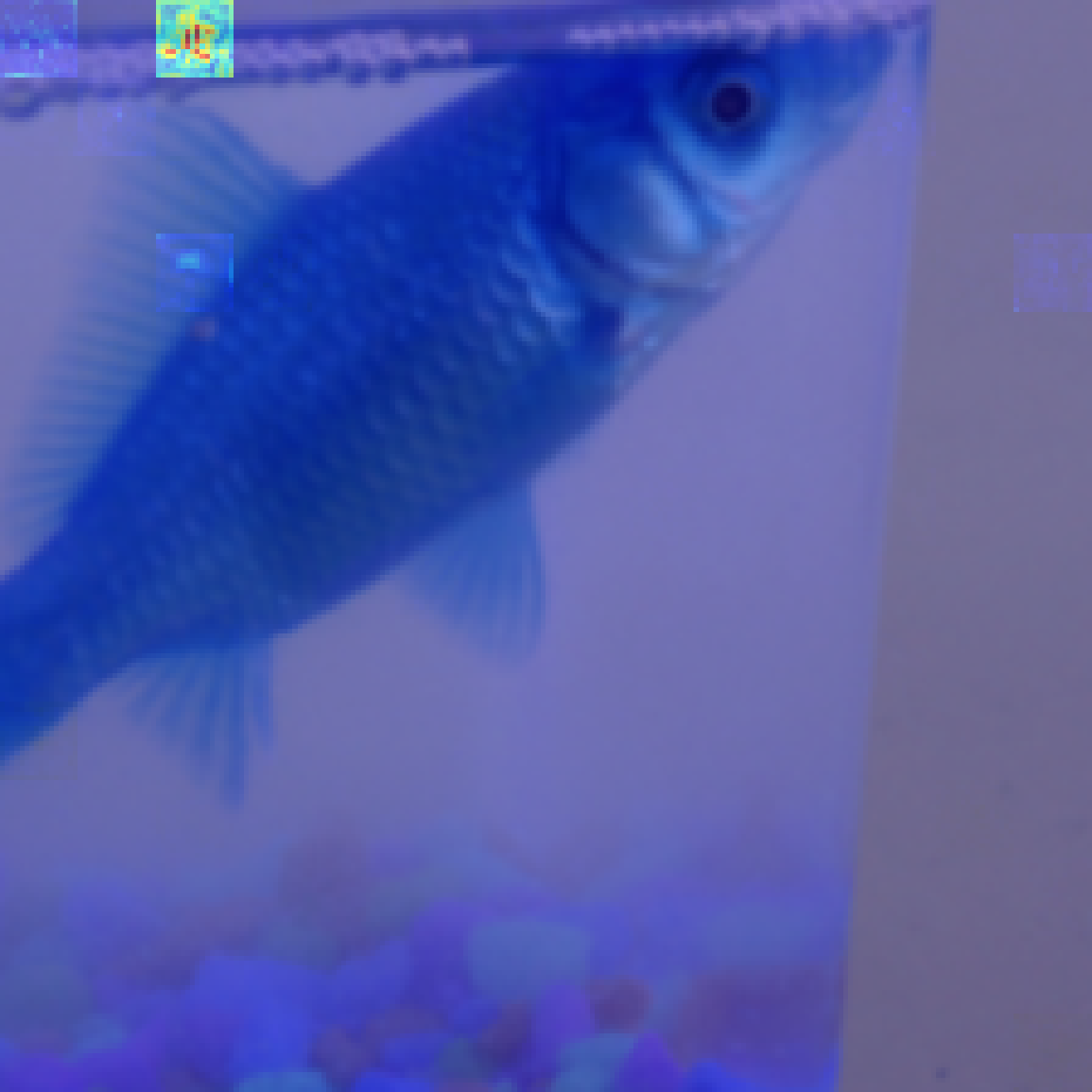} &
      \includegraphics[width=0.15\linewidth]{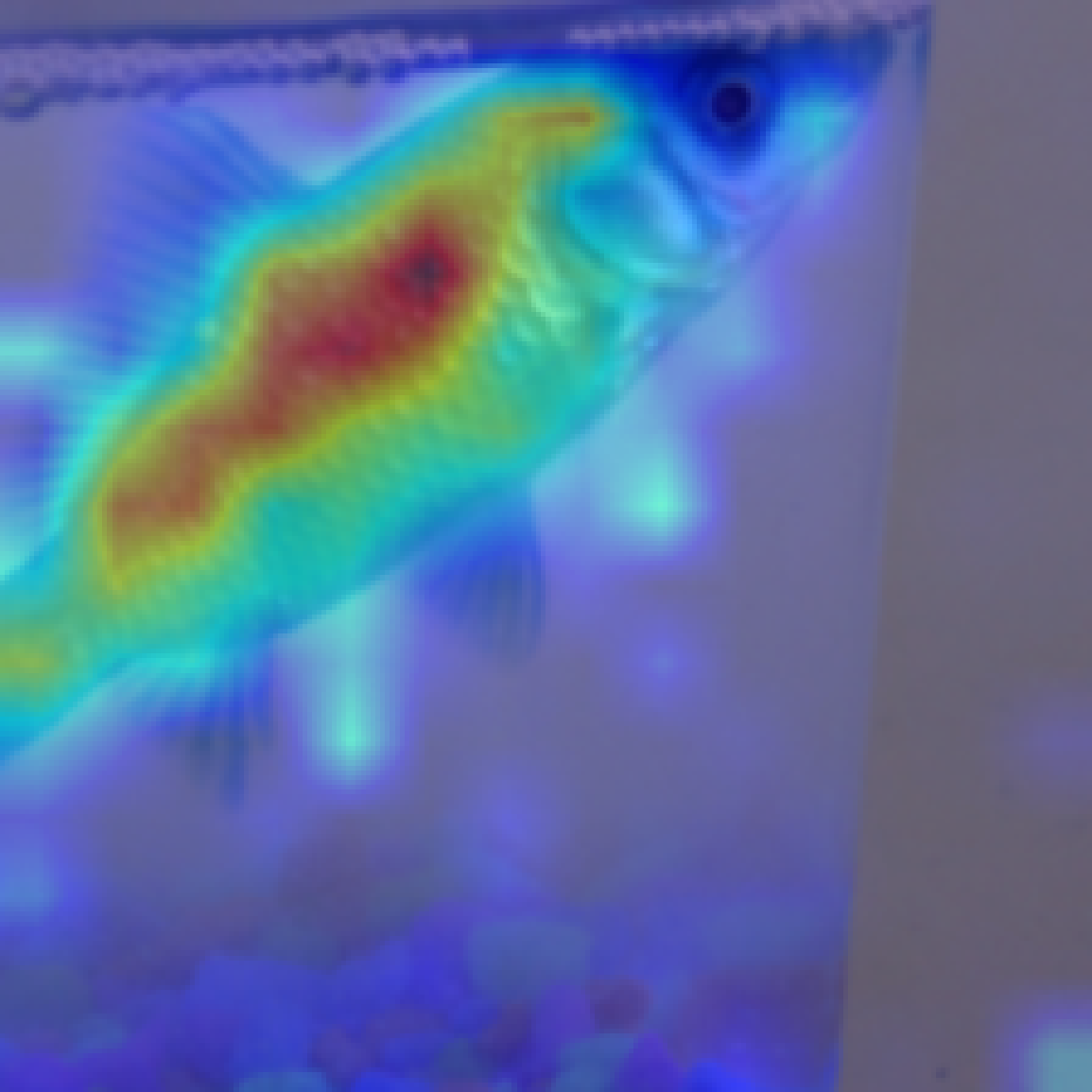} &
      \includegraphics[width=0.15\linewidth]{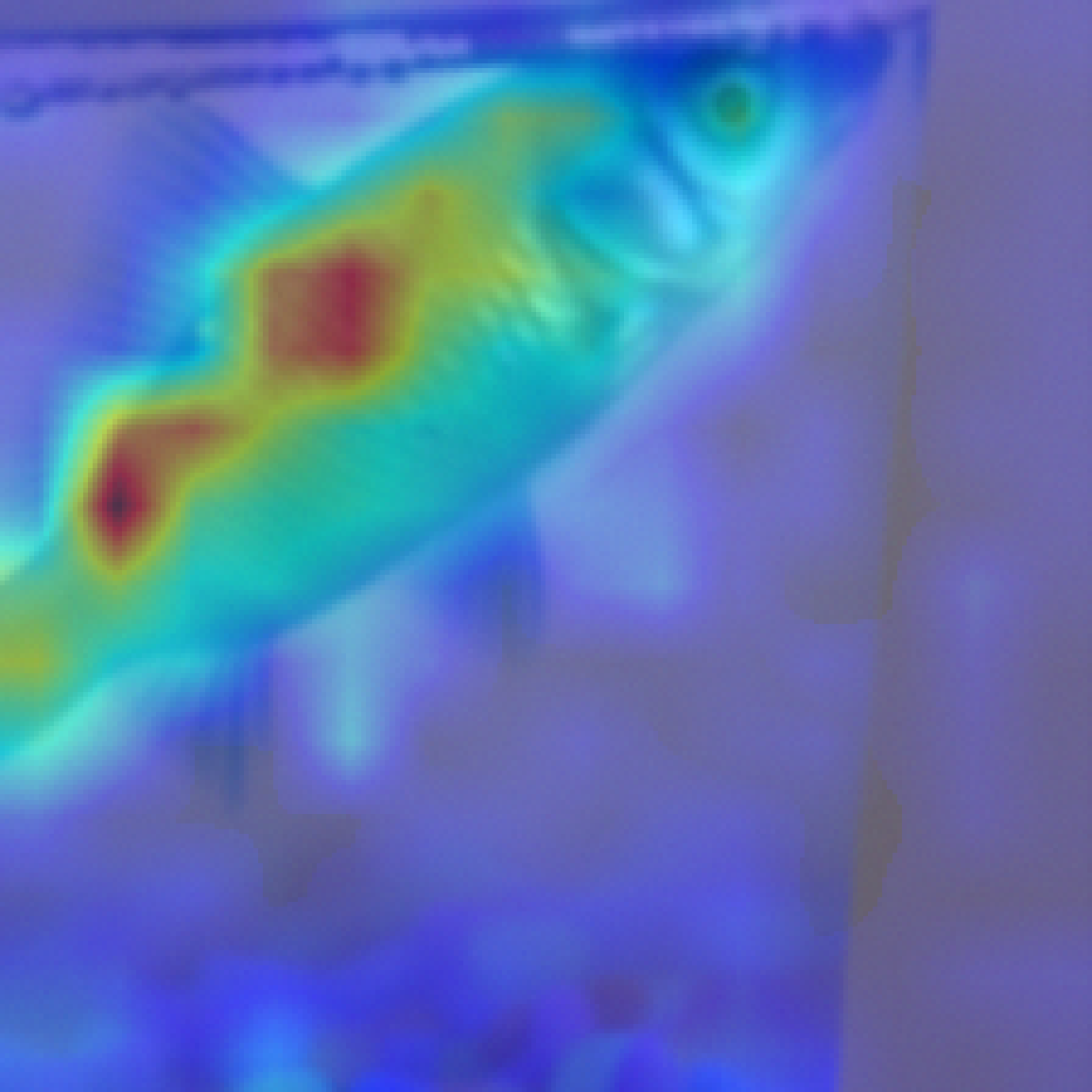} \\
      \raisebox{13mm}{\makecell*[c]{Poisoned$\rightarrow 1/255$\\\includegraphics[width=0.15\linewidth]{exp/fish/fish.png}
     }} &
     \includegraphics[width=0.15\linewidth]{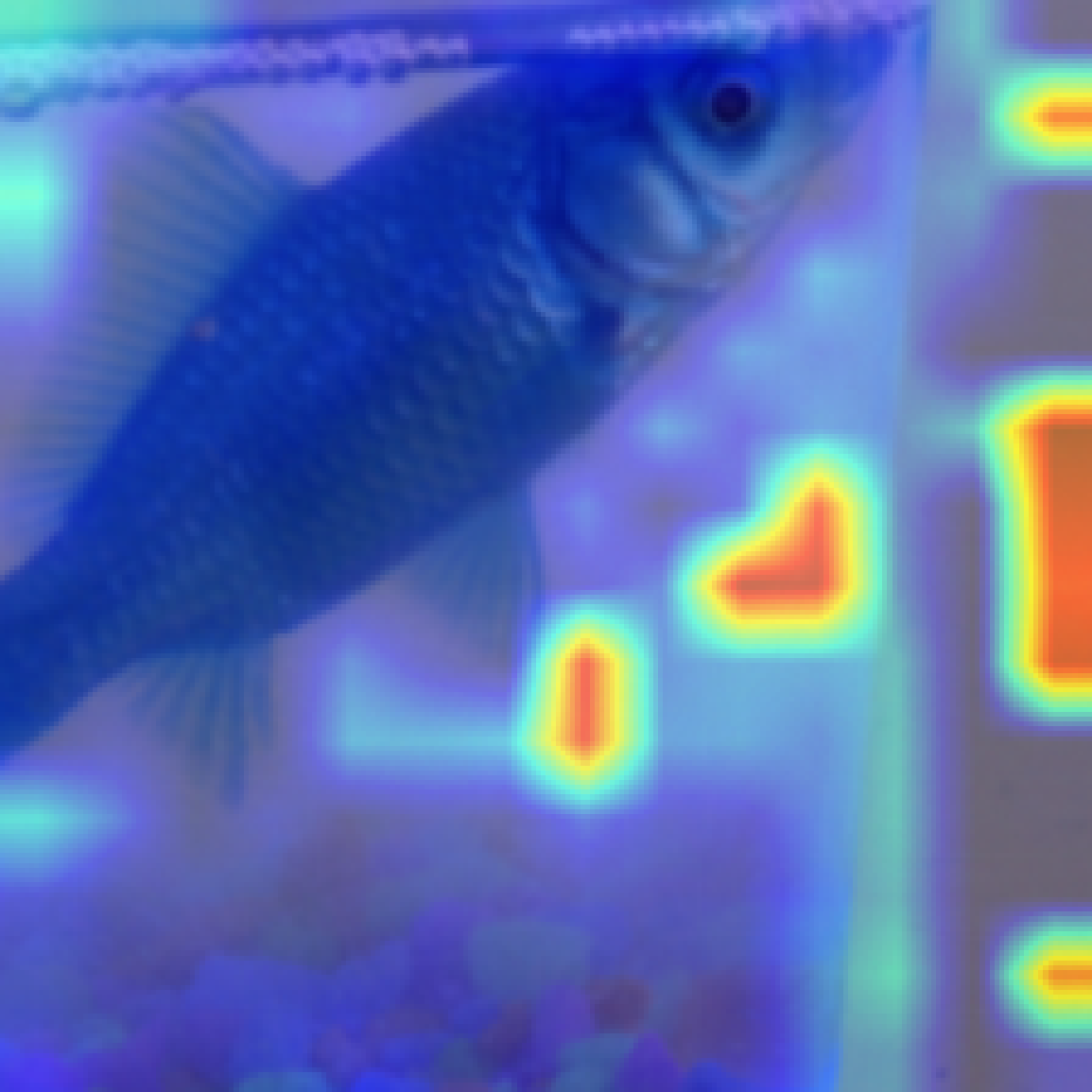} &
      \includegraphics[width=0.15\linewidth]{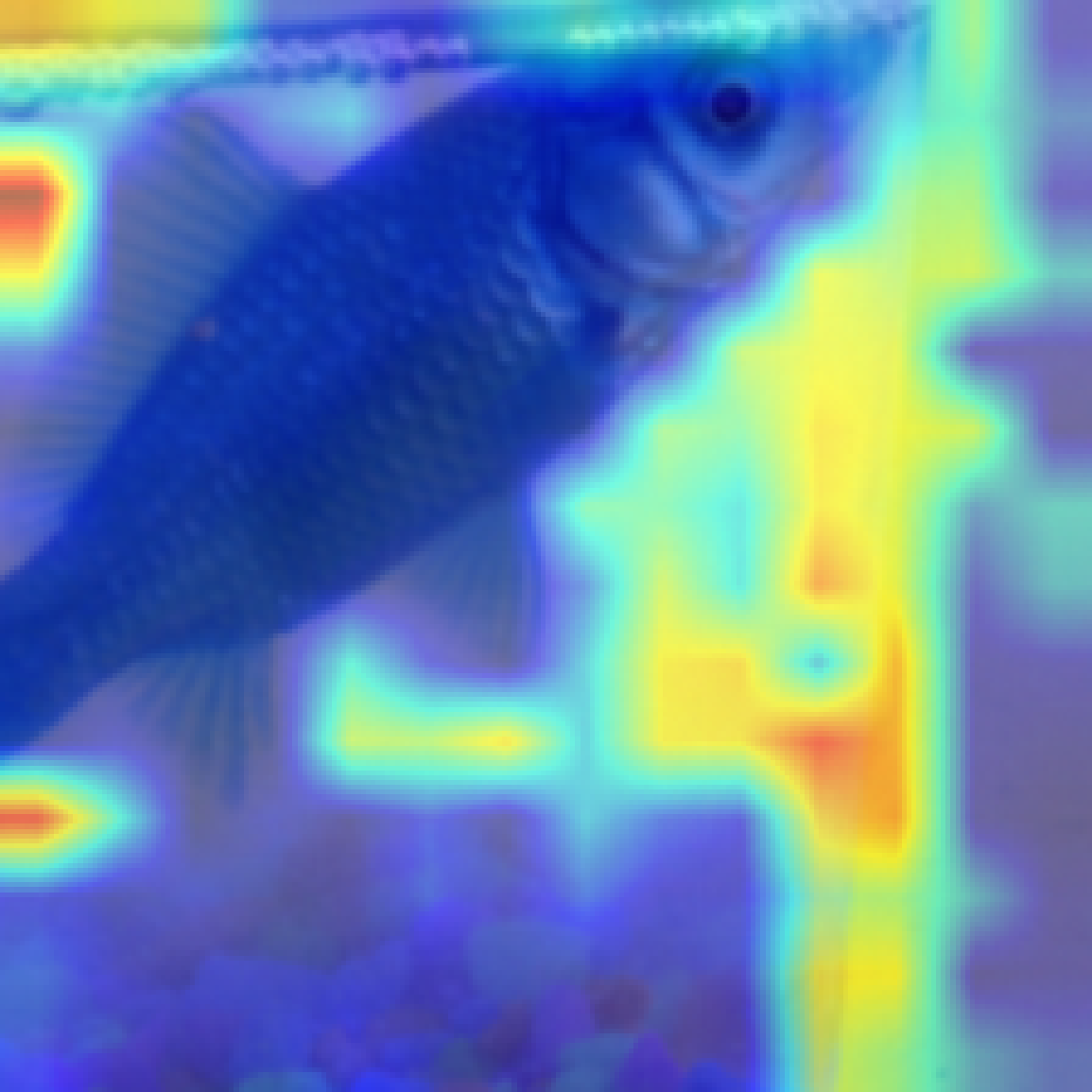} &
      \includegraphics[width=0.15\linewidth]{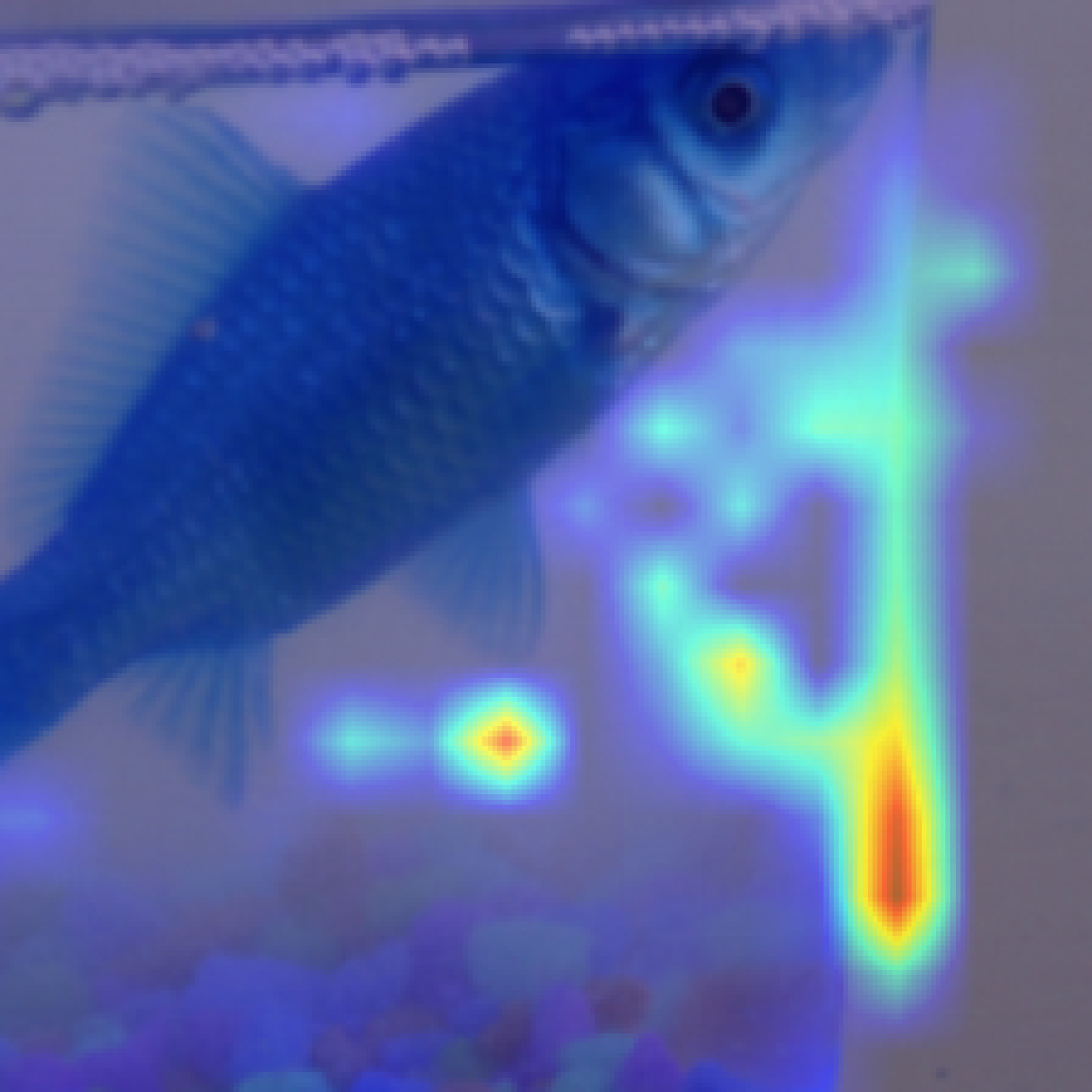} &
      \includegraphics[width=0.15\linewidth]{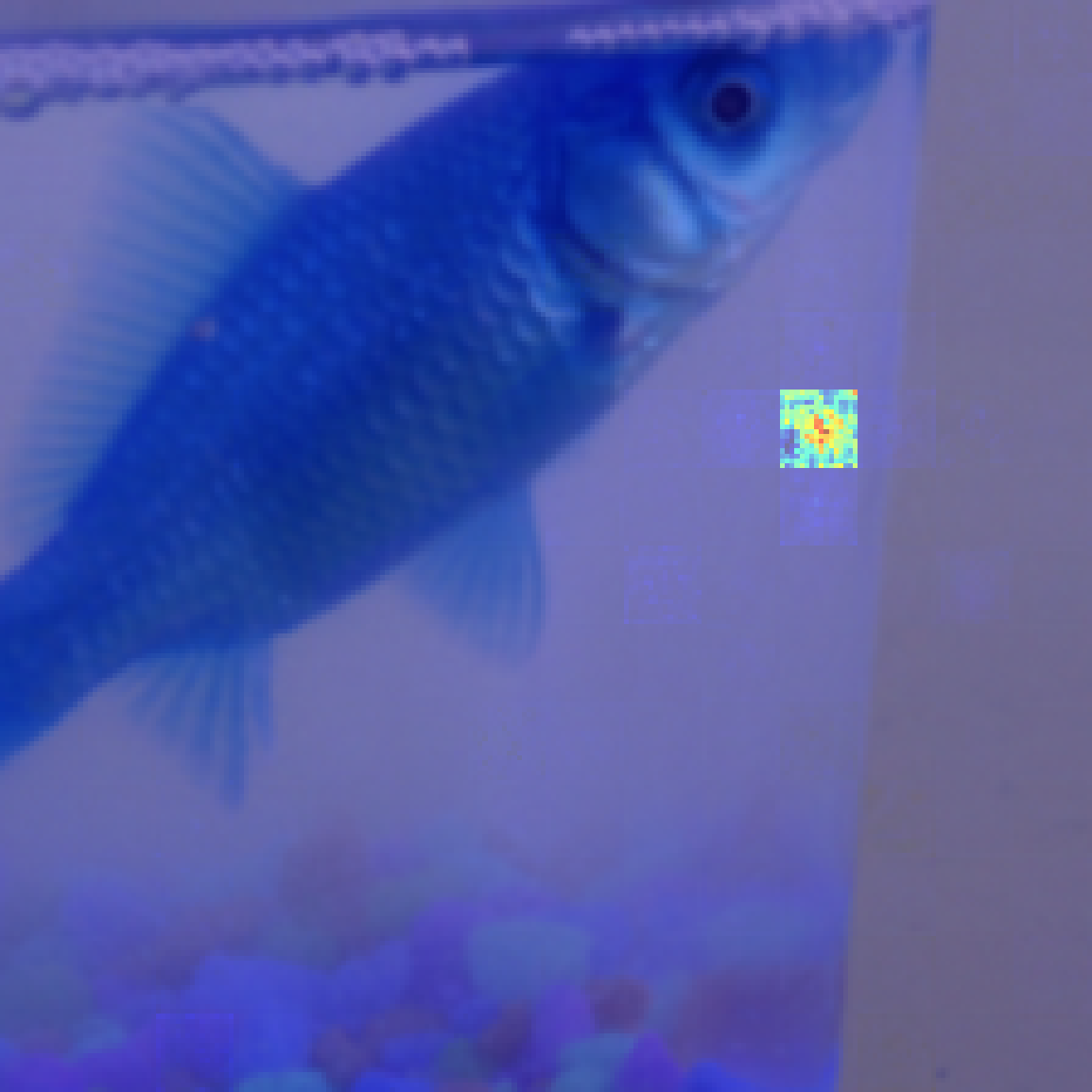} &
      \includegraphics[width=0.15\linewidth]{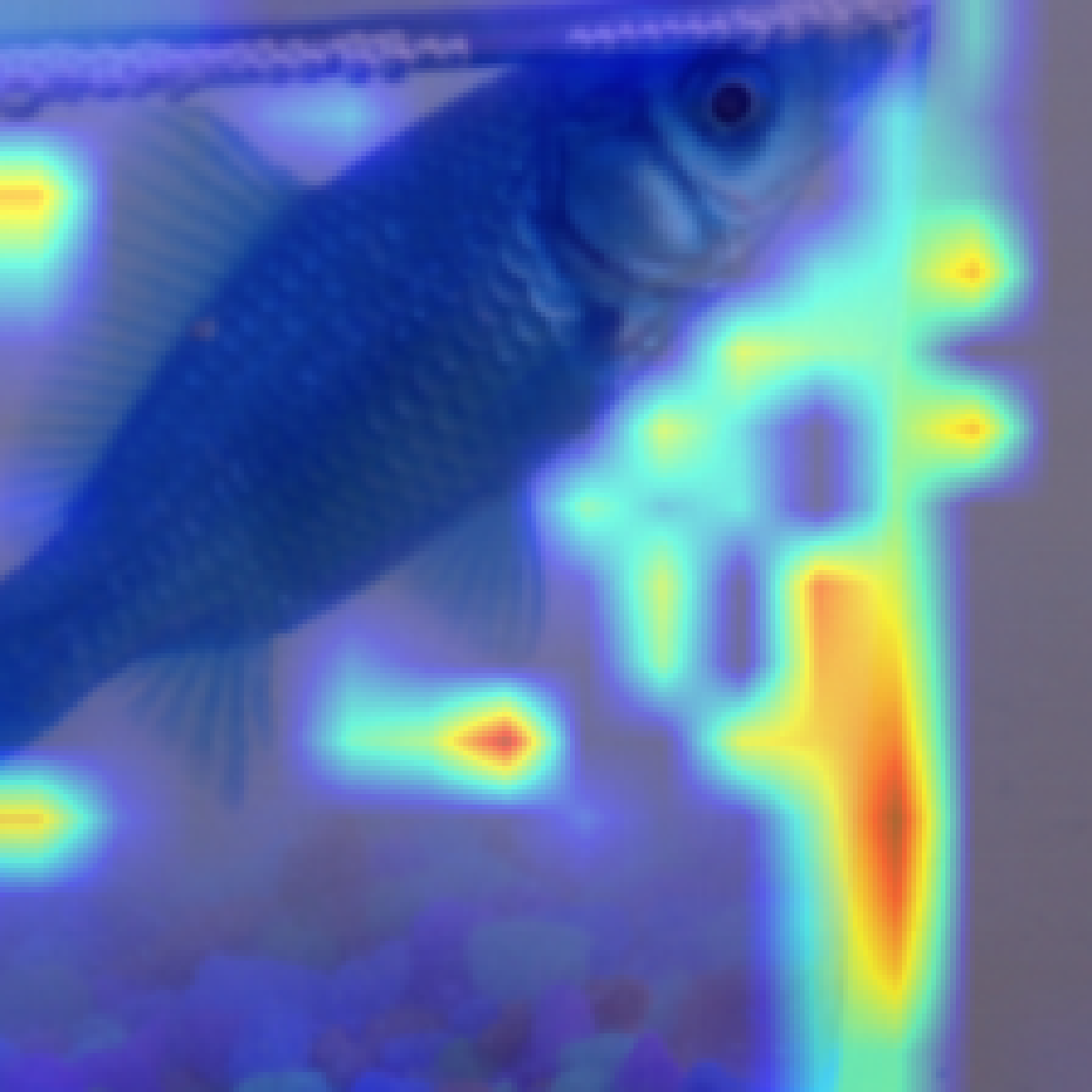} &
      \includegraphics[width=0.15\linewidth]{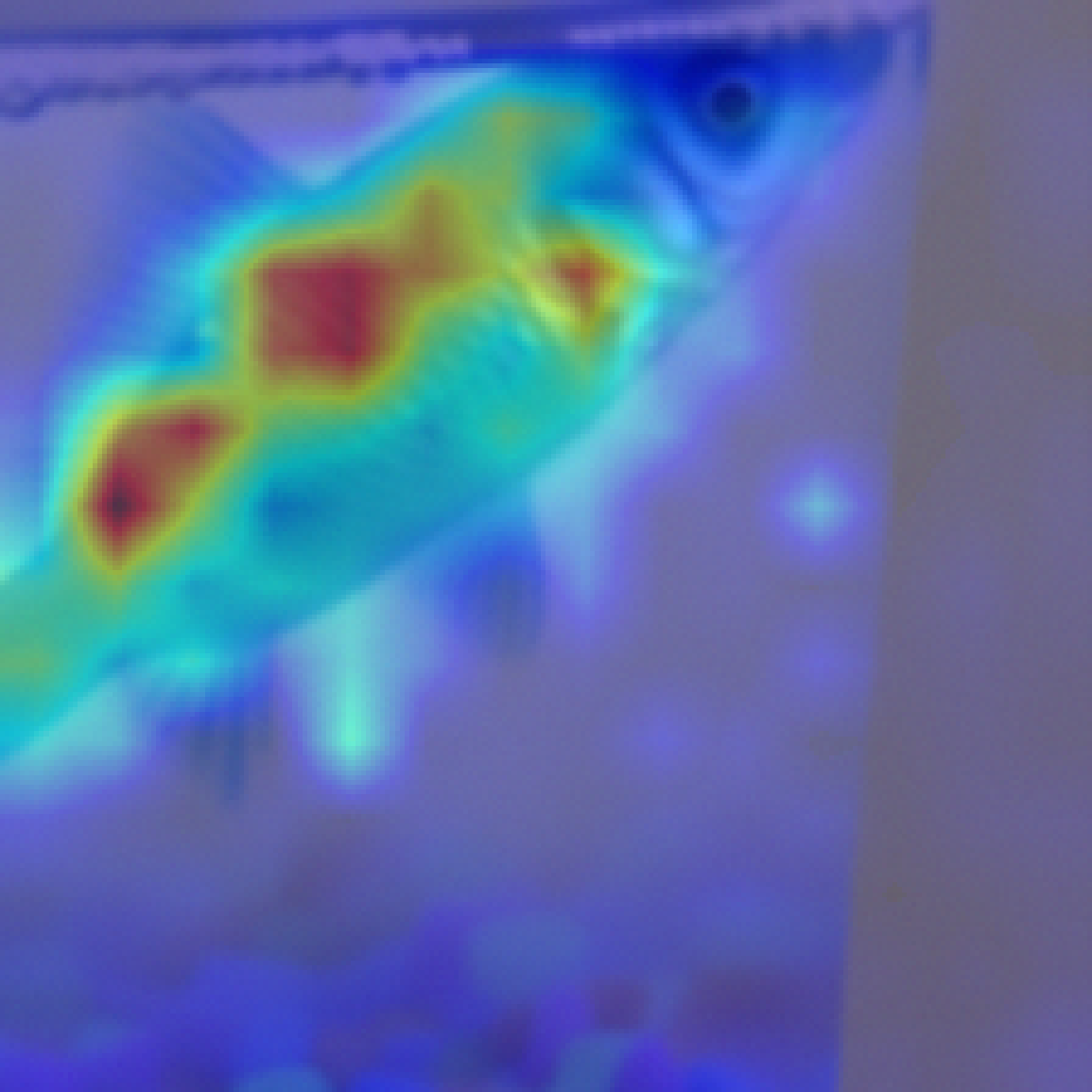} \\
      \raisebox{13mm}{\makecell*[c]{Poisoned$\rightarrow 2/255$\\\includegraphics[width=0.15\linewidth]{exp/fish/fish.png}
     }} &
     \includegraphics[width=0.15\linewidth]{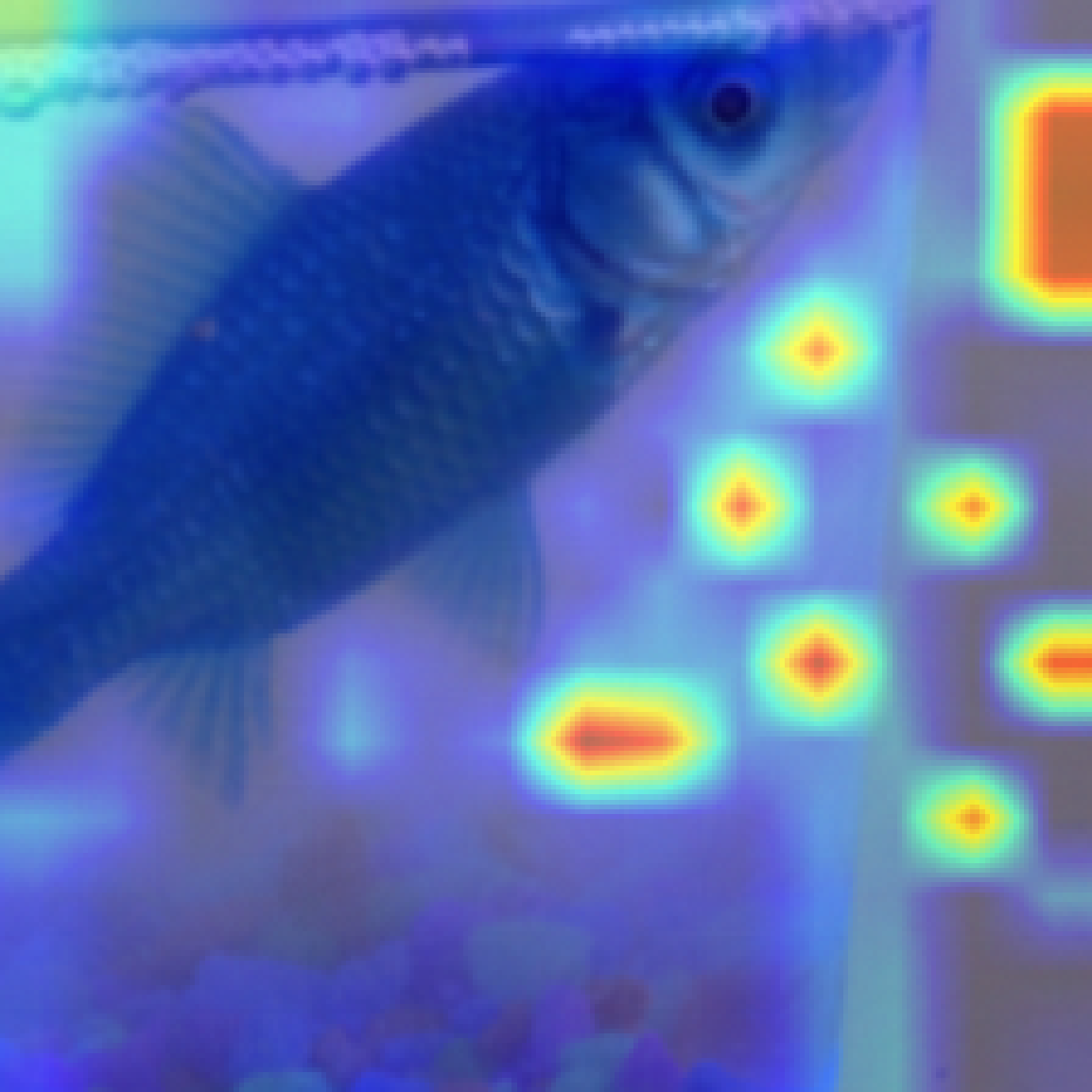} &
      \includegraphics[width=0.15\linewidth]{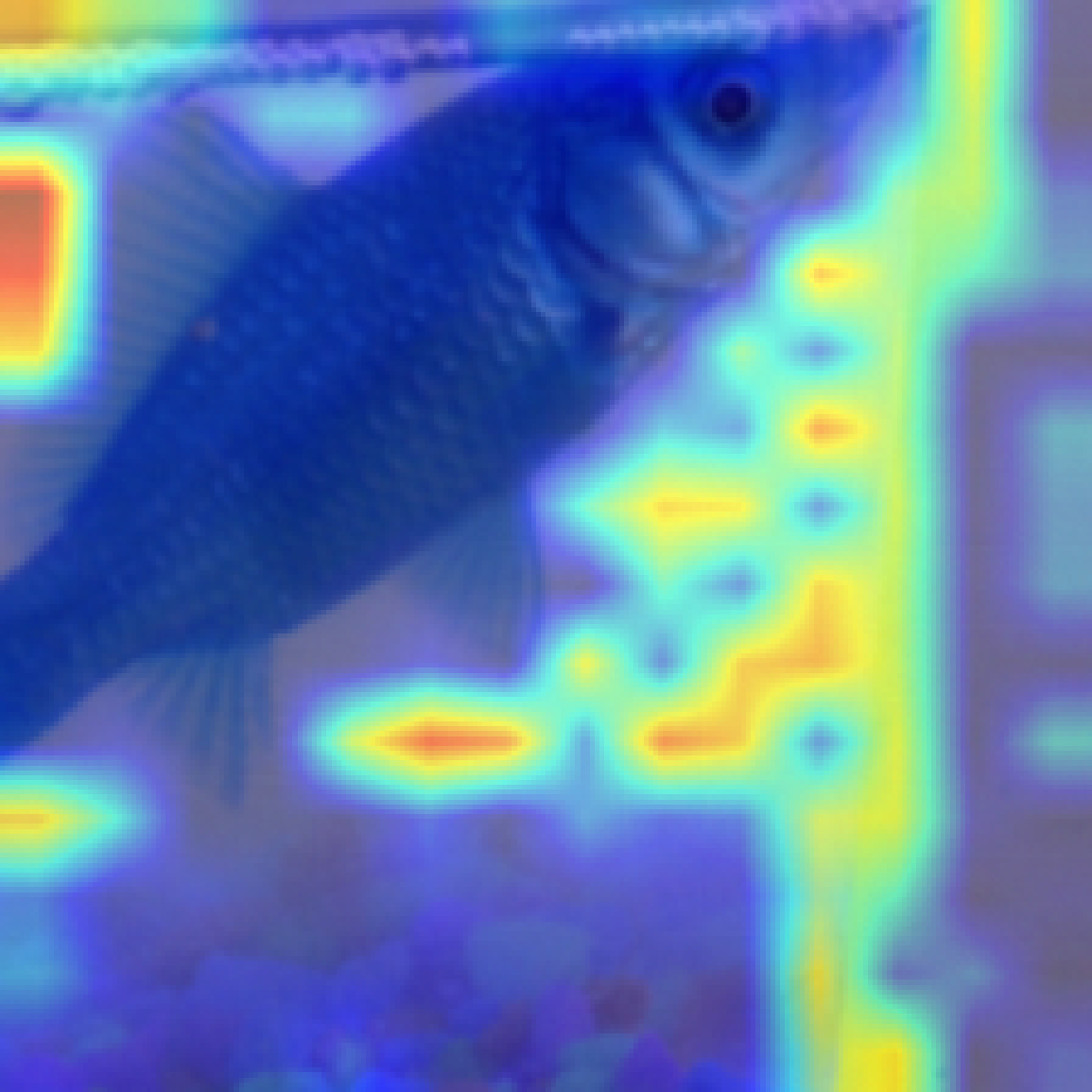} &
      \includegraphics[width=0.15\linewidth]{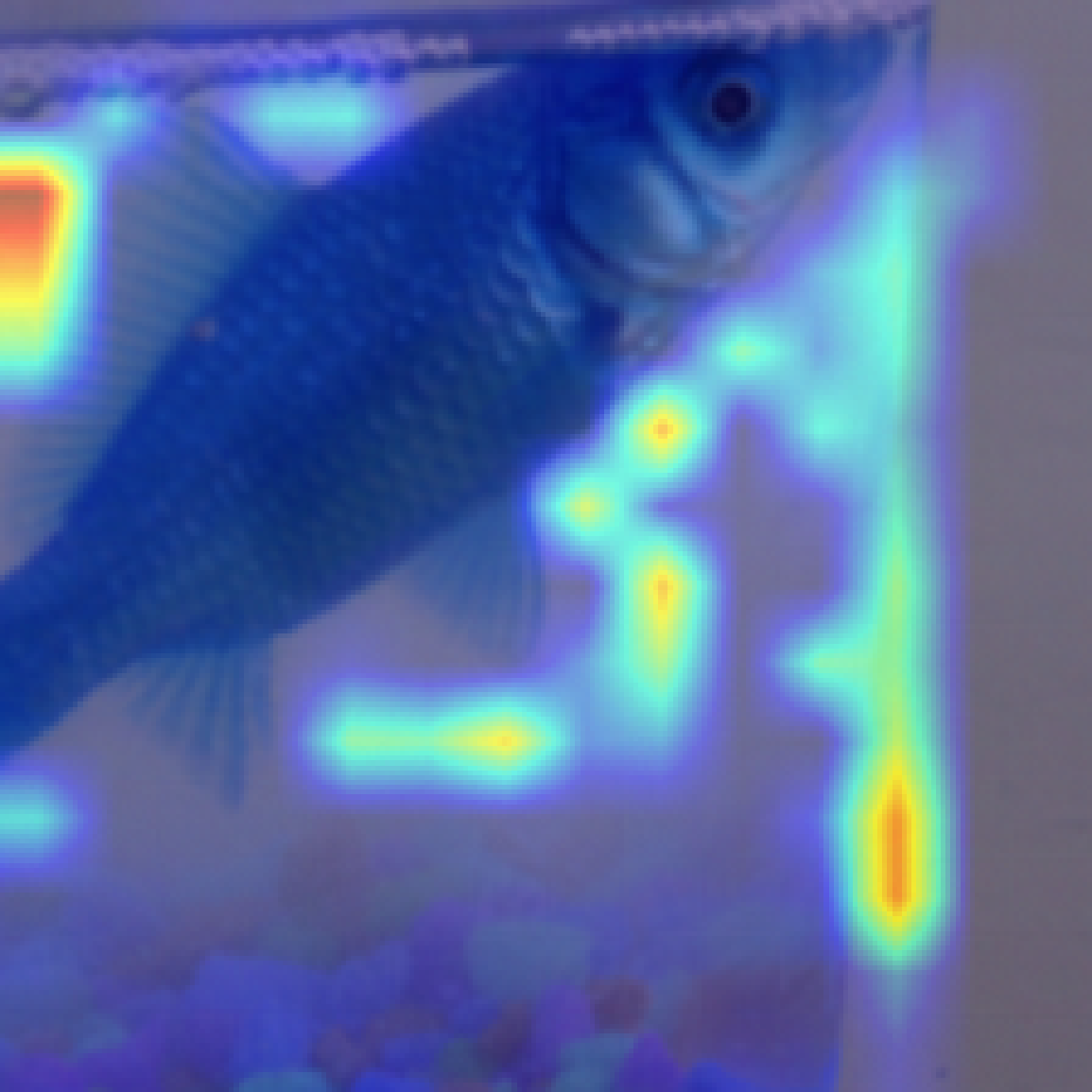} &
      \includegraphics[width=0.15\linewidth]{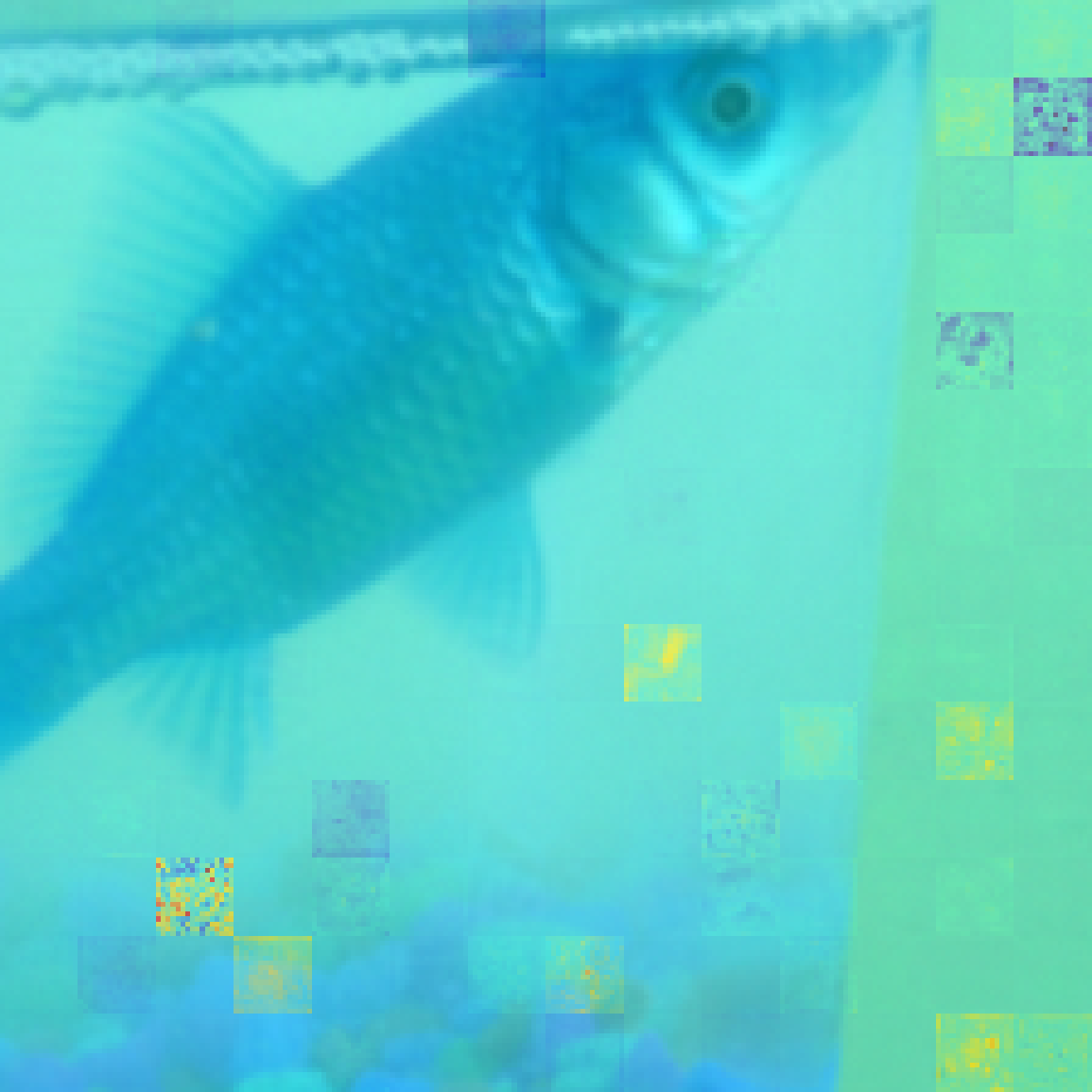} &
      \includegraphics[width=0.15\linewidth]{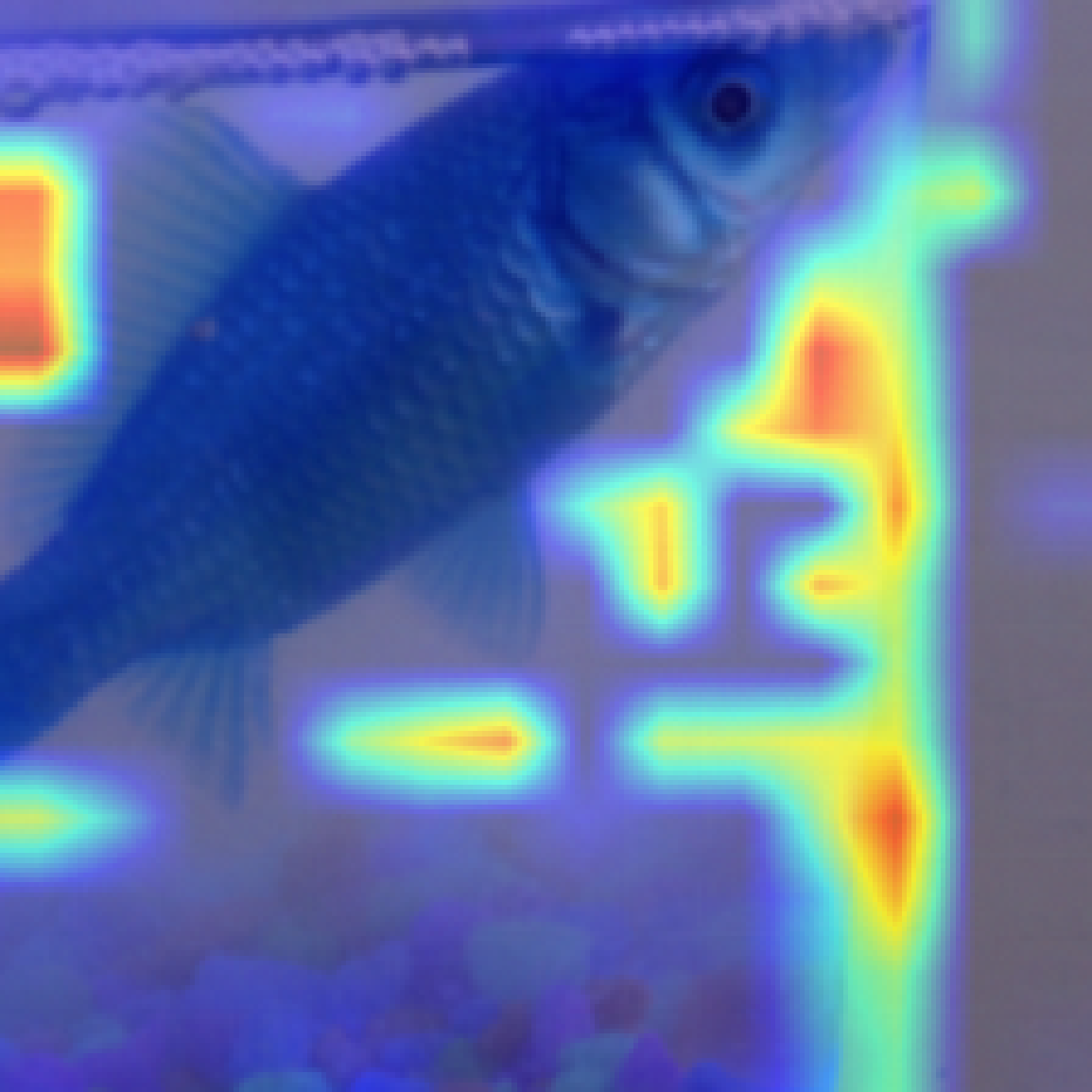} &
      \includegraphics[width=0.15\linewidth]{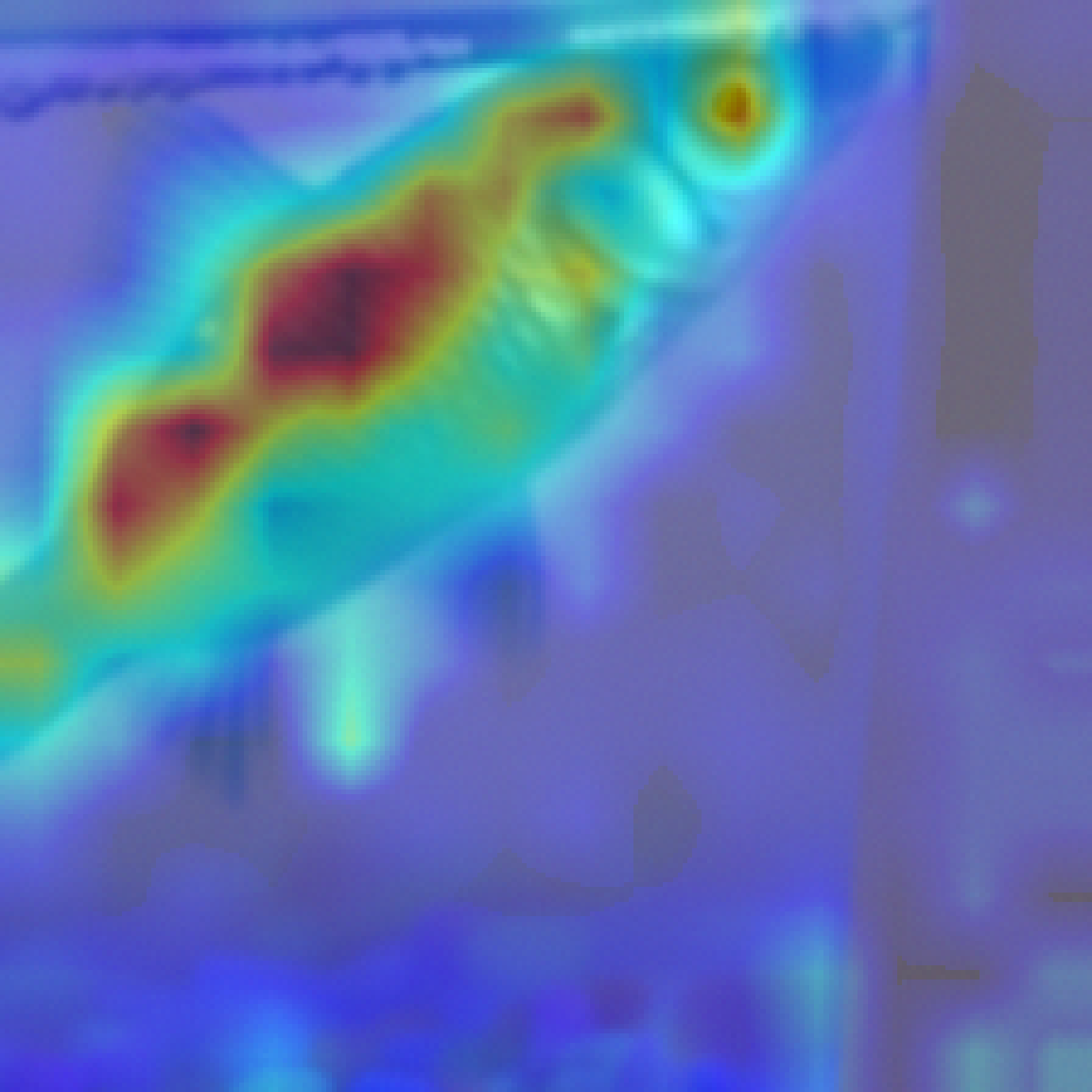} \\
      
      \raisebox{13mm}{\makecell*[c]{Poisoned$\rightarrow 3/255$\\\includegraphics[width=0.15\linewidth]{exp/fish/fish.png}
     }} &
     \includegraphics[width=0.15\linewidth]{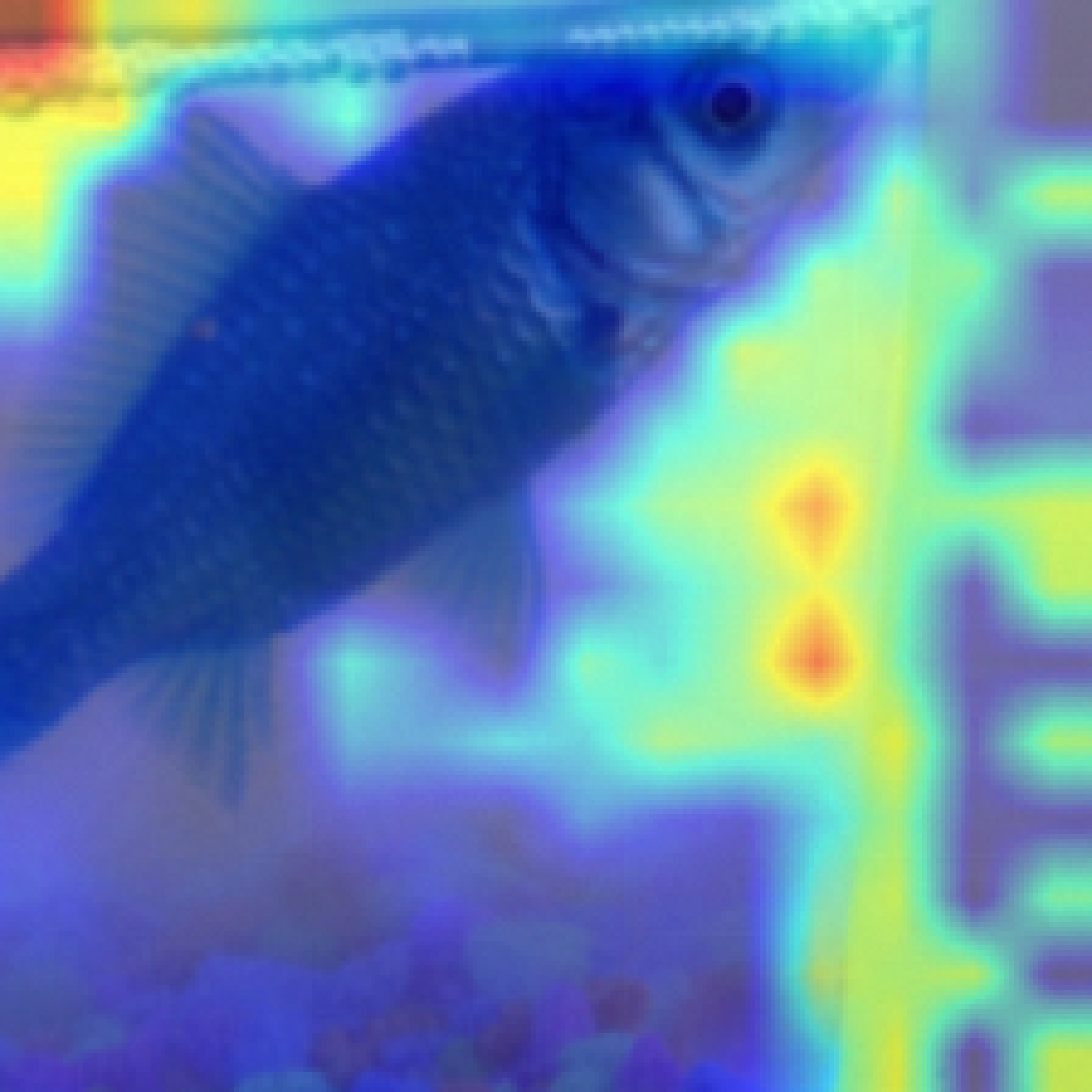} &
      \includegraphics[width=0.15\linewidth]{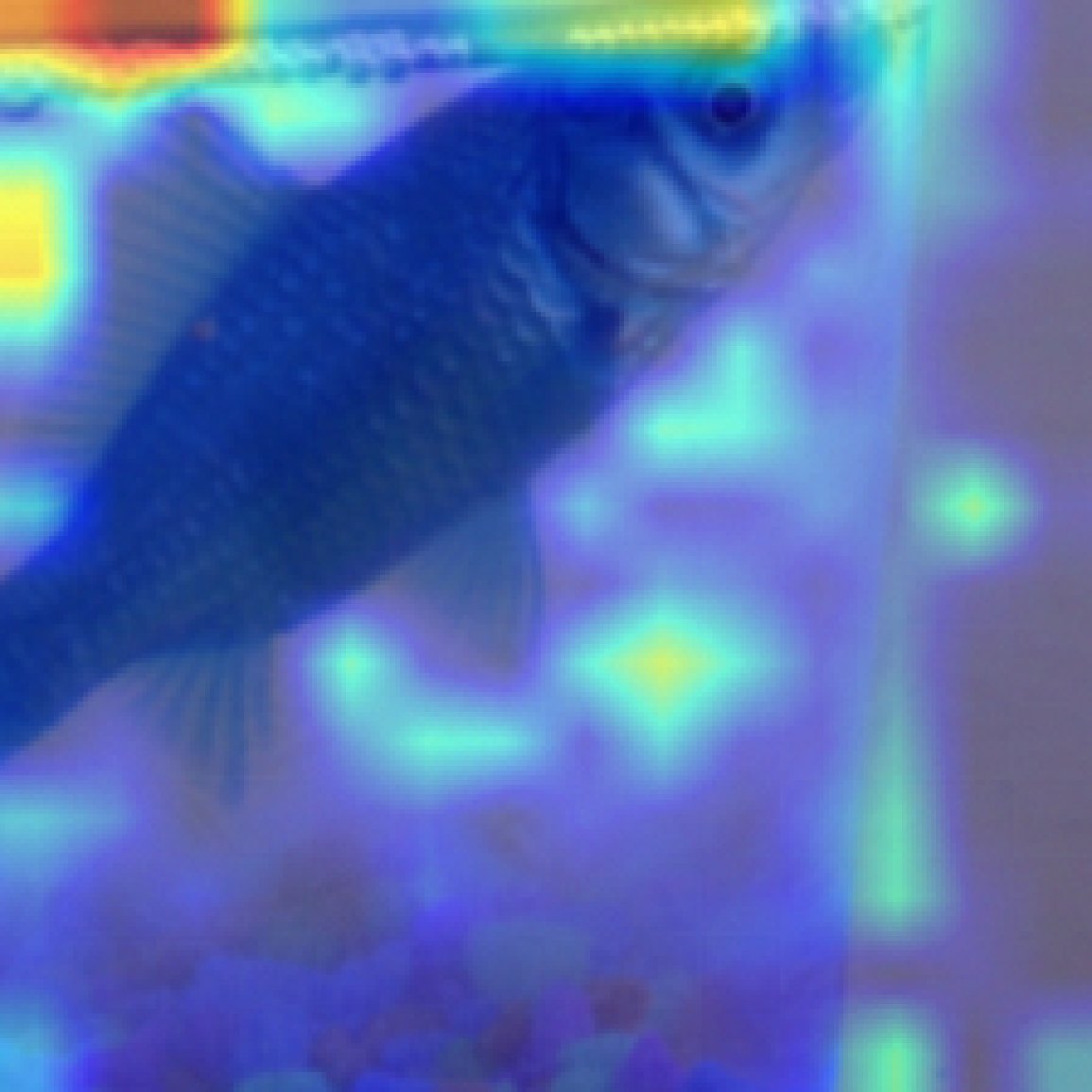} &
      \includegraphics[width=0.15\linewidth]{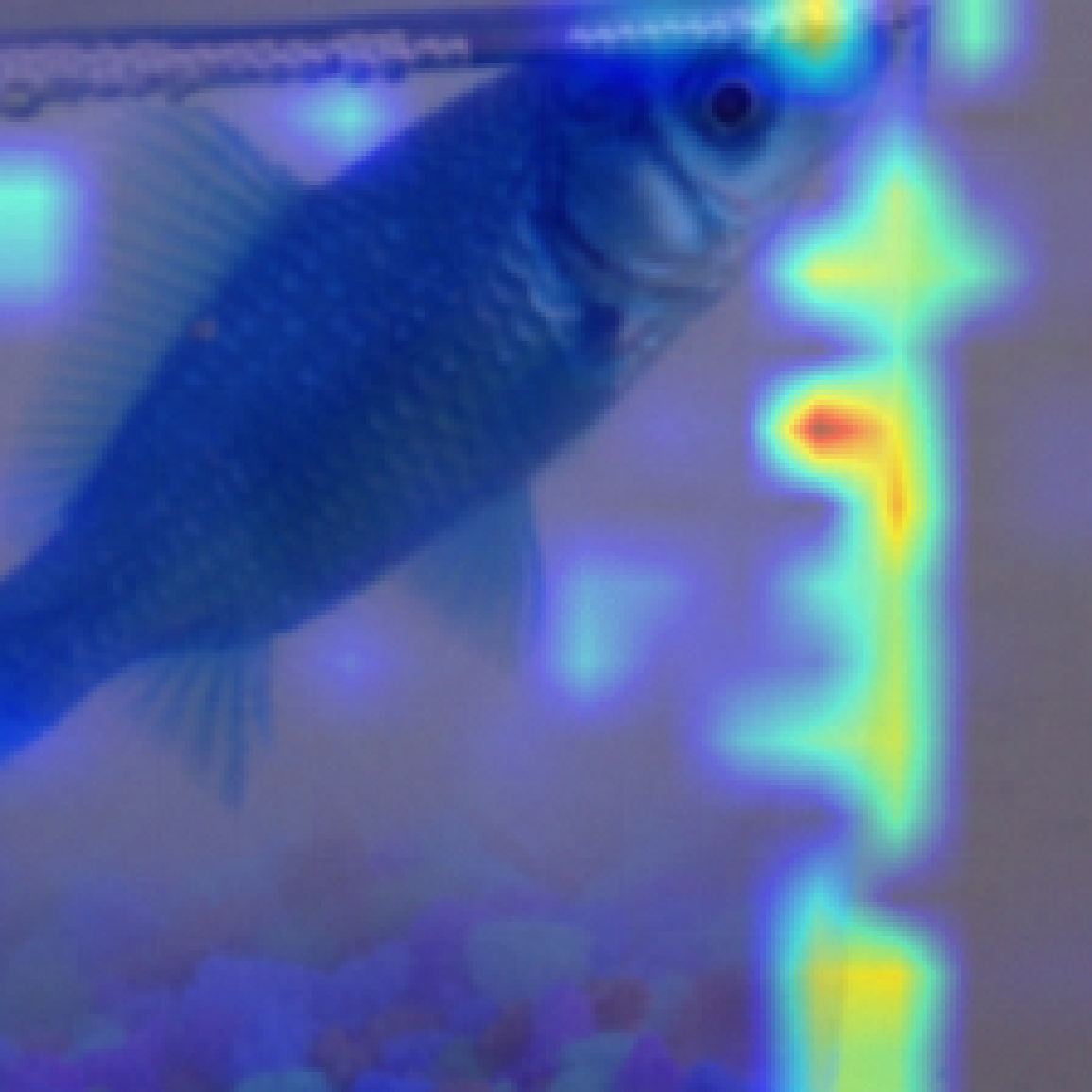} &
      \includegraphics[width=0.15\linewidth]{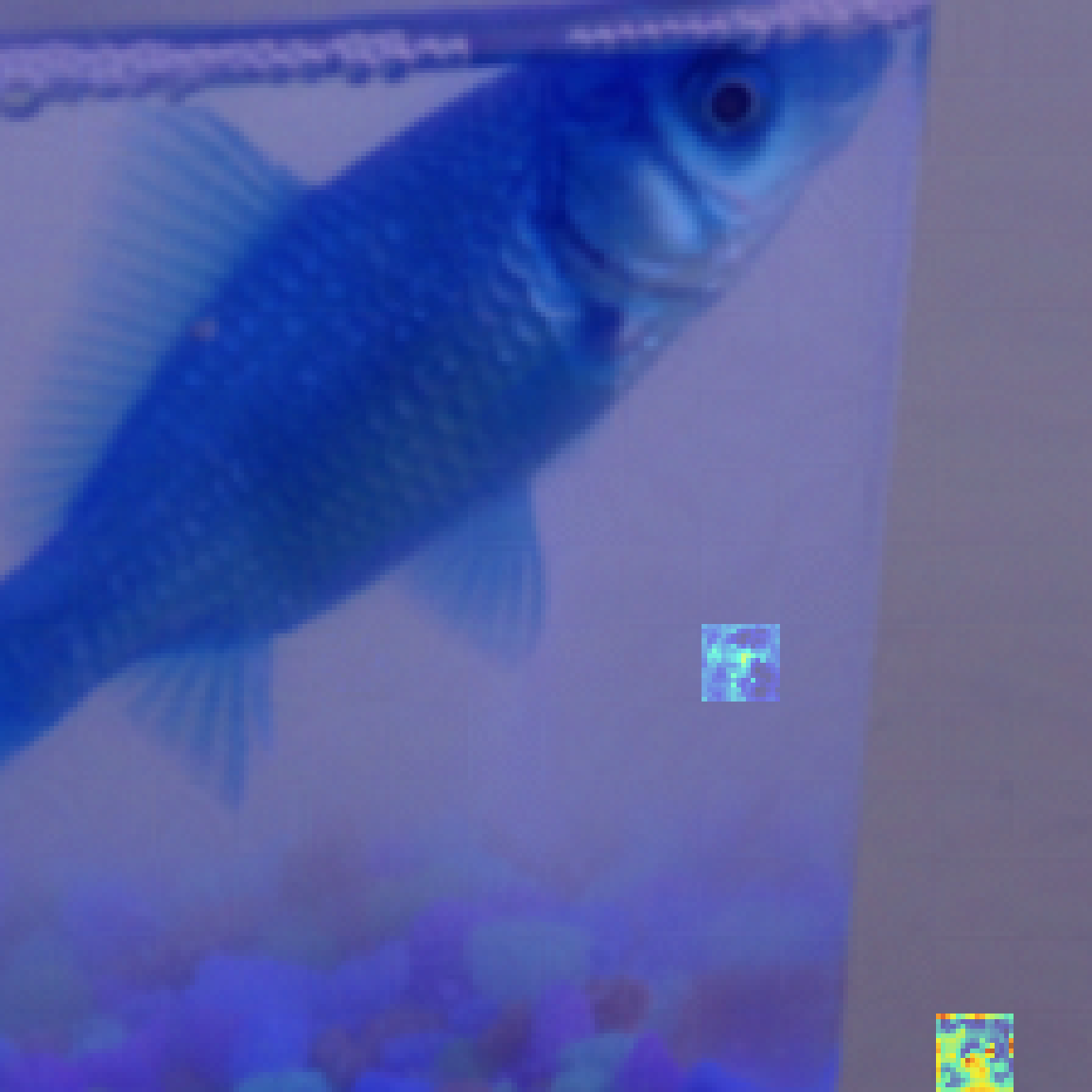} &
      \includegraphics[width=0.15\linewidth]{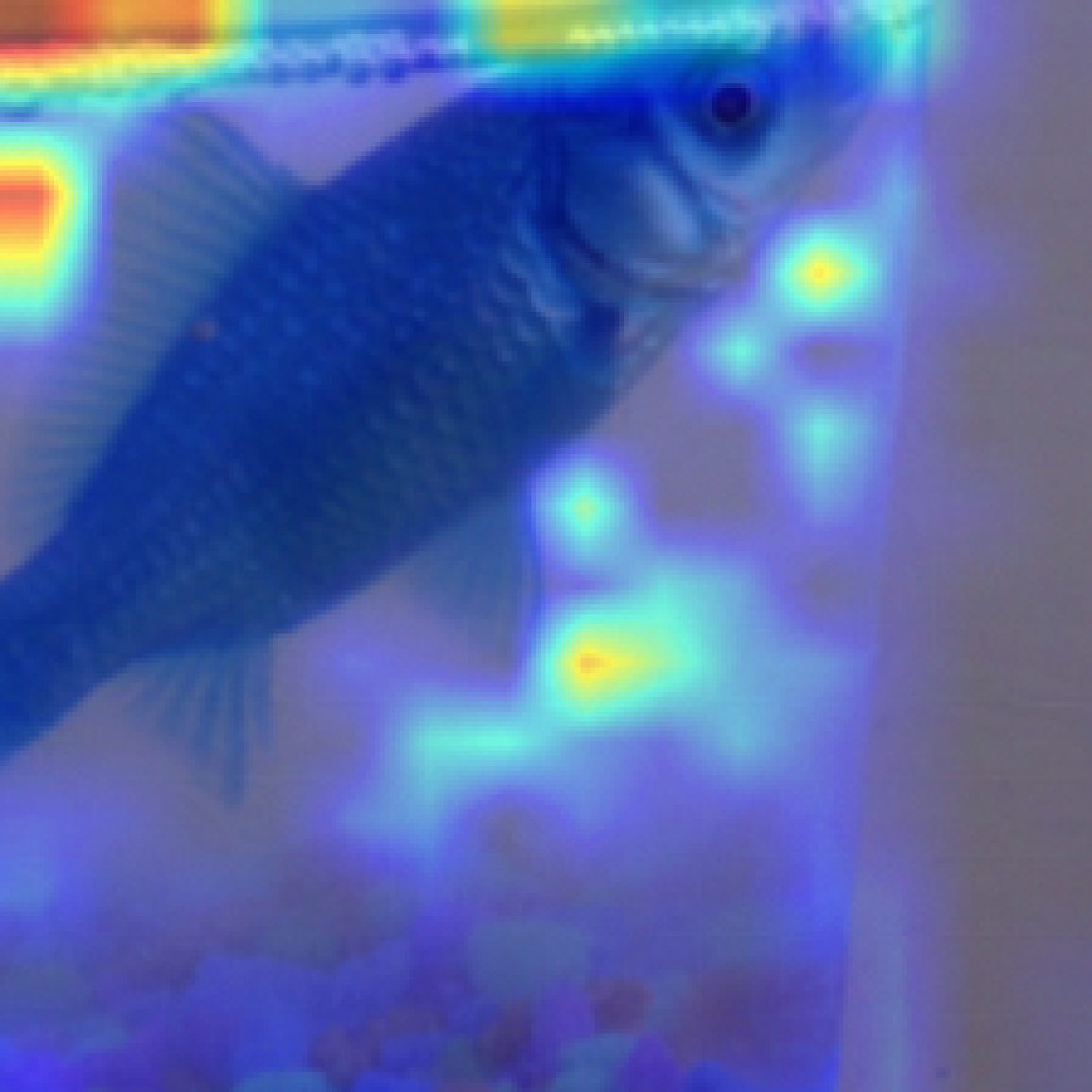} &
      \includegraphics[width=0.15\linewidth]{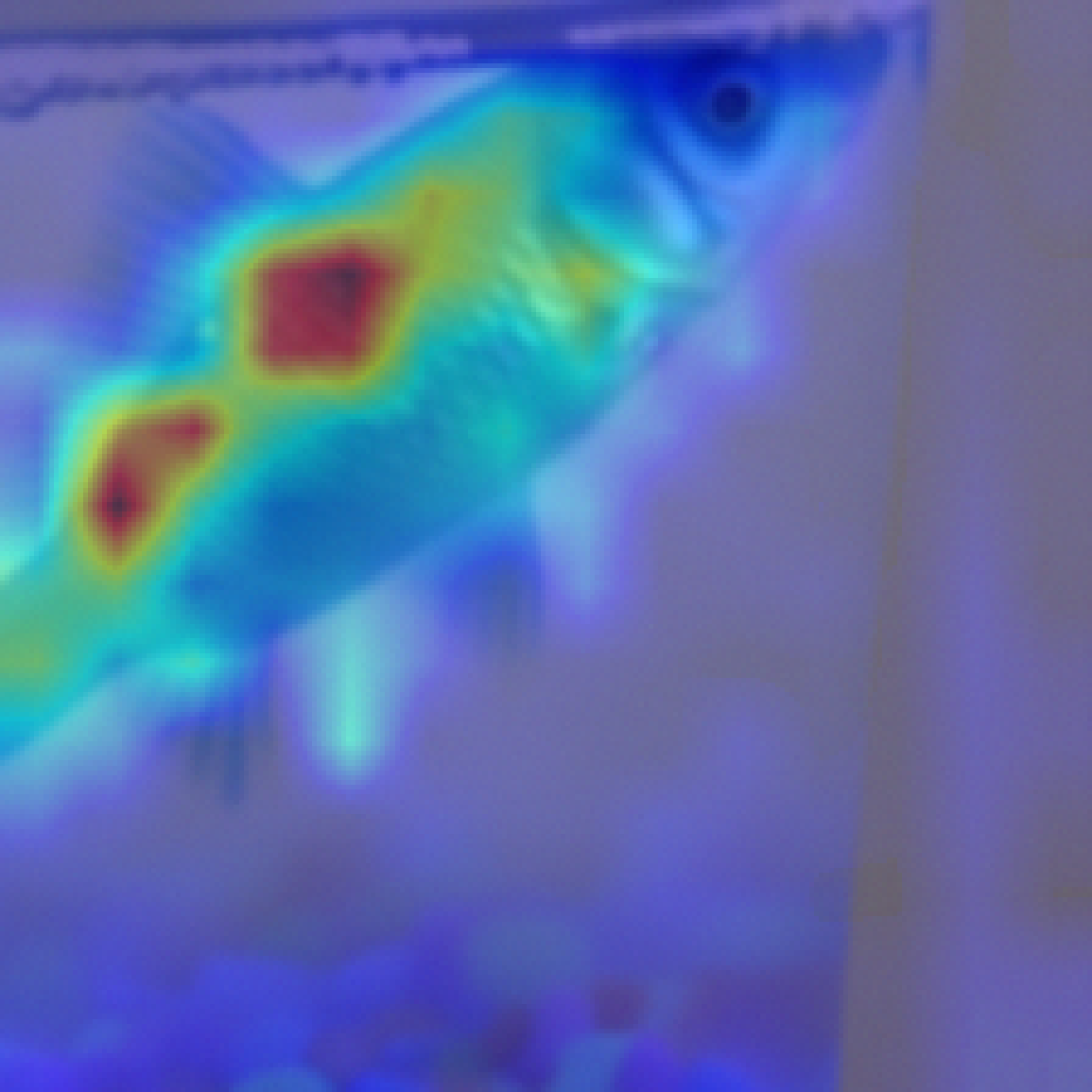} \\
      
      \raisebox{13mm}{\makecell*[c]{Poisoned$\rightarrow 4/255$\\\includegraphics[width=0.15\linewidth]{exp/fish/fish.png}
     }} &
     \includegraphics[width=0.15\linewidth]{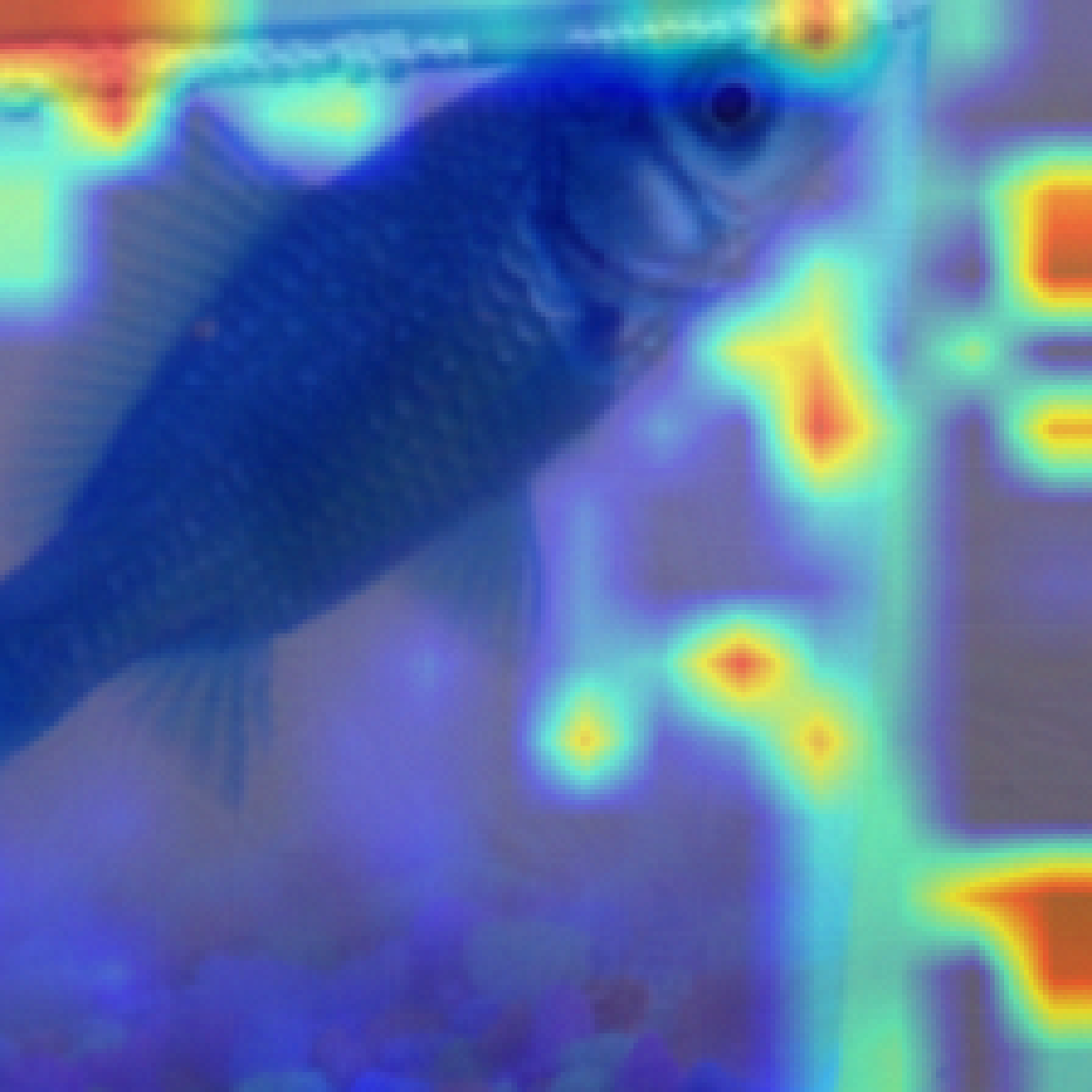} &
      \includegraphics[width=0.15\linewidth]{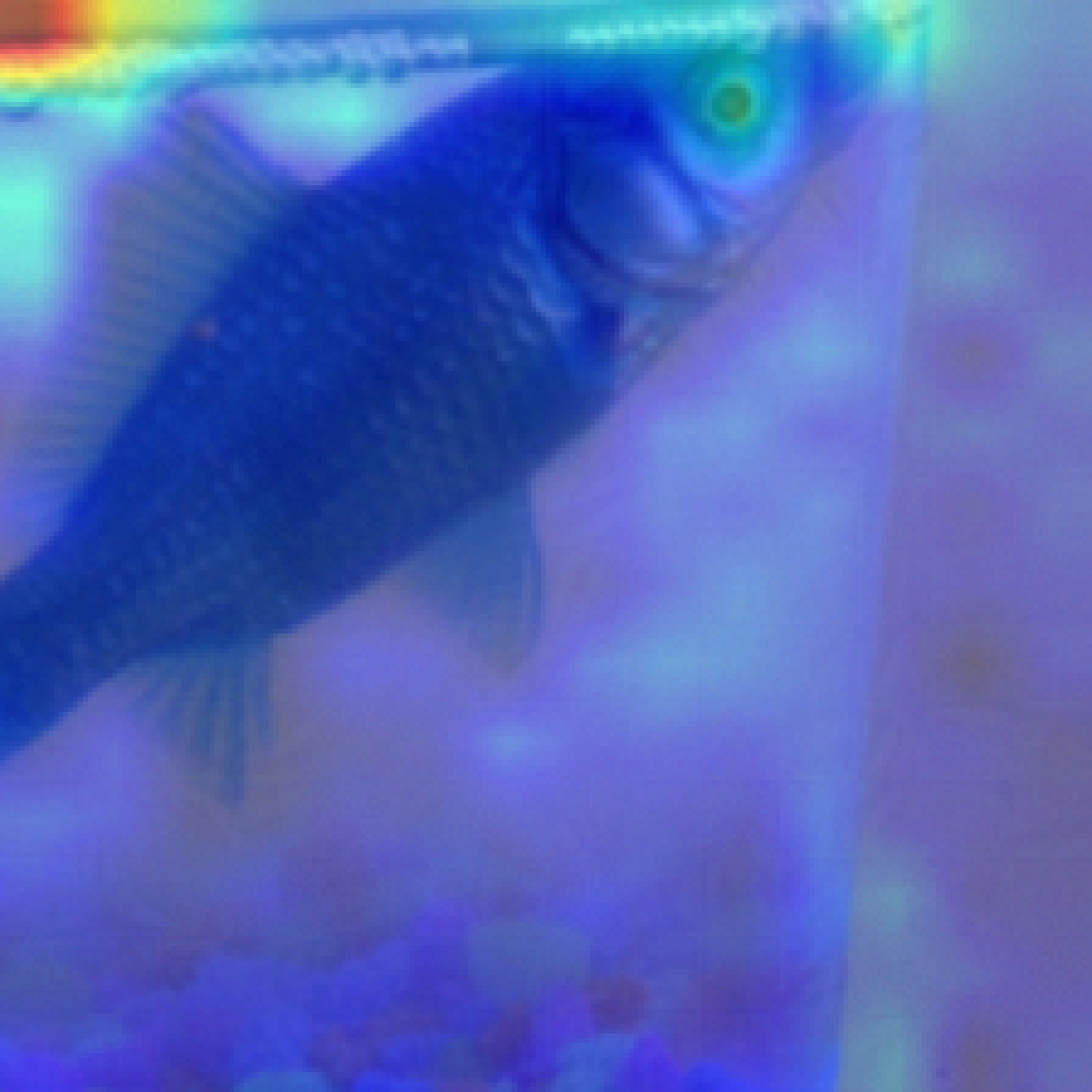} &
      \includegraphics[width=0.15\linewidth]{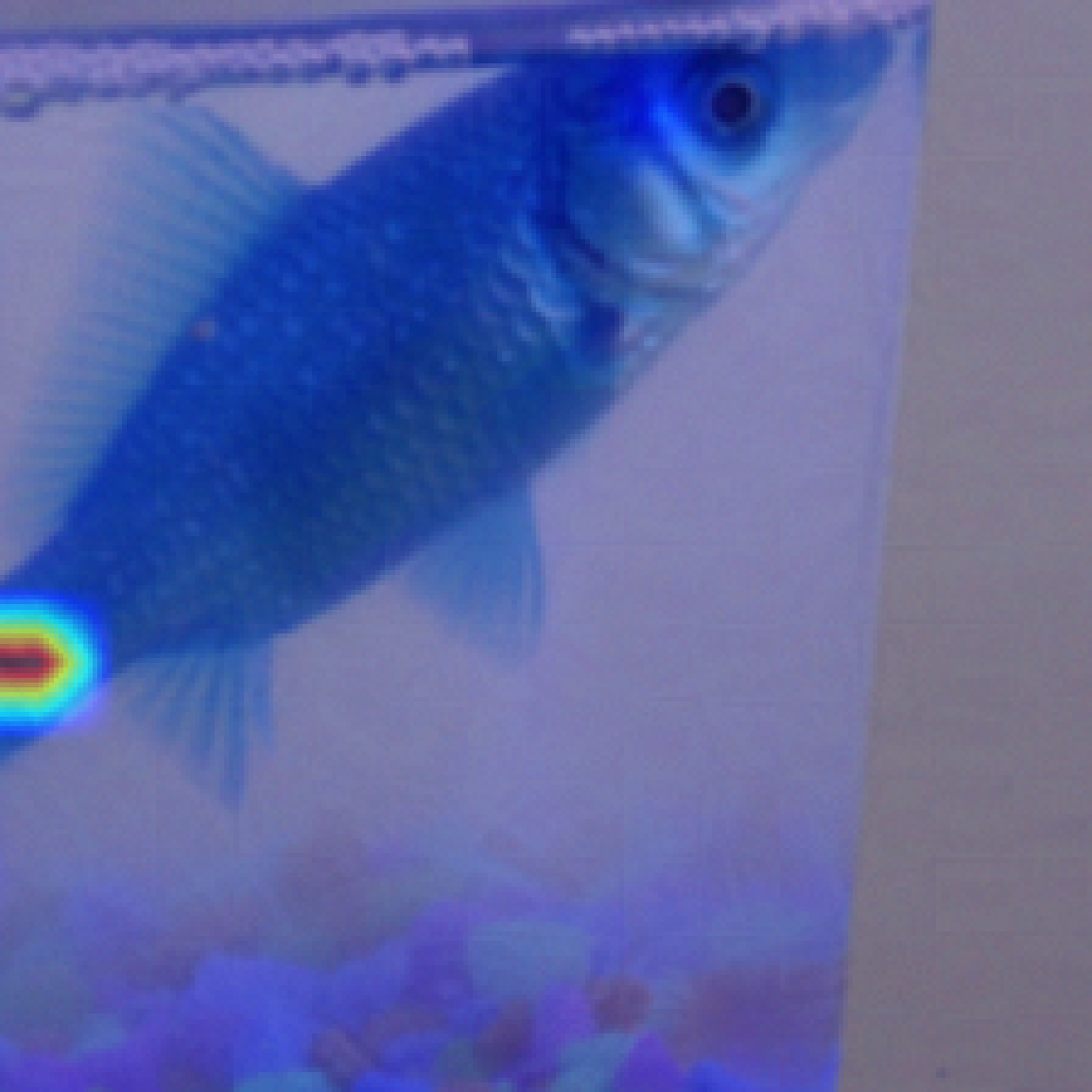} &
      \includegraphics[width=0.15\linewidth]{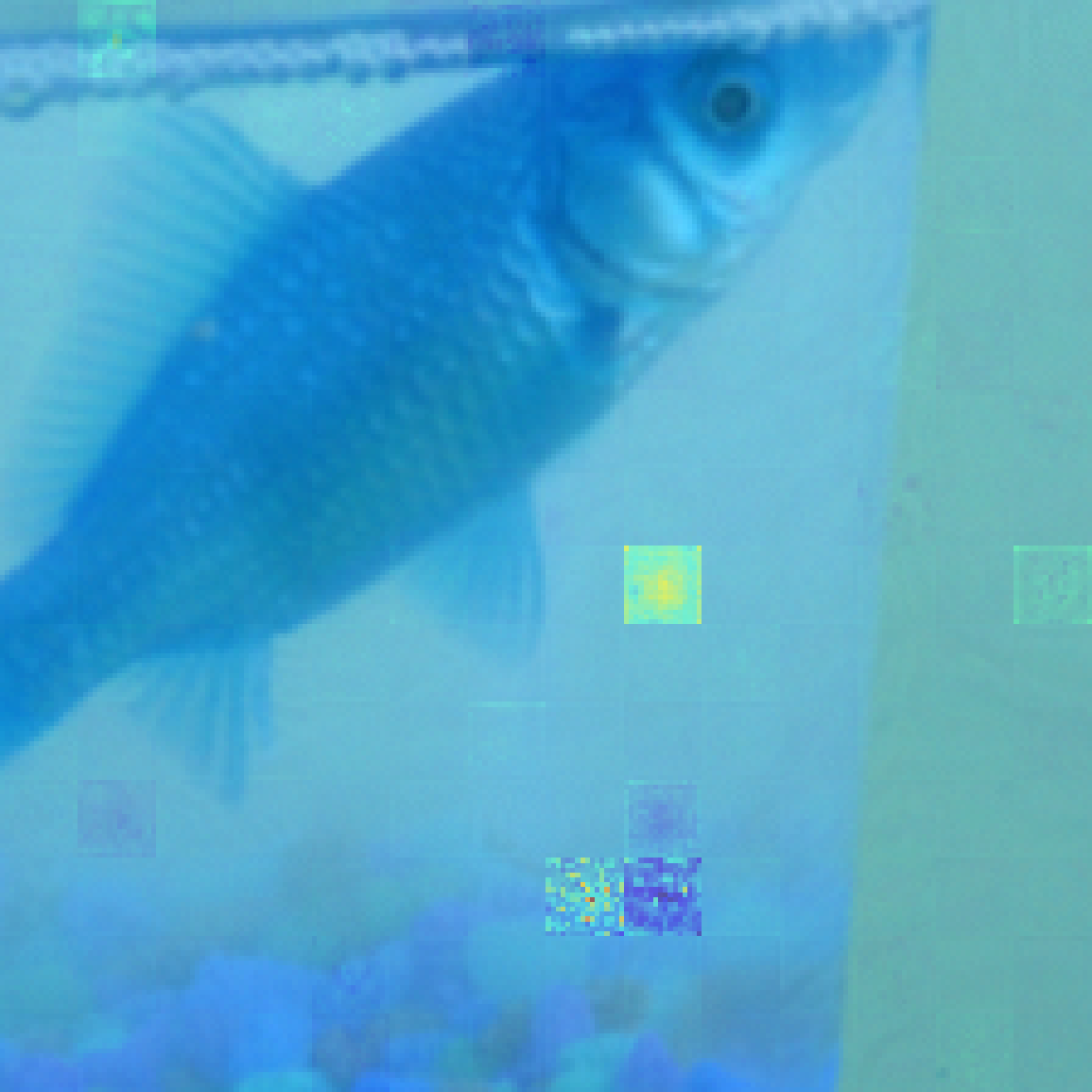} &
      \includegraphics[width=0.15\linewidth]{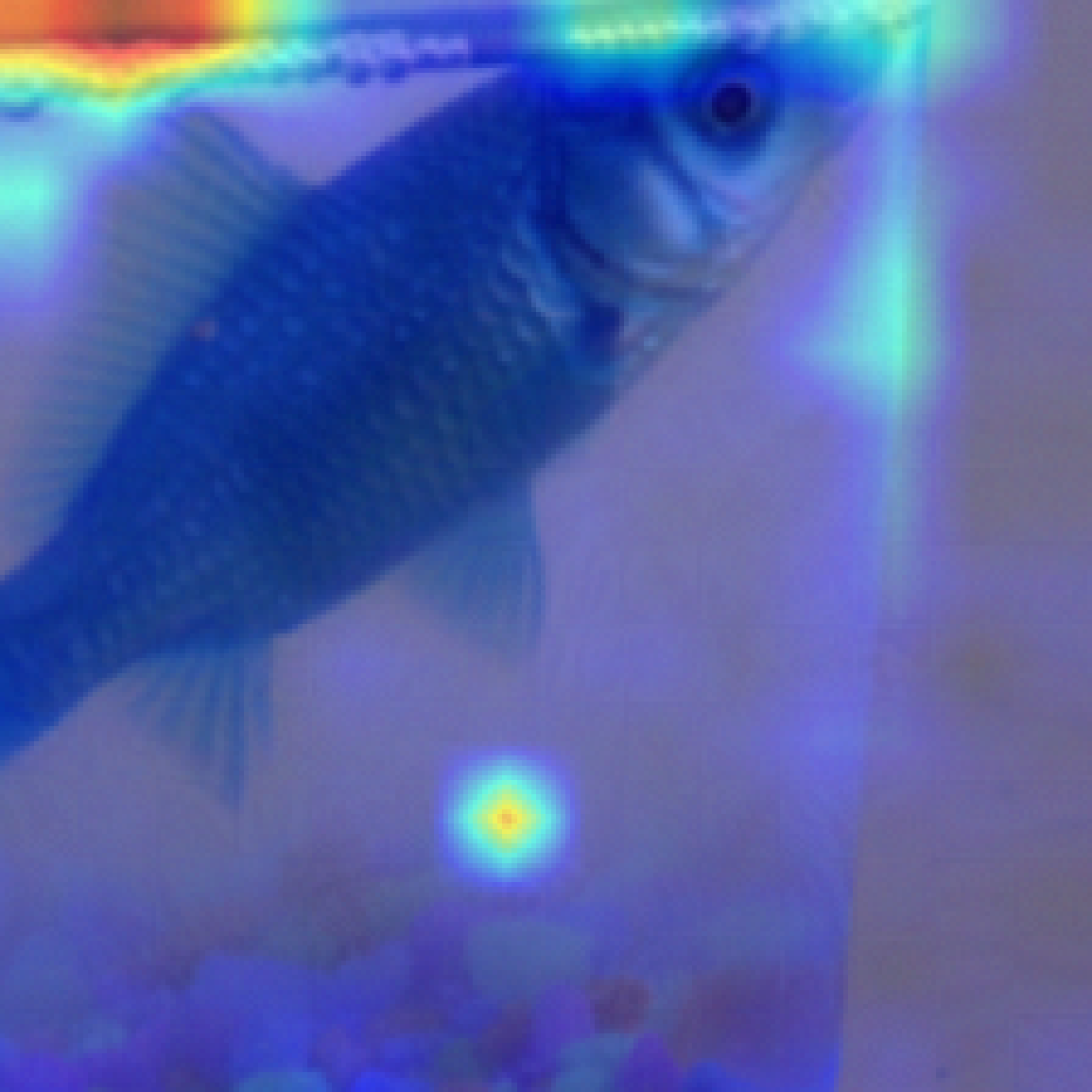} &
      \includegraphics[width=0.15\linewidth]{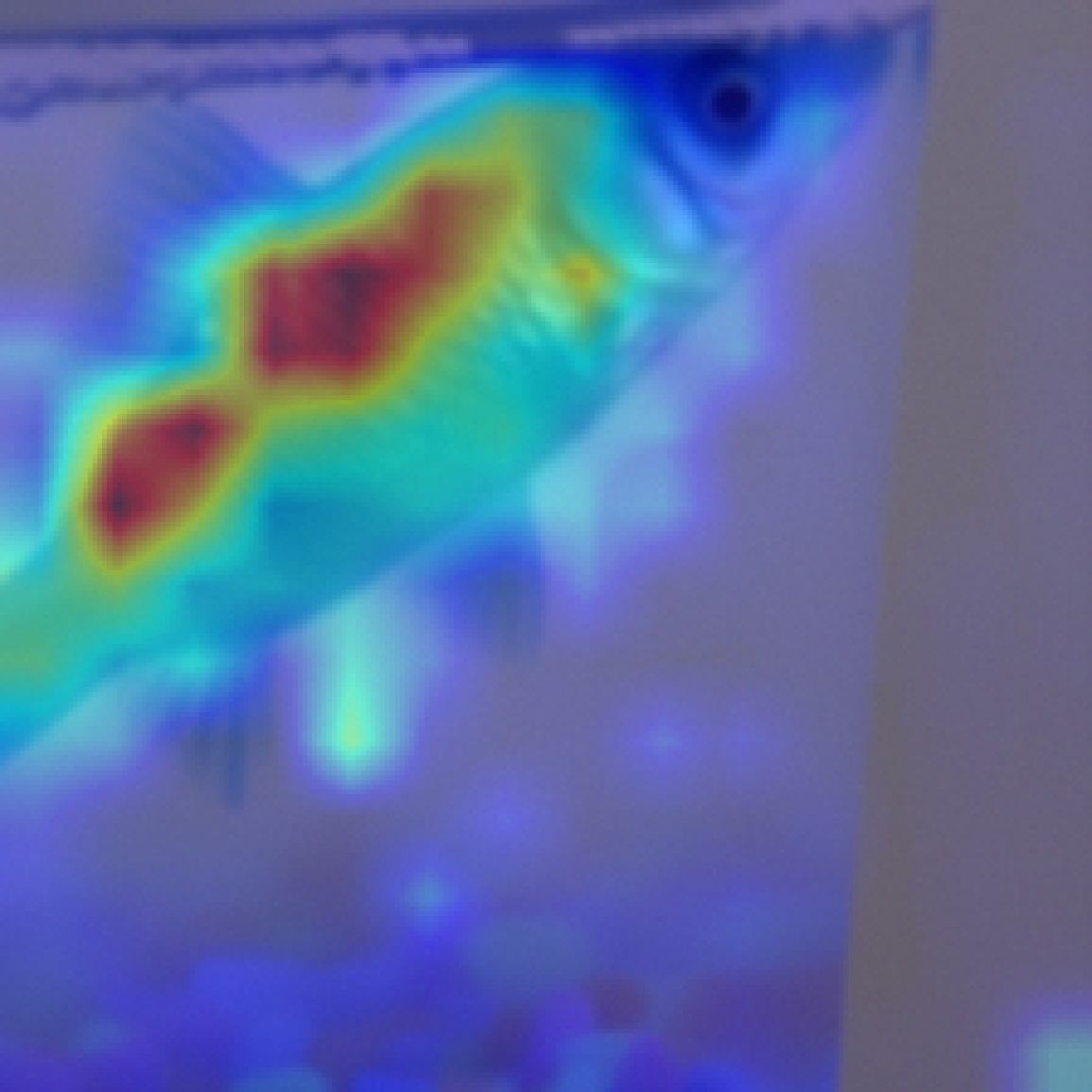} \\
        \raisebox{13mm}{\makecell*[c]{Poisoned$\rightarrow 5/255$\\\includegraphics[width=0.15\linewidth]{exp/fish/fish.png}
     }} &
     \includegraphics[width=0.15\linewidth]{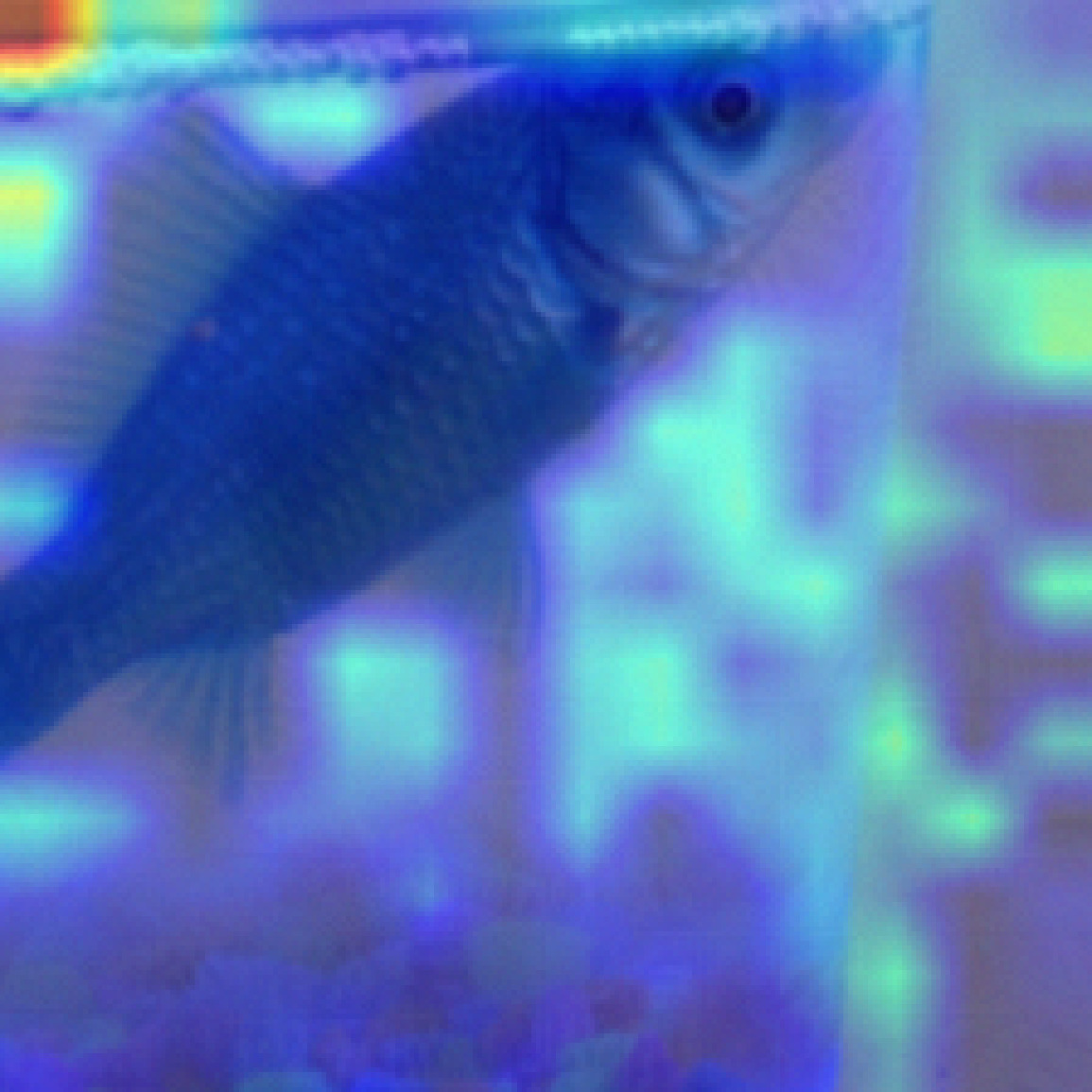} &
      \includegraphics[width=0.15\linewidth]{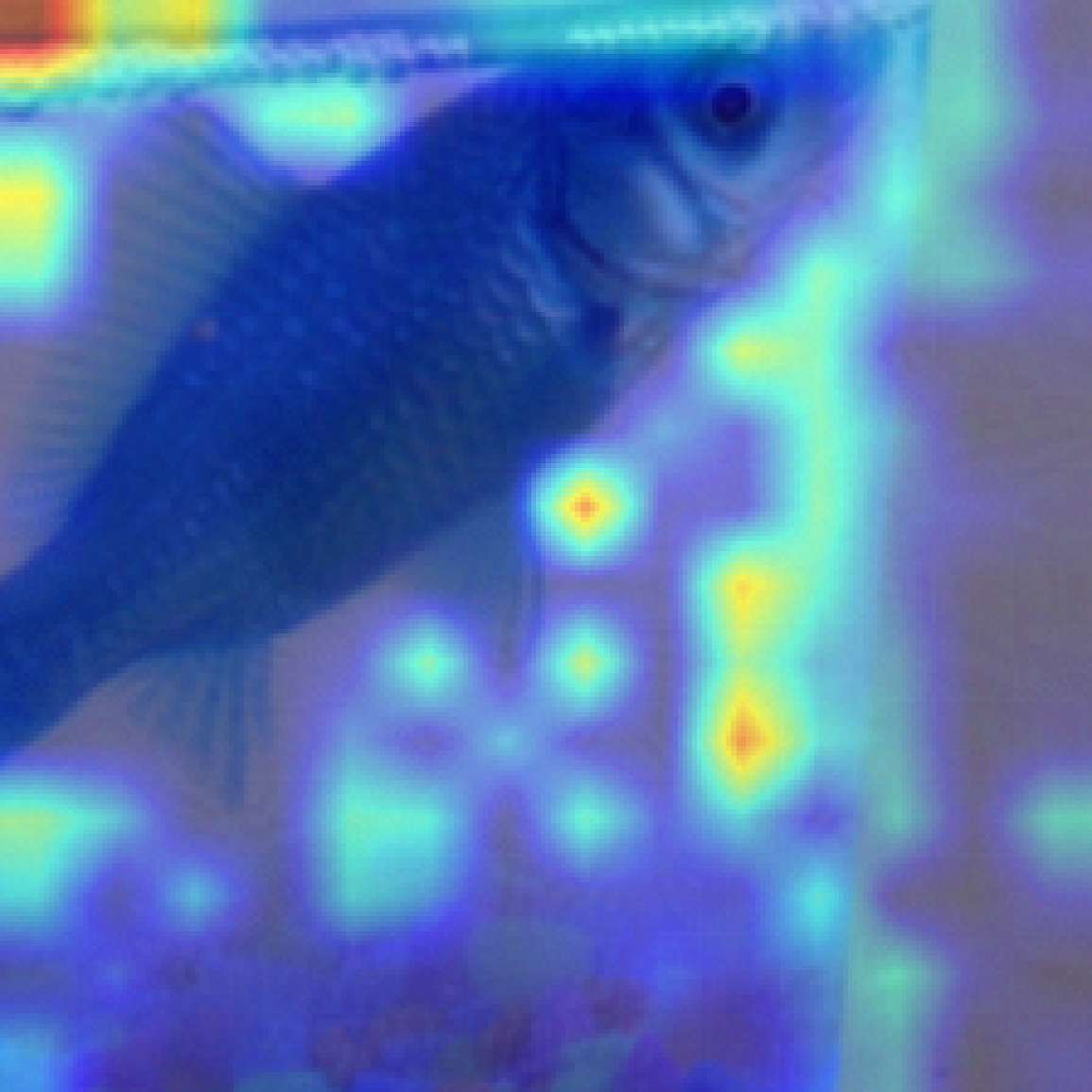} &
      \includegraphics[width=0.15\linewidth]{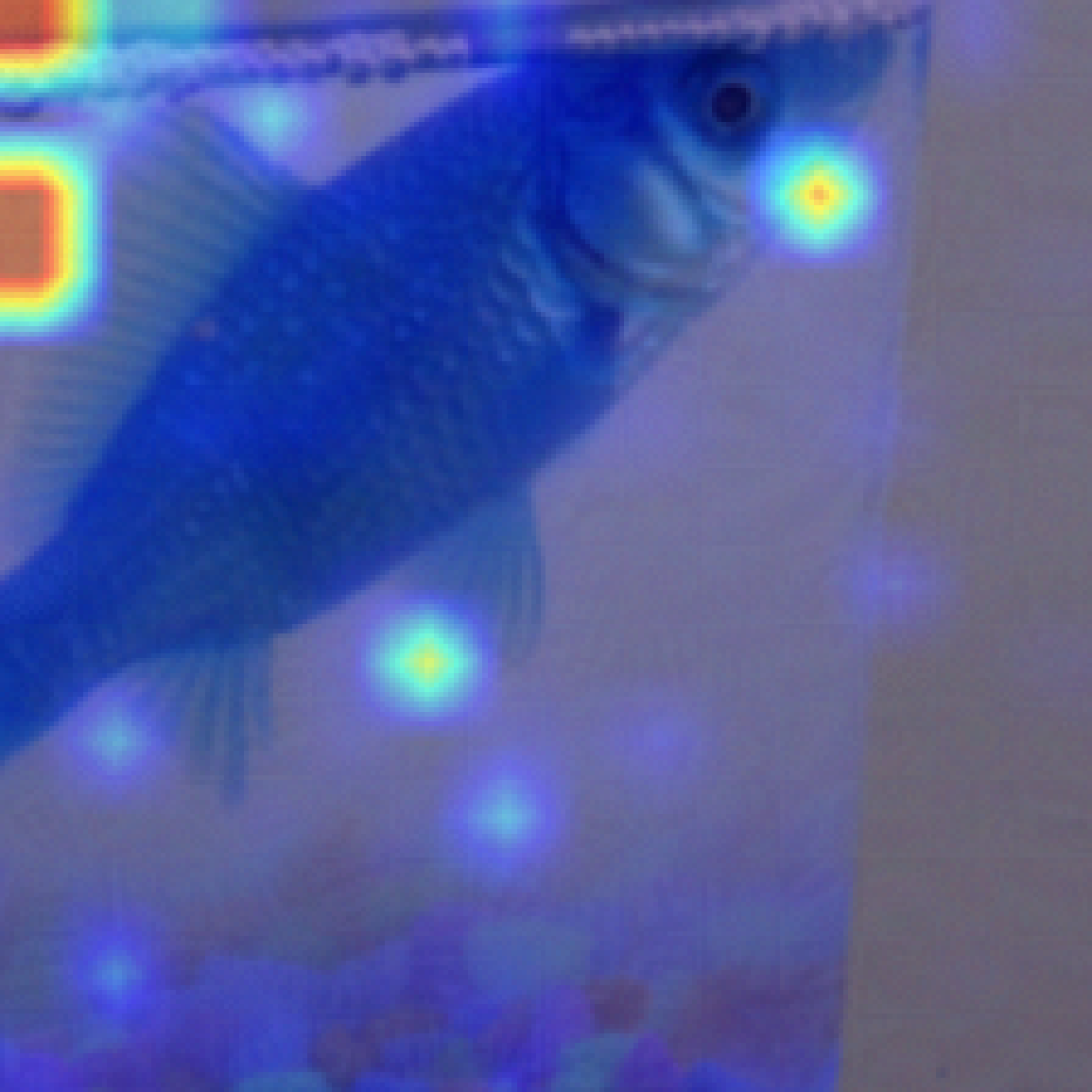} &
      \includegraphics[width=0.15\linewidth]{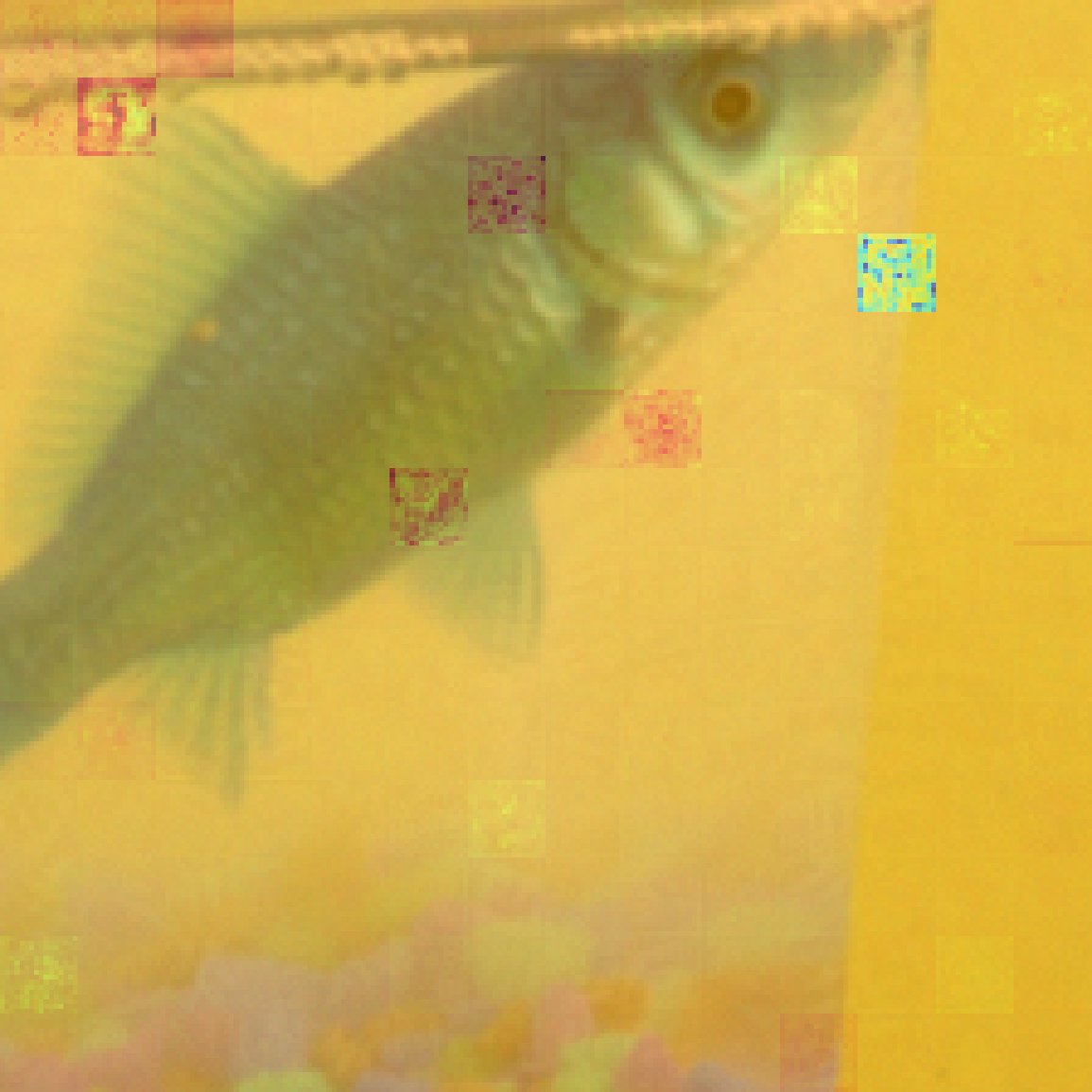} &
      \includegraphics[width=0.15\linewidth]{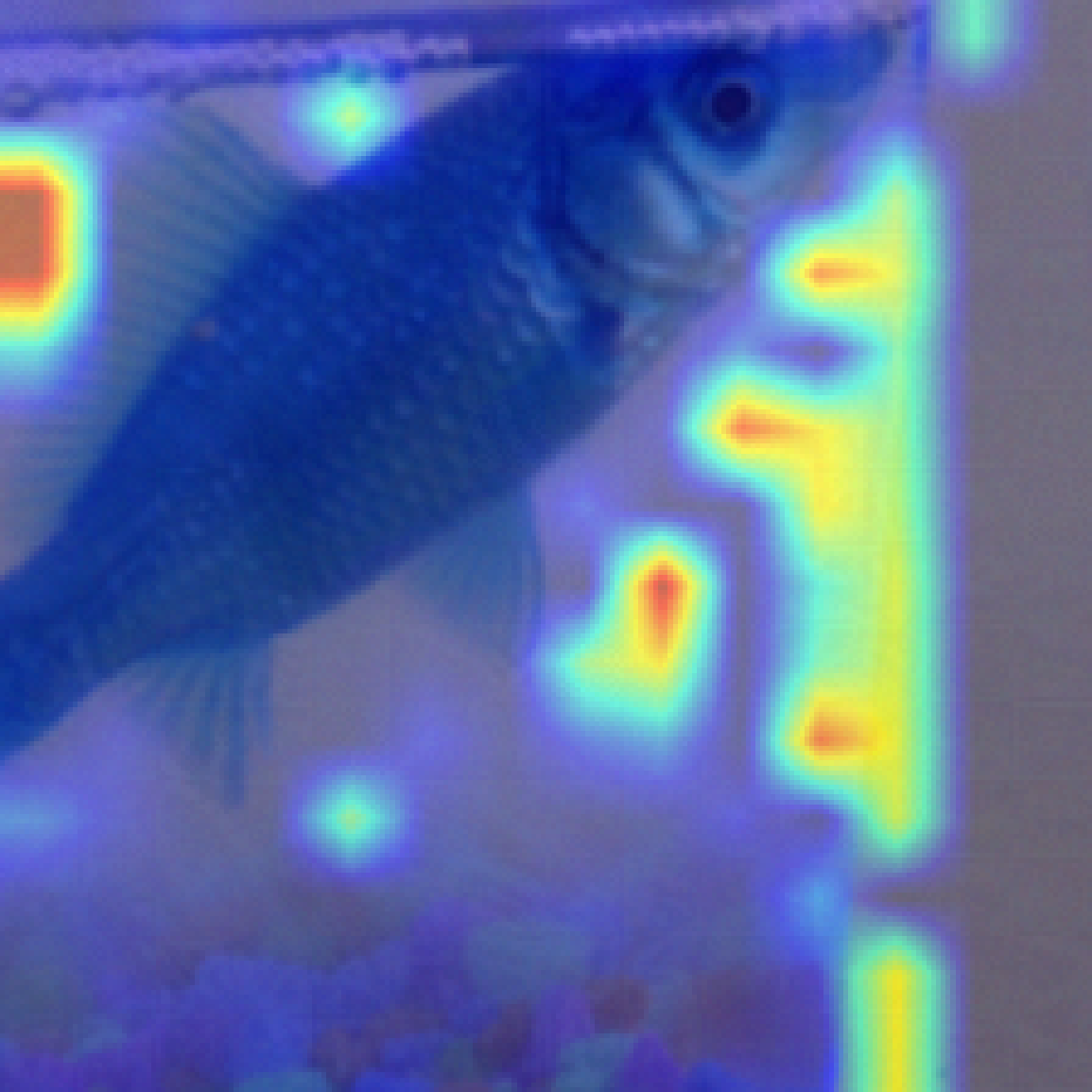} &
      \includegraphics[width=0.15\linewidth]{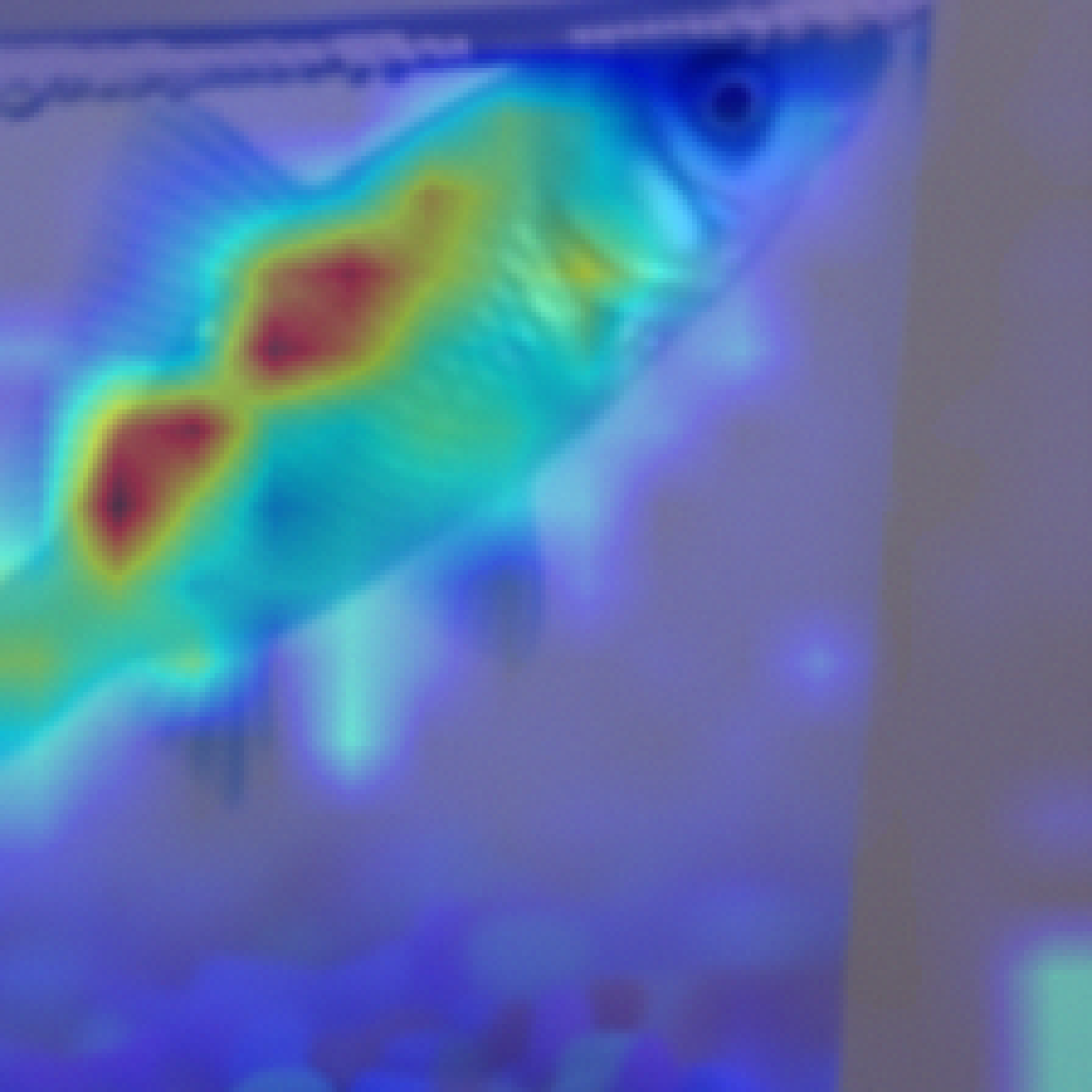} \\

     \end{tabular*}
     \caption{Visualization results of the attention map on corrupted input for different methods under different attack radii. }
     \label{fig: diff-posin2}
     \end{center}
 \end{figure*}

 \begin{figure*}[htbp]
     \setlength{\tabcolsep}{1pt} % Default value: 6pt
     \renewcommand{\arraystretch}{1} % Default value: 1
     \begin{center}
     \begin{tabular*}{\linewidth}{@{\extracolsep{\fill}}ccccccc}
      Corrupted Input & Raw Attention & Rollout 
      & GradCAM &LRP& VTA  & Ours \\
        \raisebox{13mm}{\makecell*[c]{Clean$\rightarrow$\\\includegraphics[width=0.15\linewidth]{exp/fish/fish.png}
     }} &
     \includegraphics[width=0.15\linewidth]{exp/fish-a0/fish-a0-raw_attn-1-perturbed-0.png} &
      \includegraphics[width=0.15\linewidth]{exp/fish-a0/fish-a0-rollout-1-perturbed-0.png} &
      \includegraphics[width=0.15\linewidth]{exp/fish-a0/fish-a0-attn_gradcam-1-perturbed-0.png} &
      \includegraphics[width=0.15\linewidth]{exp/fish-a0/fish-a0-full-LRP-1-perturbed-0.png} &
      \includegraphics[width=0.15\linewidth]{exp/fish-a0/fish-a0-vta-1-perturbed-0.png} &
      \includegraphics[width=0.15\linewidth]{exp/fish-a0/fish-a0-ours-1-perturbed-0.png} \\
      \raisebox{13mm}{\makecell*[c]{Poisoned$\rightarrow 6/255$\\\includegraphics[width=0.15\linewidth]{exp/fish/fish.png}
     }} &
     \includegraphics[width=0.15\linewidth]{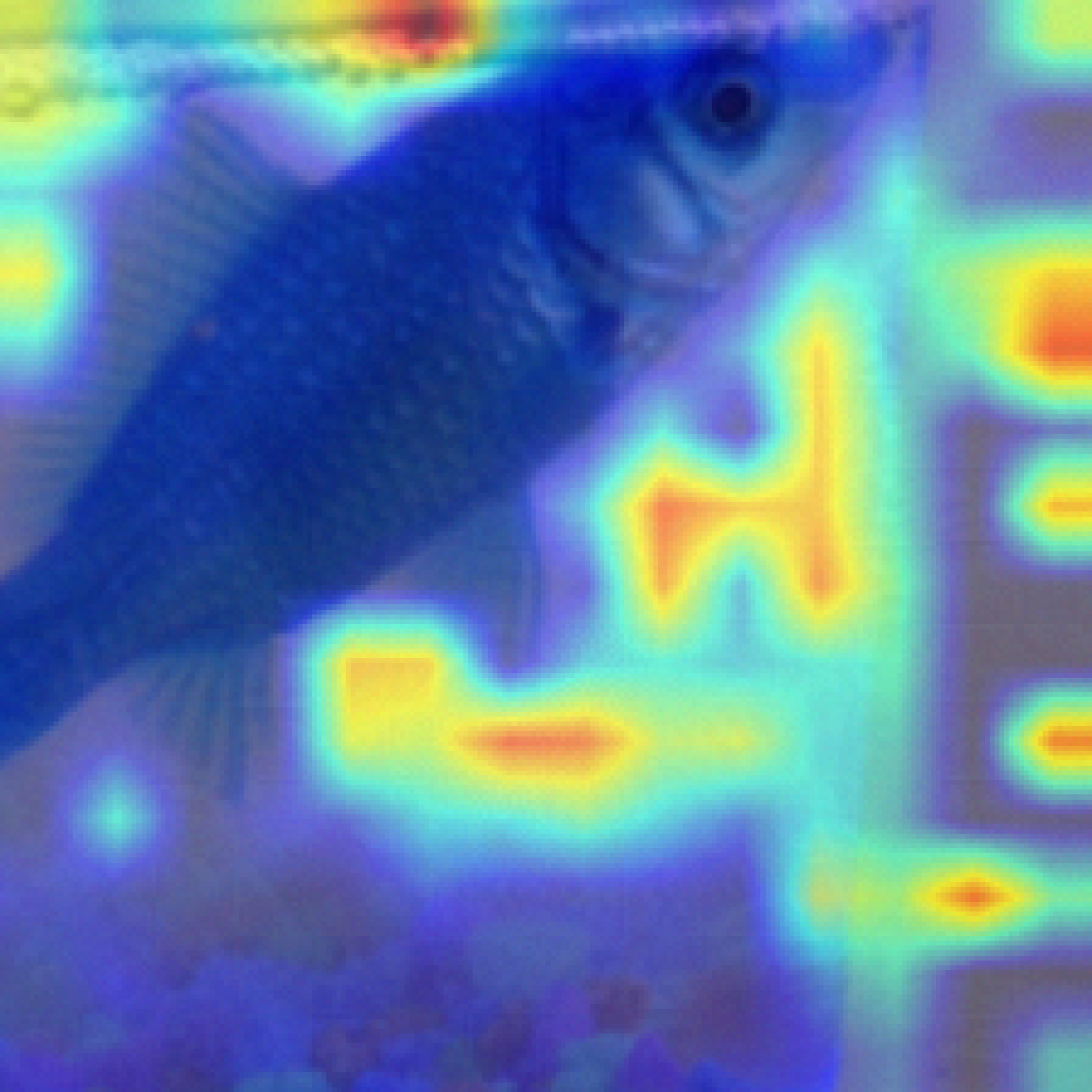} &
      \includegraphics[width=0.15\linewidth]{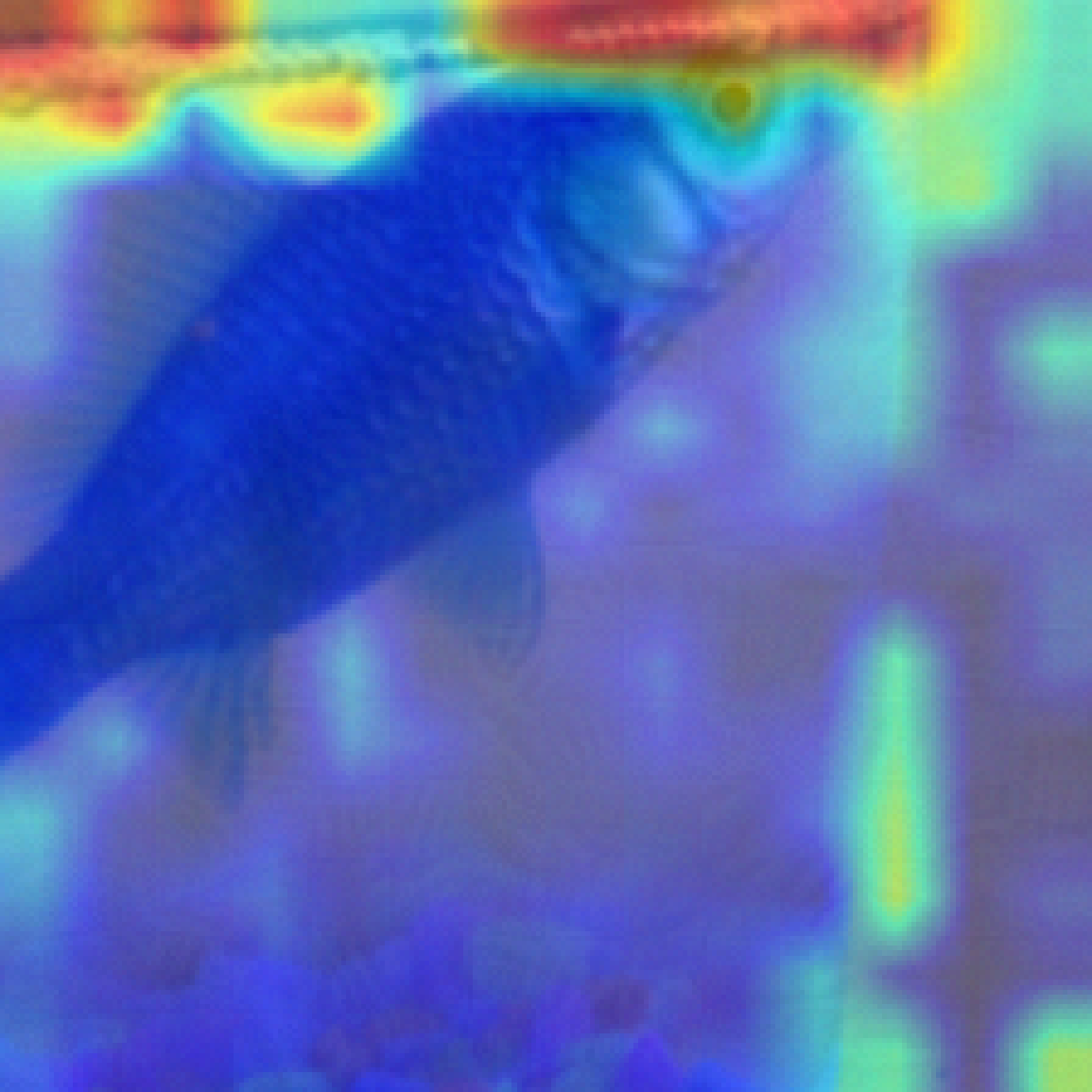} &
      \includegraphics[width=0.15\linewidth]{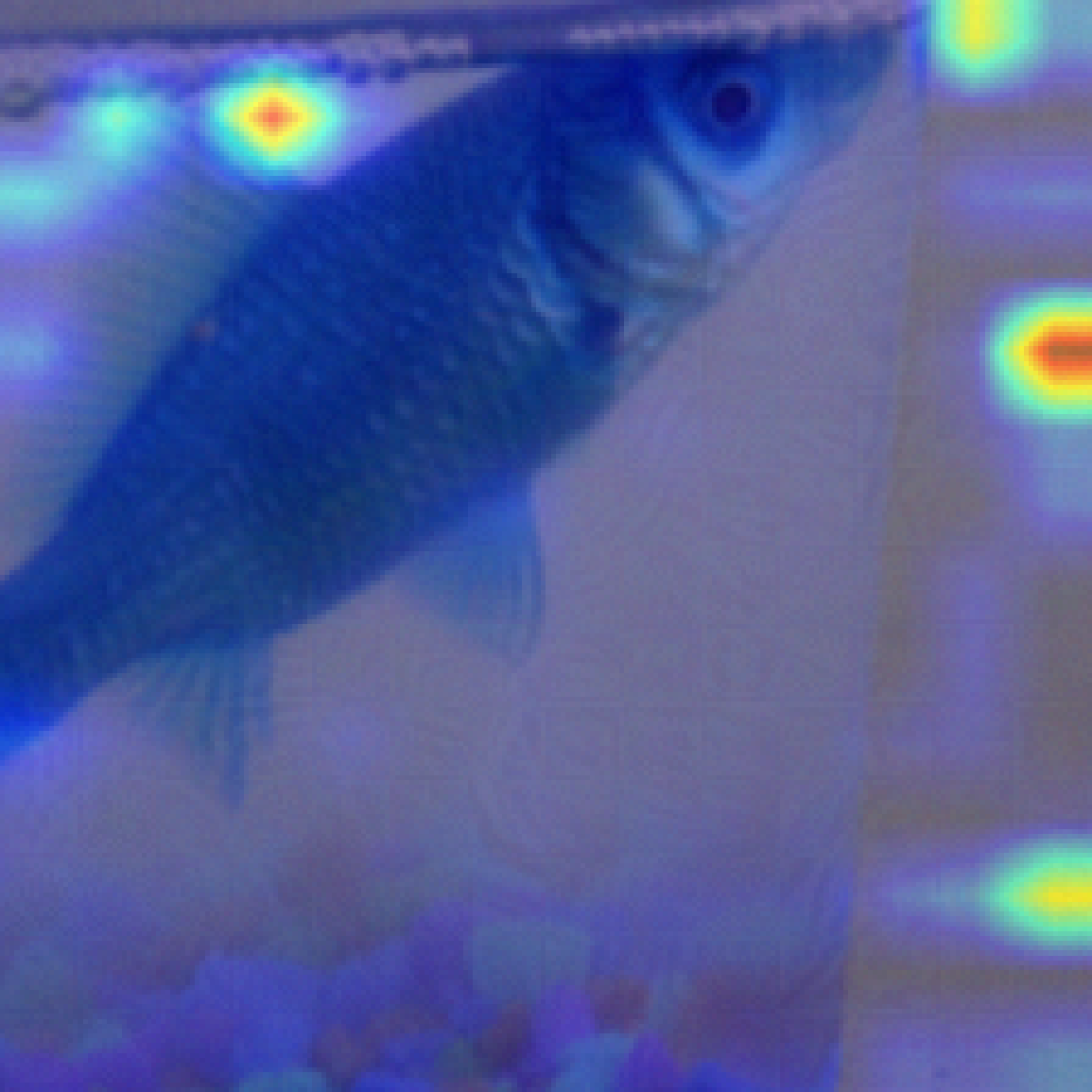} &
      \includegraphics[width=0.15\linewidth]{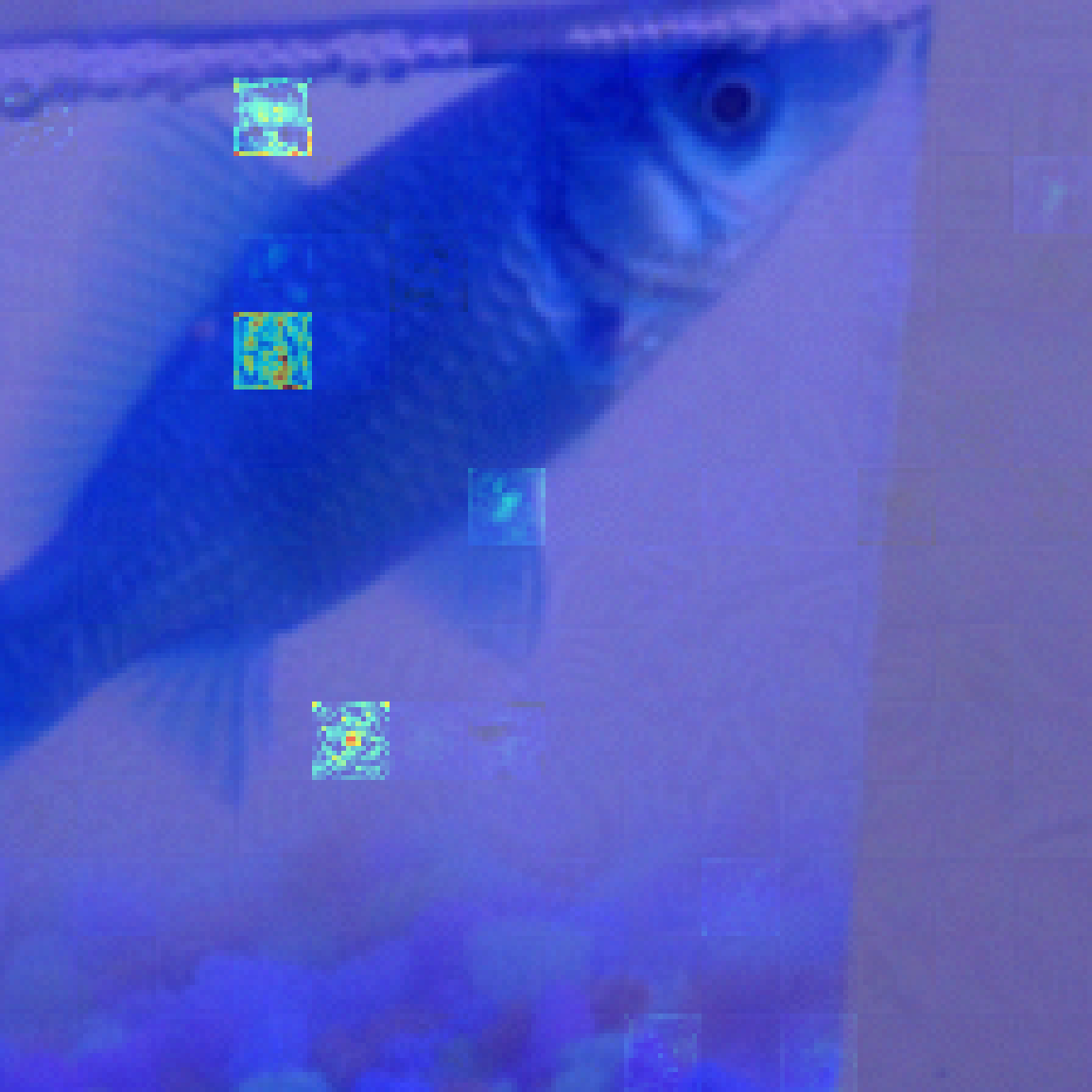} &
      \includegraphics[width=0.15\linewidth]{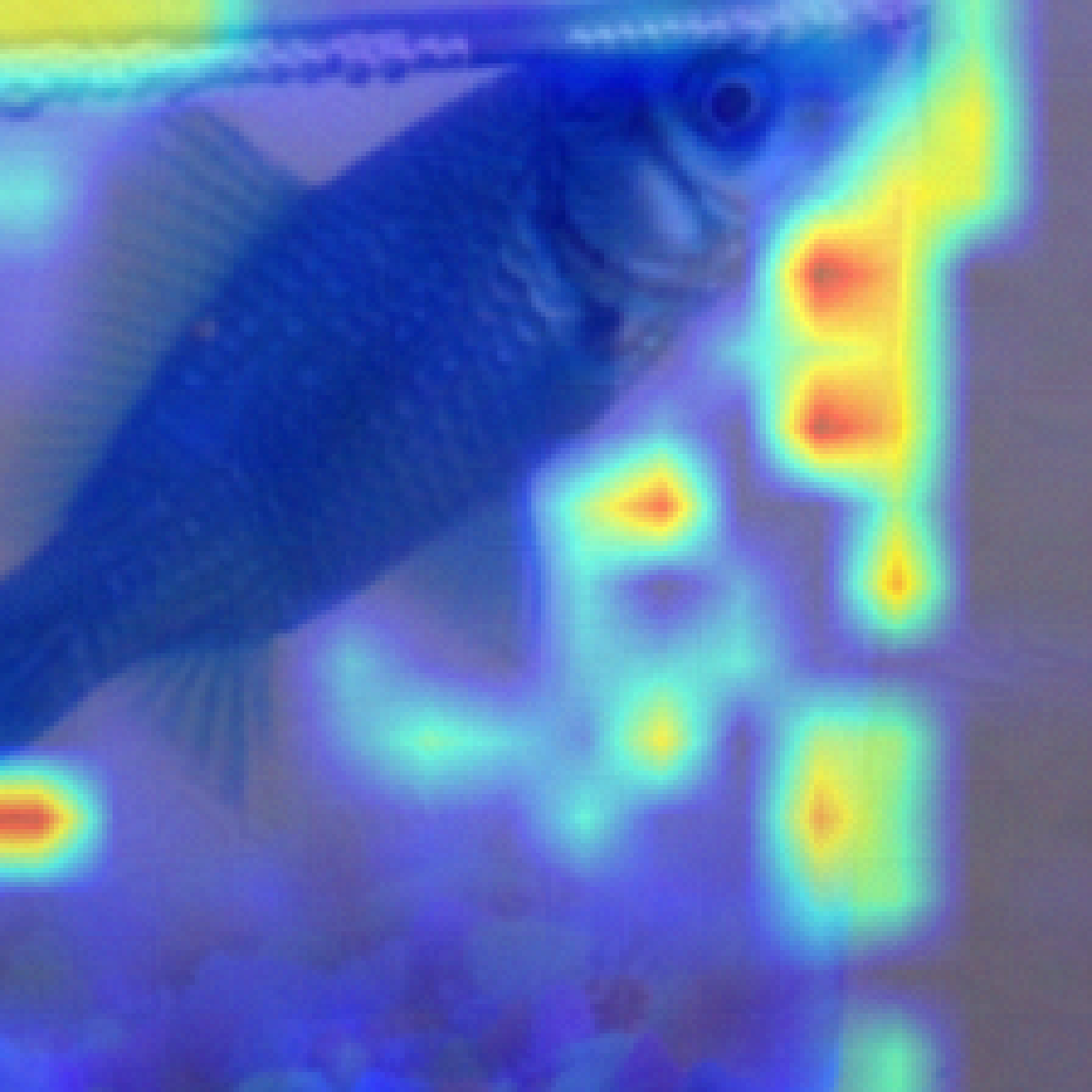} &
      \includegraphics[width=0.15\linewidth]{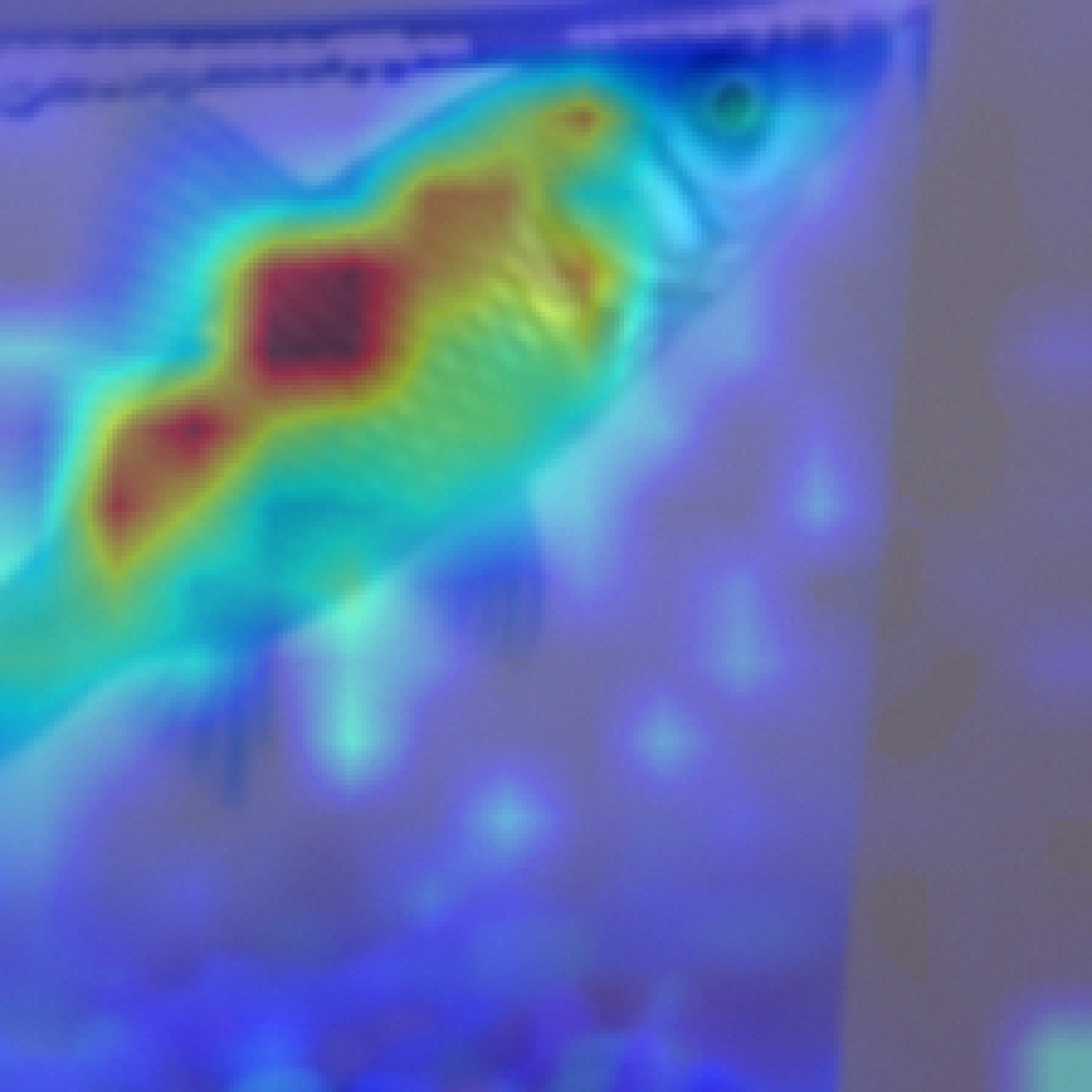} \\
      \raisebox{13mm}{\makecell*[c]{Poisoned$\rightarrow 7/255$\\\includegraphics[width=0.15\linewidth]{exp/fish/fish.png}
     }} &
     \includegraphics[width=0.15\linewidth]{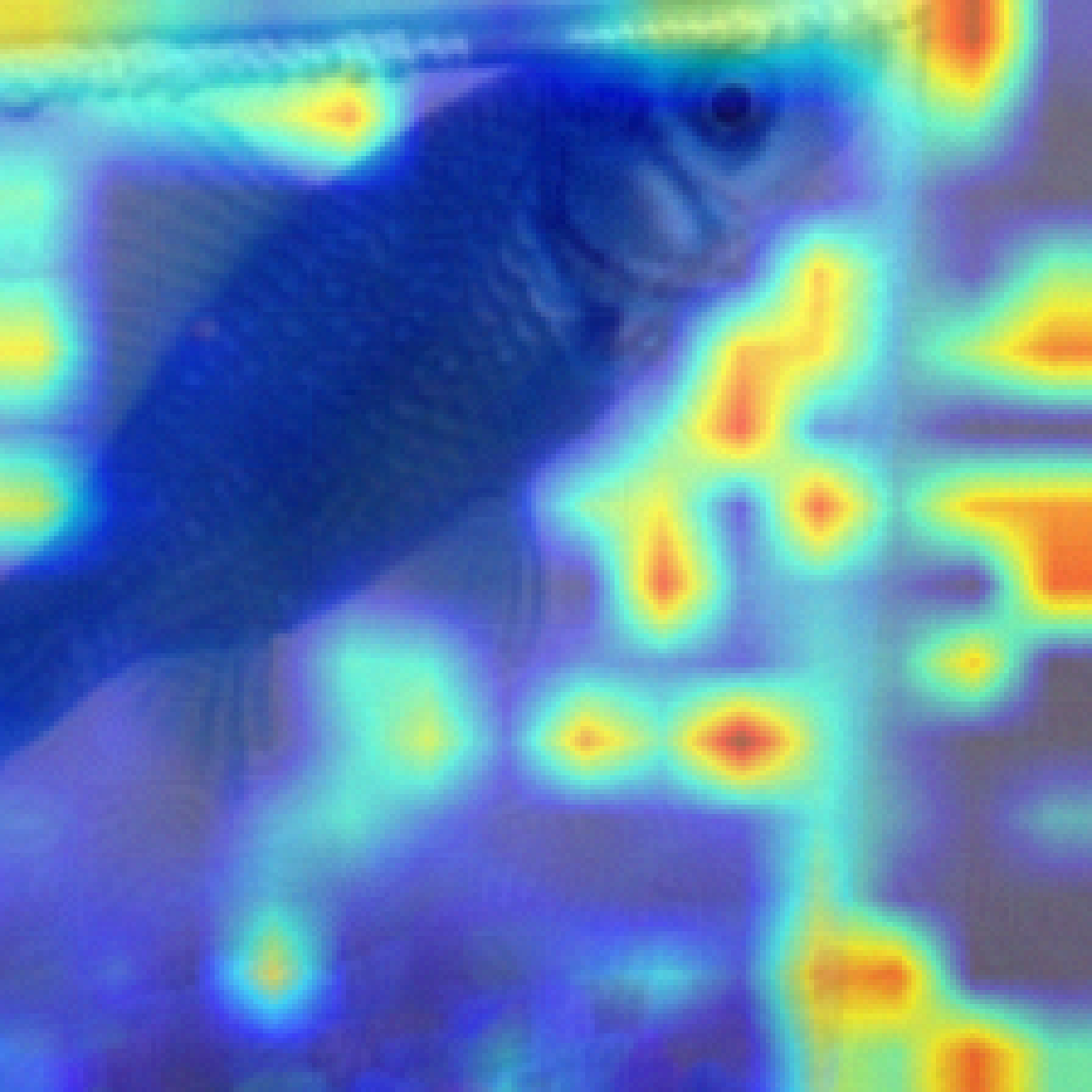} &
      \includegraphics[width=0.15\linewidth]{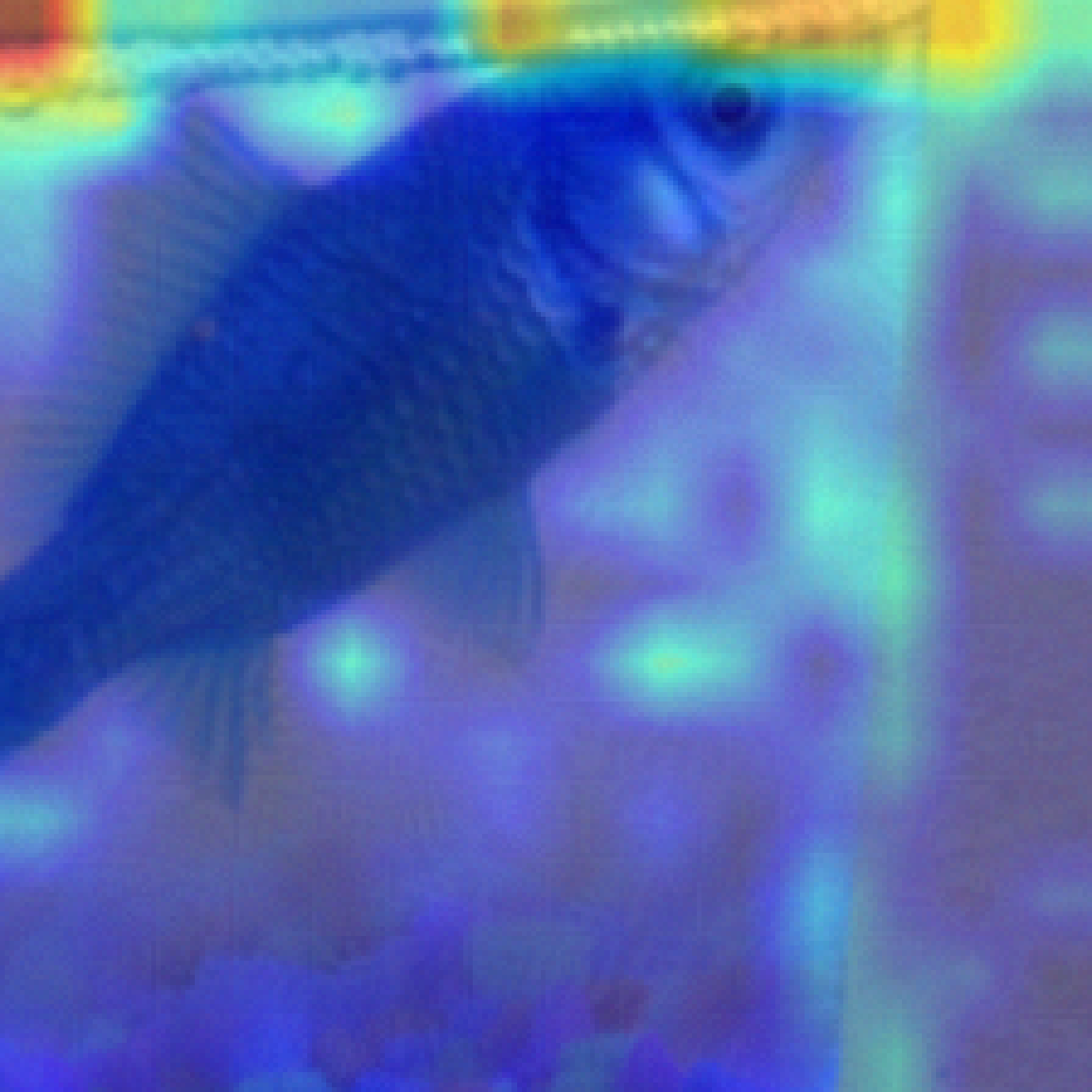} &
      \includegraphics[width=0.15\linewidth]{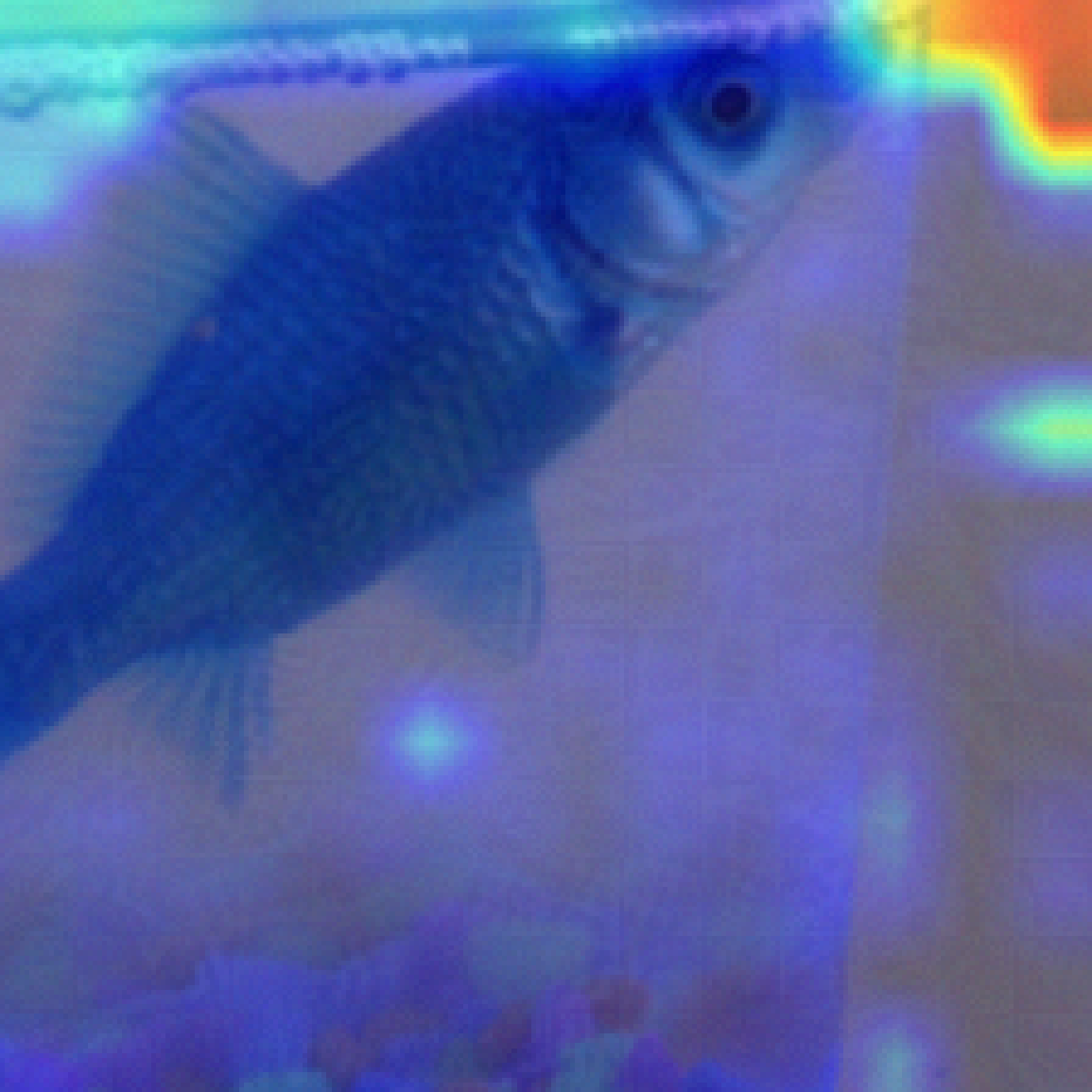} &
      \includegraphics[width=0.15\linewidth]{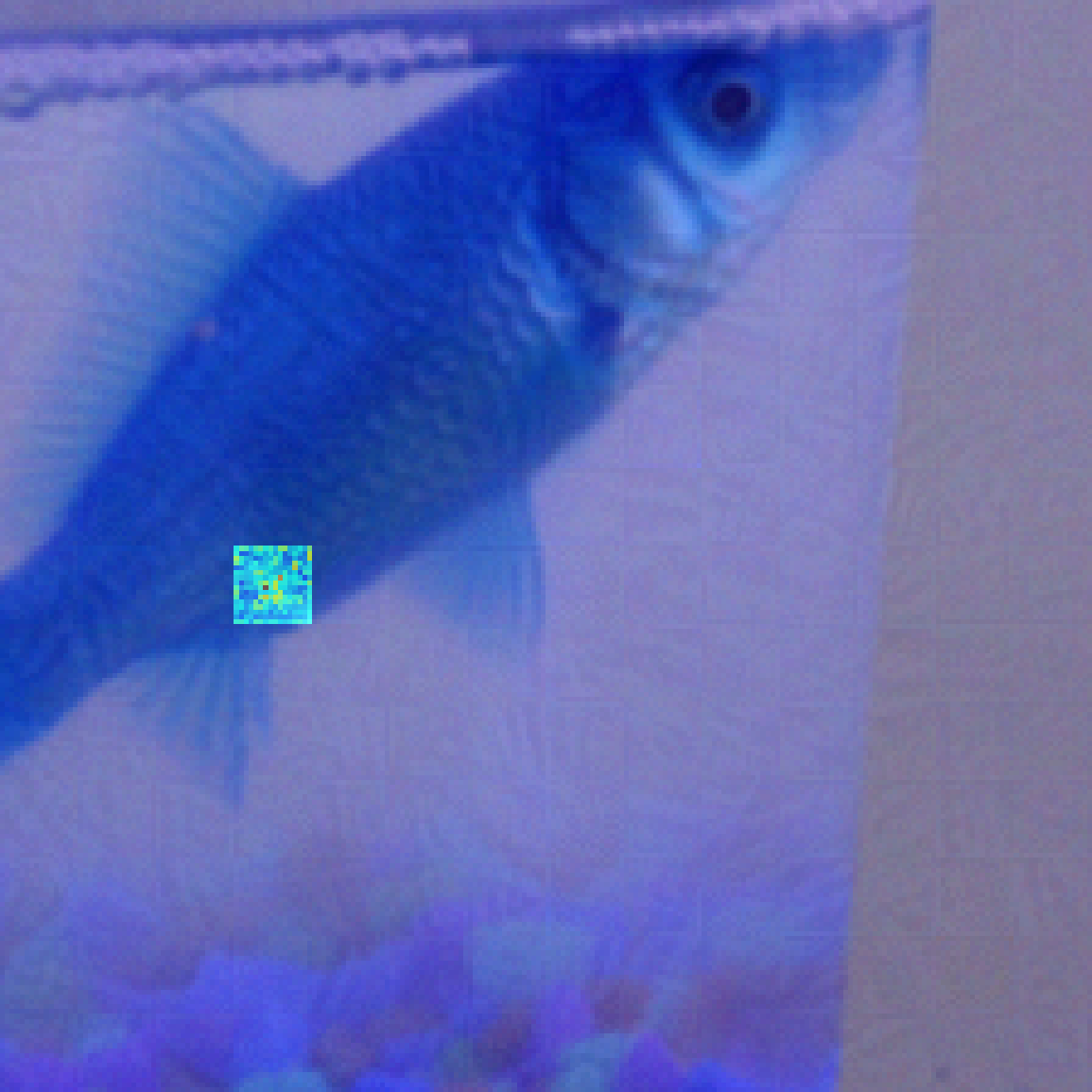} &
      \includegraphics[width=0.15\linewidth]{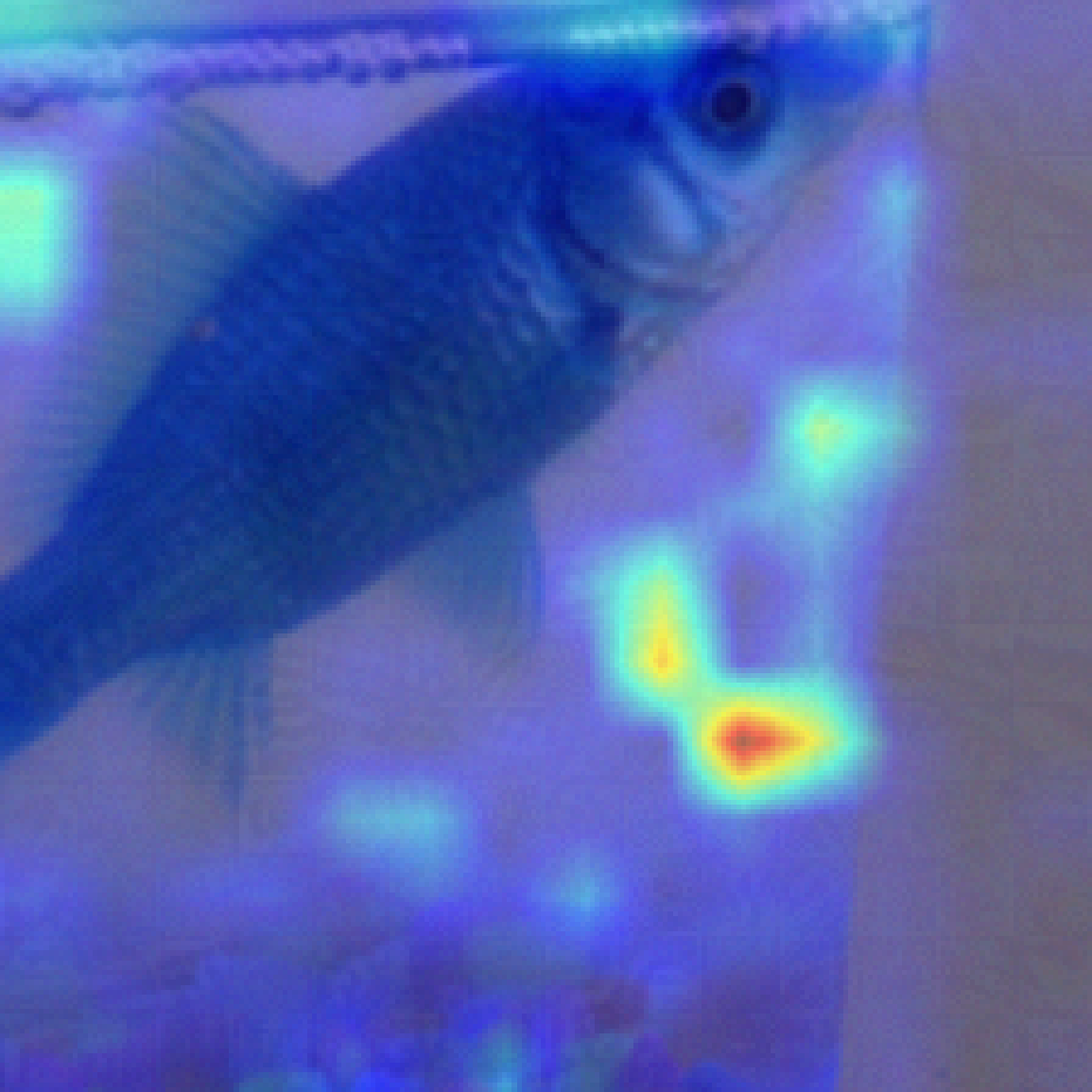} &
      \includegraphics[width=0.15\linewidth]{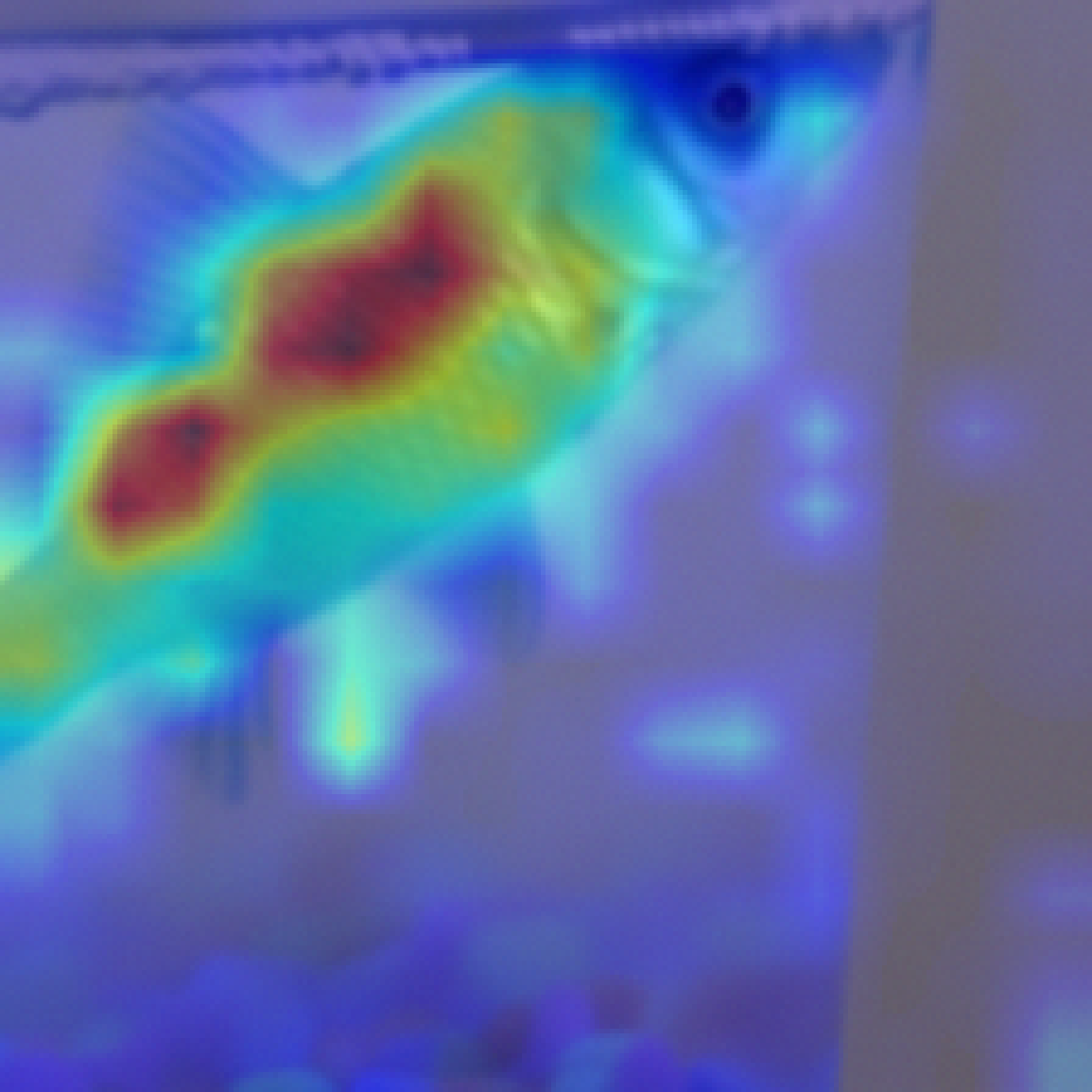} \\
      
      \raisebox{13mm}{\makecell*[c]{Poisoned$\rightarrow 8/255$\\\includegraphics[width=0.15\linewidth]{exp/fish/fish.png}
     }} &
     \includegraphics[width=0.15\linewidth]{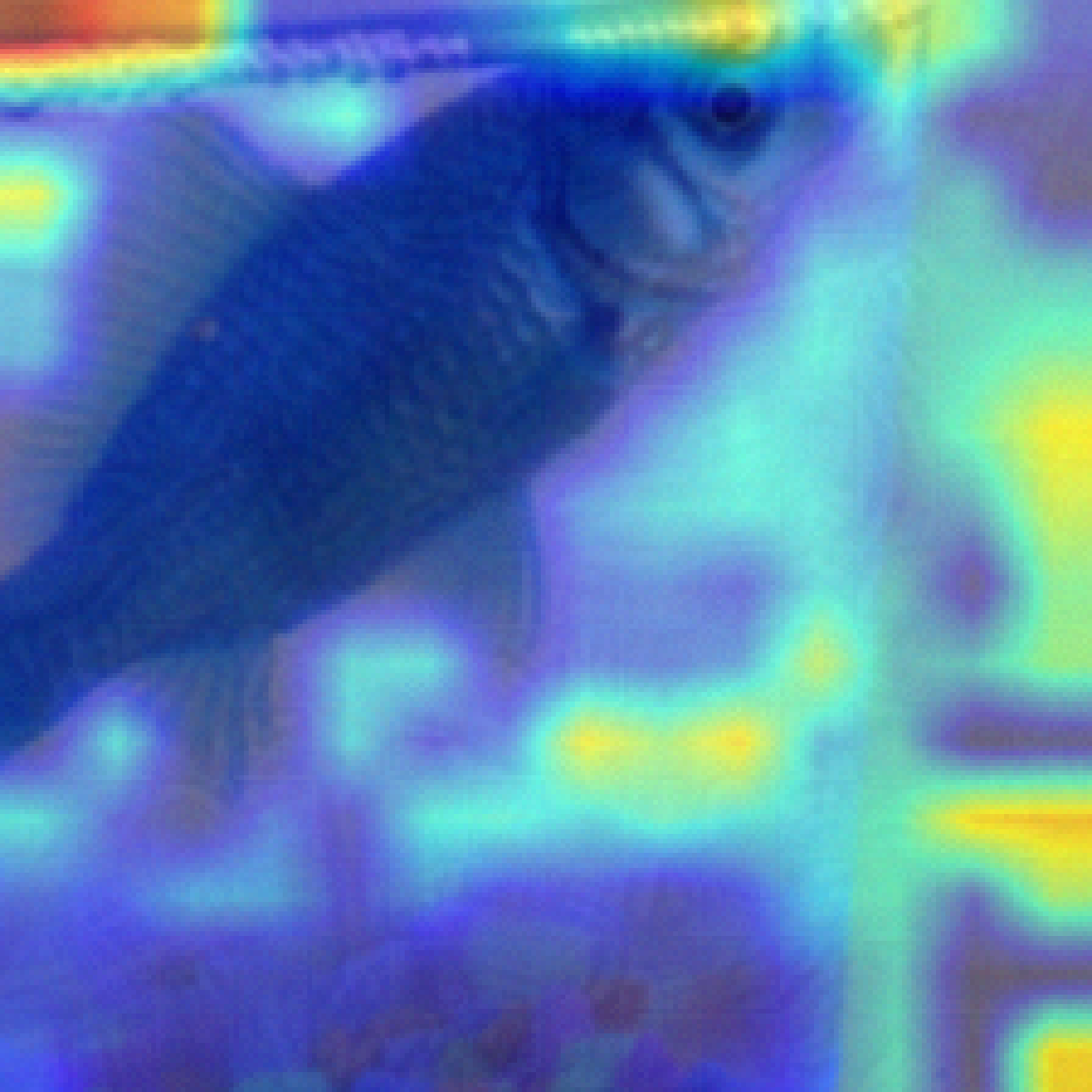} &
      \includegraphics[width=0.15\linewidth]{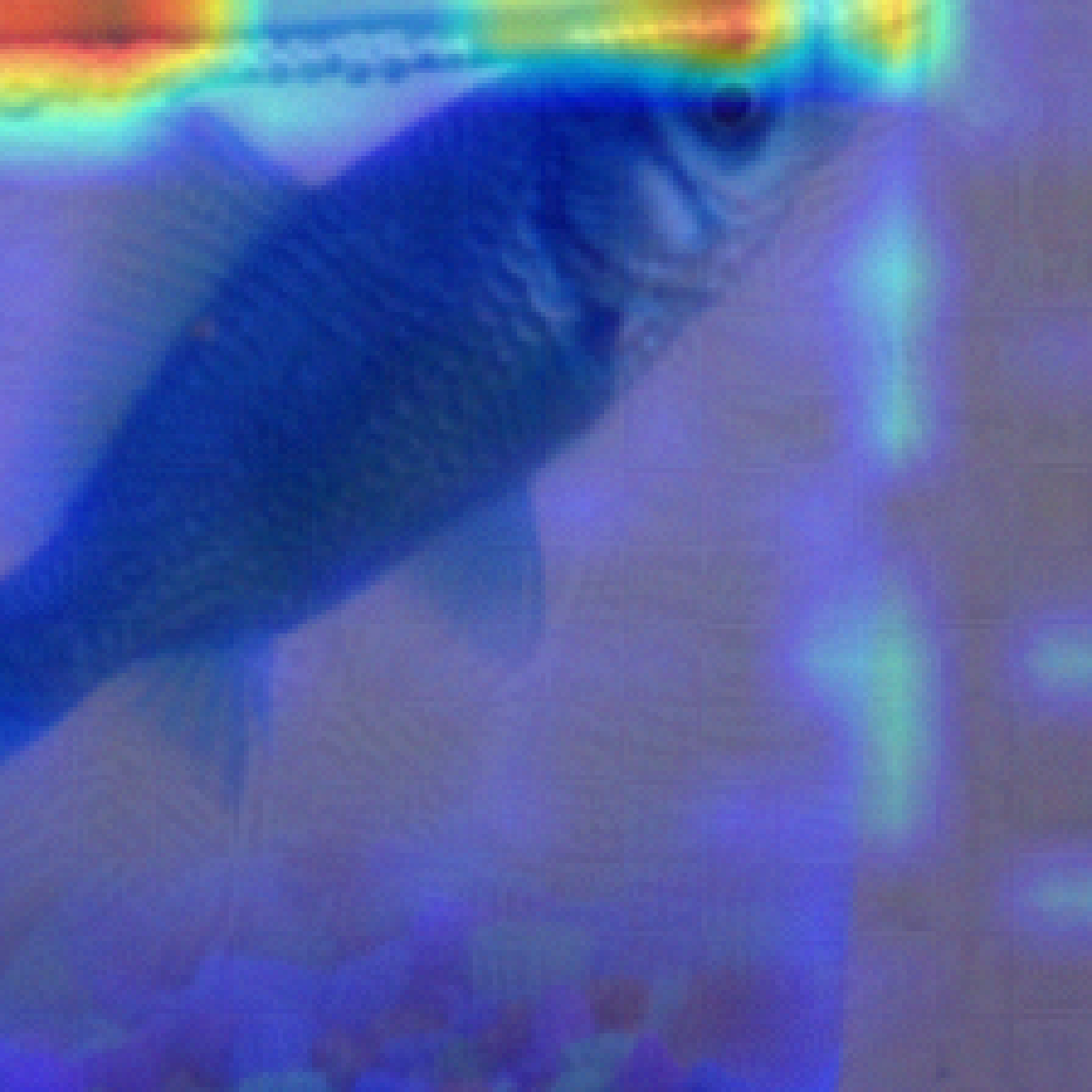} &
      \includegraphics[width=0.15\linewidth]{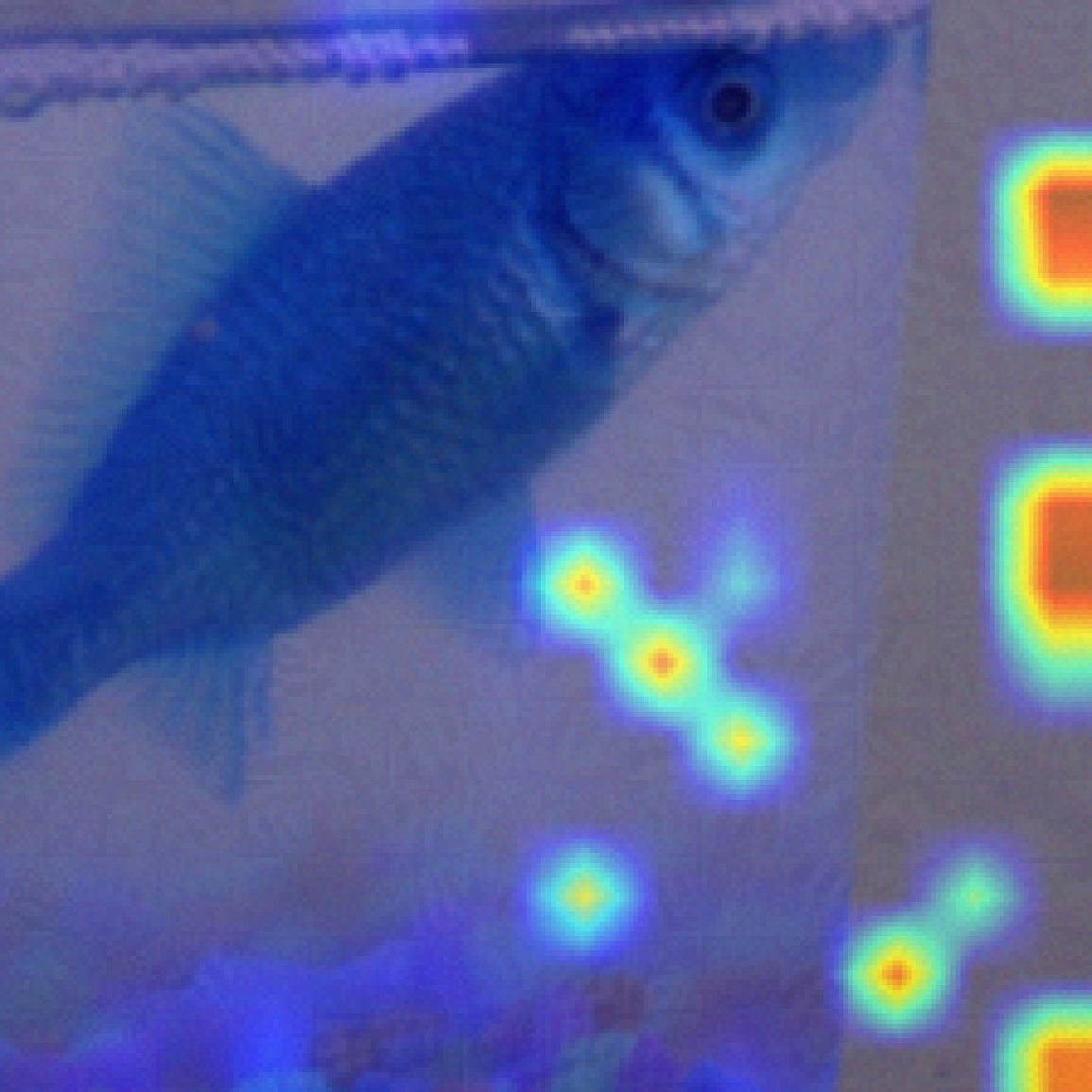} &
      \includegraphics[width=0.15\linewidth]{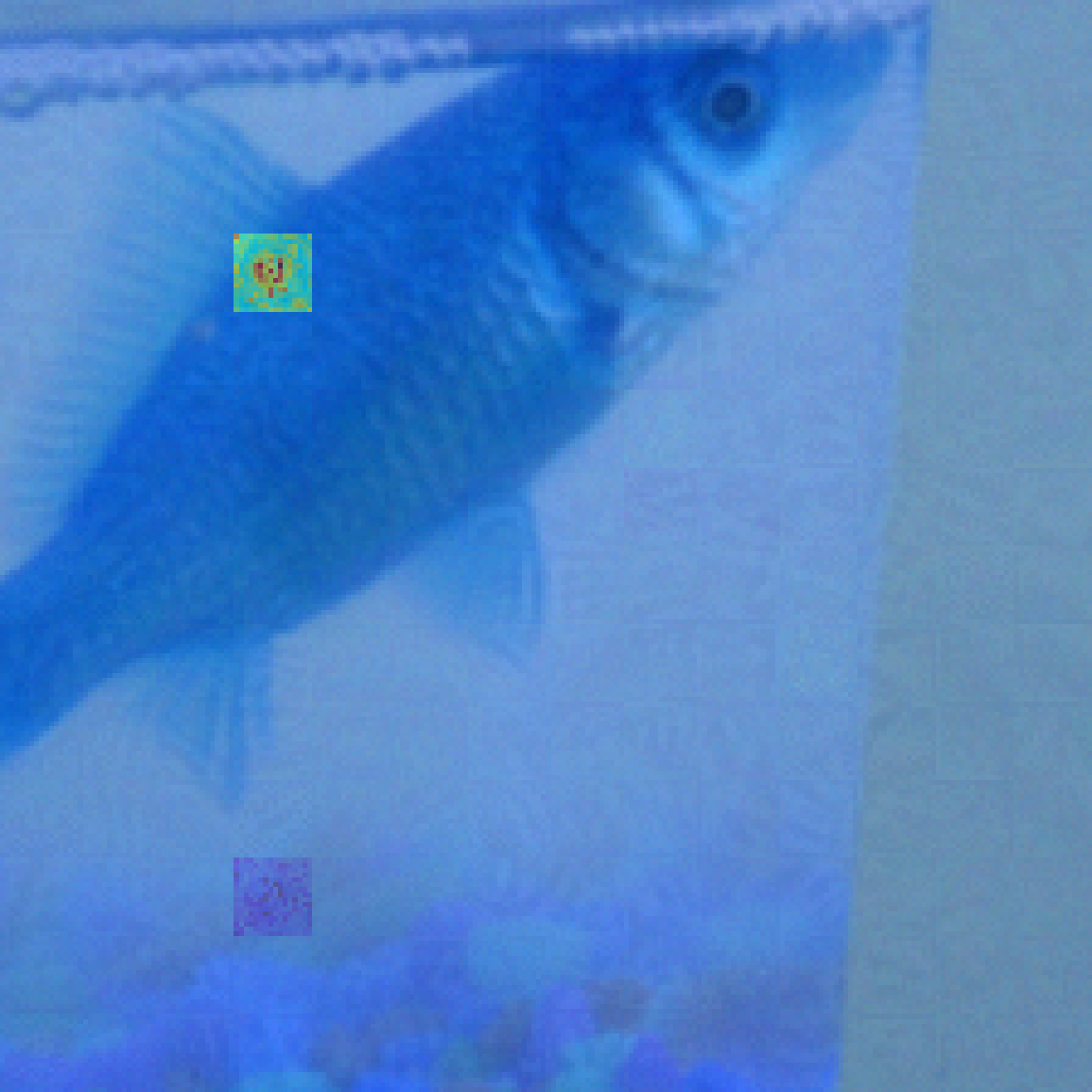} &
      \includegraphics[width=0.15\linewidth]{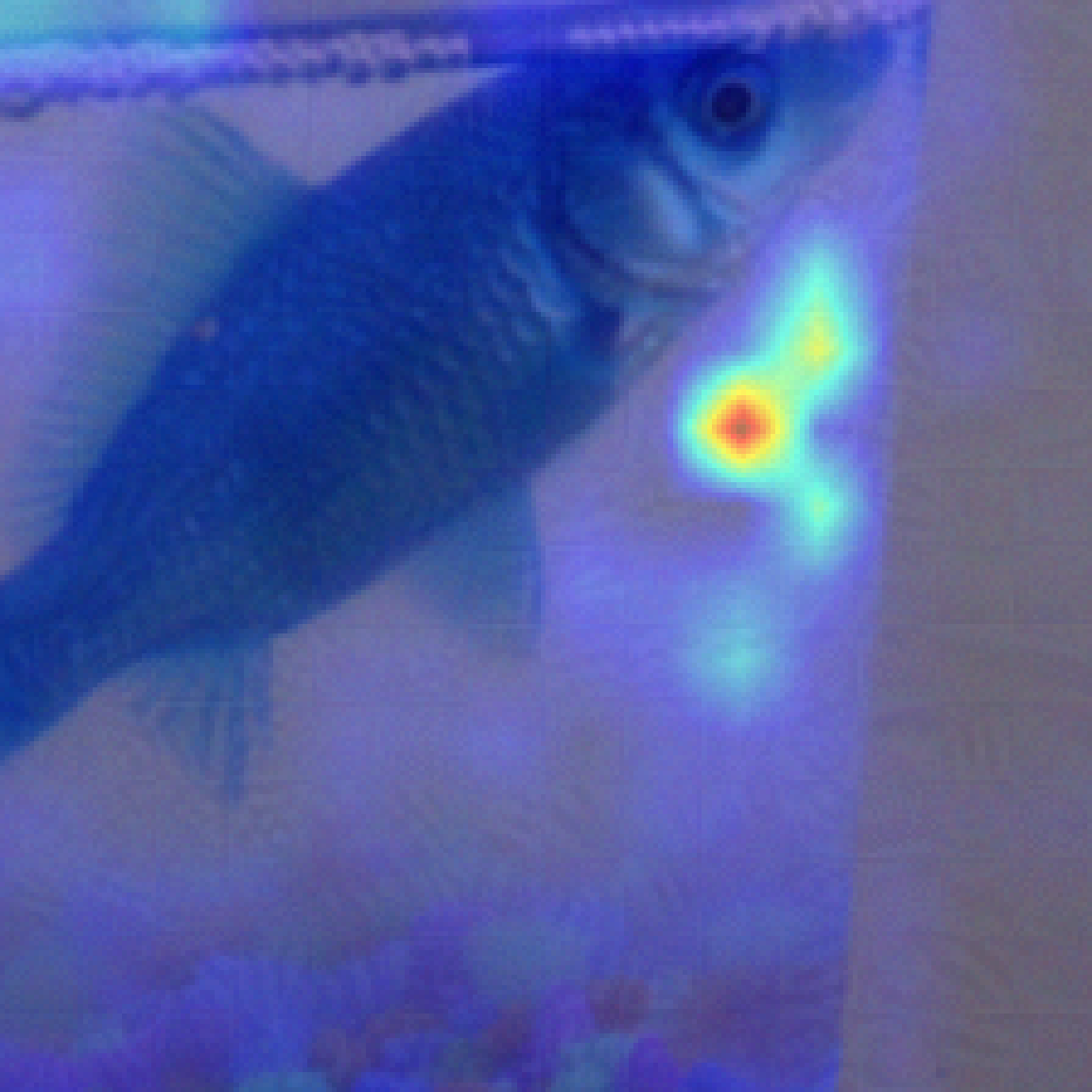} &
      \includegraphics[width=0.15\linewidth]{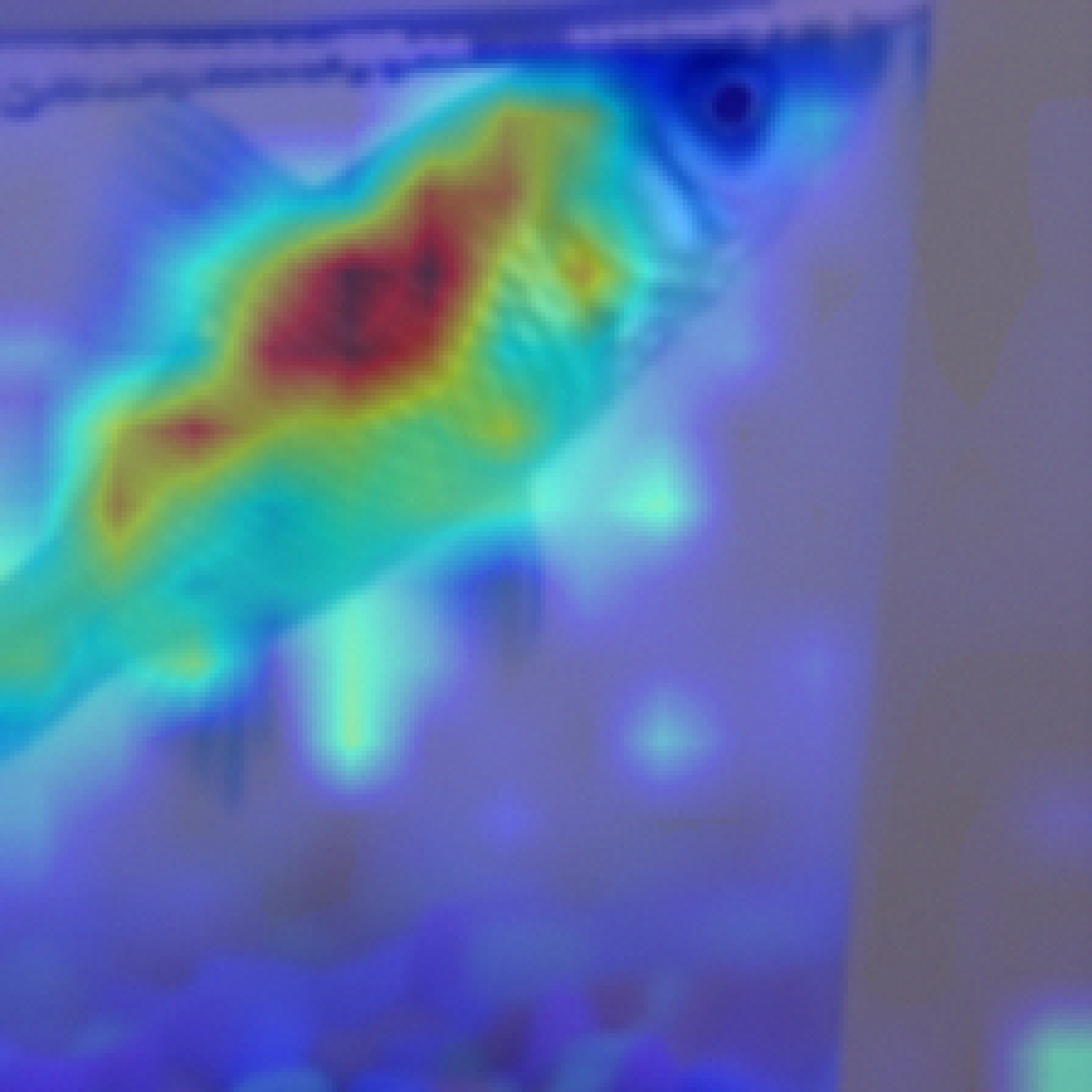} \\
      
      \raisebox{13mm}{\makecell*[c]{Poisoned$\rightarrow 9/255$\\\includegraphics[width=0.15\linewidth]{exp/fish/fish.png}
     }} &
     \includegraphics[width=0.15\linewidth]{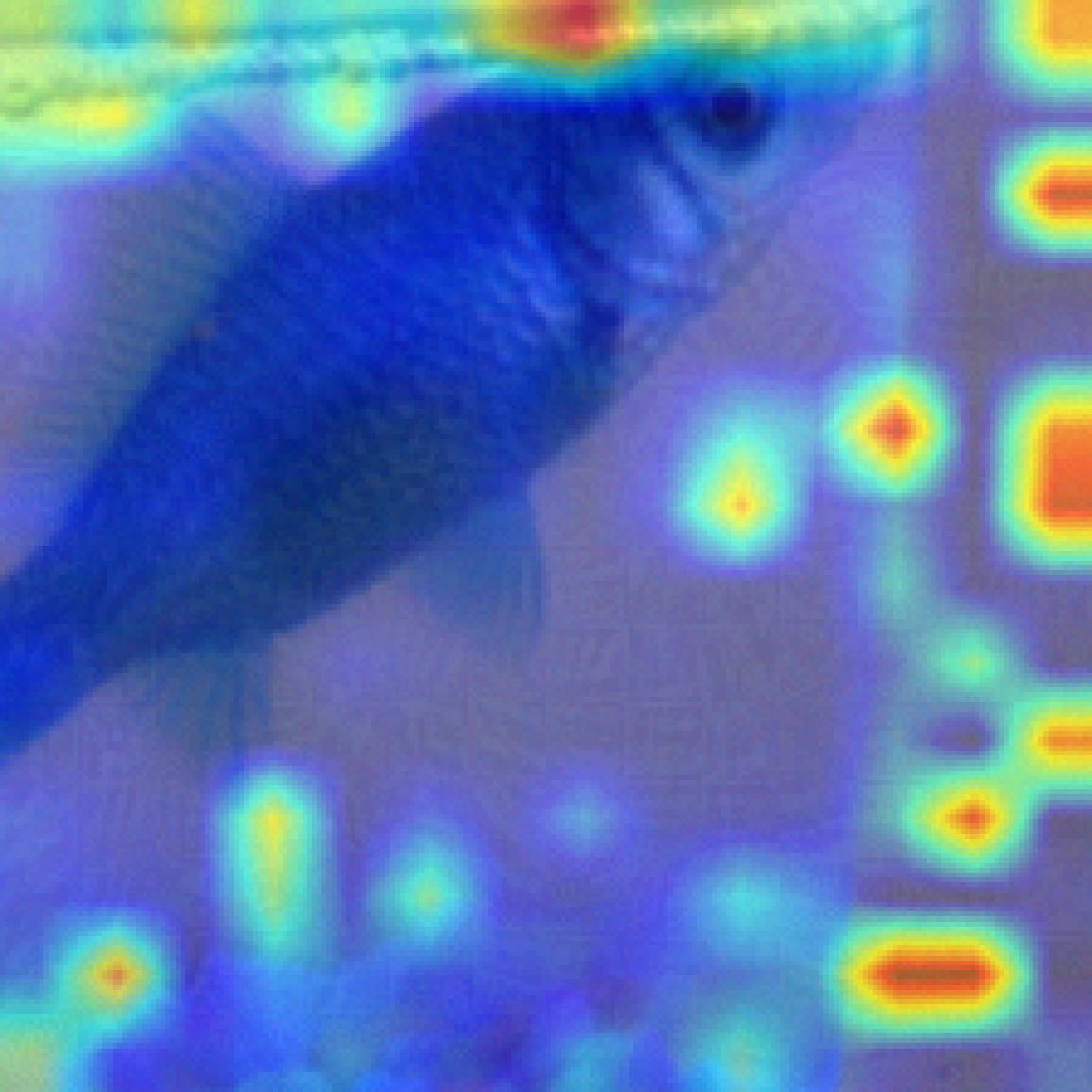} &
      \includegraphics[width=0.15\linewidth]{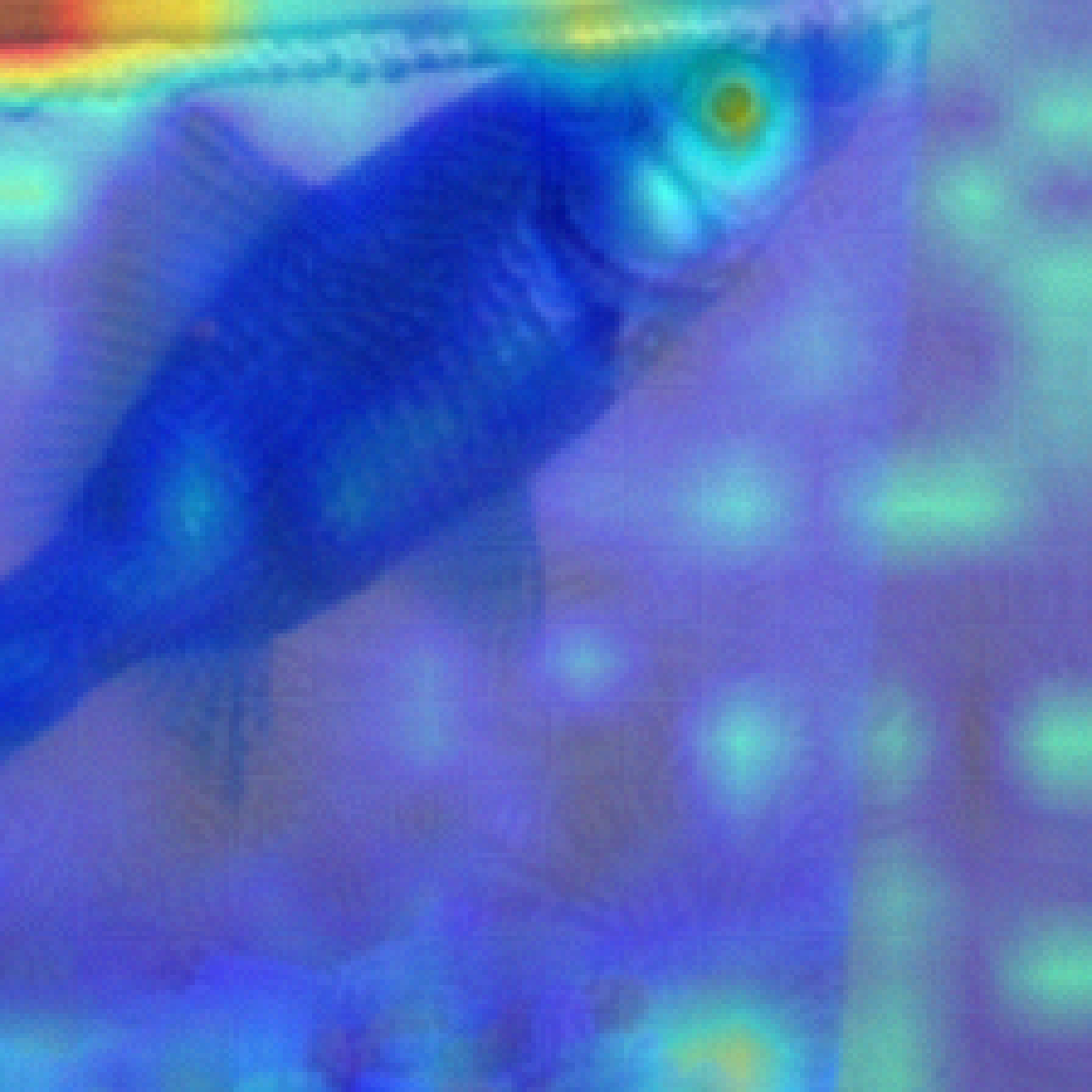} &
      \includegraphics[width=0.15\linewidth]{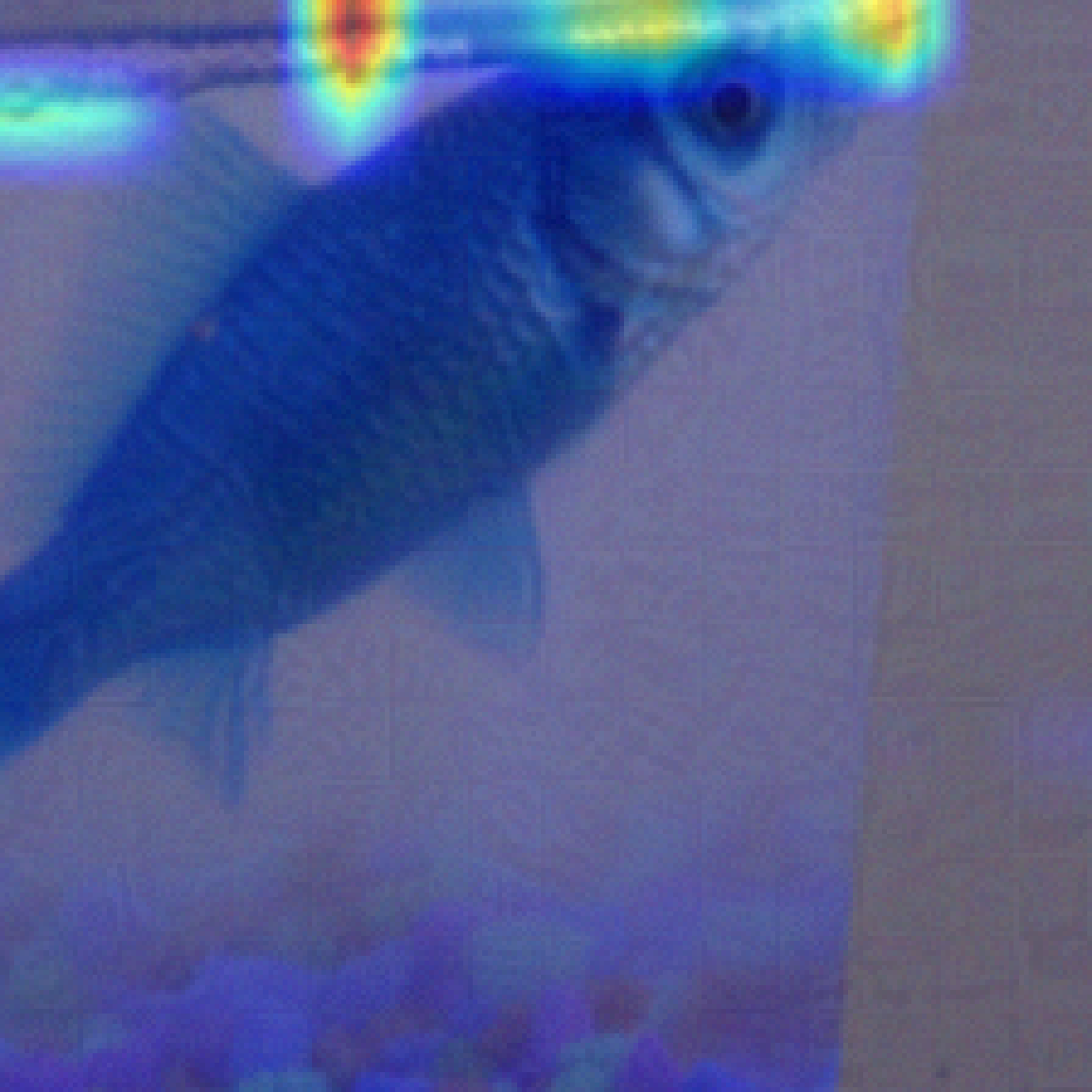} &
      \includegraphics[width=0.15\linewidth]{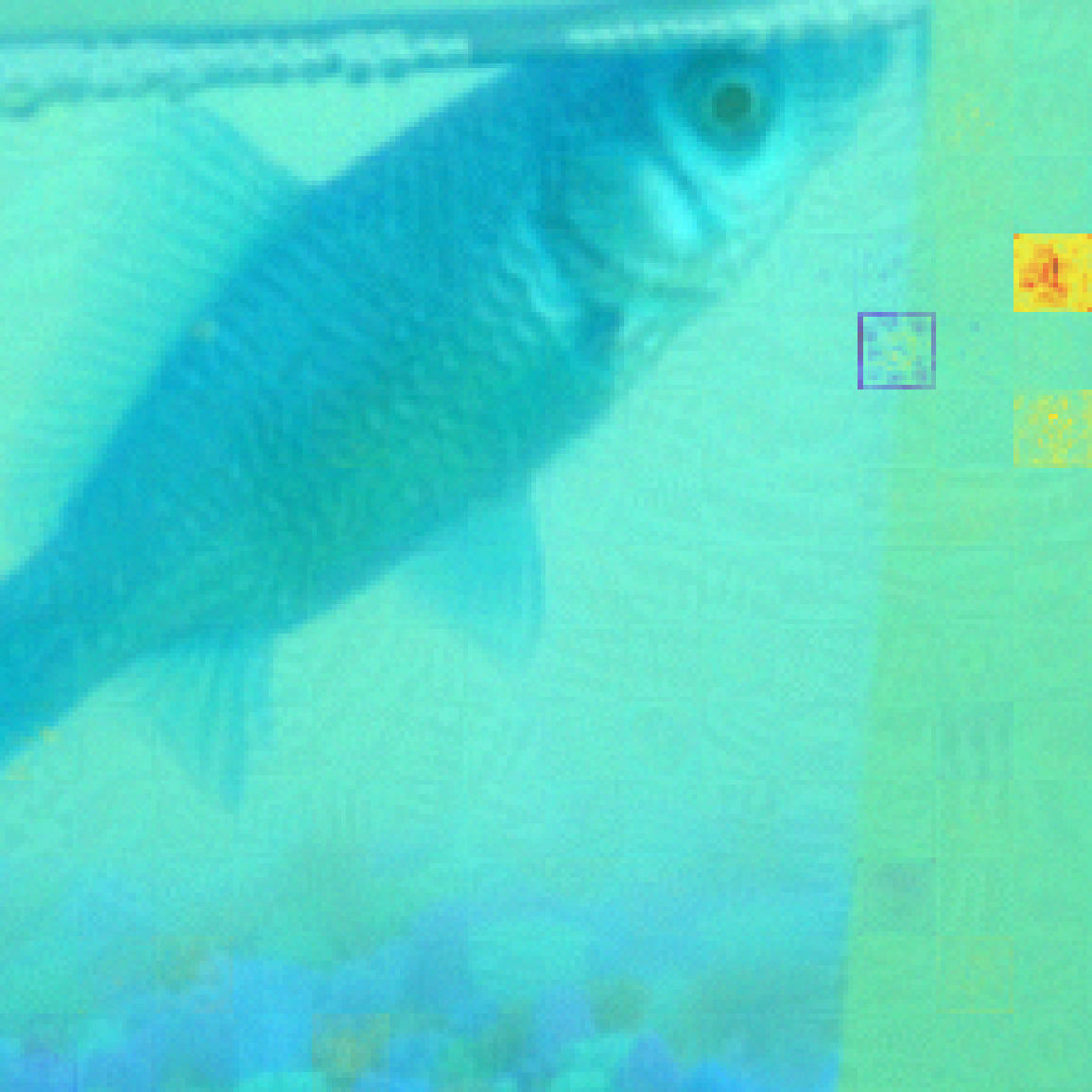} &
      \includegraphics[width=0.15\linewidth]{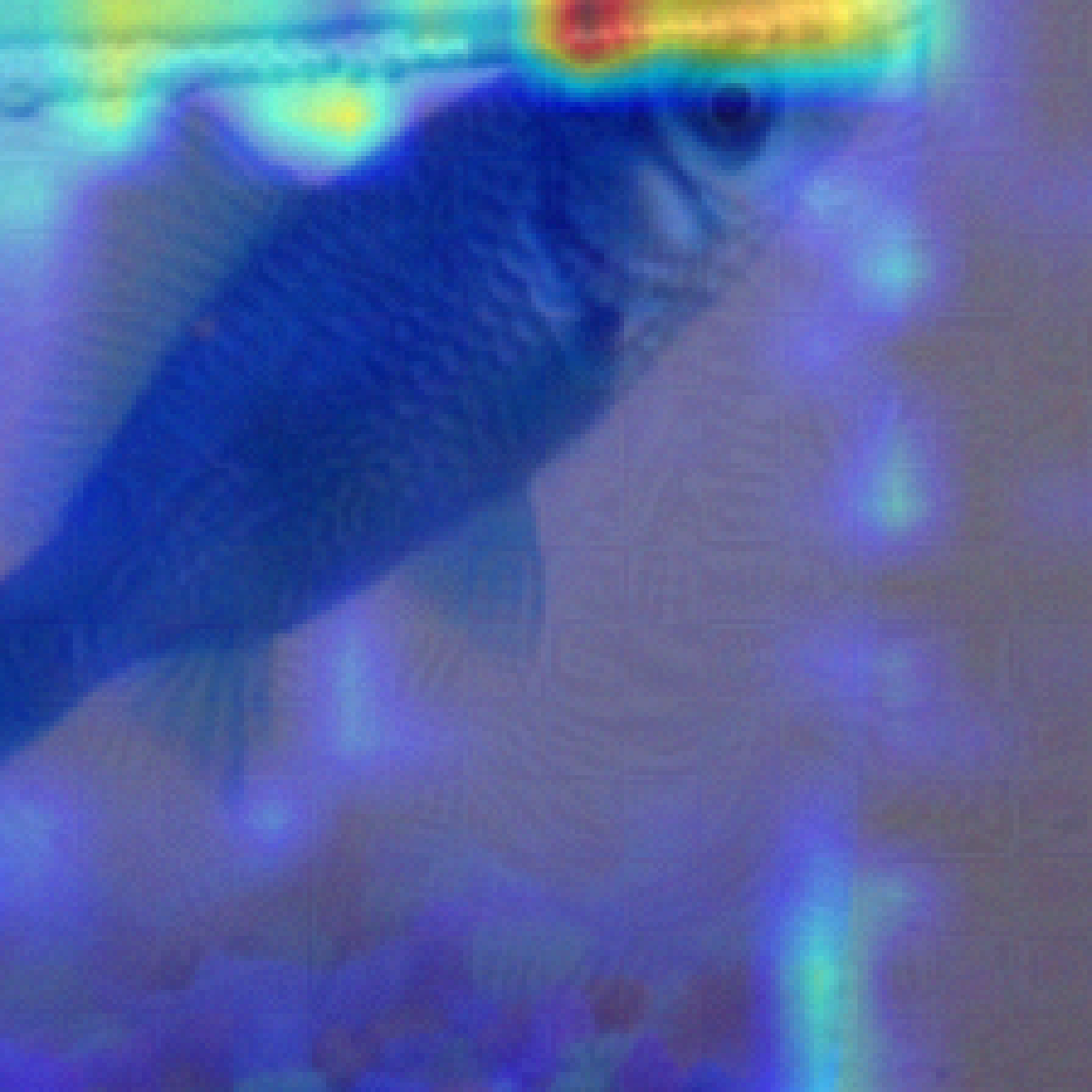} &
      \includegraphics[width=0.15\linewidth]{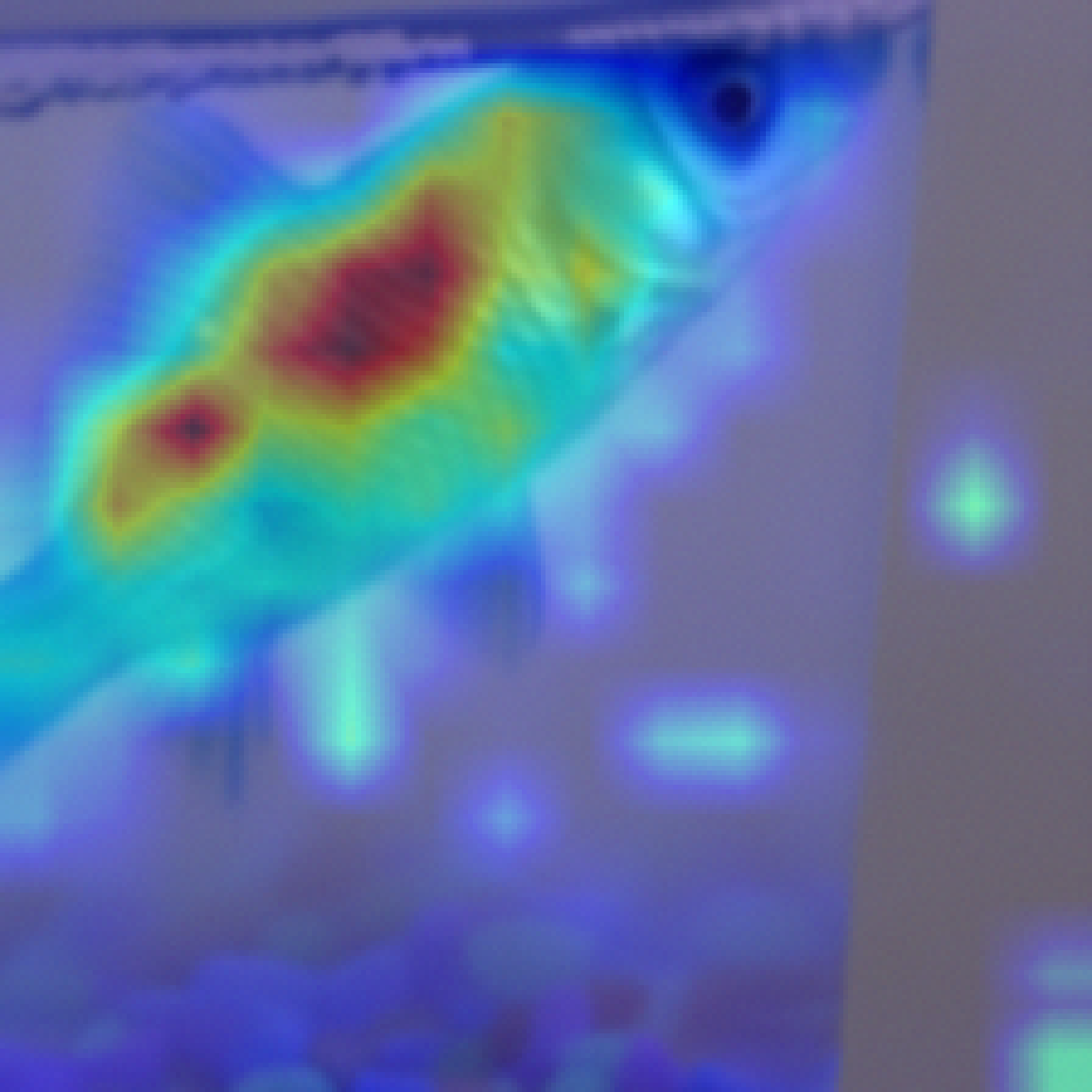} \\
        \raisebox{13mm}{\makecell*[c]{Poisoned$\rightarrow 10/255$\\\includegraphics[width=0.15\linewidth]{exp/fish/fish.png}
     }} &
     \includegraphics[width=0.15\linewidth]{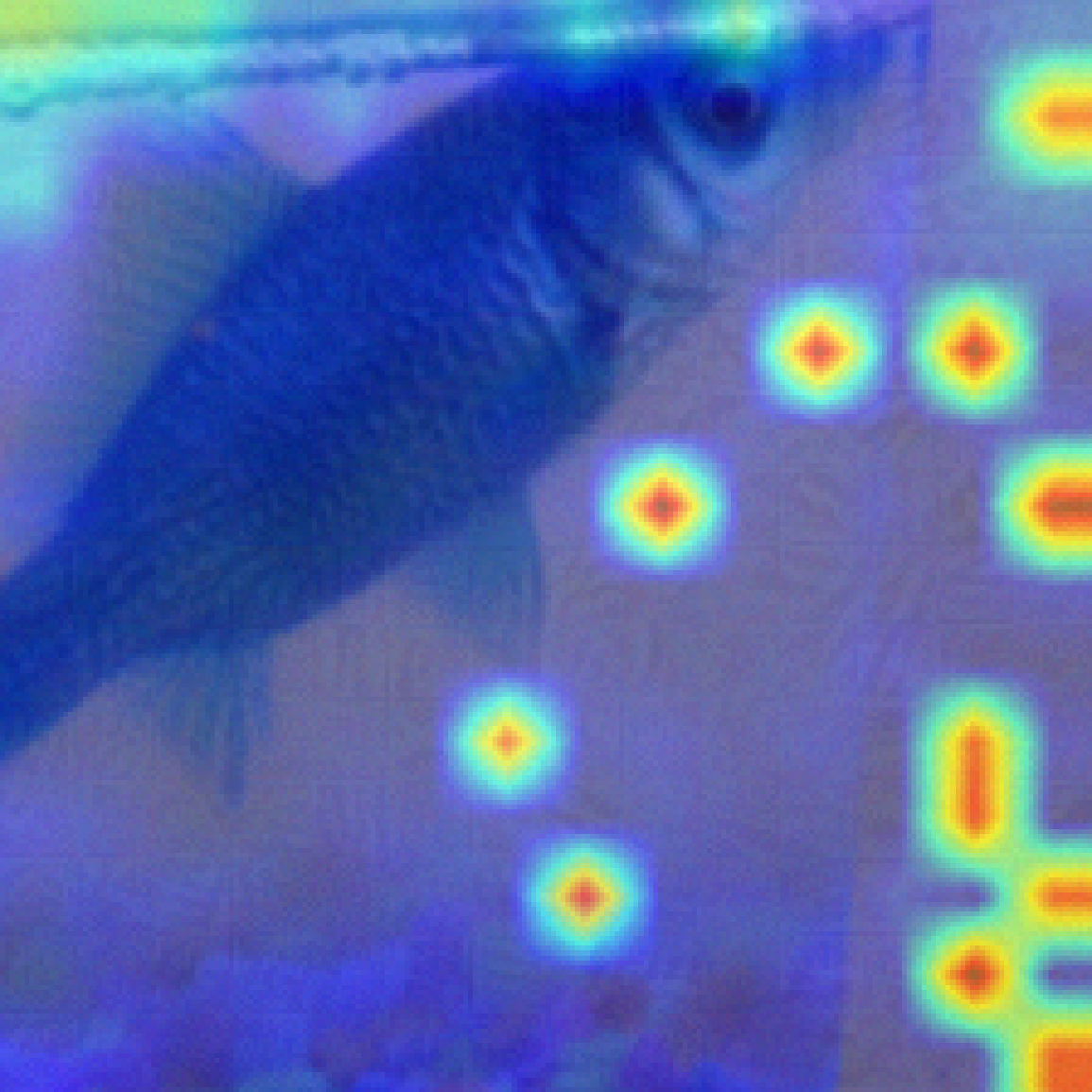} &
      \includegraphics[width=0.15\linewidth]{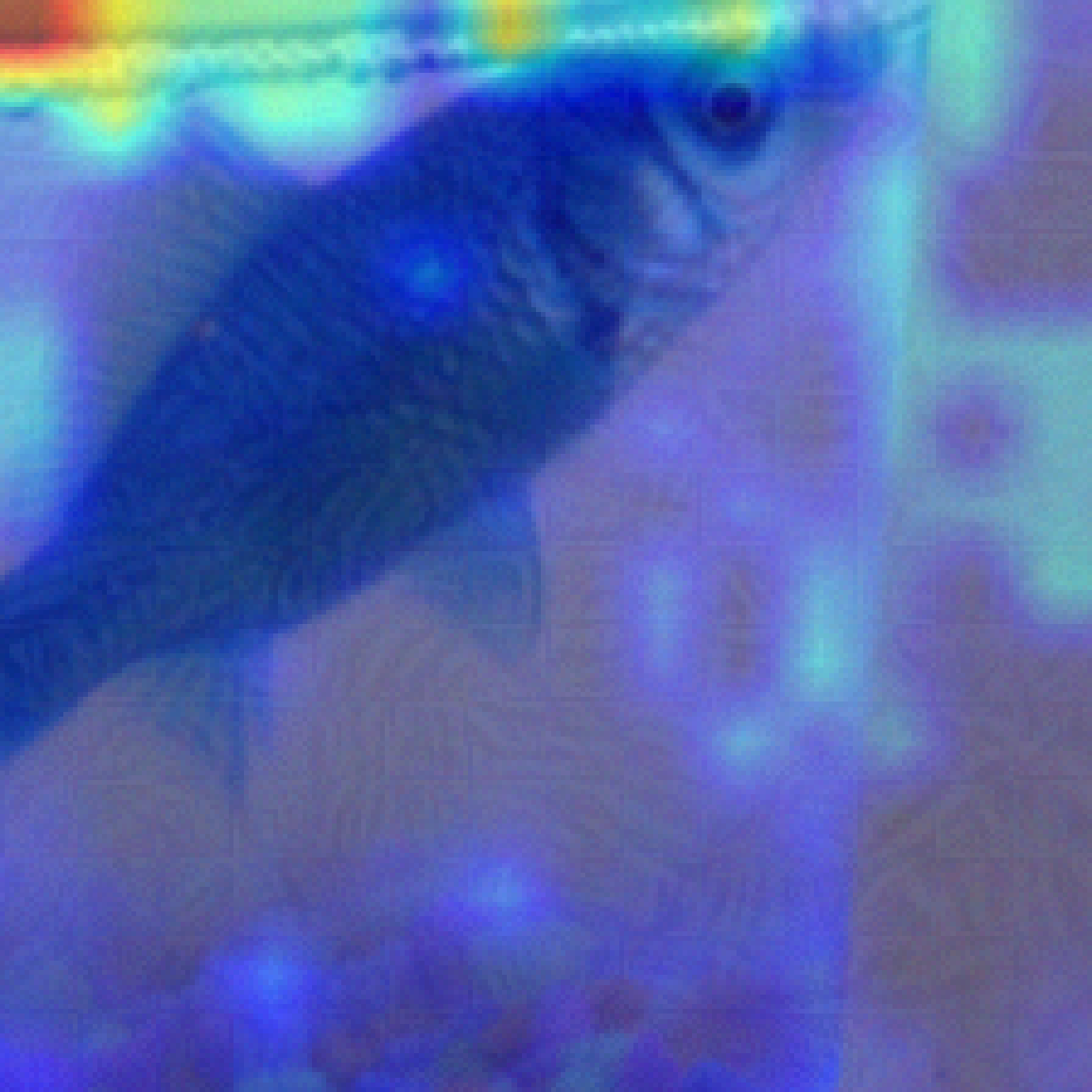} &
      \includegraphics[width=0.15\linewidth]{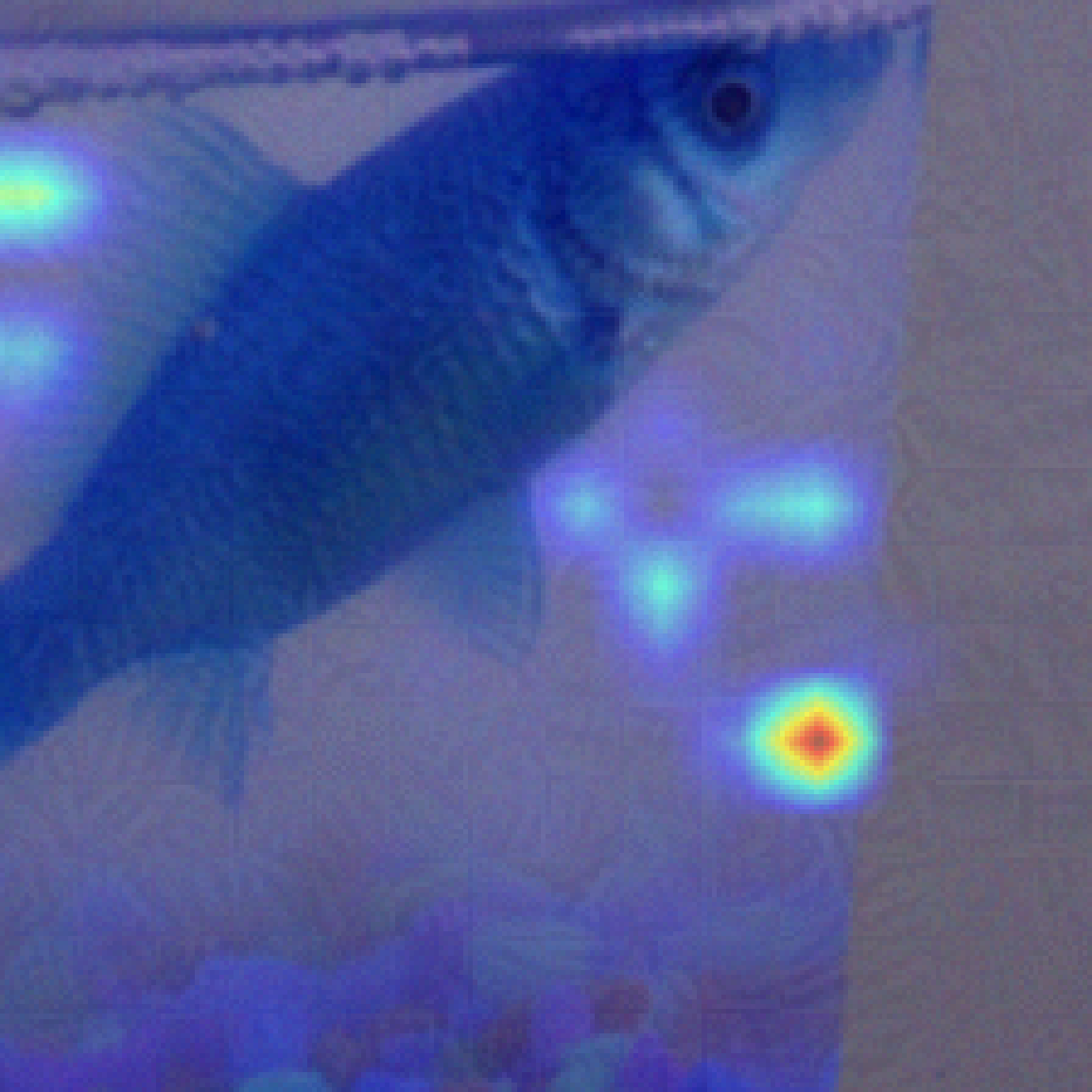} &
      \includegraphics[width=0.15\linewidth]{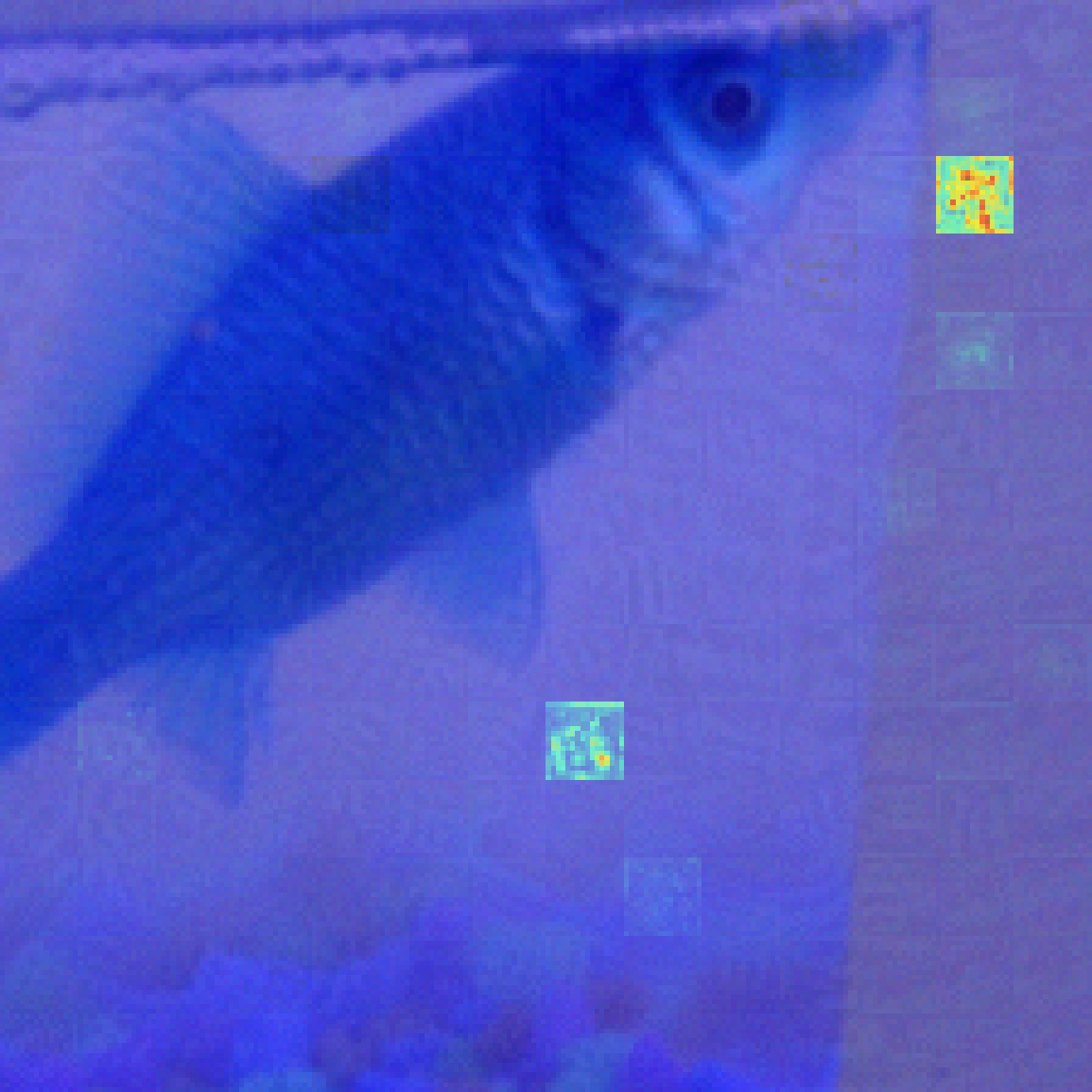} &
      \includegraphics[width=0.15\linewidth]{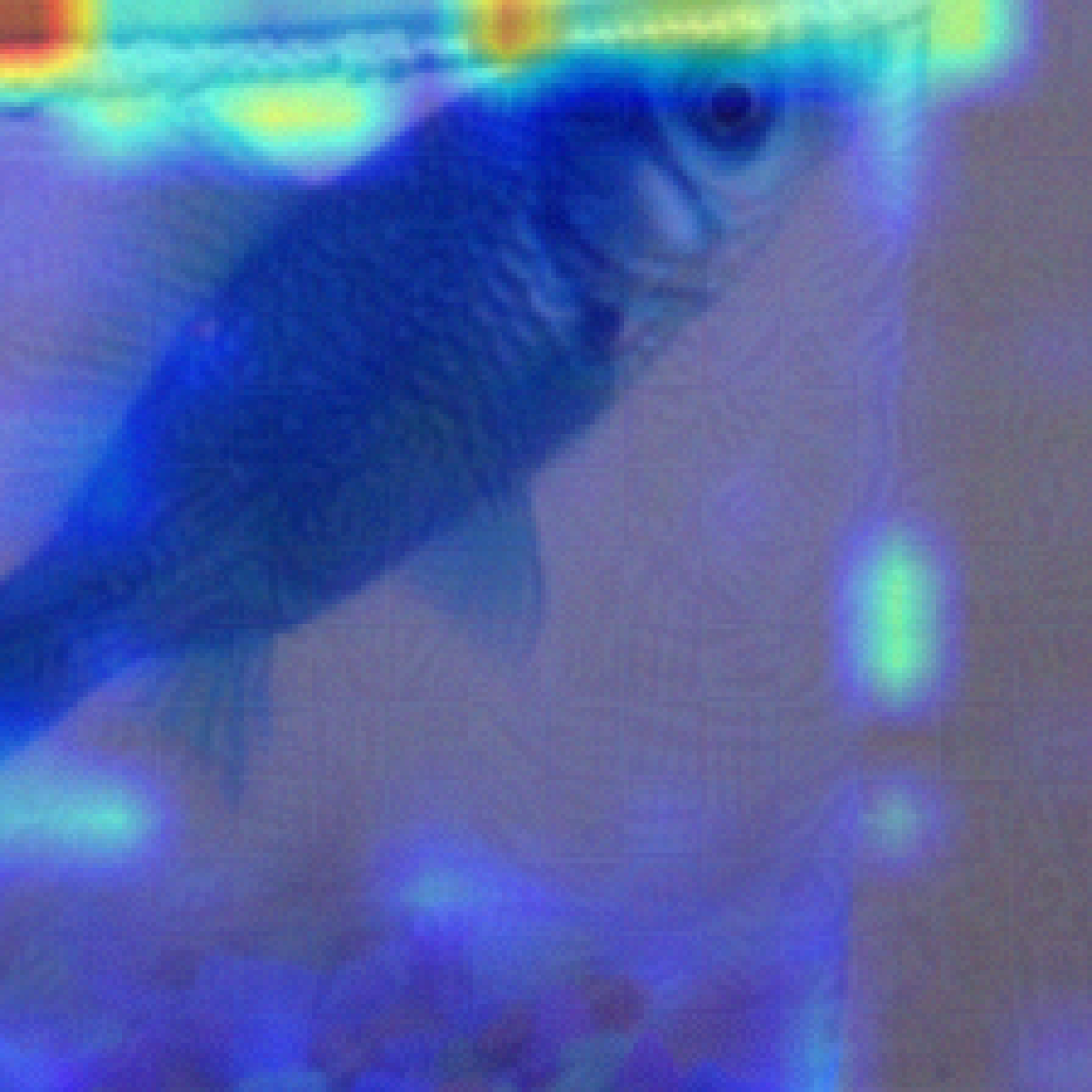} &
      \includegraphics[width=0.15\linewidth]{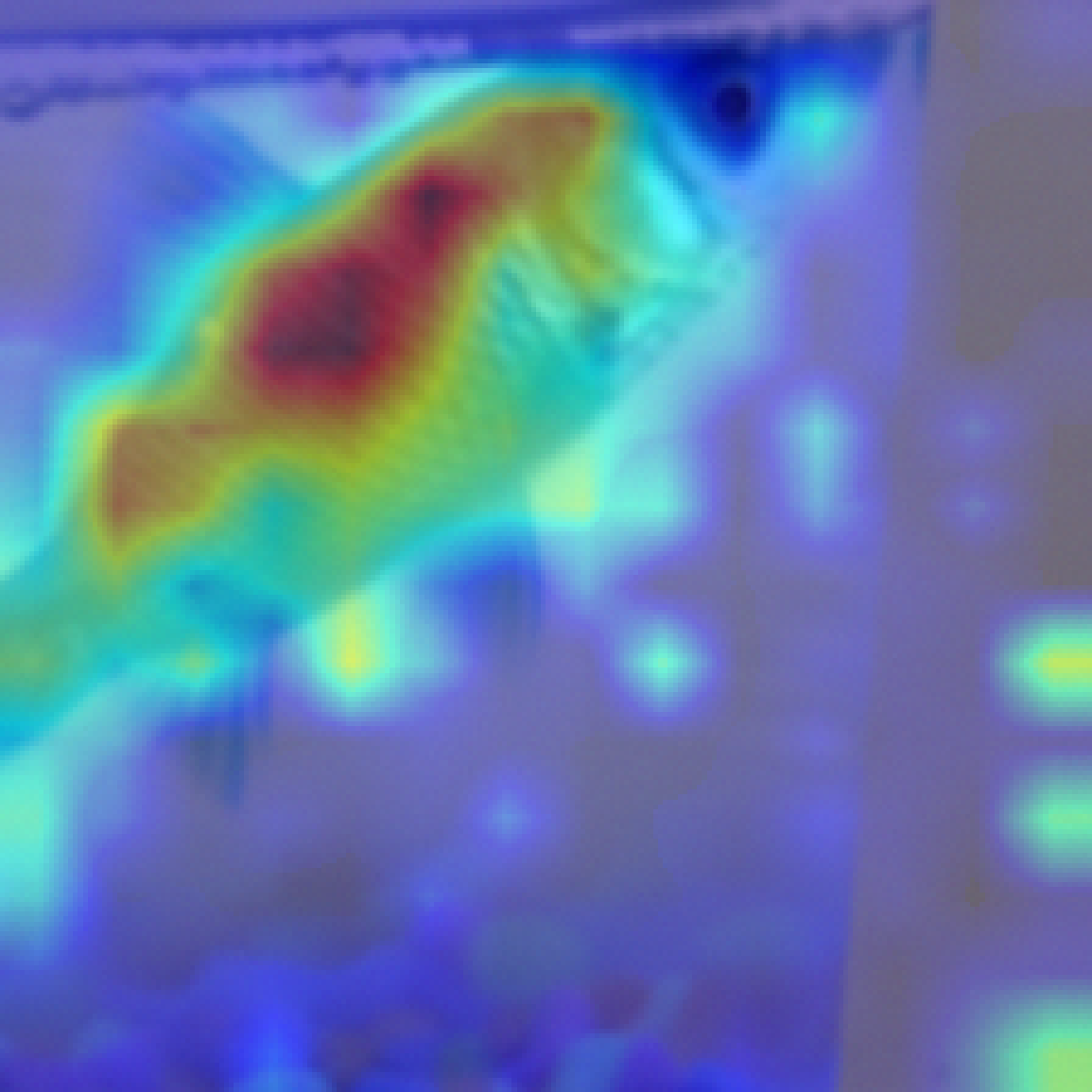} \\

     \end{tabular*}
     \caption{Visualization results of the attention map on corrupted input for different methods under different attack radii. }
     \label{fig: diff-posin3}
     \end{center}
 \end{figure*}

\begin{figure*}[htbp]
    \setlength{\tabcolsep}{1pt} % Default value: 6pt
    \renewcommand{\arraystretch}{1} % Default value: 1
    \begin{center}
    \begin{tabular*}{\linewidth}{@{\extracolsep{\fill}}ccccccc}
     Corrupted Input& Ours                                                & Our~w/o Denoising & Our~w/o Guassian smoothing  \\
    \includegraphics[width=0.25\linewidth]{exp/bird1/bird1.png} &\includegraphics[width=0.25\linewidth]{exp/bird1/bird1-ours-1-perturbed-7.png}&\includegraphics[width=0.25\linewidth]{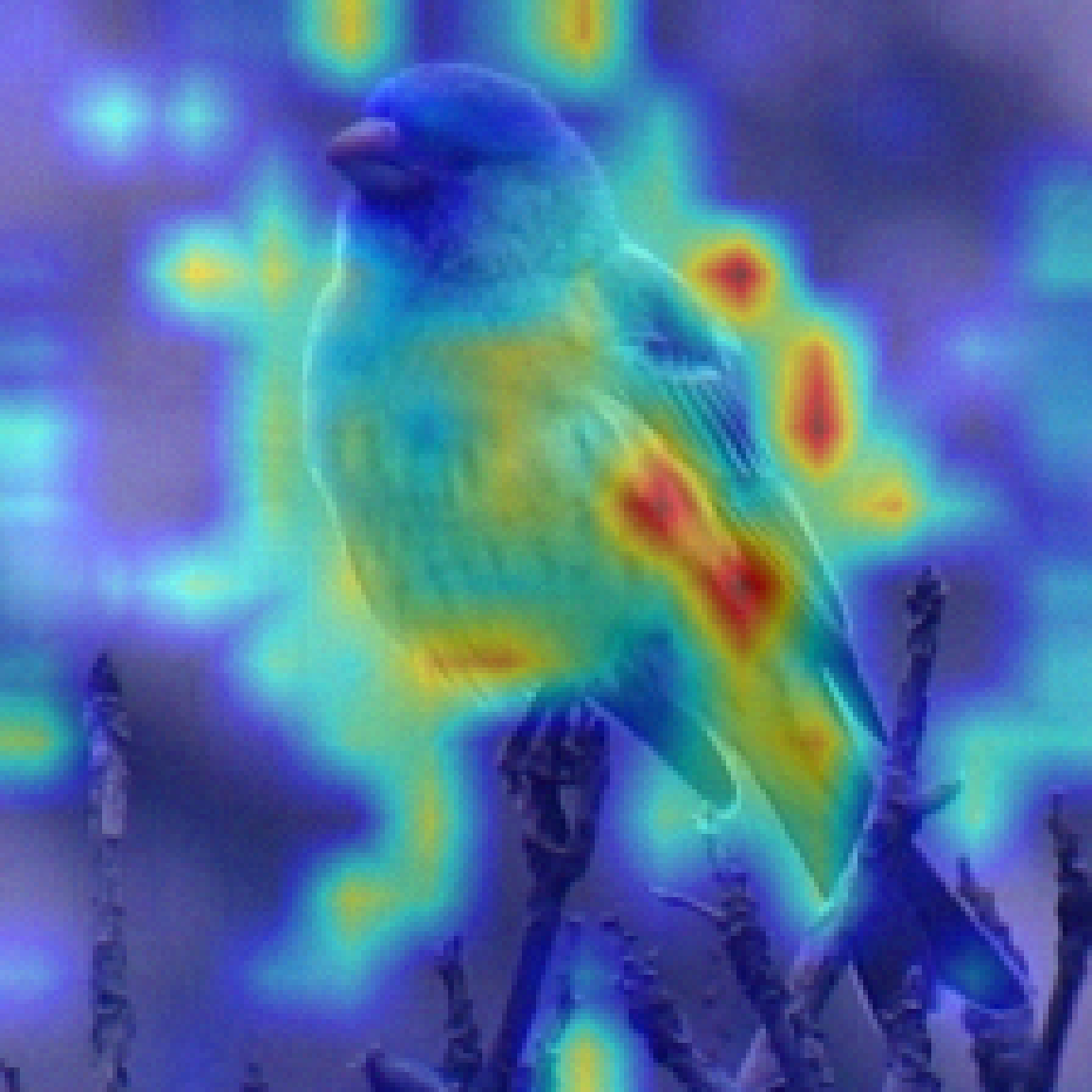}&\includegraphics[width=0.25\linewidth]{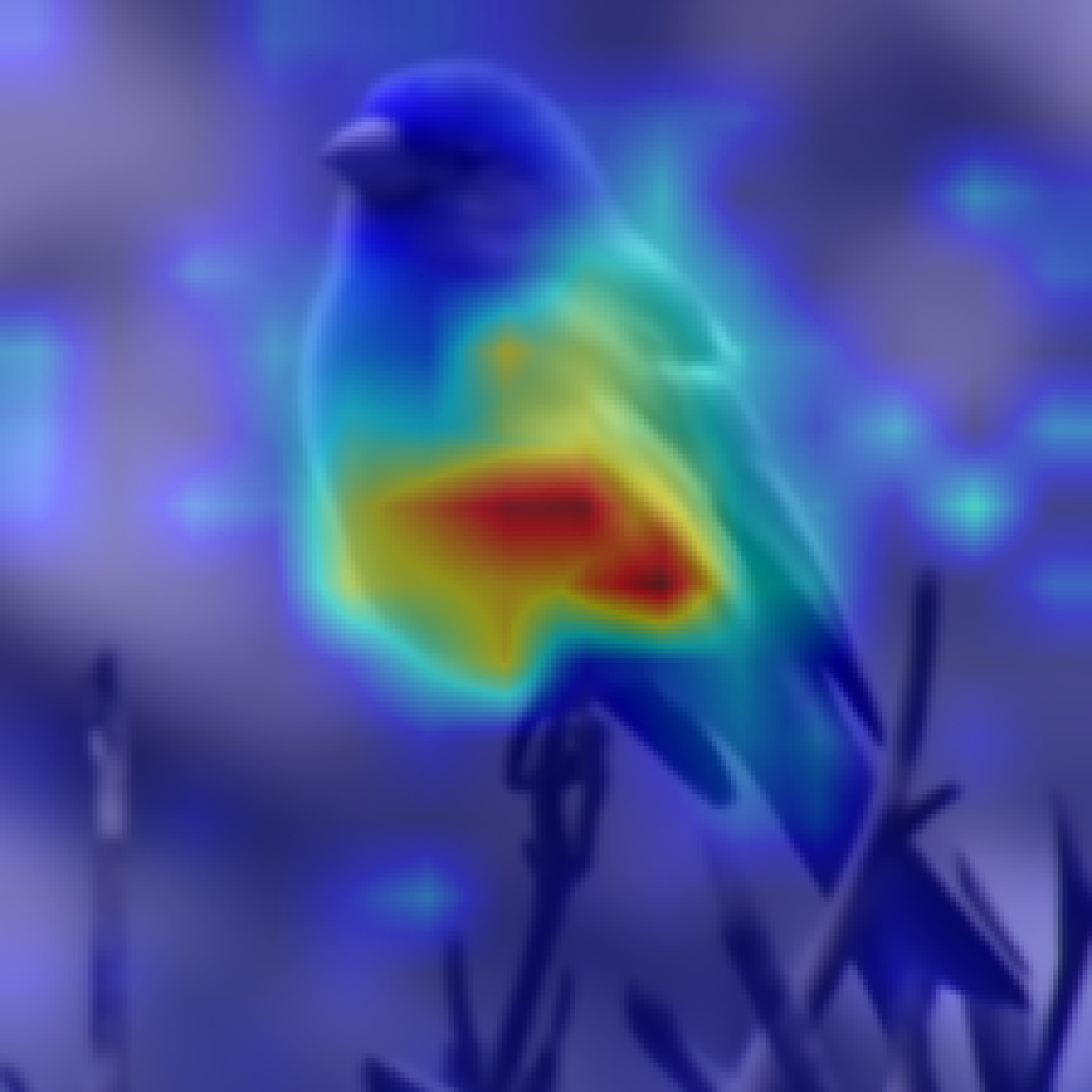}
    \\
    \includegraphics[width=0.25\linewidth]{exp/eagle1/eagle1.png} &\includegraphics[width=0.25\linewidth]{exp/eagle1/eagle1-ours-1-perturbed-7.png}&\includegraphics[width=0.25\linewidth]{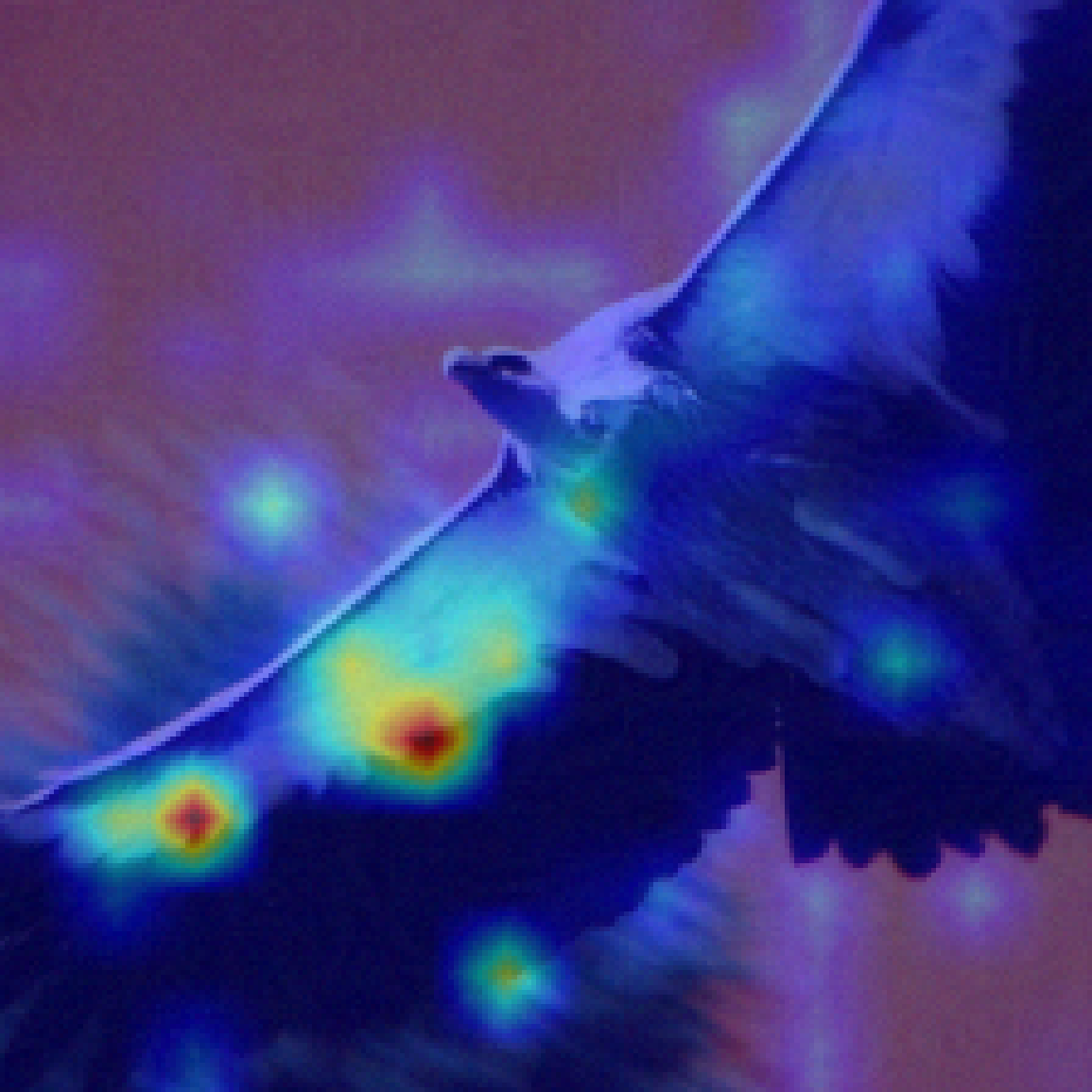}&\includegraphics[width=0.25\linewidth]{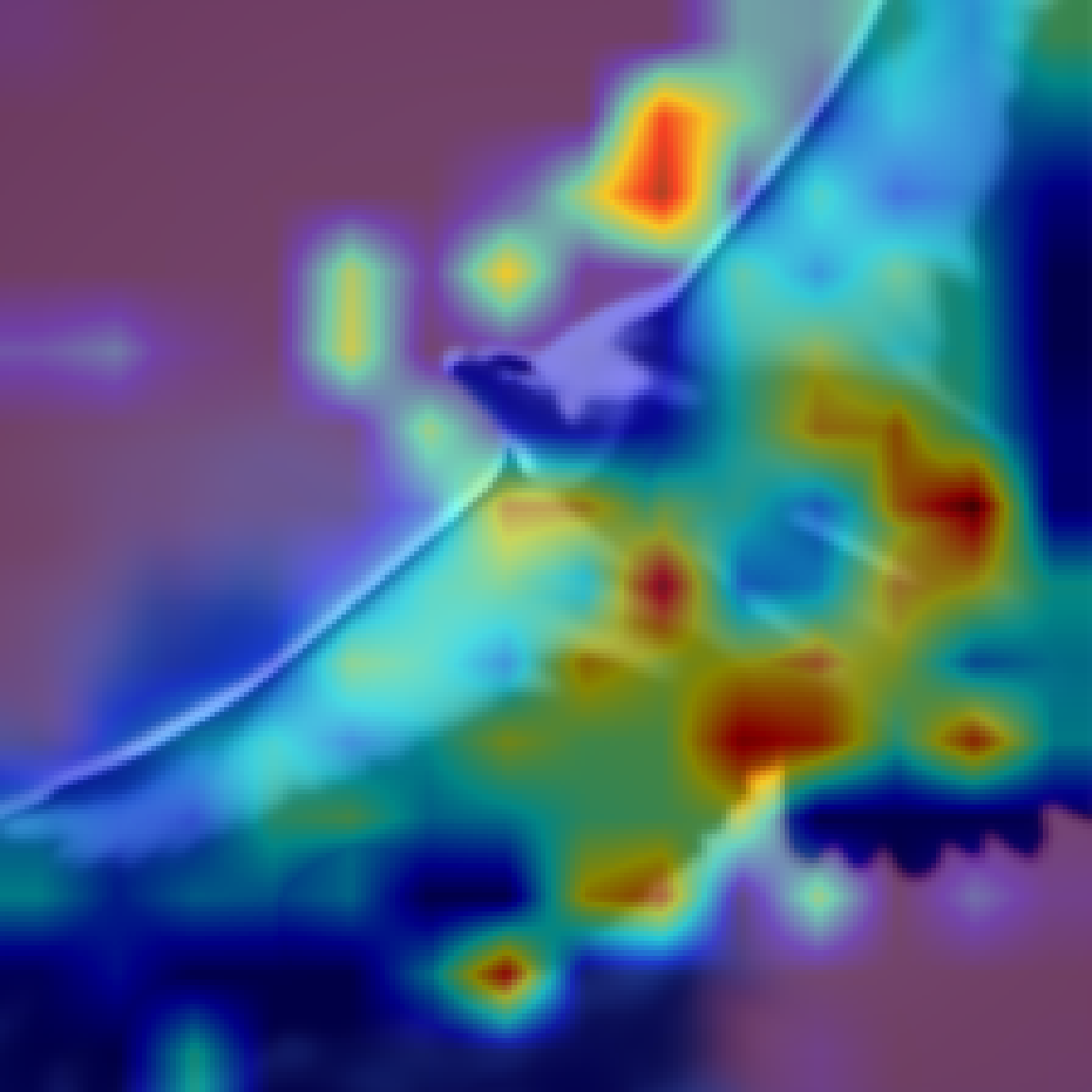}
    \\
    \includegraphics[width=0.25\linewidth]{exp/fish/fish.png} &\includegraphics[width=0.25\linewidth]{exp/fish/fish-ours-1-perturbed-7.png}&\includegraphics[width=0.25\linewidth]{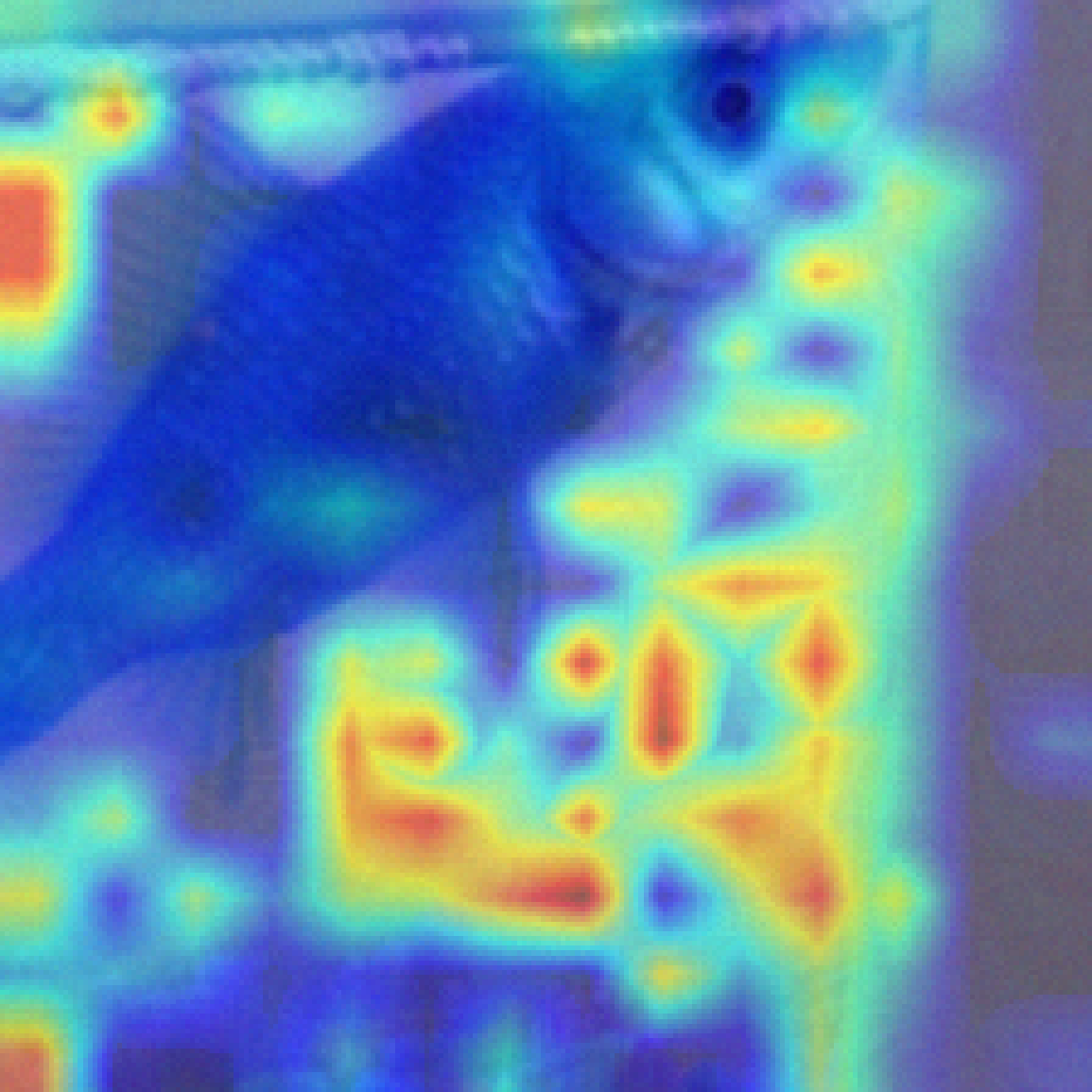}&\includegraphics[width=0.25\linewidth]{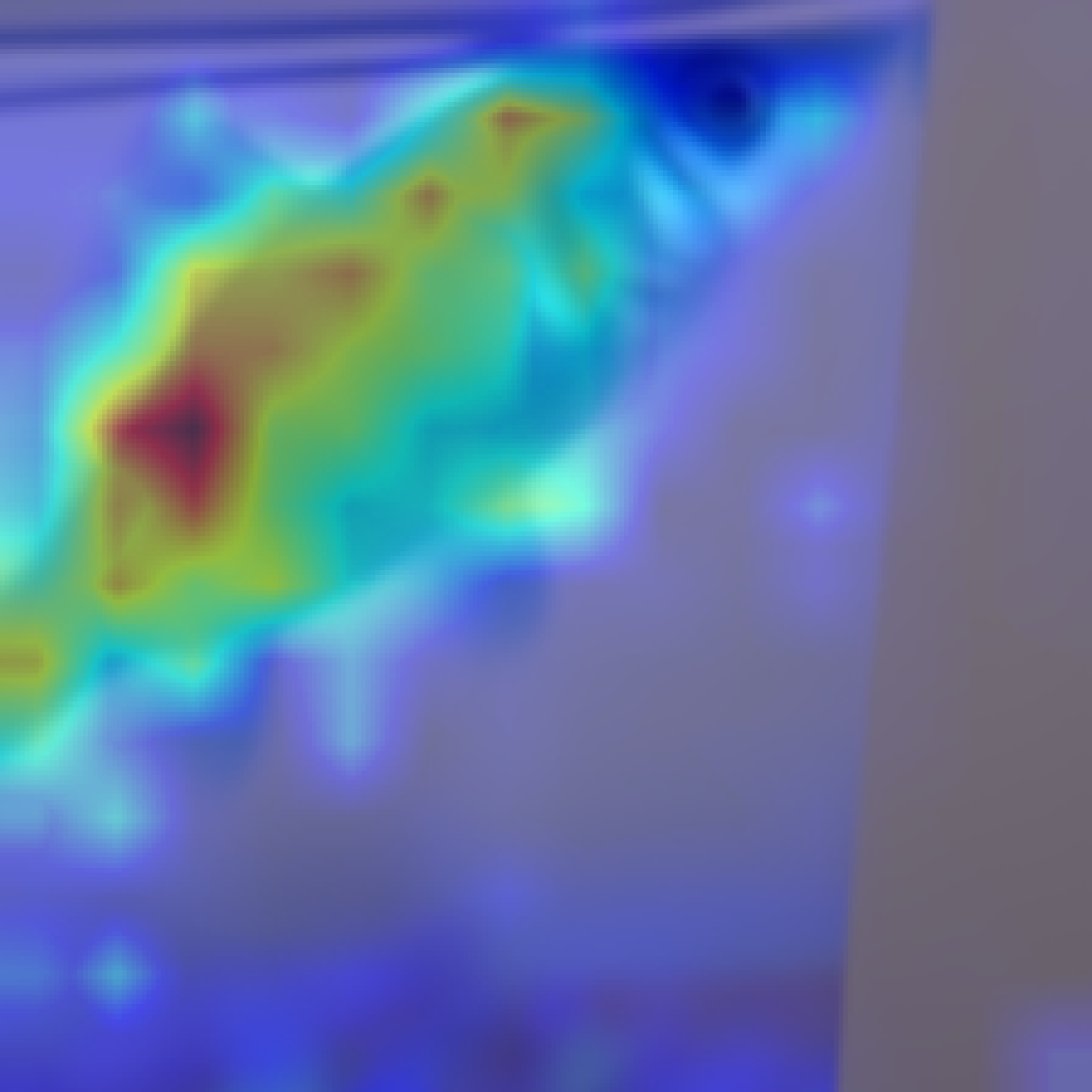}
    \\
    \includegraphics[width=0.25\linewidth]{exp/peacock/peacock-blue.png} &\includegraphics[width=0.25\linewidth]{exp/peacock/peacock-ours-1-perturbed-7.png}&\includegraphics[width=0.25\linewidth]{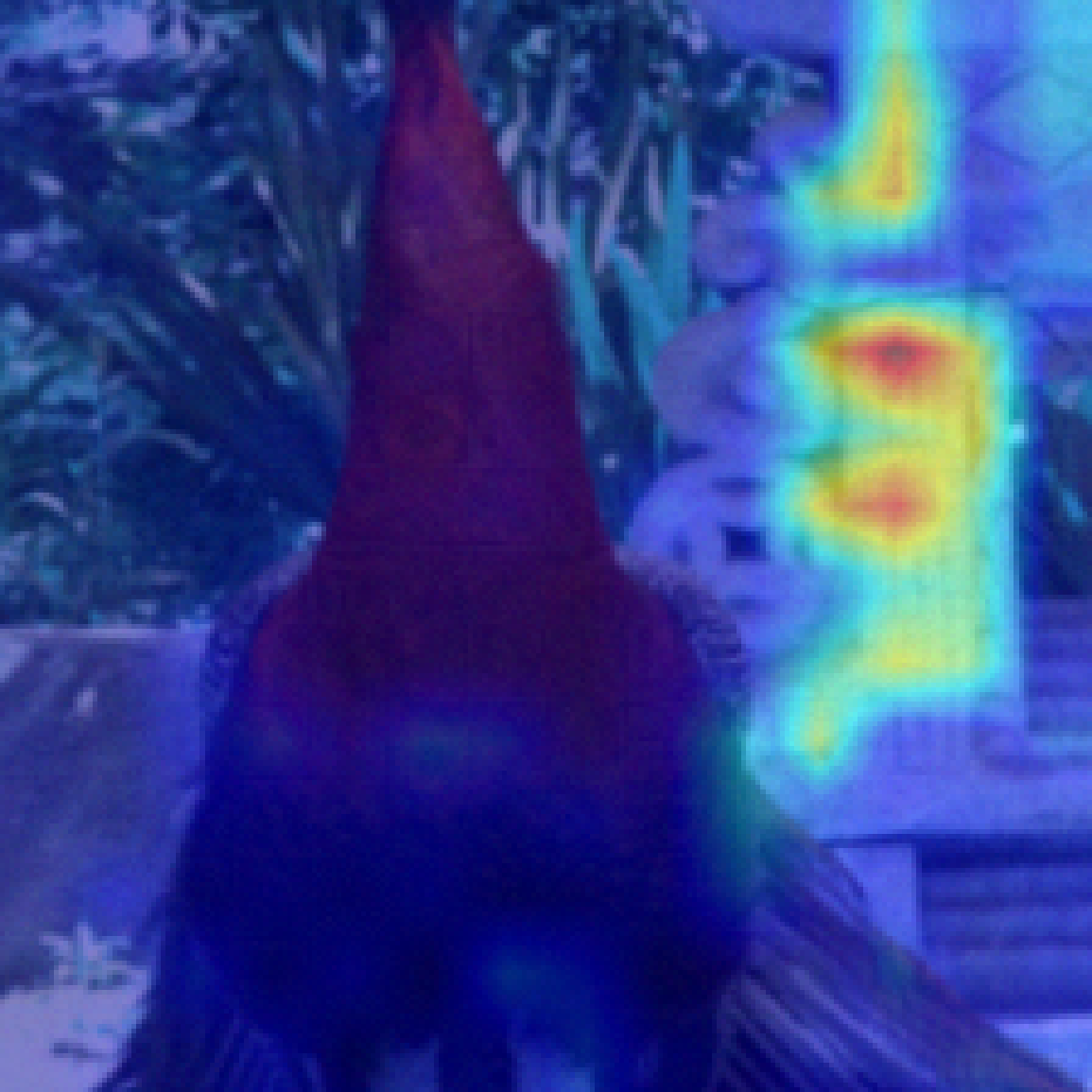}&\includegraphics[width=0.25\linewidth]{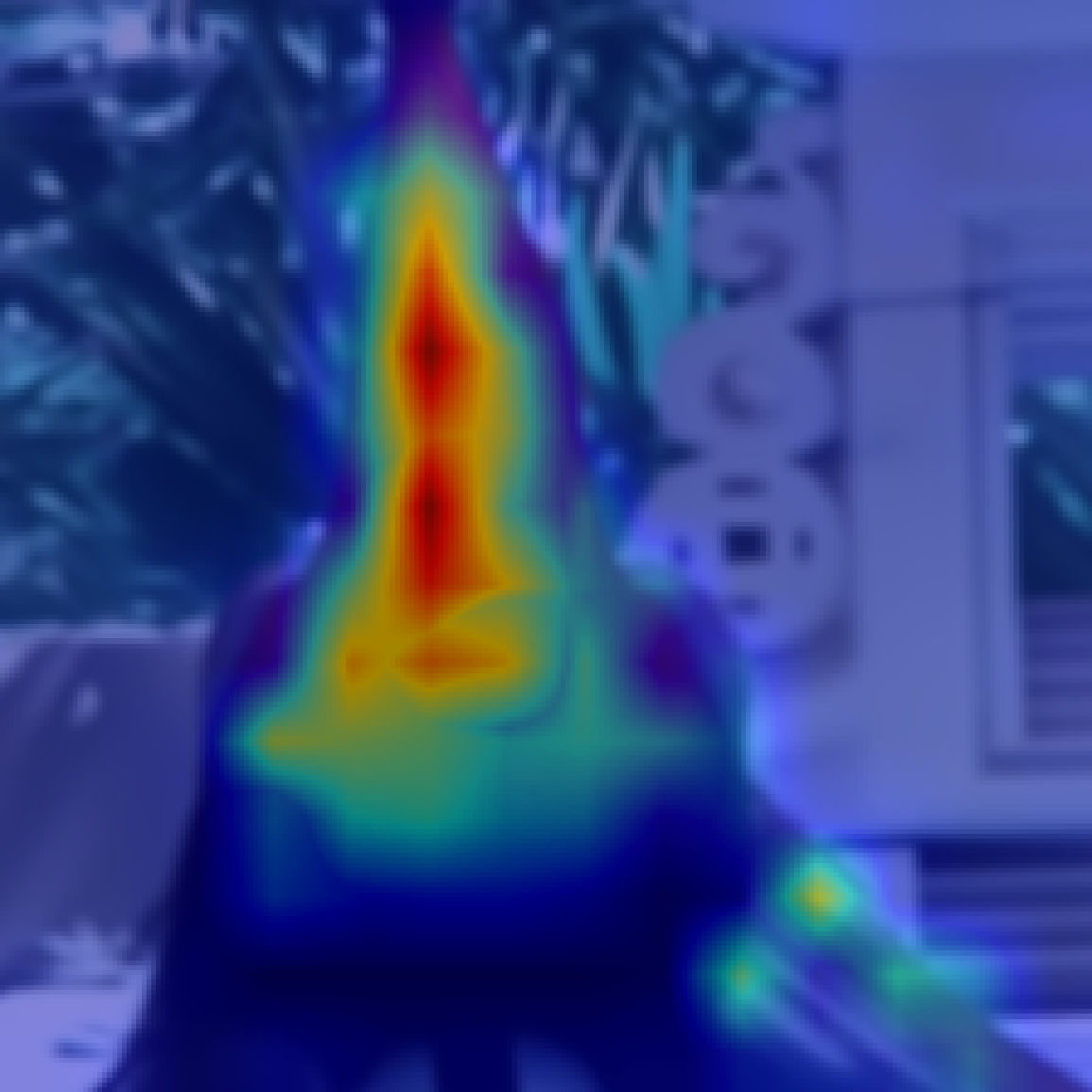}
    % \\
    \end{tabular*}
    \caption{Visualization results of the attention map on corrupted input for the ablated version of our method. }
    \label{fig: abl-vis}
    \end{center}
\end{figure*}

\begin{figure*}
    \setlength{\tabcolsep}{1pt} % Default value: 6pt
    \renewcommand{\arraystretch}{1} % Default value: 1
    \begin{center}
    \begin{tabular*}{\linewidth}
    % {cccccc}
{@{\extracolsep{\fill}}cccccccc}
    Input & Raw Attention & Rollout & GradCAM & LRP & VTA  & Ours \\
    \raisebox{13mm}{\multirow{2}{*}{\makecell*[c]{Wolf: clean$\rightarrow$\\\includegraphics[width=0.15\linewidth]{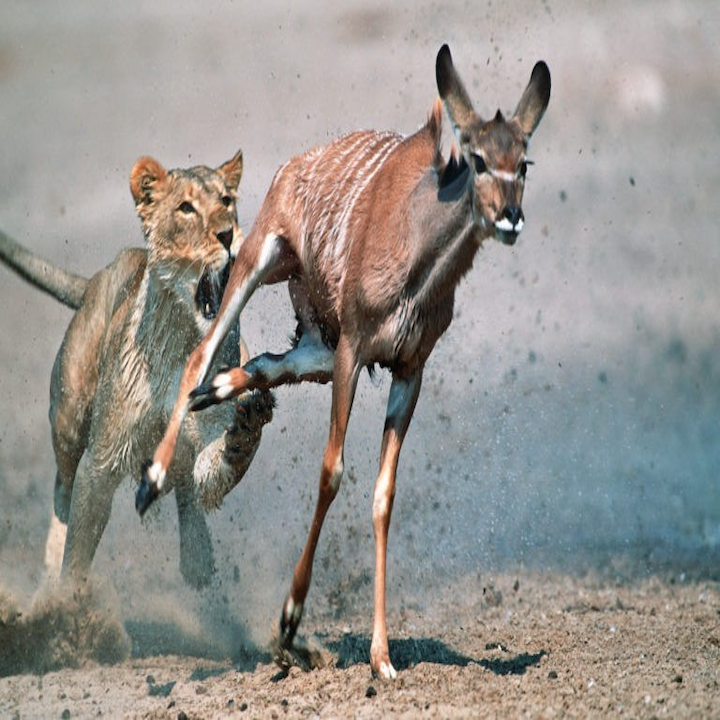}\\ 
    % Cat $\rightarrow$
    Wolf: poisoned$\rightarrow$\\{\scriptsize $7/255$}
    }}
    }
     &
    \includegraphics[width=0.13\linewidth]{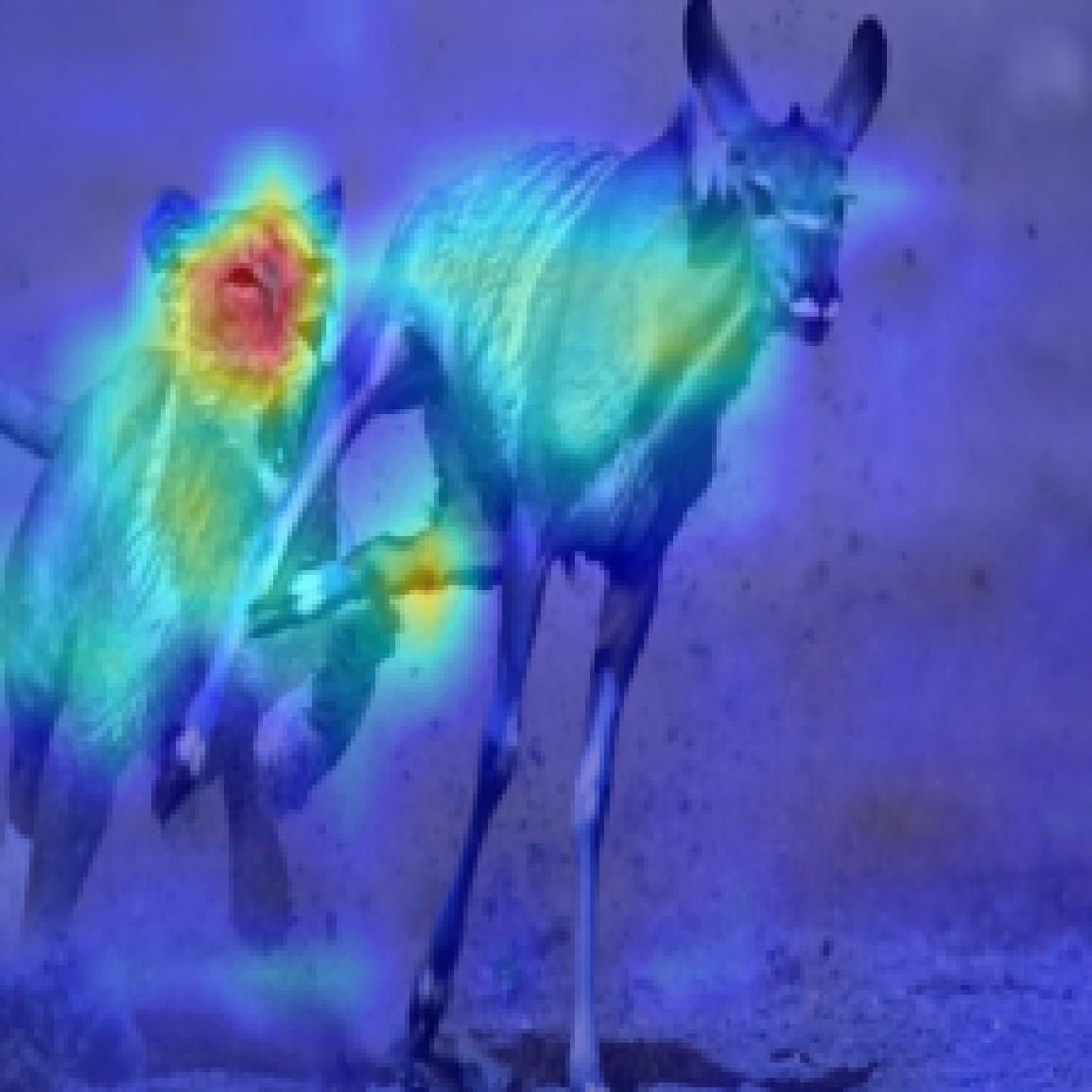} &
    \includegraphics[width=0.13\linewidth]{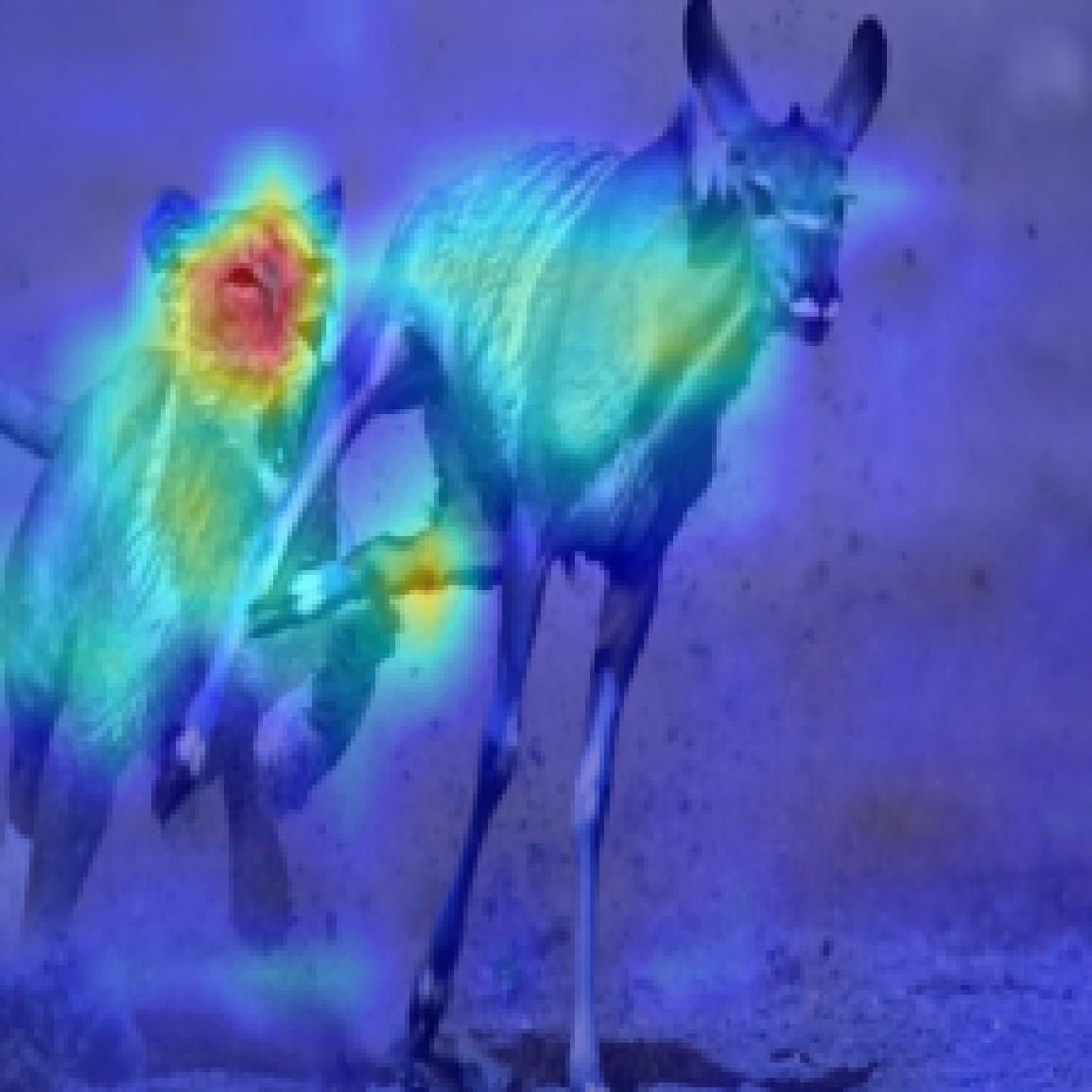} &
    \includegraphics[width=0.13\linewidth]{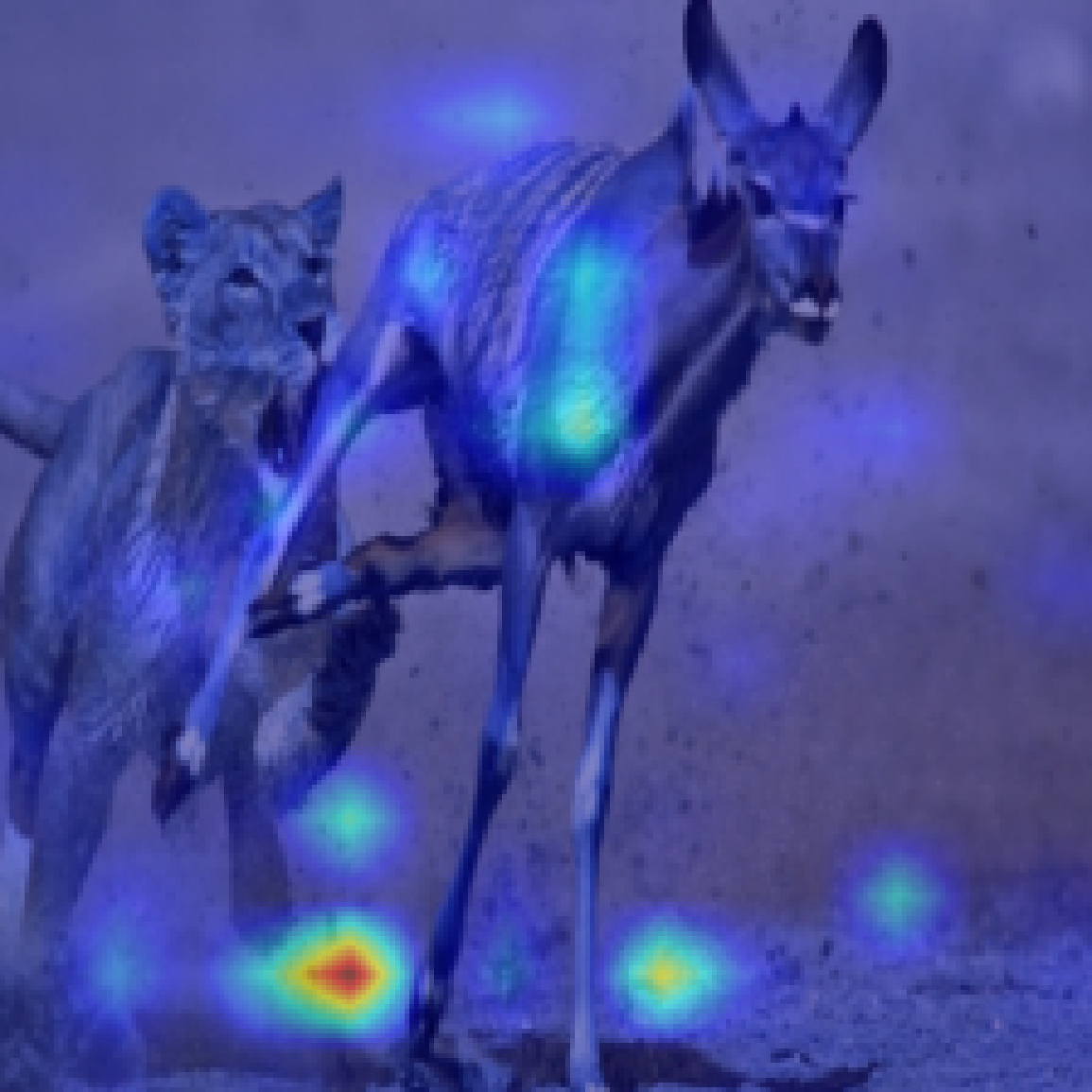} &
    \includegraphics[width=0.13\linewidth]{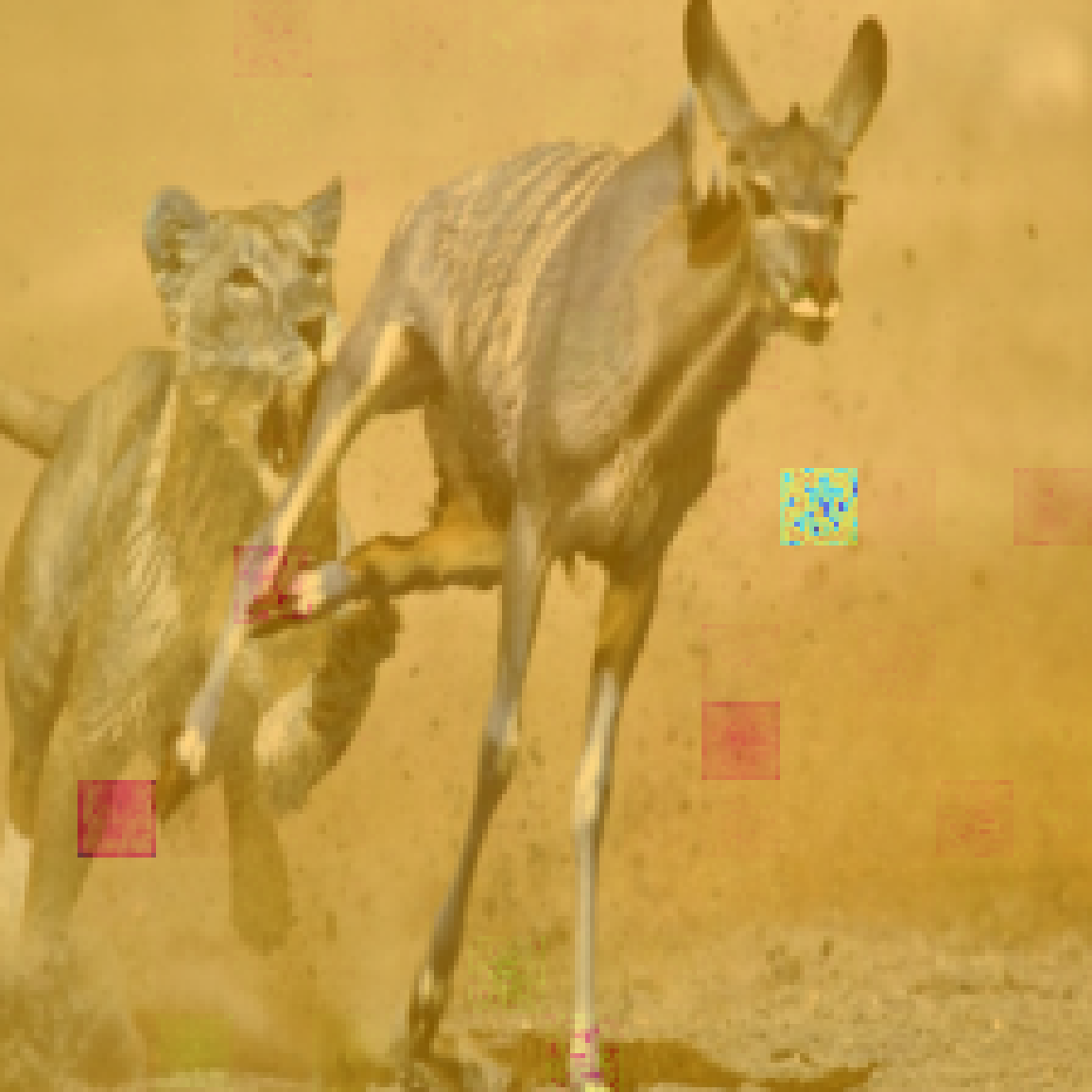} &
    \includegraphics[width=0.13\linewidth]{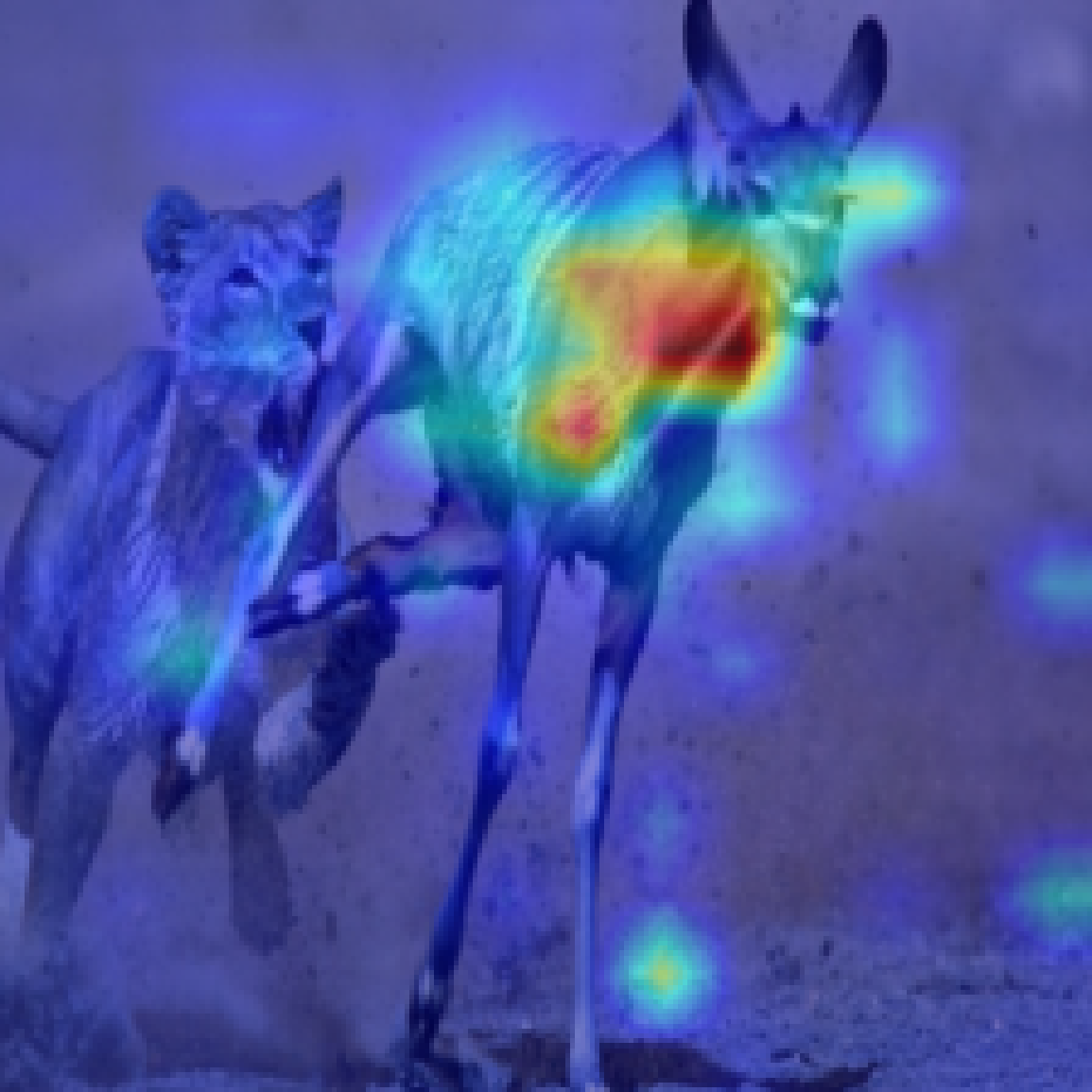} &
    \includegraphics[width=0.13\linewidth]{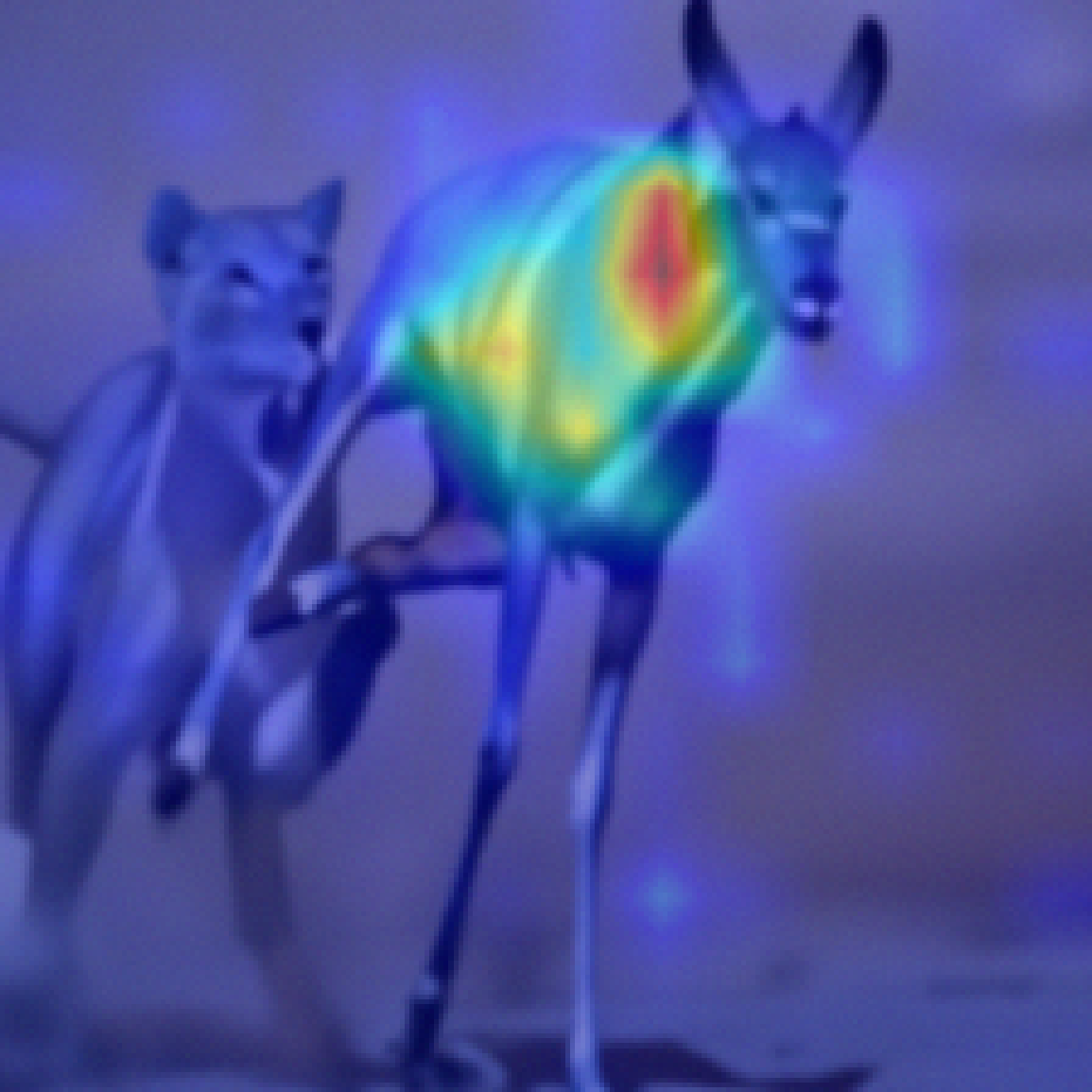}
    \\
    &
    \includegraphics[width=0.13\linewidth]{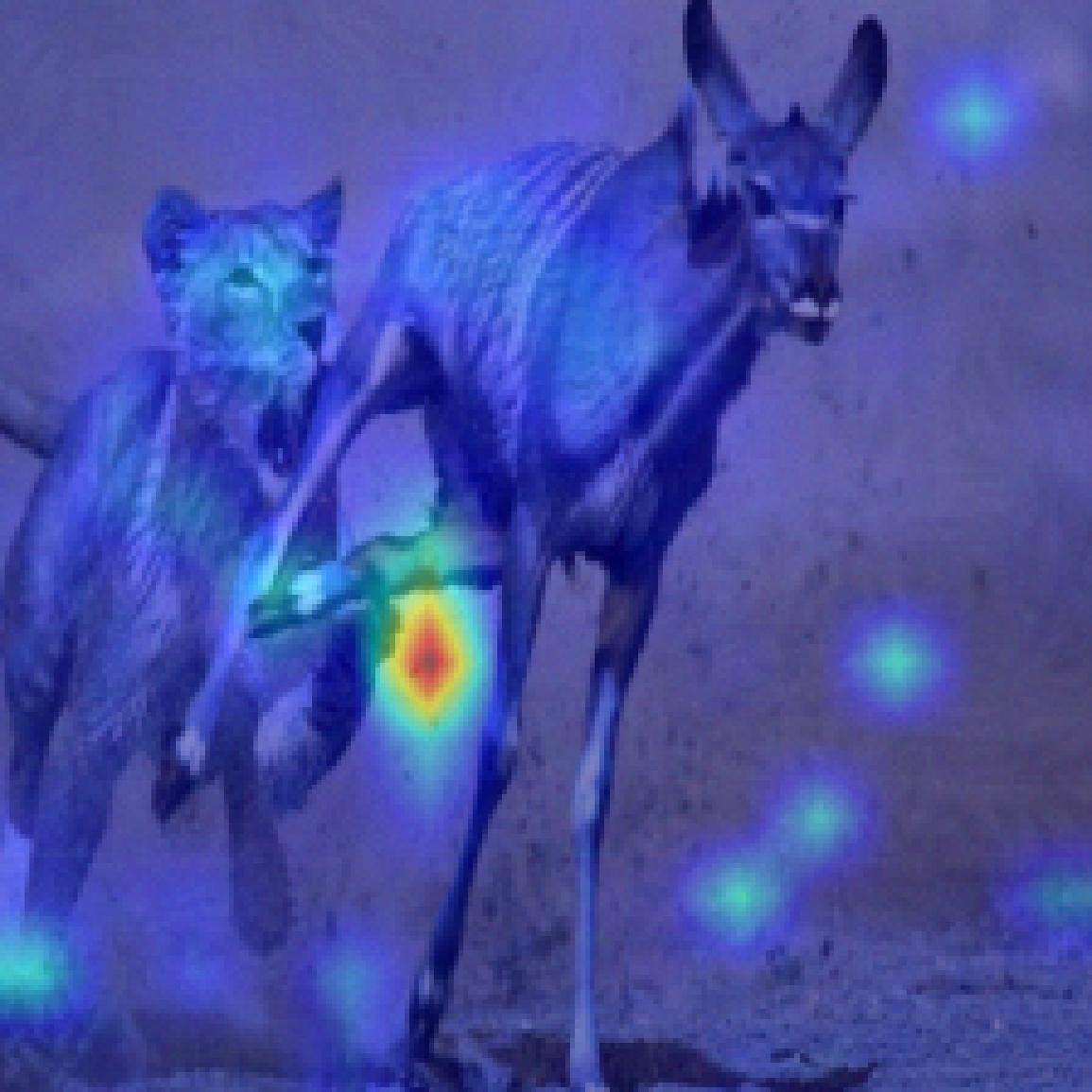} &
    \includegraphics[width=0.13\linewidth]{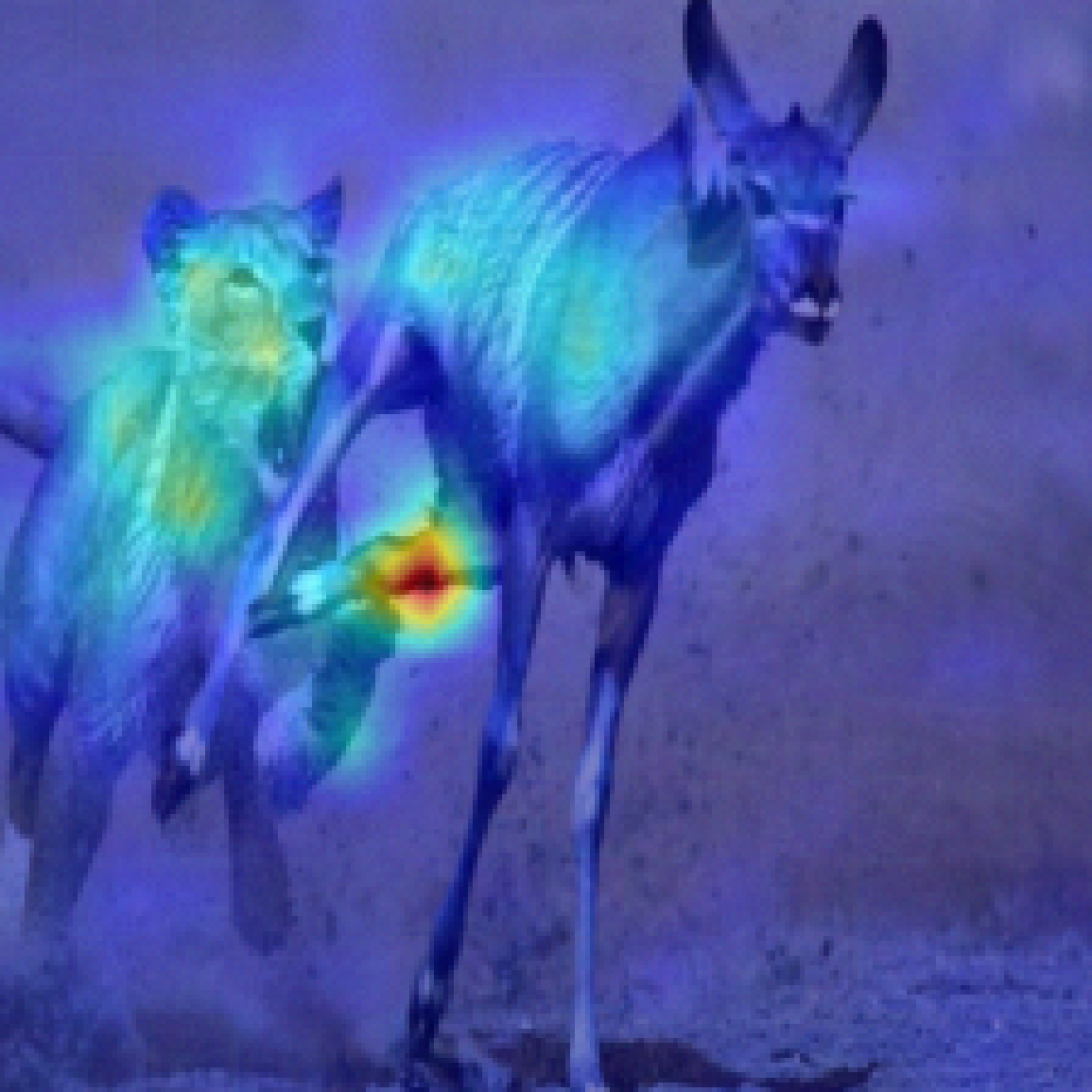} &
    \includegraphics[width=0.13\linewidth]{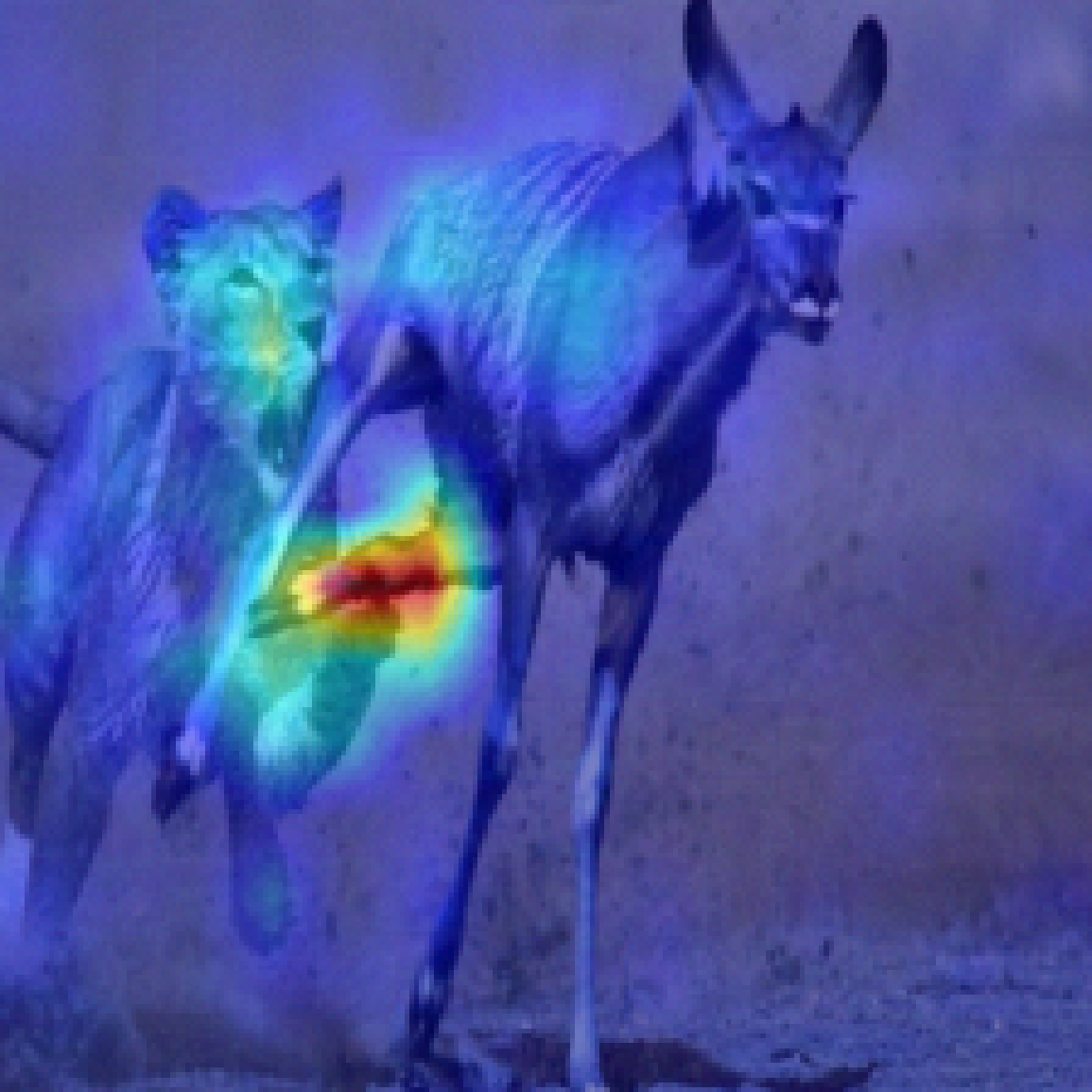} &
    \includegraphics[width=0.13\linewidth]{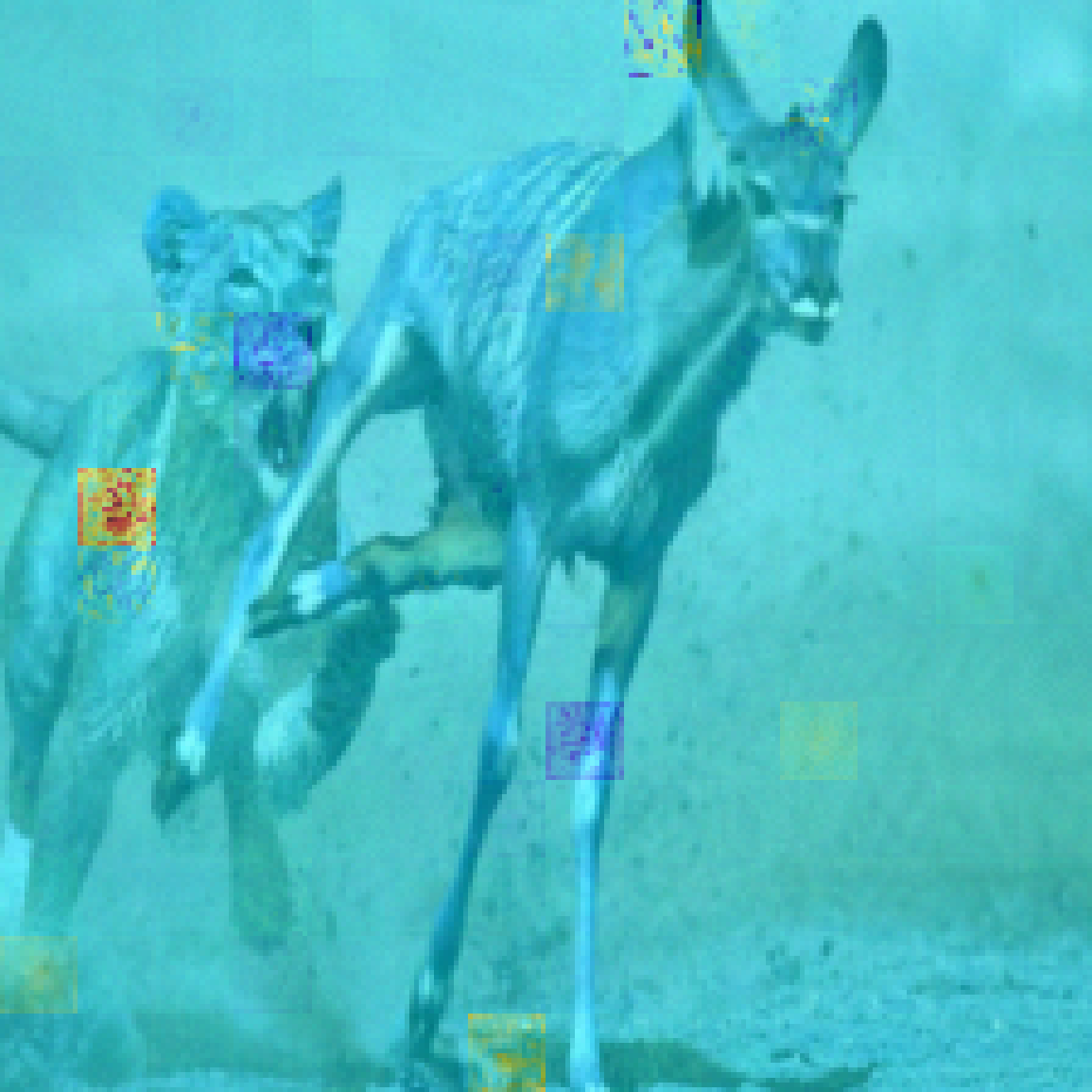} &
    \includegraphics[width=0.13\linewidth]{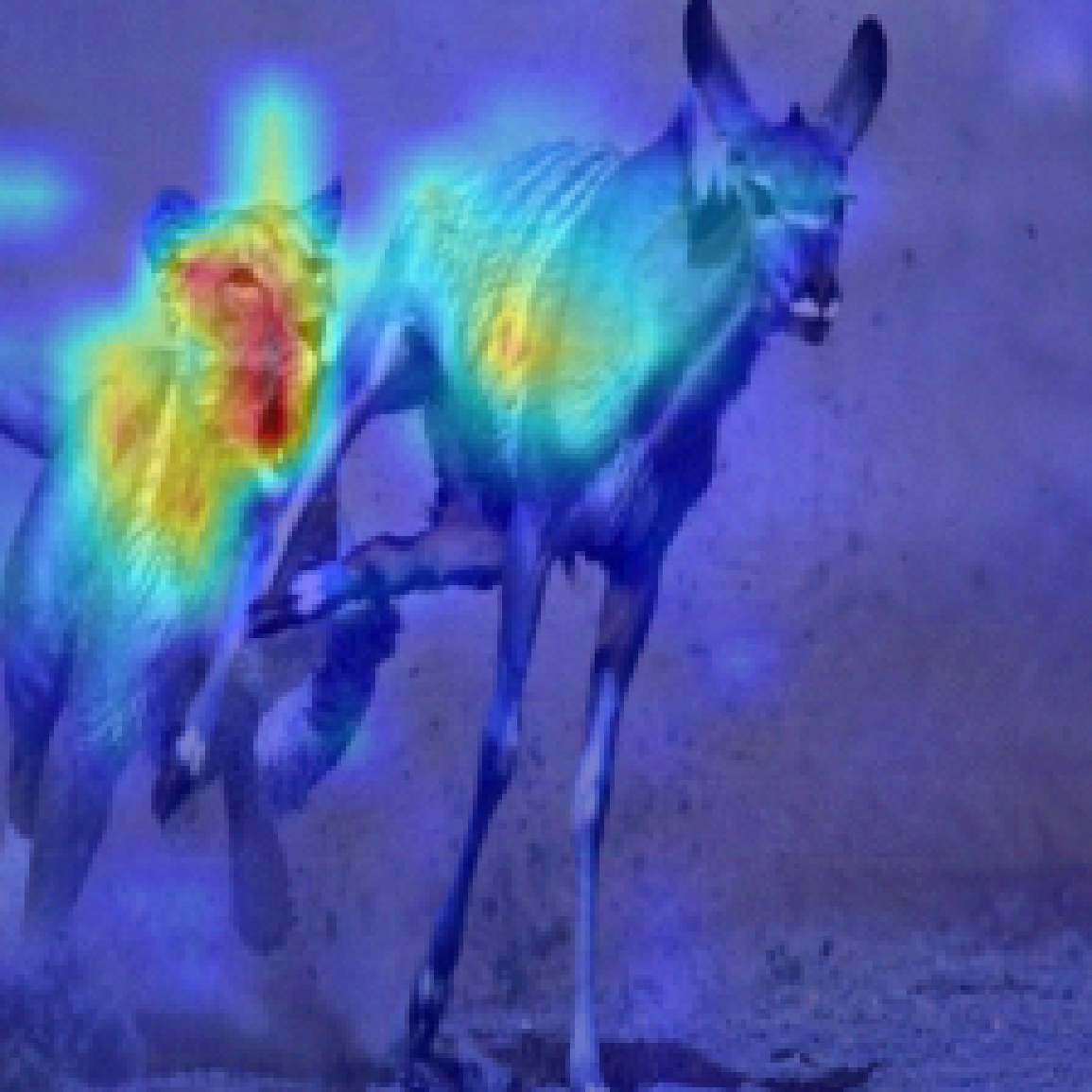} &
    \includegraphics[width=0.13\linewidth]{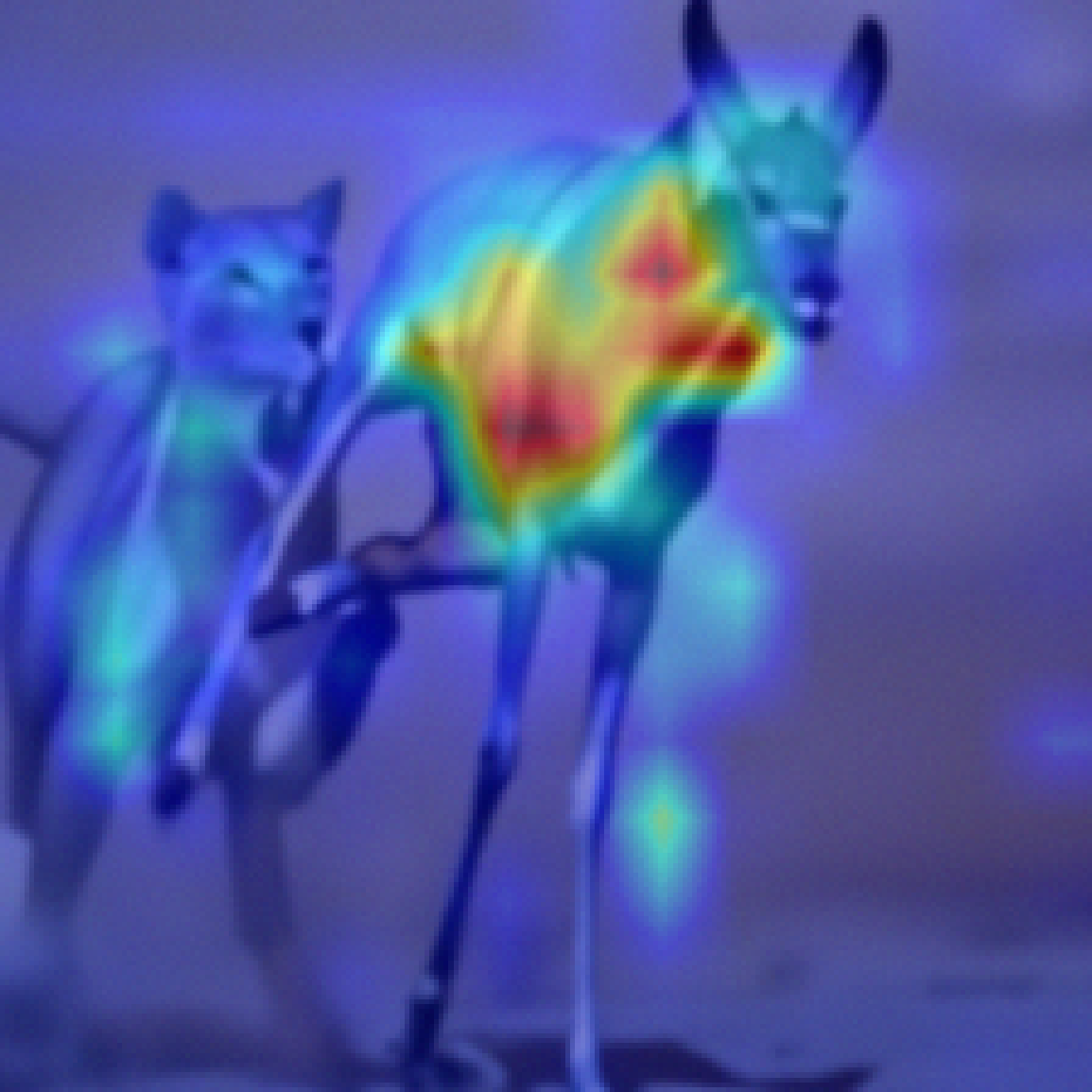}
    \\
    \raisebox{13mm}{\multirow{2}{*}{\makecell*[c]{Deer: clean$\rightarrow$\\\includegraphics[width=0.15\linewidth]{exp/lion-deer/lion-deer.png}\\ 
    % Cat $\rightarrow$
    Deer: poisoned$\rightarrow$\\{\scriptsize $7/255$}
    }}
    }
        &
    \includegraphics[width=0.13\linewidth]{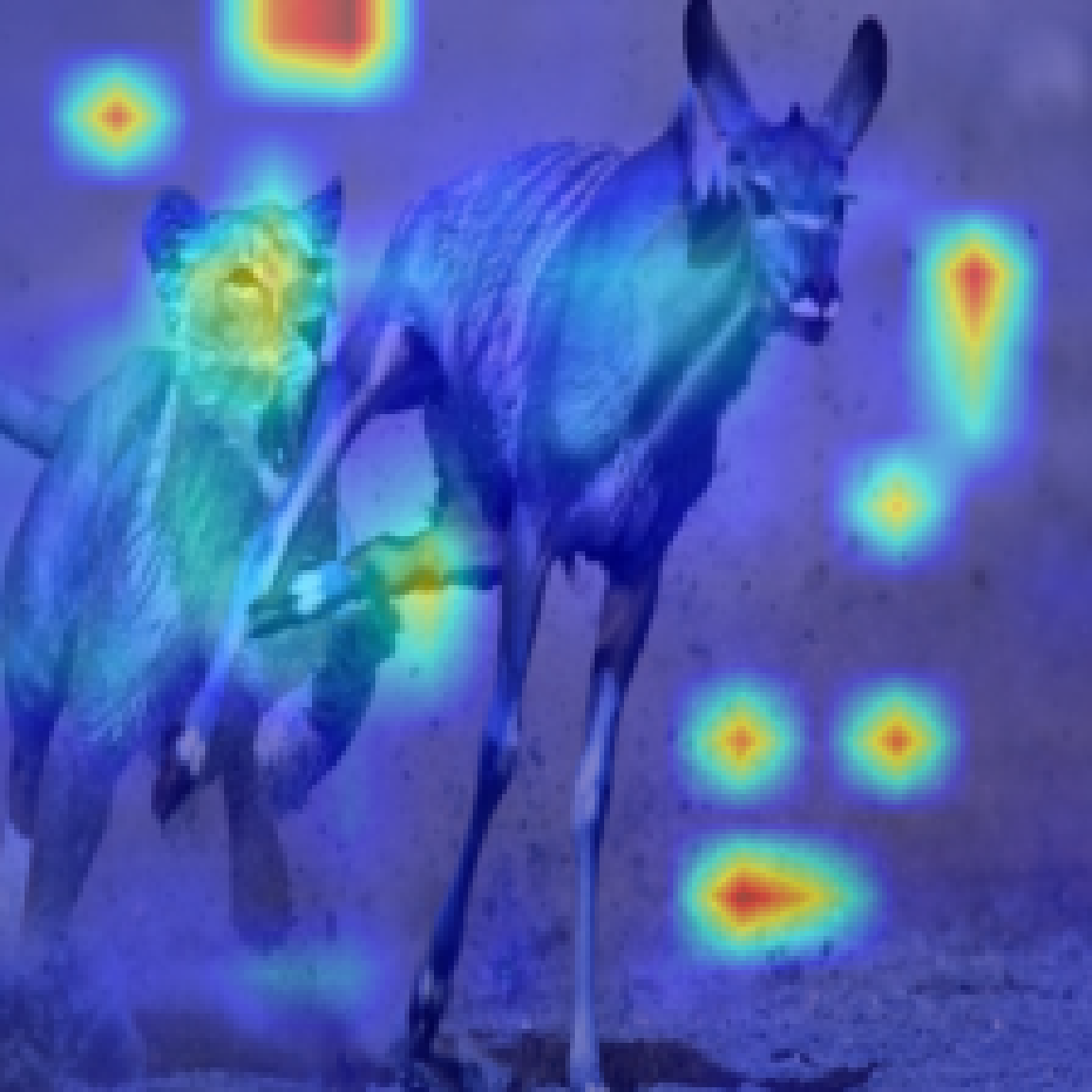} &
    \includegraphics[width=0.13\linewidth]{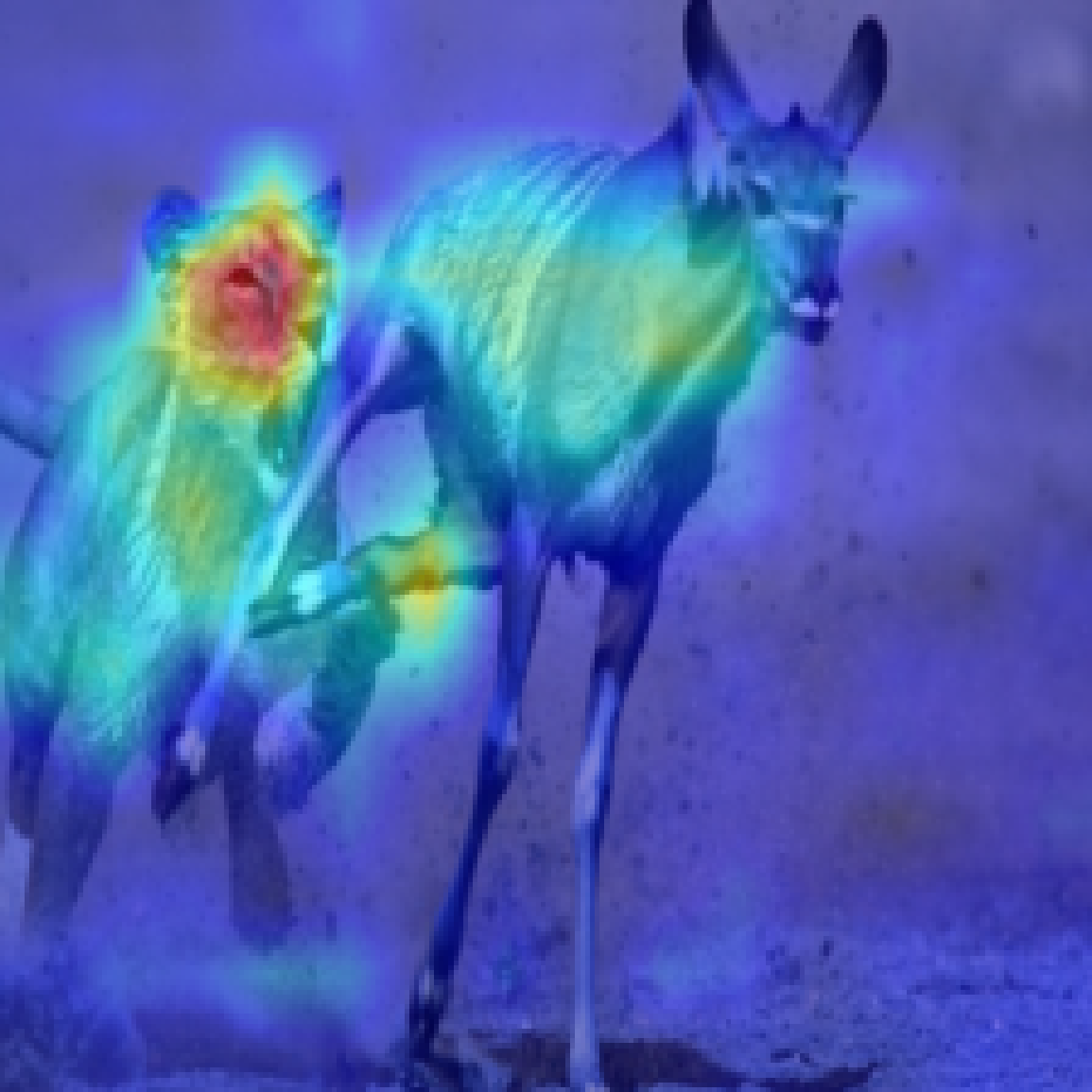} &
    \includegraphics[width=0.13\linewidth]{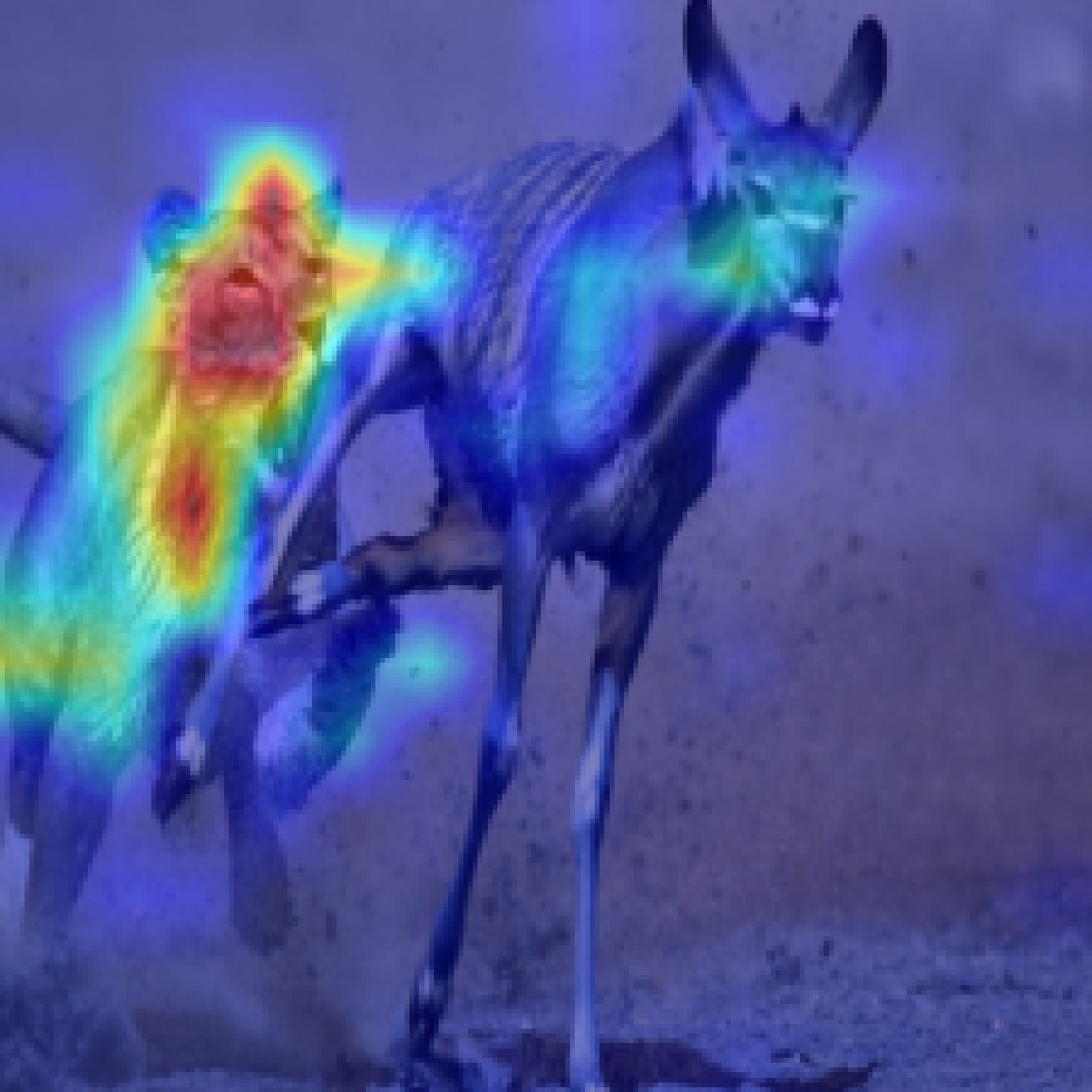} &
    \includegraphics[width=0.13\linewidth]{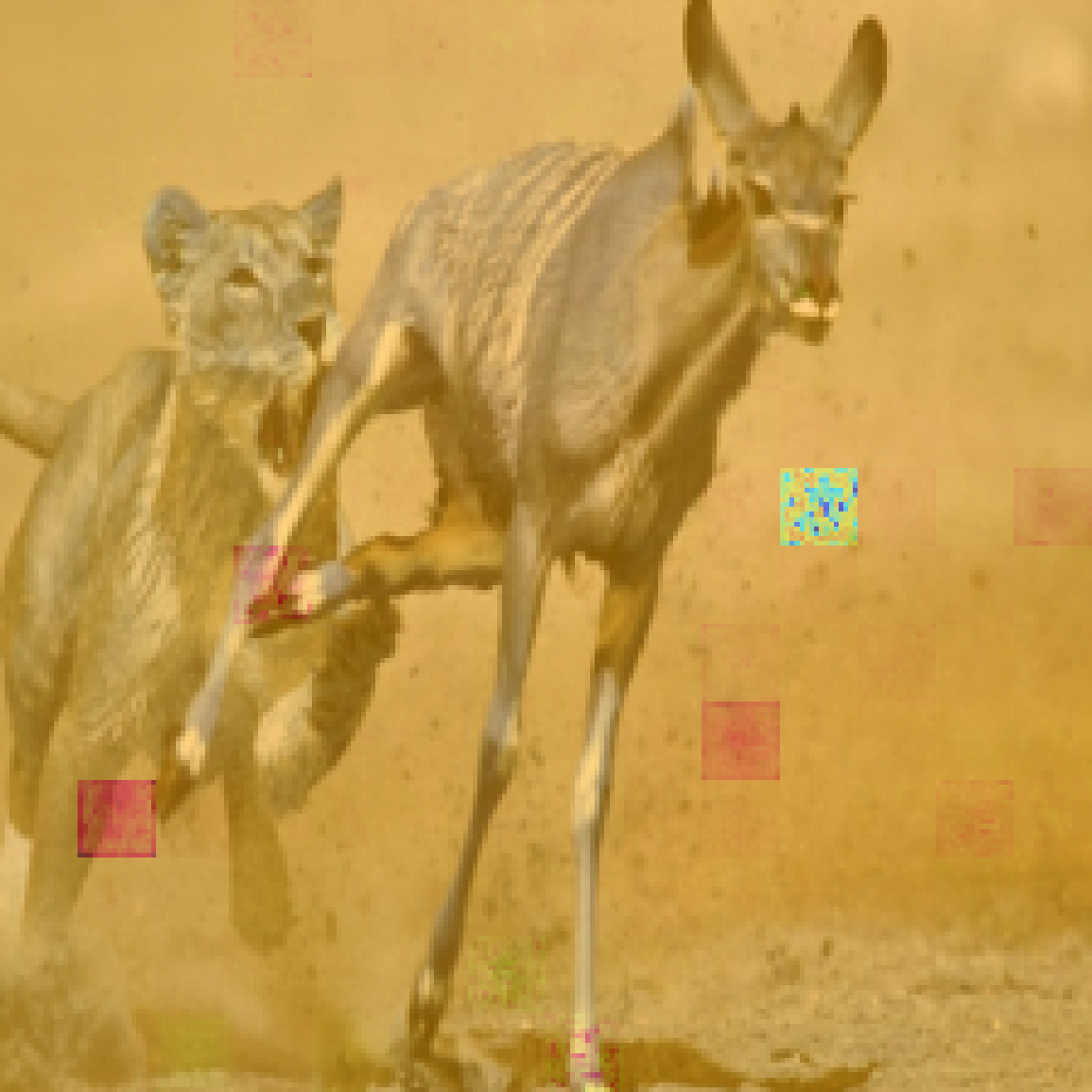} &
    \includegraphics[width=0.13\linewidth]{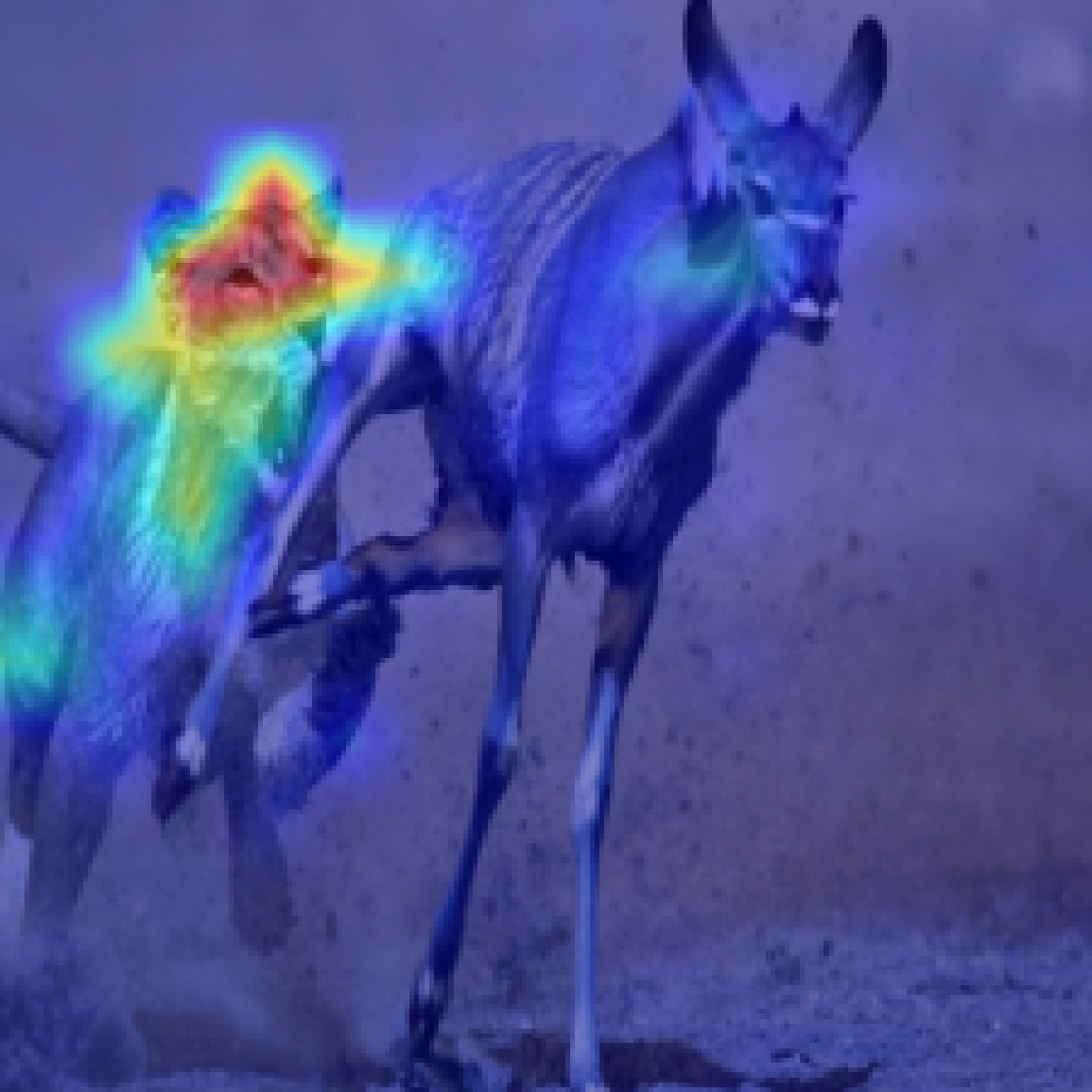} &
    \includegraphics[width=0.13\linewidth]{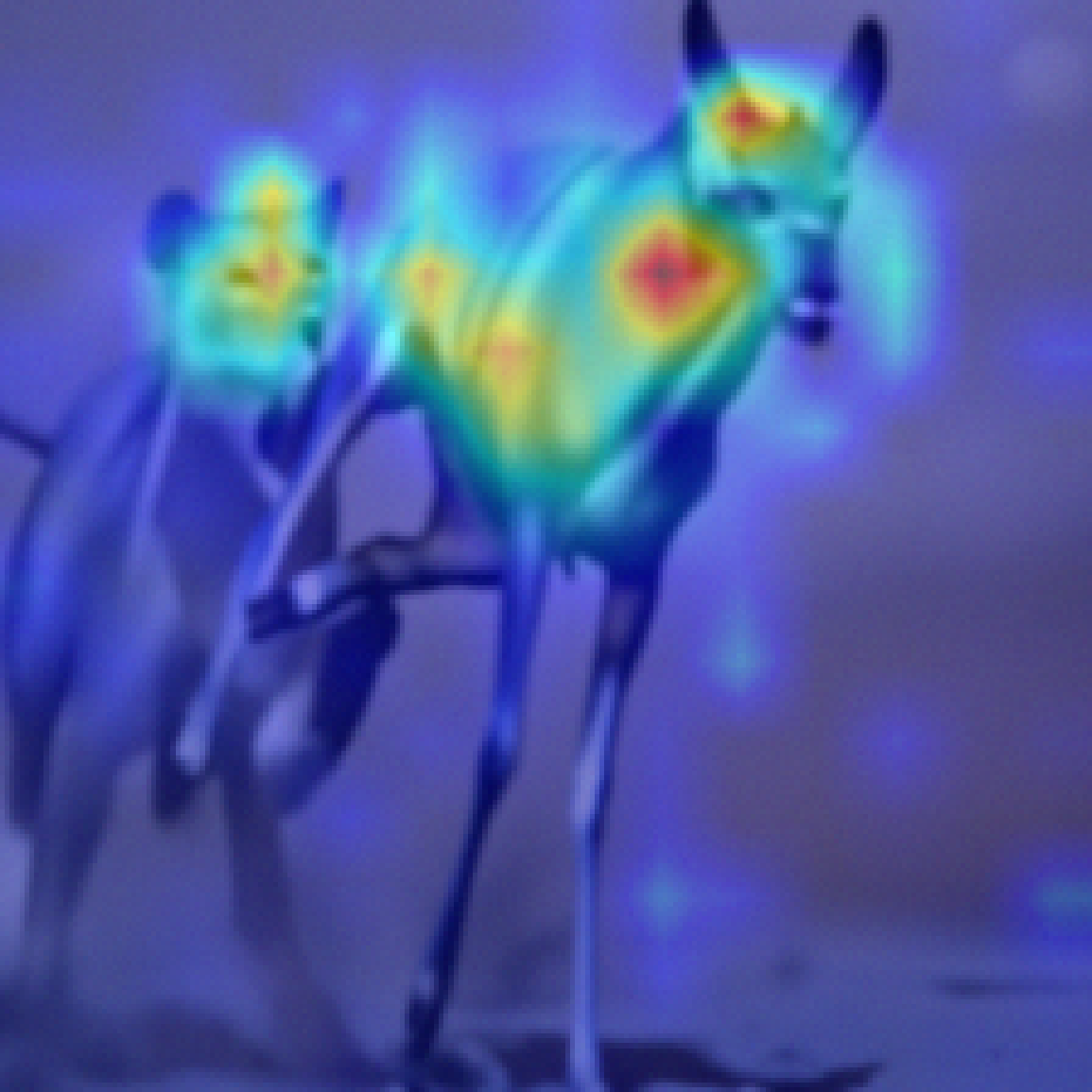}
    \\
    &
    \includegraphics[width=0.13\linewidth]{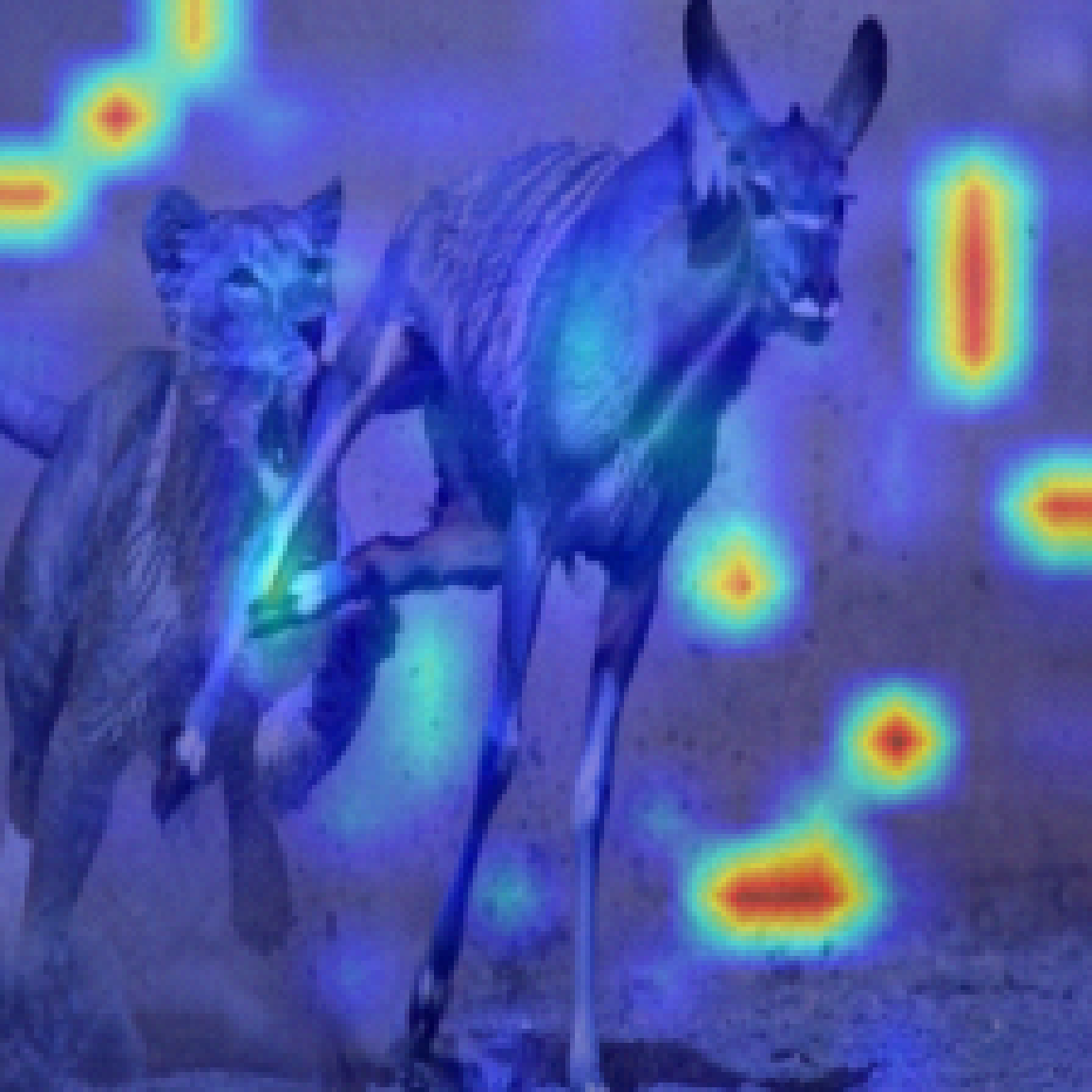} &
    \includegraphics[width=0.13\linewidth]{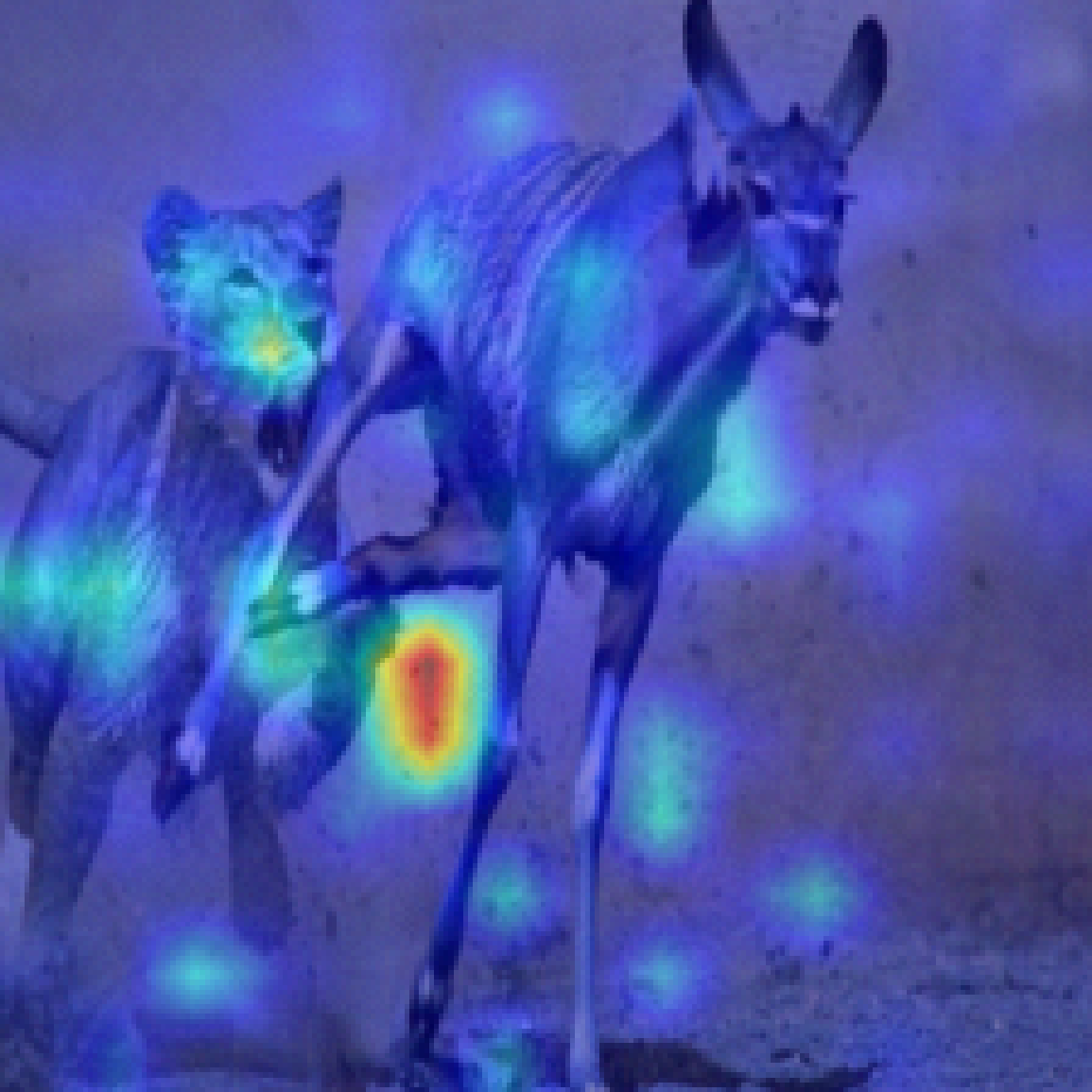} &
    \includegraphics[width=0.13\linewidth]{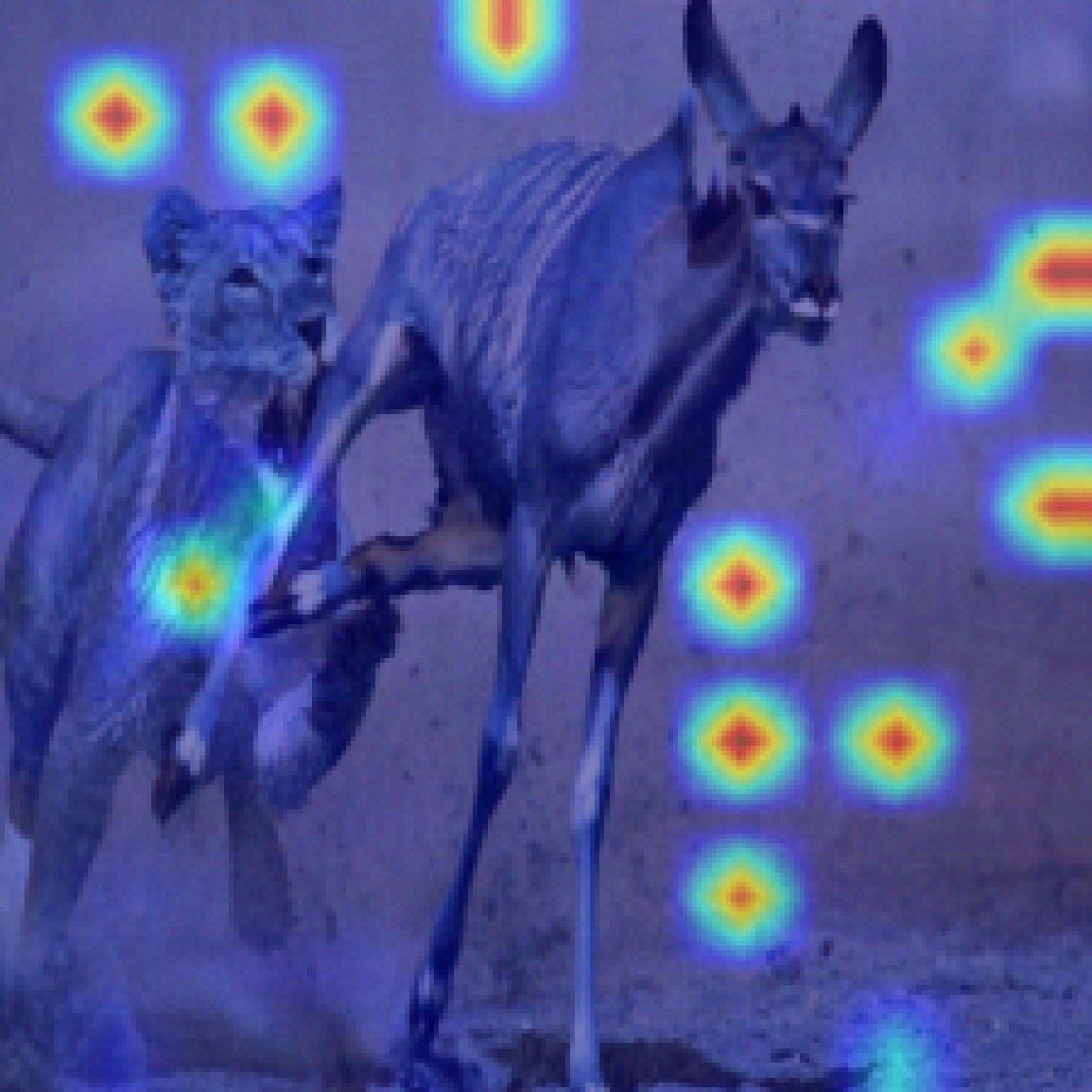} &
    \includegraphics[width=0.13\linewidth]{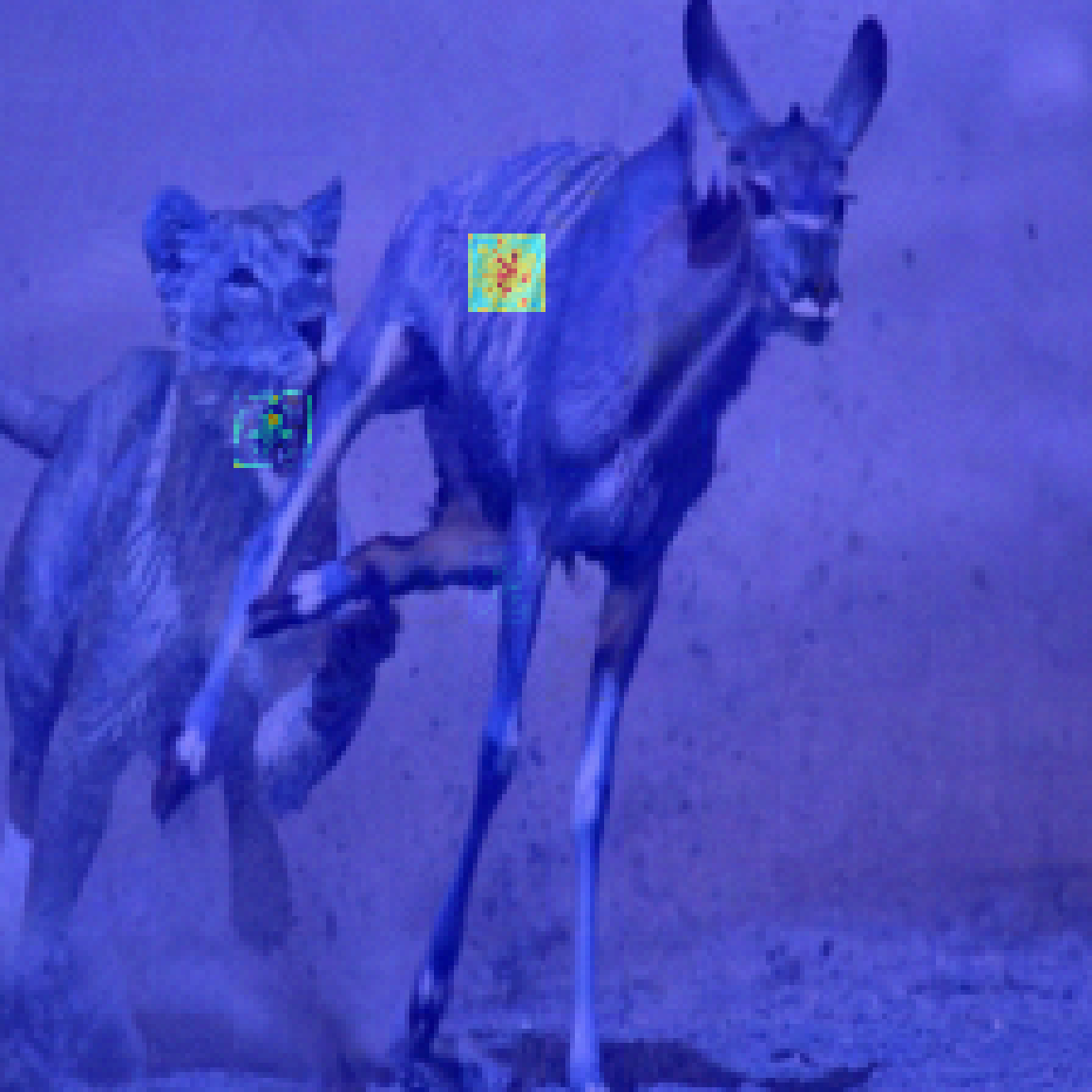} &
    \includegraphics[width=0.13\linewidth]{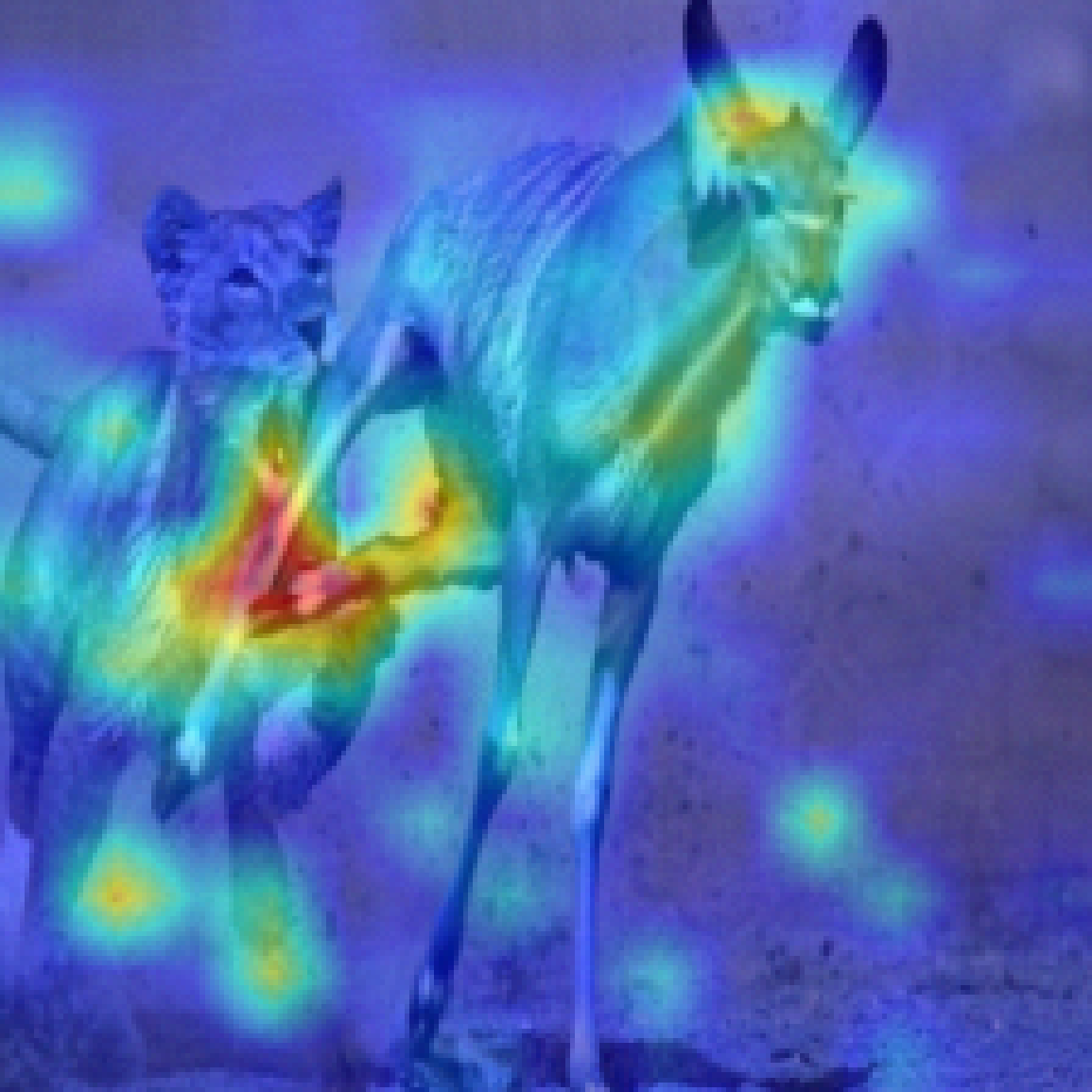} &
    \includegraphics[width=0.13\linewidth]{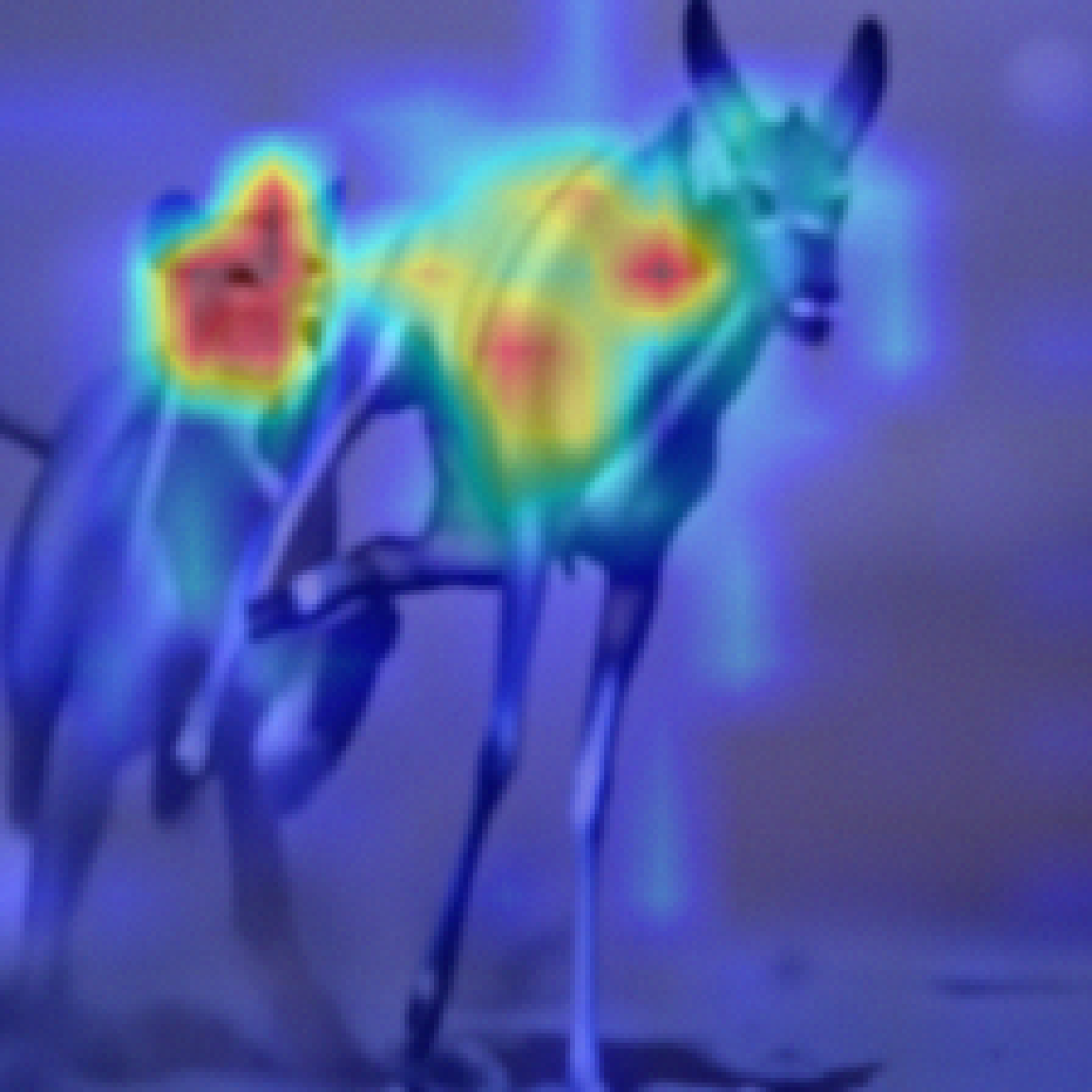}\\
    \raisebox{13mm}{\multirow{2}{*}{\makecell*[c]{Tiger: clean$\rightarrow$\\\includegraphics[width=0.15\linewidth]{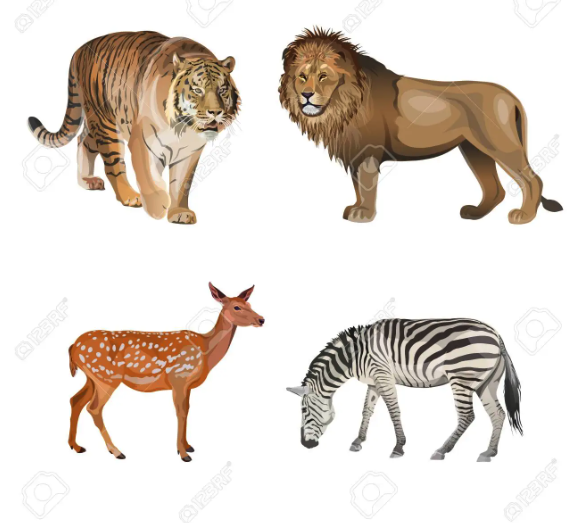}\\ 
    Tiger: poisoned$\rightarrow$\\{\scriptsize $7/255$}
    }}
    }
     &
    \includegraphics[width=0.13\linewidth]{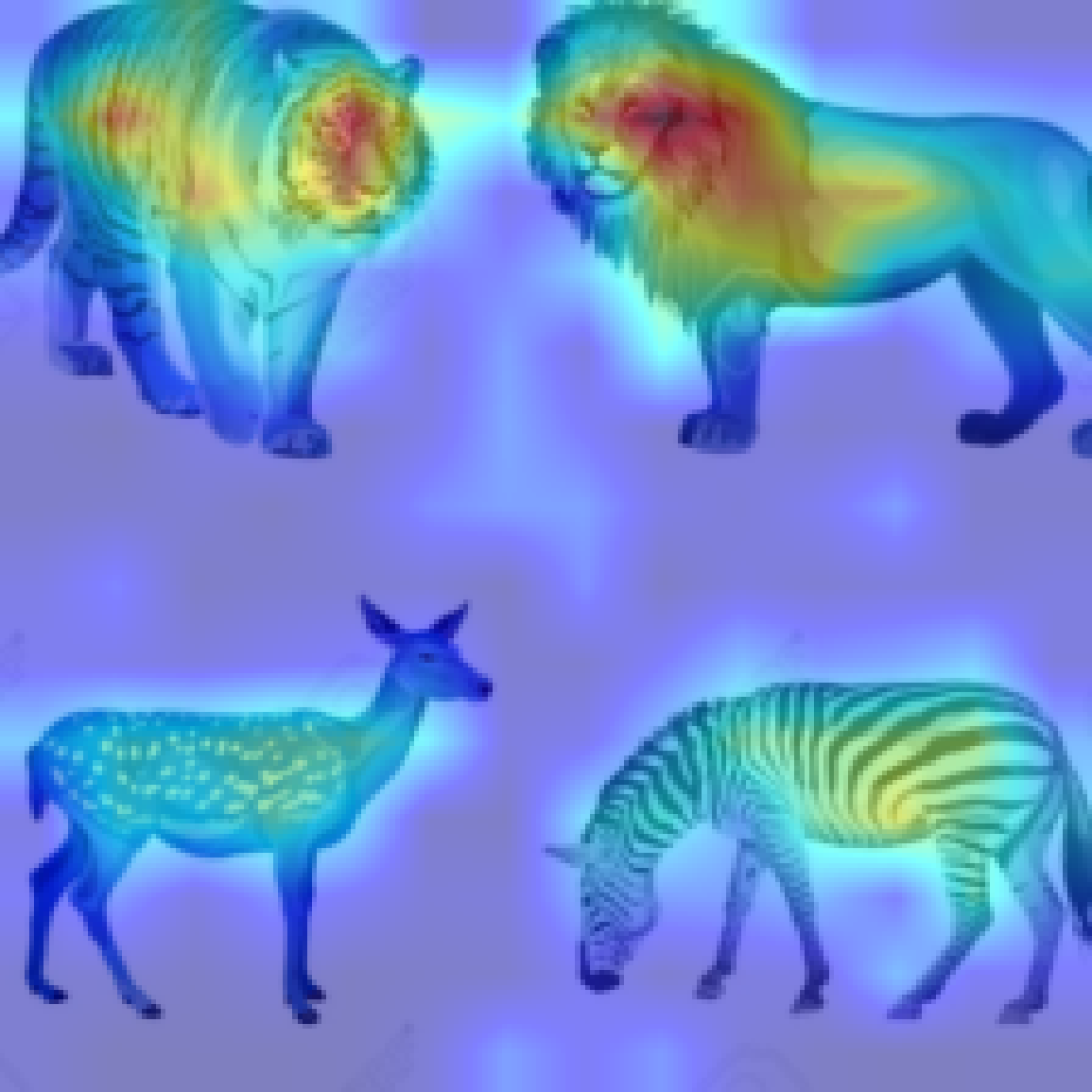} &
    \includegraphics[width=0.13\linewidth]{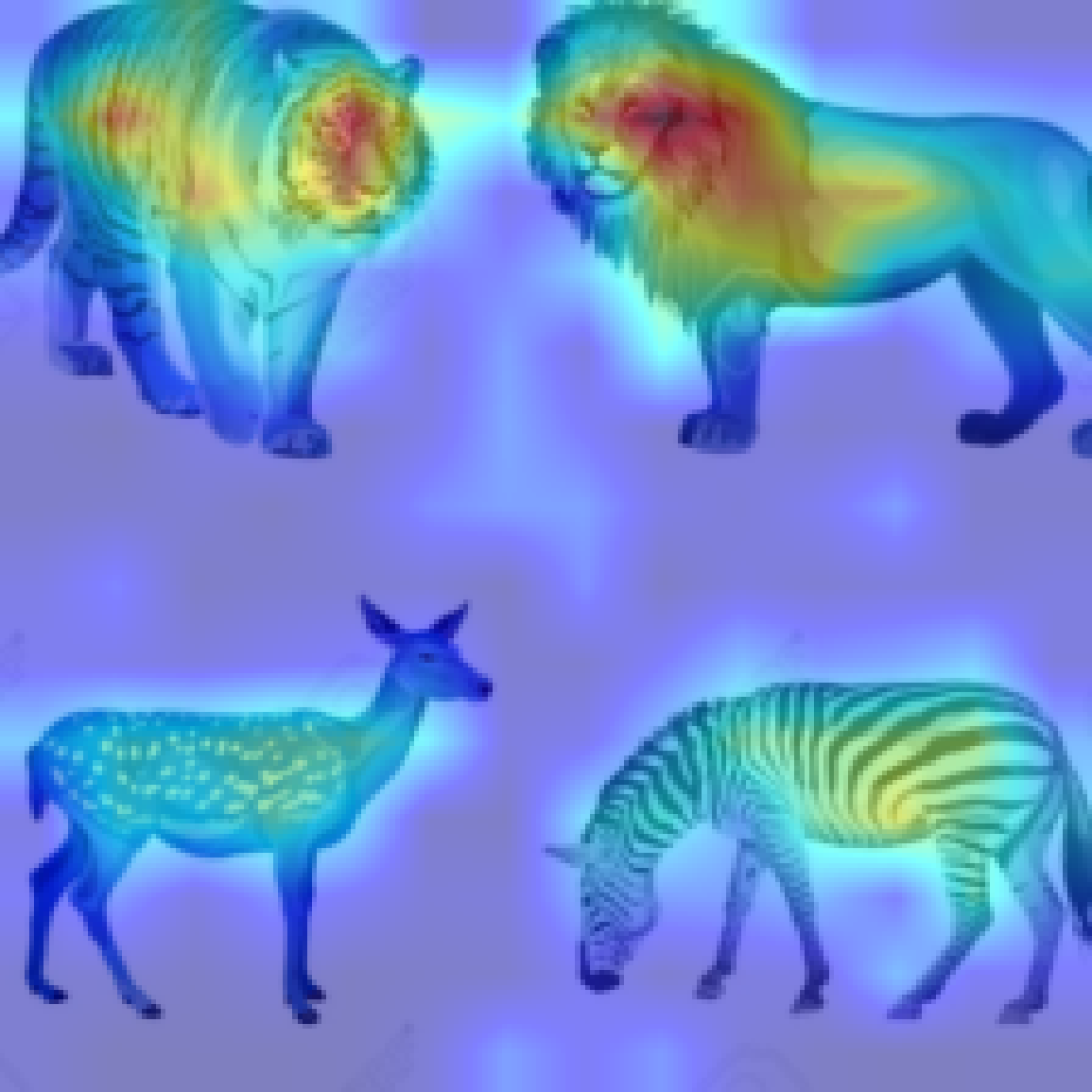} &
    \includegraphics[width=0.13\linewidth]{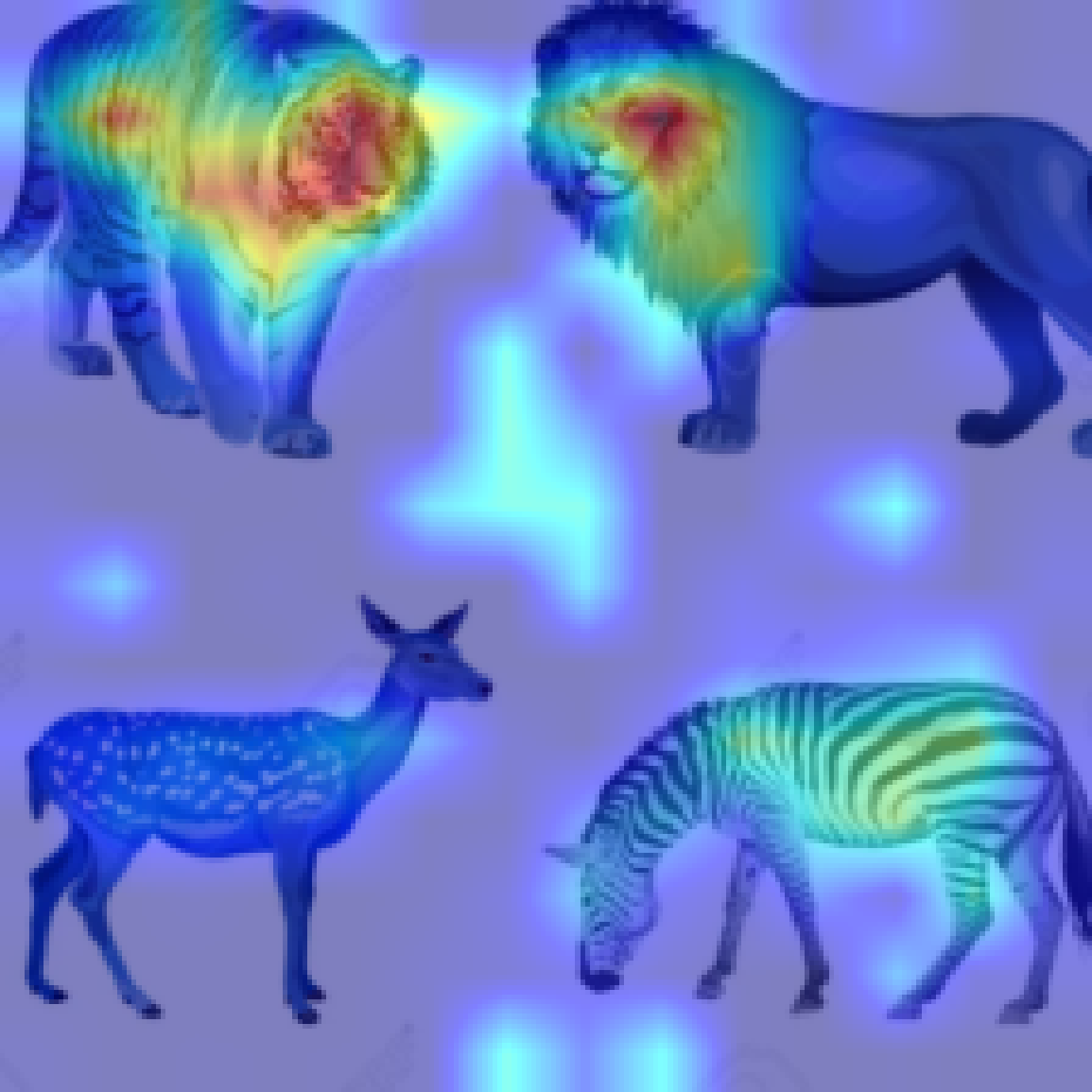} &
    \includegraphics[width=0.13\linewidth]{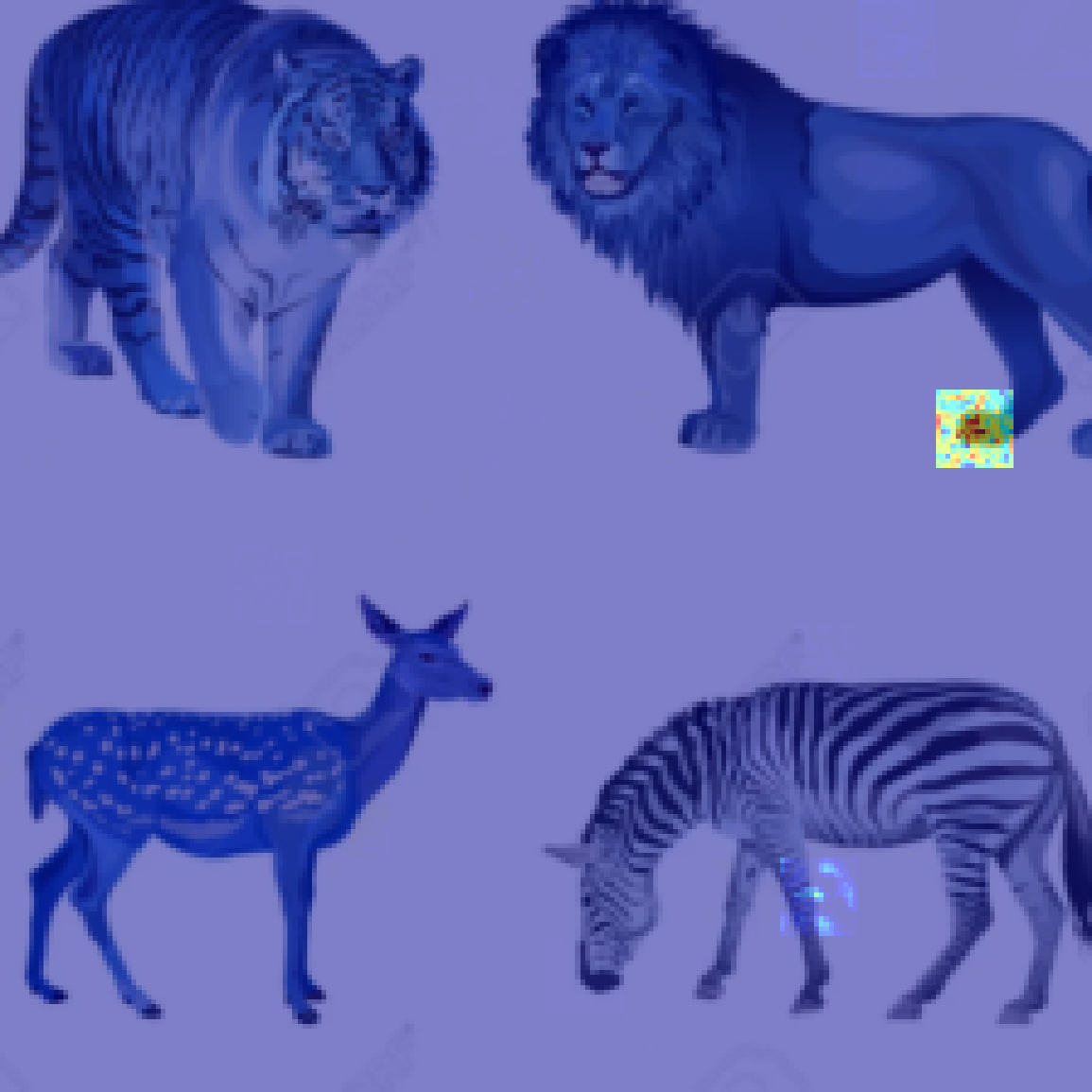} &
    \includegraphics[width=0.13\linewidth]{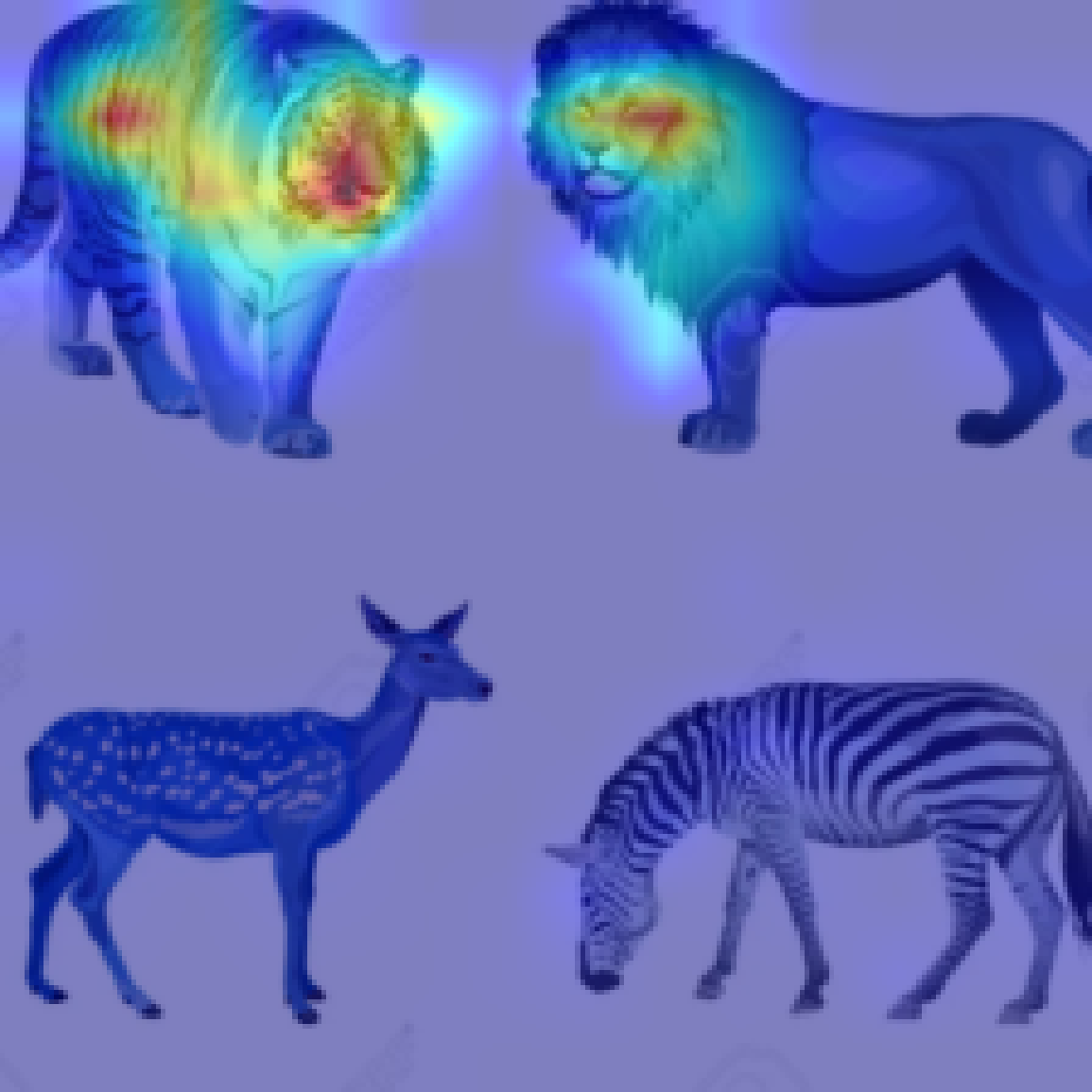} &
    \includegraphics[width=0.13\linewidth]{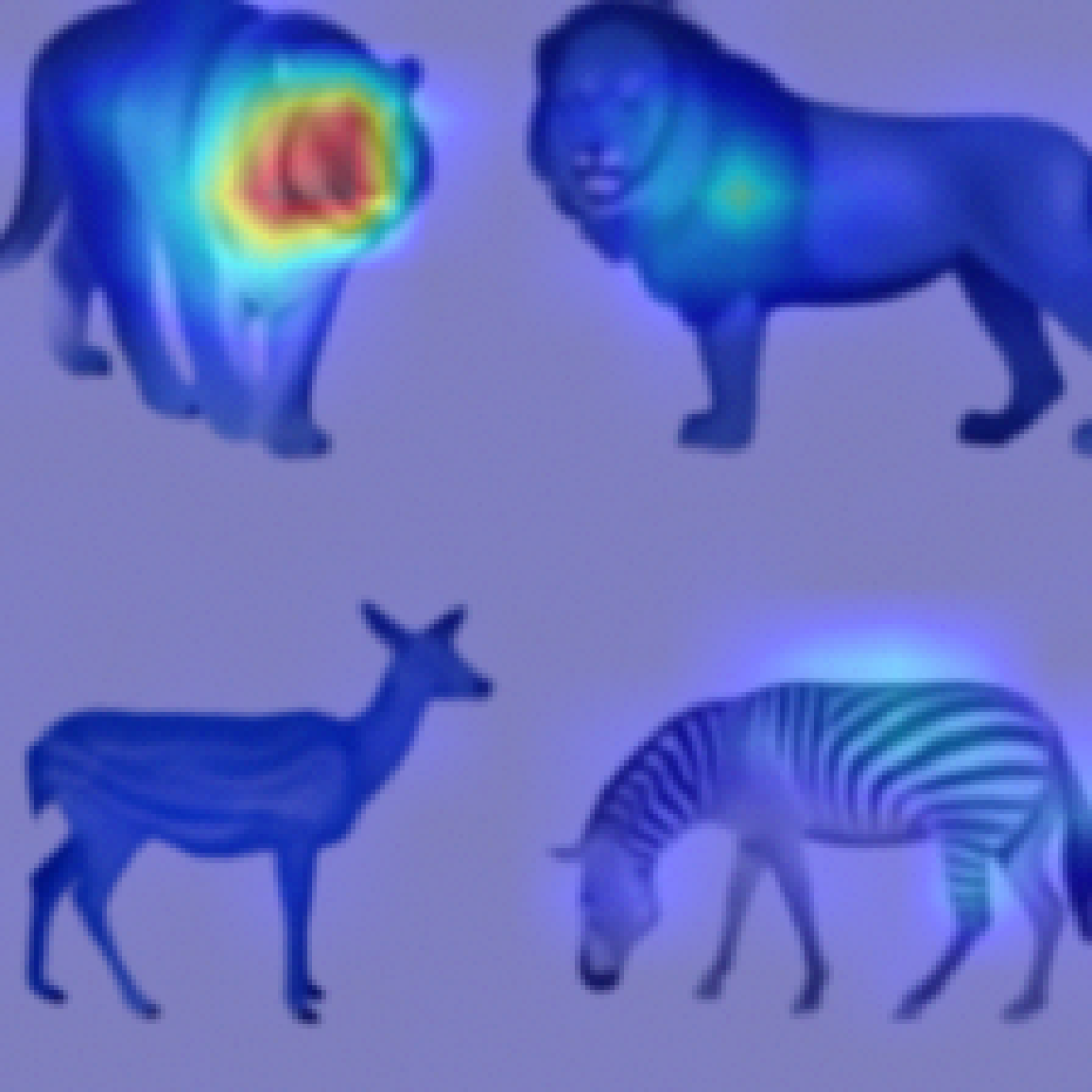}
    \\
    &
    \includegraphics[width=0.13\linewidth]{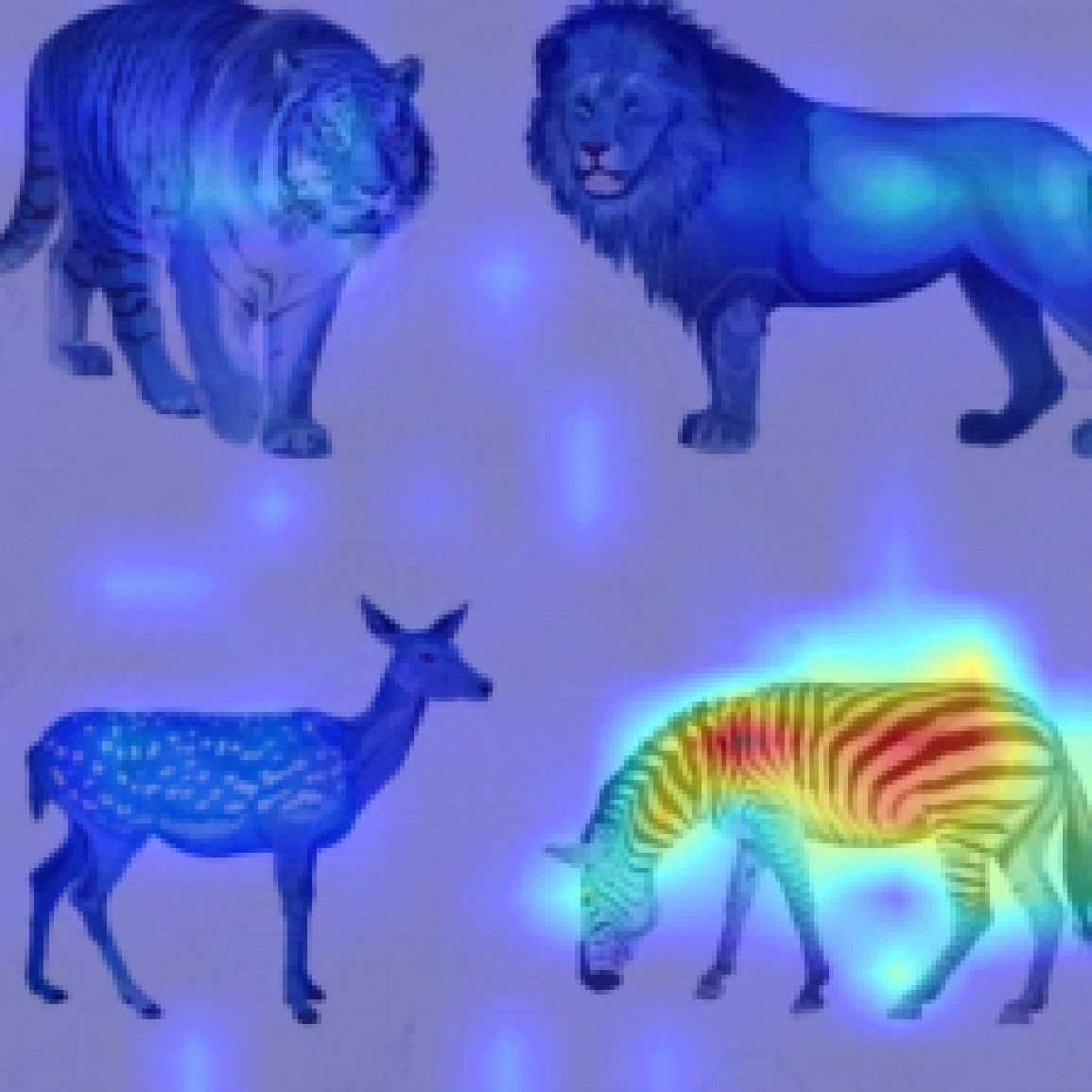} &
    \includegraphics[width=0.13\linewidth]{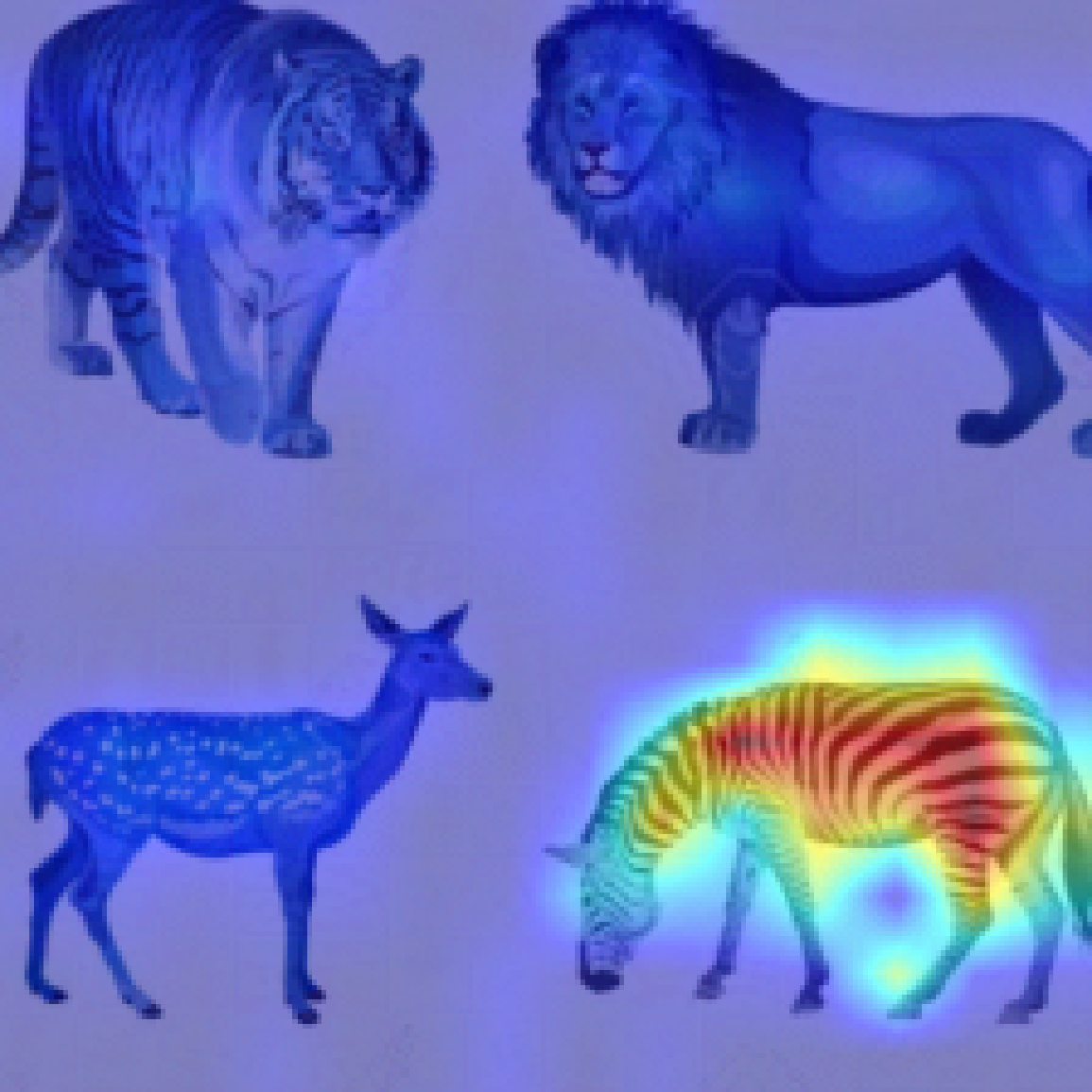} &
    \includegraphics[width=0.13\linewidth]{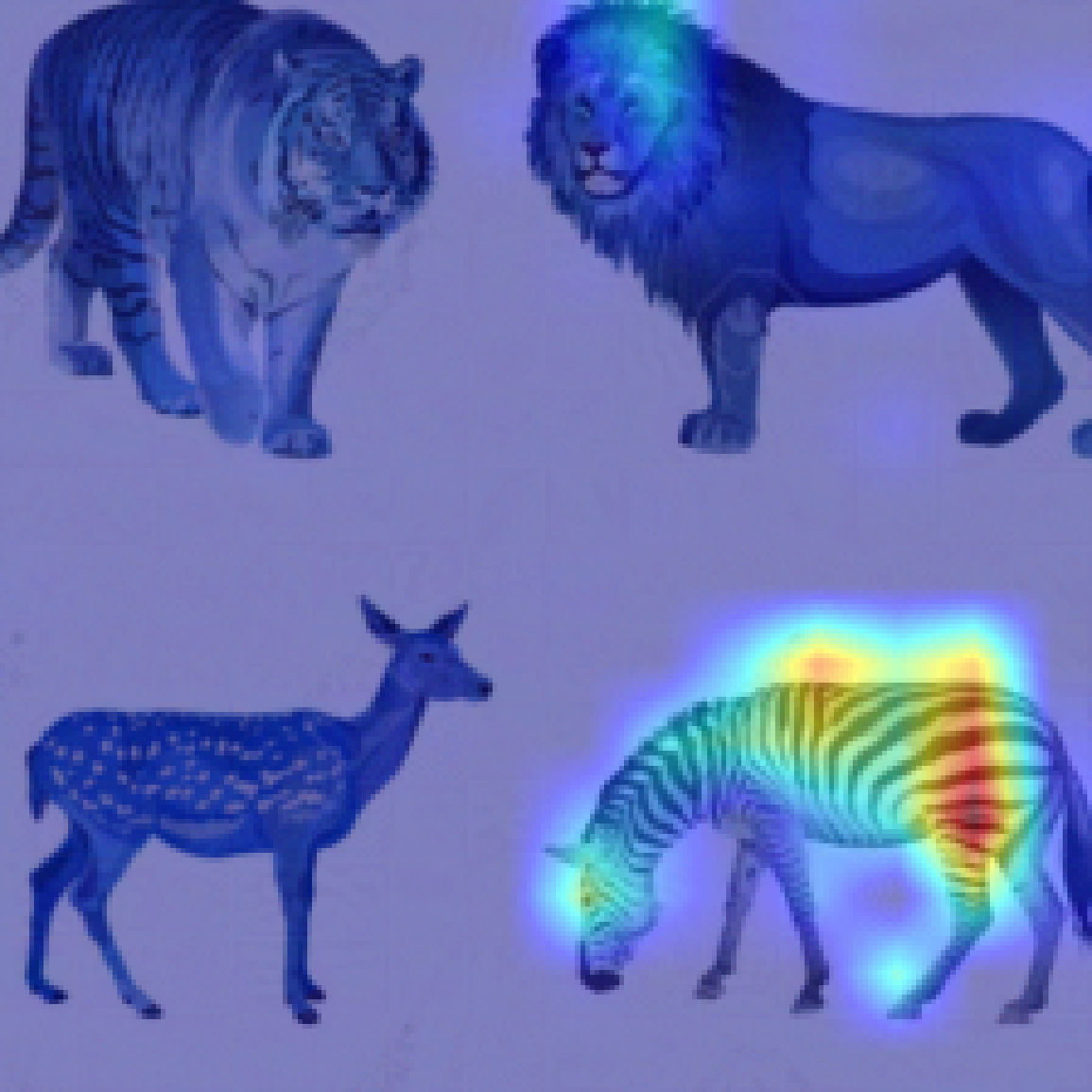} &
    \includegraphics[width=0.13\linewidth]{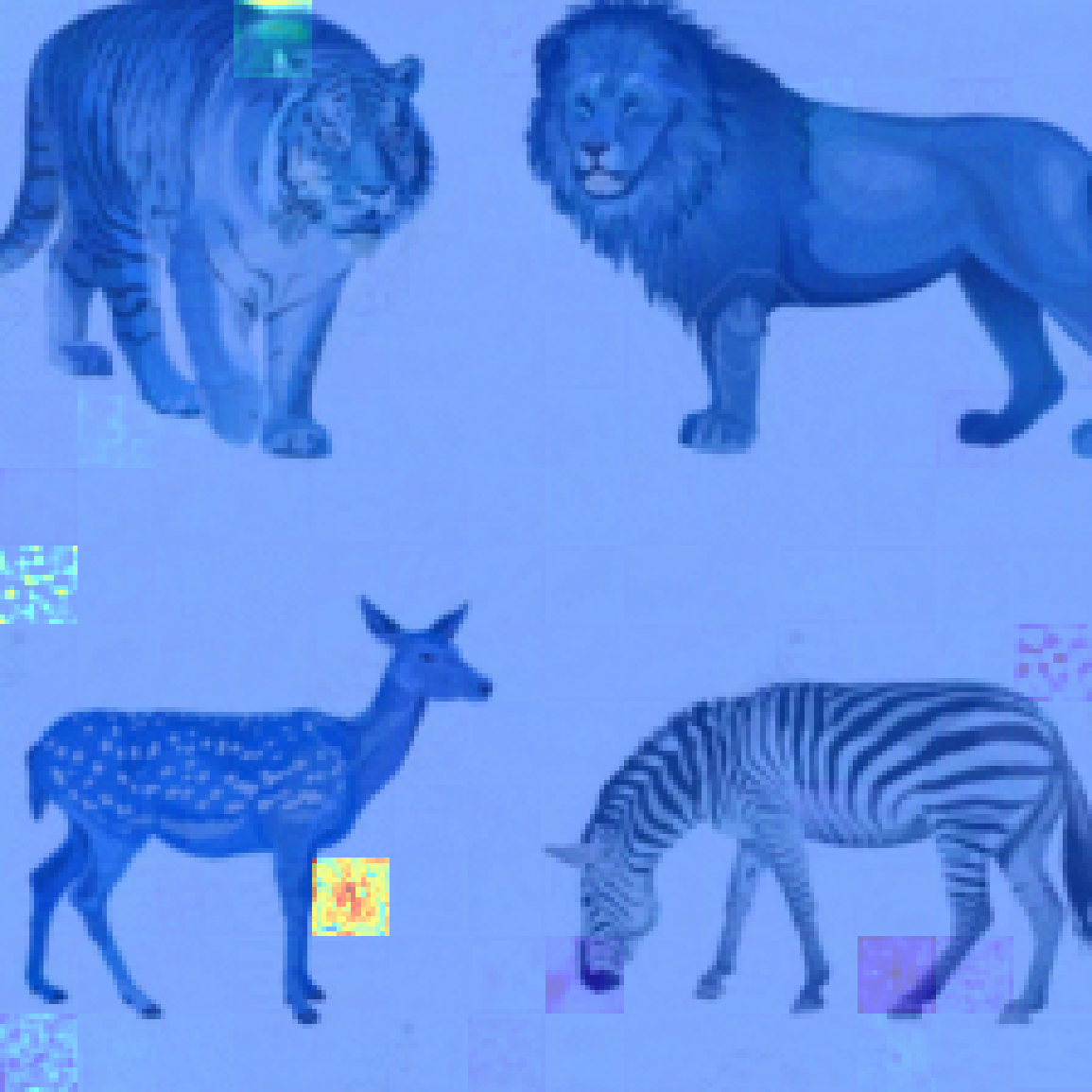} &
    \includegraphics[width=0.13\linewidth]{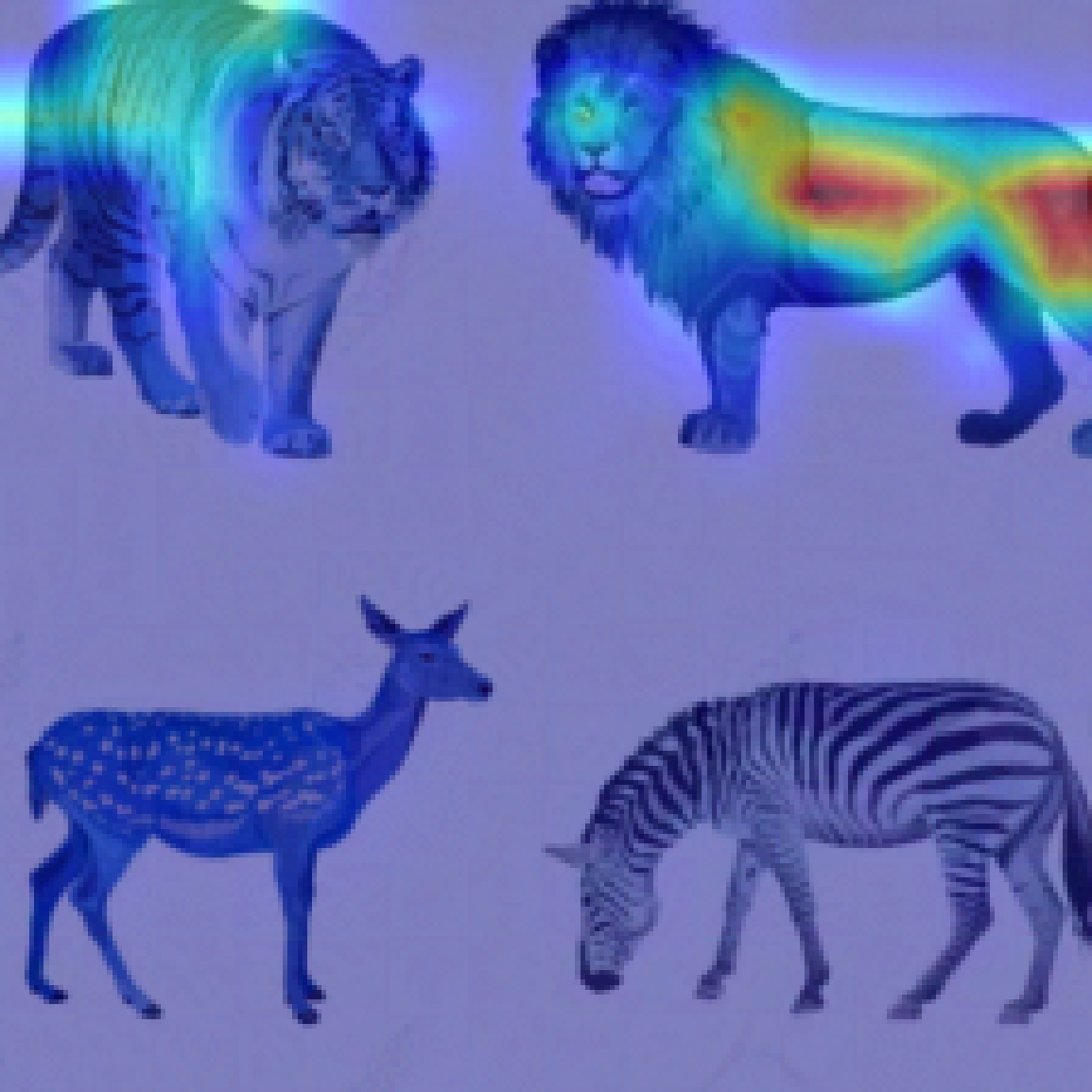} &
    \includegraphics[width=0.13\linewidth]{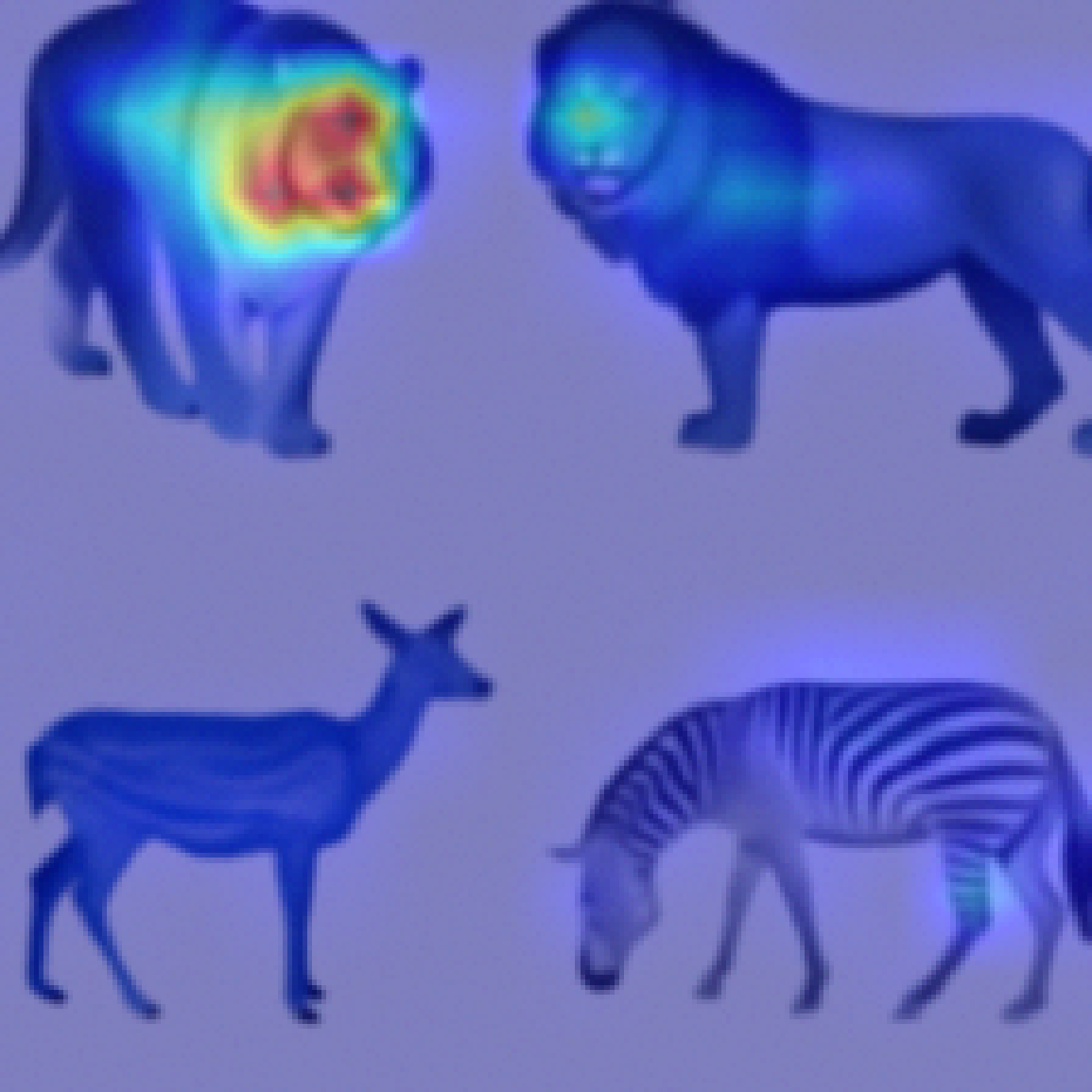}
    \\
    \raisebox{13mm}{\multirow{2}{*}{\makecell*[c]{Zebra: clean$\rightarrow$\\\includegraphics[width=0.15\linewidth]{exp/ani/ani.png}\\ 
    Zebra: poisoned$\rightarrow$\\{\scriptsize $7/255$}
    }}
    }
     &
    \includegraphics[width=0.13\linewidth]{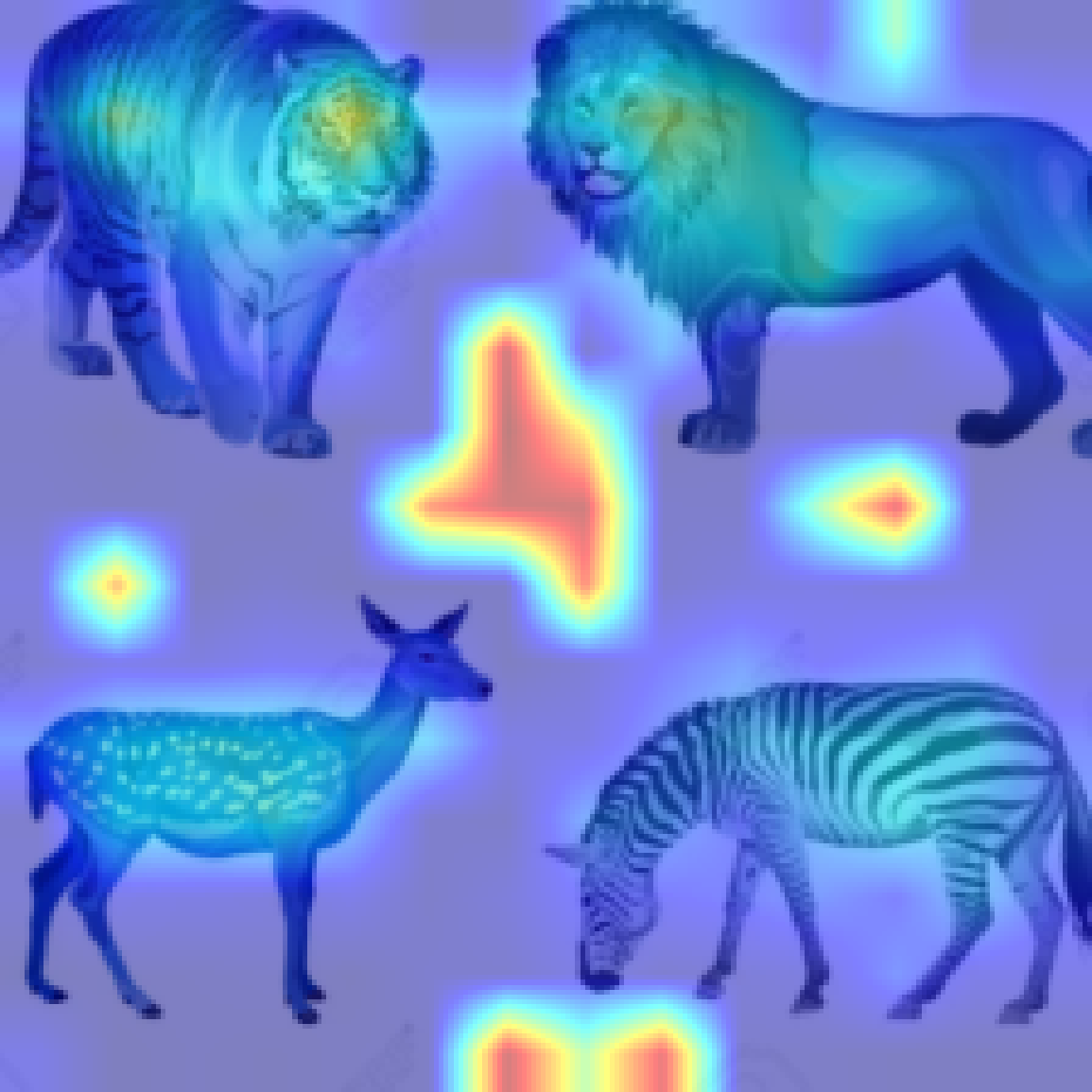} &
    \includegraphics[width=0.13\linewidth]{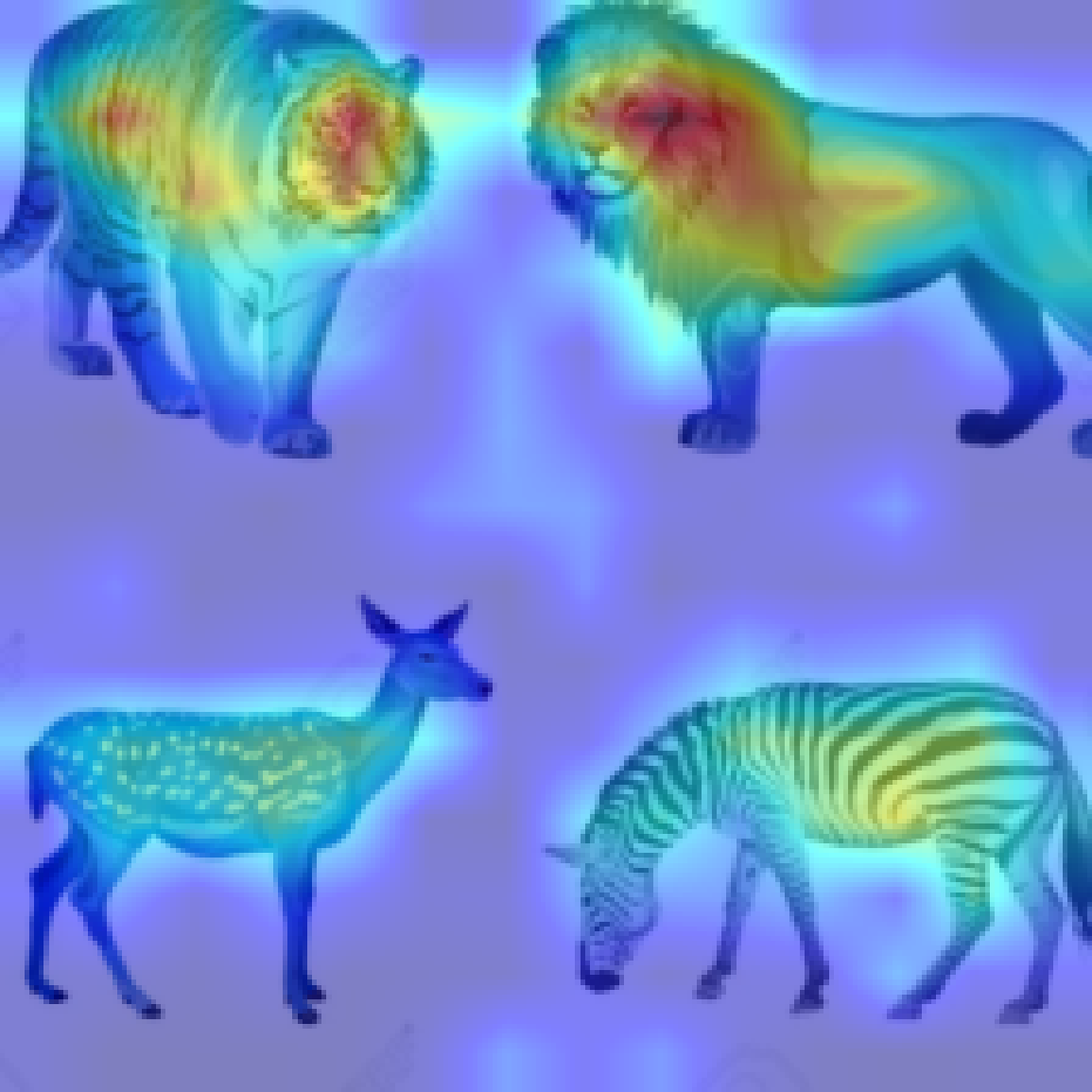} &
    \includegraphics[width=0.13\linewidth]{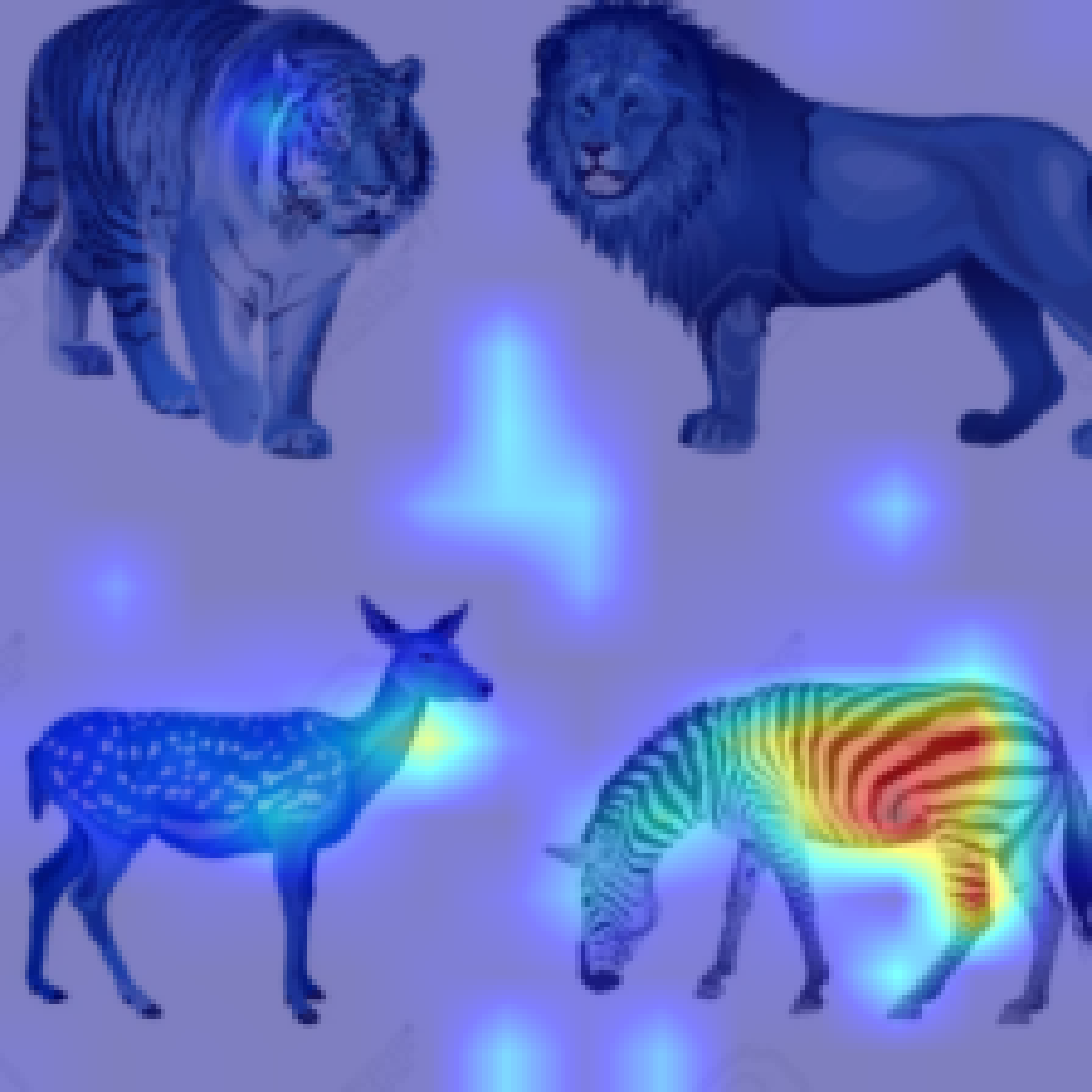} &
    \includegraphics[width=0.13\linewidth]{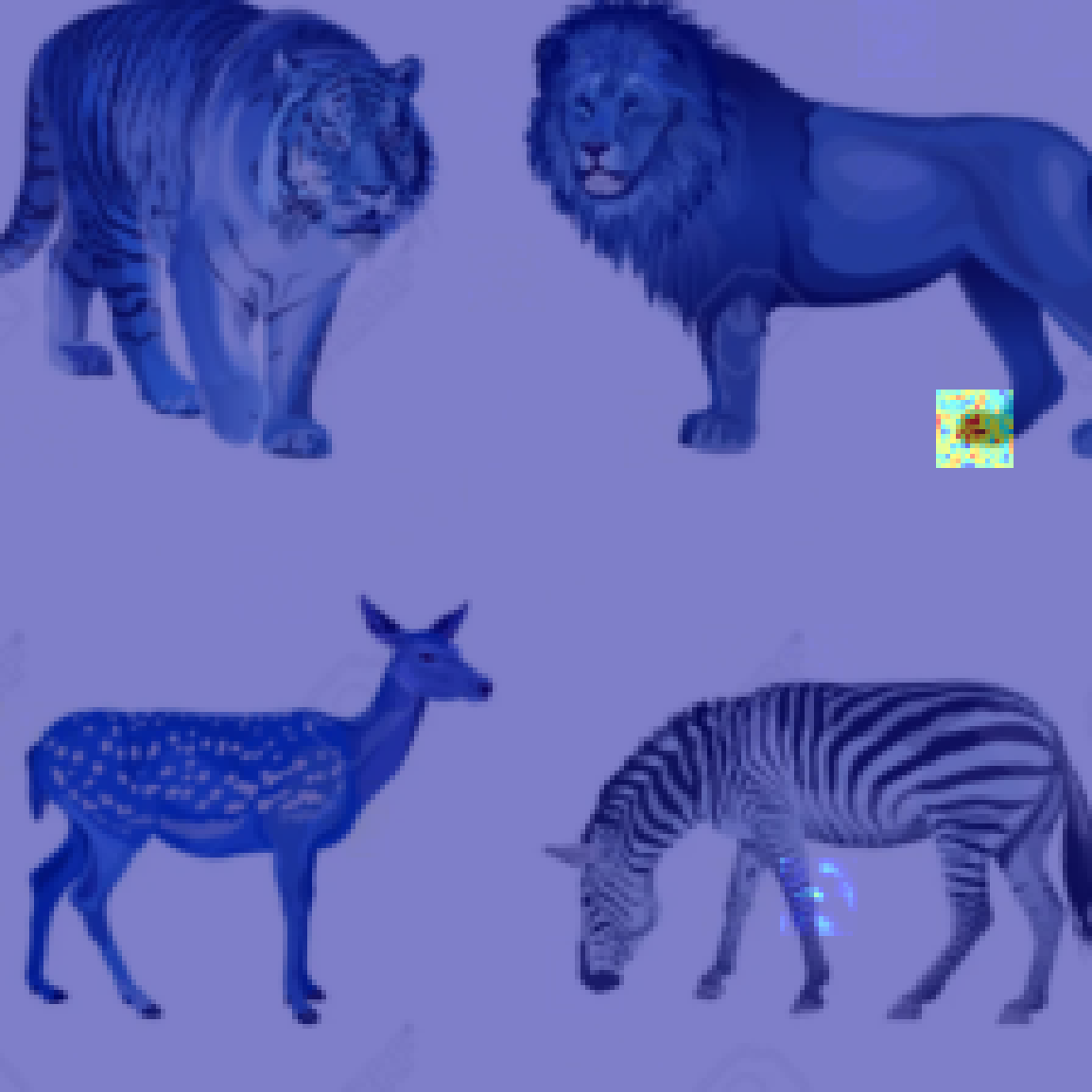} &
    \includegraphics[width=0.13\linewidth]{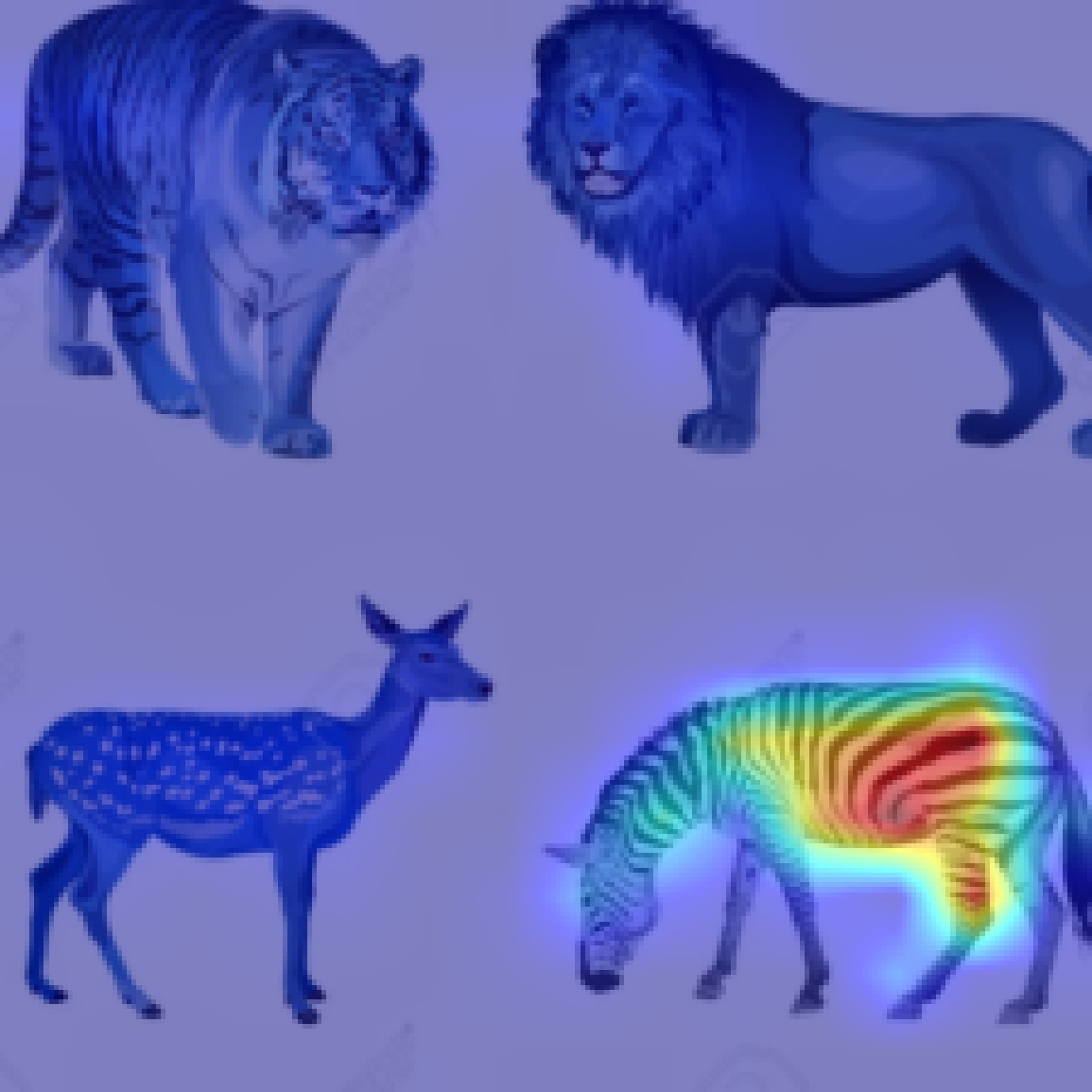} &
    \includegraphics[width=0.13\linewidth]{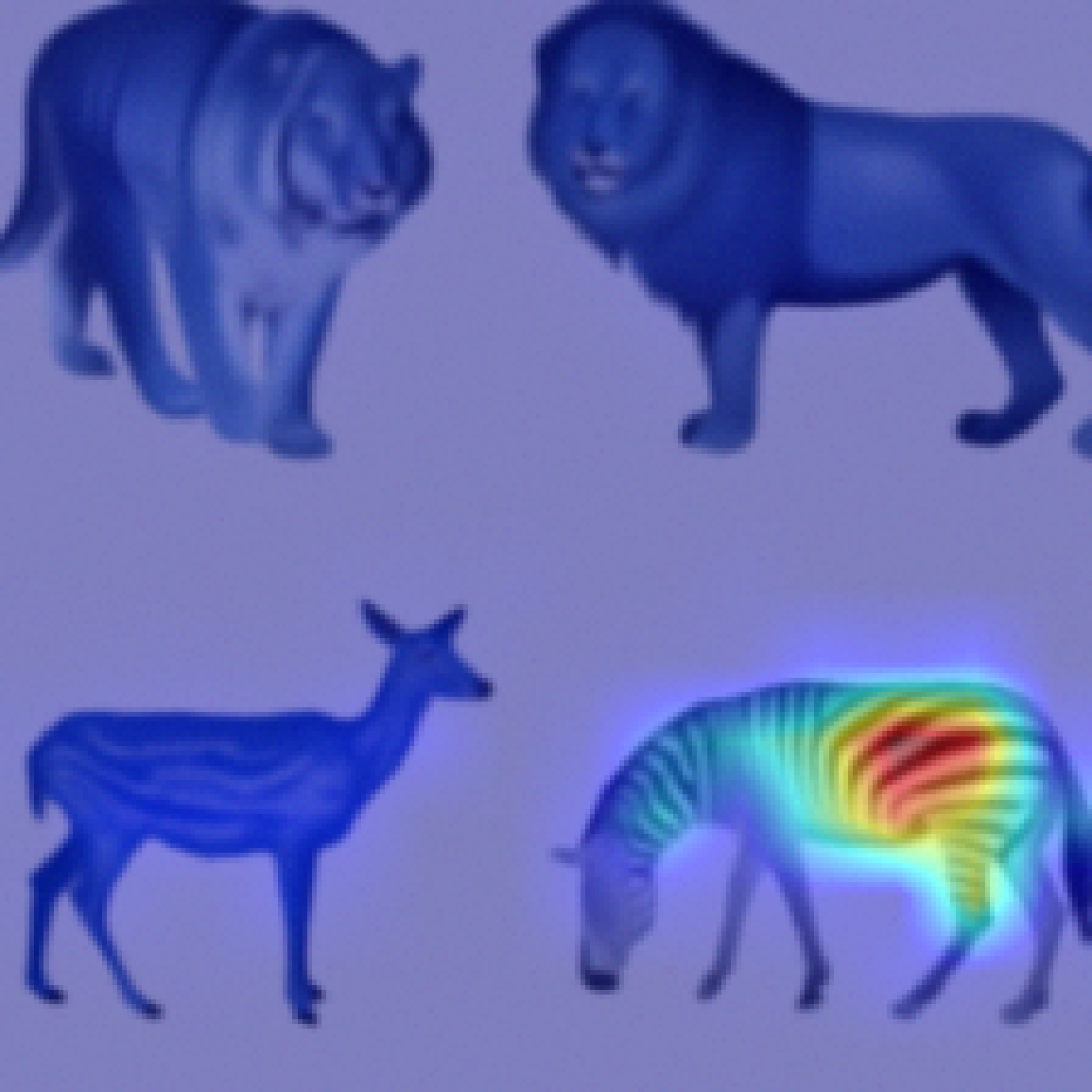}
    \\
    &
    \includegraphics[width=0.13\linewidth]{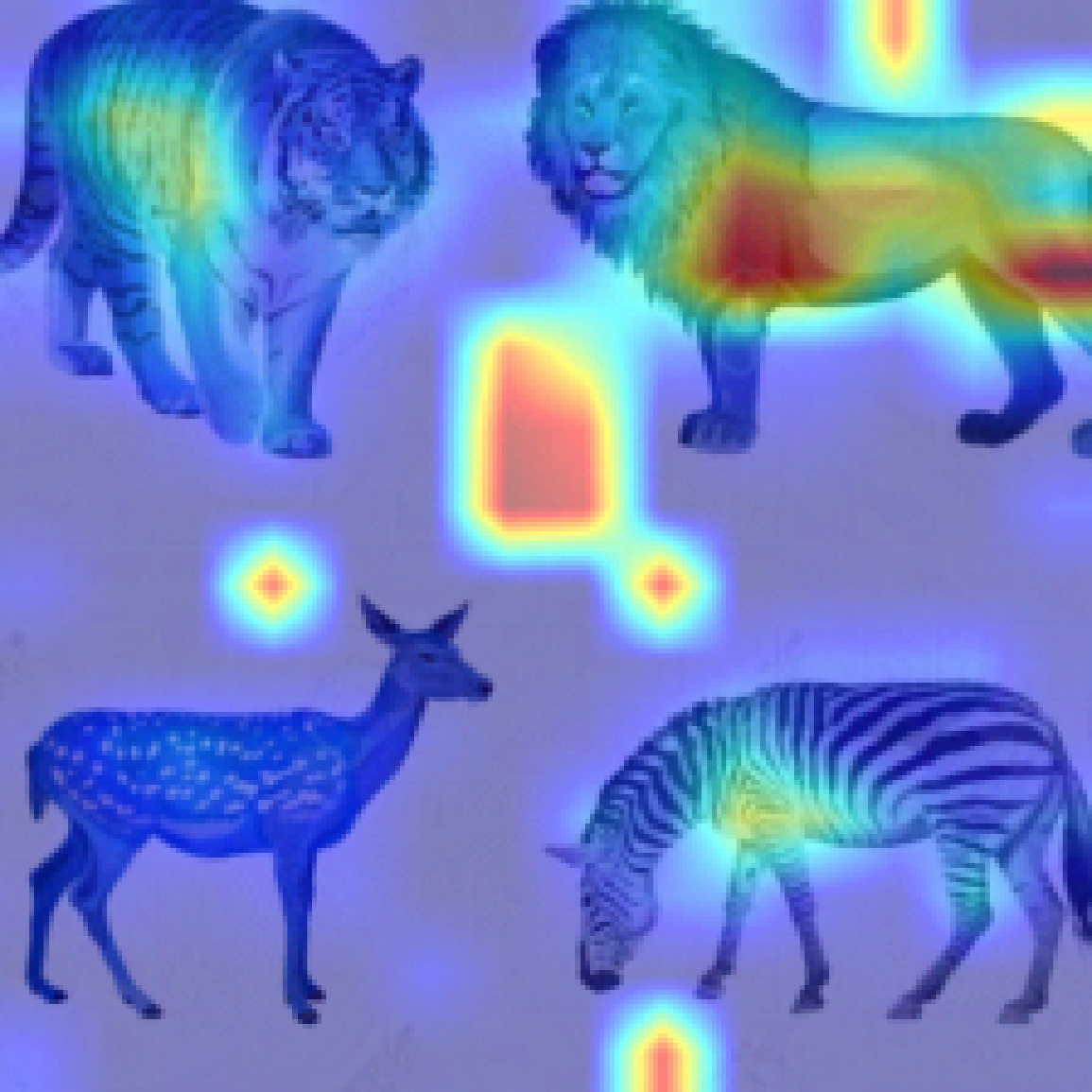} &
    \includegraphics[width=0.13\linewidth]{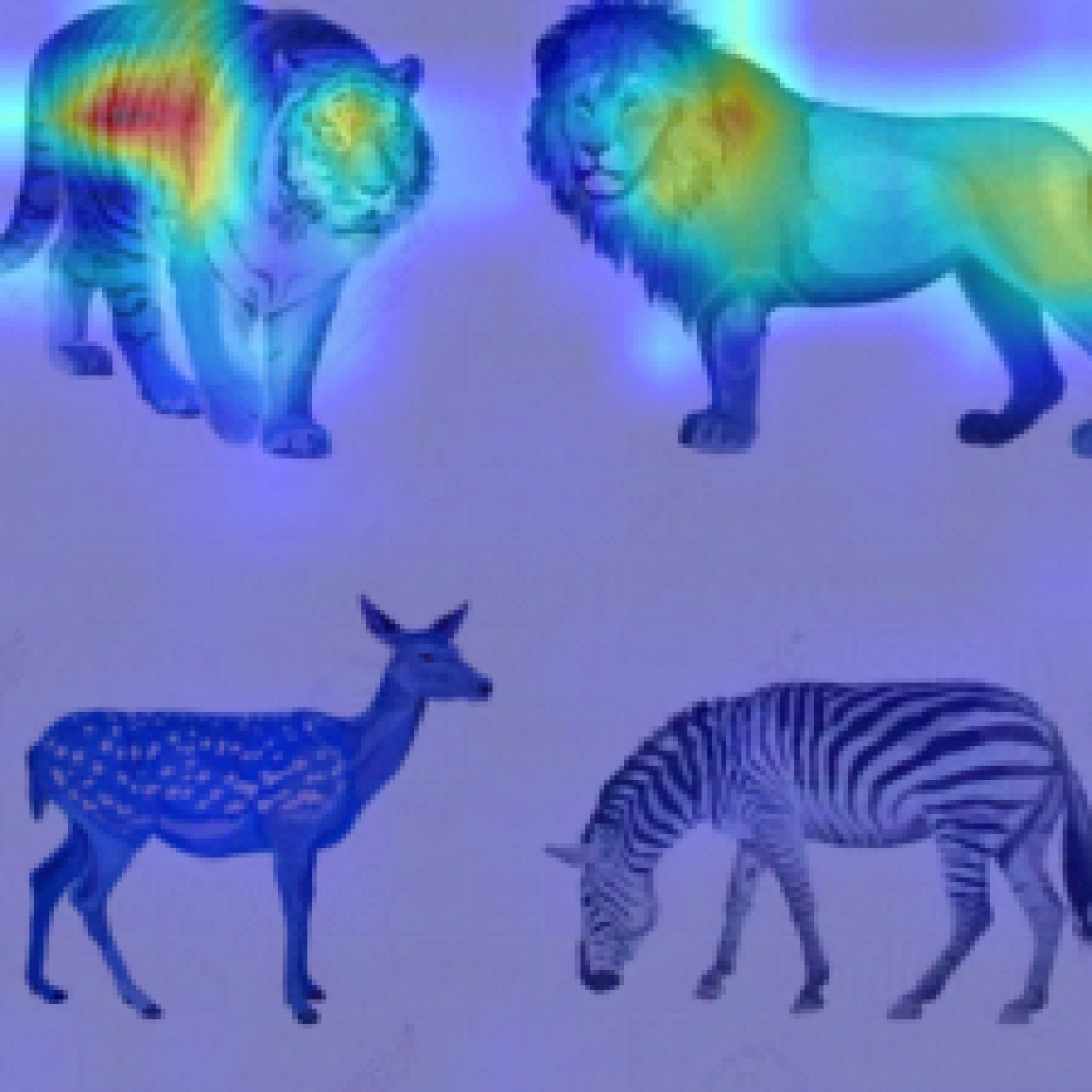} &
    \includegraphics[width=0.13\linewidth]{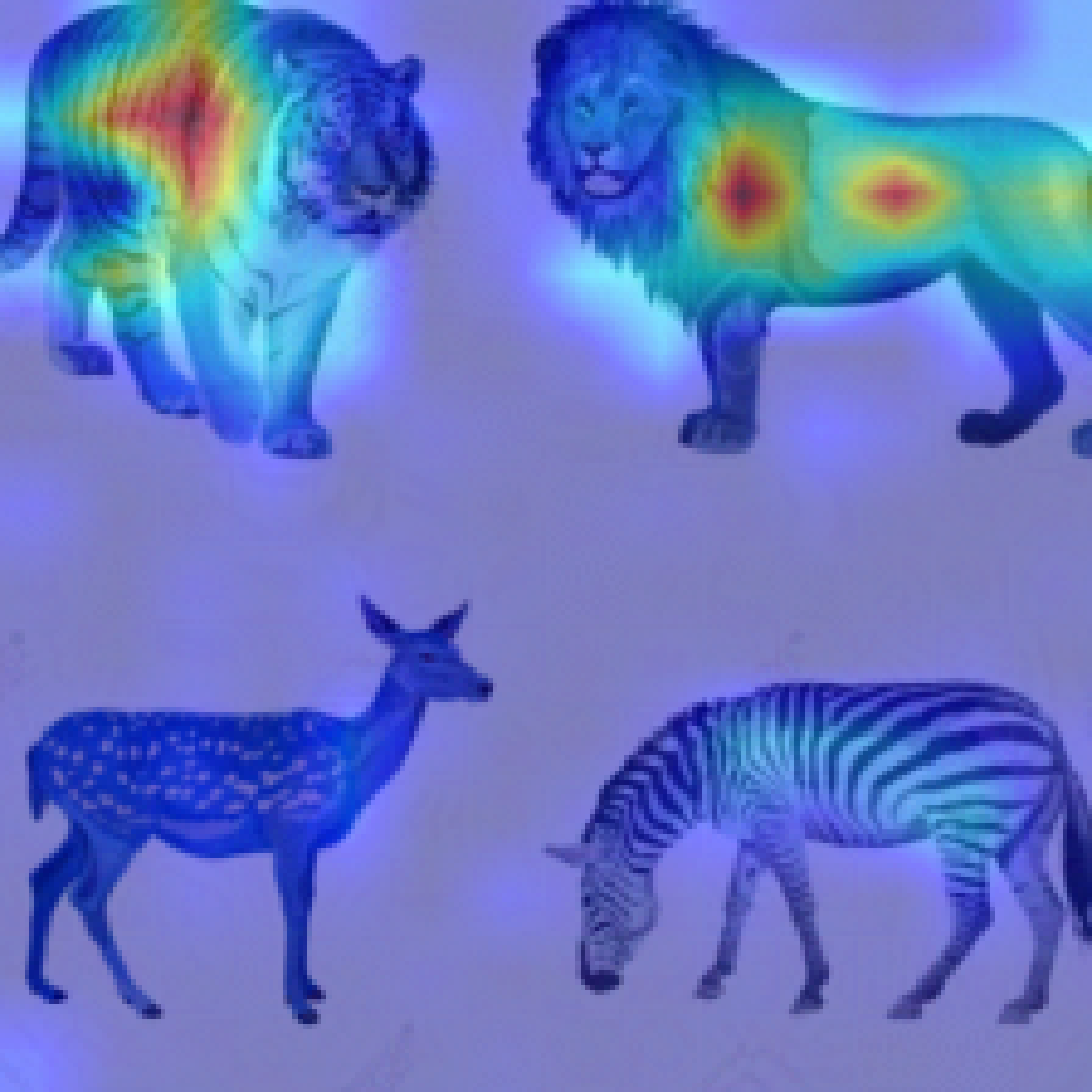} &
    \includegraphics[width=0.13\linewidth]{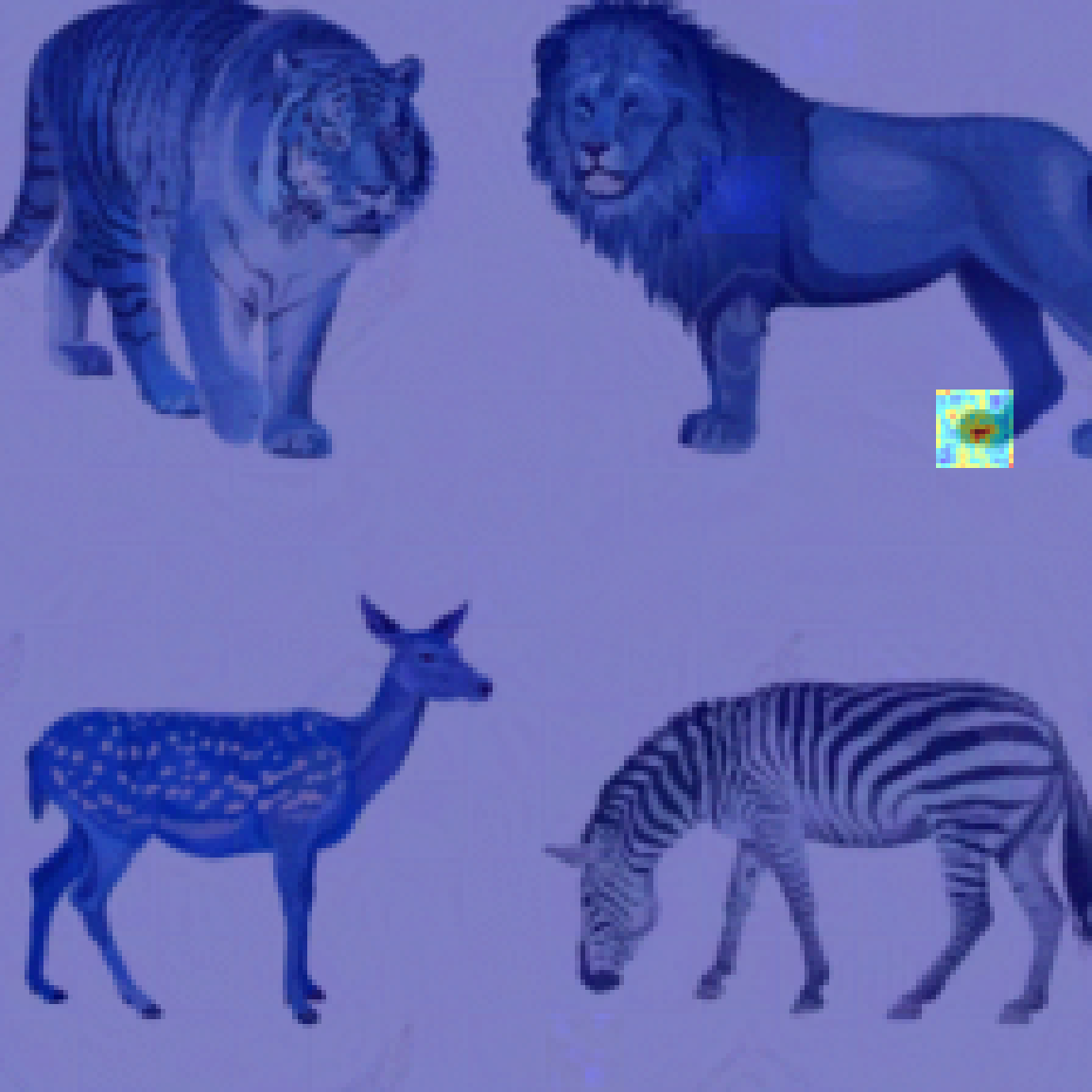} &
    \includegraphics[width=0.13\linewidth]{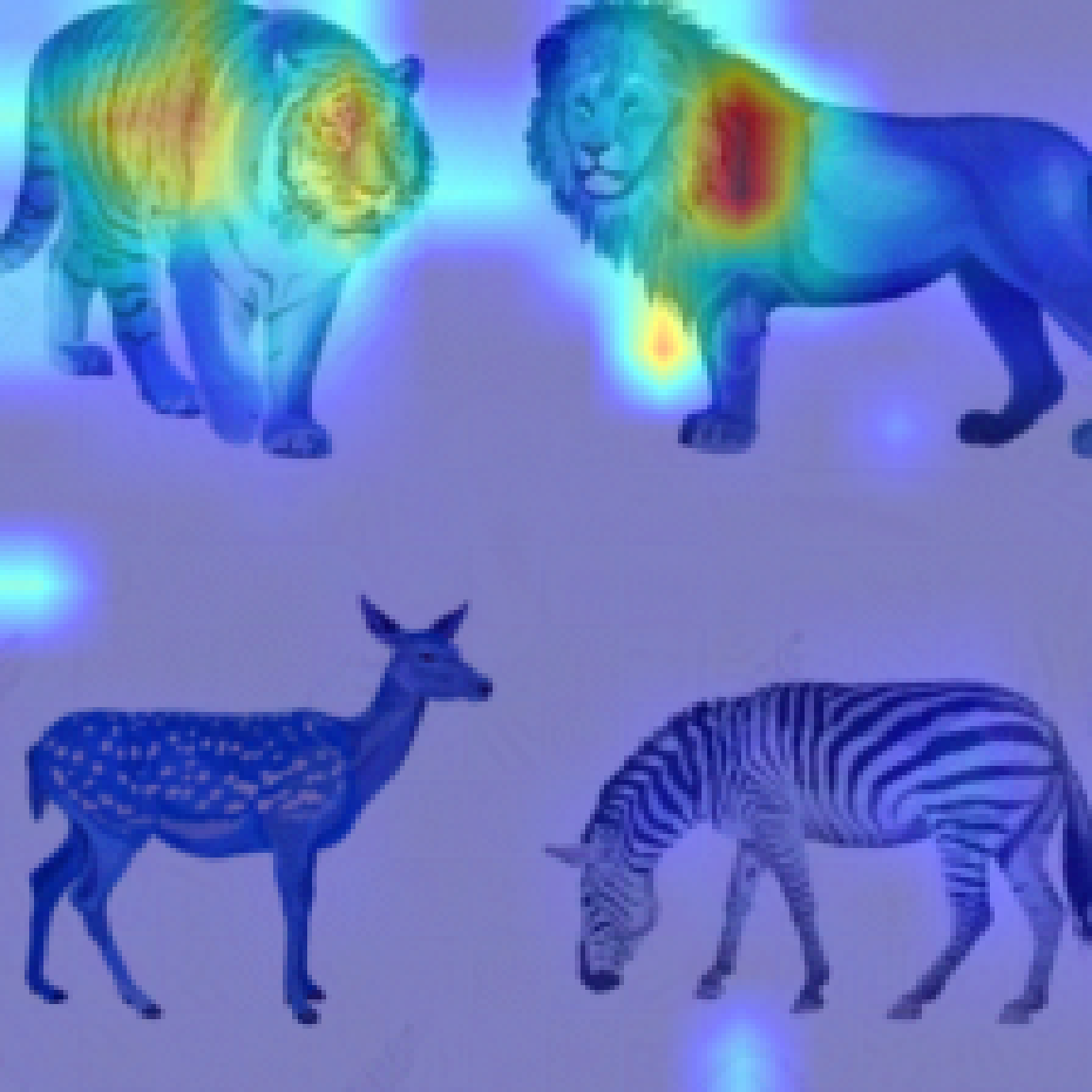} &
    \includegraphics[width=0.13\linewidth]{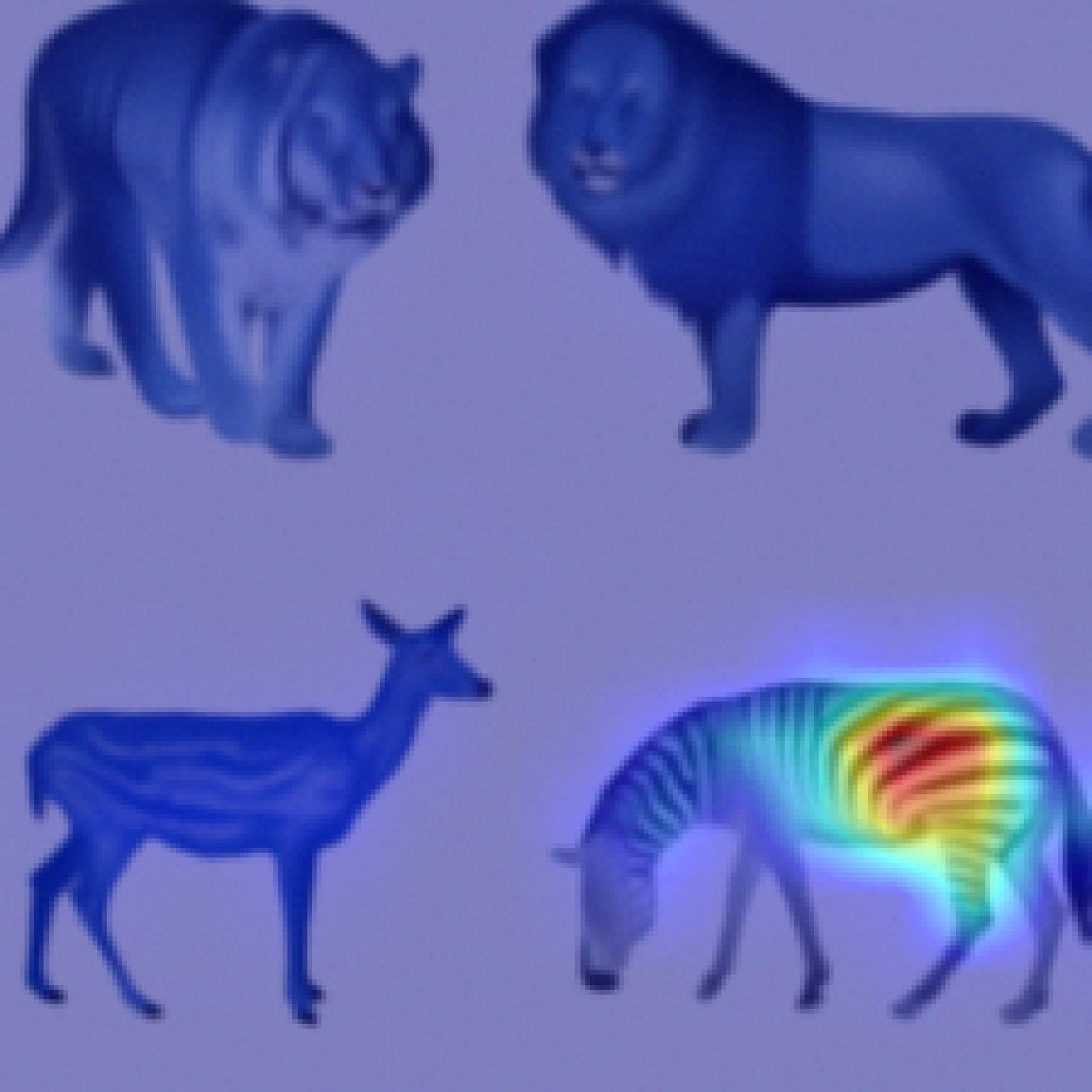}
    \end{tabular*}
    \caption{
     Class-specific visualizations under adversarial corruption. For each image, we present results for two different classes.
    }
    \label{fig: class3}
    \end{center}
\end{figure*}

% \section{More Results}

% In this section, we provide additional experimental results to further demonstrate the effectiveness and robustness of our proposed FViT method. We first compare the clean accuracy of FViT with other baselines and show that FViT maintains high accuracy on clean images. We then analyze the robustness of FViT against natural perturbations like Gaussian noise and uniform noise. Furthermore, we conduct a hyperparameter analysis to study the effect of key parameters in FViT. Finally, we evaluate the robustness of FViT against different adversarial attack algorithms. These additional results provide a more comprehensive understanding of the performance and characteristics of the FViT method.

\subsection{Clean Accuracy Comparison}

Besides comparing different explanation methods, we also involve an ablation study that respectively removes the two key modules in our method, i.e., the Gaussian noise smoothing and the diffusion-based smoothing, that can be viewed as a comparison to two variants based on previous posthoc adversarially robust methods, Random Smoothing \cite{cohen2019certified} and DiffPure \cite{nie2022DiffPure}. We further summarize the comparison in Table \ref{tab:clean_acc} to wrap up the results. For clean accuracy, please refer to the second column of Table \ref{tab:clean_acc} for capturing the performance gap. 

\begin{table}[htbp]
\centering
\begin{tabular}{l|cccccc}
\toprule
Method \ Attack Radius $\rho_a$ & 0/255 (Clean Acc.) & 2/255 & 4/255 & 6/255 & 8/255 & 10/255 \\
\midrule
Vanilla VTA & 95.74 & 2.12 & 0.0 & 0.0 & 0.0 & 0.0 \\
VTA + Random Smoothing & - & 2.12 & 2.12 & 0.0 & 0.0 & 0.0 \\
VTA + DiffPure & - & 82.97 & 80.85 & 78.72 & 80.85 & 74.46 \\
FViT (VTA + DDS) & - & 89.36 & 87.23 & 85.10 & 82.97 & 80.85 \\
\bottomrule
\end{tabular}
\caption{Comparison of FViT with VTA + Random Smoothing and VTA + DiffPure under different perturbation radii. The accuracy (\%) of the ImageNet-1k sampled validation set is reported.}
\label{tab:clean_acc}
\vspace{-5pt}
\end{table}

\subsection{Robustness Against Natural Perturbations}

In terms of robustness against natural perturbation, our method is designed to counterpart the worst-case adversarial perturbation, which indicates its robustness against a wide range of sub-optimal perturbations. To demonstrate the stability of our method under natural perturbations like Gaussian noise, we also conduct experiments with different levels of Gaussian noise and uniform noise. The results are shown in Table \ref{tab:natural_perturb}. The results show that the proposed FViT is more stable against those perturbations. 

\begin{table}[htbp]
\centering
\begin{tabular}{l|ccccc}
\toprule
Noise Type \ $\sigma$ of FViT & 0/255 (Villna ViT) & 2/255 & 4/255 & 10/255 & 12/255 \\
\midrule
Gaussian Noise & 95.74 & 95.74 & 95.74 & 91.48 & 87.23 \\
Uniform Noise & 95.74 & 93.61 & 95.74 & 89.36 & 89.36 \\
\bottomrule
\end{tabular}
\caption{Stability of FViT under natural perturbation, including Gaussian noise and uniform noise with the magnitude of $4/255$. The accuracy of the ImageNet-1k sampled validation set is reported.}
\label{tab:natural_perturb}
\vspace{-5pt}
\end{table}

\subsection{Robustness Against Different Adversarial Attacks}

We study the adversarial perturbation with a common-used PGD algorithm under $\ell_p$ constrain setting, and our defense is generalizable to a wide range of error-maximizing adversarial attacks. We additionally present the results of our method against other attack algorithms in Table \ref{tab:diff_attacks}. The results show that our method can defend widely against adversarial attack variants. Defending against more free-form adversarial attacks, like in-painting and physical adversarial objects, is out of the scope of this paper and will be studied in future work. 

\begin{table}[htbp]
\centering
\begin{tabular}{l|ccccc}
\toprule
Attacking Algorithms \ $\rho_a$ & 2/255 & 4/255 & 6/255 & 8/255 & 10/255 \\
\midrule
PGD & 87.23 & 82.98 & 82.98 & 80.85 & 80.85 \\
FGSM & 91.49 & 85.11 & 85.11 & 82.98 & 74.47 \\
AutoAttack & 89.36 & 87.23 & 87.23 & 82.98 & 85.11 \\
\bottomrule
\end{tabular}
\caption{Robustness of FViT against three different adversarial attack algorithms. The accuracy (\%) on the ImageNet-1k sampled validation set is reported.}
\label{tab:diff_attacks}
\end{table}

\end{document}